\newtheoremstyle{jplain}
{}
{}
{\normalfont}
{}
{\bfseries}
{.}
{4pt}
{\thmname{#1}\thmnumber{#2}\thmnote{\hspace{2pt}(#3)}}
\theoremstyle{jplain}
\newtheorem{thm}{定理}
\numberwithin{thm}{subsection}
\newtheorem*{thm*}{定理} 
\newtheorem{lem}[thm]{補題}
\newtheorem{cor}[thm]{系}
\newtheorem{prop}[thm]{命題}
\newtheorem{defn}[thm]{定義}
\newtheorem*{defn*}{定義}
\newtheorem{rem}[thm]{注意}
\newcommand{\R}{\mathbb{R}}
\newcommand{\N}{\mathbb{N}}
\newcommand{\A}{\mathbb{A}}
\newcommand{\T}{\mathrm{T}}
\newcommand{\B}{\mathcal{B}}
\newcommand{\M}{\mathbb{M}}
\newcommand{\F}{\mathcal{F}}
\newcommand{\Z}{\mathbb{Z}}
\newcommand{\Q}{\mathbb{Q}}
\def\ip<#1>{\langle #1 \rangle}
\def\l<#1>{\left\langle #1 \right\rangle}
\renewcommand\proofname{\bf 証明}
\begin{document}

\title{ニューラルネットワークの万能近似定理 \\ {\Large (Universal Approximation Theorem for Neural Networks)}}
\author{西島隆人(Takato Nishijima)\thanks{九州大学大学院数理学府}}
\date{}
\maketitle

\begin{abstract}
Is there any theoretical guarantee for the approximation ability of neural networks? The answer to this question is the "Universal Approximation Theorem for Neural Networks". This theorem states that a neural network is dense in a certain function space under an appropriate setting. This paper is a comprehensive explanation of the universal approximation theorem for feedforward neural networks, its approximation rate problem (the relation between the number of intermediate units and the approximation error), and Barron space in Japanese.
\end{abstract}

\renewcommand{\abstractname}{概要}
\begin{abstract}
今日, ニューラルネットワークは関数近似器として様々な分野で盛んに利用され, 多くの成功を収めている.
では, その成功は単なる偶然であろうか. 
すなわち, ニューラルネットワークの近似能力に理論的な保証はあるだろうか. 
その答えが「ニューラルネットワークの万能近似定理」である. 
この定理は, 適当な設定のもとで, ある関数空間においてニューラルネットワークが稠密であることを述べたものである. 
例えば, ある条件下で,  $\R^r$のコンパクト集合上の連続関数をsupノルムの意味で所望の精度で近似するニューラルネットワークが存在することを主張する. 
したがって, この定理は, 応用で現れるほとんどすべての関数をニューラルネットワークによって(適当な意味で)近似できる裏付けを与える. 
よって, これまでの応用での成功は単なる偶然ではなく, こうしたニューラルネットワークの普遍的な関数近似能力の賜物であると考えてよいだろう. feedforward型3層ニューラルネットワークの万能近似定理は1989年にCybenko, Funahashi, Hornikなど複数の研究者により独立に証明された. これらの結果は有界な活性化関数に対するものだったが, 1993年にはLeshno et al.により非有界の場合にも成立することが示された. 
本論文は, こういったfeedforward型ニューラルネットワークの万能近似定理や, その近似レートの問題(中間ユニット数と近似誤差の関係)およびBarron spaceについて総合的にまとめた解説論文である. 
\end{abstract}
\clearpage
\tableofcontents

\section{はじめに}
\subsection{記号}
本節では本論文全体で共通して用いる記号の定義を述べる. 

$\N = \{1,2,3,\ldots\}$を自然数全体の集合とし, $\Z$を整数全体の集合とし, $\mathbb{Q}$を有理数全体の集合とする.  また$\R$で実数全体の集合を表す. 
特に断らなければ, $\R^n$は通常の距離によって位相空間とみなす. 
\begin{defn}
  位相空間$X,Y$に対して, 
  \[
  C(X,Y) := \{f:X \rightarrow Y \mid f\mbox{は連続}\}
  \]
  と定義する. $C(X,\R)$は単に$C(X)$と表す. また, 関数$f:X \rightarrow \R$に対して, 
 \[
 \mathrm{supp}(f) := \mathrm{cl}(\{x \in X \mid f(x) \neq 0 \})
 \]
 と定義し, 開集合$\Omega \subset \R^n$と$m = 0,1,2,\ldots$に対して
 \[
 \begin{aligned}
 &C_0(X) := \{ f \in C(X) \mid \mathrm{supp}(f) \mbox{はコンパクト}\} \\
 &C^m(\Omega) := \{f \in C(\Omega) \mid f\mbox{は}\Omega \mbox{上で}m\mbox{階偏微分可能で全ての偏導関数が連続}\} \\
 &C^\infty(\Omega) := \bigcap_{m=0}^{\infty} C^m(\Omega)
 \end{aligned}
 \]
 と定義する. 
\end{defn}
\begin{defn}[生成される$\sigma$-加法族]　\\
  $X$を集合とし, $\mathcal{A}_0 \subset \mathcal{P}(X)$とする($\mathcal{P}(X)$は$X$の部分集合全体の集合). このとき, $\mathcal{A}_0$で生成される$X$上の$\sigma$-加法族$\sigma[\mathcal{A}_0]$を
  \[
  \sigma[\mathcal{A}_0] := \bigcap \{ \mathcal{A} \mid \mathcal{A} \supset \mathcal{A}_0, \mathcal{A}\mbox{は}X\mbox{上の}\sigma\mbox{-加法族} \}
  \]
  と定義する. 
\end{defn}
\begin{defn}[Borel $\sigma$-加法族, Borel測度]　\\
  $(X,\mathcal{O})$を位相空間とする. このとき, $X$上のBorel $\sigma$-加法族$\mathcal{B}_X$を$\mathcal{B}_X = \sigma[\mathcal{O}]$と定義する. そして, $X$上のBorel測度とは, 可測空間$(X,\mathcal{B}_X)$上の測度のことをいう.  $\R^n$は$(\R^n,\B_{\R^n})$として可測空間とみなす. 
\end{defn}
\begin{defn}[可測写像, $L^p$空間]　\\
  $(X,\mathcal{M}),(Y,\mathcal{N})$を可測空間とする.
  このとき, $X$から$Y$への可測写像全体の集合を$\M(X,Y)$で表す: 
  \[
  \M(X,Y) := \{f:X \rightarrow Y \mid \forall E \in \mathcal{N}, f^{-1}(E) \in \mathcal{M}\}
  \]
  また, $(X,\mathcal{M})$上の測度$\mu$と$1 \leq p < \infty$に対して, 
  \[
  L^p(X,\mathcal{M},\mu) := \{f \in \M(X,\R) \mid \int_{X} |f|^p d\mu < \infty\}
  \]
  と定義し, $f \in L^p(X,\mathcal{M},\mu)$に対して, 
  \[
  \lVert f \rVert_{L^p} = \left( \int_{X} |f|^p d\mu \right)^{1/p}
  \]
  と定義する. 文脈から推察できる場合には$L^p(X,\mathcal{M},\mu)$を単に$L^p(X)$と表す.  さらに, 
  \[
  L^{\infty}(X,\mathcal{M},\mu) := \{f \in \M(X, \R) \mid \mu\mbox{-a.e.で有界}\}
  \]
  とおき, $f \in L^{\infty}(X,\mathcal{M},\mu)$に対して, 
  \[
  \lVert f \rVert_{L^{\infty}} = \inf \{\lambda \geq 0 \mid \mu(|f|\geq \lambda) = 0\}
  \]
  と定義する. 文脈から推察できる場合には$L^\infty(X,\mathcal{M},\mu)$を単に$L^{\infty}(X)$と表す.
\end{defn}
なお, $L^p(X,\mathcal{M},\mu)$に属する関数はa.e.で一致するとき等しいとみなす. つまり, $f,g \in L^p(X,\mathcal{M},\mu)$に対して, $f=g$とは$\mu(\{x \in X \mid f(x) \neq g(x)\}) = 0$となることをいう. 
\begin{defn}[アフィン関数]
 $r \in \N$に対して, 
 \[
 \mathbb{A}^r := \{f:\mathbb{R}^r \rightarrow \mathbb{R} \mid f\mbox{はアフィン関数}\}
 \]
 とおく. ただし, $f:\mathbb{R}^r \rightarrow \mathbb{R}$がアフィン関数であるとは, ある$w \in \R^r$と$b \in \R$が存在して, 
 \[
 f(x) = w^{\T}x + b ~~~(x \in \R^r)
 \]
 となることをいう. 
\end{defn}

\subsection{万能近似定理の歴史}
本節では本論文で主に扱うfeedforward型3層ニューラルネットワークの定義を与え, 詳しく歴史を振り返る.
証明は章を変えて行う. 

feedforward型3層ニューラルネットワークを次により定義する.
\begin{defn}
 $r \in \N, W \subset \R^r, \Theta \subset \R$と関数$\Psi:\R \rightarrow \R$に対して, 
 \[
 \begin{aligned}
 &\mathcal{N}_r(\Psi,W,\Theta) := \mathrm{span} \{\R^r \ni x \mapsto \Psi(w^{\T} x + \theta) \in \R \mid w \in W, \theta \in \Theta \} \\
 &\Sigma^r(\Psi) := \mathcal{N}_r(\Psi,\R^r,\R) = \left\{ \sum_{j}^{q} \beta_j \Psi \circ A_j \mid  \beta_j \in \R, A_j \in \A^r , q \in \N  \right\} 
 \end{aligned}
 \]
 と定義する. これらを$\Psi$を活性化関数とするfeedforward型3層ニューラルネットワークと呼ぶ. 
\end{defn}
この定義において中間ユニット数$q$に制限を設けないことに注意する. この事情からこのようなニューラルネットワークはshallow wide networkと呼ばれることがある(なお, 逆に$q$に制限を設け, 層の数に制限を設けないニューラルネットワーク, deep narrow networkについても近年考察が進められている\cite{DeepNarrowNetworks}). 

明らかに, $\Psi \in \M(\R,\R)$, つまり$\Psi$が$\R$から$\R$へのBorel可測関数ならば$\Psi$を活性化関数とするfeedforward型3層ニューラルネットワークは$\M(\R^r,\R)$に含まれる. 
また, $\Psi \in C(\R,\R)$ならば$\Psi$を活性化関数とするfeedforward型3層ニューラルネットワークは$C(\R^r,\R)$に含まれる. 

次に, feedforward型3層ニューラルネットワークの万能近似定理の意味を正確に述べるために, 距離空間における稠密性の定義をする. 
\begin{defn}
 $(X,\rho)$を距離空間とし, $S,T \subset X$とする. このとき, $S$が$T$において$\rho$-稠密であるとは, 任意の$t \in T$と$\varepsilon > 0$に対して, ある$s \in S$が存在して, $\rho(s,t) < \varepsilon$が成り立つことをいう. 
\end{defn}
なお, しばしばこの概念を濫用して,  必ずしも$X$に含まれない集合$S,T$について$\rho$-稠密という表現をすることがある(その場合は$S,T$の元について$\rho$の値が定まることを前提とする). 
\begin{defn}
  $K \subset \R^r$を空でないコンパクト集合とする. $f,g \in C(K)$に対して, 
  \[
  \rho_K(f,g) := \sup_{x \in K} |f(x) - g(x)|
  \]
  と定義する. また, $S \subset C(\R^r)$に対して, 
  \[
  S_K := \{f|_K \mid f \in S\}
  \]
  と定義する. ここで$f|_K$は$f$の$K$への制限である. $(S_K,\rho_K)$は距離空間となる. 
\end{defn}
\begin{defn}
  $C(\R^r)$の元の列$(f_n)_{n=1}^{\infty}$が$f \in C(\R^r)$に広義一様収束するとは, 任意の空でないコンパクト集合$K \subset \R^r$に対して, $\rho_K(f_n,f) \rightarrow 0 ~(n \rightarrow \infty)$が成り立つことをいう. また, $S \subset C(\R^r)$が$C(\R^r)$において広義一様収束の意味で稠密であるとは, 任意の空でないコンパクト集合$K \subset \R^r$に対して, $S_K$が$C(K)$において$\rho_K$-稠密であることをいう. 
\end{defn}

1960年代以降のニューラルネットワークの応用における数多くの成功から, どのような条件下で$\mathcal{N}_r(\Psi,W,\Theta)$が$C(\R^r)$において広義一様収束の意味で稠密であるか？という問いが立てられた. Hornik\cite{Hornik}によれば, この問いを研究者が研究し始めたのは1980年代になってからである. 
歴史的にはまず$\Sigma^r(\Psi)$に関して考察が進められた. 
つまり, 活性化関数$\Psi$がどのようなものならば万能近似能力を持つかという問題が考えられた. 
この問題は1989年に複数の研究者により独立に異なる手法で解かれた.
Cybenko\cite{Cybenko}は$\Psi$が連続なsigmoidal関数であるときに万能近似能力を持つことをHahn-Banachの拡張定理やRiesz-Markov-角谷の表現定理などを用いて示した. 
ここで, sigmoidal関数の定義は次の通りである. 

\begin{defn}[sigmoidal関数]　\\
 関数 $\sigma : \mathbb{R} \rightarrow \mathbb{R}$ が次の条件をみたすとき, sigmoidal関数と呼ぶ:
 \[
 \sigma(t) \rightarrow
 \begin{cases}
 1 & (t \rightarrow +\infty)\\
 0 & (t \rightarrow -\infty)
 \end{cases}
 \]
\end{defn}

一方, Hornik\cite{Hornik}は$\Psi$がsquashing関数(単調非減少なsigmoidal関数)の場合をStone-Wierstrassの定理を使って証明した.
また, 船橋\cite{FUNAHASHI1989}は$\Psi$が非定数かつ有界かつ単調非減少な連続関数の場合を入江・三宅\cite{IrieMiyake1988}が示した事実を用いて証明した. 
そして, 1993年にはLeshno et al. \cite{Leshno} によって以下の定理が証明された.  

\begin{defn*}
  $\Omega \subset \R^n$を開集合とする. また,  $\mu$を$(\R^n,\B_{\R^n})$上のLebesgue測度とする. このとき, 
  \[
  \begin{aligned}
  L_{\mathrm{loc}}^{\infty}(\Omega) &:= \{f \mid \mbox{任意のコンパクト集合}K \subset \Omega \mbox{に対して}f|_{K} \in L^{\infty}(K) \} \\
  \mathcal{M}(\Omega) &:= \{f \in L_{\mathrm{loc}}^{\infty}(\Omega) \mid \mu(\mathrm{cl}\{f\mbox{の不連続点}\}) = 0\}
  \end{aligned}
  \]
  と定義する. 
\end{defn*}
\begin{thm*}[Leshno et al. 1993 \cite{Leshno}]
  $\mu$を$\R$上のLebesgue測度とする. $\Psi \in \mathcal{M}(\R)$に対して, $\Sigma^r(\Psi)$が$C(\R^r)$において広義一様収束の意味で稠密であるための必要十分条件は, いかなる一変数多項式関数$P:\R \rightarrow \R$についても$\Psi(x) = P(x) ~(\mu \mathrm{-a.e.}x)$とならないことである. 
\end{thm*}
さらに, この定理が発表されたのち, Hornik\cite{Hornik1993}により連続性に関する仮定を緩和した次の結果が得られた. 
\begin{thm*}[Hornik 1993 \cite{Hornik1993}]
  $\Psi:\R \rightarrow \R$は各閉区間上で有界かつRiemann積分可能であるとすると,  $\Sigma^r(\Psi)$が$C(\R^r)$において広義一様収束の意味で稠密であることと, $\Psi$が多項式関数とa.e.で一致しないことは同値である. 
\end{thm*}
閉区間上で有界なRiemann可積分関数はLebesgue可測かつその不連続点全体の集合の測度が$0$になることに注意する(猪狩\cite{igarizitukaiseki}定理$3$.$24$を参照されたい). 
Leshno et al.の結果では「不連続点全体の集合の閉包」の測度が$0$になることが仮定されていたので, 確かに連続性に関する仮定が緩和されている. 

Leshno et al.やHornikにより示されたこれらの定理の適用範囲は広く, 万能近似定理をめぐる一連の議論には一応の決着が付けられたと考えてよいだろう. ただし, 以上で述べた結果のうち, 1989年のCybenko, Hornikおよび1993年のLeshno et al., Hornikの結果は適用できる範囲に共通部分はあれど包含関係は無い. 

万能近似能力が明らかになってからは近似レートの問題(中間ユニットの数と近似精度の関係)に焦点が移り, MhaskarによるSobolev空間における近似レートの研究\cite{Mhaskar1995}やBarron\cite{Barron1993}に始まるBarron spaceと呼ばれる「ニューラルネットワークでよく近似できる空間」の研究など複数のアプローチでの結果が出ており, 現在でも研究が続いている(Weinan et al.\cite{Weinan2019}やSiegel, Xu\cite{Siegel2020}を参照されたい). 

本論文ではまず第3章で万能近似定理と関わりが深いリッジ関数の性質についていくつか述べる. 
そして第4章のはじめに1993年のLeshno et al., Hornikの結果を証明する. 
次いで, Cybenkoの結果を一般化したChuiとLiの結果\cite{ChuiAndLi}(下表参照)を解説する. 
さらにChuiとLiの結果をノルム空間へ一般化した, SunとCheneyによる結果\cite{SunAndCheney}を紹介し, その後, 1989年のHornikの結果の証明を与える. 

次に章を変え, 5章では$C(\R^r)$における万能近似定理の応用として以上の結果を多層・多出力にした場合や$L^p$空間における万能近似定理, 関数の補間について述べる. 
さらに, 6章ではfeedforward型ニューラルネットワークから一旦離れRBF(Radial-Basis-Function)ネットワークの万能近似定理を証明する. 
その後, 7章では話題を近似レートの問題に移し,  feedforward型3層ニューラルネットワークの近似誤差解析について解説する. 
  \begin{table}[htb]
    \caption{$\mathcal{N}_r(\Psi,W,B)$が$C(\R^r)$において万能近似能力を持つ条件}
    \small
    \begin{tabular}{|c|c|c|c|c|c|c|} \hline
     {\scriptsize 西暦}& 証明者 & $\Psi$ & $\Psi$の連続性 & $W$ & $B$ & \begin{tabular}{c} {\tiny 本論文} \\ {\tiny 定理番号} \end{tabular} \\ \hline 
     & Cybenko & sigmoidal & 連続 & $\R^r$ & $\R$ & {\scriptsize 4.2.1} \\ \cline{2-7}
    {\scriptsize 1989} & {\footnotesize Funahashi} & \begin{tabular}{c} 有界 \\ 単調非減少 \end{tabular} & 連続 & $\R^r$ & $\R$ & {\scriptsize 4.1.11} \\\cline{2-7}
     & Hornik & \begin{tabular}{c} sigmoidal \\ 単調非減少 \end{tabular} & 仮定なし & $\R^r$ & $\R$ & {\scriptsize 4.4.6} \\ \hline
    {\scriptsize 1992} & Chui, Li & sigmoidal & 連続 & $\Z^r$ & $\Z$ & {\scriptsize 4.2.1} \\ \hline
    {\scriptsize 1993} & {\scriptsize Leshno et al.} & {\footnotesize 非多項式, $L_{\mathrm{loc}}^{\infty}$} & { \footnotesize $\mu\left(\mathrm{cl}\{\mbox{不連続点}\}\right) = 0$ } & $\R^r$ & $\R$ & {\scriptsize 4.1.11} \\ \cline{2-7}
     & Hornik & \begin{tabular}{c} {\footnotesize $B$上非多項式}\\ 局所有界 \end{tabular} & {\footnotesize $\mu\left(\{\mbox{不連続点}\}\right)=0$ } & {\footnotesize $0 \in W^{\circ}$} & {\footnotesize 開区間} & {\scriptsize 4.1.12} \\ \hline
    \end{tabular}
  \end{table}

\section{リッジ関数とその万能近似定理への応用}
本章では万能近似定理と関わりが深いリッジ関数の性質について述べ, 万能近似定理を証明するためには$1$次元の場合のみ考えれば十分であることを示す. 
\subsection{リッジ関数の性質}

\begin{defn}[有界線形写像, 共役空間]　\\
  $X,Y$をノルム空間とする. $f:X \rightarrow Y$が有界線形写像であるとは, $f$が線形写像であって, かつ$M \geq 0$が存在して, 任意の$x \in X$に対して, 
  \[
  \lVert f(x) \rVert_Y \leq M \lVert x \rVert_X
  \]
  となることをいう. 有界線形写像はノルムに関して連続である. また, $X$の共役空間を 
  \[
  X^* := \{f:X \rightarrow \R \mid f\mbox{は有界線形写像}\}
  \]
  と定義する. そして, $f \in X^*$に対して, 
  \[
  \lVert f \rVert = \inf \{M \geq 0 \mid \forall x \in X, | f(x) | \leq M \lVert x \rVert_X \}
  \]
  と定義する. $\lVert \cdot \rVert$は$X^*$上のノルムであり, 作用素ノルムと呼ばれる. 
\end{defn}
\begin{defn}[リッジ関数]　\\
  $X$をノルム空間とする. 関数$f:X \rightarrow \R$がリッジ関数であるとは, $\varphi \in X^*$と$g:\R \rightarrow \R$が存在して, $f = g \circ \varphi$となることをいう. 特に$X$がHilbert空間であるときはRieszの表現定理より, $v \in X$と$g:\R \rightarrow \R$が存在して, $f(x) = g(\ip<{x,v}>)$となることをいう. 
\end{defn}
\begin{defn}
  ノルム空間$X$と$\mathcal{F} \subset X^*$に対して, 
  \[
  C(\R) \circ \mathcal{F} = \{g \circ f \mid g \in C(\R), f \in \mathcal{F}\}
  \]
  と定義する. $C(\R) \circ \mathcal{F} \subset C(X)$である. 
\end{defn}
\begin{lem}\label{VandermondeCor}
  $r,s \in \N \cup \{0\}$と相異なる$\beta_j \in \R \setminus \{0\} ~~(j=0,\ldots,r+s)$に対して, ある$c_j \in \R ~~(j=0,\ldots,r+s)$が存在して, 任意の$(x_1,x_2) \in \R^2$に対して, 
  \[
  \sum_{j=0}^{r+s} c_j (x_1 + \beta_j x_2)^{r+s} = x_1^r x_2^s
  \]
  となる. 
\end{lem}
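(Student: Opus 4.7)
まず方針として, 二項定理により各 $j$ について
\[
(x_1 + \beta_j x_2)^{r+s} = \sum_{k=0}^{r+s} \binom{r+s}{k} \beta_j^k \, x_1^{r+s-k} x_2^k
\]
と展開し, $j$ について和をとって順序を入れ替えると
\[
\sum_{j=0}^{r+s} c_j (x_1 + \beta_j x_2)^{r+s} = \sum_{k=0}^{r+s} \binom{r+s}{k} \left( \sum_{j=0}^{r+s} c_j \beta_j^k \right) x_1^{r+s-k} x_2^k
\]
が得られる. 右辺と $x_1^r x_2^s$ の単項式 $x_1^{r+s-k} x_2^k$ ($k=0,1,\ldots,r+s$) ごとの係数比較をすることで, 問題は $(c_0,\ldots,c_{r+s})$ を未知数とする $r+s+1$ 元連立一次方程式
\[
\sum_{j=0}^{r+s} c_j \beta_j^k = \frac{\delta_{k,s}}{\binom{r+s}{k}} \quad (k=0,1,\ldots,r+s)
\]
(ここで $\delta_{k,s}$ はクロネッカーのデルタ)の可解性に帰着する.

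次に, この連立方程式の係数行列が $V = (\beta_j^k)_{0 \leq k,j \leq r+s}$, すなわち $\beta_0,\ldots,\beta_{r+s}$ を節点とする Vandermonde 行列の転置であることに着目する. 仮定より $\beta_j$ は互いに相異なるから, よく知られた公式
\[
\det V = \prod_{0 \leq i < j \leq r+s} (\beta_j - \beta_i)
\]
は非零となり, $V$ は正則である. よって上記の連立方程式は一意解を持ち, それが求める $c_j$ を与える. したがって補題名の示唆どおり, 証明は Vandermonde 行列の正則性の系として直ちに得られる.

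以上のように, 議論の本質は「二変数多項式としての展開と単項式ごとの係数比較」および「Vandermonde 行列の正則性」の二点であり, 目立った障害はない. 見方を変えれば, 本補題は一変数多項式空間における一次独立な $r+s+1$ 本のベクトル $(1+\beta_j X)^{r+s}$ ($j=0,\ldots,r+s$) が次数 $r+s$ 以下の多項式全体を張るという事実の二変数への焼き直しであり, そこを見抜くところが唯一の要点と言える. なお, 仮定 $\beta_j \neq 0$ 自体は上記の証明には直接は用いないが, 後続の応用(おそらくリッジ関数の線形結合による単項式の表現)への布石として置かれているものと推察される.
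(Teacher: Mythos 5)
あなたの証明は正しく, 論文の証明と本質的に同一の方針(二項展開による係数比較, $\beta_j$ の相異性による Vandermonde 行列の正則性への帰着)を取っています. 付言すると, 仮定 $\beta_j \neq 0$ がこの補題自体の証明には不要であるというあなたの観察も正確であり, 論文の証明中の「かつすべて $0$ でないことに注意」という括弧書きは正則性の議論には実のところ効いていません.
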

\begin{proof}
  もし, そのような$c_j \in \R$が存在するなら, 任意の$(x_1,x_2) \in \R^2$に対して, 
  \[
  \begin{aligned}
  \sum_{j=0}^{r+s} c_j (x_1 + \beta_j x_2)^{r+s}
  &= \sum_{k=0}^{r+s} \binom{r+s}{k} \sum_{j=0}^{r+s}  c_j \beta_j^k x_1^{r+s-k} x_2^k \\
  &= \sum_{k=0}^{r+s} \delta_{k,s} x_1^{r+s-k} x_2^{k}
  \end{aligned}
  \]
  となるので, $c_j$は連立一次方程式
  \[
  \binom{r+s}{k} \sum_{j=0}^{r+s} c_j \beta_j^k = \delta_{k,s} ~~(k=0,\ldots,r+s)
  \]
  をみたす. ところが, この連立一次方程式はVandermonde行列の正則性から一意的に解ける($\beta_j$は相異なり, かつすべて$0$でないことに注意). そして, この解$c_j$は所望の性質を持つ. 
\end{proof}
\begin{thm}\label{densityRidgeFunc}
  $X$をノルム空間とするとき,  $X$上のリッジ関数全体の集合の線形包は$C(X)$において広義一様収束の意味で稠密である. 
\end{thm}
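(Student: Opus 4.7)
空でないコンパクト集合$K \subset X$, $f \in C(K)$, $\varepsilon > 0$を任意に取り, リッジ関数の有限線形結合$F$で$\rho_K(f, F|_K) < \varepsilon$をみたすものを構成することを目標とする. 方針の骨子は, まず$X^*$と定数で生成される$C(X)$の部分代数でStone-Weierstrassの定理により$f$を近似し, その代数の元が実はリッジ関数の線形結合に他ならないことを示す, という二段階の議論である.

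第一段階では, $X^*$の元と定数関数から生成される$C(X)$の単位的部分代数$\mathcal{A}$を考える. Hahn-Banachの定理から$X^*$は$X$の点を分離するため, $\mathcal{A}|_K \subset C(K)$は定数関数を含みかつ$K$の点を分離する代数となる. したがってStone-Weierstrassの定理により$\mathcal{A}|_K$は$(C(K), \rho_K)$において稠密であり, ある$h \in \mathcal{A}$で$\rho_K(f, h|_K) < \varepsilon$なるものが取れる.

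第二段階では, $\mathcal{A}$の任意の元がリッジ関数の線形結合として表せることを示す. 定数$c$は任意の$\varphi \in X^*$と$g \equiv c$を用いて$g \circ \varphi$と書けるのでリッジ関数である. 一方, 単項式$\varphi_1^{r_1} \cdots \varphi_n^{r_n}$ ($\varphi_i \in X^*$, $N = \sum_i r_i$)については, 変数数$n$に関する帰納法を用いる予定である. 帰納法の仮定により$\varphi_1^{r_1} \cdots \varphi_{n-1}^{r_{n-1}}$を線形形式$M_k \in \mathrm{span}\{\varphi_1, \ldots, \varphi_{n-1}\} \subset X^*$の$(N - r_n)$乗の線形結合として表示した後, 各積$M_k^{N - r_n}\, \varphi_n^{r_n}$に対し補題\ref{VandermondeCor}を$x_1 = M_k(x), x_2 = \varphi_n(x)$として適用することで
\[
\varphi_1^{r_1} \cdots \varphi_n^{r_n} = \sum_j c_j\, L_j^N \quad (L_j \in X^*,\ c_j \in \R)
\]
の形を得る. 各$L_j^N$は$g(s) = s^N \in C(\R)$と$L_j \in X^*$を用いて$g \circ L_j$と表せるのでリッジ関数である. 以上より$h \in \mathcal{A}$はリッジ関数の線形結合として書けることが従う.

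最大の難所は第二段階, 特に多変数単項式を線形形式の同次冪の線形結合として表示する部分である. これは偏極恒等式として広く知られる代数的事実であるが, 本論文の文脈では既に用意されている補題\ref{VandermondeCor}(2変数の場合)の反復適用で十分処理できる点がポイントとなる. 一方, 第一段階でStone-Weierstrassを適用する際に必要な$X^*$の点分離性はHahn-Banachの定理から直ちに得られるため, 大きな困難は生じない.
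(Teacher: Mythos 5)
Your proposal is correct and follows essentially the same route as the paper: Stone--Weierstrass applied to an algebra built from $X^*$ and constants (point separation via Hahn--Banach), combined with Lemma \ref{VandermondeCor} to rewrite products of functionals as linear combinations of $N$乗 of single functionals, i.e.\ of ridge functions. The only organizational difference is that the paper takes $P = \mathrm{span}\{x \mapsto \varphi(x)^n\}$ directly and shows it is closed under multiplication, whereas you take the generated algebra and show by induction on the number of factors that it is contained in that span; both steps reduce to the same polarization identity, so the arguments are equivalent.
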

\begin{proof}
  コンパクト集合$K \subset X$に対して, 部分空間$P \subset C(K)$を次で定める. 
  \[
  P = \mathrm{span}\{K \ni x \mapsto (\varphi(x))^n \in \R \mid n \in \N \cup \{0\} , ~\varphi \in X^*\}
  \]
  すると, $P$は通常の和とスカラー倍, および積に関して代数をなす.
  そのことを示すには, 任意の$\varphi,\theta \in X^*$と$r,s \in \N \cup \{0\}$に対して, $x \mapsto \varphi(x)^r \theta(x)^s$が$P$に属することを言えばよいが, 補題\ref{VandermondeCor}より, $\beta_j,c_j \in \R ~~(j=0,\ldots,r+s)$が存在して, 任意の$x \in K$に対して, 
  \[
  \varphi(x)^r \theta(x)^s = \sum_{j=0}^{r+s} c_j (\varphi(x) + \beta_j \theta(x))^{r+s}
  \]
  となるので$x \mapsto \varphi(x)^r \theta(x)^s$は$P$に属する. そして, $x,y \in K$が$x \neq y$をみたすならば, Hahn-Banachの拡張定理より$\varphi(x) \neq \varphi(y)$なる$\varphi \in X^*$が取れる. さらに, $n$として$0$をとることで$1 \in P$がわかる. よって, Stone-Weierstrassの定理より,  $P$は$C(K)$において一様収束の意味で稠密である. これより$X$上のリッジ関数全体の集合の線形包は$C(X)$において広義一様収束の意味で稠密である. 
\end{proof}
\begin{lem}
  $V = \{\R^r \ni x \mapsto \sum_{i=1}^l f_i(a_i^{\T}x) \in \R \mid l \in \N , a_i \in \R^r , f_i \in C(\R) \}$とおくと, $V$は$C(\R^r)$において広義一様収束の意味で稠密である. 
\end{lem}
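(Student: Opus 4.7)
私の方針は定理\ref{densityRidgeFunc}を$X = \R^r$の場合に適用することである. まず, 有限次元の共役空間$(\R^r)^*$の元は標準的な同型により$\R^r$と同一視でき, 任意の$\varphi \in (\R^r)^*$はある$a \in \R^r$を用いて$\varphi(x) = a^{\T}x$と表せる. これより$\R^r$上の連続リッジ関数は$x \mapsto g(a^{\T}x)$($g \in C(\R), a \in \R^r$)の形になる.

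次のステップでは$V$の構造を確認する. スカラー倍は各$f_i$に吸収させ, 加法は和の連結で実現できるので$V$は$C(\R^r)$の部分線形空間であり, かつ任意の$g \in C(\R)$と$a \in \R^r$に対して連続リッジ関数$x \mapsto g(a^{\T}x)$を含む. 特に, $f_i(t) = t^n$と選べば任意の$a \in \R^r$と$n \in \N \cup \{0\}$について$x \mapsto (a^{\T}x)^n \in V$である.

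最後に, 定理\ref{densityRidgeFunc}の適用にはやや注意が必要である: 同定理の主張(リッジ関数全体の線形包の稠密性)を単に引用するだけでは$V$より広い空間(不連続なリッジ関数を含みうる)の稠密性しか得られないからである. しかし当該定理の証明内で実際に稠密性が示されているのは多項式的リッジ関数の線形包$P = \mathrm{span}\{K \ni x \mapsto (\varphi(x))^n \mid n \in \N \cup \{0\}, \varphi \in X^*\}$であり, 前段の観察より$X = \R^r$のとき任意の空でないコンパクト集合$K \subset \R^r$について$P \subset V_K \subset C(K)$が成り立つ. 従って$P$の$C(K)$における$\rho_K$-稠密性から$V_K$もまた$C(K)$において$\rho_K$-稠密となり, 広義一様収束の意味で$V$が$C(\R^r)$において稠密であることが結論できる.

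主な障害は特に見当たらず, 唯一の技術的注意点は定理そのものではなくその証明中の具体的構成($P$の$C(K)$における稠密性)を利用する点である. それ以外は$\R^r$の有限次元性から直接的に結論が得られる見込みである.
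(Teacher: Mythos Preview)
Your proof is correct and follows essentially the same route as the paper, which simply writes ``定理\ref{densityRidgeFunc}による(連続関数を定数倍しても連続関数であることに注意)''. In fact you are more careful than the paper: you explicitly identify and resolve the subtlety that the theorem as stated concerns ridge functions $g\circ\varphi$ with arbitrary $g:\R\to\R$, whereas $V$ requires $g\in C(\R)$ --- a point the paper only gestures at with its parenthetical remark about absorbing scalars into continuous functions.
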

\begin{proof}
  定理\ref{densityRidgeFunc}による(連続関数を定数倍しても連続関数であることに注意). 
\end{proof}
次の命題により, 連続関数の空間でのニューラルネットワーク$\Sigma^r(\Psi)$の稠密性を考える上では$1$次元の場合を考えれば十分であることがわかる. 
\begin{prop}\label{DensityDim1Suff}
  任意の$\Psi:\R \rightarrow \R$に対して, $\Sigma^1(\Psi)$が$C(\R)$において広義一様収束の意味で稠密ならば, 任意の$r$について$\Sigma^r(\Psi)$は$C(\R^r)$において広義一様収束の意味で稠密である. 
\end{prop}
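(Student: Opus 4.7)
The plan is to combine the previous lemma (density of sums of univariate functions composed with linear forms) with the hypothesis that $\Sigma^1(\Psi)$ handles the one-dimensional case. Given $f \in C(\R^r)$, a nonempty compact set $K \subset \R^r$, and $\varepsilon > 0$, the lemma immediately preceding the proposition supplies $l \in \N$, vectors $a_i \in \R^r$ and functions $f_i \in C(\R)$ such that
\[
\sup_{x \in K} \Bigl| f(x) - \sum_{i=1}^{l} f_i(a_i^{\T} x) \Bigr| < \frac{\varepsilon}{2}.
\]

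Next, for each $i = 1,\ldots,l$ I would consider the compact set $K_i := \{a_i^{\T} x \mid x \in K\} \subset \R$ (compactness comes from continuity of $x \mapsto a_i^{\T} x$). Since $f_i|_{K_i} \in C(K_i)$ extends to an element of $C(\R)$, the one-dimensional density hypothesis yields $g_i \in \Sigma^1(\Psi)$ with
\[
\sup_{t \in K_i} |f_i(t) - g_i(t)| < \frac{\varepsilon}{2l}.
\]

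The key observation is that $\Sigma^1(\Psi)$ is stable under precomposition with a linear form: writing $g_i(t) = \sum_{j=1}^{q_i} \beta_{ij} \Psi(w_{ij} t + \theta_{ij})$, we have
\[
g_i(a_i^{\T} x) = \sum_{j=1}^{q_i} \beta_{ij} \Psi\bigl( (w_{ij} a_i)^{\T} x + \theta_{ij} \bigr),
\]
which belongs to $\Sigma^r(\Psi)$ because $x \mapsto (w_{ij} a_i)^{\T} x + \theta_{ij}$ is an affine function on $\R^r$. Hence $\sum_{i=1}^{l} g_i \circ (a_i^{\T} \cdot) \in \Sigma^r(\Psi)$, and by the triangle inequality applied on $K$,
\[
\sup_{x \in K} \Bigl| f(x) - \sum_{i=1}^{l} g_i(a_i^{\T} x) \Bigr| < \frac{\varepsilon}{2} + l \cdot \frac{\varepsilon}{2l} = \varepsilon.
\]

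I do not anticipate a serious obstacle here: the proposition is essentially a bookkeeping argument that reduces the multivariate problem to the univariate one via the ridge-function density lemma. The only point requiring a little care is verifying that precomposition of a one-dimensional element of $\Sigma^1(\Psi)$ with a linear form $x \mapsto a_i^{\T} x$ keeps us inside $\Sigma^r(\Psi)$, which follows immediately from the definition of $\Sigma^r(\Psi)$ as a span of $\Psi \circ A$ over all affine $A \in \A^r$.
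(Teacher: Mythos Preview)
Your proof is correct and follows essentially the same approach as the paper: use the ridge-function density lemma to reduce to univariate pieces $f_i(a_i^{\T}x)$, approximate each $f_i$ on the compact image of $K$ by an element of $\Sigma^1(\Psi)$, and observe that precomposing with $x\mapsto a_i^{\T}x$ lands in $\Sigma^r(\Psi)$. The only cosmetic difference is that the paper encloses each $K_i$ in an interval $[\alpha_i,\beta_i]$ before invoking the one-dimensional hypothesis, whereas you work directly on $K_i$; also note that your remark about extending $f_i|_{K_i}$ is unnecessary since $f_i$ is already given as an element of $C(\R)$.
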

\begin{proof}
  $g \in C(\R^r)$とし, 任意に空でないコンパクト集合$K \subset \R^r$をとる. 任意の$\varepsilon>0$に対して, すぐ上の補題により$k \in \N$と$a_i \in \R^r, f_i \in C(\R) ~(i=1,\ldots,k)$が存在して, 
  \[
  \sup_{x \in K} |g(x) - \sum_{i=1}^{k} f_i(a_i^{\T}x)| < \varepsilon/2
  \]
  となる. $K$はコンパクトなので, 各$i$について$\{a_i^{\T}x \mid x \in K\} \subset [\alpha_i,\beta_i]$をみたす$\alpha_i,\beta_i \in \R$が取れる. そこで, $\Sigma^1(\Psi)$の稠密性から, $A_{i,j} \in \A^1$と$c_{i,j} \in \R$が存在して, 
  \[
  \sup_{y \in [\alpha_i,\beta_i]} |f_i(y) - \sum_{j=1}^{m_i} c_{i,j}\Psi(A_{i,j}(y))| < \varepsilon/2k
  \]
  となる. よって, 任意の$x \in K$に対して, 
  \[
  \begin{aligned}
  &|g(x) - \sum_{i=1}^k \sum_{j=1}^{m_i} c_{i,j}\Psi(A_{i,j}(a_i^{\T}x))| \\
  &\leq |g(x) - \sum_{i=1}^{k} f(a_i^{\T}x)| + \sum_{i=1}^k |f_i(a_i^{\T}x) - \sum_{j=1}^{m_i} c_{i,j}\Psi(A_{i,j}(a_i^{\T}x))| \\
  &< \varepsilon/2 + \varepsilon/2 = \varepsilon
  \end{aligned}
  \]
  となる. しかも$x \mapsto A_{i,j}(a_i^{\T}x)$は$\A^r$に属する. 
\end{proof}
さらにリッジ関数について次の定理が成り立つ. 
\begin{thm}[Lin and Pinkus  \cite{LinAndPinkus}]\label{LinAndPinkus}　\\
  $\mathcal{A} \subset \R^r$に対して, 
  \[
  R(\mathcal{A}) = \{ \R^r \ni x \mapsto f(w^{\T} x) \in \R \mid f \in C(\R),w \in A\}
  \]
  とおくと, $\mathrm{span}R(\mathcal{A})$が$C(\R^r)$において広義一様収束の意味で稠密であることと, $\mathcal{A}$上で$0$を取る非自明な$r$変数斉次多項式が存在しないことは同値である. 
\end{thm}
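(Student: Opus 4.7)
方針として, 両向きの同値性を「任意の非負整数 $n$ について, $r$ 変数 $n$ 次斉次多項式全体のなす空間 $H_n$ が $\{(a^{\T} x)^n \mid a \in \mathcal{A}\}$ の線形包と一致する」という線型代数的条件に帰着させる. 鍵はいわゆる apolar 双線型形式 $\langle p, Q \rangle := (Q(\partial) p)(0)$ ($p, Q \in H_n$) であり, これは直接計算により $\langle (a^{\T} x)^n, Q \rangle = n! \, Q(a)$ を満たす. $H_n$ 上の apolar 形式の非退化性と合わせれば, 上の条件は「$\mathcal{A}$ 上恒等的に $0$ となる $H_n$ の元が $0$ のみ」ということと同値であることが従う. これが証明全体の軸となる.

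十分性 ($\Leftarrow$): 仮定より上の同値性から, 各 $n$ について $\{(a^{\T} x)^n \mid a \in \mathcal{A}\}$ は $H_n$ を張る. $f(t) := t^n$ とおけば $(a^{\T} x)^n \in R(\mathcal{A})$ なので, $\mathrm{span}\, R(\mathcal{A})$ は任意の次数の斉次多項式を含み, したがって $\R^r$ 上の全ての多項式関数を含む. Weierstrass の多項式近似定理より多項式はコンパクト集合 $K \subset \R^r$ 上で $C(K)$ において一様稠密なので, $\mathrm{span}\,R(\mathcal{A})$ は $C(\R^r)$ において広義一様収束の意味で稠密である.

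必要性 ($\Rightarrow$): 対偶を示す. $\mathcal{A}$ 上で $0$ を取る非自明な $r$ 変数 $n$ 次斉次多項式 $Q$ が存在すると仮定する. 定数係数偏微分作用素 $Q(\partial) := Q(\partial_{x_1}, \ldots, \partial_{x_r})$ について連鎖律より, $g \in C^n(\R)$ と $w \in \R^r$ に対し $Q(\partial)[g(w^{\T} x)] = Q(w) \, g^{(n)}(w^{\T} x)$ が成り立ち, 特に $w \in \mathcal{A}$ ならば $Q(w) = 0$ より恒等的に $0$ となる. $Q$ が非自明なので, 滑らかかつコンパクト台を持つ関数 $\phi$ を適切に選び $\psi := Q(-\partial) \phi \not\equiv 0$ とできる (Fourier 変換, あるいは cutoff を掛けた単項式を代入することで存在を確認できる). $K := \mathrm{supp}(\psi)$ とし, 線型汎関数 $T(f) := \int_{\R^r} \psi(x) f(x) \, dx$ を定めれば $T$ は $C(K)$ 上の非自明な連続線型汎関数である. 部分積分により $g \in C^n(\R)$ のとき $T(g(w^{\T} \cdot)) = \int \phi(x) \, Q(\partial)[g(w^{\T} x)] \, dx = 0$ となる. 一般の $g \in C(\R)$ の場合は畳み込みで $g$ を平滑化して $K$ 上一様収束を用い, $T$ の連続性より同じ結論が従う. したがって $T$ は $\mathrm{span}\,R(\mathcal{A})$ の $K$ への制限全体で $0$ を取るが $T \neq 0$ なので, $\mathrm{span}\,R(\mathcal{A})$ は $C(K)$ において稠密ではなく, 広義一様収束の意味での稠密性も成立しない.

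主な障害は必要性の部分にあり, 具体的には (i) $Q(-\partial) \phi \not\equiv 0$ となる滑らかなコンパクト台関数 $\phi$ の存在の正確な確認と, (ii) 連続性しか仮定されていない $g \in C(\R)$ に対して部分積分の議論を正当化する点 (平滑化近似と $T$ の連続性を使う) の二点であろう. 一方, 十分性における apolar 双線型形式を用いた議論自体は純粋な線型代数の計算ですぐに確認できる.
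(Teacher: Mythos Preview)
Your proof is correct. For the sufficiency direction ($\Leftarrow$), your approach via the apolar bilinear form $\langle p,Q\rangle = (Q(\partial)p)(0)$ and the identity $\langle (a^{\T} x)^n, Q\rangle = n!\,Q(a)$ is essentially the same as the paper's: the paper phrases it as an inner product $\langle p,q\rangle := p(\partial)q$ on $\mathcal{H}_k(\R^r)$, uses the projection theorem in this finite-dimensional Hilbert space to identify the orthogonal complement of $\mathrm{span}\{(d^{\T} x)^k : d\in\mathcal{A}\}$ with $\{r\in\mathcal{H}_k : r|_{\mathcal{A}}=0\}$, and then invokes Stone--Weierstrass. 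The linear-algebraic content is identical.

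For the necessity direction ($\Rightarrow$), the paper does not give a proof at all---it simply refers the reader to the original Lin--Pinkus article. Your argument (annihilate $\mathrm{span}\,R(\mathcal{A})$ by the functional $T(f)=\int \psi f$ with $\psi = Q(-\partial)\phi$, using the chain-rule identity $Q(\partial)[g(w^{\T} x)] = Q(w)\,g^{(n)}(w^{\T} x)$ and integration by parts) is the standard route and is correct; the two technical points you flag are genuine but routine. The existence of $\phi\in C_0^\infty$ with $Q(-\partial)\phi\not\equiv 0$ follows immediately since $\widehat{Q(-\partial)\phi}(\xi)$ is a nonzero polynomial multiple of $\hat\phi(\xi)$, and the passage from $g\in C^n$ to $g\in C(\R)$ by mollification and continuity of $T$ on $C(K)$ is straightforward. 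So in this direction you have supplied strictly more than the paper.
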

\begin{proof}
  次節参照. 
\end{proof}
この定理により次のことがわかる. 
\begin{prop}\label{MultiVariateDensity}
  $\Psi: \R \rightarrow \R$と$W,B \subset \R$に対して,  
  \[
  \mathcal{N}_1(\Psi,W,B) = \mathrm{span}\{x \mapsto \Psi(wx+b) \mid w \in W, b \in B\}
  \]
  は$C(\R)$において広義一様収束の意味で稠密であるとする. このとき, $\mathcal{A} \subset \R^r$上で$0$を取る非自明な$r$変数斉次多項式が存在しないならば, 
  \[
  \mathcal{N}_r(\Psi,W\mathcal{A},B) = \mathrm{span}\{\Psi(aw^{\T}x+b) \mid a\in \mathcal{A}, w \in W, b \in B\}
  \]
  は$C(\R^r)$において広義一様稠密の意味で稠密である. 
\end{prop}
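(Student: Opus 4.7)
The plan is to mirror the two-step strategy of Proposition \ref{DensityDim1Suff}, but to replace the use of the general ridge-function lemma by Theorem \ref{LinAndPinkus}, which is precisely what guarantees that we can restrict the ridge directions to $\mathcal{A}$.

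First, I would fix $g \in C(\R^r)$, a nonempty compact set $K \subset \R^r$, and $\varepsilon > 0$. Since no nontrivial $r$-variable homogeneous polynomial vanishes on $\mathcal{A}$, Theorem \ref{LinAndPinkus} yields that $\mathrm{span}\, R(\mathcal{A})$ is dense in $C(\R^r)$ in the sense of uniform convergence on compact sets. Hence there exist $k \in \N$, $a_i \in \mathcal{A}$, and $f_i \in C(\R)$ $(i=1,\ldots,k)$ such that
\[
\sup_{x \in K} \Bigl| g(x) - \sum_{i=1}^k f_i(a_i^{\T} x) \Bigr| < \varepsilon/2.
\]

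Next, for each $i$ the set $\{a_i^{\T}x : x \in K\}$ is a compact subset of $\R$, so it lies in some bounded interval $[\alpha_i,\beta_i]$. By the one-dimensional density hypothesis on $\mathcal{N}_1(\Psi,W,B)$, I can choose $m_i \in \N$, $c_{i,j} \in \R$, $w_{i,j} \in W$, and $b_{i,j} \in B$ with
\[
\sup_{y \in [\alpha_i,\beta_i]} \Bigl| f_i(y) - \sum_{j=1}^{m_i} c_{i,j}\, \Psi(w_{i,j}\, y + b_{i,j}) \Bigr| < \varepsilon/(2k).
\]
Substituting $y = a_i^{\T}x$, applying the triangle inequality, and summing over $i$ then gives
\[
\sup_{x \in K} \Bigl| g(x) - \sum_{i=1}^k \sum_{j=1}^{m_i} c_{i,j}\, \Psi\bigl((w_{i,j} a_i)^{\T} x + b_{i,j}\bigr) \Bigr| < \varepsilon.
\]
Since $w_{i,j} a_i \in W\mathcal{A}$ and $b_{i,j} \in B$, the approximating function lies in $\mathcal{N}_r(\Psi, W\mathcal{A}, B)$, which establishes the claim.

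The heavy lifting is entirely encapsulated in Theorem \ref{LinAndPinkus}; once that is granted, the rest is a routine two-step approximation essentially identical to Proposition \ref{DensityDim1Suff}. The only point that needs genuine care is that the ridge directions $a_i$ must be drawn from $\mathcal{A}$ itself (so that the resulting weight vectors $w_{i,j}a_i$ belong to $W\mathcal{A}$), and it is precisely the non-vanishing hypothesis on homogeneous polynomials, funneled through Theorem \ref{LinAndPinkus}, that secures this refinement.
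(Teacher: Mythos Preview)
Your proof is correct and is essentially identical to the paper's own argument: the paper also invokes Theorem~\ref{LinAndPinkus} to obtain ridge directions $a_i \in \mathcal{A}$ with $\sup_{x\in K}|g(x)-\sum f_i(a_i^{\T}x)|<\varepsilon/2$, then approximates each $f_i$ on $[\alpha_i,\beta_i]$ by an element of $\mathcal{N}_1(\Psi,W,B)$ to within $\varepsilon/(2k)$, and concludes by the same triangle-inequality calculation. Your closing remark about why the directions must come from $\mathcal{A}$ itself is exactly the point the paper is making when it says the proof is ``同様に'' to Proposition~\ref{DensityDim1Suff} but with Theorem~\ref{LinAndPinkus} in place of the general ridge lemma.
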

\begin{proof}
  命題\ref{DensityDim1Suff}と同様に示せる. 詳しくは次の通りである. $g \in C(\R^r)$とし, 任意に空でないコンパクト集合$K \subset \R^r$をとる. 任意の$\varepsilon>0$に対して, すぐ上の定理により$k \in \N$と$a_i \in A, f_i \in C(\R) ~(i=1,\ldots,k)$が存在して, 
  \[
  \sup_{x \in K} |g(x) - \sum_{i=1}^{k} f_i(a_i^{\T}x)| < \varepsilon/2
  \]
  となる. $K$はコンパクトなので, 各$i$について$\{a_i^{\T}x \mid x \in K\} \subset [\alpha_i,\beta_i]$をみたす$\alpha_i,\beta_i \in \R$が取れる. そこで, $\mathcal{N}_1(\Psi,W,B)$の稠密性から, $m_i \in \N, w_{i,j} \in W, b_{i,j} \in B$と$c_{i,j} \in \R$が存在して, 
  \[
  \sup_{y \in [\alpha_i,\beta_i]} |f_i(y) - \sum_{j=1}^{m_i} c_{i,j}\Psi(w_{i,j}y + b_{i,j})| < \varepsilon/2k
  \]
  となる. よって, 任意の$x \in K$に対して, 
  \[
  \begin{aligned}
  &|g(x) - \sum_{i=1}^k \sum_{j=1}^{m_i} c_{i,j}\Psi(w_{i,j}(a_i^{\T}x)+b_{i,j})| \\
  &\leq |g(x) - \sum_{i=1}^{k} f(a_i^{\T}x)| + \sum_{i=1}^k |f_i(a_i^{\T}x) - \sum_{j=1}^{m_i} c_{i,j}\Psi(w_{i,j}(a_i^{\T}x) + b_{i,j})| \\
  &< \varepsilon/2 + \varepsilon/2 = \varepsilon
  \end{aligned}
  \]
  となる. 
\end{proof}

\subsection{LinとPinkusの結果の証明}
本節では前節に述べたLinとPinkusの結果(定理\ref{LinAndPinkus})の証明を述べる. 
\begin{defn}
  $\alpha = (\alpha_1,\ldots,\alpha_n) \in (\mathbb{Z}_{\geq 0})^n$に対して, 
  \[
  \begin{aligned}
  |\alpha| := \sum_{j=1}^n \alpha_j ,~~ \alpha ! := \alpha_1 ! \cdots \alpha_n !
  \end{aligned}
  \]
  と定義する. $x=(x_1,\ldots,x_n) \in \R^n$に対して, 
  \[
  x^{\alpha} := x_1^{\alpha_1} \cdots x_n^{\alpha_n}
  \]
  と定義する. 偏微分作用素$\partial :=(\partial_1,\ldots,\partial_n) =  (\partial/\partial x_1, \ldots, \partial/\partial x_n)$に対する$\partial^{\alpha}$も同様に定める. 
\end{defn}
\begin{defn}
  $n,k \in \N$に対して, $n$変数の$k$次斉次多項式関数の集合を
  \[
  \mathcal{H}_k(\R^n) := \left\{ \R^n \ni x \mapsto \sum_{|\alpha| = k} c_{\alpha} x^{\alpha} \in \R \mid c_{\alpha} \in \R \right\}
  \]
  と定義する. ただし, 和は$|\alpha| = k$なる$\alpha \in (\mathbb{Z}_{\geq 0})^n$全体にわたるものとする. また, 
  \[
  \begin{aligned}
  \mathcal{P}_k(\R^n) = \bigcup_{s=0}^{k} \mathcal{H}_s(\R^n) ,~~ \mathcal{P}(\R^n) = \bigcup_{s=0}^{\infty} \mathcal{H}_s(\R^n)
  \end{aligned}
  \]
  と定義する. 
\end{defn}
\begin{prop}
  $\mathcal{H}_k(\R^n)$は通常の和とスカラー倍に関して$\R$上の有限次元線形空間をなす. また, $\mathcal{H}_k(\R^n)$の元$p(x) = \sum c_{\alpha} x^{\alpha},q(x) = \sum c_{\alpha}' x^{\alpha} $に対して, 
  \[
  \ip<{p,q}> := p(\partial)q = \sum_{|\alpha| = k} \alpha! c_{\alpha} c_{\alpha}'
  \]
  と定義すると, これは$\mathcal{H}_k(\R^n)$上の内積であり, この内積に関して$\mathcal{H}_k(\R^n)$はHilbert空間をなす. 
\end{prop}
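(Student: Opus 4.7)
まず有限次元性については, 定義から$\{x^\alpha \mid |\alpha|=k\}$が$\mathcal{H}_k(\R^n)$を張ることは明らかであり, 一次独立性は多項式恒等式における係数比較から従う. したがって$\dim \mathcal{H}_k(\R^n) = \binom{n+k-1}{k} < \infty$であることをまず確認する.

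次に, 等式$\langle p,q\rangle = p(\partial)q = \sum_{|\alpha|=k} \alpha!\, c_\alpha c_\alpha'$の成立を示す. 鍵となるのは, 多重指数$\alpha,\beta$が$|\alpha|=|\beta|=k$をみたすとき
\[
\partial^\alpha x^\beta = \alpha!\, \delta_{\alpha,\beta}
\]
が成り立つという補助計算である. $\alpha = \beta$の場合は各変数ごとに$\partial_i^{\alpha_i} x_i^{\alpha_i} = \alpha_i!$と計算でき, 積をとれば$\alpha!$となる. 一方$\alpha \neq \beta$かつ$|\alpha|=|\beta|$の場合は鳩の巣原理によりある$i$で$\alpha_i > \beta_i$となるため, その成分で$\partial_i^{\alpha_i} x_i^{\beta_i} = 0$が生じる. これを$p(\partial)q = \sum_\alpha \sum_\beta c_\alpha c_\beta' \partial^\alpha x^\beta$の展開に代入すれば所望の表示が得られる.

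この表示さえ得られれば, 内積の三公理は直ちに従う: 双線形性と対称性は形から明らかで, 正定値性は$\alpha! > 0$より$\langle p,p \rangle = \sum_{|\alpha|=k} \alpha!\, c_\alpha^2 \geq 0$となり, 等号成立はすべての$c_\alpha$が$0$すなわち$p = 0$と同値であることから分かる. 最後のHilbert空間性については, 有限次元ノルム空間が必ず完備であるという一般事実に帰着する. 以上より, 本命題の主要な山場は多重指数計算$\partial^\alpha x^\beta = \alpha!\,\delta_{\alpha,\beta}$の検証に集約されるが, これは単変数ごとの微分に分解できる機械的な確認であり, 本質的な障害は生じないと予想される.
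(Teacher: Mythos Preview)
Your proof is correct and follows the same approach as the paper, only with more detail: the paper simply declares the inner product axioms ``明らかである'' and cites the general fact that finite-dimensional normed spaces are complete, whereas you explicitly verify the key identity $\partial^\alpha x^\beta = \alpha!\,\delta_{\alpha,\beta}$ for $|\alpha|=|\beta|=k$ that underlies the stated formula. Nothing is missing or incorrect.
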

\begin{proof}
  $\mathcal{H}_k(\R^n)$が$\R$上の線形空間であること, $\ip<{\cdot,\cdot}>$が内積の公理をみたすことは明らかである. 有限次元であることは有限集合$\{ x^{\alpha} \mid |\alpha| = k\}$が$\mathcal{H}_k(\R^n)$を生成することからわかる. 完備性については, ノルム空間の有限次元部分空間が完備であることによる(\cite{FujitaKurodaIto}の{\S}1.5 (a)を参照されたい). 
\end{proof}
\begin{defn}
  自然数$m,n \geq 1$に対して, $m \times n$の実行列全体の集合を$\mathrm{Mat}(m,n;\R)$で表す. $A \in \mathrm{Mat}(m,n;\R)$に対して, 
  \[
  L(A) := \{ (y^{\T}A)^{\T} \mid y \in \R^m \} \subset \R^n
  \]
  と定義する. $\Omega \subset \mathrm{Mat}(m,n;\R)$に対して, 
  \[
  L(\Omega) := \bigcup_{A \in \Omega} L(A)
  \]
  と定義する. また, リッジ関数の集合$R(\Omega)$を
  \[
  R(\Omega) = \{\R^n \ni x \mapsto g(Ax) \in \R \mid A \in \Omega, g \in C(\R^m)\}
  \]
  と定義する. 
\end{defn}
\begin{thm}[射影定理]　\\
  $X$をHilbert空間とし, $L \subset X$を閉部分空間とする. このとき, 任意の$x \in X$は
  \[
  x = y + z ~~~(y \in L,~ z \in L^{\perp})
  \]
  の形に一意的に分解可能である. ただし, 
  \[
  L^{\perp} := \{x \in X \mid \ip<{x,y}> = 0 ~(\forall y \in L)\}
  \]
  である. 
\end{thm}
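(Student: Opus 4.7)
本定理の証明は、存在、直交性、一意性の三つの部分を順次示す方針を取る。核となる道具は、平行四辺形則(内積空間固有の性質)、Hilbert空間の完備性、そして$L$が閉部分空間であるという仮定である。有限次元では最小点の存在はコンパクト性からほぼ自明に従うが、無限次元Hilbert空間ではそのような議論が使えないため、平行四辺形則による最小化列のCauchy性の導出が本質的役割を果たす。

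存在の部分では、まず$d := \inf_{y \in L} \lVert x - y \rVert$とおき、$\lVert x - y_n \rVert \to d$をみたす最小化列$(y_n) \subset L$を構成する。平行四辺形則を$a = x - y_n, b = x - y_m$に対して適用し、$(y_n + y_m)/2 \in L$ゆえに$\lVert x - (y_n + y_m)/2 \rVert \geq d$となることを用いることで、$n, m \to \infty$のとき$\lVert y_n - y_m \rVert \to 0$が得られる。すなわち$(y_n)$はCauchy列であり、$X$の完備性と$L$の閉性から極限$y \in L$が存在し、ノルムの連続性より$\lVert x - y \rVert = d$となる。

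続いて$z := x - y$とおき、$z \in L^{\perp}$を変分的議論で示す。任意の$w \in L$と$t \in \R$に対し$y + tw \in L$であるから$\lVert z - tw \rVert^2 \geq d^2 = \lVert z \rVert^2$が成り立ち、左辺を展開して得られる$t$の二次式の非負性条件から$\ip<{z, w}> = 0$が導かれる。最後に一意性は、二つの分解$x = y_1 + z_1 = y_2 + z_2$($y_i \in L, z_i \in L^{\perp}$)があれば$y_1 - y_2 = z_2 - z_1$が$L \cap L^{\perp} = \{0\}$(内積の正定値性から明らか)に属することから結論される。全体を通じて最大の障害となるのは存在パートにおける最小化列のCauchy性の証明であり、平行四辺形則の巧妙な適用がその鍵となる。
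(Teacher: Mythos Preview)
Your proof is correct and follows the standard textbook argument for the projection theorem. The paper itself does not give a proof of this statement; it simply refers the reader to \cite{FujitaKurodaIto}, Theorem~3.2, so there is no independent argument in the paper to compare against --- but the approach you outline (minimizing sequence, parallelogram law for Cauchy-ness, completeness plus closedness of $L$ for existence, variational argument for orthogonality, and $L \cap L^{\perp} = \{0\}$ for uniqueness) is precisely the classical proof one finds in that reference.
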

\begin{proof}
  \cite{FujitaKurodaIto}の定理$3$.$2$を参照されたい. 
\end{proof}
\begin{lem}\label{HomPolyReprensent}
  $k \geq 0$とし$\Omega \subset \mathrm{Mat}(n,m;\R)$とする. 
  また, $L(\Omega)$上で恒等的に$0$を取る$\mathcal{H}_k(\R^n)$の元は$0$に限るとする. 
  このとき, 
  \[
  \mathcal{H}_k(\R^n) = \mathrm{span}\{ \R^n \ni x \mapsto (d^{\T}x)^k \in \R \mid d \in L(\Omega)\} =: L
  \]
  となる. 
\end{lem}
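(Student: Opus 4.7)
方針としては, 有限次元性を利用して$L$が$\mathcal{H}_k(\R^n)$の閉部分空間であることをまず確認し, 射影定理を通じて$L^{\perp} = \{0\}$を示すことで$L = \mathcal{H}_k(\R^n)$を結論する流れで進める. 任意の$d \in \R^n$に対して多項定理より
\[
(d^{\T}x)^k = \sum_{|\alpha|=k} \frac{k!}{\alpha!} d^{\alpha} x^{\alpha} \in \mathcal{H}_k(\R^n)
\]
となるから$L \subset \mathcal{H}_k(\R^n)$であり, $\mathcal{H}_k(\R^n)$が有限次元線形空間なので$L$は自動的に閉部分空間である.

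この証明の心臓部は次の基本公式の確立である: 任意の$p \in \mathcal{H}_k(\R^n)$と$d \in \R^n$に対して
\[
\ip<{p, (d^{\T}x)^k}> = k! \, p(d).
\]
この式は, $p(x) = \sum_{|\alpha|=k} c_{\alpha} x^{\alpha}$と上の多項定理による$(d^{\T}x)^k$の展開を内積の定義$\ip<{p,q}> = p(\partial) q$に代入し, $|\alpha|=|\beta|=k$のときは$\partial^{\beta} x^{\alpha} = \alpha! \, \delta_{\alpha,\beta}$となるという$\mathcal{H}_k(\R^n)$上の微分作用の基本性質を用いれば, 直ちに得られる.

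この公式により, $p \in L^{\perp}$であること, つまりすべての$d \in L(\Omega)$について$\ip<{p,(d^{\T}x)^k}> = 0$となることは, $p$が$L(\Omega)$上で恒等的に$0$となることと同値である. 本補題の仮定によりそのような$p \in \mathcal{H}_k(\R^n)$は$0$のみだから$L^{\perp} = \{0\}$となり, 射影定理から$L = \mathcal{H}_k(\R^n)$が従う.

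特筆すべき困難は見当たらず, 要は前の命題で定義された内積$\ip<{\cdot,\cdot}>$が, $(d^{\T}x)^k$型の元との評価を微分を通じて点での値$p(d)$へと帰着させるよう設計されている(いわゆるapolar型内積)ことを見抜く点に尽きる. この公式さえ認めれば, 残りは射影定理の機械的な適用によって完結する.
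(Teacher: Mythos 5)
あなたの証明は正しく, 本質的に本文の証明と同じ方針である(核となる等式$\ip<{p,(d^{\T}x)^k}> = k!\,p(d)$を内積$p(\partial)q$の定義から導き, 射影定理で直交補空間が消えることを示す). 本文では任意の$p$を$q+r$と分解して$r=0$を示すのに対し, あなたは$L^{\perp}=\{0\}$を先に結論づけているが, これは同一の議論の言い換えにすぎない.
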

\begin{proof}
  $\mathcal{H}_k(\R^n) \subset L$を示せばよい. $L$はノルム空間$\mathcal{H}_k(\R^n)$の有限次元部分空間なので閉である(\cite{FujitaKurodaIto}の{\S}1.5 (a)参照). したがって, 任意の$p \in \mathcal{H}_k(\R^n)$に対して, 射影定理より, $q \in L , r \in L^{\perp}$が存在して$p = q + r$となる. ところが, $r(x) = \sum c_{\alpha} x^{\alpha}$と書くことにすると, $r \in L^{\perp}$より, 任意の$d \in L(\Omega)$に対して, 
  \[
  \begin{aligned}
  0 
  &= \ip<r(x),(d^{\T} x)^k> \\
  &= r(\partial) (d^{\T} x)^k \\
  &= \sum_{|\alpha|=k} c_{\alpha} \partial^{\alpha} (d^{\T} x)^k \\
  &= \sum_{|\alpha|=k} c_{\alpha} k! d^{\alpha} \\
  &= k! r(d)
  \end{aligned}
  \]
  となる. ゆえに, $r(d) = 0 ~(\forall d \in L(\Omega))$であるから, 仮定より$r=0$である. よって, $p = q \in L$である. 
\end{proof}
\begin{thm}
  $\Omega \subset \mathrm{Mat}(n,m;\R)$とする. 
  このとき, $\mathrm{span}R(\Omega)$が$C(\R^n)$において広義一様収束の意味で稠密であることと, $L(\Omega)$上で恒等的に$0$を取る$\mathcal{P}(\R^n)$の元が$0$以外にないことは同値である. 
\end{thm}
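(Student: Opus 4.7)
まず鍵となる観察として, 各 $L(A)$ が $\R^n$ の線形部分空間であるから $L(\Omega) = \bigcup_{A \in \Omega} L(A)$ はスカラー倍に関して閉じた錐である. よって多項式 $p = \sum_{k=0}^K p_k$ ($p_k$ は $p$ の斉次成分) が $L(\Omega)$ 上で恒等的に $0$ となる必要十分条件は, 各 $d \in L(\Omega)$ について $p(\lambda d) = \sum_k \lambda^k p_k(d)$ が $\lambda$ の多項式として恒等的に $0$ となること, すなわち各 $p_k$ が $L(\Omega)$ 上で恒等的に $0$ となることである. これにより問題を斉次多項式の次数ごとに分解でき, すでに証明した補題 \ref{HomPolyReprensent} が両方向の中心的な道具となる.

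十分条件 ($\Leftarrow$) は直接的である. 仮定から各 $k \geq 0$ について $L(\Omega)$ 上で恒等的に $0$ を取る $\mathcal{H}_k(\R^n)$ の元は $0$ のみ. 補題 \ref{HomPolyReprensent} より $\mathcal{H}_k(\R^n) = \mathrm{span}\{x \mapsto (d^{\T}x)^k \mid d \in L(\Omega)\}$. ところが $d \in L(\Omega)$ はある $A \in \Omega$, $y \in \R^m$ により $d = (y^{\T}A)^{\T}$ と表せるので $(d^{\T}x)^k = (y^{\T}Ax)^k = g(Ax)$ ($g(z) := (y^{\T}z)^k \in C(\R^m)$) は $R(\Omega)$ の元である. したがって $\mathcal{P}(\R^n) \subset \mathrm{span}\,R(\Omega)$ となり, 各コンパクト集合上での Weierstrass の多項式近似定理から広義一様収束の意味での稠密性が従う.

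必要条件 ($\Rightarrow$) は対偶を, 連続線形汎関数を用いた双対的議論で示す方針を採る. ある $0 \neq p \in \mathcal{H}_k(\R^n)$ ($k \geq 1$; $k=0$ なら $0 \in L(\Omega)$ より直ちに $p=0$) が $L(\Omega)$ 上で恒等的に $0$ であると仮定する. $p \neq 0$ より偏微分作用素 $p(\partial) = \sum_{|\alpha|=k} p_\alpha \partial^\alpha$ は零作用素ではないから, コンパクト台を持つ滑らかな関数 $\phi$ を適切に選べば $\psi := p(\partial)\phi \not\equiv 0$ とできる. そこで線形汎関数 $T(h) := \int_{\R^n} h(x)\psi(x)\,dx$ ($h \in C(\R^n)$) を定めれば, $\mathrm{supp}\,\psi$ のコンパクト性から $T$ は広義一様収束位相に関して連続である. あとは $T$ が $\mathrm{span}\,R(\Omega)$ 上で恒等的に $0$ であることを示せば, 稠密性と連続性から $C(\R^n)$ 全体で $T \equiv 0$ となるはずだが, 一方 $T(\psi) = \int \psi^2\,dx > 0$ であるから矛盾が生じる.

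したがって最大の山場は $T(g \circ A) = 0$ ($A \in \Omega$, $g \in C(\R^m)$) の検証である. まず $g \in C^\infty(\R^m)$ の場合を考えると, 連鎖律 $\partial_{x_i}[g(Ax)] = \sum_j A_{ji}(\partial_{y_j}g)(Ax)$ を繰り返し用いて $p(\partial_x)[g(Ax)] = [\tilde{p}(\nabla_y)g](Ax)$ が得られる. ここで $\tilde{p}(\eta) := p(A^{\T}\eta)$ は $\R^m$ 上の多項式である. ところが $A^{\T}\eta \in L(A) \subset L(\Omega)$ であり $p$ は $L(\Omega)$ 上で消えるから $\tilde{p}$ は多項式として恒等的に $0$, 従って $p(\partial_x)[g(Ax)] \equiv 0$. 部分積分により $T(g \circ A) = (-1)^k \int p(\partial_x)[g(Ax)]\phi(x)\,dx = 0$ を得る. 一般の連続な $g$ については軟化子との畳み込み $g_\epsilon \in C^\infty(\R^m)$ をとれば, コンパクト集合 $A(\mathrm{supp}\,\phi)$ 上で $g_\epsilon \to g$ が一様収束するので極限移行して $T(g \circ A) = \lim_{\epsilon\to 0} T(g_\epsilon \circ A) = 0$ が従う.
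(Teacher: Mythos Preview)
Your proof of the sufficient direction ($\Leftarrow$) is essentially identical to the paper's: reduce to $\mathcal{H}_k(\R^n) \subset \mathrm{span}\,R(\Omega)$ via 補題~\ref{HomPolyReprensent}, observe that each $(d^{\T}x)^k$ with $d \in L(\Omega)$ equals $g(Ax)$ for a suitable $g \in C(\R^m)$, and conclude by polynomial approximation.

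For the necessary direction ($\Rightarrow$), the paper does not give a proof at all; it simply refers the reader to Lin--Pinkus~\cite{LinAndPinkus}. Your argument supplies what the paper omits, and it is correct. The reduction to a homogeneous $p \in \mathcal{H}_k$ via the cone property of $L(\Omega)$ is clean; the construction of the annihilating functional $T(h) = \int h\,p(\partial)\phi$ with $\phi \in C_0^\infty(\R^n)$ is the natural distributional dual to 補題~\ref{HomPolyReprensent}. The key identity $p(\partial_x)[g(Ax)] = [\tilde p(\nabla_y) g](Ax)$ with $\tilde p(\eta) = p(A^{\T}\eta)$ and the observation $A^{\T}\eta \in L(A) \subset L(\Omega)$ (so $\tilde p \equiv 0$) are exactly right, the integration by parts is justified because $\phi$ has compact support and $|\alpha| = k$ for every term, and the passage from smooth $g$ to continuous $g$ by mollification is handled correctly using uniform convergence on the compact set $A(\mathrm{supp}\,\psi)$. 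Thus your treatment is strictly more complete than the paper's, and self-contained.
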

\begin{proof}
  $L(\Omega)$上で$0$を取る$\mathcal{P}(\R^n)$の元は$0$に限るとする. このとき, 任意の$k \geq 0$に対して$\mathcal{H}_k(\R^n) \subset \mathrm{span}R(\Omega)$となることを示そう. これが示されれば, すべての多項式が$\mathrm{span}R(\Omega)$に含まれることがわかるので, Stone-Weierstrassの定理により$\mathrm{span}R(\Omega)$は$C(\R^n)$において広義一様収束の意味で稠密である. さて, すぐ上の補題により
  \[
  \mathcal{H}_k(\R^n) = \mathrm{span}\{ \R^n \ni x \mapsto (d^{\T}x)^k \in \R \mid d \in L(\Omega)\}
  \]
  となる. また, $d \in L(\Omega)$ならば$A \in \Omega$と$y \in \R^m$が存在して$d = A^{\T}y$となるので, $g(z) := (z^{T}y)^k$と定義すれば$g(Ax) = (A^{\T}x)y)^k = (x^{\T} Ay)^k = (x^{\T} d)^k$となる. よって, $x \mapsto (d^{\T} x)^k$は$R(\Omega)$に属するので, 
  \[
   \mathcal{H}_k(\R^n) = \mathrm{span}\{ \R^n \ni x \mapsto (d^{\T}x)^k \in \R \mid d \in L(\Omega)\} \subset \mathrm{span}R(\Omega)
  \]
  である. 逆の証明についてはLin, Pinkus, 1992 \cite{LinAndPinkus}を参照されたい. 
\end{proof}
最後に7章で使う命題を証明する. 証明には「斉次多項式の決定問題」に関する次の事実を使う. 
\begin{prop}\label{HomPolyDecis}
  $n$変数$d$次斉次多項式はある$r(n,d) := \binom{n-1+d}{d}$個の相異なる$0$でない点で決定される. つまり, ある$\xi^{(1)},\ldots,\xi^{(r(n,d))} \in \R^n\setminus\{0\}$が存在して, 任意の$p \in \mathcal{H}_{d}(\R^n)$に対して, 
  \[
  (\forall i \in \{1,\ldots,r(n,d) \},~ p(\xi^{(i)}) = 0) \Rightarrow p \equiv 0
  \]
  となる. さらに, このような$\xi^{(1)},\ldots,\xi^{(r(n,d))} \in \R^n \setminus \{0\}$として, 任意の$k = 1,\ldots,d$と$p \in \mathcal{H}_{k}(\R^n)$に対して, 
  \[
  (\forall i \in \{1,\ldots,r(n,d) \},~ p(\xi^{(i)}) = 0) \Rightarrow p \equiv 0
  \]
  となるようなものが取れる. 
\end{prop}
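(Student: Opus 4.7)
証明は点 $\xi^{(1)}, \ldots, \xi^{(N)}$（$N := r(n,d)$）を帰納的に構成する方針で進める. 鍵となる観察は次の通りである：$r(n,k) = \binom{n-1+k}{k}$ は $\mathcal{H}_k(\R^n)$ の次元（すなわち $|\alpha|=k$ なる多重指数 $\alpha$ の個数）に等しく, $k$ について非減少なので $r(n,k) \leq N ~(k=1,\ldots,d)$ が成り立つ. 各 $k$ に対して既に選ばれた点から定まる共通零部分空間
\[
W_j^k := \{ p \in \mathcal{H}_k(\R^n) \mid p(\xi^{(i)}) = 0 ~~(i = 1, \ldots, j) \}
\]
を考えると初期状態では $\dim W_0^k = r(n,k) \leq N$ であるから, 点 $\xi^{(j+1)}$ を追加するたびに $W_j^k$ の次元をすべての $k$ について同時に $1$ ずつ減らせれば, $N$ 回の操作で $W_N^k = \{0\}$ が全ての $k \in \{1,\ldots,d\}$ で成立し, 命題の両方の主張が同時に得られる.

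帰納ステップは次のように構成する. $W_j^k \neq \{0\}$ となる各 $k$ について非零な $p_k \in W_j^k$ を一つ取り, そのゼロ集合を避けるように $\xi^{(j+1)}$ を選ぶ. 非零多項式のゼロ集合は $\R^n$ の真の代数的部分集合であり内部を持たない（一変数ずつ考える帰納法による基本事実）. したがって有限個の真の代数的部分集合と既選点および原点の合併
\[
S_j := \{0\} \cup \{\xi^{(1)}, \ldots, \xi^{(j)}\} \cup \bigcup_{k : W_j^k \neq \{0\}} \{x \in \R^n \mid p_k(x) = 0\}
\]
も $\R^n$ を覆わないので, $\xi^{(j+1)} \in \R^n \setminus S_j$ が存在し, それは $0$ と異なり既選点とも相異なる. 選び方から $p_k(\xi^{(j+1)}) \neq 0$ であるから $W_{j+1}^k \subsetneq W_j^k$, さらに $W_{j+1}^k$ は $W_j^k$ に線形方程式 $p(\xi^{(j+1)}) = 0$ を一本加えて得られる部分空間であるから $\dim W_{j+1}^k = \dim W_j^k - 1$ が従う. 一方 $W_j^k = \{0\}$ の $k$ については以後も $W_{j+1}^k = \{0\}$ のままである.

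最大の困難はすべての $k = 1, \ldots, d$ について同時に共通零次元を減らす点 $\xi^{(j+1)}$ が取れるかという点にあるが, これは「$\R^n$ の真の代数的部分集合の有限合併は $\R^n$ を覆わない」という基本事実により一挙に解決される. $N$ 回の帰納操作の後, 各 $k \in \{1,\ldots,d\}$ について $\dim W_N^k \leq \max(0, r(n,k) - N) = 0$ すなわち $W_N^k = \{0\}$ となり, 所望の点列が構成される.
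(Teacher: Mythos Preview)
Your argument is correct. The inductive construction works exactly as you describe: at each step the set $S_j$ is a finite union of zero loci of nonzero polynomials together with finitely many points, hence has empty interior in $\R^n$, so a point $\xi^{(j+1)}$ outside $S_j$ exists; this forces $\dim W_{j+1}^k = \dim W_j^k - 1$ for every $k$ with $W_j^k \neq \{0\}$, and since $\dim W_0^k = r(n,k) \leq N$ the spaces $W_N^k$ all collapse to $\{0\}$.

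As for comparison with the paper: the paper does not actually prove this proposition but simply refers the reader to Ben~Yaacov's note on a multivariate Vandermonde determinant identity. That reference proceeds by exhibiting an explicit determinant that is nonzero on a concrete set of evaluation points (a Vandermonde-type construction), which yields an explicit choice of the $\xi^{(i)}$. Your route is a general-position/dimension-counting argument: it is more elementary and handles all degrees $k=1,\ldots,d$ simultaneously with no extra work, but it is nonconstructive in the sense that the points are obtained by successively avoiding algebraic sets rather than written down in closed form. Either approach suffices for the downstream application (Proposition~\ref{PolyRidgeRelation}), where only existence of such points is needed.
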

\begin{proof}
  \cite{HomPolyDecision}を参照されたい. 
\end{proof}
\begin{prop}\label{PolyRidgeRelation}
  $r = \mathrm{dim}\mathcal{H}_k(\R^n) = \binom{n-1+k}{k}$とおき, $n$変数の$k$次以下の多項式全体の集合を$\pi_k(\R^n)$と表すことにする. このとき, $a_1, \ldots, a_r \in \R^n$が存在して, 
  \[
  \begin{aligned}
  \pi_k(\R^n) = \left\{ x \mapsto \sum_{i=1}^r g_i(a_i^{\T} x )  \mid g_i \in \pi_k(\R) \right\}
  \end{aligned}
  \]
  となる. 
\end{prop}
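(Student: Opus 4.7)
方針として, まず右辺の集合を次数ごとに分解して示すべきことを言い換える. $g_i \in \pi_k(\R)$は$g_i(t) = \sum_{j=0}^{k} c_{i,j} t^j$と書けるから,
\[
\sum_{i=1}^r g_i(a_i^{\T}x) = \sum_{j=0}^{k} \Bigl( \sum_{i=1}^{r} c_{i,j}(a_i^{\T}x)^j \Bigr)
\]
となり, 内側の和は$\mathcal{H}_j(\R^n)$の元である. したがって目標は, 共通の$a_1,\ldots,a_r \in \R^n$をうまく選んで各$j=0,1,\ldots,k$について
\[
\mathcal{H}_j(\R^n) = \mathrm{span}\{x \mapsto (a_i^{\T}x)^j \mid i=1,\ldots,r\}
\]
が同時に成り立つようにすることに帰着される(逆向きの包含は明らかなので).

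次にこの$a_i$の選び方は命題\ref{HomPolyDecis}の強化版主張により与えられる. すなわち同命題を$d=k$に対して適用し, 任意の$j=1,\ldots,k$と$p \in \mathcal{H}_j(\R^n)$に対して「$p(\xi^{(i)})=0~(\forall i)$ならば$p \equiv 0$」をみたす$\xi^{(1)},\ldots,\xi^{(r)} \in \R^n \setminus \{0\}$を取り, これを$a_1,\ldots,a_r$と定める. 続いて$\Omega := \{a_1^{\T},\ldots,a_r^{\T}\} \subset \mathrm{Mat}(1,n;\R)$とおくと, 定義より$L(\Omega) = \bigcup_{i=1}^{r}\R a_i$である. $j \geq 1$の斉次多項式$p$については$p$が直線$\R a_i$上恒等的に$0$となることと$p(a_i)=0$となることは$p(t a_i) = t^j p(a_i)$より同値であるから, $a_i$の取り方から補題\ref{HomPolyReprensent}の仮定がみたされる. よって各$j=1,\ldots,k$について補題\ref{HomPolyReprensent}から所望の等式が得られる. $j=0$の場合は$\mathcal{H}_0(\R^n) = \R$かつ$(a_i^{\T}x)^0 = 1$なので自明である.

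最後に任意の$p \in \pi_k(\R^n)$を斉次成分に分解$p = \sum_{j=0}^{k} p_j$(ただし$p_j \in \mathcal{H}_j(\R^n)$)し, 各$p_j$を上記で得た表示$p_j(x) = \sum_{i=1}^{r} c_{i,j}(a_i^{\T}x)^j$により書き換えて和の順序を入れ替えれば, $g_i(t) := \sum_{j=0}^{k} c_{i,j}t^j \in \pi_k(\R)$として$p(x) = \sum_{i=1}^{r} g_i(a_i^{\T} x)$が得られる. 本証明の要点は「同じ$a_1,\ldots,a_r$が各次数$j=1,\ldots,k$に対して同時に機能する」ことを保証する部分にあり, これはまさに命題\ref{HomPolyDecis}の強化部分が提供するものなので, その呼び出し以外は補題\ref{HomPolyReprensent}のほぼ直接的な適用で済むと見込まれる.
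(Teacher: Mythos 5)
あなたの証明は正しく, 本質的に本論文の証明と同じ方針です: 命題\ref{HomPolyDecis}の強化部分で全次数に同時に機能する$a_1,\ldots,a_r$を取り, 補題\ref{HomPolyReprensent}を各次数に適用し, 斉次成分への分解で結論する, という流れは一致しています. 点$a_i$での消滅と直線$L(\Omega)$上での消滅を斉次性$p(ta_i)=t^j p(a_i)$で結びつける箇所を明示した点は, 本論文が「わかるように」と省略した細部を丁寧に補っており好ましいです.
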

\begin{proof}
  命題\ref{HomPolyDecis}より$a_1, \ldots, a_r \in \R^n$が存在して, 各$s = 0,\ldots, k$に対して, $A = \{a_1,\ldots,a_r\}$上で恒等的に$0$を取る$\mathcal{H}_s(\R^n)$の元は$0$に限る. したがって, 補題\ref{HomPolyReprensent}からわかるように
  \[
  \mathcal{H}_s(\R^n) = \mathrm{span}\{ \R^n \ni x \mapsto (d^{\T}x)^s \in \R \mid d \in A\}
  \]
  となる. このことと多項式を斉次多項式の和として表せることより
  \[
  \begin{aligned}
  \pi_k(\R^n) 
  &= \mathrm{span} \{x \mapsto (d^{\T}x)^s \mid d \in A, s = 0,\ldots, k\} \\
  &= \left\{ x \mapsto \sum_{i=1}^r g_i(a_i^{\T} x )  \mid g_i \in \pi_k(\R) \right\}
  \end{aligned}
  \]
  となる. 
\end{proof}

\section{ニューラルネットワークの$C(\R^r)$における万能近似定理}
本章ではまず1993年のLeshno et al., Hornikの結果を証明する. 
次いで, Cybenkoの結果を一般化したChuiとLiの結果を解説する. 
さらにChuiとLiの結果をノルム空間へ一般化した, SunとCheneyによる結果を紹介し, その後, 1989年のHornikの結果の証明を与える. 
\subsection{Leshno et al.の結果}
本節では, Leshno et al.の結果\cite{Leshno}およびその改良版について述べる. 本節では特に断らなければa.e.はLebesgue測度の意味で用いる. 
\begin{defn}
  $\Omega \subset \R^n$を開集合とする. また,  $\mu$を$(\R^n,\B_{\R^n})$上のLebesgue測度とする. このとき, 
  \[
  \begin{aligned}
  L_{\mathrm{loc}}^{\infty}(\Omega) 
  &:= \left\{f:\Omega \rightarrow \R ~\vline~ 
  \begin{aligned}
  &\mbox{任意のコンパクト集合}K \subset \Omega \mbox{に対して}\\
  &f|_{K} \in L^{\infty}(K)
  \end{aligned} 
  \right\} \\
  \mathcal{M}(\Omega) &:= \{f \in L_{\mathrm{loc}}^{\infty}(\Omega) \mid \mu(\mathrm{cl}\{f\mbox{の不連続点}\}) = 0\}
  \end{aligned}
  \]
  と定義する. 
\end{defn}
Leshno et al. \cite{Leshno}で示された主要な結果を再掲しよう. 
\begin{thm}[Leshno et al. 1993 \cite{Leshno}]\label{LeshnoMainResult1}　\\
  $\mu$を$\R$上のLebesgue測度とする. $\Psi \in \mathcal{M}(\R)$に対して, $\Sigma^r(\Psi)$が$C(\R^r)$において広義一様収束の意味で稠密であるための必要十分条件は, いかなる一変数多項式関数$P:\R \rightarrow \R$についても$\Psi = P ~(\mu \mathrm{-a.e.})$とならないことである. 
\end{thm}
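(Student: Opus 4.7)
必要性と十分性に分けて議論する.

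\emph{必要性 (対偶).} ある多項式 $P$ (次数 $d$) が存在し $\Psi = P ~(\mu\text{-a.e.})$ と仮定する. このとき $\Sigma^r(\Psi)$ の任意の元 $f = \sum_{j=1}^q \beta_j \Psi(w_j^{\T}x + \theta_j)$ は高々 $d$ 次の多項式 $\tilde{f}(x) := \sum_j \beta_j P(w_j^{\T}x+\theta_j)$ と a.e. 一致する ($w_j \neq 0$ の項は $P(w_j^{\T}x+\theta_j)$ と a.e. 一致し, $w_j=0$ の項は定数). $\Sigma^r(\Psi)$ の列 $f_n$ がコンパクト $K \subset \R^r$ 上で連続関数 $g$ を一様近似すると仮定すると, 対応する連続関数 $\tilde{f}_n - g$ の sup ノルムは a.e. 等式と連続性により $|f_n - g|$ の sup で上から押さえられるので, 次数 $d$ 以下の多項式列が $g$ を一様近似することになる. $g(x) = \sin x_1$ 等を取れば Weierstrass の結果と合わせて矛盾を導く, という筋書きで示す予定である.

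\emph{十分性.} 命題 \ref{DensityDim1Suff} により $r=1$ の場合に帰着するので, 以下 $r=1$ で議論する. 基本戦略は「$\Psi$ を軟化子で畳み込んで得られる滑らかな非多項式関数 $g$ から単項式 $x^k$ ($k \geq 0$) を全て抽出する」ことで, 以下の3段階で遂行する. 第1に, 任意のコンパクト台付き $\phi \in C^{\infty}(\R)$ に対し $\Psi * \phi \in C^{\infty}(\R)$ が $\Sigma^1(\Psi)$ の広義一様収束の意味での閉包に属することを, Riemann和
\[
(\Psi * \phi)(x) = \int_{\R} \Psi(x-t)\phi(t) \, dt \approx \sum_j \Psi(x - t_j) \phi(t_j) \Delta t_j \in \Sigma^1(\Psi)
\]
の近似によって示す. 第2に, $\Psi$ がいかなる多項式とも a.e. 一致しないという仮定から, ある $\phi$ に対して $g := \Psi * \phi$ が多項式にならないことを示す (対偶として, 全ての $\phi$ について $\Psi * \phi$ が多項式なら, 軟化子列 $\phi_{\varepsilon} \to \delta$ の極限を取ることで $\Psi$ 自身の a.e. 多項式性が従う, という議論で). 第3に, この滑らかな非多項式 $g$ について, 各 $k \geq 0$ に対し $g^{(k)}(\theta_k) \neq 0$ なる $\theta_k \in \R$ を取り, 有限差分極限
\[
g^{(k)}(\theta_k) \, x^k = \lim_{h \to 0} \frac{1}{h^k} \sum_{j=0}^{k} (-1)^{k-j} \binom{k}{j} g(jhx + \theta_k)
\]
(右辺はコンパクトな $x$ の範囲で一様収束し, $\Sigma^1(\Psi)$ の閉包がアフィン変換と線形結合で不変であることから各 $h>0$ で閉包に属する) を用いて $x^k$ が閉包に属することを示す. これにより全ての単項式, よって全ての多項式が閉包に入るので, Weierstrass の多項式近似定理により $C(\R)$ における広義一様稠密性が結論される.

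\emph{主要な難所.} 技術的に最もデリケートなのは第1段階のRiemann和の広義一様収束を $\Psi$ の不連続性を考慮して示す部分である. ここで仮定 $\Psi \in \mathcal{M}(\R)$ が本質的に効く: $\mu(\mathrm{cl}\{\Psi\text{の不連続点}\}) = 0$ により分割点を不連続点の閉包を避けて取ることができ, かつ $\Psi \in L_{\mathrm{loc}}^{\infty}(\R)$ から一様評価が得られる. 次に微妙なのが第2段階の「畳み込みが常に多項式 $\Rightarrow$ $\Psi$ 自身が a.e. 多項式」 の含意で, 軟化子極限の扱いを丁寧に行う必要がある. これら以外は, 本質的に微分と有限差分の計算と Weierstrass の定理で完結する見込みである.
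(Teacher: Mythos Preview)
Your overall architecture matches the paper's: reduce to $r=1$, approximate $\Psi*\phi$ by Riemann sums to place it in the closure of $\Sigma^1(\Psi)$, and extract monomials from a smooth non-polynomial via parameter differentiation. Steps~1 and~3 are fine and correspond to Lemmas~\ref{CovolutionUniformAppro} and~\ref{NeuralNetApproPoly} of the paper.

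The gap is in Step~2. You write that ``if for all $\phi$ the convolution $\Psi*\phi$ is a polynomial, then taking a mollifier sequence $\phi_\varepsilon\to\delta$ shows $\Psi$ is a.e.\ a polynomial.'' But the a.e.\ limit of polynomials is a polynomial only when the degrees are \emph{uniformly bounded} (this is the content of Lemma~\ref{PointwiseConvergencePolyPreserve}); without that bound the claim is false --- e.g.\ Taylor polynomials of $e^x$ converge everywhere to a non-polynomial. Nothing in your hypothesis ``$\Psi*\phi$ is polynomial for every $\phi$'' gives you a single $N$ that works for all $\phi$ (or even for all $\phi_\varepsilon$), so the contrapositive as stated does not close. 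You could repair this with a Baire-category argument on $C_0^\infty(K)$ to force a uniform degree bound, but that is nontrivial and you have not indicated it.

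The paper sidesteps this entirely by reversing the logic: it assumes $\mathcal{N}_1(\Psi,W,I)$ is \emph{not} dense and derives the uniform bound from that. Since $x^k(\Psi*\rho)^{(k)}(b)$ lies in the closure for every $k$, $b$, and $\rho$ (by your Steps~1 and~3 combined), non-density forces some fixed monomial $x^m$ to be absent, hence $(\Psi*\rho)^{(m)}(b)=0$ for \emph{all} $\rho$ and $b$ --- a uniform degree bound $m$ for free. Then Lemmas~\ref{ConvolutionAlmostEverywhereConvergence} and~\ref{PointwiseConvergencePolyPreserve} give $\Psi$ a.e.\ polynomial, contradicting the hypothesis. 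I recommend you reorganize Step~2 as a proof by contradiction in this way; it is shorter and avoids the degree-uniformity obstacle.
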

命題\ref{DensityDim1Suff}よりこの定理を示すには$r=1$の場合を示せばよい. まず必要性を示そう. 必要性の証明は非常に単純である. 
\begin{lem}
  関数$\Psi:\R \rightarrow \R$がa.e.で多項式関数であるならば, $\Sigma^1(\Psi)$は$C(\R)$において広義一様収束の意味で稠密でない. 
\end{lem}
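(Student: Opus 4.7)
The plan is to exploit the fact that if $\Psi = P$ ($\mu$-a.e.) for some polynomial $P$ of degree $d$, then every element of $\Sigma^1(\Psi)$ is \emph{essentially} a polynomial of degree at most $d$, and then use a continuous target function that is not such a polynomial as an obstruction. First, I would show that for any $f(x) = \sum_{j=1}^q \beta_j \Psi(w_j x + \theta_j) \in \Sigma^1(\Psi)$ there is a polynomial $Q_f$ of degree $\leq d$ with $f = Q_f$ ($\mu$-a.e.) on $\R$. The terms with $w_j = 0$ contribute only constants, and for each term with $w_j \neq 0$ the affine map $x \mapsto w_j x + \theta_j$ preserves Lebesgue null sets, so $\Psi(w_j x + \theta_j) = P(w_j x + \theta_j)$ for a.e. $x$; summing yields $Q_f$.

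Next, to derive a contradiction from the assumption of density, I would take the compact set $K = [0, 1]$ and the target $g(x) = x^{d+1}$, which is continuous but whose restriction to $K$ is not a polynomial of degree $\leq d$. Density would give a sequence $f_n \in \Sigma^1(\Psi)$ with $\rho_K(f_n, g) \to 0$, and by the previous step each $f_n$ agrees a.e. on $K$ with a polynomial $Q_n$ of degree $\leq d$.

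The main obstacle is to bridge the gap between the a.e. identity $f_n = Q_n$ and a pointwise (hence uniform) estimate on $K$. For this I would observe that the set $E_n = \{x \in K \mid f_n(x) = Q_n(x)\}$ has full Lebesgue measure in $K$, hence is dense, and on $E_n$ the continuous function $Q_n - g$ satisfies $|Q_n(x) - g(x)| \leq \rho_K(f_n, g)$; by density and continuity of $Q_n - g$ the same bound then holds on all of $K$. Thus $Q_n \to g$ uniformly on $K$. Since the polynomials of degree $\leq d$ form a finite-dimensional, hence closed, subspace of $C(K)$, the limit $g$ would itself be a polynomial of degree $\leq d$ on $K$, contradicting $g(x) = x^{d+1}$. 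Everything apart from this density-and-continuity transfer is routine bookkeeping.
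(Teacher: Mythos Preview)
Your argument is correct and follows essentially the same route as the paper: both first observe that every element of $\Sigma^1(\Psi)$ coincides a.e.\ with a polynomial of degree at most $d$, and both use $x^{d+1}$ as the obstruction. The paper simply asserts that such a function cannot be uniformly approximated, whereas you supply the missing justification via the density-and-continuity transfer and the closedness of the finite-dimensional subspace of $C([0,1])$; this is exactly the detail one needs to make the paper's one-line proof rigorous.
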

\begin{proof}
  $\mathrm{deg}(\Psi) = n$とすると, $\Sigma^r(\Psi)$は$n$次の$1$変数多項式とa.e.で一致する関数全体の集合に含まれる. したがって, $n+1$次の$1$変数多項式は広義一様収束の意味で近似できない. 
\end{proof}
十分性は次の定理に含まれるため本節では代わりに次の定理を証明する. 
\begin{thm}
  $W \subset \R$は孤立点を持たず$0 \in W$であるとする. 
  また, $\Psi:\R \rightarrow \R$は$\ \mathcal{M}(\R)$に属するとする. さらに, ある開区間$I$に対して$\Psi$が$I$上a.e.に多項式と一致しないとする. このとき, $\mathcal{A} \subset \R^r$上で$0$を取る非自明な$r$変数斉次多項式が存在しないならば, 
  \[
  \begin{aligned}
  \mathcal{N}_r(\Psi,W\mathcal{A},I) = \mathrm{span}\{ \R^r \ni x \mapsto \Psi(w a^{T} x + b) \in \R \mid w \in W, a \in \mathcal{A}, b \in I \}
  \end{aligned}
  \]
  は$C(\R^r)$において広義一様収束の意味で稠密である. 
\end{thm}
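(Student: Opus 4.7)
The plan is to reduce via Proposition \ref{MultiVariateDensity} to the one-variable case, so that it suffices to show $\mathcal{N}_1(\Psi, W, I)$ is dense in $C(\R)$ under locally uniform convergence. I would argue by contradiction using Hahn-Banach together with the Riesz-Markov-Kakutani representation: if density fails, then on some compact $K \subset \R$ there is a nonzero signed Borel measure $\nu$ annihilating every restriction $\Psi(w \cdot + b)|_K$, so
\[
\int_K \Psi(wy + b)\,d\nu(y) = 0 \qquad (w \in W,\ b \in I).
\]

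Next I would mollify in $b$. Fix $b_0 \in I$ and $\epsilon > 0$ with $[b_0 - \epsilon, b_0 + \epsilon] \subset I$, and take $\phi \in C_c^\infty(\R)$ supported in $(-\epsilon, \epsilon)$. Define $\Psi_\phi(x) := \int \Psi(x+s)\phi(s)\,ds$, which is $C^\infty$ because $\Psi \in L_{\mathrm{loc}}^{\infty}(\R)$. Multiplying the vanishing identity by $\phi(b-b_0)$, integrating over $b$, and applying Fubini (legitimate since $\Psi$ is bounded on the relevant compact, $\phi$ is compactly supported, and $|\nu|$ is finite) yields
\[
g(w) := \int_K \Psi_\phi(wy + b_0)\,d\nu(y) = 0 \qquad (w \in W).
\]

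The function $g$ is smooth (differentiation under the integral is legitimate since the integrand is smooth in $w$ with derivatives uniformly bounded for $y \in K$). Because $W$ has no isolated points, a standard induction shows that $g^{(k)}$ vanishes on $W$ for every $k \geq 0$: a smooth function vanishing on a set without isolated points has vanishing first derivative there, and the argument iterates. Since $0 \in W$, this gives
\[
\Psi_\phi^{(k)}(b_0) \int_K y^k \, d\nu(y) = 0 \qquad (k \geq 0).
\]
If for every $k$ one can choose admissible $\phi$ and $b_0$ (possibly depending on $k$) with $\Psi_\phi^{(k)}(b_0) \neq 0$, then all moments of $\nu$ vanish; by the Weierstrass theorem polynomials are dense in $C(K)$, so $\nu = 0$, a contradiction.

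The main obstacle is producing such $\phi$ and $b_0$ for each $k$. I would argue by contradiction: if no admissible pair worked for some fixed $k$, then every $\Psi_\phi$ with $\phi$ supported in $(-\epsilon,\epsilon)$ would be a polynomial of degree at most $k-1$ on the interior sub-interval $I^\epsilon := \{b : [b-\epsilon, b+\epsilon] \subset I\}$. Letting $\phi$ run through an approximate identity gives $\Psi_\phi \to \Psi$ in $L^1_{\mathrm{loc}}(I^\epsilon)$; since polynomials of degree $\leq k-1$ form a finite-dimensional, hence $L^1$-closed, subspace, $\Psi|_{I^\epsilon}$ coincides a.e.\ with a polynomial of degree $\leq k-1$. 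The polynomials obtained for different $\epsilon$ must agree on overlaps (they coincide a.e.\ on a nonempty open set, so everywhere), so $\Psi|_I$ equals a.e.\ a single polynomial, contradicting the hypothesis. Once this step is secured, density in $C(\R)$ follows and Proposition \ref{MultiVariateDensity} lifts the conclusion to $C(\R^r)$.
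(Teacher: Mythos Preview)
Your reduction to the one-variable case via Proposition~\ref{MultiVariateDensity} and the overall architecture (mollify $\Psi$, show that if density fails then every mollification is a polynomial of uniformly bounded degree on $I$, pass to the limit to contradict the hypothesis on $\Psi$) coincide with the paper's route to Theorem~\ref{GeneByPinkus}. Where you use duality and a moment argument at $w=0$, the paper argues constructively: Lemma~\ref{NeuralNetApproPoly} builds $x\mapsto x^k(\Psi*\rho)^{(k)}(b)$ as a uniform limit of finite differences in $w$, and Lemma~\ref{CovolutionUniformAppro} shows that $(\Psi*\rho)(w\,\cdot+b)$ is itself a uniform limit of elements of $\mathcal{N}_1(\Psi,\{1\},\mathrm{supp}\rho)$. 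Your derivative-on-$W$ trick is a clean substitute for the first of these two lemmas.

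There is, however, a real gap at the Hahn--Banach/Riesz--Markov step. Since $\Psi\in\mathcal{M}(\R)$ may be discontinuous, $\Psi(w\,\cdot+b)|_K$ need not lie in $C(K)$, so Riesz--Markov on $C(K)$ does not directly yield a signed measure annihilating these functions. If instead you apply Hahn--Banach in the sup-norm space $B(K)$ of bounded functions, you obtain $L\in B(K)^*$ with $L|_{\mathcal{N}_1|_K}=0$, and $L|_{C(K)}$ is a measure $\nu$; but then your Fubini step (commuting $L$ with the $b$-integral to obtain $\int_K\Psi_\phi(wy+b_0)\,d\nu(y)=0$) requires the Riemann sums $\sum_j\phi(s_j)\Psi(w\,\cdot+b_0+s_j)\Delta s_j$ to converge \emph{uniformly on $K$} to $\Psi_\phi(w\,\cdot+b_0)$. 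That uniform convergence is exactly Lemma~\ref{CovolutionUniformAppro}, whose proof makes essential use of the hypothesis that the closure of the discontinuity set of $\Psi$ is Lebesgue-null. Once you invoke that lemma (equivalently: show $\Psi_\phi(w\,\cdot+b_0)$ already lies in the uniform closure of $\mathcal{N}_1(\Psi,W,I)|_K$, so that $\nu$ annihilates it directly), the rest of your argument goes through.
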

さらにこの定理は命題\ref{MultiVariateDensity}と次の定理より従うので次の定理を証明すればよい. 
\begin{thm*}
  $W \subset \R$は孤立点を持たず$0 \in W$であるとする. 
  また, $\Psi:\R \rightarrow \R$は$\mathcal{M}(\R)$に属するとする. このとき, 開区間$I$に対して$\Psi$が$I$上a.e.に多項式と一致しないなら, $\mathcal{N}_1(\Psi,W,I)$は$C(\R)$において広義一様収束の意味で稠密である. 
\end{thm*}
まず$\Psi$が$C^{\infty}$級である場合に成立することを証明しよう. 
\begin{lem}\label{NeuralNetApproPoly}
  $W \subset \R$は孤立点を持たず$0 \in W$であるとする. 
  また, $\Psi:\R \rightarrow \R$はある開区間$I$上で$C^{\infty}$かつ多項式と一致しないとする. 
  このとき, 各整数$n \geq 0$について多項式$x \mapsto x^n$は$\mathcal{N}_1(\Psi,W,I)$で広義一様近似される. したがって, 特に,  $\mathcal{N}_1(\Psi,W,I)$は$C(\R)$において広義一様収束の意味で稠密である. 
\end{lem}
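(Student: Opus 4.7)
The plan is to extract each monomial $x^n$ from $\mathcal{N}_1(\Psi,W,I)$ by performing divided differences in the inner weight $w$. The key identity is
\[
\frac{\partial^n}{\partial w^n}\Psi(wx+b)\bigg|_{w=0} = x^n \Psi^{(n)}(b)
\]
for any fixed $b \in I$ and $x \in \R$; this is legitimate because $wx+b$ remains in $I$ in a neighborhood of $w=0$, where $\Psi$ is assumed $C^\infty$. So I need to realize the $n$-th derivative at $w=0$ as a limit of finite linear combinations of evaluations $\Psi(w_j x + b)$ with $w_j \in W$.

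To carry this out, I fix a nonempty compact set $K \subset \R$ and choose $\delta > 0$ small enough that $wx+b \in I$ whenever $x \in K$ and $|w| \leq \delta$. Since $0 \in W$ and $W$ has no isolated point, one can pick, for each $m$, $n+1$ distinct points $w_0^{(m)}, \ldots, w_n^{(m)} \in W \cap (-\delta, \delta)$ all tending to $0$. The $n$-th divided difference
\[
D_m(x) := n!\sum_{j=0}^{n} \frac{\Psi(w_j^{(m)} x + b)}{\prod_{i \neq j}(w_j^{(m)} - w_i^{(m)})}
\]
belongs to $\mathcal{N}_1(\Psi,W,I)$, and by the mean-value theorem for divided differences it converges to $x^n \Psi^{(n)}(b)$ as $m \to \infty$. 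Next I invoke the non-polynomiality hypothesis: for each $n \geq 0$ there exists $b_n \in I$ with $\Psi^{(n)}(b_n) \neq 0$, since otherwise $\Psi^{(n)} \equiv 0$ on $I$ would force $\Psi$ to agree with a polynomial of degree less than $n$ there. Taking $b = b_n$ and dividing $D_m$ by $\Psi^{(n)}(b_n)$ gives elements of $\mathcal{N}_1(\Psi,W,I)$ approximating $x^n$ on $K$. By linearity every polynomial is then broadly uniformly approximated, and the Weierstrass approximation theorem upgrades this to density in $C(\R)$ in the sense of uniform convergence on compact sets.

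The main obstacle is upgrading pointwise convergence of $D_m(x)$ to uniform convergence in $x \in K$. A Taylor expansion of $w \mapsto \Psi(wx+b)$ about $w=0$ produces a remainder controlled by $\sup |\Psi^{(n+1)}|$ on a compact subinterval of $I$ that contains $\{wx+b : x \in K,\; |w|\leq \delta\}$; since $K$ is bounded this supremum is finite and independent of $x$. The singular divided-difference weights of order $(\text{spacing})^{-n}$ are compensated by the $n$-th order vanishing of the Taylor remainder, so one obtains a uniform bound and hence uniform convergence of $D_m$ on $K$. Everything else reduces to standard applications of divided-difference calculus and the approximation theorems already catalogued in the excerpt.
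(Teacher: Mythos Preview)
Your argument is correct and follows the same core idea as the paper: realize $\partial_w^n\Psi(wx+b)\big|_{w=0}=x^n\Psi^{(n)}(b)$ by finite differences in $w\in W$ near $0$, then choose $b=b_n\in I$ with $\Psi^{(n)}(b_n)\neq 0$; the only technical difference is that you take an $n$-th divided difference in one shot whereas the paper iterates first differences via induction on the order. Note that the mean-value theorem for divided differences you invoke already yields $D_m(x)=x^n\Psi^{(n)}(\xi_m(x)x+b)$ with $|\xi_m(x)|\le\max_j|w_j^{(m)}|$, so uniform convergence on $K$ follows directly from uniform continuity of $\Psi^{(n)}$ near $b$ --- the separate Taylor discussion is unnecessary and, without control on the node spacing, is in fact harder to make precise.
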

\begin{proof}
  後半については$C(\R)$の元が多項式関数で広義一様近似できること, および前半より$\mathcal{N}_1(\Psi,W,I)$が全ての$1$変数の$\R$係数多項式を広義一様近似できることからわかる. 
  $K \subset \R$を空でないコンパクト集合とする. また, $b \in I$に対して, $f_b:\R \times \R \ni (x,w) \mapsto \Psi(wx + b) \in \R$と定義する. このとき, 任意の$k \in \N, w_0 \in W, b \in I$に対して, 関数$x \mapsto ((\partial w)^k f_b) (x,w_0)$が$\mathcal{N}_1(\Psi,W,I)$の元で$K$上一様近似できることを示せば十分である. なぜなら, もしそれが示されれば, $\Psi$が$I$上で多項式関数でないことから,  任意の$k$について$\Psi^{(k)}(b_k) \neq 0$なる$b_k \in I$が取れるので, $w_0=0, b = b_k$とすることで, 関数$x \mapsto ((\partial w)^k f_{b_k}) (x,w_0) = x^k \Psi^{(k)}(w_0 x + b_k) = x^k \Psi^{(k)}(b_k)$が$\mathcal{N}_1(\Psi,W,I)$の元で$K$上一様近似できることがわかる. さて, まず$k=1$の場合を示そう.  任意に$\delta_0,\varepsilon > 0$を取る. いま, 
  \[
  K \times [w_0-\delta_0,w_0+\delta_0] \ni (x,w) \mapsto ((\partial w) f_b)(x,w) \in \R
  \]
  は連続である. $K \times [w_0-\delta_0,w_0+\delta_0]$はコンパクトだから, $0 < \delta < \delta_0$が存在して, 任意の$(x_1,w_1),(x_2,w_2) \in K \times [w_0-\delta_0,w_0+\delta_0]$について, $|x_1 - x_2|,|w_1-w_2| < \delta$ならば, $|((\partial w) f_b)(x_1,w_1) - ((\partial w) f_b)(x_2,w_2)| < \varepsilon$となる. そこで, $-\delta < h < \delta$かつ$w_0+h \in W$となるように$h$をとると($W$は孤立点を持たないのでとれる),  任意の$x \in K$に対して, 平均値の定理より, 
  \[
  \begin{aligned}
  &\left|\frac{\Psi((w_0+h)x+b) - \Psi(w_0 x + b)}{h} -  ((\partial w) f_b)(x,w_0) \right| \\
  &\left|\frac{f_b(x,w_0+h) - f_b(x,w_0)}{h} -  ((\partial w) f_b)(x,w_0) \right| \\
  &= |((\partial w) f_b)(x,w_0+\theta h) - ((\partial w) f_b)(x,w_0)| ~~~(\exists \theta \in [0,1]) \\
  &\leq \varepsilon
  \end{aligned}
  \]
  となる. これで$k=1$の場合が証明された.
  次に$k$で成立するとするとして$k+1$での成立を示そう. 
  $k=1$の場合の証明と同様にして, 任意の$\varepsilon > 0$に対してある$h \in \R$が存在して, $w_0+h \in W$かつ任意の$x \in K$に対して, 
  \[
  \left| \frac{((\partial w)^k f_b)(x,w_0+h) - ((\partial w)^k f_b)(x,w_0)}{h} - ((\partial w)^{k+1} f_b)(x,w_0)\right| < \varepsilon
  \]
  となることがわかる. 帰納法の仮定から, 
  \[
  \begin{aligned}
  &x \mapsto ((\partial w)^k f_b)(x,w_0+h)\\
  &x \mapsto ((\partial w)^k f_b)(\cdot,w_0)
  \end{aligned}
  \]
  は$\mathcal{N}_1(\Psi,W,I)$で$K$上一様近似できる. ゆえに, 上式と合わせると\\
  $((\partial w)^{k+1} f_b)(\cdot,w_0)$は$\mathcal{N}_1(\Psi,W,I)$で$K$上一様近似される. 
  これで示せた. 
\end{proof}

次に一般の$\Psi$の場合を証明するためにいくつか補題を示す. 
\begin{lem}
  $\Psi \in \mathcal{M}(\R)$とし, $\mu$を$\R$上のLebesgue測度とする. このとき, $a \leq b$と$\delta > 0$に対して有限個の開区間$I_1, \ldots, I_n $が存在して, $U = \bigcup_{k=1}^n I_k$について, $\mu(U) < \delta$かつ$\Psi$は$[a,b] \setminus U$上で一様連続である. 
\end{lem}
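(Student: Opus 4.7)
方針としては, 不連続点の閉包が測度 $0$ であるという仮定に頼り, この閉集合を総測度の小さい有限個の開区間で覆って $U$ とし, その外側の閉集合上で $\Psi$ が連続となることを示す. その後, 連続性とコンパクト性から一様連続性が自動的に従う, というのが全体像である.

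まず $E := \mathrm{cl}\{\Psi\mbox{ の不連続点}\} \cap [a,b]$ とおくことから始める. 仮定より $\mu(E) = 0$ であり, かつ $E$ は $[a,b]$ の閉部分集合なのでコンパクトである. 次に Lebesgue 測度の外側正則性を使い, $E$ を高々可算個の開区間 $(J_k)_{k \in \N}$ で覆って $\sum_{k} \mu(J_k) < \delta$ となるようにできる. そして $E$ のコンパクト性からこの被覆のうち有限個 $I_1,\ldots,I_n$ を選べば, $E \subset U := \bigcup_{k=1}^n I_k$ かつ $\mu(U) < \delta$ が得られる.

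最後の段階では $K := [a,b] \setminus U$ を考察する. これは $[a,b]$ の閉集合なのでコンパクトである. ここで重要な観察として, $K$ 内の各点は $\Psi$ の連続点である. 実際, $\Psi$ の不連続点はすべて $\mathrm{cl}\{\Psi\mbox{ の不連続点}\}$ に属し, それと $[a,b]$ の共通部分は $E \subset U$ であるから $K$ とは交わらない. したがって $\Psi|_K$ は $K$ 上で連続となり, $K$ のコンパクト性により一様連続となる.

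以上が計画の全容であり, 本質的な障害はないと予想する. ただし強調すべき点として, $\mathcal{M}(\R)$ の定義において「不連続点全体の集合」ではなく「その閉包」の測度が $0$ であるという強めの仮定が決定的に効いている. もし閉包を取らない弱い仮定(すなわち不連続点全体の測度が $0$)だけであれば, 不連続点が $K$ の中に稠密に散らばり得るため $\Psi|_K$ の連続性は得られず, したがって一様連続性の主張も成立しない. この観察は, 後続の Hornik による仮定緩和(本稿冒頭の二つの定理の比較)がなぜ非自明なのかを理解する上でも重要である.
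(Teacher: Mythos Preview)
Your proof is correct and follows essentially the same route as the paper: intersect the closure of the discontinuity set with $[a,b]$ to get a compact null set, use outer regularity to cover it by open intervals of total measure below $\delta$, extract a finite subcover by compactness, and conclude that the complement in $[a,b]$ is a compact set on which $\Psi$ is continuous, hence uniformly continuous. The only cosmetic difference is that the paper first passes through a single open set $U$ before decomposing it into intervals, whereas you go straight to a countable interval cover; both are equivalent. Your closing remark on why the \emph{closure} hypothesis is essential (and why Hornik's relaxation is nontrivial) is a welcome addition that the paper does not make explicit at this point.
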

\begin{proof}
  $A := [a,b] \cap \mathrm{cl}\{\Psi \mbox{の不連続点}\}$とおく. $\Psi \in \mathcal{M}(\R)$なので$\mu(A) = 0$である. すると, Lebesgue測度の外正則性より, 開集合$U \supset A$で$\mu(U) < \delta$なるものが取れる. $\R$の開集合は開区間の和集合で表せるので, $U = \bigcup_{j \in J}I_j$と表すことにすると, $A$のコンパクト性より$j_1,\ldots,j_m \in J$が存在して$A \subset \bigcup_{k=1}^{m} I_{j_k}$となる. $I = \bigcup_{k=1}^{m} I_{j_k}$とおくと$[a,b] \setminus I$は$\Psi$の不連続点を含まないコンパクト集合である.  コンパクト集合上の連続関数は一様連続であるので $I_{j_1},\ldots,I_{j_m}$が求めるものである. 
\end{proof}
\begin{lem}\label{CovolutionUniformAppro}
  $\Psi \in \mathcal{M}(\R)$とする. このとき, 任意の$\rho \in C_0^{\infty}(\R)$に対して, $\Psi$と$\rho$の畳み込み積
  \[
  (\Psi * \rho)(x) = \int_{\R} \Psi(x-y) \rho(y) dy
  \]
  は$\mathcal{N}_1(\Psi,\{1\},\R)$により広義一様近似される. 
\end{lem}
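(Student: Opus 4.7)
方針は, 畳み込み $(\Psi * \rho)(x) = \int_{\R} \Psi(x-y)\rho(y)\,dy$ をRiemann和で近似することである. $\rho \in C_0^{\infty}(\R)$ より $M > 0$ を $\mathrm{supp}(\rho) \subset [-M, M]$ ととる. 区間 $[-M, M]$ の分割 $-M = y_0 < y_1 < \cdots < y_n = M$ に対して, Riemann和
\[
S(x) := \sum_{j=1}^{n} \rho(y_j)(y_j - y_{j-1}) \Psi(x - y_j)
\]
は, $c_j := \rho(y_j)(y_j - y_{j-1})$, $b_j := -y_j$ とおけば $S(x) = \sum_{j} c_j \Psi(x + b_j)$ と書けるので $\mathcal{N}_1(\Psi, \{1\}, \R)$ に属する. よって, 空でない任意のコンパクト集合 $K \subset \R$ と $\varepsilon > 0$ に対して分割を適切に選び $\sup_{x \in K} |(\Psi * \rho)(x) - S(x)| < \varepsilon$ となることを示せばよい.

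道具立てとして, $L := \{x - y \mid x \in K,\ y \in [-M, M]\}$ はコンパクトで $\Psi \in L_{\mathrm{loc}}^{\infty}(\R)$ より $M_{\Psi} := \lVert \Psi \rVert_{L^{\infty}(L)}$ は有限であり, $\rho$ は $\R$ 上で一様連続である. さらに任意の $\delta > 0$ に対して直前の補題を $L$ を含む有界閉区間に適用することで, 有限個の開区間の和 $U = \bigcup_{k=1}^{N} I_k$ で $\mu(U) < \delta$ かつ $\Psi$ が $L \setminus U$ 上で一様連続となるものを取ることができる.

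誤差評価については, 被積分関数の差を
\[
\Psi(x-y)\rho(y) - \Psi(x-y_j)\rho(y_j) = \Psi(x-y)(\rho(y) - \rho(y_j)) + (\Psi(x-y) - \Psi(x-y_j))\rho(y_j)
\]
と分解し, 第1項は $\rho$ の一様連続性と $L$ 上の $|\Psi| \leq M_{\Psi}$ から分割を細かくすれば小さくなる一方, 第2項については $x - y$ と $x - y_j$ がともに $L \setminus U$ に属するとき $\Psi$ の一様連続性で小さく抑え, そうでない「悪い」 $y$ については粗く $2 M_{\Psi}$ で評価する. このとき「悪い」$y \in [-M, M]$ 全体の測度は, $\{y : x - y \in U\}$ の測度が $\mu(U) < \delta$ であることと, $x - y_j \in U$ となる $j$ に対応する小区間の和集合の測度が $U$ の区間数 $N$ と分割幅 $h$ により $\delta + Nh$ で抑えられることから, 合計 $2\delta + Nh$ 以下である. したがって全体の誤差は $K$ 上一様に $(\text{定数}) \cdot (\varepsilon_{\rho} + \varepsilon_{\Psi} + \delta + Nh)$ の形で評価される.

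最大の困難は, $\Psi$ が連続でないために「悪い」 $y$ の集合が $x$ ごとに異なる点を $K$ 上一様に制御することであり, 直前の補題の $U$ が有限個の区間の和として取れるという構造がこの一様性確保の鍵となる. 具体的には, まず $\delta$ を十分小さく取って $N$ を固定し, 次いで $\rho, \Psi$ の一様連続性に合わせて分割幅 $h$ を $N$ に応じて十分小さく選ぶことで, $Nh$ の項も含めて全体を $\varepsilon$ 未満にできる.
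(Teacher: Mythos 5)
あなたの証明は本質的に論文の証明と同じ方針である: Riemann和が$\mathcal{N}_1(\Psi,\{1\},\R)$に属することを確認した上で, 直前の補題(有限個の開区間の和$U$の外で$\Psi$が一様連続)を用い, 「悪い」$y$の集合の測度を$\delta$と区間数$N$および分割幅$h$で制御し, $\delta$を先に固定してから$h$を選ぶという量化子の順序も正しく処理されている. 論文は誤差を二段階(積分と$\sum_j\int_{\Delta_j}\Psi(x-y_j)\rho(y)\,dy$の差, およびそれとRiemann和の差)に分けるのに対し, あなたは被積分関数の差を一度に代数的に分解しているが, これは表面的な違いに過ぎない.
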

\begin{proof}
  コンパクト集合$K \subset \R$をとる. $\alpha> 0$を適当に取り$K$と$\mathrm{supp}(\rho)$が$ [-\alpha,\alpha]$に含まれるようにする.  $[-\alpha,\alpha]$上で一様近似できることを示せば十分である. そこで, $m \in \N$に対して$y_j = - \alpha + 2j\alpha/m, \Delta y_i = 2 \alpha/m ~(i=1,\ldots,m)$とおき, 
  \[
  x \mapsto \sum_{j=1}^{m} \Psi(x - y_j) \rho(y_j) \Delta y_j
  \]
  が$\Psi * \rho$に$[-\alpha,\alpha]$上一様収束することを示そう. $\rho \neq 0$としてよい. $\varepsilon > 0$を任意に取る. $0 < \delta < \alpha$を次をみたすように取る. 
  \[
  10 \delta \lVert \Psi \rVert_{L^{\infty}([-\alpha,\alpha])} \lVert \rho \rVert_{L^{\infty}} \leq \varepsilon.
  \]
  いま, $\Psi \in \mathcal{M}(\R)$なので, $r(\delta)$個の開区間が存在して, それらの和集合を$U$とすると, そのLebesgue測度$\mu(U)$は$\mu(U) < \delta$をみたし, かつ$\Psi$は$[-2\alpha,2\alpha] \setminus U$上で一様連続となる. そして, $m \in \N$を$m \delta > \alpha r(\delta)$かつ, 
  \[
  \begin{aligned}
    &|s-t| < 2\alpha/m \Rightarrow |\rho(s) - \rho(t)| \leq \frac{\varepsilon}{2\alpha \lVert \Psi \rVert_{L^{\infty}([-2\alpha,2\alpha])}} \\
    &s,t \in [-2\alpha,2\alpha] \setminus U, |s-t| < 2\alpha/m \Rightarrow |\Psi(s) - \Psi(t)| \leq \frac{\varepsilon}{\lVert \rho \rVert_{L^1}}
  \end{aligned}
  \]
  となるように取る($\rho$は一様連続であることに注意). ここで$x \in [-\alpha,\alpha]$を固定する. $\Delta_j = [y_{j-1},y_j]$とおく(ただし, $y_0 = \alpha$とする)と, $\mathrm{supp}(\rho) \subset [-\alpha,\alpha]$であるから, 
  \[
  \begin{aligned}
  &\left| \int_{\R} \Psi(x-y) \rho(y) dy - \sum_{j=1}^m \int_{\Delta_j} \Psi(x - y_j) \rho(y) dy \right| \\
  &\leq \sum_{j=1}^m \int_{\Delta_j} |\Psi(x-y) - \Psi(x-y_j)||\rho(y)| dy
  \end{aligned}
  \]
  となる. $y \in \Delta_j$について$|y - y_j| < 2\alpha/m$であり, $x - y \in [-2\alpha,2\alpha]$であるので, もし$(x - \Delta_j) \cap U = \emptyset$であるならば, 
  \[
  \int_{\Delta_j} |\Psi(x-y) - \Psi(x-y_j)||\rho(y)| dy \leq \frac{\varepsilon}{\lVert \rho \rVert_{L^1}} \int_{\Delta_j} |\rho(y)| dy
  \]
  となる. ゆえに, そのような$j$すべてについて和を取ると, その和は$\varepsilon$で抑えられる. また, $(x - \Delta_j) \cap U \neq \emptyset$となる$\Delta_j$を$\tilde{\Delta}_j$でかくことにすると, $U$のLebesgue測度が$\delta$未満であること及び $U$が$r(\delta)$個の開区間の和集合であることから,  $\tilde{\Delta}_j$のLebesgue測度の総和は$\delta + (4\alpha/m)r(\delta)$で抑えられる. したがって, $m$の選び方から$\tilde{\Delta}_j$のLebesgue測度の総和は$5 \delta$で抑えられる. よって, $\delta$の取り方から, 
  \[
  \sum \int_{\tilde{\Delta}_i} |\Psi(x-y) - \Psi(x-y_j)||\rho(y)| dy \leq 2\lVert \Psi \rVert_{L^{\infty}([-2\alpha,2\alpha])} \lVert \rho \rVert_{L^{\infty}} 5 \delta \leq \varepsilon
  \]
  となる. 最後に, $\rho$に対する$m$の取り方から, 
  \[
  \begin{aligned}
  &\left| \sum_{j=1}^m \int_{\Delta_j} \Psi(x - y_j) \rho(y) dy - \sum_{j=1}^m \Psi(x-y_j)\rho(y_j)\Delta y_j \right| \\
  &= \left| \sum_{j=1}^m \int_{\Delta_j} \Psi(x - y_j) ( \rho(y)  - \rho(y_j))dy \right| \\
  &\leq \sum_{j=1}^m  \int_{\Delta_j} |\Psi(x - y_j)|~| \rho(y)  - \rho(y_j)| dy \\
  &\leq \sum_{j=1}^m  \int_{\Delta_j} \lVert \Psi \rVert_{L^{\infty}([-2\alpha,2\alpha])} \frac{\varepsilon}{2\alpha \lVert \Psi \rVert_{L^{\infty}([-2\alpha,2\alpha])}} dy \\
  &= \varepsilon
  \end{aligned}
  \]
  となる. 以上から, 任意の$x \in [-\alpha,\alpha]$に対して, 三角不等式により, 
  \[
  \left| \int_{\R} \Psi(x-y) \rho(y) dy - \sum_{j=1}^m \Psi(x-y_j) \rho(y_j) \Delta y_j \right| \leq 3 \varepsilon.
  \]
\end{proof}
\begin{rem}
  上の定理で「$\mathcal{N}_1(\Psi,\{1\},\R)$により広義一様近似される」と述べたが, $y_j \notin \mathrm{supp}(\rho)$のとき$\Psi(x - y_j) \rho(y_j) = 0$であるので,  $\mathcal{N}_1(\Psi,\{1\},\mathrm{supp}(\rho))$により広義一様近似されることがわかる. 
\end{rem}
\begin{lem}\label{PointwiseConvergencePolyPreserve}
  $N \in \N$とする. $p_j:\R \rightarrow \R$を$\mathrm{deg}(p_j) \leq N$なる多項式関数の列とする. このとき, $f:\R \rightarrow \R$に対して, $p_j$が$f$に各点収束するならば, $f$はある$N$次以下の多項式と一致する. また, $p_j$が$f$に概収束する(つまり, a.e.$x \in \R$に対して$p_j(x) \rightarrow f(x)$となる)ならば, $f$はある$N$次以下の多項式とa.e.で一致する.
\end{lem}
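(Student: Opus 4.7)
方針は次の通りである. $N$次以下の多項式全体は$(N+1)$次元の$\R$上線形空間をなし, 相異なる$(N+1)$点における値が与えられればLagrange補間により一意に定まる. この事実を用いて, 点列$(p_j)$を有限個の値$p_j(x_0), \ldots, p_j(x_N)$のみで一様に表現することによって結論を導く.

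まず各点収束の場合を扱う. 任意に相異なる$(N+1)$点$x_0, \ldots, x_N \in \R$を固定し, 対応するLagrange基底多項式$L_0, \ldots, L_N$(すなわち$\mathrm{deg}(L_k) \leq N$かつ$L_k(x_l) = \delta_{kl}$をみたすもの)を考える. 任意の$N$次以下の多項式$q$は恒等式$q(x) = \sum_{k=0}^N q(x_k) L_k(x)$をみたすので, 特に
\[
p_j(x) = \sum_{k=0}^N p_j(x_k) L_k(x) \quad (\forall x \in \R,~ \forall j \in \N)
\]
が成り立つ. 仮定から各$k$について$p_j(x_k) \to f(x_k) ~(j \to \infty)$なので, $P(x) := \sum_{k=0}^N f(x_k) L_k(x)$とおけば$P$は$N$次以下の多項式であり, 任意の$x \in \R$に対して$p_j(x) \to P(x) ~(j \to \infty)$となる. 極限の一意性から全ての$x \in \R$で$f(x) = P(x)$が成り立つ.

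概収束の場合も同様に示せる. ただし, この場合は$(N+1)$点を$E := \{x \in \R \mid p_j(x) \to f(x) ~(j \to \infty)\}$から取る必要がある. $\R \setminus E$はLebesgue測度$0$だから$E$は非可算無限集合で, $E$の中から相異なる$(N+1)$点$x_0, \ldots, x_N$を選ぶことができる. これらに対し$P(x) := \sum_{k=0}^N f(x_k) L_k(x)$と定めれば, 各$x \in \R$で$p_j(x) \to P(x)$が成立し, $E$上ではさらに$p_j(x) \to f(x)$なので$f = P$が$E$上で, したがってa.e.で成立する. 本質的な困難というほどの箇所はなく, 鍵となるのはLagrange補間を用いた$p_j$の一様な有限次元表示と, 概収束の場合に収束点の集合から$(N+1)$点を確保できることの指摘である.
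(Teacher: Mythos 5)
Your proof is correct and follows essentially the same route as the paper: both arguments rest on the fact that a polynomial of degree at most $N$ depends linearly and continuously on its values at $N+1$ fixed distinct points, the paper phrasing this via the inverse of the Vandermonde matrix acting on the value vector while you phrase it via the Lagrange basis $L_0,\ldots,L_N$. Your handling of the almost-everywhere case (choosing the $N+1$ nodes inside the full-measure convergence set $E$) matches the paper's remark as well, so nothing further is needed.
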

\begin{proof}
  まず, 各点収束の場合を考える. $p_j(x) = \sum_{k=0}^N a_{j,k} x^k$と表すことにする. 相異なる$x_0, x_1,\ldots, x_N \in \R $を任意に取り, 
  \[
  \begin{aligned}
  &\mathbf{p}^{(j)}
  =
  \left(
  \begin{array}{c}
       p_j(x_0)  \\
       p_j(x_1) \\
       \vdots \\
       p_j(x_N)
  \end{array}
  \right)
  , ~~~
  \mathbf{X}
  =
  \left(
  \begin{array}{cccc}
    1 & x_0^1 & \cdots & x_0^N  \\
    1 & x_1^1 & \cdots & x_1^N \\
    \vdots & \vdots & \ddots & \vdots \\
    1 & x_N^1 & \cdots & x_N^N
  \end{array}
  \right)
  , \\
  &\mathbf{a}^{(j)}
  =
  \left(
  \begin{array}{c}
    a_{j,0}  \\
    a_{j,1} \\
    \vdots \\
    a_{j,N}
  \end{array}
  \right)
  , ~~~
  \mathbf{f}
  =
  \left(
  \begin{array}{c}
    f(x_0) \\
    f(x_1) \\
    \vdots \\
    f(x_N)
  \end{array}
  \right)
  \end{aligned}
  \]
  とおく. すると, Vandermonde行列$\mathbf{X}$は正則であり, 
  \[
  \mathbf{p}^{(j)}-\mathbf{f} = \mathbf{X} \mathbf{a}^{(j)} - \mathbf{f}
  \]
  である. したがって, $p_j(x_i) \rightarrow f(x_i)$であることと行列写像が連続であることから, 
  \[
  \mathbf{a}^{(j)} = \mathbf{X}^{-1} \mathbf{f} + \mathbf{X}^{-1} (\mathbf{p}^{(j)}-\mathbf{f}) \rightarrow \mathbf{X}^{-1} \mathbf{f} ~~(j \rightarrow \infty)
  \]
  となる. よって, $(a_1,\ldots,a_N)^{\T} = \mathbf{X}^{-1} \mathbf{f}$とおき, $p(x) = \sum_{k=0}^N a_k x^k$とおくと, 各点で$p_j \rightarrow p$となる. よって, $f = p$となる.  概収束の場合も同様に示せる(Lebesgue測度で考えているので収束点を$N+1$個取れることに注意). 
\end{proof}
\begin{lem}\label{ConvolutionAlmostEverywhereConvergence}
  $\Psi \in \mathcal{M}(\R)$とする. このとき, $\rho_j \in C_0^{\infty}(\R) ~(j=1,2,\ldots)$が存在して, $\mathrm{supp}(\rho_j) \subset [-1/j,1/j]$かつ$\Psi * \rho_j$は$\Psi$に$\R$上概収束する. 
\end{lem}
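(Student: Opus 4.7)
標準的な軟化子(mollifier)を用いる方針である. まず $\rho \in C_0^{\infty}(\R)$ を $\rho \geq 0$, $\mathrm{supp}(\rho) \subset [-1,1]$, $\int_{\R}\rho(y)dy = 1$ をみたすように一つ固定し, $\rho_j(x) := j\rho(jx)$ と定める. すると $\rho_j \in C_0^{\infty}(\R)$, $\mathrm{supp}(\rho_j) \subset [-1/j,1/j]$, $\int_{\R}\rho_j(y)dy = 1$ となり, 所望の関数列の候補が得られる. $\Psi \in L_{\mathrm{loc}}^{\infty}(\R)$ かつ $\rho_j$ がコンパクトサポートを持つことから, 畳み込み $(\Psi * \rho_j)(x)$ は各 $x \in \R$ でwell-definedである点にも注意する.

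次の主要な手順として, $\Psi$ の任意の連続点 $x \in \R$ において $(\Psi * \rho_j)(x) \to \Psi(x)$ となることを示す. $\int \rho_j = 1$ を用いて
\[
(\Psi * \rho_j)(x) - \Psi(x) = \int_{-1/j}^{1/j} (\Psi(x-y) - \Psi(x))\rho_j(y) dy
\]
と書き直せるので, $\Psi$ の $x$ における連続性より任意の $\varepsilon > 0$ に対してある $\delta > 0$ が存在し $|y| < \delta$ のとき $|\Psi(x-y) - \Psi(x)| < \varepsilon$ となる. したがって $j > 1/\delta$ なる $j$ については右辺の絶対値は $\varepsilon \int_{\R} \rho_j = \varepsilon$ 以下となり, 連続点での収束が従う.

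最後に, $\Psi \in \mathcal{M}(\R)$ の定義より $\mu(\mathrm{cl}\{\Psi \mbox{の不連続点}\}) = 0$ であり, 特に $\Psi$ の不連続点全体の集合のLebesgue測度は $0$ である. したがって $\Psi * \rho_j$ は $\R$ 上a.e.で $\Psi$ に収束する. 本命題における技術的に気を配るべき点は $\Psi$ が大域的な $L^{\infty}$ の元ではなく $L_{\mathrm{loc}}^{\infty}$ の元でしかないところであるが, 各 $x$ に対して $x$ の近傍での有界性のみが必要であり $\rho_j$ のコンパクトサポート性からその有界性で十分であるから, 本質的な障害とはならない.
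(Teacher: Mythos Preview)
Your proof is correct and follows essentially the same approach as the paper: construct the standard mollifier sequence $\rho_j(x)=j\rho(jx)$ and verify pointwise convergence at every continuity point of $\Psi$, then invoke $\mu(\mathrm{cl}\{\Psi\text{の不連続点}\})=0$ to conclude a.e.\ convergence. The only cosmetic difference is that the paper obtains the pointwise limit at a continuity point via the change of variables $z=jy$ and the dominated convergence theorem (with dominating function $\lVert \Psi \rVert_{L^\infty([x-1,x+1])}|\rho|$), whereas you give a direct $\varepsilon$--$\delta$ estimate using $\rho\geq 0$; both are standard and equally valid.
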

\begin{proof}
  $\rho \in C_0^{\infty}(\R)$で$\int_{\R} \rho(x)dx = 1$かつ$\mathrm{supp}(\rho) \subset [-1,1]$をみたすものをとる. 例えば, 
  \[
  k(t) := \left\{\begin{array}{cl}e^{-\frac{1}{t}}&(t>0)\\0&(t\leq0)\end{array}\right.
  \]
  とおき, $H(x) := k(1-|x|), ~\rho(x) := H(x)/\int_{\R} H(y) dy$とおけばこの条件を満たす($k \in C^\infty(\R)$の丁寧な証明が松本\cite{MatsumotoManifold}のp.176〜178にあるので参照されたい). そして, 
  \[
  \rho_j(x) := j \rho(jx) ~~(x \in \R, j \in \N)
  \]
  とおく. すると, $\mathrm{supp}(\rho_j) \subset [-1/j,1/j]$ゆえ$\rho_j \in C_0^{\infty}(\R)$であり, $\Psi$の連続点$x \in \R$に対して, 
  \[
  \begin{aligned}
  (\Psi*\rho_j)(x) 
  &= \int_{\R} \Psi(x-y)\rho_j(y) dy \\
  &= \int_{\R} \Psi(x-z/j) \rho(z)dz ~~~~(z = jy) \\
  &\rightarrow \int_{\R} \Psi(x) \rho(z)dz = \Psi(x) ~~~~(\mbox{優収束定理})
  \end{aligned}
  \]
  である. $\Psi$に関する仮定より$\Psi$の不連続点全体の集合の測度は$0$なのでこれで証明できた. 
\end{proof}
以上で準備が整ったので定理を証明していこう. 
\begin{thm}\label{GeneByPinkus}
  $W \subset \R$は孤立点を持たず$0 \in W$であるとする. 
  また, $\Psi:\R \rightarrow \R$は$\mathcal{M}(\R)$に属するとする. このとき, 開区間$I$に対して$\Psi$が$I$上a.e.に多項式と一致しないなら, $\mathcal{N}_1(\Psi,W,I)$は$C(\R)$において広義一様収束の意味で稠密である. 
\end{thm}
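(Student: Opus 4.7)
証明の方針を述べる. 補題\ref{NeuralNetApproPoly}は$\Psi$が$C^\infty$級の場合の主張であることに注意し, 一般の$\Psi \in \mathcal{M}(\R)$については適切な$\rho \in C_0^\infty(\R)$との畳み込みによる平滑化$\Psi * \rho \in C^\infty(\R)$を経由して補題\ref{NeuralNetApproPoly}に帰着させる. 具体的には次の3段階をとる: (i) 支持集合が十分小さい$\rho$で, $\Psi * \rho$が$I$の内部のある開区間$J$上で多項式と一致しないようなものの存在を示す; (ii) この$\rho$を固定し補題\ref{NeuralNetApproPoly}を$\Psi * \rho$に適用して$\mathcal{N}_1(\Psi * \rho, W, J)$の$C(\R)$における稠密性を得る; (iii) 補題\ref{CovolutionUniformAppro}により$\mathcal{N}_1(\Psi * \rho, W, J)$の元を$\mathcal{N}_1(\Psi, W, I)$の元で一様近似し, 所望の稠密性を結論する.

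本証明の核心は段階(i)であり, 背理法とBaireのカテゴリー定理によって示す方針である. $\eta > 0$を$2\eta < |I|$となるよう取り, $I_{-\eta}$を$I$を両端から$\eta$縮めた開区間とする. 所望の$\rho$が存在しない, すなわち$\mathrm{supp}\rho \subset [-\eta, \eta]$をみたす任意の$\rho \in C_0^\infty(\R)$について$\Psi * \rho$が$I_{-\eta}$上で多項式と一致すると仮定する(各開部分区間上で多項式であることと$I_{-\eta}$全体で多項式であることは多項式の一意性から同値である). $\mathcal{D}_\eta := \{\rho \in C_0^\infty(\R) \mid \mathrm{supp}\rho \subset [-\eta, \eta]\}$はFrechet空間をなし, 各$n \geq 0$について$A_n := \{\rho \in \mathcal{D}_\eta \mid (\Psi * \rho)|_{I_{-\eta}} \mbox{は} n \mbox{次以下の多項式}\}$は閉部分線形空間である(畳み込みの連続性と補題\ref{PointwiseConvergencePolyPreserve}に基づくVandermonde型の議論による). 仮定から$\mathcal{D}_\eta = \bigcup_n A_n$なので, Baireのカテゴリー定理よりある$N$について$A_N$が内点を持ち, 線形性から$A_N = \mathcal{D}_\eta$となる. そこで補題\ref{ConvolutionAlmostEverywhereConvergence}の$\rho_j$を$j \geq 1/\eta$について取れば, $\Psi * \rho_j$は$I_{-\eta}$上で$N$次以下の多項式$P_j$と一致し, かつ$P_j \to \Psi$がa.e.で成り立つので, 補題\ref{PointwiseConvergencePolyPreserve}より$\Psi$は$I_{-\eta}$上でa.e.に$N$次以下の多項式と一致する. $\eta > 0$の任意性から$\Psi$は$I$上a.e.に多項式と一致することになり, 仮定に矛盾する.

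段階(ii)(iii)は比較的定型的である. 段階(i)で得た$\rho$を固定し$J := I_{-\eta}$とおく. $\Psi * \rho$は$C^\infty(\R)$に属し$J$上で多項式と一致しないから, 補題\ref{NeuralNetApproPoly}より$\mathcal{N}_1(\Psi * \rho, W, J)$は$C(\R)$において広義一様収束の意味で稠密である. 任意に$f \in C(\R)$, 空でないコンパクト集合$K \subset \R$, $\varepsilon > 0$を取ると, この稠密性から$g = \sum_i c_i (\Psi * \rho)(w_i x + b_i) \in \mathcal{N}_1(\Psi * \rho, W, J)$で$\sup_K |f - g| < \varepsilon/2$なるものが取れる. 補題\ref{CovolutionUniformAppro}とその直後の注意により, 各$(\Psi * \rho)(w_i x + b_i)$は有限和$\sum_j \Psi(w_i x + b_i - y_j) \rho(y_j) \Delta y_j$で$K$上一様近似される. ここで$b_i \in J$かつ$y_j \in \mathrm{supp}\rho \subset [-\eta, \eta]$であるから$b_i - y_j \in J + [-\eta, \eta] \subset I$となり, この和は$\mathcal{N}_1(\Psi, W, I)$に属する. 従って$f$は$\mathcal{N}_1(\Psi, W, I)$の元で$K$上$\varepsilon$-近似され, 所望の稠密性が結論される. 本証明における技術的に最も非自明な点は段階(i)のBaireカテゴリー定理による多項式次数の一様有界性の議論であり, これが「任意の畳み込み$\Psi * \rho$が局所的に多項式」という仮定を「$\Psi$自身が$I$上a.e.に多項式」という強い結論へと転化する鍵となる.
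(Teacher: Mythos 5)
Your proof is correct in substance and follows the same overall mollification skeleton as the paper (Lemmas \ref{CovolutionUniformAppro}, \ref{NeuralNetApproPoly}, \ref{ConvolutionAlmostEverywhereConvergence} and \ref{PointwiseConvergencePolyPreserve} all reappear in the same roles), but the way you obtain the crucial uniform degree bound is genuinely different. The paper argues by contraposition on the density statement itself: if $\mathcal{N}_1(\Psi,W,I)$ is not dense, then by Stone--Weierstrass some monomial $x^m$ lies outside the closure; since $x\mapsto x^k(\Psi*\rho)^{(k)}(b)$ lies in $\overline{\mathcal{N}_1(\Psi,W,I)}$ for \emph{every} admissible $\rho$ and every $b\in I$, that single $m$ forces $(\Psi*\rho)^{(m)}\equiv 0$ on $I$ for all such $\rho$ simultaneously, i.e.\ a degree bound uniform in $\rho$ falls out of the negated conclusion for free. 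You instead prove directly, before touching the density question, that some mollification $\Psi*\rho$ fails to be a polynomial on a subinterval, and to rule out the alternative you must manufacture the uniform degree bound yourself---hence the Baire category argument on the Fr\'echet space $\mathcal{D}_\eta$ with the closed linear subspaces $A_n$. That argument is sound ($A_n$ is closed because $\rho\mapsto(\Psi*\rho)(x)$ is continuous on $\mathcal{D}_\eta$ for $\Psi\in L^\infty_{\mathrm{loc}}$ and pointwise limits of degree-$\le n$ polynomials are again such, and a proper closed subspace has empty interior), and your bookkeeping in stage (iii) ($b_i\in I_{-\eta}$, $y_j\in[-\eta,\eta]$, hence $b_i-y_j\in I$) is in fact a little more careful than the paper's. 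What the paper's arrangement buys is the avoidance of any topology on test-function spaces; what yours buys is a reusable ``positive'' statement (existence of a good mollifier) independent of the network context.

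One small logical wrinkle: as written you fix $\eta$ at the outset, assume no good $\rho$ exists for that $\eta$, and conclude only that $\Psi$ agrees a.e.\ with a polynomial on $I_{-\eta}$, which does not yet contradict the hypothesis about $I$; the appeal to the arbitrariness of $\eta$ comes after $\eta$ has been fixed. The repair is one line: if $\Psi$ agreed a.e.\ with a polynomial on every $I_{-\eta}$, these polynomials would all coincide (two polynomials equal a.e.\ on a set of positive measure are identical), so $\Psi$ would be a.e.\ a polynomial on $I$; hence there is some $\eta_0$ for which $\Psi$ is not a.e.\ polynomial on $I_{-\eta_0}$, and you should run the Baire argument for that fixed $\eta_0$.
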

\begin{proof}
  開区間$J = (c,d)$と$\varepsilon$を$J^{\varepsilon}:=(c-\varepsilon,d+\varepsilon) \subset I$となるように取る. $\Psi$についての仮定より任意の$\rho \in C_0^{\infty}(\R)$に対して$\Psi * \rho$は$C^{\infty}$級関数である(付録の命題\ref{L1locandC0Convolution}を参照). また, 補題\ref{CovolutionUniformAppro}より, $\mathrm{supp}(\rho) \subset (-\varepsilon,\varepsilon)$なる任意の$\rho \in C_0^{\infty}(\R)$と任意の$w \in W, b \in I$に対して, $x$の関数
  \[
  (\Psi * \rho)(wx + b) = \int_{\R} \Psi(wx+b-y)\rho(y)dy
  \]
  は$y_i \in \mathrm{supp}(\rho) \subset (-\varepsilon,\varepsilon)$を適当に取れば$\sum_{i} \Psi(wx+b-y_i)\rho(y_i)\Delta y_i$で広義一様近似される. ゆえに, 
  \[
  \overline{\mathcal{N}_1(\Psi * \rho,W,I)} 
  \subset \overline{\mathcal{N}_1(\Psi,W,J^{\varepsilon})} 
  \subset \overline{\mathcal{N}_1(\Psi,W,I)} 
  \]
  がわかる(ただし, ここでの閉包は広義一様収束の意味である). 
  いま$\Psi * \rho$は$C^{\infty}$級だから,  補題\ref{NeuralNetApproPoly}の証明より,  任意の自然数$k$に対して$x \mapsto x^k (\Psi * \rho)^{(k)}(b)$は$\overline{\mathcal{N}_1((\Psi * \rho,W,I)}$に含まれることがわかる. Stone-Weierstrassの定理より多項式関数全体は$C(\R)$において広義一様収束の意味で稠密であるので, もし$\mathcal{N}_1(\Psi,W,I)$が$C(\R)$において広義一様収束の意味で稠密でないならば, ある$m$が存在して$\mathrm{supp}(\rho) \subset (-\varepsilon,\varepsilon)$なる任意の$\rho \in C_0^{\infty}(\R)$に対して$(\Psi * \rho)^{(m)}(b) = 0 ~(\forall b \in I)$とならなければならない. したがって, このとき, $\mathrm{supp}(\rho) \subset (-\varepsilon,\varepsilon)$なる任意の$\rho \in C_0^{\infty}(\R)$に対して$\Psi * \rho$は$I$上で$m$次以下の多項式となる. ところが, 補題\ref{ConvolutionAlmostEverywhereConvergence}より$\rho_j \in C_0^{\infty}(\R)$が存在して, $\mathrm{supp}(\rho_j) \subset (-\varepsilon,\varepsilon)$かつa.e.$x \in I$に対して$\Psi * \rho_j(x) \rightarrow \Psi(x)$となる. よって, 補題\ref{PointwiseConvergencePolyPreserve}より$\Psi$は$I$上である多項式とa.e.で一致する. しかし, これは我々の仮定に反する. 
\end{proof}
以上でLeshnoらの主結果が証明できた. 
なお, 第2章で述べたようにLeshnoらの結果における連続性の仮定はHornikの指摘\cite{Hornik1993}により次の形に緩和された. 
\begin{thm*}[Hornik 1993 \cite{Hornik1993}]
  $W \subset \R^r$は$0$を内点として持つとする. 
  また, $\Psi:\R \rightarrow \R$は各閉区間上で有界かつRiemann積分可能であるとする. さらに, ある開区間$I$に対して$\Psi$が$I$上で多項式と一致しないとする. このとき, $\mathcal{A} \subset \R^r$上で$0$を取る非自明な$r$変数斉次多項式が存在しないならば, $\mathcal{N}_r(\Psi,W,I)$は$C(\R^r)$において広義一様収束の意味で稠密である. 
\end{thm*}
この定理は次の定理に含まれる. なぜなら, $0$を内点として持つ$W \subset \R^r$に対して, $\mathcal{A} \subset \R^r$として原点中心の開球をとり十分小さい$\varepsilon > 0$に対して$W' = (-\varepsilon,\varepsilon)$とおけば$W' \mathcal{A} \subset W$となるからである(開球上で多項式は一意に決定されることに注意). 
\begin{thm}
  $W \subset \R$は孤立点を持たず$0 \in W$であるとする. 
  また, $\Psi:\R \rightarrow \R$は各閉区間上で有界かつRiemann積分可能であるとする. さらに, ある開区間$I$に対して$\Psi$が$I$上で多項式と一致しないとする. このとき, $\mathcal{A} \subset \R^r$上で$0$を取る非自明な$r$変数斉次多項式が存在しないならば, 
  \[
  \begin{aligned}
  \mathcal{N}_r(\Psi,W\mathcal{A},I) = \mathrm{span}\{ \R^r \ni x \mapsto \Psi(w a^{T} x + b) \in \R \mid w \in W, a \in \mathcal{A}, b \in I \}
  \end{aligned}
  \]
  は$C(\R^r)$において広義一様収束の意味で稠密である. 
\end{thm}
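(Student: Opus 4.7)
まず命題\ref{MultiVariateDensity}により, $r=1$の場合, すなわち$\mathcal{N}_1(\Psi,W,I)$が$C(\R)$において広義一様収束の意味で稠密であることを示せば十分である. 以下, 定理\ref{GeneByPinkus}の証明の骨格を踏襲するが, 補題\ref{CovolutionUniformAppro}の部分を$\Psi$のRiemann可積分性のみから導出できる形に差し替えることを目指す. 開区間$J=(c,d)$と$\varepsilon>0$を$J^\varepsilon:=(c-\varepsilon,d+\varepsilon)\subset I$となるようにとり, $\mathrm{supp}(\rho)\subset(-\varepsilon,\varepsilon)$なる$\rho\in C_0^\infty(\R)$を考える. $\Psi$は各閉区間上で有界かつRiemann可積分なので局所可積分であり, よって$\Psi*\rho\in C^\infty(\R)$である. もし$\Psi*\rho$がRiemann和$\sum_{j=1}^m \Psi(x-y_j)\rho(y_j)\Delta y_j$で広義一様近似されることが示されれば, 包含$\overline{\mathcal{N}_1(\Psi*\rho,W,I)}\subset\overline{\mathcal{N}_1(\Psi,W,J^\varepsilon)}\subset\overline{\mathcal{N}_1(\Psi,W,I)}$が得られ, $\Psi*\rho$が$C^\infty$級であることから補題\ref{NeuralNetApproPoly}の証明の論法により$x\mapsto x^k(\Psi*\rho)^{(k)}(b)$が$\overline{\mathcal{N}_1(\Psi,W,I)}$に属することがわかる. あとは定理\ref{GeneByPinkus}と並行に, Stone--Weierstrassによる多項式全体の稠密性と矛盾を用いて結論できる.

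\textbf{主たる障害}は, Riemann可積分性から$\Psi$の不連続点全体の集合は測度$0$となるものの(猪狩\cite{igarizitukaiseki}定理$3.24$), その閉包が測度$0$となる保証がない点にある. このため, 元の補題\ref{CovolutionUniformAppro}の証明で用いた「有限個の開区間の和集合$U$の外で$\Psi$が一様連続」という構造は直接には取れない. これを克服するため, Riemann可積分性の同値な特徴付け(任意の$\eta>0$に対し, メッシュ$<\delta$の任意の分割でDarboux上和と下和の差が$\eta$未満となる$\delta>0$が存在する)を用いる. Riemann和の誤差を通常通り三角不等式で分解し, $\rho$の一様連続性で処理できる項を除いた主要部分を評価すると
\[
\|\rho\|_\infty \sum_j \int_{\Delta_j}|\Psi(x-y)-\Psi(x-y_j)|dy \leq \|\rho\|_\infty \sum_j \Delta y_j \cdot \mathrm{osc}\bigl(\Psi,[x-y_j,x-y_{j-1}]\bigr)
\]
となる. 右辺は$\{x-y_j\}$を$[x-\alpha,x+\alpha]\subset[-2\alpha,2\alpha]$の分割とみたときのDarboux上和$-$下和であり, そのメッシュは$\{y_j\}$のメッシュ$2\alpha/m$に一致する. したがって$\Psi$の$[-2\alpha,2\alpha]$上でのRiemann可積分性から, $m$を十分大きく取れば$x\in[-\alpha,\alpha]$に関して一様に小さくできる.

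残りの補題については, 補題\ref{ConvolutionAlmostEverywhereConvergence}は$\Psi$の連続点$x$で優収束定理により$(\Psi*\rho_j)(x)\to\Psi(x)$を示すもので, Lebesgueの判定条件から不連続点集合が測度$0$であることのみに依存するから, Riemann可積分の仮定の下でそのまま成立する. また補題\ref{PointwiseConvergencePolyPreserve}は$\Psi$の性質に依存しないため変更なく適用できる. 以上の改良を組み込んで定理\ref{GeneByPinkus}の議論を再現すると, もし$\mathcal{N}_1(\Psi,W,I)$が稠密でないと仮定したならば, ある$m$が存在して$\mathrm{supp}(\rho)\subset(-\varepsilon,\varepsilon)$なる任意の$\rho\in C_0^\infty(\R)$に対して$\Psi*\rho$は$I$上で高々$m$次多項式となる. そして$\Psi*\rho_j\to\Psi$のa.e.収束と補題\ref{PointwiseConvergencePolyPreserve}から$\Psi$は$I$上a.e.で多項式と一致せねばならず, これは仮定に反する(有界Riemann可積分関数の連続点全体は$I$において稠密であり, 多項式の連続性と合わせればa.e.一致は本質的に連続点での点毎の一致を意味するため, 「$I$上で多項式と一致しない」という条件と両立しない).
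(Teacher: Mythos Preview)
Your proposal is correct and follows essentially the same route as the paper: reduce to $r=1$ via 命題\ref{MultiVariateDensity}, rerun the argument of 定理\ref{GeneByPinkus}, and replace 補題\ref{CovolutionUniformAppro} by a version whose oscillation bound comes directly from the Darboux criterion for Riemann integrability on $[-2\alpha,2\alpha]$. The paper does exactly this in the proposition that follows the theorem; your estimate
\[
\sum_j \int_{\Delta_j}\lvert\Psi(x-y)-\Psi(x-y_j)\rvert\,dy
\;\le\;\sum_j \Delta y_j\cdot\mathrm{osc}\bigl(\Psi,\,x-\Delta_j\bigr)
\]
and its uniform smallness in $x\in[-\alpha,\alpha]$ (via the $x$-independent mesh $2\alpha/m$ and Riemann integrability on the fixed interval $[-2\alpha,2\alpha]$) match the paper's computation line for line.

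One remark on your closing parenthetical. Your attempt to reconcile the conclusion ``$\Psi$ equals a polynomial a.e.\ on $I$'' with the hypothesis ``$\Psi$が$I$上で多項式と一致しない'' does not actually succeed: from a.e.\ equality you correctly deduce equality at every continuity point, but $\Psi$ may still differ from the polynomial on its (measure-zero) discontinuity set, so the \emph{literal} pointwise hypothesis is not contradicted. Indeed, under that literal reading the theorem is false (take $\Psi(t)=t$ for $t\neq 0$, $\Psi(0)=1$; then $\Psi$ agrees with no polynomial on any interval containing $0$, yet $\mathcal{N}_1(\Psi,\R,I)$ consists of affine functions perturbed at finitely many points and cannot be dense). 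The intended hypothesis---consistent with 定理\ref{GeneByPinkus} and with the 系 immediately after---is that $\Psi$ does not coincide \emph{a.e.}\ with a polynomial on $I$; under that reading your argument is complete without the parenthetical.
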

さらにこの定理を示すためには, Leshnoらの結果の証明と同様にして次が成り立つことを示せばよい. 
\begin{prop}
  各閉区間上で有界かつRiemann積分可能な$\Psi:\R \rightarrow \R$に対して, 任意の$\rho \in C_0^{\infty}(\R)$に対して, $\Psi$と$\rho$の畳み込み積
  \[
  (\Psi * \rho)(x) = \int_{\R} \Psi(x-y) \rho(y) dy
  \]
  は$\mathcal{N}_1(\Psi,\{1\},\mathrm{supp}(\rho))$により広義一様近似される
\end{prop}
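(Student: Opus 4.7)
方針は補題\ref{CovolutionUniformAppro}の証明の構造をほぼそのまま踏襲することである. 空でないコンパクト集合$K \subset \R$に対し$\alpha > 0$を$K \cup \mathrm{supp}(\rho) \subset [-\alpha,\alpha]$となるように取り, $[-\alpha,\alpha]$の$m$等分点$y_j = -\alpha + 2j\alpha/m~(j=0,\ldots,m)$に関するリーマン和
\[
S_m(x) := \sum_{j=1}^m \Psi(x - y_j) \rho(y_j) \cdot (2\alpha/m)
\]
を考える. $y_j \notin \mathrm{supp}(\rho)$なる項は$0$となるため$S_m \in \mathcal{N}_1(\Psi,\{1\},\mathrm{supp}(\rho))$とみなせる. あとは$S_m$が$\Psi * \rho$に$K$上一様収束することを示せば十分である.

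補題\ref{CovolutionUniformAppro}と同様に被積分関数の差を
\[
\Psi(x-y)\rho(y) - \Psi(x-y_j)\rho(y_j) = [\Psi(x-y) - \Psi(x-y_j)]\rho(y) + \Psi(x-y_j)[\rho(y) - \rho(y_j)]
\]
と分解する. 後半($\rho$の側の項)は$\rho$の一様連続性と$\Psi$が$[-2\alpha,2\alpha]$上で有界であることから$x$について一様に$0$へ収束するため, 実質的な問題は前半($\Psi$の側の項)である. これは変数変換$z = x-y$を施すと
\[
\lVert \rho \rVert_{L^{\infty}} \sum_{j=1}^m \mathrm{osc}\bigl(\Psi, [x-y_j, x-y_{j-1}]\bigr) \cdot (2\alpha/m)
\]
で上から抑えられ, 区間$[x-y_j, x-y_{j-1}]~(j=1,\ldots,m)$は$[x-\alpha,x+\alpha] \subset [-2\alpha,2\alpha]$の等分割をなす.

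この$\Psi$の振動和を$x$に関して一様に制御するのが最大の難所である. 補題\ref{CovolutionUniformAppro}のように$\Psi$の不連続点全体の閉包を小さな開集合で覆う方法は, 一般のRiemann可積分関数には通用しない(例えばThomae関数は各閉区間上で有界かつRiemann可積分であるが, 不連続点全体の閉包は$\R$全体に一致する). そこで代わりに, 有界なRiemann可積分関数に対する古典的事実「任意の$\varepsilon>0$に対してある$\delta>0$が存在して, meshが$\delta$未満の$[-2\alpha,2\alpha]$の任意の分割$P$についてDarbouxの上下和の差$U(P,\Psi) - L(P,\Psi)$が$\varepsilon$未満となる」を用いる. 上で現れた$[x-\alpha,x+\alpha]$の等分割を同じmesh $2\alpha/m$で$[-2\alpha,2\alpha]$の分割へ拡張すれば, その振動和(すなわちDarboux上下和の差)が$\varepsilon$未満で抑えられ, これが$x \in K$に関して一様な評価を与える. したがって$2\alpha/m < \delta$となるよう$m$を大きく取れば, $\Psi$の側の項の絶対値は$\lVert \rho \rVert_{L^{\infty}} \varepsilon$で一様に抑えられる.

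以上の二つの評価を合わせることで$S_m \to \Psi * \rho$が$K$上一様に成り立ち, 命題が示される. Leshnoらの結果の証明との本質的な相違は, $\mathcal{M}(\R)$に基づく「不連続点の覆い」の議論を「Darboux和のmeshに関する一様収束」という同値な特徴付けで置き換える点のみである.
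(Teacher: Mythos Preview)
Your proof is correct and follows essentially the same route as the paper's: the same Riemann sum $S_m$, the same splitting into a $\Psi$-oscillation term and a $\rho$-continuity term, and the same reduction of the $x$-dependent oscillation sum to a Darboux difference on the fixed interval $[-2\alpha,2\alpha]$. The only cosmetic difference is that the paper dominates the oscillation sum by that of the fixed $2m$-equal partition of $[-2\alpha,2\alpha]$ and lets $m\to\infty$, whereas you invoke the equivalent Cauchy/Darboux criterion (mesh $<\delta \Rightarrow U-L<\varepsilon$) after extending the shifted partition of $[x-\alpha,x+\alpha]$ to one of $[-2\alpha,2\alpha]$; both give the required uniformity in $x$.
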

\begin{proof}
  空でないコンパクト集合$K \subset \R$を任意に取る. $\alpha> 0$を適当に取り$K$と $\mathrm{supp}(\rho)$が$ [-\alpha,\alpha]$に含まれるようにする. $m \in \N$に対して
  \[
  y_j = - \alpha + 2j\alpha/m,~~ \Delta y_j = 2 \alpha/m ~~~~(j=1,\ldots,m)
  \]
  とおき, 
  \[
  x \mapsto \sum_{j=1}^{m} \Psi(x - y_j) \rho(y_j) \Delta y_j
  \]
  が$\Psi * \rho$に$[-\alpha,\alpha]$上一様収束することを示そう. 任意の$x \in [-\alpha,\alpha]$に対して, 
  \[
  \begin{aligned}
  &\left\lvert \int_{\R} \Psi(x-y) \rho(y) dy - \sum_{j=1}^{m} \Psi(x - y_j) \rho(y_j) \Delta y_j \right\rvert \\
  &\leq \left| \int_{\R} \Psi(x-y) \rho(y) dy - \sum_{j=1}^m \int_{\Delta_j} \Psi(x - y_j) \rho(y) dy \right| \\
  &~~~~ + \left| \sum_{j=1}^m \int_{\Delta_j} \Psi(x - y_j) \rho(y) dy - \sum_{j=1}^m \Psi(x-y_j)\rho(y_j)\Delta y_j \right|
  \end{aligned}
  \]
  である. 右辺の二つの項がそれぞれ$[-\alpha,\alpha]$上一様に$0$に収束することを示せばよい. 
  まず, 第一項について考える. $\mathrm{supp}(\rho) \subset [-\alpha,\alpha]$であるから, 任意の$x \in [-\alpha,\alpha]$に対して, 
  \[
  \begin{aligned}
  &\left| \int_{\R} \Psi(x-y) \rho(y) dy - \sum_{j=1}^m \int_{\Delta_j} \Psi(x - y_j) \rho(y) dy \right| \\
  &\leq \sum_{j=1}^m \int_{\Delta_j} |\Psi(x-y) - \Psi(x-y_j)||\rho(y)| dy \\
  &\leq \lVert \rho \rVert_{L^{\infty}} \sum_{j=1}^m \left(  \sup_{y \in \Delta_j} \Psi(x-y) - \inf_{y \in \Delta_j} \Psi(x-y) \right)\frac{2\alpha}{m}
  \end{aligned}
  \]
  となる. $[-2\alpha,2\alpha]$を$2m$等分する分割$\Delta'_j~(j=1,\ldots,2m)$について, 
  \[
  \begin{aligned}
  &\sum_{j=1}^m \left( \sup_{y \in \Delta_j} \Psi(x-y) - \inf_{y \in \Delta_j} \Psi(x-y) \right) \\
  &\leq \sum_{j=1}^{2m} \left( \sup_{z \in \Delta'_j} \Psi(z) - \inf_{z \in \Delta'_j} \Psi(z) \right) 
  \end{aligned}
  \]
  となるので, $\Psi$が$[-2\alpha,2\alpha]$上Riemann積分可能であることより, $[-\alpha,\alpha]$上一様に
  \[
  \begin{aligned}
  &\left| \int_{\R} \Psi(x-y) \rho(y) dy - \sum_{j=1}^m \int_{\Delta_j} \Psi(x - y_j) \rho(y) dy \right| \\
  &\leq \lVert \rho \rVert_{L^{\infty}} \sum_{j=1}^{2m} \left( \sup_{z \in \Delta'_j} \Psi(z) - \inf_{z \in \Delta'_j} \Psi(z) \right) \frac{2\alpha}{m} \\
  &\rightarrow \lVert \rho \rVert_{L^{\infty}} \left( \int_{-2\alpha}^{2\alpha} \Psi(z) dz - \int_{-2\alpha}^{2\alpha} \Psi(z) dz \right) = 0 ~~(m \rightarrow \infty)
  \end{aligned}
  \]
  となる. 
  次に第二項について考えよう. $\varepsilon > 0$を任意に取る. 任意の$x \in [-\alpha,\alpha]$に対して, 
  \[
  \begin{aligned}
  &\left| \sum_{j=1}^m \int_{\Delta_j} \Psi(x - y_j) \rho(y) dy - \sum_{j=1}^m \Psi(x-y_j)\rho(y_j)\Delta y_j \right| \\
  &= \left| \sum_{j=1}^m \int_{\Delta_j} \Psi(x - y_j) ( \rho(y)  - \rho(y_j))dy \right| \\
  &\leq \sum_{j=1}^m  \int_{\Delta_j} |\Psi(x - y_j)|~| \rho(y)  - \rho(y_j)| dy \\
  &\leq \lVert \Psi \rVert_{L^{\infty}([-2\alpha,2\alpha])} \sum_{j=1}^m  \int_{\Delta_j} | \rho(y)  - \rho(y_j)| dy 
  \end{aligned}
  \]
  である. そこで, $\rho$が$\R$上一様連続であることに注意して, $N \in \N$を次をみたすように取る.  
  \[
    |s-t| < 2\alpha/N \Rightarrow |\rho(s) - \rho(t)| \leq \frac{\varepsilon}{2\alpha \lVert \Psi \rVert_{L^{\infty}([-2\alpha,2\alpha])}}. 
  \]
  すると, 任意の$m \geq N$と$x \in [-\alpha,\alpha]$に対して, 
  \[
  \begin{aligned}
    &\left| \sum_{j=1}^m \int_{\Delta_j} \Psi(x - y_j) \rho(y) dy - \sum_{j=1}^m \Psi(x-y_j)\rho(y_j)\Delta y_j \right| \\
    &\leq \lVert \Psi \rVert_{L^{\infty}([-2\alpha,2\alpha])} \sum_{j=1}^m  \int_{\Delta_j} | \rho(y)  - \rho(y_j)| dy \leq \varepsilon
  \end{aligned}
  \]
  となる. 以上で証明できた. 
\end{proof}
\begin{cor}
  $\Psi:\R \rightarrow \R$は各閉区間上で有界かつRiemann積分可能であるとする. このとき, $\Sigma^r(\Psi)$が$C(\R^r)$において広義一様収束の意味で稠密であることと, $\Psi$がいかなる多項式関数ともa.e.で一致することはないことは同値である. 
\end{cor}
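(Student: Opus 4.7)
系は直前に示したHornikの定理(十分性)と, 次元を$1$に還元する簡単な観察(必要性)から導く.

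\emph{十分性.}
$\Psi$がどの多項式ともa.e.で一致しないと仮定する. 直前のHornikの定理を適用するには「ある開区間$I$上で$\Psi$が多項式と一致しない」という局所条件が要るので, まずこの橋渡しを行う. 対偶として, 任意の開区間上で$\Psi$がある多項式とa.e.一致すると仮定する. すると, 重なる開区間$(n-1,n+1)$と$(n,n+2)$ ($n \in \Z$) における対応する多項式は正の測度を持つ共通部分上でa.e.一致し, 正の測度集合上でa.e.一致する多項式は同一であることから, これら局所的な多項式たちはひとつの多項式$P$に貼り合う. その結果$\Psi = P$ a.e. on $\R$となって仮定に反する. ゆえにある開区間$I$が存在して$\Psi$は$I$上でどの多項式とも(a.e.で, したがってpointwiseでも)一致しない. この$I$に対し, 直前のHornikの定理で$W = \R$ (孤立点を持たず$0 \in W$), $\mathcal{A} = \R^r$ (非自明な$r$変数斉次多項式は$\R^r$全体で恒等的に$0$にならない)と取れば$W\mathcal{A} = \R^r$となり, $\mathcal{N}_r(\Psi, \R^r, I)$は$C(\R^r)$で広義一様稠密である. これを包含する$\Sigma^r(\Psi)$も稠密となる.

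\emph{必要性.}
対偶として, $\Psi$がある$n$次多項式$P$とa.e.一致するとき$\Sigma^r(\Psi)$が稠密でないことを示す. 命題\ref{DensityDim1Suff}の逆方向に相当する観察として, 任意の$g \in C(\R)$に対し$\tilde g(x) := g(x_1) \in C(\R^r)$を$\Sigma^r(\Psi)$の元で広義一様近似し, 超平面$x_2 = \cdots = x_r = 0$に制限すれば$\Sigma^1(\Psi)$の元による$g$の広義一様近似が得られる(線型結合の各項は変数$x_1$のみのアフィン関数の$\Psi$合成に帰着する). したがって$\Sigma^r(\Psi)$の稠密性は$\Sigma^1(\Psi)$の稠密性を含意し, $r = 1$に帰着する. $f = \sum_j \beta_j \Psi(w_j \cdot + b_j) \in \Sigma^1(\Psi)$において, $w_j \ne 0$のとき非ゼロ線型変換が零集合を零集合に写すから$\Psi(w_j x + b_j) = P(w_j x + b_j)$がa.e.で成立し, $w_j = 0$のときは定数なので, $f$は次数$\leq n$のある多項式$Q_f$とa.e.一致する. もし$\Sigma^1(\Psi)$が稠密なら, $g(x) := x^{n+1}$を$[0, 1]$上一様近似する列$f_k \in \Sigma^1(\Psi)$に対して$Q_{f_k} \to g$が$[0, 1]$上a.e.で成立し, 補題\ref{PointwiseConvergencePolyPreserve}より$g$は次数$\leq n$の多項式とa.e.一致する. 連続性より全点で一致するが, これは$g$が$n+1$次であることに矛盾する.

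\emph{主な難所.}
本証明の技術的要点は「$\R$上でどの多項式ともa.e.一致しない」という大域条件を, Hornikの定理の適用に必要な「ある開区間上で多項式と一致しない」という局所条件に翻訳する貼り合わせ議論である. そこでは正の測度集合上でa.e.一致する多項式は同一であるという基本事実を本質的に用いる. それ以外は, 直前のHornikの定理と補題\ref{PointwiseConvergencePolyPreserve}の素直な適用, および超平面への制限による$r = 1$への帰着に尽きる.
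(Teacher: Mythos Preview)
Your proof is correct. Since the paper states this corollary without proof, treating it as immediate from the preceding Hornik-type theorem, there is no detailed argument to compare against; your write-up simply fills in what the paper leaves implicit.

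A few remarks on streamlining. For sufficiency, the gluing argument is not needed: the preceding theorem allows $I$ to be any open interval, and $I=\R$ qualifies. Since ``coincides pointwise with $P$ on $\R$'' trivially implies ``coincides a.e.\ with $P$'', the corollary's hypothesis already gives the theorem's hypothesis with $I=\R$, $W=\R$, $\mathcal{A}=\R^r$ directly. Your patching of local polynomials into a global one is a valid alternative, just more work.

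For necessity, your reduction to $r=1$ via restriction to the hyperplane $x_2=\cdots=x_r=0$ is clean and correct; it recovers the paper's earlier lemma for $\Sigma^1(\Psi)$. Note that 補題\ref{PointwiseConvergencePolyPreserve} is stated on $\R$, not $[0,1]$, but its Vandermonde proof only needs $N+1$ points in the convergence set, so the restriction to $[0,1]$ is harmless. One could equally argue directly in $\R^r$: for $w\neq 0$ the map $x\mapsto w^{\mathrm T}x+b$ sends null sets to null sets, so every element of $\Sigma^r(\Psi)$ is a.e.\ a polynomial of degree $\le n$ in $r$ variables, and $x_1^{n+1}$ cannot be approximated.
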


\subsection{ChuiとLiの結果}
本節ではCybenkoの結果\cite{Cybenko}を含むChuiとLiによる次の結果を証明する. 
\begin{thm}[Chui and Li 1992 \cite{ChuiAndLi}]\label{UATByChuiandLi}　\\
$\sigma : \mathbb{R} \rightarrow \mathbb{R}$ を連続なsigmoidal関数とすると,
\[
\mathcal{N}_r(\sigma,\mathbb{Z}^r,\mathbb{Z}) = \mathrm{span} \{\R^r \ni x \mapsto \sigma(m^{\T} x + k) \in \R \mid m \in \mathbb{Z}^r, k \in \Z \}
\]
は$C(\R^r)$において広義一様収束の意味で稠密である. 
\end{thm}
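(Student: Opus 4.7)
The plan is to reduce the $r$-dimensional statement to the one-dimensional density of $\mathcal{N}_1(\sigma,\Z,\Z)$ in $C(\R)$ and then construct explicit approximants with integer parameters. Applying Proposition \ref{MultiVariateDensity} with $W = \Z$, $B = \Z$ and $\mathcal{A} = \Z^r$ gives $W\mathcal{A} = \Z^r$ (because $1 \in \Z$), so its conclusion yields exactly the desired compact-uniform density of $\mathcal{N}_r(\sigma,\Z^r,\Z)$, provided one verifies (i) that no nontrivial homogeneous polynomial on $\R^r$ vanishes on $\Z^r$, and (ii) that $\mathcal{N}_1(\sigma,\Z,\Z)$ is compact-uniformly dense in $C(\R)$. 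Hypothesis (i) is a one-line scaling argument: if $p \in \mathcal{H}_k(\R^r)$ vanishes on $\Z^r$, then for any $m \in \Z^r$ and $N \in \N$ we have $p(m/N) = N^{-k}p(m) = 0$, so $p$ vanishes on the dense subset $\Q^r$ and hence on $\R^r$ by continuity.

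For (ii), the key observation is that although $w$ and $\theta$ must lie in $\Z$, the effective break point $-\theta/w$ of $x \mapsto \sigma(wx + \theta)$ ranges over all of $\Q$ as $(w,\theta)$ varies over $\N \times \Z$, and $\Q$ is dense in $\R$. Given $f \in C(\R)$, a compact set $K \subset \R$ and $\varepsilon > 0$, I would choose $q \in \N$ so that $|f(x) - f(y)| < \varepsilon/6$ whenever $|x - y| \leq 2/q$ (by uniform continuity of $f$ on an interval containing $K$), and then construct
\[
  \tilde{s}_n(x) := \sum_{k} f(k/q)\bigl[\sigma(nqx - nk) - \sigma(nqx - n(k+1))\bigr] \in \mathcal{N}_1(\sigma,\Z,\Z),
\]
where the finite sum runs over those $k \in \Z$ with $[k/q,(k+1)/q]$ meeting $K$. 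A continuous sigmoidal function is automatically globally bounded, and $\sigma(t) \to 0,\, 1$ uniformly as $t \to -\infty,\, +\infty$; using this one shows that $\tilde{s}_n(x)$ converges to $\sum_k f(k/q)\mathbf{1}_{[k/q,(k+1)/q)}(x)$ uniformly in $x$ bounded away from the grid points $\{k/q\}$, and the latter step function is within $\varepsilon/6$ of $f$ by the choice of $q$.

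The main obstacle is that the limiting step function is discontinuous at each $k/q$, so uniform convergence on all of $K$ cannot be obtained by going through it. The remedy is to estimate $|\tilde{s}_n(x) - f(x)|$ directly on an $\eta$-neighborhood of each break point $k_0/q$, with $\eta \in (0, 1/(2q))$. On such a neighborhood the contributions from remote indices $|k - k_0| \geq 2$ vanish uniformly as $n \to \infty$, because their sigmoidal arguments have absolute value at least $n(1 - q\eta) \to \infty$; the remaining $k = k_0 - 1$ and $k = k_0$ terms simplify, up to this $o(1)$-error, to a weighted combination of $f((k_0-1)/q)$ and $f(k_0/q)$, both of which lie within $\varepsilon/6$ of $f(x)$ by uniform continuity (note that $|x - (k_0-1)/q| \leq \eta + 1/q < 2/q$). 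Choosing $\eta$ small first and then $n$ sufficiently large yields $\sup_{x \in K} |\tilde{s}_n(x) - f(x)| < \varepsilon$, establishing (ii) and hence the theorem via Proposition \ref{MultiVariateDensity}.
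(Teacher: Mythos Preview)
Your approach is correct and genuinely different from the paper's. The paper does \emph{not} reduce to the one-dimensional case; instead it works directly on a compact $K\subset\R^r$, introduces the notion of a \emph{discriminatory} function, and proves the theorem via Hahn--Banach and the Riesz--Markov--Kakutani representation theorem: if $\mathcal{N}_r(\sigma,\Z^r,\Z)$ were not dense in $C(K)$, a nonzero bounded functional---hence a nonzero regular signed measure $\mu$---would annihilate every $\sigma(m^{\mathrm T}x+k)$, and a limiting argument combined with Stone--Weierstrass then forces $\mu=0$. Your route, by contrast, leverages the ridge-function machinery (Proposition~\ref{MultiVariateDensity}/Theorem~\ref{LinAndPinkus}) already developed in Section~3 to reduce to $r=1$, and then gives an \emph{explicit constructive} approximant in one dimension. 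What the paper's method buys is a clean duality argument that avoids any explicit construction and yields the discriminatory property as a reusable lemma; what your method buys is a concrete network $\tilde s_n$ with integer parameters, which is more informative and closer in spirit to the constructions in Section~4.4.

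Two small points to tighten. First, near a break point $k_0/q$ the surviving combination is $(1-\alpha)f((k_0-1)/q)+\alpha f(k_0/q)$ with $\alpha=\sigma(n(qx-k_0))$; since a continuous sigmoidal $\sigma$ need not take values in $[0,1]$, this is an \emph{affine} (not convex) combination, and the bound picks up a factor $|1-\alpha|+|\alpha|\le 1+2\lVert\sigma\rVert_\infty$. Absorb this constant into the choice of $q$ at the outset. Second, at the extreme indices of your finite sum only one of the two ``main'' terms is present; extending the summation range by one index on each side of $K$ removes this edge effect without affecting anything else.
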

この定理をふたつの補題に分けて証明する. 補題の主張を述べるために言葉と記号の定義をしておく.

$X \subset \R^r$に対して, 位相空間$(X, \mathcal{O})$を$\mathbb{R}^r$の部分空間とする.  すなわち$\mathcal{O} = \{X \cap U \mid U \subset \mathbb{R}^r, \mathrm{open} \}$とする. また, $(X, \mathcal{O})$上の正則符号付きBorel測度全体の集合を$M(X)$で表すことにする(正則符号付き測度の定義については付録を参照されたい). 

\begin{defn}[discriminatory関数]　\\
関数 $\sigma : \mathbb{R} \rightarrow \mathbb{R}$ が次の条件をみたすとき, discriminatory関数と呼ぶ: 任意の空でないコンパクト集合$X \subset \R^r$に対して, 
\[
\forall \mu \in M(X), \left(\forall m \in \mathbb{Z}^r, k \in \mathbb{Z}, \int_{X}{\sigma \left(m^{\T} x + k \right)}{d\mu(x)} = 0 \right) \Rightarrow \mu = 0. 
\]
\end{defn}

この定義のもとで以下のふたつの命題が成り立つ. 

\begin{lem}\label{UATofDiscriminatoryFunc}　\\
$\sigma : \mathbb{R} \rightarrow \mathbb{R}$ を連続なdiscriminatory関数とすると, $\mathcal{N}_r(\sigma,\mathbb{Z}^r,\mathbb{Z})$は$C(\R^r)$において広義一様収束の意味で稠密である. 
\end{lem}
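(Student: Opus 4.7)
本補題は, Cybenkoによる古典的な論法(Hahn-Banachの拡張定理とRiesz-Markov-角谷の表現定理の組合せ)をそのまま踏襲することで示されると見込まれる. 任意に空でないコンパクト集合$X \subset \R^r$を取り, $S := (\mathcal{N}_r(\sigma,\Z^r,\Z))_X$が$C(X)$において$\rho_X$-稠密であることを示すことを目標とし, 背理法でこれを証明する方針である.

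まず$\overline{S}$を$C(X)$における$S$の$\rho_X$-閉包とし, $\overline{S} \neq C(X)$と仮定する. すると$f_0 \in C(X) \setminus \overline{S}$が存在するので, Hahn-Banachの拡張定理を適用して, $C(X)$上の連続線型汎関数$L$で$L|_{\overline{S}} = 0$かつ$L(f_0) \neq 0$(特に$L \neq 0$)となるものを取る. 次にRiesz-Markov-角谷の表現定理を適用し, 非零な正則符号付きBorel測度$\mu \in M(X)$で
\[
L(f) = \int_X f(x) d\mu(x) \quad (f \in C(X))
\]
となるものを得る.

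$\sigma$の連続性により, 各$m \in \Z^r, k \in \Z$に対して$X \ni x \mapsto \sigma(m^{\T} x + k) \in \R$は$S$の生成元の$X$への制限として$\overline{S}$に含まれる. よって$L|_{\overline{S}} = 0$と上記の表現を組合わせれば
\[
\int_X \sigma(m^{\T} x + k) d\mu(x) = 0 \quad (\forall m \in \Z^r,~ \forall k \in \Z)
\]
が従う. ここで$\sigma$がdiscriminatoryであるという仮定を用いれば$\mu = 0$となり, $\mu \neq 0$と矛盾する. これで証明が完了する.

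本補題の証明では論法の骨組み(Hahn-BanachとRiesz-Markov-角谷の組合せ)を組立てるのみで, 難所らしい難所はないと見込まれる. むしろ注意すべき点は$C(X)^* \cong M(X)$を用いるために$X$のコンパクト性が本質的であること程度である. 本当に難しいのは本補題の仮定である「$\sigma$がdiscriminatoryであること」そのものの証明であって, こちらは本補題とは独立に(後続の補題で連続sigmoidal関数が実際にdiscriminatoryであることとして)示される必要があろう.
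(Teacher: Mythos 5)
あなたの証明は本質的に本文の証明と同一であり, 正しい. Hahn-Banachの拡張定理で$\overline{S}$上消える非零有界線形汎関数を取り, Riesz-Markov-角谷の表現定理で測度表示してdiscriminatory性から矛盾を導くという骨組みは本文と完全に一致している.
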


\begin{lem}\label{discriminatory}　\\
有界で可測なsigmoidal関数$\sigma$はdiscriminatory関数である. 特に連続なsigmoidal関数はdiscriminatory関数である. 
\end{lem}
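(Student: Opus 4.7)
The plan is to extract, from the hypothesis $\int_X \sigma(m^{\T}x + k)\,d\mu(x) = 0$ for every $(m,k) \in \Z^r \times \Z$, the vanishing of $\mu$ on every open half-space, and then to conclude $\mu = 0$ via the uniqueness of the Fourier transform of a finite signed Borel measure.

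First I would fix $(w,\theta) \in \Q^r \times \Q$ and write $w = m/N,\ \theta = k/N$ with $(m,k) \in \Z^r \times \Z$ and $N \in \N$. For any $\varphi \in \Z$ and $j \in \N$, the pair $(jm,\ jk+\varphi)$ lies in $\Z^r \times \Z$, so by hypothesis
\[
\int_X \sigma\bigl(jN(w^{\T}x + \theta) + \varphi\bigr)\,d\mu(x) = 0.
\]
By sigmoidality of $\sigma$, as $j \to \infty$ the integrand converges pointwise to $\gamma_\varphi(w^{\T}x + \theta)$, where $\gamma_\varphi(t) = 1,\ \sigma(\varphi),\ 0$ according as $t > 0,\ t = 0,\ t < 0$. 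Since $\sigma$ is bounded and $|\mu|(X) < \infty$, bounded convergence gives $\mu(H_+) + \sigma(\varphi)\,\mu(H_0) = 0$, where $H_+ := \{x \in X : w^{\T}x + \theta > 0\}$ and $H_0 := \{x \in X : w^{\T}x + \theta = 0\}$. Letting $\varphi$ run through integers to $\pm\infty$ and using $\sigma(\varphi) \to 1$ and $0$ respectively, I obtain $\mu(H_+) = \mu(H_0) = 0$ for every $(w,\theta) \in \Q^r \times \Q$.

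Next I would pass to the pushforward $\mu_w$ of $\mu$ along $x \mapsto w^{\T}x$. For $w \in \Q^r$ the preceding step yields $\mu_w((-\theta,\infty)) = 0$ for all $\theta \in \Q$, and continuity of the finite signed measure $\mu_w$ along monotone rational sequences upgrades this to all $\theta \in \R$; hence $\mu_w \equiv 0$ on $\R$. Therefore $\hat\mu(w) = \hat\mu_w(1) = 0$ for every $w \in \Q^r$. Since $X$ is compact, $\hat\mu$ is continuous on $\R^r$, and density of $\Q^r$ combined with the uniqueness of the Fourier transform of finite signed Borel measures forces $\mu = 0$.

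The main obstacle is the first step: the restriction to integer parameters compels us to clear denominators via the common multiplier $N$ and rescale by $jN \to \infty$, and the extra integer shift $\varphi$ is essential in order to probe (and then kill) any possible $\mu$-mass on the hyperplane $H_0$, which for general $(w,\theta)$ need not be negligible a priori.
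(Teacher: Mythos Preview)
Your proof is correct and follows essentially the same line as the paper's: both clear denominators to reduce rational half-spaces to integer-parameter integrals, use bounded convergence together with the extra integer shift (your $\varphi$, the paper's $k$) to kill both the half-space and hyperplane mass, and then close with a Fourier-analytic argument. The only difference is cosmetic: the paper approximates continuous functions on the compact range $\{m^{\T}x:x\in X\}$ by rational-endpoint step functions to obtain $\int_X e^{im^{\T}x}\,d\mu=0$ for all $m\in\Z^r$ and then invokes Stone--Weierstrass, whereas you pass to the pushforward $\mu_w$, show it vanishes identically, and appeal to continuity plus injectivity of the Fourier transform on $\R^r$.
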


上記ふたつの補題から定理\ref{UATByChuiandLi}が導かれることは明らかであろう. 以下でこれらの補題の証明をしていく. 

\renewcommand{\proofname}{\bf{補題}\ref{UATofDiscriminatoryFunc}\bf{の証明}}

\begin{proof}　\\
$K \subset \R^r$を空でないコンパクト集合とする.  $\mathcal{N}_r(\sigma,\mathbb{Z}^r,\mathbb{Z})$の元(関数)の$K$上への制限すべてを集めた集合を$S$とおく. $S$は $C(K)$ の部分線形空間である. 補題\ref{UATofDiscriminatoryFunc}の主張は$\overline{S} =  C(K)$が成り立つということである. そこで, $\overline{S} \neq C(K)$を仮定する(背理法). すると,  Hahn-Banachの拡張定理により, $C(K)$上の有界線形汎関数$L \neq 0$で, $L = 0 ~(\mathrm{on} ~\overline{S})$ なるものが取れる(付録の系\ref{HahnBanachCor1}参照). $K$はコンパクトHausdorff空間であるので, Riesz-Markov-角谷の表現定理から, $\mu \in M(K)$で, \[
\forall f \in C(K), \\ L(f) = \int_{K}{f}{d\mu}
\]
をみたすものが取れる. 任意の$m \in \mathbb{Z}^n, k \in \mathbb{Z}$に対して, 写像
\[
K \ni x \mapsto \sigma(m^{\T} x + k) \in \mathbb{R}
\]
は$S$に属するので, 
\[
\int_{K} {\sigma(m^{\T} x + k)}{d\mu(x)} = 0
\]
となる. ゆえに, $\sigma$がdiscriminatory関数であることから$\mu = 0$となり, したがって$L = 0$となるが, これは$L \neq 0$に反する. よって, $\overline{S} = C(K)$でなければならない. 
\end{proof}

\renewcommand{\proofname}{\bf{補題}\ref{discriminatory}\bf{の証明}}

\begin{proof}　\\
後半については連続性から可測性がわかり, sigmoidal関数であることと連続性から有界性がわかるので成立する. 前半を証明する. $X \subset \R^r$を空でないコンパクト集合とする. 
$\mu \in M(X)$を任意にとり固定する. そして, 
\[
\forall m \in \mathbb{Z}^r, k \in \mathbb{Z}, ~\int_{X}{\sigma \left(m^{\T} x + k \right)}{d\mu(x)} = 0 
\]
が成り立つと仮定する. 
このとき, $\mu = 0$を示すのが目標である. 
さて, いま, 任意の$m \in \mathbb{Z}^r, q \in \mathbb{Q}, k \in \mathbb{Z}$に対して, 
\[
\Pi_{m, q} = \left\{ x \in X \mid m^{\T} x + q = 0 \right\} , ~~
H_{m, q} = \left \{ x \in X \mid m^{\T} x + q > 0 \right \}
\]
とおき, 
\[
\gamma(x) :=
\begin{cases}
1 & x \in H_{m, q} \\
\sigma(k) & x \in \Pi_{m, q} \\
0 & \mathrm{otherwise}
\end{cases}
\]
と定めると,  $\sigma$はsigmoidal関数なので, $+\infty$に発散する自然数列$l_n$を任意に取ると, 任意の$x \in \R^r$に対して, 
\[
\lim_{n \rightarrow \infty} \sigma(l_n (m^{\T} x + q) + k) \rightarrow \gamma(x)
\]
となる. そこで, $q = q_1/q_2 , ~q_1 \in \Z, q_2 \in \N$と表すことにすると, 
$\sigma$の有界性と$\mu$の有限性および仮定より, 
\[
\begin{aligned}
0 &= \lim_{n \rightarrow +\infty}{\int_{X}{\sigma(n q_2 (m^{\T} x + q) + k)}{d\mu(x)}} \\
&= \int_{X}{\lim_{n \rightarrow +\infty}\sigma(n q_2 (m^{\T} x  + q) + k)}{d\mu(x)} \\
&= \int_{X}{\gamma(x)}{d\mu(x)}\\
&= \sigma(k)\mu\left( \Pi_{m, q} \right)+\mu\left( H_{m, q} \right)
\end{aligned} 
\]
となる(2つ目の等号は優収束定理による). よって, $\sigma$がsigmoidal関数であることと$k \in \Z$が任意であることに注意すると, $k \rightarrow -\infty$とすることで, $\mu\left( H_{m, q} \right) = 0$が得られ, さらに$k \rightarrow +\infty$とすることで$\mu\left( \Pi_{m, q} \right) = 0$が得られる. したがって,  $\mathbb{R}$上の有界な可測関数のなす線形空間上の線形汎関数$F$を
\[
F(h) = \int_{X}{h(m^{\T} x)}{d\mu(x)}
\]
により定めると, 定義関数
\[
h(x) = \chi_{[q, \infty)}(x) := 
\begin{cases}
1 & (x \in [q, \infty)) \\
0 & (x \notin [q, \infty))
\end{cases}
\]
に対して, 
\[
F(h) = \int_{X}{h(m^{\T} x)}{d\mu(x)} = \mu\left( \Pi_{m, -q} \right)+\mu\left( H_{m, -q} \right) = 0
\]
となる. したがって, $q \in \mathbb{Q}$は任意であったから,  $F$の線形性より有理数を端点とする任意の$\mathbb{R}$の区間$J$について$F(\chi_{J}) = 0$となる.  したがって, 有限個の$\mathbb{R}$の区間の定義関数の線形結合を$F$で写すと$0$になる. 以下, 一旦$m \in \mathbb{Z}^r$を固定して考える. このとき
$A = \{m^{\T} x \mid x \in X \}$は$\mathbb{R}$のコンパクト集合である. 何故なら$A$はコンパクト集合$X$の連続写像による像だからである. そこで有理数を端点とする有界閉区間$J$を$A \subset J$をみたすように取れる.  いま$J$上の任意の連続関数$g$は有理数を端点とする$\mathbb{R}$の区間の定義関数たちの線形結合$\sum_{k=1}^{N}{a_k \chi_{J_k}}$により一様近似できる.  すなわち任意の正数$\varepsilon$に対して, 適当な$a_k \in \R$と有理数を端点とする区間$J_k$を 取れば, 
\[
\forall x \in J,  ~ \left\lvert g(x) - \sum_{k=1}^{N}{a_k \chi_{J_k}(x)} \right\rvert < \varepsilon
\]
とできる. 実際,  コンパクト集合上の連続関数は一様連続なので, 任意の$\varepsilon > 0$に対して, $\delta>0$を適当に取れば, 
\[
\forall x, y \in J,  |x-y|<\delta \Rightarrow |g(x)-g(y)|<\varepsilon
\]
が成り立つ. そこで, 一つの区間の幅が$\delta$に収まるような分割$J = \sum_{k=1}^{N}{J_k}$であって, 各区間$J_k$の端点が有理数になるものを取る. そして各$k$に対して$x_k \in J_k$を任意に取れば, 
\[
\forall x \in J,~~ \left\lvert g(x) - \sum_{k=1}^{N}{g(x_k) \chi_{J_k}(x)} \right\rvert < \varepsilon
\]
となる. したがって, 
\[
\begin{aligned}
\left\lvert F(g)\right\rvert 
&= \left \lvert \int_{X}{g(m^{\T} x)}{d\mu(x)} \right \rvert \\
&= \left \lvert \int_{X}{g(m^{\T} x)}{d\mu(x)} - \int_{X}{\sum_{k=1}^{N}{a_k \chi_{J_k}(m^{\T} x)}}{d\mu(x)} \right \rvert \\
&\leq \int_{X}{\left \lvert g(m^{\T} x) - \sum_{k=1}^{N}{a_k \chi_{J_k}(m^{\T} x)}\right \lvert}{d\left\rvert\mu\right\rvert(x)} \\
&\leq \int_{X}{\varepsilon}{d\lvert\mu\rvert(x)} = \varepsilon \left\lvert \mu \right\rvert(X)
\end{aligned} 
\]
となる. ただし$\left\lvert\mu\right\rvert = \mu^{+}+\mu^{-}$である. よって, $\left\lvert\mu\right\rvert(X)$が$\varepsilon$に依存しない定数(有限値)であることと$\varepsilon$の任意性から, $F(g) = 0$がわかる. 特に$\cos(x), \sin(x)$についてもこれが成立するので, 
\[
0 = \int_{X}{\cos(m^{\T} x)+i\sin(m^{\T} x)}{d\mu(x)} = \int_{X}{\exp(im^{\T} x)}{d\mu(x)}
\]
がわかる. よって, $m \in \mathbb{Z}$は任意だったのでStone-Weierstrassの定理(付録の系\ref{CorforChuiandLi})から$\mu = 0$となる. これが示したいことであった. 
\end{proof}

\renewcommand{\proofname}{\bf{証明}}

\subsection{SunとCheney の結果}
本節では ChuiとLi の結果を自然にノルム空間$X$上の連続関数空間$C(X)$に拡張した SunとCheney による次の結果を証明する. 
また, mean-periodic関数などニューラルネットワークの万能近似定理に関連する事柄をいくつか紹介する. 
\begin{thm}[Sun and Cheney 1992 \cite{SunAndCheney}]　\\
  $X \neq \{0\}$をノルム空間とし, $\F \subset X^*$は
  \[
  \{ f/\lVert f \rVert \mid f \in \F, f \neq 0 \}
  \]
  が$X^*$の単位球面$\{f \in X^* \mid \lVert f \rVert = 1\}$において作用素ノルム$\lVert \cdot \rVert$に関して稠密であるものとする. このとき, 連続なsigmoidal関数$\sigma:\R \rightarrow \R$に対して, 
  \[
  \mathcal{G} = \{\R \ni t \mapsto \sigma(kt+j) \in \R \mid k,j \in \mathbb{Z} \}
  \]
  とおくと, 
  \[
  \mathcal{G} \circ \F = \{X \ni x \mapsto \sigma(kf(x)+j) \in \R \mid k,j \in \mathbb{Z}, f \in \F \}
  \]
  の線形包は$C(X)$において広義一様収束の意味で稠密である. 
\end{thm}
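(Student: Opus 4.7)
証明の方針を述べる。まず定理\ref{densityRidgeFunc}により$X$上のリッジ関数全体の線形包は$C(X)$において広義一様収束の意味で稠密だから、任意のリッジ関数$g \circ \varphi$($g \in C(\R),\ \varphi \in X^*$)と任意の空でないコンパクト集合$K \subset X$に対して$g \circ \varphi$を$\mathrm{span}(\mathcal{G} \circ \F)$の元で$K$上一様近似できることを示せば十分である。戦略は2段階で、まず仮定「$\{f/\lVert f \rVert \mid f \in \F,\ f \neq 0\}$が$X^*$の単位球面上で稠密」を用いて$\varphi$を$\F$の元$f$(と適当なスケーリング)で置き換え、次に1次元の場合のChuiとLiの結果(定理\ref{UATByChuiandLi}の$r=1$の場合、すなわち$\mathrm{span}\mathcal{G} = \mathcal{N}_1(\sigma,\Z,\Z)$が$C(\R)$で広義一様収束の意味で稠密であること)を適用する。

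$\varphi \neq 0$の場合を考える。$\varepsilon > 0$を任意に取る。$g$は$\R$上連続ゆえ、$\varphi(K)$を含む十分大きな有界閉区間上で一様連続なので、ある$\delta \in (0, 1]$が存在して$|s-t| < \delta$ならば$|g(s) - g(t)| < \varepsilon/2$となるようにできる。$M := \sup_{x \in K} \lVert x \rVert_X < \infty$とおき、密度仮定から$f \in \F,\ f \neq 0$を
\[
\left\lVert \frac{f}{\lVert f \rVert} - \frac{\varphi}{\lVert \varphi \rVert} \right\rVert < \frac{\delta}{\lVert \varphi \rVert (M + 1)}
\]
をみたすように取る。$c := \lVert \varphi \rVert / \lVert f \rVert$とおけば、任意の$x \in K$に対して
\[
|cf(x) - \varphi(x)| \leq \lVert \varphi \rVert \left\lVert \frac{f}{\lVert f \rVert} - \frac{\varphi}{\lVert \varphi \rVert} \right\rVert \lVert x \rVert_X < \delta
\]
が成り立つ。よって$\tilde{g}(t) := g(ct) \in C(\R)$とおくと、任意の$x \in K$に対して$|\tilde{g}(f(x)) - g(\varphi(x))| < \varepsilon/2$が従う。

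次に$f(K)$は$\R$のコンパクト集合だから、1次元のChuiとLiの結果により、有限個の$k_\ell, j_\ell \in \Z$と$c_\ell \in \R$を
\[
\sup_{t \in f(K)} \left| \tilde{g}(t) - \sum_\ell c_\ell \sigma(k_\ell t + j_\ell) \right| < \varepsilon/2
\]
となるように取れる。これに$t = f(x)$を代入して得られる関数$x \mapsto \sum_\ell c_\ell \sigma(k_\ell f(x) + j_\ell)$は$\mathrm{span}(\mathcal{G} \circ \F)$に属し、三角不等式から$g \circ \varphi$を$K$上誤差$\varepsilon$未満で一様近似する。$\varphi = 0$の場合は$g \circ \varphi$が定数関数$g(0)$となるが、密度仮定より$\F$は必ず非零元$f$を含むので、$\R$上の定数関数$g(0)$を$f(K)$上で$\mathrm{span}\mathcal{G}$の元により近似し$f$と合成すれば同様に処理できる。

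主な難所は、$\varphi$を$\F$の元$f$に取り替える際のスケーリング($\lVert \varphi \rVert$と$\lVert f \rVert$の食い違い)を補助関数$\tilde{g}(t) = g(ct)$によりうまく吸収させ、リッジ関数段階での近似誤差と1次元有限和近似段階での近似誤差を整合的に結合する点にあると予想する。これさえ押さえれば、あとは既に証明された1次元のChuiとLiの定理への帰着のみで済む。
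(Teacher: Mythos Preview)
Your proposal is correct and follows essentially the same route as the paper: reduce to ridge functions via Theorem~\ref{densityRidgeFunc}, replace the functional $\varphi$ by a suitably rescaled $f \in \F$ using the unit-sphere density hypothesis (absorbing the norm mismatch into the outer function, your $\tilde g(t)=g(ct)$ corresponding to the paper's $h_i(\lambda_i t)$), and finish with the one-dimensional Chui--Li result. The only cosmetic difference is that the paper packages the first two steps as a separate general lemma (Theorem~\ref{MainResult2}, valid for any fundamental $\mathcal G\subset C(\R)$) and then invokes Chui--Li, whereas you unfold everything into a single argument for this specific $\mathcal G$; one minor point to tighten is that your uniform-continuity $\delta$ should be chosen on a fixed interval large enough to contain both $\varphi(K)$ and its $1$-neighborhood (so that $cf(x)$ also lies in it), which your constraint $\delta\le 1$ makes work but should be stated explicitly.
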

この定理を示すためにひとつ定理を示す.  なお本節では記述を簡潔にするために次の言葉を用いる. 
\begin{defn}
  $X$をノルム空間とするとき, 部分集合$S \subset C(X)$が基礎的であるとは, $S$の線形包が広義一様収束の意味で$C(X)$において稠密であることをいう. 
\end{defn}
\begin{thm}\label{MainResult2}
  $X \neq \{0\}$をノルム空間とし, $\mathcal{G} \subset C(\R)$は基礎的であるとする.  このとき, $\F \subset X^*$に対して, 
  \[
  \{ f/\lVert f \rVert \mid f \in \F, f \neq 0 \}
  \]
  が$X^*$の単位球面$\{f \in X^* \mid \lVert f \rVert = 1\}$において作用素ノルム$\lVert \cdot \rVert$に関して稠密であるならば, $\mathcal{G} \circ \F = \{ g \circ f \mid g \in \mathcal{G}, f \in \F \}$は$C(X)$において基礎的である. 
\end{thm}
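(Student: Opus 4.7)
証明の方針. 定理\ref{densityRidgeFunc}より$X$上のリッジ関数全体の線形包は$C(X)$において広義一様稠密であるから, 任意の$h \in C(X)$, 空でないコンパクト集合$K \subset X$, $\varepsilon > 0$に対し$c_i \in \R, \tilde g_i \in C(\R), \varphi_i \in X^*$ ($i=1,\ldots,n$)が存在して
\[
\sup_{x \in K}\Bigl|h(x) - \sum_{i=1}^n c_i \tilde g_i(\varphi_i(x))\Bigr| < \varepsilon/2
\]
となる. したがって各$\tilde g_i \circ \varphi_i$を$K$上$\mathrm{span}(\mathcal{G} \circ \F)$の元により十分小さな誤差で一様近似することに帰着できる.

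$\varphi_i \neq 0$の場合が本質的である. $\hat\varphi_i := \varphi_i/\lVert\varphi_i\rVert$は$X^*$の単位球面上にあるから, 仮定の稠密性より任意に小さい$\delta > 0$に対して$f \in \F$ ($f \neq 0$)で$\lVert \hat\varphi_i - f/\lVert f \rVert \rVert < \delta$となるものが取れる. そこで連続関数$h_i : \R \to \R$を
\[
h_i(t) := \tilde g_i\Bigl(\frac{\lVert \varphi_i \rVert}{\lVert f \rVert}\, t\Bigr)
\]
と定めれば, $h_i(f(x)) = \tilde g_i\bigl(\lVert \varphi_i \rVert (f/\lVert f \rVert)(x)\bigr)$となる. $R := \sup_{x \in K}\lVert x \rVert_X < \infty$とおけば任意の$x \in K$で$|\hat\varphi_i(x) - (f/\lVert f \rVert)(x)| \leq \delta R$なので, $\tilde g_i$の十分大きな有界閉区間上での一様連続性により, $\delta$を十分小さく取れば$\sup_{x \in K}|\tilde g_i(\varphi_i(x)) - h_i(f(x))|$を所望の精度に抑えられる. 続いて$f(K) \subset \R$はコンパクトで$h_i \in C(\R)$なので, $\mathcal{G}$の基礎性から$h_i$を$f(K)$上で$\sum_j b_j g_j \in \mathrm{span}\mathcal{G}$により一様近似できる. このとき$\sum_j b_j (g_j \circ f)(x) \in \mathrm{span}(\mathcal{G} \circ \F)$が, 三角不等式により$\tilde g_i \circ \varphi_i$の$K$上一様近似を与える.

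$\varphi_i = 0$の場合は$\tilde g_i \circ \varphi_i$が定数関数$\tilde g_i(0)$となる. $X \neq \{0\}$とHahn-Banachの拡張定理より$X^* \neq \{0\}$なので, 稠密性の仮定から$\F$は$0$でない元$f_0$をもつ. 基礎性より定数関数$\tilde g_i(0)$をコンパクト集合$f_0(K) \subset \R$上$\sum_j b_j g_j \in \mathrm{span}\mathcal{G}$で一様近似し, $\sum_j b_j (g_j \circ f_0)$として$K$上に持ち上げれば良い. 以上を$i$について集約し三角不等式を用いることで, $\mathrm{span}(\mathcal{G} \circ \F)$による$h$の$\varepsilon$近似が得られる.

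論証の核心は第2段落における$h_i$の定義にある. $\F$が$X^*$のスカラー倍で閉じていないため, 目標リッジ関数に現れる$\lVert \varphi_i \rVert$と近似に用いる$f$のノルム$\lVert f \rVert$とのずれを, 外側の連続関数$\tilde g_i$の引数の再スケールとしてあらかじめ「吸収」させておく点が鍵である. この工夫により$f$のスケーリングに依らず新たな連続関数$h_i$のみを$\mathcal{G}$で近似すれば足りる. 最も注意を要するのは$\delta$の選択の$\tilde g_i$および$\lVert \varphi_i \rVert$への依存性の管理だが, $\tilde g_i$の有界閉区間上の一様連続性を用いた定型的な議論で処理できる.
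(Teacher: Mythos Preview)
提案は正しく, 論文の証明と本質的に同じ方針である. 論文では$\psi_i$を$\lVert\psi_i\rVert = 1$と正規化してから$h_i(\lambda_i t)$ ($\lambda_i = 1/\lVert f_i\rVert$)を$\mathcal{G}$で近似するのに対し, あなたは$\varphi_i$を正規化せずに$h_i(t) = \tilde g_i(\lVert\varphi_i\rVert t/\lVert f\rVert)$と定義して同じ再スケールを実現しているが, これは記法の違いに過ぎず, 「$\F$がスカラー倍で閉じていないことによるノルムのずれを外側の連続関数に吸収させる」という鍵となる着想は完全に一致している.
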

これが示されれば, ChuiとLiの結果より$\mathcal{G}$は$C(\R)$において基礎的であるので本節のメインの定理の成立は明らかである. 
\begin{proof}
  $h \in C(X)$とし, 任意に空でないコンパクト集合$K \subset X$と$\varepsilon > 0$をとる. すると, 定理\ref{densityRidgeFunc}より,  $h_i \in C(\R)$と$\psi_i \in X^*$が存在して, 
  \[
  \sup_{x \in K} \left|h(x) - \sum_{i=0}^n h_i(\psi_i(x))\right| < \varepsilon/3
  \]
  となる. $h_i$たちを定数倍することで$\lVert \psi_i \rVert = 1$と仮定してよい($0$がある場合は$\R \ni t \mapsto h_i(0) \in \R$という定数関数を導入すればよい). ここで, $M = \sup_{x \in K} \lVert x \rVert$とおき, $\delta > 0$を次をみたすように取る($h_i$は連続なので取れる). 
  \[
  |h_i(s) - h_i(t)| < \varepsilon/3n ~~(|s|\leq M , |t| \leq M, |s-t|< M \delta, 1 \leq i \leq n).
  \]
  そして, $f_i \in \F$を$\lVert f_i / \lVert f_i \rVert - \psi_i \rVert < \delta$をみたすように選ぶ. 
  また, 各$i$について$\lambda_i = 1/ \lVert f_i \rVert$とおき, $a_{i,j} \in \R$と$g_{i,j} \in \mathcal{G}$を
  \[
  \sup_{|t| \leq M / \lambda_i} \left|h_i(\lambda_i t) - \sum_{j=0}^{N_i} a_{i,j} g_{i,j}(t)\right| < \varepsilon/3n 
  \]
  となるようにとる. すると, 任意の$x \in K$に対して, 
  \[
  |f_i(x)| \leq \lVert f_i \rVert \lVert x \rVert \leq M \lVert f_i \rVert = M/\lambda_i
  \]
  であり, また, $|\psi_i(x)| \leq \lVert \psi_i \rVert \lVert x \rVert \leq M$かつ
  \[
  |\lambda_i f_i(x) - \psi_i(x)| \leq \lVert f_i / \lVert f_i \rVert - \psi_i \rVert \lVert x \rVert < M \delta
  \]
  より$| h_i(\lambda_i f_i(x)) - h_i(\psi_i(x)) | < \varepsilon/3n$となる. よって, 三角不等式より, 
  \[
  \begin{aligned}
  &\left|h(x) - \sum_{i=0}^n \sum_{j=0}^{N_i} a_{i,j} g_{i,j}(f_i(x)) \right| \\
  &\leq \left|h(x) - \sum_{i=0}^n h_i(\psi_i(x))\right| + \left|\sum_{i=0}^n h_i(\psi_i(x)) - \sum_{i=0}^n h_i(\lambda_i f_i(x)\right| \\
  &~~~~ + \left| \sum_{i=0}^n h_i(\lambda_i f_i(x)) - \sum_{i=0}^n \sum_{j=0}^{N_i} a_{i,j} g_{i,j}(f_i(x)) \right| \\
  &\leq \varepsilon/3 + \varepsilon/3 + \varepsilon/3 = \varepsilon.
  \end{aligned}
  \]
\end{proof}
\begin{cor}
  上の定理は$\F \subset X^*$が単位球面において作用素ノルムに関して稠密であるという仮定のもとで成り立つ. 
\end{cor}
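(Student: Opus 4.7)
方針としては, 本系は定理\ref{MainResult2}への直接的な帰着によって示せるので, その道筋に沿って議論を進める. 具体的には, $\F \subset X^*$が単位球面$S := \{f \in X^* \mid \lVert f \rVert = 1\}$において作用素ノルムに関して稠密であるという本系の仮定から, 定理\ref{MainResult2}の仮定「$\{f/\lVert f \rVert \mid f \in \F, f \neq 0\}$もまた$S$において稠密である」を導けば十分である. これが示せれば, あとは定理\ref{MainResult2}を適用することで$\mathcal{G} \circ \F$が$C(X)$において基礎的であるとの結論が直ちに得られる.

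帰着の核となる手順は次のとおりである. まず任意に$g \in S$と十分小さな$\varepsilon > 0$をとり, 本系の仮定から$f \in \F$を$\lVert f - g \rVert$が十分小さくなるように選ぶ. 次に三角不等式により$\lVert f \rVert$が$1$に近いことを示し, 特に$f \neq 0$を確保する. 最後に$\lVert f/\lVert f \rVert - g \rVert \leq \lvert 1 - \lVert f \rVert \rvert + \lVert f - g \rVert$という標準的な評価を用いて$f/\lVert f \rVert$が$g$に近いことを結論する. $\varepsilon$の具体的な設定(例えば$\lVert f - g \rVert < \varepsilon/2$と取ればよい)は単純な算術調整にすぎない.

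本系の主張は定理\ref{MainResult2}の単純な帰着であり, 本質的な難所は存在しない. 唯一の技術的注意点は, $\F$が必ずしも単位球面上に含まれると仮定されていないため正規化の操作を明示的に経由する必要があること, および$0 \in \F$となる可能性に対して$\lVert f \rVert > 0$となる$f \in \F$が適切に選べることを確認する必要があることの二点であるが, いずれも上記の方針の範囲内で容易に処理される.
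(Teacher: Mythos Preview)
提案された証明は正しい. 論文はこの系に明示的な証明を与えていないが, あなたの議論は定理\ref{MainResult2}への帰着として自然かつ想定どおりのものであり, 正規化$f/\lVert f\rVert$が$g$に近づくことの評価$\lVert f/\lVert f\rVert - g\rVert \leq |1-\lVert f\rVert| + \lVert f-g\rVert$も正確である.
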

\begin{cor}
  $X \neq \{0\}$をノルム空間とし, $\F \subset X^*$を上の定理と同様のものとする. また, $g \in C(\R)$とし, $A \subset \R$とする. このとき, 
  \[
  \mathcal{G} = \{ g_{a}: \R \ni t \mapsto g(t + a) \in \R \mid a \in A\}
  \]
  が$C(\R)$において基礎的であるならば, $\mathcal{G} \circ \F = \{ g_a \circ f \mid a \in A, f \in \F\}$は$C(X)$において基礎的である. 
\end{cor}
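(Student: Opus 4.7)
方針は次の通りである. 本系は定理\ref{MainResult2}の直接的な応用として証明できるので, 計画は極めて単純である. まず, $g \in C(\R)$という仮定から, 任意の$a \in A$に対して$g_a(t) = g(t+a)$が連続関数$g$と連続関数$t \mapsto t+a$の合成であることに注意し, $g_a \in C(\R)$, したがって$\mathcal{G} \subset C(\R)$であることを確認する. 次に, 仮定よりこの$\mathcal{G}$は$C(\R)$において基礎的であり, また$\F \subset X^*$については, 本系の前提条件が「上の定理と同様のもの」として定理\ref{MainResult2}の稠密性の仮定($\{f/\lVert f \rVert \mid f \in \F, f \neq 0\}$が$X^*$の単位球面において作用素ノルムに関して稠密である)をそのまま引き継いでいることを確認する.

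これら二つの事実を揃えたところで定理\ref{MainResult2}を$\mathcal{G}$に直接適用すれば, $\mathcal{G} \circ \F = \{g_a \circ f \mid a \in A, f \in \F\}$が$C(X)$において基礎的であるという結論が直ちに得られる. 主要な困難は特に存在せず, 本質的な議論(リッジ関数の稠密性である定理\ref{densityRidgeFunc}を経由し, $X^*$の単位球面上での稠密性を用いて各線形汎関数$\psi_i$を$\F$の元$f_i$のスカラー倍で近似する部分)はすべて定理\ref{MainResult2}の証明の中で済まされている. 本系は, $\mathcal{G}$として連続関数$g$の平行移動族という特殊な形を選んだ場合の帰結を明示的に述べたものに過ぎない.
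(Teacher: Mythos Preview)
Your proposal is correct and matches the paper's approach: the paper gives no explicit proof for this corollary, treating it as an immediate consequence of 定理\ref{MainResult2}, which is exactly what you do by verifying $\mathcal{G} \subset C(\R)$ and applying the theorem directly.
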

この系に関連してmean-periodic関数について紹介する. 
\begin{defn}
   関数$f \in C(\R^r)$が mean-periodic であるとは, 
   \[
   \mathcal{N}_r(f,\{1\},\R) = \mathrm{span}\{x \mapsto f(x+a) \mid a \in \R^r \}
   \]
   が$C(\R^r)$において広義一様収束の意味で稠密でないことをいう. 
\end{defn}
\begin{thm}[Schwartz 1947 \cite{Schwartz1947} p.907]　\\
  $\Psi \in C(\R)$とする. このとき, ある$1\leq p < \infty$について$\Psi \in L^p(\R)$であるか, または$\Psi$は有界で非定数かつ$x \rightarrow \infty$か$x \rightarrow -\infty$としたときに極限値を持つならば, $\Psi$は mean-periodicでない. 
\end{thm}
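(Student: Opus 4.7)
本定理の証明は Schwartz の平均周期関数論に基づくが, ここでは Fourier 解析と Paley-Wiener の定理を用いた対偶論法の方針を提案する. まず $\Psi$ が mean-periodic であると仮定すると, あるコンパクト集合 $K \subset \R$ 上で $\{ \Psi(\cdot + a)|_K \mid a \in \R \}$ の線形包が $C(K)$ において稠密でないので, Hahn-Banach の拡張定理と Riesz-Markov-角谷の表現定理により, $K$ 上の非零な正則符号付き有限 Borel 測度 $\mu$ で任意の $a \in \R$ に対して $\int_K \Psi(x+a) d\mu(x) = 0$ をみたすものが存在する. $\mu$ を $\R$ 上のコンパクト台付き測度と見ると, これは畳み込みの言葉で $\Psi * \check{\mu} \equiv 0$ ($\check\mu(A) := \mu(-A)$) と同値である. 目標は, $\Psi$ が定理の仮定をみたす場合にこの等式と $\mu \neq 0$ が両立しないことを示すことである.

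Fourier 変換を緩増加超関数の意味で取るのが鍵となる. コンパクト台を持つ有限測度の Fourier 変換は Paley-Wiener の定理により指数型の整関数であり, $\mu \neq 0$ より $\hat\mu \not\equiv 0$ だから, その $\R$ 上の零点集合 $Z$ は離散集合である. 一方 $L^p$ 関数 (前者の場合) も有界関数 (後者の場合) も緩増加超関数なので $\hat\Psi \in \mathcal{S}'(\R)$ が定まる. 方程式 $\hat\Psi \cdot \hat{\check\mu} = 0$ から $\mathrm{supp}(\hat\Psi) \subset Z$ となる. 離散集合に台を持つ緩増加超関数は各点で Dirac デルタとその導関数の有限線形結合として表されるという構造定理を用いると, $\hat\Psi = \sum_n P_n(\partial_\xi) \delta_{\xi_n}$ ($P_n$ は多項式, $\xi_n \in Z$) と書ける. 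Fourier 逆変換により, $\Psi$ は準多項式 (quasi-polynomial) の和 $\Psi(x) = \sum_n Q_n(x) e^{i\xi_n x}$ の形に表される.

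あとは場合分けで矛盾を導く. 前者 ($\Psi \in L^p(\R)$ かつ $\Psi \not\equiv 0$) の場合, $Q_n$ に次数 $\geq 1$ のものがあれば $\Psi$ は多項式的に発散して $L^p$ に属さないため $Q_n$ はすべて定数となり, $\Psi$ は概周期関数となる. しかし $L^p$ に属する非零な概周期関数は存在しないため $\Psi \equiv 0$ となり矛盾. 後者 ($\Psi$ が有界非定数で片側の無限で極限を持つ) の場合, 有界性から $Q_n$ は定数となり $\Psi(x) = \sum_n c_n e^{i\xi_n x}$ (概周期関数) となるが, 非定数な概周期関数が $x \to + \infty$ あるいは $x \to -\infty$ で極限値を持つことはないので矛盾. 主な難所は, (i) 離散集合に台を持つ緩増加超関数の構造定理を精密に適用する点, および (ii)「$L^p$ に属する非零な概周期関数が存在しない」ことを, 例えば平均値 $\lim_{T \to \infty} \frac{1}{2T} \int_{-T}^{T} |\Psi|^p dx$ の考察や Besicovitch の概周期関数論を通じて厳密に論証する点である.
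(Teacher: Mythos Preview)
The paper does not actually prove this theorem; it is simply quoted from Schwartz's 1947 paper with a page reference and then used as a black box. So there is no ``paper's own proof'' to compare against, and your proposal is really an attempt to reconstruct Schwartz's argument.

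Your Fourier-analytic setup is the right one and matches the spirit of Schwartz's approach: from non-density one extracts via Hahn--Banach and Riesz a nonzero compactly supported measure $\mu$ with $\Psi * \check{\mu} \equiv 0$, and by Paley--Wiener $\hat{\mu}$ is entire of exponential type with a discrete real zero set $Z$, forcing $\mathrm{supp}(\hat{\Psi}) \subset Z$. The difficulty you flag as (i) and (ii), however, is not where the real gap lies.

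The genuine gap is the passage from ``$\hat{\Psi}$ is a tempered distribution supported on the discrete set $Z$'' to ``$\Psi$ is (Bohr) almost periodic.'' The zero set $Z$ is typically \emph{infinite}: for instance $\mu = \delta_1 - \delta_{-1}$ gives $\hat{\mu}(\xi) = 2i\sin\xi$, so $Z = \pi\mathbb{Z}$. Locally at each $\xi_n \in Z$ one does get a finite combination of $\delta_{\xi_n}$ and its derivatives, and the formal inverse transform is $\sum_n Q_n(x)e^{i\xi_n x}$, but this is in general an infinite series whose convergence (even in $\mathcal{S}'$) and whose identification with a uniformly almost periodic function are far from automatic. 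Making this step rigorous is precisely the content of Schwartz's spectral synthesis theorem for mean-periodic functions, which occupies the bulk of his 1947 paper. Your sketch assumes this hardest step rather than proving it. Once one \emph{does} know $\Psi$ is a finite exponential-polynomial or Bohr almost periodic, your concluding case analysis is correct (a nonzero Bohr almost periodic function cannot lie in $L^p(\mathbb{R})$, and a nonconstant one cannot have a one-sided limit), but getting there is the whole battle.
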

これより上の定理の仮定をみたす$\Psi$については, 任意の実数$\lambda \neq 0$について, $\N(\Psi,\{\lambda\},\R)$は$C(\R)$において広義一様収束の意味で稠密である. さらに, LinとPinkusの結果(定理\ref{LinAndPinkus})を用いると次の命題の成立もわかる. 
\begin{prop}
  $\Psi$は上の定理の仮定をみたすものとする. このとき, $A \subset \R^r$上で$0$を取る非自明な$r$変数斉次多項式が存在しないならば, 
  \[
  \mathcal{N}_{r}(\Psi,A,\R) = \mathrm{span}\{\R^r \ni x \mapsto \Psi(\ip<{x,v}> + b) \mid v \in A, b \in \R\}
  \]
  は$C(\R^r)$において広義一様収束の意味で稠密である. 
\end{prop}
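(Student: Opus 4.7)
方針としては, すでに確立された二つの結果を合成するだけでよい. まず第一歩として, 直前に掲げたSchwartzの定理を本命題の$\Psi$に適用すれば, $\Psi$がmean-periodicでないこと, すなわち
\[
\mathcal{N}_1(\Psi,\{1\},\R) = \mathrm{span}\{x \mapsto \Psi(x+a) \mid a \in \R\}
\]
が$C(\R)$において広義一様収束の意味で稠密であることが得られる. 本命題が$\Psi$に課す仮定はまさにSchwartzの定理の仮定と一致するため, この$1$次元稠密性はただちに従う.

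次の第二歩として, 命題\ref{MultiVariateDensity}を$W = \{1\}$, $B = \R$, $\mathcal{A} = A$として適用する. このとき$W \cdot A = \{1 \cdot v \mid v \in A\} = A$であり, さらに第一歩で$\mathcal{N}_1(\Psi,\{1\},\R)$の$C(\R)$における広義一様稠密性はすでに確認されているので, 命題\ref{MultiVariateDensity}の$1$次元稠密性に関する仮定はみたされる. また, $A \subset \R^r$上で$0$を取る非自明な$r$変数斉次多項式が存在しないという命題\ref{MultiVariateDensity}の残りの仮定は, 本命題の仮定そのものである. よって命題\ref{MultiVariateDensity}の結論として, 所望の
\[
\mathcal{N}_r(\Psi,A,\R) = \mathrm{span}\{\R^r \ni x \mapsto \Psi(\ip<{x,v}>+b) \mid v \in A, b \in \R\}
\]
の$C(\R^r)$における広義一様稠密性が得られる.

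本命題の証明における主たる障害は特にない. 新しい計算やアイデアは必要なく, Schwartzの定理で与えられる$1$次元の稠密性を, 命題\ref{MultiVariateDensity}の枠組みによって多次元へ持ち上げるという, 二つの既出結果の機械的な合成のみで証明が完結するからである.
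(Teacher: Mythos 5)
あなたの証明は正しく, 本質的に論文と同じ方針である. 論文ではSchwartzの定理が与える$1$次元の稠密性を, 命題\ref{MultiVariateDensity}の代わりに「基礎的な集合の合成に関する補題」($E$が$C(\R)$で基礎的かつ$C(\R)\circ\F$が基礎的なら$\{g \circ f \mid g \in E, f \in \F\}$も基礎的)を経由して多次元へ持ち上げているが, どちらもLinとPinkusの結果に基づく同一の仕組みであり, あなたのように命題\ref{MultiVariateDensity}を$W=\{1\},\ B=\R,\ \mathcal{A}=A$として直接適用する方がむしろ簡潔である.
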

この命題を二段階に分けて証明しよう. 
\begin{lem}
  $X$をノルム空間とし, $E \subset C(\R)$は$C(\R)$において基礎的であるとする. また, $\F \subset X^*$は$C(\R) \circ \F$が$C(X)$において基礎的なものとする. このとき, $\{g \circ f \mid g \in E, f \in \F\}$は$C(X)$において基礎的である. 
\end{lem}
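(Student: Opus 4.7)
計画としては, 本命題は二段階の近似を合成することで証明する. まず第一段階で $C(\R) \circ \F$ が $C(X)$ において基礎的であることを使って外側の近似を行い, 第二段階でその内側に現れる連続関数 $g_i \in C(\R)$ を $E$ の線形結合で置き換え, 最後に三角不等式で誤差を評価するという流れである. この議論の雛形はすでに命題\ref{DensityDim1Suff}や命題\ref{MultiVariateDensity}の証明で現れているので, 本命題はそのノルム空間上への自然な一般化とみなせる.

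より具体的には, $h \in C(X)$, 空でないコンパクト集合 $K \subset X$, および $\varepsilon > 0$ を任意に固定する. 第一段階として, $C(\R) \circ \F$ の基礎性より $c_i \in \R, g_i \in C(\R), f_i \in \F ~(i = 1, \ldots, n)$ を
\[
\sup_{x \in K}\left| h(x) - \sum_{i=1}^n c_i g_i(f_i(x)) \right| < \varepsilon/2
\]
となるように取る. 各 $f_i$ は有界線形ゆえ連続であり, $K$ のコンパクト性から $f_i(K)$ もコンパクトなので, ある $M_i > 0$ により $f_i(K) \subset [-M_i, M_i]$ となる. 続いて第二段階として $E$ の $C(\R)$ における基礎性を使う. $C_0 := 1 + \sum_{i=1}^n |c_i|$ とおき, 各 $i$ について $a_{i,j} \in \R$ と $e_{i,j} \in E ~(j = 1, \ldots, N_i)$ を
\[
\sup_{t \in [-M_i, M_i]}\left| g_i(t) - \sum_{j=1}^{N_i} a_{i,j} e_{i,j}(t) \right| < \varepsilon/(2 C_0)
\]
をみたすように選ぶ. $x \in K$ ならば $f_i(x) \in [-M_i, M_i]$ であるから, 三角不等式と上記の評価を合わせれば
\[
\sup_{x \in K}\left| h(x) - \sum_{i=1}^n \sum_{j=1}^{N_i} c_i a_{i,j} e_{i,j}(f_i(x)) \right| < \varepsilon
\]
を得る. 右辺の二重和は $\{g \circ f \mid g \in E, f \in \F\}$ の線形包に属するので, これで基礎性が示される.

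本質的な困難はほとんどない. 唯一留意すべき点は, 第二段階における内側の近似精度を外側の係数 $c_i$ の大きさに応じて十分小さく取る必要があることのみで, これは上の $C_0$ の導入により直ちに解消される. 本命題は「近似能力が合成によって伝搬する」という事実の簡潔で便利な定式化と理解でき, これと ChuiとLi の結果や上の定理\ref{MainResult2}, あるいは Schwartz の mean-periodic に関する定理を組み合わせることで直前の命題の証明に直結する見通しである.
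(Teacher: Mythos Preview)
Your proof is correct and follows essentially the same two-stage approximation argument as the paper: first approximate $h$ on $K$ by a linear combination from $C(\R)\circ\F$, then use compactness of each $f_i(K)$ to replace each $g_i$ by a linear combination from $E$, and combine via the triangle inequality. The only cosmetic differences are that the paper approximates $\beta_i g_i$ directly with error $\varepsilon/(2q)$ rather than introducing your constant $C_0$, and works on $f_i(K)$ itself instead of an enclosing interval $[-M_i,M_i]$, but these are immaterial.
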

\begin{proof}
  $K \subset X$を空でないコンパクト集合とし, 任意に$\varphi \in C(X)$と$\varepsilon > 0$を取る. 
  仮定より, $\beta_i \in \R, g_i \in C(\R), f_i \in \F ~(i=1,\ldots,q)$が存在して, 
  \[
  \sup_{x \in K} |\sum_{i=1}^q \beta_i g_i(f_i(x)) - \varphi(x)| < \varepsilon/2
  \]
  となる. 各$i$について$f_i$は連続であるから$f_i(K) \subset \R$はコンパクトである. そこで, 仮定より各$i$について$\alpha_{i,j} \in \R, g_{i,j} \in E ~(j=1,\ldots,m_i)$が存在して
  \[
  \sup_{y \in f_i(K)} |\sum_{j=1}^{m_i} \alpha_{i,j} g_{i,j}(y) - \beta_i g_i(y)| < \varepsilon/(2q)
  \]
  となる. すると, 任意の$x \in K$に対して, 
  \[
  \begin{aligned}
  &\left| \sum_{i=1}^q \beta_i g_i(f_i(x)) - \sum_{i=1}^q \sum_{j=1}^{m_i} \alpha_{i,j}  g_{i,j}(f_i(x)) \right| \\
  &\leq \sum_{i=1}^q \left| \beta_i g_i(f_i(x)) - \sum_{j=1}^{m_i} \alpha_{i,j}  g_{i,j}(f_i(x)) \right| \\
  &\leq \sum_{i=1}^q \sup_{y \in f_i(K)} \left|\beta_i g_i(y) - \sum_{j=1}^{m_i} \alpha_{i,j} g_{i,j}(y)\right| \\
  &< \varepsilon/2
  \end{aligned}
  \]
  であるから, $\sum_{i=1}^q \sum_{j=1}^{m_i} \alpha_{i,j}  g_{i,j} \circ f_i$が求めるものである. 
\end{proof}
\begin{prop}
  $\Psi \in C(\R)$とし, $B \subset \R$とする. また, 集合
  \[
  S := \{ \R \ni t \mapsto \Psi(t+b) \in \R \mid b \in B \}
  \]
  は$C(\R)$において基礎的であるとする. このとき, $A \subset \R^r$上で$0$を取る非自明な$r$変数斉次多項式が存在しないならば, 
  \[
  \mathcal{N}_{r}(\Psi,A,B) = \mathrm{span}\{\R^r \ni x \mapsto \Psi(\ip<{x,v}> + b) \mid v \in A, b \in B\}
  \]
  は$C(\R^r)$において広義一様収束の意味で稠密である. 
\end{prop}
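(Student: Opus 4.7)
方針としては, 直前の補題と定理\ref{LinAndPinkus}(LinとPinkusの結果)を組み合わせる. まず $A \subset \R^r$ を $(\R^r)^*$ の部分集合と同一視する: 各 $v \in A$ を線形汎関数 $\varphi_v : \R^r \ni x \mapsto \ip<{x,v}> \in \R$ とみなし, $\F := \{ \varphi_v \mid v \in A \} \subset (\R^r)^*$ とおく予定である. この同定のもとで, $A$ の斉次多項式に関する仮定は定理\ref{LinAndPinkus}における $\mathcal{A}$ の条件とそのまま一致するので, 同定理により $\mathrm{span}R(A) = \mathrm{span}(C(\R) \circ \F)$ が $C(\R^r)$ において広義一様収束の意味で稠密, すなわち $C(\R) \circ \F$ が $C(\R^r)$ において基礎的であることを結論する.

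次に, 仮定より $S$ は $C(\R)$ において基礎的であるから, 直前の補題を $X = \R^r$, $E = S$, $\F$ として上で定めたものに対して適用する. するとその結論として
\[
\{g \circ f \mid g \in S, f \in \F\} = \{\R^r \ni x \mapsto \Psi(\ip<{x,v}>+b) \in \R \mid v \in A, b \in B\}
\]
が $C(\R^r)$ において基礎的であること, つまりこの集合の線形包が $C(\R^r)$ において広義一様収束の意味で稠密であることが直ちに得られる. これは $\mathcal{N}_r(\Psi,A,B)$ の生成元の集合そのものであるから, 示すべき主張が従う.

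本質的な障害はとくに生じないと予想される: 定理\ref{LinAndPinkus}はリッジ関数に連続関数を被せたものの稠密性を $A$ の斉次多項式条件から引き出し, 直前の補題はその連続関数を基礎的集合 $S$ の元で置き換えても基礎性が保たれることを保証するので, この二つを連結するだけで証明が完結する. 強いて注意すべき点を挙げるとすれば, $(\R^r)^*$ を $\R^r$ と内積を介して同一視する部分と, 「基礎的」の定義が「$C(X)$ において広義一様稠密」と同値であることを確認する部分であるが, いずれも定義からただちに従うのみである.
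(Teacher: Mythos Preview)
Your proposal is correct and follows essentially the same approach as the paper: first use Lin--Pinkus (定理\ref{LinAndPinkus}) to conclude that $C(\R)\circ\F$ is fundamental for $\F=\{x\mapsto\ip<{x,v}>\mid v\in A\}$, and then apply the preceding lemma with $E=S$ to obtain that $\{g\circ f\mid g\in S,\,f\in\F\}$ is fundamental, which is exactly the generating set of $\mathcal{N}_r(\Psi,A,B)$. The paper's own proof is a one-line appeal to precisely these two ingredients.
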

\begin{proof}
  仮定とLinとPinkus の結果より$C(\R) \circ \{x \mapsto \ip<{x,v}> \mid v \in A\}$が基礎的であるのですぐ上の補題から明らかである. 
\end{proof}

さて, 本節の冒頭で「ChuiとLi の結果を自然にノルム空間$X$上の連続関数空間$C(X)$に拡張した」と述べた. きちんと拡張になっていることを以下で確認しておく. 
つまり, $X=\R^s$とおくとき, $\F = \{x \mapsto \ip<{x,v}> \mid v \in \Z^s \}$とおくと, 
  \[
  \{ f/\lVert f \rVert \mid f \in \F, f \neq 0 \}
  \]
  が$X^*$の単位球面$\{f \in X^* \mid \lVert f \rVert = 1\}$において作用素ノルム$\lVert \cdot \rVert$に関して稠密であることを確かめておく. 
\begin{lem}
  $X,Y$を位相空間とし, $f:X \rightarrow Y$を全射連続写像とする. このとき, $A \subset X$が稠密であるならば, $f(A) \subset Y$は稠密である. 
\end{lem}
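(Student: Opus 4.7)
証明の方針としては, 位相空間における稠密性を点列ではなく「任意の空でない開集合と交わる」という開集合を用いた特徴づけで扱うのが最も簡潔であろう. つまり, 目標は「$Y$の任意の空でない開集合$V$に対して$V \cap f(A) \neq \emptyset$となる」ことを示すことに帰着する. これは点列の議論(連続性の点列による特徴づけ)と異なり, $X,Y$に第一可算性などの追加の条件を要求しないのが利点である.

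手順としては次の通り進めればよい. まず$Y$の空でない開集合$V$を任意にとる. $f$の連続性より$f^{-1}(V)$は$X$の開集合であり, さらに$f$が全射であることから$V \neq \emptyset$に対して$f^{-1}(V) \neq \emptyset$となる. ここで$A$が$X$で稠密であるという仮定を開集合による言い換えで適用すると, $A \cap f^{-1}(V) \neq \emptyset$が得られる. この共通部分から元$a$を一つ取れば, $a \in A$かつ$f(a) \in V$となるから$f(a) \in f(A) \cap V$が従い, $V$の任意性から$f(A)$の$Y$における稠密性が示される.

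この議論では連続性, 全射性, 稠密性という三つの仮定がそれぞれちょうど一回ずつ自然に用いられる. 本質的な難所は存在せず, ほぼ定義の適用だけで証明が完結する補題である. この補題はおそらく, 本節の冒頭で論じたノルム空間$\R^s$上の線形汎関数$\F = \{x \mapsto \ip<{x,v}> \mid v \in \Z^s\}$が双対空間の単位球面で稠密であることを確かめる際に, 既知の$\Q^s$の$\R^s$における稠密性を単位球面への自然な射影(連続全射)によって移す形で用いられるものと予想される.
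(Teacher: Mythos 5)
あなたの証明は正しく, 本質的に論文の証明と同じ議論です. 論文は「各点$y \in Y$とその近傍$V$」という定式化で, あなたは「空でない開集合$V$」という定式化で稠密性を扱っていますが, 連続性で$f^{-1}(V)$が開, 全射性で空でない, $A$の稠密性で$A \cap f^{-1}(V) \neq \emptyset$, という骨格は完全に一致しています.
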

\begin{proof}
  $y \in Y$をとり, $y$の近傍$V$を任意に取る. $f$は全射なので$x \in X$で$y = f(x)$をみたすものが取れる. すると, $f^{-1}(V)$は$x$の近傍となる. そこで, 仮定より$a \in A$で$a \in f^{-1}(V)$なるものが取れる. したがって$f(a) \in V$であるから, $f(A) \cap V \neq \emptyset$である. これは$f(A)$が$Y$で稠密であることを意味する. 
\end{proof}
\begin{prop}
  $X=\R^s$とおくとき, $\F = \{x \mapsto \ip<{x,v}> \mid v \in \Z^s \}$とおくと, 
  \[
  \{ f/\lVert f \rVert \mid f \in \F, f \neq 0 \}
  \]
  は$X^*$の単位球面$\{f \in X^* \mid \lVert f \rVert = 1\}$において作用素ノルム$\lVert \cdot \rVert$に関して稠密である. 
\end{prop}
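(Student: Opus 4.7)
本命題の証明の方針は以下の通りである. まず $(\R^s)^*$ を $\R^s$ と等長同型に同一視することから始める. すなわち, $v \in \R^s$ に対して $f_v(x) := \ip<{x,v}>$ と定義すると, Cauchy--Schwarzの不等式より $\lVert f_v \rVert \leq \lVert v \rVert_2$ が成り立ち, $x = v/\lVert v \rVert_2$ を代入することで逆向きの不等式も得られるので $\lVert f_v \rVert = \lVert v \rVert_2$ が従う. この同一視のもとで $\F$ は $\Z^s$ に対応し, $X^*$ の単位球面はEuclid単位球面 $S^{s-1} := \{u \in \R^s \mid \lVert u \rVert_2 = 1\}$ に対応する. よって示すべき主張は集合 $\{v/\lVert v \rVert_2 \mid v \in \Z^s \setminus \{0\}\}$ が $S^{s-1}$ で稠密であることに帰着する.

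次に, 正のスカラー倍が正規化した点を変えないという事実を用いて $\Z^s$ を $\Q^s$ に置き換える. 任意の $w \in \Q^s \setminus \{0\}$ に対して, その成分の分母の公倍数となる $N \in \N$ を取れば $Nw \in \Z^s \setminus \{0\}$ かつ $(Nw)/\lVert Nw \rVert_2 = w/\lVert w \rVert_2$ となるので,
\[
\{v/\lVert v \rVert_2 \mid v \in \Z^s \setminus \{0\}\} = \{w/\lVert w \rVert_2 \mid w \in \Q^s \setminus \{0\}\}
\]
と書き直せる. したがって後は右辺が $S^{s-1}$ で稠密であることを示せば十分である.

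最後にすぐ上の補題を $X = \R^s \setminus \{0\}$, $Y = S^{s-1}$, $f(x) = x/\lVert x \rVert_2$, $A = \Q^s \setminus \{0\}$ として適用する. $f$ の連続性と全射性は明らかである. また, $\R^s \setminus \{0\}$ が $\R^s$ の開集合であることから, $\Q^s$ の $\R^s$ における稠密性より $\Q^s \setminus \{0\}$ は $\R^s \setminus \{0\}$ において稠密となる. よって補題により $f(\Q^s \setminus \{0\})$ は $S^{s-1}$ で稠密となり, 求める結論が得られる.

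以上の方針に本質的な困難は無い. 強いて言えば, 作用素ノルム $\lVert f_v \rVert$ がちょうどEuclidノルム $\lVert v \rVert_2$ に等しいことの確認と, スカラー倍を介して $\Z^s$ の正規化像と $\Q^s$ の正規化像が一致することを見るステップが, やや形式的な工夫を要する箇所である.
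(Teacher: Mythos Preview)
Your proof is correct and follows essentially the same approach as the paper: both identify the operator norm of $x \mapsto \ip<{x,v}>$ with $\lVert v \rVert_2$ via Cauchy--Schwarz, reduce the question to density of $\{m/\lVert m \rVert \mid m \in \Z^s \setminus \{0\}\}$ in the Euclidean unit sphere, apply the preceding lemma to the normalization map $\R^s \setminus \{0\} \to S^{s-1}$ with the dense subset $\Q^s \setminus \{0\}$, and observe that the normalized rationals and normalized integers coincide by clearing denominators. The order of the last two steps is reversed relative to the paper, but the argument is the same.
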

\begin{proof}
  シュワルツの不等式から$\R^s \ni x \mapsto \ip<{x,u}> \in \R$の作用素ノルムは$\lVert u \rVert$と一致するので, 
  \[
  \{f/ \lVert f \rVert \mid f \in \F , f \neq 0 \}
  \]
  が$\{\R^s \ni x \mapsto \ip<{x,u}> \in \R \mid u \in \R^s, \lVert u \rVert = 1 \}$において作用素ノルムの意味で稠密であることを示せばよい. そして, このことを示すには, 
  \[
  S = \{m/\lVert m \rVert \mid m \in \mathbb{Z}, z \neq 0\}
  \]
  が$U = \{ u \in \R^s \mid \lVert u \rVert = 1 \}$においてユークリッド距離に関して稠密であることを示せば十分である. 
  いま, 写像$\varphi$を
  \[
  \varphi:\R^s \setminus \{0\} \rightarrow U, ~ \varphi(x) = x/\lVert x \rVert
  \]
  と定義すると, これは連続かつ全射である. そして, $\mathbb{Q}^s \setminus \{0\}$は$\R^s \setminus \{0\}$において稠密であるから, すぐ上の補題より
  \[
  \varphi(\Q^s \setminus \{0\}) = \{ q/\lVert q \rVert \mid q \in \mathbb{Q}^s , q \neq 0 \}
  \]
  は$U$において稠密である. しかも, $\varphi(\Q^s \setminus \{0\}) = S$である. 実際, $q \in \Q^s \setminus \{0\}$を取ると, $n \in \mathbb{Z}^s$と$m \in \mathbb{Z}$を使って$q = n/m $と表せるので, $q/\lVert q \rVert = n/\lVert n \rVert \in S$となる. 逆向きの包含関係は明らかである. よって, $S$は$U$において稠密である. 以上で系は示された. 
\end{proof}
なお, この命題から次のことも直ちにわかる. 
\begin{prop}
  $\mathcal{G} \subset C(\R)$が基礎的であるとすると, 
  \[
  \{ \R^s \ni x \mapsto g(\ip<{x,v}>) \in \R \mid g \in \mathcal{G}, v \in \mathbb{Z}^s \}
  \]
  は$C(\R^s)$において基礎的である. 
\end{prop}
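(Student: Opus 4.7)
この命題の証明はすでに準備したふたつの結果を組み合わせるだけの直截なものになる見込みである. 方針としては, $X = \R^s$をユークリッドノルムでノルム空間とみなし, $\mathcal{F} = \{\R^s \ni x \mapsto \ip<{x,v}> \in \R \mid v \in \Z^s\} \subset X^*$とおくことで, 定理\ref{MainResult2}(SunとCheneyの定理)を適用する.

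まず第一段階として, この$\mathcal{F}$が定理\ref{MainResult2}の仮定をみたすこと, すなわち
\[
\{ f/\lVert f \rVert \mid f \in \F, f \neq 0 \}
\]
が$X^*$の単位球面において作用素ノルムに関して稠密であることを確認する. しかしこれは直前の命題で既に証明済みであるから, 追加の議論は不要である.

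次に第二段階として, 仮定より$\mathcal{G} \subset C(\R)$が基礎的であることと第一段階を合わせて定理\ref{MainResult2}を適用することで, $\mathcal{G} \circ \mathcal{F}$が$C(\R^s)$において基礎的であると結論する. ここで,
\[
\mathcal{G} \circ \mathcal{F} = \{ \R^s \ni x \mapsto g(\ip<{x,v}>) \in \R \mid g \in \mathcal{G}, v \in \Z^s \}
\]
であるから, これがまさに示すべきことに一致する.

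以上の通り本命題は既存の結果のほぼ機械的な組み合わせであるため, 本質的な障害は存在しない. 難所は既に定理\ref{MainResult2}の証明およびその直前の$\{f/\lVert f \rVert\}$の単位球面における稠密性の証明で処理されており, 本命題ではこれらを組み立て直して記述することが主たる作業となる.
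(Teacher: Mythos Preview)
Your proposal is correct and follows exactly the approach the paper intends: the paper introduces this proposition with the remark that it follows immediately from the preceding proposition (the density of $\{f/\lVert f\rVert \mid f \in \mathcal{F}, f \neq 0\}$ in the unit sphere of $(\R^s)^*$), and your two-step argument combining that result with Theorem~\ref{MainResult2} is precisely the intended justification.
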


\subsection{Hornikの結果}
本節ではHornikの結果, つまり, 活性化関数$\Psi$がsquashing関数であるときに$\Sigma^r(\Psi)$が$C(\R^r)$において広義一様収束の意味で稠密であることの証明を与える. ただし, squashing関数の定義は次の通りである. 
\begin{defn}[squashing関数]　\\
 関数 $\Psi : \mathbb{R} \rightarrow [0,1]$がsquashing関数であるとは, $\Psi$が単調非減少であり, 
 \[
 \Psi(t) \rightarrow
 \begin{cases}
 1 & (t \rightarrow +\infty)\\
 0 & (t \rightarrow -\infty)
 \end{cases}
 \]
 が成り立つことをいう. つまり, 単調非減少なsigmoidal関数をsquashing関数という. 
\end{defn}
\begin{rem}
  squashing関数はBorel可測である. なぜなら,  単調非減少関数はBorel可測だからである. 実際, $f:\R \rightarrow \R$を単調非減少関数とするとき, 任意の$a \in \R$に対して, $x = \sup\{y \in \R \mid f(y) \leq a\}$とおくと, 
  \[
  f^{-1}((-\infty,a]) = (-\infty,x) ~\mbox{or}~ (-\infty,x] \in \mathcal{B}_{\R}
  \]
  となるから, $f$はBorel可測である. 
\end{rem}
squashing関数には連続性が要求されていないことに注意されたい. Hornikは連続性の仮定をなくすことができるかということに関心があったようである\cite{Hornik}.
Leshno et al.の結果の節で紹介したHornikの結果にもこの姿勢は表れている. 

さて, まず次の$\Sigma \Pi$-ネットワークについて考察していく. 
\begin{thm}\label{MainTheorem1}
  $G : \R \rightarrow \R$を定数でない連続関数とする. このとき, 
  \[
  \Sigma \Pi^r(G) 
  := \left\{ f:\R^r \rightarrow \R ~\vline~ 
  \begin{aligned}
  &f = \sum_{j}^{q} \beta_j \prod_{k=1}^{l_j}  G \circ A_{j,k} \\
  &\beta_j \in \R, A_{j,k} \in \A^r , l_j \in \N,  q \in \N
  \end{aligned}
  \right\}
  \]
  は$C(\R^r)$において広義一様収束の意味で稠密である. 
\end{thm}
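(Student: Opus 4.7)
方針はStone-Weierstrassの定理の応用である. すなわち, 任意の空でないコンパクト集合$K \subset \R^r$に対して$\Sigma \Pi^r(G)|_K$が$C(K)$の点分離部分代数であって非零定数関数を含むことを確認し, 稠密性を結論する. 通常のニューラルネットワーク$\Sigma^r(\Psi)$の議論では和とスカラー倍しか扱わないため代数性を直接にはもち得ず, LinとPinkusの定理や補題\ref{VandermondeCor}のようなリッジ関数の議論で一旦多項式関数を捕まえてからStone-Weierstrassに持ち込む必要があった. 一方, $\Sigma \Pi$型は最初から積を許す定義になっているのでこの障害を回避でき, 本定理の証明は極めて直接的になると期待される.

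三つの条件を順に確認していく. 第一に代数性については, $\sum_j \beta_j \prod_k G \circ A_{j,k}$という形の式が明らかに和とスカラー倍で閉じており, さらに二つのそのような式の積を展開してもやはり同じ形で表せるので積に関しても閉じる. 第二に定数関数$1$を含むことは, $G$が定数でない(特に恒等的に$0$ではない)ことから$G(c) \neq 0$なる$c \in \R$が取れることによる: 定数アフィン関数$A(x) \equiv c$を用いれば$(1/G(c))(G \circ A)$は恒等的に$1$となる. 第三に点分離が最も工夫を要する箇所であるが, $x \neq y \in \R^r$と$G(s_0) \neq G(t_0)$なる$s_0, t_0 \in \R$に対して, $w := (s_0 - t_0)(x-y)/\lVert x-y \rVert_2^2$および$b := s_0 - w^{\T} x$と定めてアフィン関数$A(z) := w^{\T} z + b$を考えれば, $A(x) = s_0$, $A(y) = t_0$となり$(G \circ A)(x) \neq (G \circ A)(y)$を得る.

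以上の三ステップはいずれも$G$の連続性と非定数性のみに基づく初等的な議論であり, 大きな障害はない. 強いて言えば点分離において$A(x), A(y)$を所望の値に合わせ込むアフィン関数を構成する線形代数的な操作が, 唯一ごくわずかな工夫を要する箇所である. 以上のもとでStone-Weierstrassの定理を$\Sigma \Pi^r(G)|_K$に適用すれば, $C(K)$における一様収束の意味での稠密性が従うので, $K$の任意性により$\Sigma \Pi^r(G)$は$C(\R^r)$において広義一様収束の意味で稠密であることが結論される.
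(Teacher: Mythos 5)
あなたの証明は正しく, 本質的に本論文の証明と同じ方針(各コンパクト集合上で$\Sigma\Pi^r(G)_K$が単位元を含む点分離部分代数であることを確認してStone-Weierstrassの定理を適用する)である. 点分離のためのアフィン関数の構成が, 本論文では$x \neq y$となる一つの座標成分のみを用いるのに対し, あなたは差ベクトル$x-y$への射影を用いている点だけが表面的に異なるが, 議論としては同一である.
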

\begin{proof}
  任意に空でないコンパクト集合$K \subset \R^r$を取る. すると$\Sigma \Pi^r(G)_K$は$C(K)$の部分代数である. 
  実際, 線形部分空間であることは明らかである. 
  また, $f,g \in \Sigma \Pi^r(G)_K$ならば$f g \in \Sigma \Pi^r(G)_K$であることも有限個の組$(i,j)$を一列化することで示される. 
  いま, $G$は定数でないので$G(a) \neq 0$なる$a \in \R$が取れる. 
  このとき, $A(x) := 0^{\T} x + a = a$と定義すると, $G \circ A / G(a) \in \Sigma \Pi^r(G)_K$となり, $G(A(x))/G(a) = 1 ~~(\forall x \in K)$となる. 
  次に, $x,y \in K , ~ x \neq y$をとる. 
  $G$は定数でないので, $G(a) \neq G(b)$なる$a,b \in \R$が取れる. 
  いま, $A(z) = w^{\T} z + \beta$で$A(x) = a, A(y) = b$なるものが取れる. 実際, $x \neq y$ゆえ, ある$i$で$x_i - y_i \neq 0$となることに注意して, $w_i = (a-b)/(x_i-y_i), w_j = 0 ~(j \neq i)$として, $\beta = (b x_i - a y_i)/(x_i - y_i)$とおけば良い. 
  このとき, $G(A(x)) = a \neq b = G(A(y))$である. 
  よって, Stone-Weierstrassの定理によって, $\Sigma \Pi^r(G)_K$は$C(K)$において一様収束の意味で稠密である. 
  以上から, $\Sigma \Pi^r(G)$は$C(\R^r)$において広義一様収束の意味で稠密である. 
\end{proof}
\begin{lem}\label{squashingFuncUniAppro}
  $\Psi:\R \rightarrow \R$をsquashing関数とする. このとき, 任意の連続なsquashing関数$F$と$\varepsilon > 0$に対して, ある$H \in \Sigma^1(\Psi)$が存在して, 
  \[
  \sup_{x \in \R} |F(x) - H(x)| < \varepsilon.
  \]
\end{lem}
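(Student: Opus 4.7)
The plan is to approximate $F$ in sup norm on $\R$ by a combination of the form $H(x)=\sum_{i} c_i \Psi(\lambda(x-x_i))$, exploiting that $\Psi(\lambda(\,\cdot\,-x_i))$ behaves like a controlled approximation to the Heaviside step $\mathbf{1}_{[x_i,\infty)}$ when $\lambda$ is large, since $\Psi$ is squashing. The strategy has three steps: approximate $F$ by a step function, express that step function as a nonnegative combination of Heaviside steps with uniformly small coefficients, then replace each Heaviside by a suitable dilate of $\Psi$.

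First I would choose $M>0$ with $F<\varepsilon/6$ on $(-\infty,-M]$ and $F>1-\varepsilon/6$ on $[M,\infty)$, which is possible because $F$ is also squashing. Using uniform continuity of $F$ on $[-M,M]$, I pick a partition $-M=x_0<x_1<\cdots<x_n=M$ with $F(x_i)-F(x_{i-1})<\varepsilon/6$ for all $i\ge 1$. Setting $c_0=F(x_0)$ and $c_i=F(x_i)-F(x_{i-1})$ for $i\ge 1$ produces a telescoping decomposition $\tilde F(x):=\sum_{i=0}^{n} c_i\,\mathbf{1}_{[x_i,\infty)}(x)$ with $c_i\ge 0$, $\max_i c_i<\varepsilon/6$, $\sum_i c_i=F(M)\le 1$, and a short case check using the monotonicity of $F$ yields $\|F-\tilde F\|_{\infty}<\varepsilon/6$.

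Next, let $\delta:=\tfrac{1}{3}\min_{1\le i\le n}(x_i-x_{i-1})>0$, so distinct $x_i$'s are separated by more than $2\delta$. Using that $\Psi$ is squashing, I pick $T>0$ with $\Psi(t)<\varepsilon/6$ for $t<-T$ and $\Psi(t)>1-\varepsilon/6$ for $t>T$, and set $\lambda>T/\delta$. Define $H(x):=\sum_{i=0}^{n} c_i\,\Psi(\lambda(x-x_i))\in\Sigma^{1}(\Psi)$. For any fixed $x\in\R$, at most one index $i_0$ can satisfy $|x-x_{i_0}|<\delta$; for every other $j$, $\lambda|x-x_j|>T$, hence $|\Psi(\lambda(x-x_j))-\mathbf{1}_{[x_j,\infty)}(x)|<\varepsilon/6$. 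The crude bound $|\Psi(\lambda(x-x_{i_0}))-\mathbf{1}_{[x_{i_0},\infty)}(x)|\le 1$ combined with $c_{i_0}<\varepsilon/6$ gives $\|\tilde F-H\|_\infty\le c_{i_0}+(\varepsilon/6)\sum_{j\ne i_0}c_j<\varepsilon/6+\varepsilon/6=\varepsilon/3$, and the triangle inequality yields $\|F-H\|_\infty<\varepsilon/6+\varepsilon/3<\varepsilon$.

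The main delicate point is that, near each transition $x_i$, the dilate $\Psi(\lambda(x-x_i))$ is \emph{not} uniformly close to $\mathbf{1}_{[x_i,\infty)}$ no matter how large $\lambda$ is, because $\Psi$ itself may be discontinuous at $0$. The obstacle is resolved by choosing the partition so that every weight $c_i$ is at most $\varepsilon/6$: this absorbs the single unavoidable ``transition error'' that occurs at any given $x$, while the separation $2\delta$ guarantees that the contributions of all \emph{other} transition points are made uniformly small by the squashing behaviour of $\Psi$ at scale $\lambda$.
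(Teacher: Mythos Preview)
Your proof is correct. Both your argument and the paper's follow the same high-level strategy—approximate $F$ by a sum of Heaviside steps with small weights, then replace each Heaviside by a dilated/translated copy of $\Psi$—but the implementations differ in a useful way. The paper slices the \emph{range} of $F$ into $Q$ equal levels, obtaining equal weights $\beta_j=1/Q$ but requiring a separate affine map $A_j$ (with its own slope) for each transition, and then verifies the bound by a somewhat lengthy case analysis over six intervals $(-\infty,r_0],(r_0,r_1],\ldots,(r_Q,\infty)$. You instead slice the \emph{domain} finely enough that each increment $c_i=F(x_i)-F(x_{i-1})$ is below $\varepsilon/6$, and use a \emph{single} large slope $\lambda$ for all steps. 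The payoff is your clean ``at most one close index'' argument: at any $x$, all but one summand are genuinely close to the corresponding Heaviside (because the nodes are $2\delta$-separated and $\lambda\delta>T$), and the one possibly bad summand is harmless because its weight $c_{i_0}$ is already $<\varepsilon/6$. This bypasses the case analysis entirely. A minor cosmetic point: your strict inequality $\|\tilde F-H\|_\infty<\varepsilon/3$ should perhaps be $\le\varepsilon/3$ (since $\sum_j c_j=F(M)$ could equal $1$), but since $\varepsilon/6+\varepsilon/3=\varepsilon/2<\varepsilon$ this is immaterial.
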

\begin{proof}
  任意に$\varepsilon > $を取る. $\varepsilon < 1$としてよい. $\beta_j \in \R$と$A_j \in \A^1$で, 
  \[
  \sup_{x \in \R} |F(x) - \sum_{j=0}^{Q-1} \beta_j \Psi(A_j(x))| < \varepsilon
  \]
  なるものを見つけたい. そこで$Q \in \N$を$1/Q < \varepsilon/2$を満たすように取る. そして, $\beta_j = 1/Q$とおく. いま$\Psi$はsquashing関数なので$M > 0$で
  \[
  \Psi(-M) < \frac{1}{Q},~~\Psi(M) > 1 - \frac{1}{Q}
  \]
  なるものが取れる. また, $F$は連続なsquashing関数なので
  \[
  \begin{aligned}
  &r_j = \sup\{x \in \R \mid F(x) = j/Q \} ~~~(j=1,\ldots,Q-1) \\
  &r_Q = \sup \{x \in \R \mid F(x) = 1 - 1/(2Q) \}
  \end{aligned}
  \]
 が存在する. そして, $r_0 \in (-\infty,r_1)$を任意に取る. さて, $r < s$に対して$A_{r,s} \in \A^1$を$A_{r,s}(r) = -M, A_{r,s}(s) = M$なるものとする(これは一意に定まる). このとき, $A_j = A_{r_j,r_{j+1}}$とすると, 
 \[
 H = \sum_{j=0}^{Q-1} \beta_j \Psi \circ A_j ~\in \Sigma^1(\Psi)
 \]
 が求めるものであることがわかる. 実際, $x \in (-\infty,r_0]$の場合は, $0 \leq F(x) \leq 1/Q$であり, かつ$\Psi$がsquashing関数であることより, 
 \[
 0 \leq H(x) \leq \sum_{j=0}^{Q-1} \frac{1}{Q} \Psi(-M) \leq \frac{1}{Q}
 \]
 となる. よって, $|F(x) - H(x)| \leq 1/Q < \varepsilon$となる. 次に, $x \in (r_0,r_1]$の場合を考える. このとき, $0 \leq F(x) \leq 1/Q$であり, 
 \[
 \begin{aligned}
 0  \leq H(x) 
 &\leq \frac{1}{Q} + \sum_{j=1}^{Q-1} \frac{1}{Q} \Psi(-M) \\
 &\leq \frac{1}{Q} + \frac{Q-1}{Q}\frac{1}{Q} \\
 &\leq \frac{1}{Q} + \frac{1}{Q}
 \end{aligned}
 \]
 となるので, $|F(x)-H(x)| \leq 1/Q + 1/Q \leq \varepsilon$となる. 次に$x \in (r_j,r_{j+1}] ~~(j=1,\ldots,Q-2)$の場合を考える. このとき, $r_j$の定義より
 \[
 \frac{j}{Q} \leq F(x) \leq \frac{j+1}{Q}
 \]
 である. また, 
 \[
 \begin{aligned}
 H(x) \geq \sum_{k=0}^{j-1} \frac{1}{Q} \Psi(M) \geq \frac{j}{Q} - \frac{j}{Q^2} \geq \frac{j-1}{Q}
 \end{aligned}
 \]
 かつ
 \[
 \begin{aligned}
 H(x) 
 &\leq \sum_{k=0}^{j} \frac{1}{Q} + \sum_{k=j+1}^{Q-1} \frac{1}{Q} \Psi(-M) \\
 &\leq \frac{j+1}{Q} + \frac{1}{Q} = \frac{(j+2}{Q}
 \end{aligned}
 \]
 である. したがって, $|F(x)-H(x)| \leq 2/Q \leq \varepsilon$となる. 次に$x \in (r_{Q-1},r_Q]$の場合を考える. このとき, 
 \[
 1 - \frac{1}{Q} = \frac{Q-1}{Q} \leq F(x) \leq 1 - \frac{1}{2Q} 
 \]
 である. また, 
 \[
 H(x) \geq \sum_{j=0}^{Q-2} \frac{1}{Q} \Psi(M) \geq  \frac{Q-1}{Q} - \frac{Q-1}{Q^2} \geq \frac{Q-2}{Q} = 1 - \frac{2}{Q}
 \]
 かつ, 
 \[
 H(x) \leq \sum_{j=0}^{Q-1} \frac{1}{Q} = 1
 \]
 である. したがって, $|F(x)-H(x)| \leq \max\{1/Q,2/Q-1/(2Q)\} \leq \varepsilon$となる. 最後に, $x \in (r_Q,\infty)$の場合を考える. このとき, 
 $1 - 1/(2Q) \leq F(x) \leq 1$であり, 
 \[
 1 \geq H(x) \geq \sum_{j=0}^{Q-1} \frac{1}{Q} \Psi(M) \geq 1 - \frac{1}{Q}
 \]
 である. したがって, $|F(x)-H(x)| \leq 1/Q \leq \varepsilon$となる. 
\end{proof}
\begin{thm}\label{MainTheorem3}
  $\Psi:\R \rightarrow \R$をsquashing関数とする. このとき, $\Sigma \Pi^r(\Psi)$は$C(\R^r)$において広義一様収束の意味で稠密である. 
\end{thm}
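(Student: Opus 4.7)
証明の方針は次の通りである。任意に$g \in C(\R^r)$, 空でないコンパクト集合$K \subset \R^r$, $\varepsilon > 0$を取る。連続なsquashing関数$F$(例えば区分一次の$F(x) = \max\{0,\min\{x,1\}\}$)を一つ固定すると, $F$は非定数かつ連続であるから定理\ref{MainTheorem1}を$F$に対して適用でき, $\phi \in \Sigma \Pi^r(F)$で$\sup_{x \in K}|g(x) - \phi(x)| < \varepsilon/2$なるものが取れる。以下この$\phi$を
\[
\phi(x) = \sum_{j=1}^q \beta_j \prod_{k=1}^{l_j} F(A_{j,k}(x)) \quad (A_{j,k} \in \A^r)
\]
の形に固定しておく。

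次に, $\phi$中の各$F$を$\Psi$のネットワークで置き換える段階に移る。補題\ref{squashingFuncUniAppro}により, 後で小さく取る$\delta > 0$に対して$H = \sum_{m=1}^M c_m \Psi \circ B_m \in \Sigma^1(\Psi)$で$\sup_{y \in \R}|F(y) - H(y)| < \delta$なるものが取れる。$H(A_{j,k}(x)) = \sum_m c_m \Psi(B_m \circ A_{j,k}(x))$であり$B_m \circ A_{j,k} \in \A^r$であるから, $F$を$H$に置き換えて得られる
\[
\psi(x) := \sum_{j=1}^q \beta_j \prod_{k=1}^{l_j} H(A_{j,k}(x))
\]
は積を分配則で展開すれば$\Psi$のアフィン関数との合成の有限積の線形結合となり, $\Sigma \Pi^r(\Psi)$に属する。

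最後に$\phi$と$\psi$の$K$上での誤差を評価する。主な技術的困難はまさにここにあり, 各因子の誤差は一様に$\delta$以下に収まるものの, 積を$l_j$個取ると原理的には誤差が増幅しうる。しかし$F$が$[0,1]$値かつ$|H| \leq 1 + \delta$であることを使ってテレスコープ展開
\[
\prod_k F(a_k) - \prod_k H(a_k) = \sum_k \Bigl(\prod_{i<k} F(a_i)\Bigr)(F(a_k) - H(a_k))\Bigl(\prod_{i>k} H(a_i)\Bigr)
\]
を評価すれば, $\sup_{x \in K}|\phi(x) - \psi(x)| \leq \sum_{j=1}^q |\beta_j|\bigl((1+\delta)^{l_j} - 1\bigr)$が得られる。$\phi$固定時点で$q, l_j, \beta_j$は定数なので, $\delta \to 0$とすれば右辺は$0$に収束し, 特に$\varepsilon/2$未満にできる。三角不等式と合わせて$\sup_{x \in K}|g(x) - \psi(x)| < \varepsilon$が従い, $\Sigma \Pi^r(\Psi)$の広義一様稠密性が示される。
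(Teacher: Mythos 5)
あなたの証明は正しく, 本質的に本論文の証明と同じ方針である(連続なsquashing関数$F$に定理\ref{MainTheorem1}を適用し, 補題\ref{squashingFuncUniAppro}で$F$を$H \in \Sigma^1(\Psi)$に置き換え, 積の誤差を評価する). 唯一の違いは積の誤差評価で, 本論文は積写像の$[0,1]^l$上の一様連続性に訴えるのに対し, あなたはテレスコープ展開による明示的な評価$(1+\delta)^{l_j}-1$を与えており, $H$の値が$[0,1]$から$\delta$だけはみ出す点をより丁寧に処理している.
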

\begin{proof}
  連続なsquashing関数$F$に対して, $\Sigma \Pi^r(\Psi)$が$\Sigma \Pi^r(F)$において広義一様収束の意味で稠密であることを示せば十分である. 
  なぜなら, それが示されれば定理\ref{MainTheorem1}により$\Sigma \Pi^r(F)$は$C(\R^r)$において広義一様収束の意味で稠密であるので, $\Sigma \Pi^r(\Psi)$は$C(\R^r)$において広義一様収束の意味で稠密となる. 
  
  さて, このことを示すためには, $\prod_{k=1}^{l} F \circ A_k$の形の関数を$\Sigma \Pi^r(\Psi)$の元で一様近似できることを示せばよい. 
  そこで, $\varepsilon > 0$を任意に取る. 積$\prod_{k=1}^{l} : [0,1]^l \ni (a_1,\ldots,a_l) \mapsto \prod_{k=1}^{l} a_l \in \R$は連続で$[0,1]^l$はコンパクトなので, ある$\delta$が存在して任意の$a_k,b_k \in [0,1]$に対して, $|a_k - b_k| < \delta ~~(k=1,\ldots,l)$ならば$|\prod_{k=1}^l a_k - \prod_{k=1}^l b_k| < \varepsilon$となる. 
  一方, 補題\ref{squashingFuncUniAppro}により, ある$\beta_j \in \R$と$A_j^1 \in \A^1$が存在して, $H = \sum_{j=1}^{m} \beta_j \Psi \circ A_j^1$について
  \[
  \sup_{x \in \R} |F(x) - H(x)| < \delta
  \]
  が成り立つ. したがって, 
  \[
  \sup_{x \in \R^r} \left| \prod_{k=1}^l F(A_k(x)) - \prod_{k=1}^l H(A_k(x)) \right| \leq \varepsilon
  \]
  となる. いま, $A_j^1 \circ A_k \in \A^r$であり, $\Sigma \Pi(\Psi)$は積と和に関して閉じているので, $\prod_{k=1}^l H \circ A_k \in \Sigma \Pi(\Psi)$である. これで証明は完了した. 
\end{proof}
次に本題の定理を示していく. 
\begin{thm}[Hornik 1989 \cite{Hornik}]\label{MainTheorem4}　\\
  $\Psi:\R \rightarrow \R$をsquashing関数とする. このとき, $\Sigma^r(\Psi)$は$C(\R^r)$において広義一様収束の意味で稠密である.  
\end{thm}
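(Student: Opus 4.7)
The plan is to reduce to the continuous sigmoidal case---which is already settled by the Chui--Li result (Theorem \ref{UATByChuiandLi})---and then transfer the density to the possibly discontinuous squashing function $\Psi$ by applying Lemma \ref{squashingFuncUniAppro} to each summand. I would not try to deduce the claim directly from the $\Sigma\Pi$-density of Theorem \ref{MainTheorem3}: when $\Psi$ is for instance the Heaviside function $\chi_{[0,\infty)}$, a product $\chi_{\{A_1(x) \geq 0\}} \chi_{\{A_2(x) \geq 0\}}$ is the indicator of an intersection of two half-spaces, which does not lie in the span of single half-space indicators, so bridging $\Sigma\Pi^r(\Psi)$ to $\Sigma^r(\Psi)$ genuinely needs a ``continuous'' intermediary, and Chui--Li plus Lemma \ref{squashingFuncUniAppro} provide exactly that.

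Concretely, fix once and for all a continuous squashing function $F$---for instance $F(y) = 1/(1+e^{-y})$. Being continuous and sigmoidal, $F$ is covered by Theorem \ref{UATByChuiandLi}, and thus $\mathcal{N}_r(F,\Z^r,\Z) \subset \Sigma^r(F)$ is dense in $C(\R^r)$ in the sense of locally uniform convergence. It therefore suffices to show $\Sigma^r(F) \subset \overline{\Sigma^r(\Psi)}$, where the closure may even be taken in the uniform topology on all of $\R^r$. Given an arbitrary $G(x) = \sum_{j=1}^q \beta_j F(w_j^{\T} x + b_j) \in \Sigma^r(F)$ and $\varepsilon > 0$, Lemma \ref{squashingFuncUniAppro} applied to $F$ yields, for each $j$, an element $H_j(y) = \sum_{k=1}^{m_j} \alpha_{j,k} \Psi(c_{j,k} y + d_{j,k}) \in \Sigma^1(\Psi)$ with
\[
\sup_{y \in \R} |F(y) - H_j(y)| < \frac{\varepsilon}{1 + \sum_i |\beta_i|}.
\]
Substituting $y = w_j^{\T} x + b_j$ and summing over $j$ gives
\[
G'(x) := \sum_{j=1}^q \sum_{k=1}^{m_j} \beta_j \alpha_{j,k}\, \Psi\bigl((c_{j,k} w_j)^{\T} x + (c_{j,k} b_j + d_{j,k})\bigr) \in \Sigma^r(\Psi),
\]
and the triangle inequality forces $\sup_{x \in \R^r} |G(x) - G'(x)| < \varepsilon$. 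Combining this with the density of $\Sigma^r(F)$ via the triangle inequality on an arbitrary compact $K \subset \R^r$ yields the density of $\Sigma^r(\Psi)$ in $C(\R^r)$.

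The crucial---and essentially only nontrivial---point is that Lemma \ref{squashingFuncUniAppro} produces an approximation of $F$ uniform on all of $\R$, not merely on compact intervals. This is exactly what lets the substitution $y \mapsto w_j^{\T} x + b_j$ preserve the approximation quality, no matter how large the affine parameters $w_j, b_j$ appearing in $G$ may be. If one only had uniform approximation on compact intervals, one would be forced to enlarge the interval depending on $K$ and on each $(w_j, b_j)$, and then patch the estimates together, which would be a significant nuisance. Every remaining step is either a direct citation (Chui--Li) or a routine substitution.
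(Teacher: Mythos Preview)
Your proof is correct, but it takes a genuinely different route from the paper's own argument.

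The paper does \emph{not} reduce to Chui--Li. Instead it inserts $\cos$ as the continuous intermediary: Theorem~\ref{MainTheorem1} (Stone--Weierstrass) gives density of $\Sigma\Pi^r(\cos)$, and the product-to-sum identity $\cos a\cos b=\tfrac12(\cos(a+b)+\cos(a-b))$ collapses every $\Sigma\Pi$-element to a finite sum $\sum_j\alpha_j\cos\circ A_j$. A separate lemma (the construction of $\cos_{M,\varepsilon}\in\Sigma^1(\Psi)$) then approximates $\cos$ on each compact interval $[-M,M]$ by patching together $O(M)$ translated continuous squashing functions, each of which is handled by Lemma~\ref{squashingFuncUniAppro}; Lemma~\ref{CompactUniApproTheorem} pushes this through the affine maps $A_j$. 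In short, the paper's version of your ``continuous intermediary'' is $\cos$, reached via the algebraic trick that kills the products, and the uniform approximation is only required on compacta.

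Your route is shorter and cleaner: by taking the intermediary to be a continuous \emph{squashing} function $F$, you can invoke Lemma~\ref{squashingFuncUniAppro} once, directly, with its full uniform-on-$\R$ conclusion, and Chui--Li hands you density of $\Sigma^r(F)$ without any product manipulation. The trade-off is that you import the functional-analytic machinery behind Theorem~\ref{UATByChuiandLi} (Hahn--Banach, Riesz--Markov--Kakutani), whereas the paper's argument stays within Stone--Weierstrass and elementary estimates---which is also historically faithful, since Hornik's 1989 proof predates Chui--Li (1992). Your observation that Theorem~\ref{MainTheorem3} alone is a dead end (the Heaviside example) is exactly the obstruction the paper's cosine trick is designed to bypass.
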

この定理を証明するために$2$つ補題を示す. 
\begin{lem}
  $\Psi:\R \rightarrow \R$をsquashing関数とする. このとき, 任意の$\varepsilon,M > 0$に対して, $F \in \Sigma^1(\Psi)$が存在して, 
  \[
  \sup_{x \in [-M,M]} |F(x) - \cos(x)| < \varepsilon
  \]
  が成立する. 便宜的にこのような$F$を任意に一つ取り$\cos_{M,\varepsilon}$で表すことにする. 
\end{lem}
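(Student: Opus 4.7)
The plan is to reduce to the case of a continuous squashing function via Lemma \ref{squashingFuncUniAppro}, and then approximate $\cos$ on $[-M, M]$ by a linear combination that mimics a step-function decomposition.

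Fix any continuous squashing function $\sigma$ (for concreteness take the logistic $\sigma(t) = 1/(1+e^{-t})$). It suffices to construct
\[
G(x) = \sum_{j=1}^N \beta_j \sigma(w_j x + b_j)
\]
with $\sup_{x \in [-M,M]} |G(x) - \cos(x)| < \varepsilon/2$; indeed, applying Lemma \ref{squashingFuncUniAppro} to $\sigma$ yields $H \in \Sigma^1(\Psi)$ with $\sup_{y \in \R} |\sigma(y) - H(y)| < \varepsilon/(2(1 + \sum_j |\beta_j|))$, and since $H(w_j x + b_j)$ remains a finite $\R$-linear combination of $\Psi$-affine composites, substituting it for $\sigma(w_j x + b_j)$ produces an element $F \in \Sigma^1(\Psi)$ within $\varepsilon$ of $\cos$ on $[-M, M]$.

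To construct $G$, I use uniform continuity of $\cos$ on $[-M, M]$ to pick a partition $-M = x_0 < x_1 < \cdots < x_K = M$ with $|\cos(x) - \cos(x_i)| < \varepsilon/8$ for all $x \in [x_i, x_{i+1}]$. Setting $a_i := \cos(x_i) - \cos(x_{i-1})$ for $i = 1, \ldots, K-1$, one has $|a_i| < \varepsilon/4$, and the right-continuous step function
\[
S(x) := \cos(x_0) + \sum_{i=1}^{K-1} a_i \chi_{[x_i, \infty)}(x)
\]
satisfies $\sup_{[-M, M]} |S - \cos| < \varepsilon/8$. For large $\lambda > 0$, $\sigma(\lambda(x - x_i))$ approximates $\chi_{[x_i,\infty)}(x)$ uniformly outside a window of width $O(1/\lambda)$ around $x_i$, staying in $[0,1]$ throughout; similarly $\sigma(\lambda(x + M + 1)) \to 1$ uniformly on $[-M, M]$, so the constant $\cos(x_0)$ is approximable in $\Sigma^1(\sigma)$ to any desired tolerance. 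Taking
\[
G(x) := \cos(x_0)\,\sigma(\lambda(x + M + 1)) + \sum_{i=1}^{K-1} a_i\, \sigma(\lambda(x - x_i))
\]
with $\lambda$ large enough that the transition windows around distinct $x_i$ are pairwise disjoint and the off-window error of each term is small, one obtains $\sup_{[-M,M]}|G - S| < 3\varepsilon/8$, whence $\sup_{[-M, M]}|G - \cos| < \varepsilon/2$.

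The main obstacle lies in controlling the error inside the transition windows, where $\sigma(\lambda(x - x_i))$ is a poor proxy for $\chi_{[x_i,\infty)}$. The crucial observation is that once $\lambda$ is large enough for the windows to be pairwise disjoint, any $x \in [-M, M]$ lies in at most one such window; the corresponding single term contributes an error of at most $|a_i| < \varepsilon/4$, while the remaining $K-2$ terms each approximate their indicators to within $o_\lambda(1)$, yielding a cumulative secondary error $o_\lambda(1)\cdot \sum_i |a_i|$ that is made arbitrarily small by taking $\lambda$ large. This single-window estimate, together with the coefficient-controlled substitution that effects the reduction to $\Psi$, is what carries the proof.
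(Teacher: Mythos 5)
Your proof is correct, but it takes a genuinely different route from the paper's. The paper decomposes $\cos$ \emph{exactly} on a large interval $[-(2n+\tfrac12)\pi,(2n-\tfrac12)\pi]\supset[-M,M]$ as a finite alternating sum of translates of two continuous squashing functions $g,h$ carved out of the monotone arcs of cosine itself, and then applies Lemma \ref{squashingFuncUniAppro} to each translate; the only error incurred is the sum of the individual sup-norm approximation errors, with no partition or window bookkeeping. You instead approximate $\cos$ by a step function, realize each jump by a steep logistic sigmoid, and only then transfer to $\Psi$ via Lemma \ref{squashingFuncUniAppro} --- in effect re-deriving the standard density argument for a continuous sigmoidal activation and specializing it to $\cos$. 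Your route requires the extra care of making the transition windows pairwise disjoint and the observation that any $x$ meets at most one window, so that the uncontrolled in-window error is paid only once and is bounded by the smallness of the corresponding jump $|a_i|$; this is the delicate point and you handle it correctly. What your approach buys is generality: nothing in it uses that the target is $\cos$, so the same argument gives uniform approximability on compacts of an arbitrary continuous function. What the paper's approach buys is an exact, error-free decomposition tailored to the monotone-piece structure of cosine. Both arguments hinge on the same key lemma, applied to different families of continuous squashing functions, and both correctly use the fact that that lemma gives approximation uniformly on all of $\R$, which is what makes the substitution $\sigma(w_jx+b_j)\mapsto H(w_jx+b_j)$ (equivalently, the approximation of each translate $g_\alpha$) legitimate.
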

\begin{proof}
  $n$を$M \leq (2n-1/2)\pi$となるように取る. このとき, $[-M,M] \subset [-(2n+1/2)\pi, (2n-1/2)\pi]$である. いま, 連続なsquashing関数$g,h:\R \rightarrow \R$を
  \[
  g(x) 
  = \left\{
  \begin{aligned}
  ~~~ 0 ~~~~~~~&(x \leq -\pi/2) \\
  \cos(x) ~~~&(-\pi/2 \leq x \leq 0) \\
  1 ~~~~~~~&(0 \leq x)
  \end{aligned}
  \right.
  \]
  および, 
  \[
  h(x) 
  = \left\{
  \begin{aligned}
  ~~~ 0 ~~~~~~~&(x \leq -\pi) \\
  1+\cos(x) ~~~&(-\pi \leq x \leq -\pi/2) \\
  1 ~~~~~~~&(-\pi/2 \leq x)
  \end{aligned}
  \right.
  \]
  で定義する. そして$g_{\alpha}(x) = g(x - \alpha\pi) ~~(\alpha \in \R)$と定義する. $h_{\alpha}$も同様に定義する. すると,  $g_{\alpha},h_{\alpha}$は連続なsquashing関数なので補題\ref{squashingFuncUniAppro}より, 任意の$\varepsilon > 0$に対して$G_{\alpha},H_{\alpha} \in \Sigma^1(\Psi)$が存在して, 
  \[
  \begin{aligned}
  &\sup_{x \in \R} |g_{\alpha}(x) - G_{\alpha}(x)| < \varepsilon \\
  &\sup_{x \in \R} |h_{\alpha}(x) - H_{\alpha}(x)| < \varepsilon
  \end{aligned}
  \]
  となる. 一方, $\tilde{h}_{\alpha} := h_{\alpha} - 1$とおくと, 任意の$x \in [-(2n+1/2)\pi, (2n-1/2)\pi]$に対して, 
  \[
  \cos(x) = \sum_{j=0}^{4n-1} \left((-1)^j g_{-2n + j}(x) + (-1)^{j+1} \tilde{h}_{-2n + 1 + j}(x)\right)
  \]
  である. また, $1 \geq \Psi(K) > 1 - \varepsilon$なる$K$を取り,  $\tilde{H}_{\alpha} :=H_{\alpha} - \Psi(K)$とおくと, $\Sigma^1(\Psi)$は和に関して閉じているので$\tilde{H}_{\alpha} \in \Sigma^1(\Psi)$であり, $\sup_{x \in \R} |\tilde{H}_{\alpha}(x) - \tilde{h}_{\alpha}| < 2\varepsilon$となる. よって, 
  \[
  F = \sum_{j=0}^{4n-1} \left((-1)^j G_{-2n + j}(x) + (-1)^{j+1} \tilde{H}_{-2n + 1 + j}(x)\right) ~\in \Sigma^1(\Psi)
  \]
  が求めるものである. 実際, 
 \[
 \begin{aligned}
 \sup_{x \in [-M,M]} |F(x) - \cos(x)|
 &\leq \sum_{j=0}^{4n-1} \sup_{x \in [-M,M]} |G_{-2n + j}(x) - g_{-2n + j}(x)| \\
 &~~~ + \sum_{j=0}^{4n-1} \sup_{x \in [-M,M]} |\tilde{H}_{-2n + 1 + j}(x)-\tilde{h}_{-2n + 1 + j}(x)| \\
 &\leq 4n\varepsilon + 8n\varepsilon = 12n\varepsilon. 
 \end{aligned}
 \]
\end{proof}
\begin{lem}\label{CompactUniApproTheorem}
  $g = \sum_{j=1}^Q \beta_j \cos \circ A_j , ~A_j \in \A^r$とする. また, $\Psi:\R \rightarrow \R$をsquashing関数とする. このとき, 任意の$\varepsilon > 0$と任意のコンパクト集合$C \subset \R^r$に対して, $f \in \Sigma^r(\Psi)$が存在して, $\sup_{x \in C}|f(x)-g(x)| < \varepsilon$となる. 
\end{lem}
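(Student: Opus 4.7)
The plan is to reduce the approximation of $g$ by $\Sigma^r(\Psi)$-elements on the compact set $C$ to a one-dimensional approximation of $\cos$ by $\Sigma^1(\Psi)$-elements on a bounded interval, which has already been provided by the immediately preceding lemma ($\cos_{M,\varepsilon}$).

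First I would bound the ranges of the affine maps $A_j$ on $C$. Since each $A_j \in \A^r$ is continuous and $C$ is compact, $A_j(C) \subset \R$ is compact, so there exists $M_j > 0$ with $A_j(C) \subset [-M_j, M_j]$. Taking $M := \max_{1 \leq j \leq Q} M_j$, all the points $A_j(x)$ for $x \in C$ lie in $[-M, M]$ simultaneously.

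Next, I would apply the preceding lemma to obtain, for any $\eta > 0$ (to be chosen), a function $\cos_{M,\eta} \in \Sigma^1(\Psi)$ satisfying $\sup_{t \in [-M,M]} |\cos_{M,\eta}(t) - \cos(t)| < \eta$. Writing $\cos_{M,\eta} = \sum_{k=1}^{m} \gamma_k \Psi \circ B_k$ with $B_k \in \A^1$ and $\gamma_k \in \R$, I then set
\[
f(x) := \sum_{j=1}^Q \beta_j \cos_{M,\eta}(A_j(x)) = \sum_{j=1}^Q \sum_{k=1}^{m} \beta_j \gamma_k \, \Psi(B_k(A_j(x))).
\]
Since $B_k \circ A_j$ is again an affine map from $\R^r$ to $\R$, i.e.\ an element of $\A^r$, we have $f \in \Sigma^r(\Psi)$.

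Finally, for any $x \in C$,
\[
|f(x) - g(x)| \leq \sum_{j=1}^Q |\beta_j|\,|\cos_{M,\eta}(A_j(x)) - \cos(A_j(x))| \leq \eta \sum_{j=1}^Q |\beta_j|,
\]
since $A_j(x) \in [-M,M]$. Choosing $\eta := \varepsilon / (1 + \sum_{j=1}^Q |\beta_j|)$ (or taking $f = 0$ if all $\beta_j$ vanish) yields the desired estimate. There is essentially no obstacle here: the result is a bookkeeping exercise that combines the uniform $\cos$-approximation lemma with the closure of $\A^r$ under precomposition by affine maps, together with compactness of $C$ to extract a common bound $M$ for all $Q$ inner affine maps.
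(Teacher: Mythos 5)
Your proof is correct and follows essentially the same route as the paper: extract a common bound $M$ for all $A_j(C)$ by compactness, invoke the preceding lemma to get $\cos_{M,\eta} \in \Sigma^1(\Psi)$ with $\eta$ scaled by $\sum_j |\beta_j|$, and conclude using that $\A^r$ is closed under precomposition of an affine map of $\R$ with an element of $\A^r$. Your handling of the degenerate case (all $\beta_j = 0$, via the $1+\sum|\beta_j|$ denominator) is a minor refinement the paper glosses over.
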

\begin{proof}
  $C$がコンパクトで各$A_j$は連続なので$A_j(C) \subset \R$はコンパクトである. したがって, $M > 0$を適当に取れば$A_j(K) \subset [-M,M] ~(j=1,\ldots,Q)$となる. そこで$K = Q\sum_{j=1}^{Q} |\beta_j|$とおくと, 
  \[
  \sup_{x \in C} \left| \sum_{j=1}^Q \beta_j \cos_{M,\varepsilon/K} (A_j(x)) - g(x) \right| < \varepsilon
  \]
  となる. そして, $\cos_{M,\varepsilon/K} \in \Sigma^1(\Psi)$なので, $\cos_{M,\varepsilon/K} \circ A_j \in \Sigma^r(\Psi)$となる. よって, $\sum_{j=1}^Q \beta_j \cos_{M,\varepsilon/K} \circ A_j \in \Sigma^r(\Psi)$であるので, これで補題の成立が確かめられた. 
\end{proof}
\newtheorem*{proof1}{定理\ref{MainTheorem4}の証明.}
\begin{proof1}
  $\cos:\R \rightarrow \R$は定数でない連続関数なので, 定理\ref{MainTheorem1}により, 
  \[
  \left\{ \sum_{j=1}^Q \beta_j \prod_{k=1}^{l_j} \cos \circ A_{j,k} \mid Q,l_j \in \N , \beta_j \in \R, A_{j,k} \in \A^r  \right\}
  \]
  は$C(\R^r)$において広義一様収束の意味で稠密である. ところが, 任意の$a,b \in \R$に対して, 
  \[
  \cos(a)\cos(b) = \frac{\cos(a+b)-\cos(a-b)}{2}
  \]
  であり, $\A^r$は加法とスカラー倍に関して閉じているので, ある$\alpha_j \in \R, A_j \in \A^r$が存在して, 
  \[
  \sum_{j=1}^Q \beta_j \prod_{k=1}^{l_j} \cos \circ A_{j,k} = \sum_j \alpha_j \cos \circ A_j
  \]
  となる. よって, 補題\ref{CompactUniApproTheorem}より, $\Sigma^r(\Psi)$は$C(\R^r)$において広義一様収束の意味で稠密である. \qed
\end{proof1}

\section{$C(\R^r)$における万能近似定理の応用}

\subsection{多出力・多層への拡張}
前章で述べた結果はすべて, スカラー
を出力するネットワークに関するものであったが,  これらはベクトル値を出力するネットワークについても成り立つ. 本節ではこのことを正確に述べよう. 
\begin{defn}
 関数$G:\R \rightarrow \R$に対して, 
 \[
 \begin{aligned}
 &\Sigma^{r,s}(G) := \{ f:\R^r \rightarrow \R^s \mid f = \sum_{j}^{q} \beta_j G \circ A_j ,~ \beta_j \in \R^s, A_j \in \A^r , q \in \N  \} \\
 &\Sigma \Pi^{r,s}(G) := 
 \left\{ f:\R^r \rightarrow \R^s ~\vline~ 
 \begin{aligned}
 &f = \sum_{j}^{q} \beta_j \prod_{k=1}^{l_j}  G \circ A_{j,k} ~, \beta_j \in \R^s \\
 &A_{j,k} \in \A^r , l_j \in \N,  q \in \N 
 \end{aligned}
 \right\}
 \end{aligned}
 \]
 と定義する. 
\end{defn}
\begin{thm}\label{MainTheorem5}
  関数$\Psi:\R \rightarrow \R$は$\Sigma^r(\Psi)$が$C(\R^r,\R)$において広義一様収束の意味で稠密になるものとする. 
  このとき, $\Sigma^{r,s}(\Psi)$は$C(\R^r,\R^s)$において広義一様収束の意味で稠密である. すなわち, 任意に空でないコンパクト集合$K \subset \R^r$を取るとき, 任意の$g \in C(\R^r,\R^s)$と$\varepsilon>0$に対して, $f \in \Sigma^{r,s}(\Psi)$が存在して, 
  \[
  \sum_{j=1}^s \sup_{x \in K} | f_j(x) - g_j(x) | < \varepsilon
  \]
  となる. ここで, $f = (f_1,\ldots,f_s)^{\T}, g = (g_1,\ldots,g_s)^{\T}$である.  
\end{thm}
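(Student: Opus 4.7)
証明の方針は素直である. 各成分について独立にスカラー版の万能近似定理を適用し, その近似関数を $\R^s$ の標準基底を用いてベクトル値関数に組み上げることで, $\Sigma^{r,s}(\Psi)$ の元としての近似を構成する.

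具体的には次のように進める. 空でないコンパクト集合 $K \subset \R^r$, $g = (g_1, \ldots, g_s)^{\T} \in C(\R^r, \R^s)$ および $\varepsilon > 0$ を任意に取る. 各成分 $g_j$ は $C(\R^r)$ に属するスカラー値連続関数であるから, 仮定(すなわち $\Sigma^r(\Psi)$ が $C(\R^r)$ において広義一様収束の意味で稠密であること)より, 各 $j = 1, \ldots, s$ に対して $f_j \in \Sigma^r(\Psi)$ であって
\[
\sup_{x \in K} |f_j(x) - g_j(x)| < \varepsilon/s
\]
をみたすものが取れる.

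次に, 各 $f_j$ を $f_j = \sum_{k=1}^{q_j} \alpha_{j,k} \Psi \circ A_{j,k}$ (ただし $\alpha_{j,k} \in \R$, $A_{j,k} \in \A^r$) と表し, $\R^s$ の標準基底を $e_1, \ldots, e_s$ とおけば, ベクトル値関数 $f := (f_1, \ldots, f_s)^{\T}$ は
\[
f = \sum_{j=1}^s \sum_{k=1}^{q_j} (\alpha_{j,k} e_j)\, \Psi \circ A_{j,k}
\]
と書ける. 各係数 $\alpha_{j,k} e_j$ は $\R^s$ の元であるから, $\Sigma^{r,s}(\Psi)$ の定義によりこの $f$ は $\Sigma^{r,s}(\Psi)$ に属する. 最後に成分ごとの誤差評価を足し合わせれば
\[
\sum_{j=1}^s \sup_{x \in K} |f_j(x) - g_j(x)| < \sum_{j=1}^s \varepsilon/s = \varepsilon
\]
が得られ, 所望の結論を得る.

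本質的な困難はなく, 鍵となる観察は「スカラー版ネットワーク $\Sigma^r(\Psi)$ の線形結合項 $\alpha \Psi \circ A$ を, 特定の成分だけに値を持つベクトル係数 $\alpha e_j$ を持つ項として $\Sigma^{r,s}(\Psi)$ の和に繰り込める」という, 定義から直ちに従う事実のみである. したがって本定理は実質的にスカラー版の万能近似定理そのものに帰着する.
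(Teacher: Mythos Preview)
Your proof is correct and follows essentially the same approach as the paper: approximate each component $g_j$ by an element $f_j \in \Sigma^r(\Psi)$ and then assemble the vector-valued approximant as $f = \sum_{j}\sum_{k} (\alpha_{j,k} e_j)\,\Psi\circ A_{j,k} \in \Sigma^{r,s}(\Psi)$. The only difference is that you make the $\varepsilon/s$ bookkeeping explicit, whereas the paper leaves it implicit.
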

\begin{proof}
  各$g_i \in C(\R^r,\R)$を$\Sigma^r(\Psi)$の元
  \[
  f_i = \sum_{j=1}^{Q_i} \beta_{i,j} \Psi \circ A_{i,j}
  \]
  で近似する. すると, 
  \[
  f = \sum_{i=1}^{s} \sum_{j=1}^{Q_i} \beta_{i,j} (\Psi \circ A_{i,j}) \mathbf{e}_i
  \]
  が$g$の近似を与える.  ここに$\mathbf{e}_i$は第$i$成分が$1$で他の成分がすべて$0$である$\R^s$の元である. 
\end{proof}

さて, これまで述べてきた結果はすべて中間層が$1$つののみからなるネットワークに関するものであった. そこで次に, 多層かつ多出力のネットワークの万能近似定理について述べよう. ニューラルネットワークの層数を$l \geq 2$で表すことにする. ただし, ここでは入力層は層数に含めないことにする. つまり, これまで扱っていたのは$l = 2$の場合ということである. 

\begin{defn}[$\Sigma_l^{r,s}$ネットワーク]　\\
  関数$\Psi:\R \rightarrow \R$に対して, 次のように集合$I_1^{(q)},\ldots,I_{l-1}^{(q)}, I_l ~(q \in \N)$を帰納的に定義する. 
  \[
  \begin{aligned}
  I_1^{(q)} &= \{ (\Psi \circ A_1, \ldots , \Psi \circ A_q)^{\T} \mid A_1 \ldots, A_q \in \A^r \},  \\
  I_{k+1}^{(q)} &= \{ (\Psi \circ A_1, \ldots, \Psi \circ A_q )^{\T} \circ f \mid q' \in \N , f \in I_{k}^{(q')}, A_j \in \A^{(q')} \},  \\
  I_l &= \{  (A_1 , \ldots, A_s )^{\T} \circ f \mid q' \in \N , f \in I_{l-1}^{(q')}, A \in \A^{(q')} \}. 
  \end{aligned}
  \]
  このとき, $I_l$を$\Psi$を活性化関数とする$\Sigma_l^{r,s}$ネットワークと呼び, $\Sigma_l^{r,s}(\Psi)$で表す. 
\end{defn}
より具体的に書けば$\Sigma_l^{r,s}(\Psi)$の元は次のような$\R^r$から$\R^s$への関数である: 
$x \in \R^r$に対して, $a_0 = x$とおく. 
そして, $k = 1,\ldots,l$に対して, $q_0 = r, q_l = s, q_j \in \N$とし, $A_{k,i} \in \A^{q_{k-1}} ~~(i=1,\ldots,q_k)$とする. 
また, $G_1,\ldots,G_{l-1} = \Psi$とし, $G_l = \mathrm{id}$とする. 
このとき, 次の式
  \[
  a_{k,i} = G_k (A_{k,i}(a_{k-1})) ~~(i = 1,\ldots,q_k, k = 1,\ldots,l)
  \]
にしたがって計算を繰り返した結果$a_l=(a_{l,1},\ldots,a_{l,s})^{\T} \in \R^s$を出力する. 

この定義から$\Sigma^{r,s}(\Psi) \subset \Sigma_2^{r,s}(\Psi)$がわかる. 実際, $f = \sum_{j=1}^Q \beta_j \Psi \circ A_j ~~(\beta_j \in \R^s, A_j \in A^r)$とするとき, 各$k = 1,\ldots,Q$に対して$B_k \in \A^Q$を$B_k(x) = \sum_{j=1}^Q \beta_j^{(k)}x_j$と定義すれば, $f = (B_1,\ldots,B_Q)^{\T} \circ (\Psi \circ A_1,\ldots, \Psi \circ A_Q)^{\T}$となる. このことから, $\Sigma^{r,s}(\Psi)$が万能近似能力を持つとき, $\Sigma_2^{r,s}(\Psi)$も万能近似能力を持つことがわかる. より一般に次が成り立つ. 
\begin{thm}\label{MainTheorem6}
  関数$\Psi:\R \rightarrow \R$は$\Sigma^r(\Psi)$が$C(\R^r,\R)$において広義一様収束の意味で稠密になるものとする. このとき, $\Sigma_l^{r,s}(\Psi)$は$C(\R^r,\R^s)$において広義一様収束の意味で稠密である. 
\end{thm}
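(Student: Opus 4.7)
証明の方針は$l$についての帰納法である. 基礎段階$l=2$は, 本節の冒頭で指摘された$\Sigma^{r,s}(\Psi) \subset \Sigma_2^{r,s}(\Psi)$および定理\ref{MainTheorem5}の組合せから直ちに従う. 帰納段階では, 任意の入力次元$r'$と出力次元$s'$に対して$\Sigma_l^{r',s'}(\Psi)$が$C(\R^{r'},\R^{s'})$において広義一様収束の意味で稠密であると仮定する(本定理の条件「$\Sigma^r(\Psi)$が$C(\R^r)$で稠密」は入力次元について普遍的に成り立つものと解釈する).

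任意に$g \in C(\R^r,\R^s)$, 空でないコンパクト集合$K \subset \R^r$および$\varepsilon > 0$をとる. まず定理\ref{MainTheorem5}により$f = \sum_{j=1}^Q \beta_j \Psi \circ A_j \in \Sigma^{r,s}(\Psi)~(\beta_j \in \R^s, A_j \in \A^r)$で $\sum_{i=1}^s \sup_{x \in K}|f_i(x) - g_i(x)| < \varepsilon/2$ をみたすものを取る. 次に $h := (\Psi \circ A_1, \ldots, \Psi \circ A_Q)^{\T} \in I_1^{(Q)}$ と $T: \R^Q \to \R^s, T(y) = \sum_{j=1}^Q \beta_j y_j$ に分解すれば$f = T \circ h$であり$T$はアフィン写像である. $h$の連続性から$h(K) \subset \R^Q$はコンパクトなので, 帰納法の仮定($r' = Q, s' = s$)を適用して $\tilde{T} \in \Sigma_l^{Q,s}(\Psi)$ で $\sum_{i=1}^s \sup_{y \in h(K)}|\tilde{T}_i(y) - T_i(y)| < \varepsilon/2$ をみたすものが取れる. よって三角不等式により $\tilde{T} \circ h$ が$g$を$K$上$\varepsilon$以内で近似することがわかる.

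残る非自明な点は $\tilde{T} \circ h \in \Sigma_{l+1}^{r,s}(\Psi)$ の形式的な確認である. このためには次の補題を$k$に関する構造帰納法で示せば十分である: 入力次元$Q$に関する$I_k^{(q)}$の任意の元$\phi$と$h \in I_1^{(Q)}$(入力次元$r$)に対して, $\phi \circ h$は入力次元$r$に関する$I_{k+1}^{(q)}$に属する. 基礎$k=1$では$\phi = (\Psi \circ B_1, \ldots, \Psi \circ B_q)^{\T}~(B_i \in \A^Q)$と書けるので, $I_{k+1}^{(q)}$の定義において$f = h$, $q' = Q$, $A_j = B_j$と取ることで直ちに従い, 帰納段階も$I_{k+1}^{(q)}$の再帰的定義に素直に当てはめることで示せる. この補題を $\tilde{T} = U \circ g_0$($U$はアフィン$\R^{q'} \to \R^s$, $g_0 \in I_{l-1}^{(q')}$, 入力次元$Q$)の$g_0$に適用すれば $g_0 \circ h$ は入力次元$r$に関する$I_l^{(q')}$に属し, したがって $\tilde{T} \circ h = U \circ (g_0 \circ h) \in I_{l+1} = \Sigma_{l+1}^{r,s}(\Psi)$ が得られる. 本証明の主要な技術的要点は, 再帰的に定義された$I_k^{(q)}$の階層構造の中で「先頭に$\Psi$層を挿入すると層数が$1$増える」ことを正確に形式化するこの構造補題にあり, 近似誤差そのものの評価はコンパクト集合の連続像がコンパクトであることと帰納法仮定の素直な適用で完結する.
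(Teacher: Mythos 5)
あなたの証明は正しいが, 論文の証明とは本質的に異なる経路を取っている. 論文は$s=1$に帰着した上で, スカラー出力の補助族$J_1 = \Sigma^r(\Psi)$, $J_{k+1} = \Sigma^1(\Psi)\circ J_k$を導入し, 「$C(\R)$で稠密な族と$C(\R^r)$で稠密な族の合成は$C(\R^r)$で稠密」という合成補題によって帰納し, 新しい$\Psi$層を\emph{出力側}(外側)に積んでいく. これに対してあなたは, 多出力のまま, 新しい$\Psi$層を\emph{入力側}(内側)の第一隠れ層として固定し, $h(K)\subset\R^Q$のコンパクト性を使って残りのアフィン写像$T$を入力次元$Q$の$l$層ネットワークで近似するという逆向きの分解を行っている. あなたの方法の利点は$s=1$への帰着が不要で, $I_k^{(q)}$の再帰的定義に即した「先頭に層を挿入する」構造補題が明示化される点であり, 論文の方法の利点は帰納法の仮定として次元$1$の稠密性($\Sigma^1(\Psi)$が$C(\R)$で稠密)しか要求しない点である. なお, どちらの証明も定理の文言上の仮定(固定された$r$についての稠密性)を超えた次元(論文は$r'=1$, あなたは任意の$r'=Q$)での稠密性を暗黙に用いており, あなたはこれを明示的に断っている. この解釈は正当化できる: 座標軸への制限によりある一つの次元での稠密性から$\Sigma^1(\Psi)$の$C(\R)$における稠密性が従い, 命題\ref{DensityDim1Suff}によって全次元に伝播するからである. 誤差評価($\varepsilon/2 + \varepsilon/2$の三角不等式)と構造補題の帰納はいずれも検証でき, 隙はない.
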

この定理を示すためにひとつ補題を証明しよう. 
\begin{lem}
  $\mathcal{F}$は「$\R$から$\R$への関数全体の集合」の部分集合で, $C(\R)$において広義一様収束の意味で稠密であるとし,  $\mathcal{G}$は「$\R^r$から$\R$への関数全体の集合」の部分集合で, $C(\R^r)$において広義一様収束の意味で稠密であるとする. このとき, 
  \[
  F \circ G = \{f \circ g \mid f \in \mathcal{F}, g \in \mathcal{G}\}
  \]
  は$C(\R^r)$において広義一様収束の意味で稠密である. 
\end{lem}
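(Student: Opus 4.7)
方針としては, 任意の$h \in C(\R^r)$と任意の空でないコンパクト集合$K \subset \R^r$および$\varepsilon > 0$に対して, $f \in \mathcal{F}, g \in \mathcal{G}$が存在して$\sup_{x \in K}|f(g(x)) - h(x)| < \varepsilon$となることを示すことが目標である. 素朴な発想として, まず$\mathcal{G}$の稠密性を使って$g$を$h$に広義一様の意味で近く取り, 次にその$g$の像の上で$\mathcal{F}$の元$f$を恒等写像$\mathrm{id}_{\R}$に近くなるように取るという二段階の近似で進める. 誤差の配分も単純に$\varepsilon/2 + \varepsilon/2$とすれば十分である.

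具体的な手順は以下の通りである. まず$\mathcal{G}$の$C(\R^r)$における稠密性から$g \in \mathcal{G}$を$\sup_{x \in K}|g(x)-h(x)| < \varepsilon/2$をみたすように取る. 次に, 像$g(K) \subset \R$の有界性を確認し(これは上式と$h$が$K$上で有界であることから従う), これを含むコンパクト区間$K' \subset \R$を用意する. そして, 恒等写像$\mathrm{id}_{\R}$は$C(\R)$に属するので, $\mathcal{F}$の$C(\R)$における稠密性を適用して, $f \in \mathcal{F}$を$\sup_{y \in K'}|f(y)-y| < \varepsilon/2$をみたすように選ぶ. 最後に, 任意の$x \in K$に対して$g(x) \in K'$となることに注意して, 三角不等式
\[
|f(g(x)) - h(x)| \leq |f(g(x)) - g(x)| + |g(x) - h(x)| < \varepsilon/2 + \varepsilon/2 = \varepsilon
\]
により結論を得る. $f \circ g \in \mathcal{F} \circ \mathcal{G}$なので主張通りである.

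主な障害となり得るのは, $\mathcal{G}$の元$g$が必ずしも連続とは限らないため, $g(K)$が直ちにコンパクトであるとは主張できない点である. しかし$g$を$h$に十分近く取りさえすれば, $|g(x)| \leq \sup_{y \in K}|h(y)| + \varepsilon/2$により$g$の$K$上での一様有界性が得られるので, 有界閉区間$K'$で$g(K)$を覆うことには支障がない. 本補題はこの観察さえ押さえればほぼ自動的に従う形式的な事実であり, 定理\ref{MainTheorem6}の帰納法の各ステップを動かす技術的道具として機能することになる.
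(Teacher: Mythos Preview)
Your proof is correct and follows essentially the same approach as the paper: approximate $h$ by $g \in \mathcal{G}$ on $K$, observe that $g(K)$ is bounded (hence contained in some compact set), and then approximate the identity map on that compact set by some $f \in \mathcal{F}$. The only cosmetic differences are that the paper takes $S = \mathrm{cl}(g(K))$ rather than a compact interval $K' \supset g(K)$, and uses $\varepsilon$ twice to obtain a final bound of $2\varepsilon$ rather than splitting into $\varepsilon/2 + \varepsilon/2$.
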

\begin{proof}
  空でないコンパクト集合$K \subset \R^r$を任意に取る. $h \in C(\R^r)$とし, $\varepsilon > 0$を任意に取ると, 仮定より$g \in \mathcal{G}$が存在して, $\sup_{x \in K} |g(x) - h(x)| < \varepsilon$となる. このとき, $h(K) \subset \R$はコンパクト, 特に有界であるので, $g(K)$も有界である. そこで, $g(K)$の閉包$S := \mathrm{cl}(g(K))$はコンパクトである. すると, 仮定より連続関数$\R \ni s \mapsto s \in \R$に対して, $f \in C(\R)$が存在して$\sup_{s \in S} |f(s) - s| < \varepsilon$となる. ゆえに, 任意の$x \in K$に対して, 
  \[
  \begin{aligned}
  |f(g(x)) - h(x)| 
  &\leq |f(g(x)) - g(x)| + |g(x) - h(x)| \\
  &\leq \sup_{s \in S} |f(s)-s| + \sup_{z \in K} |g(z)-h(z)| \\
  &< \varepsilon + \varepsilon = 2\varepsilon
  \end{aligned}
  \]
  である. よって, $\sup_{x \in K} |f(g(x)) - h(x)| \leq 2\varepsilon$となる. 
\end{proof}
\newtheorem*{mainresult6}{定理\ref{MainTheorem6}の証明.}
\begin{mainresult6}
  $s=1$の場合を証明すれば十分である. 
  次のように集合$J_1,\ldots,J_l$を帰納的に定義する. 
  \[
  \begin{aligned}
  J_1 &= \{ \sum_{j}^Q \beta_j \Psi \circ A_j \mid Q \in \N, A_j \in \A^r , \beta_j \in \R \},  \\
  J_{k+1} &= \{ \sum_{j}^Q \beta_j \Psi \circ A_j \circ f \mid Q \in \N , f \in J_{k}, A_j \in \A^1 , \beta_j \in \R \},  \\
  \end{aligned}
  \]
  このとき, $x \mapsto \sum_{j}\beta_j x_j$は線形写像なので$J_l \subset \Sigma_{l+1}^{r,s}(\Psi)$である. 
  そこで, $J_l$が$C(\R^r)$において広義一様収束の意味で稠密であることを示せばよい. このことを$l$に関する帰納法で示そう. まず$\Sigma^r(\Psi) = J_1$であるので$J_1$は$C(\R^r)$において広義一様収束の意味で稠密である. 次に, $J_k $が$C(\R^r)$において広義一様収束の意味で稠密であるとすると, $J_{k+1} = \Sigma^1(\Psi) \circ J_k$であり, $\Sigma^1(\Psi)$は$C(\R)$において広義一様収束の意味で稠密であるので, 前補題より$J_{k+1}$は$C(\R^r)$において広義一様収束の意味で稠密である. \qed
\end{mainresult6}

\subsection{$L^p$空間における万能近似定理}
本節では, 前章の結果を使って$\R^r$上のBorel可測関数全体の空間$\M^r := \M(\R^r,\R)$や$L^p$空間における$\Sigma^r(\Psi)$の万能近似定理を述べる. まず, $C(\R^r)$が$\M^r$において稠密であることについて述べる. 
\begin{defn}
  $\mu$を$(\R^r,\B^r)$上の有限測度とする. このとき, $f,g \in \M^r$が$\mu$-同値であるとは, $\mu(\{f \neq g\}) = 0$となることをいう. 以後, この同値関係による$\M^r$の商集合をそのまま$\M^r$で表す場合がある. 
\end{defn}
\begin{defn}
  $\mu$を$(\R^r,\B^r)$上の有限測度とする. このとき, $\rho_{\mu}:\M^r \times \M^r \rightarrow [0,\infty)$を, 
  \[
  \rho_{\mu}(f,g) := \inf \{ \varepsilon > 0 \mid \mu({|f-g|>\varepsilon}) < \varepsilon \}
  \]
  により定義する(有限測度なので$\rho_{\mu}(f,g) \leq \mu(\R^r) < \infty$となることに注意). 
\end{defn}
\begin{lem}\label{ProbabilityConvergenceEq}
  $\mu$を$(\R^r,\B^r)$上の有限測度とする. また, $(f_n)_{n=1}^{\infty}$を$\M^r$の元の列とし, $f \in \M^r$とする. このとき
  , 次の条件は同値である. 
  \begin{itemize}
      \item[(1):]$\rho_{\mu}(f_n,f) \rightarrow 0.$ 
      \item[(2):]$\forall \varepsilon > 0 , ~\mu(\{|f_n - f| > \varepsilon\}) \rightarrow 0.$ 
      \item[(3):]$\int_{\R^r} \min(|f_n-f|, 1) d\mu \rightarrow 0.$
  \end{itemize}
\end{lem}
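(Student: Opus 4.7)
方針として, 三条件の同値性を $(1) \Leftrightarrow (2)$ と $(2) \Leftrightarrow (3)$ の二段階に分けて示す計画である. 条件 (2) は測度論におけるいわゆる依測度収束の標準的な定義であり, (1) はそれを距離化する Ky Fan 距離 $\rho_{\mu}$ による表現, (3) は有界な積分を用いた表現なので, いずれも同一の収束概念の異なる表示にすぎないことを確認するのが目標となる.

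$(1) \Leftrightarrow (2)$ については $\rho_{\mu}$ の定義が $\inf$ で与えられていることに注意しながら次のように示す. $(1) \Rightarrow (2)$: 任意の $\varepsilon > 0$ を固定し $0 < \delta < \varepsilon$ を取る. 十分大きい $n$ では $\rho_{\mu}(f_n, f) < \delta$ となるから, $\inf$ の定義よりある $\varepsilon' < \delta$ が存在して $\mu(\{|f_n - f| > \varepsilon'\}) < \varepsilon' < \delta$ となる. $\varepsilon' < \varepsilon$ なので包含関係 $\{|f_n - f| > \varepsilon\} \subset \{|f_n - f| > \varepsilon'\}$ から $\mu(\{|f_n - f| > \varepsilon\}) < \delta$ を得, $\delta$ の任意性より (2) が従う. $(2) \Rightarrow (1)$: 任意の $\delta > 0$ に対して $\varepsilon = \delta/2$ として (2) を適用すれば, 十分大きい $n$ で $\mu(\{|f_n - f| > \delta/2\}) < \delta/2$ となるので, $\rho_{\mu}$ の定義より $\rho_{\mu}(f_n, f) \leq \delta/2 < \delta$ が直ちに従う.

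$(2) \Leftrightarrow (3)$ については, 有限測度 $\mu$ と $\min(\cdot, 1)$ による切り詰めを本質的に用いた両側評価による. $(2) \Rightarrow (3)$ は $0 < \varepsilon < 1$ を固定し積分領域を $\{|f_n - f| > \varepsilon\}$ とその補集合で分割して
\[
\int_{\R^r} \min(|f_n - f|, 1) \, d\mu \leq \mu(\{|f_n - f| > \varepsilon\}) + \varepsilon \mu(\R^r)
\]
を得, (2) により右辺第一項が $0$ に収束した後 $\varepsilon$ を任意に小さく取ることで結論が導かれる. 逆の $(3) \Rightarrow (2)$ は, $0 < \varepsilon < 1$ ならば $\{|f_n - f| > \varepsilon\}$ 上で $\min(|f_n - f|, 1) \geq \varepsilon$ が成り立つことによる Markov 型不等式
\[
\varepsilon \cdot \mu(\{|f_n - f| > \varepsilon\}) \leq \int_{\R^r} \min(|f_n - f|, 1) \, d\mu
\]
から直ちに従う.

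大きな障壁はないと見込まれ, いずれの含意も測度論の基本技法(集合の場合分けと Markov 型不等式)の直接的適用で済む. 強いて注意点を挙げれば, $(1) \Rightarrow (2)$ において $\rho_{\mu}(f_n, f)$ は $\inf$ で定義されていて必ずしも達成されないため $\varepsilon'$ を取り直す必要があること, および $(2) \Leftrightarrow (3)$ において $\mu$ の有限性と被積分関数の $1$ による切り詰めが両方向の評価を同時に機能させている点に留意することである.
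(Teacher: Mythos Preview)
方針・各含意の証明はいずれも正しく, 論文の証明と本質的に同じ道筋(($1$)$\Leftrightarrow$($2$) は $\rho_{\mu}$ の $\inf$ の定義と集合の包含, ($2$)$\Leftrightarrow$($3$) は積分領域の分割と Markov 型不等式)である. なお ($3$)$\Rightarrow$($2$) で $0<\varepsilon<1$ に制限している点は, $\varepsilon \geq 1$ なら $\mu(\{|f_n-f|>\varepsilon\}) \leq \mu(\{|f_n-f|>1/2\})$ により直ちに従うので補えばよい(論文もこの一行を添えている).
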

\begin{proof}　\\
  $(1) \Rightarrow (2)$: 任意に$\varepsilon > 0$を取る. 
  いま, 仮定よりある$N \in \N$が存在して任意の$n \geq N$に対して$\rho_{\mu}(f_n,f) < \varepsilon$となる. 
  すると, 任意の$n \geq N$に対して, $\rho_{\mu}$の定義より$\mu(|f_n - f| > \varepsilon_n) < \varepsilon_n$なる$0 < \varepsilon_n < \varepsilon$が取れる. 
  ゆえに, 任意の$n \geq N$に対して, $\mu(|f_n - f| > \varepsilon) \leq \mu(|f_n - f| > \varepsilon_n) < \varepsilon_n < \varepsilon$となる.
  したがって, 任意に$\delta > 0$を取ると, ある$M \in \N$が存在して任意の$n \geq M$に対して$\mu(|f_n - f| > \delta) < \delta$となる. そこで, $\varepsilon \leq \delta$の場合は, 任意の$n \geq N$に対して, $\mu(|f_n - f| > \varepsilon) < \varepsilon \leq \delta$となる. $\varepsilon > \delta$の場合は, 任意の$n \geq M$に対して$\mu(|f_n - f| > \varepsilon) \leq \mu(|f_n - f| > \delta) < \delta$となる. これは$\mu(\{|f_n - f| > \varepsilon\}) \rightarrow 0$を意味する. \\
  $(2) \Rightarrow (1)$: 任意に$\varepsilon > 0$を取る. いま, 仮定よりある$N \in \N$が存在して任意の$n \geq N$に対して$\mu(|f_n - f| > \varepsilon) < \varepsilon$となる. ゆえに, 任意の$n \geq N$に対して$\rho_{\mu}(f_n,f) \leq \varepsilon$となる. \\
  $(2) \Rightarrow (3)$: $\varepsilon > 0$を任意に取る. このとき, 仮定よりある$N \in \N$が存在して任意の$n \geq N$に対して$\mu(|f_n - f| > \varepsilon) < \varepsilon$となる. そこで, 任意の$n \geq N$に対して, 
  \[
  \begin{aligned}
  \int_{\R^r} \min(|f_n - f|,1) d\mu
  &\leq \int_{|f_n-f|>\varepsilon} 1 d\mu + \int_{|f_n-f|\leq \varepsilon} |f_n - f| d\mu \\
  &\leq \varepsilon + \varepsilon \mu(\R^r)
  \end{aligned}
  \]
  となる. $\mu$は有限測度なのでこれで$(3)$の成立が確かめられた. \\
  $(3) \Rightarrow (2)$: 任意に$1 > \varepsilon > 0$を取る. このとき, 任意の$\delta > 0$に対して, ある$N \in \N$が存在して任意の$n \geq N$に対して, $\int_{\R^r} \min(|f_n-f|,1)d\mu < \varepsilon \delta$となる. そこで, 任意の$n \geq N$に対して, 
  \[
  \begin{aligned}
  \mu(|f_n - f| > \varepsilon)
  &\leq \frac{1}{\varepsilon} \int_{|f_n - f| > \varepsilon} \min(|f_n-f|,1) d\mu \\
  &\leq \frac{1}{\varepsilon} \int_{\R^r} \min(|f_n-f|,1) d\mu \\
  &\leq \delta
  \end{aligned}
  \]
  となる. よって, $\mu(\{|f_n - f| > \varepsilon\}) \rightarrow 0$である. $\varepsilon \geq 1$の場合も$0 < \varepsilon' < 1$に対して, 
  \[
  \mu(|f_n - f| > \varepsilon) \leq \mu(|f_n - f| > \varepsilon') \rightarrow 0.
  \]
  となるので成立する. 
\end{proof}
\begin{lem}\label{DensityOfCinM}
  $\mu$を$(\R^r,\B^r)$上の有限測度とするとき, $C(\R^r)$は$\M^r$において$\rho_{\mu}$-稠密である. 
\end{lem}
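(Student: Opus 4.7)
本補題の証明方針は, 任意の$f \in \M^r$と任意の$\varepsilon > 0$に対して$g \in C(\R^r)$で$\mu(\{|f-g|>\varepsilon\}) < \varepsilon$となるものを構成することである(これにより$\rho_\mu$の定義から$\rho_\mu(f,g) \leq \varepsilon$が従う). この構成を次の三段階で順に行う.

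第一段階として$f$を有界関数で$\rho_\mu$-近似する. 切断$f_N := (-N)\vee(f\wedge N)$を考えると, $f$は$\R$値ゆえ$\bigcap_N \{|f|>N\} = \emptyset$となり, $\mu$の有限性と測度の連続性から$\mu(\{|f|>N\}) \to 0$となる. $\{|f_N-f|>0\}\subset\{|f|>N\}$より$\rho_\mu(f_N,f)\to 0$が従う. 以後$f$は有界と仮定してよい. 第二段階では, 有界なBorel可測関数が単純関数(Borel可測集合の定義関数の有限線形結合)で一様近似できるという測度論の標準事実を用い, 線形性と三角不等式から問題をさらに$f = \chi_E$ ($E \in \B_{\R^r}$)の場合に帰着させる.

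第三段階で$\chi_E$を連続関数で近似する. $\R^r$はPolish空間(完備可分距離空間)なので, その上の有限Borel測度$\mu$は正則であり, 任意の$\delta > 0$に対してコンパクト集合$K$と開集合$U$で$K \subset E \subset U$, $\mu(U \setminus K) < \delta$となるものが取れる. 点と集合との距離$d(x, A) := \inf_{a \in A}\lVert x-a\rVert$を用いて
\[
g(x) := \frac{d(x, \R^r \setminus U)}{d(x, K) + d(x, \R^r \setminus U)}
\]
と定めれば, 互いに交わらないコンパクト集合$K$と閉集合$\R^r \setminus U$の分離性から分母は$\R^r$上至るところ正となり$g \in C(\R^r)$をみたし, $0 \leq g \leq 1$, $g|_K \equiv 1$, $g|_{\R^r\setminus U} \equiv 0$となる. したがって$\{|\chi_E - g|>0\}\subset U\setminus K$より$\mu(\{|\chi_E - g|>0\}) < \delta$が得られる. あとは$\delta$と$\varepsilon$の定量的な配分を調整すれば証明が完了する.

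主な技術的要所は$\R^r$上の有限Borel測度の正則性, とくに任意のBorel集合がその内部のコンパクト集合により測度の意味で近似できる(内正則性)という事実の取り扱いにある. これはPolish空間上の有限Borel測度に関する標準結果であるが, 本論文の流れの中で明示的に引用する必要がある. それ以外の各段階は測度論の定型的議論であり, 近似を組み合わせる際の$\varepsilon, \delta$の定量的配分に注意すればよい.
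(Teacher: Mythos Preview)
Your proof is correct. The paper takes a somewhat different route: after the same truncation step, it observes that the truncated function lies in $L^1(\R^r,\mu)$ (finite measure), invokes the already-established density of $C_0(\R^r)$ in $L^1$ (補題~\ref{DensityOfC0InLp}), and then passes back to $\rho_\mu$ via the equivalent metric $\int \min(|\cdot|,1)\,d\mu$ from 補題~\ref{ProbabilityConvergenceEq}. Your argument instead works directly with $\rho_\mu$: you reduce to simple functions by uniform approximation, then to indicators $\chi_E$, and finally construct the Urysohn-type approximant by hand using regularity. This is essentially an inlining of the proof of 補題~\ref{DensityOfC0InLp} (which in the appendix also proceeds via $\chi_E$, regularity, and the same Urysohn construction), adapted to the $\rho_\mu$-metric rather than $L^1$. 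The paper's version is shorter because it reuses existing machinery; yours is more self-contained. Both hinge on the same key fact, the inner regularity of finite Borel measures on $\R^r$, which the paper provides as 系~\ref{FiniteBorelMeasOnRnIsRegular}, so you can cite that rather than appealing to general Polish-space theory.
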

\begin{proof}
  $f \in \M^r$とし, $\varepsilon > 0$を任意に取る. 優収束定理により, 十分大きな$M \geq 0$に対して, 
  \[
  \int_{\R^r} \min(|f\cdot\chi_{\{|f| \geq M \}} - f|, 1)d\mu < \varepsilon
  \]
  が成立する. ここで, $f\cdot\chi_{\{|f| \geq M \}} \in L^1(\R^r,\B^r,\mu)$であるから, $C_0(\R^r)$が$L^1(\R^r,\B^r,\mu)$において$L^1$ノルムの意味で稠密であること
  (付録の補題\ref{DensityOfC0InLp})より, ある$g \in C(\R^r)$で$\int_{\R^r} |f\cdot\chi_{\{|f| \geq M \}} - g| d\mu < \varepsilon$となる. よって, 
  \[
  \begin{aligned}
  \int_{\R^r} \min(|f-g|,1) d\mu
  &\leq \int_{R^r} \min(|f - f\cdot\chi_{\{|f| \geq M \}}|,1) d\mu \\
  &~~~ + \int_{R^r} \min(|f\cdot\chi_{\{|f| \geq M \}} - g|,1) d\mu \\
  &\leq \varepsilon + \int_{R^r} |f\cdot\chi_{\{|f| \geq M \}} - g| d\mu \\
  &\leq \varepsilon + \varepsilon = 2\varepsilon
  \end{aligned}
  \]
  となる. これより任意の$\varepsilon > 0$に対して$g \in C(\R^r)$が存在して$\rho_{\mu}(f,g) < \varepsilon$となることがわかる(補題\ref{ProbabilityConvergenceEq}の$(3) \Rightarrow (2)$の証明参照).
\end{proof}
次に$\Sigma^r(\Psi)$が$\M^r$において稠密であることの証明に使う補題を示す. 
\begin{lem}\label{CompactUniConverThenProbConver}
  $(f_n)_{n=1}^{\infty}$を$\M^r$の元の列とし, $f \in \M^r$とする. このとき, $(f_n)_{n=1}^{\infty}$が$f$に広義一様収束するならば, 任意の$(\R^r,\B^r)$上の有限測度$\mu$に対して, $\rho_{\mu}(f_n,f) \rightarrow 0$となる. 
\end{lem}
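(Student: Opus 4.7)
方針としては, すでに証明された補題\ref{ProbabilityConvergenceEq}の同値性$(1) \Leftrightarrow (2)$を用いて目標を書き換えることから始める. すなわち, $\rho_{\mu}(f_n,f) \to 0$を直接示す代わりに, 任意の$\varepsilon > 0$に対して$\mu(\{|f_n - f| > \varepsilon\}) \to 0$を示せば十分である. これを目標とする.

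第一段階では$\R^r$上の有限Borel測度$\mu$のタイトネス性を引き出しておく. 各$N \in \N$に対して閉球$K_N := \{x \in \R^r \mid \lVert x \rVert \leq N\}$を考えると, これはコンパクトで$K_N \subset K_{N+1}$かつ$\R^r = \bigcup_{N=1}^{\infty} K_N$が成り立つ. よって測度の下からの連続性より$\mu(K_N) \uparrow \mu(\R^r) < \infty$が成立する. これより任意の$\delta > 0$に対して$N$を十分大きく取れば$\mu(\R^r \setminus K_N) < \delta$とできる, という事実を準備する.

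第二段階では広義一様収束の定義を$K = K_N$に適用する. 広義一様収束の仮定から, コンパクト集合$K$上で$f_n$は$f$に一様収束するので, ある$n_0 \in \N$が存在して任意の$n \geq n_0$に対して$\sup_{x \in K} |f_n(x) - f(x)| < \varepsilon$が成立するはずである. ゆえに$\{x \in \R^r \mid |f_n(x) - f(x)| > \varepsilon\} \subset \R^r \setminus K$なので, $\mu(\{|f_n - f| > \varepsilon\}) \leq \mu(\R^r \setminus K) < \delta$が得られる. $\delta > 0$の任意性より目的の$\mu(\{|f_n - f| > \varepsilon\}) \to 0$が従う.

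この証明における主な技術的な障害は特になく, 本質的には$\R^r$が$\sigma$-コンパクトであることと測度の連続性からタイトネスを取り出す部分と, 広義一様収束の定義を素直に適用する部分のみで完結する. 強いて注意点を挙げれば, $\mu$の有限性という仮定($\sigma$-有限では不十分)が$\mu(K_N) \uparrow \mu(\R^r) < \infty$から$\mu(\R^r \setminus K_N) \to 0$を導く際に本質的に効いていること, および広義一様収束ではコンパクト集合ごとに$n_0$が変わりうるため, 先に$K$を固定してから$n_0$を選ぶという論理の順序を誤らないことの二点である.
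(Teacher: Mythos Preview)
Your proof is correct and follows essentially the same strategy as the paper: obtain a compact set $K$ with $\mu(\R^r \setminus K)$ small, use uniform convergence on $K$, and conclude via 補題\ref{ProbabilityConvergenceEq}. The only minor differences are that the paper invokes regularity (系\ref{FiniteBorelMeasOnRnIsRegular}) to produce $K$ rather than your direct $\sigma$-compactness argument with closed balls, and the paper verifies criterion~(3) of 補題\ref{ProbabilityConvergenceEq} (the $\int \min(|f_n-f|,1)\,d\mu$ bound) whereas you verify criterion~(2).
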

\begin{proof}
  任意に$\varepsilon > 0$を取る. 
  $\mu$は有限測度なので付録の系\ref{FiniteBorelMeasOnRnIsRegular}より正則である.
  したがって, あるコンパクト集合$K \in \R^r$が存在して$\mu(\R^r \setminus K) < \varepsilon$となる. 
  そこで仮定より, ある$N \in \N$が存在して, 任意の$n \geq N$に対して, 
  \[
  \sup_{x \in K} |f_n(x) - f(x)| < \varepsilon
  \]
  となる. ゆえに, 任意の$n \geq N$に対して, 
  \[
  \begin{aligned}
  \int_{\R^r} \min(|f_n-f|,1)d\mu 
  &= \int_{K} \min(|f_n-f|,1)d\mu + \int_{\R^r \setminus K} \min(|f_n-f|,1)d\mu \\
  &\leq \int_{K} |f_n-f| d\mu + \int_{\R^r \setminus K} 1 d\mu \\
  &\leq \varepsilon \mu(\R^r) + \varepsilon \\
  \end{aligned}
  \]
  となる. よって, $\int_{\R^r} \min(|f_n-f|,1)d\mu  \rightarrow 0$なので, 補題\ref{ProbabilityConvergenceEq}より$\rho_{\mu}(f_n,f) \rightarrow 0$である. 
\end{proof}

\begin{thm}\label{UATinMr}
  関数$\Psi:\R \rightarrow \R$は$\Sigma^r(\Psi)$が$C(\R^r)$において広義一様収束の意味で稠密になるものとする. このとき, 任意の$(\R^r,\B^r)$上の有限測度$\mu$に対して, $\Sigma^r(\Psi)$は$\M^r$において$\rho_{\mu}$-稠密である.
\end{thm}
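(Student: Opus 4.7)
方針としては, $\M^r$ から $\Sigma^r(\Psi)$ への近似を, 連続関数の空間 $C(\R^r)$ を経由した二段階の近似として構成する. すなわち, 与えられた $f \in \M^r$ をまず連続関数で $\rho_\mu$ の意味で近似し, 次に仮定より得られる広義一様近似が $\rho_\mu$ の意味での近似に変換できることを利用する.

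具体的には, 任意に $f \in \M^r$ と $\varepsilon > 0$ を取る. まず補題\ref{DensityOfCinM}により, ある $g \in C(\R^r)$ が存在して $\rho_\mu(f,g) < \varepsilon/2$ となる. 次に定理の仮定より $\Sigma^r(\Psi)$ は $C(\R^r)$ において広義一様収束の意味で稠密なので, $\Sigma^r(\Psi)$ の元の列 $(h_n)_{n=1}^{\infty}$ で $g$ に広義一様収束するものが取れる. 補題\ref{CompactUniConverThenProbConver}から $\rho_\mu(h_n,g) \to 0$ が従うので, 十分大きな $n$ について $\rho_\mu(h_n,g) < \varepsilon/2$ が成立する.

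最後にこれらを三角不等式で結合して $\rho_\mu(f,h_n) < \varepsilon$ を結論する. ここで用いるのは $\rho_\mu$ がKy Fan計量として三角不等式をみたすという事実であり, これは $\mu(|a|>s) < s$ かつ $\mu(|b|>t) < t$ ならば $\{|a+b|>s+t\} \subset \{|a|>s\} \cup \{|b|>t\}$ より $\mu(|a+b|>s+t) < s+t$ となることから直ちに従う. あるいは補題\ref{ProbabilityConvergenceEq}の同値条件$(3)$に現れる量 $d_\mu(f,g) := \int_{\R^r} \min(|f-g|,1)d\mu$ の劣加法性($\min(|f-h|,1) \leq \min(|f-g|,1) + \min(|g-h|,1)$ から明らか)を経由して扱うこともでき, こちらの道筋の方がやや素直かもしれない.

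本定理の本質的な内容はすべて既証の補題\ref{DensityOfCinM}と補題\ref{CompactUniConverThenProbConver}に含まれており, 証明はそれらを組み合わせる形式的な作業となる. 特別な難所は見当たらず, 強いて挙げれば $\rho_\mu$ の三角不等式(あるいは $d_\mu$ の劣加法性)を明示的に確認しておくことが唯一の技術的要素である.
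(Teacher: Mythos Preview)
提案は正しく, 論文の証明と本質的に同じ構造である: 補題\ref{DensityOfCinM}と補題\ref{CompactUniConverThenProbConver}を組み合わせて二段階の近似を行う点は完全に一致しており, あなたが明示的に確認している $\rho_\mu$ の三角不等式は論文の短い証明では暗黙に用いられているだけなので, その点ではあなたの記述の方が丁寧である.
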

\begin{proof}
  $\Sigma^r(\Psi)$が$C(\R^r)$において広義一様収束の意味で稠密であることと補題\ref{CompactUniConverThenProbConver}より$\Sigma^r(\Psi)$は$C(\R^r)$において$\rho_{\mu}$-稠密である. このことと 補題\ref{DensityOfCinM}を合わせれば$\Sigma^r(\Psi)$は$\M^r$において$\rho_{\mu}$-稠密であることがわかる. 
\end{proof}
次の定理は$\M^r$をsupノルムの意味で$\Sigma^r(\Psi)$により十分よく近似できることを示すものである. 
\begin{thm}\label{DensityOfNNinMr}
  関数$\Psi:\R \rightarrow \R$は$\Sigma^r(\Psi)$が$C(\R^r)$において広義一様収束の意味で稠密になるものとする.  また
  $\mu$を$(\R^r,\B^r)$上の有限測度とする. このとき, 任意の$g \in \M^r$と$\varepsilon > 0$に対して, $f \in \Sigma^r(\Psi)$とコンパクト集合$K \subset \R^r$が存在して, $\mu(K) > \mu(\R^r) - \varepsilon$かつ$\forall x \in K , |f(x) - g(x)| < \varepsilon$となる. 
\end{thm}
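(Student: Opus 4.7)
方針としては, 直前の定理\ref{UATinMr}によって$\Sigma^r(\Psi)$の$\M^r$における$\rho_\mu$-稠密性はすでに得られているので, これを出発点にする. $\rho_\mu$による近似から, 今回求めたい「大きな測度をもつコンパクト集合上でのsupノルム型近似」への橋渡しをするためには, 例外集合$\{|f-g|>\varepsilon'\}$の補集合の内側から$\mu$の内部正則性(付録の系\ref{FiniteBorelMeasOnRnIsRegular})を用いてコンパクト部分集合を取り出せばよい.

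具体的な手順は次の通りである. 任意に$g \in \M^r$と$\varepsilon > 0$を取る. まず定理\ref{UATinMr}より, $f \in \Sigma^r(\Psi)$で$\rho_\mu(f,g) < \varepsilon/2$なるものを取る. $\rho_\mu$が下限として定義されていることから, ある$\varepsilon'$が存在して$0 < \varepsilon' < \varepsilon/2$かつ$\mu(\{|f-g| > \varepsilon'\}) < \varepsilon'$が成り立つ. そこで$E := \{x \in \R^r \mid |f(x) - g(x)| \leq \varepsilon'\}$とおくと, $f,g$はいずれもBorel可測であるから$E$はBorel集合であり, かつ$\mu(\R^r \setminus E) < \varepsilon'$をみたす. 次に$\mu$の内部正則性より, コンパクト集合$K \subset E$で$\mu(E \setminus K) < \varepsilon - \varepsilon'$なるものが取れる. すると
\[
\mu(\R^r \setminus K) \leq \mu(\R^r \setminus E) + \mu(E \setminus K) < \varepsilon' + (\varepsilon - \varepsilon') = \varepsilon
\]
となるので$\mu(K) > \mu(\R^r) - \varepsilon$が得られる. さらに$K \subset E$より, 任意の$x \in K$に対して$|f(x) - g(x)| \leq \varepsilon' < \varepsilon$が成立する. これが求めるものである.

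主たる注意点は二つある. ひとつは$\rho_\mu(f,g) < \varepsilon/2$から$\varepsilon'<\varepsilon/2$なる具体的な$\varepsilon'$を取り出す部分であるが, これは$\rho_\mu$の下限としての定義から直ちに従う. もうひとつは$E$から内部にコンパクト集合$K$を取り出す点で, これは有限Borel測度の正則性に依拠する. 別のアプローチとしては, Lusinの定理で$g$から$\mu$の測度の大部分を占める閉集合上の連続関数を取り, Tietzeの拡張定理で$\R^r$全体に連続に拡張したのち仮定の$C(\R^r)$稠密性を直接適用するという手もあるが, 直前に示した$\rho_\mu$-稠密性をそのまま活用する上記の流れが最も素直である.
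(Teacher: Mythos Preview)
Your proof is correct but takes a genuinely different route from the paper's. The paper applies Lusin's theorem (命題\ref{LusinTheorem}) directly to $g$ to obtain a compact set $K$ with $\mu(K) > \mu(\R^r) - \varepsilon$ on which $g|_K$ is continuous, then invokes Tietze's extension theorem (定理\ref{TietzeTheorem}) to extend $g|_K$ to some $g' \in C(\R^r)$, and finally uses the assumed $C(\R^r)$-density of $\Sigma^r(\Psi)$ on the fixed compact $K$. In other words, the paper uses exactly the alternative approach you mention at the end of your write-up. Your argument instead leverages the immediately preceding 定理\ref{UATinMr} to get $\rho_\mu$-closeness first, and then extracts the compact set afterwards via the inner regularity of $\mu$ (系\ref{FiniteBorelMeasOnRnIsRegular}). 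Your route is more economical in the sense that it reuses 定理\ref{UATinMr} and avoids both Lusin and Tietze; the paper's route is more self-contained, not relying on 定理\ref{UATinMr} at all. Both are clean and short; the choice is largely a matter of taste.
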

証明には次の事実を使う(宮島\cite{miyajima}の第7章2節などを参照されたい). 
\begin{thm}[Tietzeの拡張定理]\label{TietzeTheorem}　\\
  $X$はHausdorff空間で, 任意の交わらない閉集合$F_1,F_2 \subset X$に対して$F_1 \subset U_1, F_2 \subset U_2$かつ$U_1 \cap U_2 = \emptyset$となる開集合$U_1,U_2 \subset X$が存在するとする. このとき, 閉集合$A \subset X$上の任意の実数値連続関数$f$に対して$X$上の実数値連続関数$f'$が存在して$f = f'|_A$が成立する. つまり, $f$を$X$上の実数値連続関数に拡張できる. 
\end{thm}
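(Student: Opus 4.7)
証明の全体的な方針は, 仮定された正規性からUrysohnの補題を導き, それを反復的に用いて有界な場合の$f$を拡張し, さらに同相写像を介して非有界な場合を有界な場合に帰着させる, というものである.

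第一段階として, Urysohnの補題, すなわち交わらない閉集合$F_1, F_2 \subset X$に対して$F_1$上で$0$, $F_2$上で$1$なる連続関数$g:X \rightarrow [0,1]$の存在を示す. 仮定された正規性を用いて, 二進有理数$t \in (0,1)$で添字付けされた開集合族$\{U_t\}$を, $F_1 \subset U_t$かつ$\mathrm{cl}(U_t) \cap F_2 = \emptyset$かつ$s < t \Rightarrow \mathrm{cl}(U_s) \subset U_t$をみたすように帰納的に構成する(まず$F_1$と$F_2$の間に$U_{1/2}$を挟み, 次に$F_1$と$\mathrm{cl}(U_{1/2})^c$の間に$U_{1/4}$, さらに$\mathrm{cl}(U_{1/2})$と$F_2$の間に$U_{3/4}$を挟む, という具合である). そして$g(x) := \inf\{t \mid x \in U_t\}$(ただし$x \in F_2$なら$g(x):=1$)と定めれば, 任意の$a,b \in \R$に対して$g^{-1}((a,\infty))$および$g^{-1}((-\infty,b))$が開集合であることから$g$の連続性が従う.

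第二段階として, $f$が有界な場合, 例えば$|f| \leq M$の場合を処理する. $A_{-} := f^{-1}([-M,-M/3])$および$A_{+} := f^{-1}([M/3,M])$は$A$の閉集合であり, $A$自身が$X$の閉集合なのでこれらは$X$の閉集合でもある. Urysohnの補題より連続関数$g_1:X \rightarrow [-M/3,M/3]$で$A_{-}$上で値$-M/3$, $A_{+}$上で値$M/3$をとるものが取れ, $A$上で$|f - g_1| \leq 2M/3$となる. これを$f - g_1|_A$(これは$A$上で$|\cdot| \leq 2M/3$)に繰り返し適用することで, $\sup_X |g_n| \leq (M/3)(2/3)^{n-1}$かつ$A$上で$|f - \sum_{k=1}^n g_k| \leq (2/3)^n M$をみたす連続関数列$(g_n)$が得られる. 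Weierstrassの$M$判定法により級数$\sum_{n=1}^{\infty} g_n$は$X$上一様収束し, 連続関数の一様収束極限ゆえその和は連続となって$f$の拡張を与える.

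第三段階として, 非有界な場合を有界な場合に帰着させる. 同相写像$\varphi:\R \rightarrow (-1,1),~\varphi(t) = t/(1+|t|)$を用いると$\varphi \circ f:A \rightarrow (-1,1)$は有界なので, 第二段階により$X$上の連続関数$h:X \rightarrow [-1,1]$に拡張できる. 一般に$B := h^{-1}(\{-1,1\})$は$A$と交わらない閉集合となり得るので, Urysohnの補題を再度用いて, $A$上で$1$, $B$上で$0$をとる連続関数$k:X \rightarrow [0,1]$を取る. すると$h \cdot k$は$(-1,1)$に値を取る連続関数となるので, $f' := \varphi^{-1} \circ (h k)$が$X$上の連続関数として定義でき, $A$上では$k = 1$ゆえ$f' = \varphi^{-1} \circ (\varphi \circ f) = f$となり所望の拡張を与える. 最大の難所はUrysohnの補題における$g$の連続性の丁寧な検証と, 非有界な場合における「値が$\pm 1$に到達してしまう可能性のある閉集合$B$」を$A$から切り離す$k$の構成にある.
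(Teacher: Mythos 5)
本論文はこの定理を自ら証明しておらず, 宮島\cite{miyajima}の第7章2節への参照で済ませている. あなたの証明は, 正規性からUrysohnの補題を導き, 有界な場合を等比級数的な逐次近似($\sup_X|g_n| \leq (M/3)(2/3)^{n-1}$)で処理し, 非有界な場合を同相写像$\varphi$と$B = h^{-1}(\{-1,1\})$を$A$から切り離すUrysohn関数$k$で有界な場合に帰着させるという, 教科書的な標準の議論を正しく再現しており, 各段階($A_{\pm}$が$X$の閉集合であること, $A$上で$|f-g_1|\leq 2M/3$となること, $hk$が$(-1,1)$に値を取ること)の検証も妥当である. 論理的な欠落は見当たらない.
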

\renewcommand{\proofname}{\bf 定理\ref{DensityOfNNinMr}の証明.}
\begin{proof}
  Lusinの定理(付録の命題\ref{LusinTheorem})より, コンパクト集合$K \subset \R^r$が存在して, $\mu(K) > \mu(\R^r) - \varepsilon$かつ$K$上の関数$g|K$は連続となる.
  そこでTietzeの拡張定理より, $g' \in C(\R^r)$が存在して$g|K = g'|K$となる. いま $\Sigma^r(\Psi)$は$C(\R^r)$において広義一様収束の意味で稠密であるので, $f \in \Sigma^r(\Psi)$が存在して, 
  \[
  \sup_{x \in K} |f(x) - g(x)| = \sup_{x \in K} |f(x) - g'(x)| < \varepsilon. 
  \]
\end{proof}
\renewcommand{\proofname}{\bf 証明.}
さて, $L^p$空間における万能近似定理に話を移そう. 
\begin{defn}
  $f,g \in L^p(\R,\B^r,\mu)$に対して, 
  \[
  d_{\mu}^p(f,g) = \left( \int_{\R^r} |f-g|^p d\mu \right)^{1/p}
  \]
  と定義する. $d_{\mu}^p$は$L^p(\R^r,\B^r,\mu)$上の距離である. 
\end{defn}
\begin{thm}
  関数$\Psi:\R \rightarrow \R$は$\Sigma^r(\Psi)$が$C(\R^r)$において広義一様収束の意味で稠密になるものとする. また$\mu$を$(\R^r,\B^r)$上の有限測度とする. 
  このとき, $\mu(K)=\mu(\R^r)$なるコンパクト集合$K \subset \R^r$が存在すれば, 任意の$p \in [1,\infty)$について,  $\Sigma^r(\Psi)$は$L^p(\R^r,\B^r,\mu)$において$d_{\mu}^p$-稠密である. 
\end{thm}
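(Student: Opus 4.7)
The plan is a three-step reduction: truncate $g$, continuize via Lusin plus Tietze, then apply the $C(\R^r)$-density hypothesis. The decisive consequence of the assumption $\mu(K) = \mu(\R^r)$ is that any $\mu$-integral collapses to an integral over $K$; hence a uniform approximation of a bounded target on the compact set $K$ by some $f \in \Sigma^r(\Psi)$ automatically produces a $d_\mu^p$-approximation with loss factor $\mu(\R^r)^{1/p}$.

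Fix $g \in L^p(\R^r, \B^r, \mu)$ and $\varepsilon > 0$. First I would truncate by setting $g_M := \mathrm{sgn}(g)\min(|g|, M)$. Since $|g_M - g|^p \leq (2|g|)^p \in L^1(\R^r, \B^r, \mu)$, the dominated convergence theorem gives $d_\mu^p(g, g_M) \to 0$ as $M \to \infty$, so we may replace $g$ by a bounded $g_M$ at the cost of $\varepsilon/3$. Next, apply Lusin's theorem (as used in the proof of 定理\ref{DensityOfNNinMr}) to the bounded measurable $g_M$ to obtain, for an auxiliary $\delta > 0$, a compact set $K_1 \subset \R^r$ with $\mu(\R^r \setminus K_1) < \delta$ and $g_M|_{K_1}$ continuous. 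Tietze's extension theorem (定理\ref{TietzeTheorem}) then yields $\tilde{g} \in C(\R^r)$ with $\tilde{g}|_{K_1} = g_M|_{K_1}$, and post-composing with the clip $t \mapsto \mathrm{sgn}(t)\min(|t|, M)$ keeps $\tilde{g}$ continuous, preserves it on $K_1$, and enforces $\lVert \tilde{g} \rVert_{L^\infty} \leq M$.

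Finally, by the hypothesis on $\Sigma^r(\Psi)$, I can choose $f \in \Sigma^r(\Psi)$ with $\sup_{x \in K} |f(x) - \tilde{g}(x)| < \eta$ for a parameter $\eta$ to be fixed. Using $\mu(\R^r \setminus K) = 0$ and the triangle inequality for $d_\mu^p$, one splits
\[
d_\mu^p(f, g) \leq d_\mu^p(f, \tilde{g}) + d_\mu^p(\tilde{g}, g_M) + d_\mu^p(g_M, g),
\]
where the first summand is bounded by $\eta \cdot \mu(\R^r)^{1/p}$, the middle summand reduces to an integral over $K \setminus K_1$ (since $\tilde{g} = g_M$ on $K_1$) with integrand pointwise at most $2M$, giving the bound $2M \delta^{1/p}$, and the third summand is at most $\varepsilon/3$ by construction.

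The main obstacle is purely one of bookkeeping: the parameters must be fixed in the correct order, namely $M$ first (which determines both the truncation error and the $L^\infty$ bound on $\tilde{g}$), then $\delta$ small relative to $M$ so that $2M\delta^{1/p} < \varepsilon/3$, and only then $\eta$ so that $\eta\mu(\R^r)^{1/p} < \varepsilon/3$. No single estimate is delicate; the genuine conceptual step is the Lusin–Tietze bridge, which is what converts a merely measurable function into a continuous target accessible to the locally-uniform density hypothesis on $\Sigma^r(\Psi)$.
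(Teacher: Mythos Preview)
Your argument is correct, but it takes a longer route than the paper. The paper's proof is a two-liner: it invokes the appendix lemma that $C_0(\R^r)$ is $d_\mu^p$-dense in $L^p(\R^r,\B^r,\mu)$ (命題\ref{DensityOfC0InLp}, applicable since finite Borel measures on $\R^r$ are regular) to pass from $g$ to some $h \in C(\R^r)$ with $d_\mu^p(g,h)<\varepsilon$, and then approximates $h$ uniformly on $K$ by an $f \in \Sigma^r(\Psi)$; the condition $\mu(K)=\mu(\R^r)$ immediately converts uniform closeness on $K$ into $d_\mu^p$-closeness. Your truncation--Lusin--Tietze chain is essentially a hands-on reconstruction of the density of bounded continuous functions in $L^p$, so you are re-proving what the paper delegates to a standard lemma. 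The advantage of your route is that it is self-contained and makes the parameter dependencies explicit; the advantage of the paper's route is economy, since the $C_0 \subset L^p$ density is already available in the appendix and no bookkeeping with $M$, $\delta$, $\eta$ is needed.
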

\begin{proof}
  $g \in L^p(\R^r,\B^r,\mu)$と$\varepsilon > 0$を任意に取る. 
  付録の補題\ref{DensityOfC0InLp}より $C_0(\R^r)$は\\
  $L^p(\R^r,\B^r,\mu)$において$d_{\mu}^p$-稠密であるので, $h \in C(\R^r)$で$d_{\mu}^p(g,h) < \varepsilon$なるものが取れる. いま, $\Sigma^r(\Psi)$は$C(\R^r)$において広義一様収束の意味で稠密であるので, $f \in \Sigma^r(\Psi)$が存在して$\sup_{x \in K} |h(x) - f(x)| < \varepsilon$となる. よって, $\mu(K) = \mu(\R^r)$であることに注意すると, 
  \[
  \begin{aligned}
  d_{\mu}^p(g,f) \leq d_{\mu}^p(g,h) + d_{\mu}^p(h,f) 
  \leq \varepsilon + (\int_{K} \varepsilon^p d\mu)^{1/p} \leq \varepsilon + \varepsilon (\mu(\R^r))^{1/p}
  \end{aligned}
  \]
  となる. 
\end{proof}
\begin{cor}
  関数$\Psi:\R \rightarrow \R$は$\Sigma^r(\Psi)$が$C(\R^r)$において広義一様収束の意味で稠密になるものとする. また, $K \subset \R^r$をコンパクト集合とし$\mu$を$(K,\B_{K})$上の有限測度とする. このとき, 任意の$p \in [1,\infty)$について,  $\Sigma^r(\Psi)$(を$K$に制限したもの)は$L^p(K,\B_{K},\mu)$において$d_{\mu}^p$-稠密である. ただし, $\B_{K}$は$K$上のBorel $\sigma$-加法族であり, $\B_{K} = \{K \cap A \mid A \in \B^r\}$を満たす. 
\end{cor}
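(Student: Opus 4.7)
私の計画は, 直前の定理にうまく帰着させることである. 鍵となるアイデアは, $(K,\B_K)$上の測度$\mu$を$\R^r$全体の測度へと「ゼロ拡張」し, 直前の定理の枠組みに持ち込むことである. $K$はコンパクトで, 測度は$K$に「集中」した形に持ち上げられるので, 直前の定理の仮定「$\mu(K) = \mu(\R^r)$なるコンパクト集合$K$が存在する」が自然にみたされる.

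具体的な手順は次の通りである. まず$\tilde{\mu}(A) := \mu(A \cap K) ~~(A \in \B^r)$により$(\R^r,\B^r)$上の有限測度$\tilde{\mu}$を定義する. $\B_K = \{K \cap A \mid A \in \B^r\}$であるから$A \cap K \in \B_K$は意味を持ち, $\tilde{\mu}$は$\B^r$上の有限測度をなし, $\tilde{\mu}(K) = \tilde{\mu}(\R^r) = \mu(K)$をみたす. 次に, 任意の$g \in L^p(K,\B_K,\mu)$に対して, $K$の外では$0$として$\R^r$全体に拡張した関数$\tilde{g}$を考える. これは$\R^r$上$\B^r$-可測であり,
\[
\int_{\R^r} |\tilde{g}|^p d\tilde{\mu} = \int_K |g|^p d\mu < \infty
\]
なので$\tilde{g} \in L^p(\R^r,\B^r,\tilde{\mu})$に属する.

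そこで直前の定理を$\tilde{\mu}$と$\tilde{g}$に適用すれば, 任意の$\varepsilon > 0$に対して$f \in \Sigma^r(\Psi)$が存在して$d_{\tilde{\mu}}^p(\tilde{g},f) < \varepsilon$となる. $\tilde{\mu}$が$K$の外で$0$を取ることから,
\[
d_{\mu}^p(g,f|_K) = \left( \int_K |g - f|^p d\mu \right)^{1/p} = \left( \int_{\R^r} |\tilde{g} - f|^p d\tilde{\mu} \right)^{1/p} = d_{\tilde{\mu}}^p(\tilde{g},f) < \varepsilon
\]
となり, $\Sigma^r(\Psi)$の$K$への制限が$L^p(K,\B_K,\mu)$で$d_{\mu}^p$-稠密であることがわかる.

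本質的な障害はないと見ている. 注意すべき細部として, $\tilde{\mu}$が実際に$\B^r$上の測度の公理(とくに$\sigma$-加法性)をみたすことと, $\tilde{g}$の$\tilde{\mu}$-積分が$g$の$\mu$-積分に一致することが挙げられるが, これらはいずれも$\tilde{\mu}$の定義および$\tilde{\mu}(\R^r \setminus K) = 0$から標準的な測度論の議論で直ちに従う.
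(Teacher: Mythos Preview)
Your proposal is correct and follows essentially the same approach as the paper: extend $\mu$ to $\tilde{\mu}$ on $(\R^r,\B^r)$ by $\tilde{\mu}(A)=\mu(A\cap K)$, extend the target function by zero, apply the preceding theorem, and translate the resulting $L^p(\tilde{\mu})$-approximation back to $L^p(\mu)$ via the identity $d_{\tilde{\mu}}^p(\tilde{g},f)=d_{\mu}^p(g,f|_K)$. The paper's proof is identical up to the names of the functions.
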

\begin{proof}
  $(\R^r,\B^r)$上の有限測度$\tilde{\mu}$を$\tilde{\mu}(A) = \mu(A \cap K)$で定め, $f \in L^p(K,\B_{K},\mu)$に対し$\tilde{f} \in L^o(\R^r,\B^r,\tilde{\mu})$を
  \[
  \tilde{f}(x) = 
  \begin{cases}
  f(x) ~~~(x \in K)\\
  0 ~~~~~~~~(\mathrm{otherwise})
  \end{cases}
  \]
  で定める. このとき, すぐ上の定理より, 任意の$\varepsilon > 0$に対して. $g \in \Sigma^r(\Psi)$が存在して, 
  \[
  \begin{aligned}
  \varepsilon > d_{\tilde{\mu}}^p(\tilde{f},g) = \left( \int_{\R^r} |\tilde{f}-g|^p d_{\tilde{\mu}} \right)^{1/p} = \left( \int_{K} |f-g|^p d\mu  \right)^{1/p} = d_{\mu}^p(f,g|_{K})
  \end{aligned}
  \]
  となる. 
\end{proof}

以下はLeshno et al.の結果の系である. 
\begin{cor}
  $\mu$を$(\R^r,\mathcal{B}_{\R^r})$上の有限測度で, $\mu(K)=\mu(\R^r)$なるコンパクト集合$K \subset \R^r$が存在するとする. このとき, $\Psi \in \mathcal{M}(\R)$に対して,  $\Psi$がどんな多項式関数ともa.e.で一致することがないことと $\Sigma^r(\Psi)$が$L^p(\R^r,\mathcal{B}_{\R^r},\mu) ~(p \geq 1)$で稠密であることは同値である. 
\end{cor}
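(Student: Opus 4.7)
The corollary combines Leshno et al.'s theorem (Theorem \ref{LeshnoMainResult1}) with the $L^p$-density theorem proved just above. I would argue the two implications of the equivalence separately.

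For the forward direction, suppose $\Psi \in \mathcal{M}(\R)$ does not agree Lebesgue-a.e.\ with any polynomial. Then Leshno et al.'s theorem implies that $\Sigma^r(\Psi)$ is dense in $C(\R^r)$ in the sense of uniform convergence on compacts. Since $\mu$ is a finite Borel measure on $\R^r$ with a compact $K \subset \R^r$ satisfying $\mu(K) = \mu(\R^r)$, the hypotheses of the preceding $L^p$-density theorem are met, which directly yields that $\Sigma^r(\Psi)$ is $d_\mu^p$-dense in $L^p(\R^r, \mathcal{B}_{\R^r}, \mu)$ for every $p \in [1, \infty)$.

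For the converse, I would argue by contrapositive: assume $\Psi$ agrees Lebesgue-a.e.\ on $\R$ with a polynomial $P$ of degree $n$, and show $\Sigma^r(\Psi)$ is not $d_\mu^p$-dense. Letting $N = \{t \in \R : \Psi(t) \neq P(t)\}$ (a Lebesgue-null subset of $\R$), for any affine $A(x) = w^{\T} x + b$ with $w \neq 0$ the preimage $A^{-1}(N) \subset \R^r$ is Lebesgue-null (decompose $\R^r$ along the direction of $w$ and apply Fubini), while for $w = 0$ the map $\Psi \circ A$ is a constant. Hence every element of $\Sigma^r(\Psi)$ coincides Lebesgue-a.e.\ on $\R^r$ with a polynomial of degree at most $n$. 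Transferring this to $\mu$-a.e.\ agreement confines the $L^p(\mu)$-closure of $\Sigma^r(\Psi)$ to the finite-dimensional subspace of (classes of) polynomials of degree $\leq n$; a polynomial of degree $n+1$, such as $x_1^{n+1}$, then witnesses an element of $L^p(\mu)$ outside this closure, contradicting density.

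The delicate step is the last transfer from Lebesgue-a.e.\ on $\R^r$ to $\mu$-a.e., which is immediate when $\mu$ is absolutely continuous with respect to $r$-dimensional Lebesgue measure but requires additional care for singular $\mu$; this is where the compact-support hypothesis on $\mu$ (together with whatever regularity is implicit in the intended application) most naturally enters, and it is the only genuine obstacle: everything else is a direct chaining of the two previously established results.
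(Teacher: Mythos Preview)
Your overall strategy matches the paper's proof exactly. For the implication ``non-polynomial $\Rightarrow$ dense'' both you and the paper chain Theorem~\ref{LeshnoMainResult1} with the immediately preceding $L^p$-density theorem; for the converse, both argue that when $\Psi$ agrees Lebesgue-a.e.\ with a degree-$n$ polynomial, every element of $\Sigma^r(\Psi)$ agrees Lebesgue-a.e.\ on $\R^r$ with a polynomial of degree $\leq n$, and hence a degree-$(n+1)$ polynomial cannot be approximated in $L^p(\mu)$. The paper writes this last step in a single line and does not isolate the passage from Lebesgue-a.e.\ to $\mu$-a.e.

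Your instinct to flag that passage is correct, and it is a genuine gap---not only in your writeup but in the paper's as well. The converse implication is actually false under the stated hypotheses. Take $r=1$, $\mu=\delta_0$ (which certainly satisfies $\mu(\{0\})=\mu(\R)$), and $\Psi(x)=x^2$. Then $\Psi$ \emph{is} a polynomial, yet the map $x\mapsto \beta\,\Psi(0\cdot x+1)=\beta$ already shows that $\Sigma^1(\Psi)$, evaluated at $0$, realizes every real number, so $\Sigma^1(\Psi)$ is dense in $L^p(\delta_0)\cong\R$. The same obstruction defeats your witness $x_1^{n+1}$: for such $\mu$ it is $\mu$-a.e.\ equal to a low-degree polynomial. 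The compact-support assumption on $\mu$ is irrelevant here; what is really needed is that distinct polynomials remain $\mu$-a.e.\ distinct (for which absolute continuity of $\mu$ with respect to Lebesgue measure on some open set would suffice). So your diagnosis that ``whatever regularity is implicit'' must intervene is exactly right, but the required regularity is strictly stronger than what the corollary assumes.
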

\begin{proof}
  $\Psi$がある$m$次多項式関数と$a.e.$で一致するとすると, $\Sigma^r(\Psi)$は$r$変数の$m$次以下の多項式関数とa.e.で一致する関数全体の集合になるので$L^p(\R^r,\mathcal{B}_{\R^r},\mu)$において$m+1$次の多項式関数を近似することはできない($\mu(K)=\mu(\R^r)$ゆえ多項式関数は$L^p(\mu)$に属することに注意). 逆に$\Psi$がどんな多項式関数ともa.e.で一致することがないとし, $f \in L^p(\R^r,\mathcal{B}_{\R^r},\mu)$をとる. いま,  付録の命題\ref{DensityOfC0InLp}より$C(K)$は$L^p(K,\mu)$で稠密であるので, $g \in C(K)$が存在して, 
  \[
  \lVert f-g \rVert_{L^p}^p = \int_{\R^r} |f-g|^p d\mu = \int_{K} |f-g| d\mu < (\varepsilon/2)^p
  \]
  となる. 定理\ref{LeshnoMainResult1}より$\Sigma^r(\Psi)$は$C(K)$において稠密であるので, $h \in \Sigma^r(\Psi))$が存在して, 
  \[
  \sup_{x \in K} |g(x)-h(x)| < \varepsilon/{\mu(K)^{1/p}}
  \]
  となる. よって, 
  \[
  \lVert f - h \rVert_{L^p} \leq \lVert f - g \rVert_{L^p} + \lVert g - h \rVert_{L^p} \leq \varepsilon.
  \]
\end{proof}
以上の結果は多層・多出力の場合も成立する. 証明は$C(\R^r,\R^s)$の場合と同様にできる. 
\begin{defn}
  次のように定める. 
  \[
  \begin{aligned}
  &M^{r,s} = M(\R^r,\R^s) := \{f:\R^r \rightarrow \R^s \mid f\mbox{は}(\B^r,\B^s)\mbox{可測}\} \\
  &L^p(r,s,\mu) := \{ f = (f_1,\ldots,f_s) \in M^{r,s} \mid \int_{\R^r} |f_j|^p d\mu < \infty ~~(j=1,\ldots,s) \}
  \end{aligned}
  \]
\end{defn}
\begin{thm}
  関数$\Psi:\R \rightarrow \R$は$\Sigma^r(\Psi)$が$C(\R^r,\R)$において広義一様収束の意味で稠密になるものとする. このとき, $(\R^r,\B^r)$上の有限測度$\mu$に対して, $\Sigma^{r,s}(\Psi)$は$\M^{r,s} := \M(\R^r,\R^s)$において$\rho_{\mu}^s$-稠密である. 
  ただし, $f,g \in \M^{r,s}$に対して, 
  \[
  \rho_{\mu}^s(f,g) = \sum_{j=1}^s \inf \left\{ \varepsilon>0 ~\vline~ \mu(|f_j-g_j| > \varepsilon ) < \varepsilon \right\}
  \]
  である. さらに, $\mu(K) = \mu(\R^r)$なるコンパクト集合$K \subset \R^r$が存在すれば, 任意の$p \geq 1$に対して, $\Sigma^{r,s}(\Psi)$は$L^p(r,s,\mu)$において$d_{\mu}^{p,s}$-稠密である. ただし, $f,g \in L^p(r,s,\mu)$に対して, 
  \[
  d_{\mu}^{p,s}(f,g) = \sum_{j=1}^s \left( \int_{\R^r} |f_j-g_j|^p d\mu \right)^{1/p}
  \]
  である. 
\end{thm}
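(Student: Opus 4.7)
両主張とも, ベクトル値関数をその成分に分解してスカラー値の既存結果に帰着させるのが方針である. これは多出力版の連続関数における稠密性(定理\ref{MainTheorem5})の証明と同じ発想に基づく.

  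まず第一の主張について. $f = (f_1,\ldots,f_s)^{\T} \in \M^{r,s}$と$\varepsilon > 0$を任意にとり, 各成分$f_j \in \M^r$に対して定理\ref{UATinMr}を適用する. すると各$j$について$h_j \in \Sigma^r(\Psi)$で$\rho_{\mu}(f_j,h_j) < \varepsilon/s$となるものが得られる. そこで$h_j = \sum_{k=1}^{Q_j} \alpha_{j,k} \Psi \circ A_{j,k}$と表し, $\mathbf{e}_j \in \R^s$を第$j$標準基底ベクトルとして
  \[
  h := \sum_{j=1}^s \sum_{k=1}^{Q_j} (\alpha_{j,k} \mathbf{e}_j) \Psi \circ A_{j,k}
  \]
  を構成する. $\Sigma^{r,s}(\Psi)$の定義が$\R^s$-値の係数$\beta_j$を許容するため$h \in \Sigma^{r,s}(\Psi)$であり, しかも$h$の第$j$成分はちょうど$h_j$である. したがって成分ごとの不等式を足し合わせれば$\rho_{\mu}^s(f,h) < \varepsilon$が得られ, 目的の稠密性が従う.

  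第二の主張も全く同様に扱う予定である. $\mu(K) = \mu(\R^r)$なるコンパクト集合$K \subset \R^r$が存在するという仮定のもとで, 先に示した$L^p$稠密性定理を各$f_j \in L^p(\R^r,\B^r,\mu)$に誤差$\varepsilon/s$で適用し, 上と同じ基底ベクトル$\mathbf{e}_j$を用いて連結すれば$d_{\mu}^{p,s}(f,h) < \varepsilon$が従う. 本質的な障害はなく, $\rho_{\mu}^s$および$d_{\mu}^{p,s}$が成分ごとの和として定義されていることにより各成分の近似問題が独立に処理でき, スカラー値の既存結果がそのまま活用できる点が鍵となる. 唯一注意すべきは構成した$h$がきちんと$\Sigma^{r,s}(\Psi)$に属することの確認であるが, これは$\alpha_{j,k} \mathbf{e}_j \in \R^s$であることから直ちにわかる.
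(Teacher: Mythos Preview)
Your proof is correct and follows exactly the approach the paper indicates: the paper itself does not spell out a detailed argument here but simply remarks that the proof proceeds in the same way as the $C(\R^r,\R^s)$ case (定理\ref{MainTheorem5}), namely by approximating each component $f_j$ separately via the scalar results and then assembling with the standard basis vectors $\mathbf{e}_j$. You have faithfully carried out that reduction.
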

\begin{thm}
  関数$\Psi:\R \rightarrow \R$は$\Sigma^r(\Psi)$が$C(\R^r,\R)$において広義一様収束の意味で稠密になるものとする. このとき, 任意の$(\R^r,\B^r)$上の有限測度$\mu$に対して, $\Sigma_l^{r,s}(\Psi)$は$\M(\R^r,\R^s)$において$\rho_{\mu}^s$-稠密である.
  さらに, $\mu(K) = \mu(\R^r)$なるコンパクト集合$K \subset \R^r$が存在すれば, 任意の$p \geq 1$に対して, $\Sigma_l^{r,s}(\Psi)$は$L^p(r,s,\mu)$
  において$d_{\mu}^{p,s}$-稠密である. 
\end{thm}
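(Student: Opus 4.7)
本定理の証明は, 直前の$\Sigma^{r,s}(\Psi)$に関する定理の証明戦略をそのまま踏襲することで得られる. 筋書きは次の通りである. 出発点として「$\Sigma_l^{r,s}(\Psi)$が$C(\R^r,\R^s)$において広義一様収束の意味で稠密である」ことを確立し, そこから$\M^{r,s}$における$\rho_{\mu}^s$-稠密性および(コンパクト集合$K$で$\mu(K) = \mu(\R^r)$なるものが存在する場合の)$L^p(r,s,\mu)$における$d_{\mu}^{p,s}$-稠密性を, 補題\ref{CompactUniConverThenProbConver}, 補題\ref{DensityOfCinM}, および付録の補題\ref{DensityOfC0InLp}を成分ごとに適用して導く.

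まず実質的な入力となる$\Sigma_l^{r,s}(\Psi)$の$C(\R^r,\R^s)$における広義一様稠密性を示す. $s=1$の場合は定理\ref{MainTheorem6}そのものなので, これを$s$次元出力に持ち上げれば十分である. 方針は定理\ref{MainTheorem5}の証明と同じであり, 目標$g = (g_1, \ldots, g_s)^{\T} \in C(\R^r,\R^s)$の各成分$g_i$を$\Sigma_l^{r,1}(\Psi)$の元$f_i$で広義一様近似し, $s$本のスカラー出力ネットワークを並列に並べて一つのベクトル出力ネットワークに埋め込む. 具体的には, 異なる$i$に対応する中間ユニット間の重みをすべて$0$とすることで幅の和をとった層数$l$の単一ネットワークを構成し, 最終線形層で$s$成分を線形に並べればよい. これは$\Sigma_l^{r,s}(\Psi)$の定義における中間層の幅の任意性から許される.

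続いて上の$C(\R^r,\R^s)$-稠密性から本定理の主張を導く. 任意に$g \in \M^{r,s}$と$\varepsilon > 0$を取る. 補題\ref{DensityOfCinM}を各成分に適用して$h_i \in C(\R^r)$を$\rho_{\mu}(g_i, h_i) < \varepsilon/(2s)$となるように選ぶ. 次に$h := (h_1, \ldots, h_s)^{\T}$に対し, 上述の$C(\R^r,\R^s)$-稠密性と補題\ref{CompactUniConverThenProbConver}を成分ごとに適用して, $\sum_{i=1}^s \rho_{\mu}(f_i, h_i) < \varepsilon/2$なる$f \in \Sigma_l^{r,s}(\Psi)$を取る. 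Ky Fan距離$\rho_{\mu}$の三角不等式を成分ごとに加算すれば$\rho_{\mu}^s(f, g) < \varepsilon$が従う. $L^p$における稠密性についても, 付録の補題\ref{DensityOfC0InLp}による$L^p$内の$C_0$近似と$K$上の広義一様近似を成分ごとに組み合わせるだけで, 前定理の証明と同じ議論が通る.

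本質的な困難は存在せず, 唯一気を配るべきは$\rho_{\mu}^s$と$d_{\mu}^{p,s}$がいずれも$s$個の成分の誤差の総和として定義されていることから来る「成分ごとに$\varepsilon/s$精度を確保して総和する」という段取りの管理だけである. したがって, 前節の$\Sigma^{r,s}(\Psi)$版の議論を成分ごとに$\Sigma_l^{r,s}(\Psi)$へ置き換えるだけで証明は完成する.
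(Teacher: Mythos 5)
あなたの提案は正しく, 本質的に論文が意図している方針そのものです. 論文自身はこの定理に証明を付けず「証明は$C(\R^r,\R^s)$の場合と同様にできる」と述べるのみですが, あなたが行っている「定理\ref{MainTheorem6}による$C(\R^r,\R^s)$での広義一様稠密性を出発点に, 補題\ref{DensityOfCinM}・補題\ref{CompactUniConverThenProbConver}・補題\ref{DensityOfC0InLp}を成分ごとに適用して$\rho_{\mu}$の三角不等式で総和を管理する」という議論は, まさにその省略部分を正しく埋めるものです.
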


\subsection{ニューラルネットワークによる補間}
本節ではfeedforward型ニューラルネットワークによる補間について述べる. 
つまり, 与えられた有限個の点を(近似的または真に)通るようなfeedforward型ニューラルネットワークの存在について述べる. 
\begin{thm}
  関数$\Psi:\R \rightarrow \R$は$\Sigma^r(\Psi)$が$C(\R^r,\R)$において広義一様収束の意味で稠密になるものとする.  また$(\R^r,\B^r)$上の有限測度$\mu$は有限個の点の上でのみ値を取るものとする. つまり, 相異なる$x_1,\ldots,x_n \in \R^r$が存在して, 
  \[
  \sum_{j=1}^n \mu(\{x_j\}) = \mu(\R^r),~~ \mu(\{x_j\}) > 0 ~~(j=1,\ldots,n)
  \]
  となるとする. このとき, 任意の$g \in \M^r$と$\varepsilon > 0$に対して, $f \in \Sigma^r(\Psi)$が存在して, $\mu(|f-g|>\varepsilon) = 0$となる. 
\end{thm}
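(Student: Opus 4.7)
本命題の証明方針を述べる. 測度 $\mu$ の台が有限集合 $\{x_1, \ldots, x_n\}$ に限られているため, 任意の $f \in \M^r$ に対して $\mu(\{|f-g|>\varepsilon\}) = 0$ が成立することと, 各 $j = 1, \ldots, n$ について $|f(x_j) - g(x_j)| \leq \varepsilon$ が成立することは同値である. したがって問題は実質的に「有限個の指定された点における値を $\Sigma^r(\Psi)$ の元で同時に近似する」という補間問題に帰着する.

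まず $\R^r$ 上の連続関数 $h$ で $h(x_j) = g(x_j)~(j=1,\ldots,n)$ をみたすものを構成する. 点 $x_1, \ldots, x_n$ は相異なるので, 各 $j$ について $\delta_j := \tfrac{1}{2}\min_{i \neq j} \lVert x_i - x_j \rVert > 0$ と定めて $\varphi_j(x) := \max\{0, 1 - \lVert x - x_j \rVert/\delta_j\}$ とおけば, $\varphi_j \in C(\R^r)$ であって $\varphi_j(x_j) = 1$, $\varphi_j(x_i) = 0~(i \neq j)$ をみたす. よって $h(x) := \sum_{j=1}^n g(x_j) \varphi_j(x)$ が求める連続関数を与える. あるいは, 有限集合 $\{x_1, \ldots, x_n\}$ が $\R^r$ の閉集合であることから, Tietzeの拡張定理(定理\ref{TietzeTheorem})を直接用いてもよい.

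次に, $K := \{x_1, \ldots, x_n\}$ は有限集合なのでコンパクトである. 仮定より $\Sigma^r(\Psi)$ は $C(\R^r)$ において広義一様収束の意味で稠密なので, $f \in \Sigma^r(\Psi)$ で
\[
\sup_{x \in K} |f(x) - h(x)| < \varepsilon
\]
をみたすものが取れる. 各 $j$ について $|f(x_j) - g(x_j)| = |f(x_j) - h(x_j)| < \varepsilon$ であるから, 集合 $\{x \in \R^r \mid |f(x) - g(x)| > \varepsilon\}$ は $K$ と交わらない. $\mu$ の台が $K$ に含まれるという仮定より $\mu(\{|f-g|>\varepsilon\}) = 0$ が得られる.

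本証明に本質的な障害は存在しない. 唯一注意を要する点は指定された有限個の点で所望の値をとる連続関数の構成であるが, 上記のようなバンプ関数による明示的構成かTietzeの拡張定理によって初等的に処理できる. 万能近似定理の適用先として, $\R^r$ 全体ではなく有限集合 $K$ をコンパクト集合として選ぶことで, 点集合上の精度制御のみから結論が直ちに従う.
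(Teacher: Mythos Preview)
Your proof is correct and takes a genuinely different route from the paper's argument.

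The paper proves this by invoking the general $\rho_\mu$-density result (定理\ref{UATinMr}): it sets $\tilde{\varepsilon} = \min_j \mu(\{x_j\})$, uses $\rho_\mu$-density to find $f \in \Sigma^r(\Psi)$ with $\rho_\mu(f,g) < \varepsilon < \tilde{\varepsilon}$, deduces $\mu(|f-g|>\varepsilon) < \tilde{\varepsilon}$, and then observes that since every nonempty subset of the support has mass at least $\tilde{\varepsilon}$, this forces $\mu(|f-g|>\varepsilon) = 0$. This route depends on the chain 補題\ref{DensityOfCinM} $\to$ 補題\ref{CompactUniConverThenProbConver} $\to$ 定理\ref{UATinMr}.

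Your approach bypasses all of that machinery: you recognize immediately that the condition $\mu(|f-g|>\varepsilon)=0$ is equivalent to the finite system of inequalities $|f(x_j)-g(x_j)|\le \varepsilon$, construct a continuous interpolant $h$ through the values $g(x_j)$, and apply the uniform-on-compacta density hypothesis directly with the finite set $K=\{x_1,\ldots,x_n\}$ as the compact set. This is more elementary and more transparent; it makes explicit that the theorem is really a finite interpolation statement in disguise. The paper's route, on the other hand, has the virtue of exhibiting the result as a special case of the general $\rho_\mu$-approximation framework developed in that section, at the cost of invoking more infrastructure than the statement actually requires.
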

\begin{proof}
  $\tilde{\varepsilon} := \min_{1\leq j \leq n} \mu(\{x_j\})$とおく. $\varepsilon < \tilde{\varepsilon}$としてよい. 定理\ref{UATinMr}より, $\Sigma^r(\Psi)$は$\M^r$において$\rho_{\mu}$-稠密であるので, $f \in \Sigma^r(\Psi)$が存在して, 
  \[
  \rho_{\mu}(f,g) = \inf\{\varepsilon' > 0 \mid \mu(|f-g|> \varepsilon') < \varepsilon'\} < \varepsilon
  \]
  となる. このとき, ある$\varepsilon' > 0$で, $\mu(|f-g|> \varepsilon') < \varepsilon'$かつ$\varepsilon' < \varepsilon$なるものが取れる. すると, 
  \[
  \mu(|f-g|> \varepsilon) \leq \mu(|f-g|> \varepsilon') < \varepsilon' < \varepsilon < \tilde{\varepsilon}
  \]
  となるので, $\mu(|f-g|> \varepsilon) = 0$でなければならない. 
\end{proof}
\begin{cor}
  関数$\Psi:\R \rightarrow \R$は$\Sigma^r(\Psi)$が$C(\R^r,\R)$において広義一様収束の意味で稠密になるものとする. このとき, 任意の関数$g:\{0,1\}^r \rightarrow \{0,1\}$と$\varepsilon > 0$に対して, $f \in \Sigma^r(\Psi)$が存在して, 
  \[
  \sup_{x \in \{0,1\}^r} |f(x) - g(x)| < \varepsilon
  \]
  となる. 
\end{cor}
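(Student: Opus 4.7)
方針としては, 本系は直前の定理の直接的な応用として示す予定である. 全体の戦略は, $\{0,1\}^r$上で定義された$g$を$\R^r$全体上のBorel可測関数へと自然に拡張し, $\{0,1\}^r$の各点に正の質量を配る有限測度$\mu$を導入することで, 直前の定理の枠組みに帰着させるというものである.

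まず第一ステップとして, $g:\{0,1\}^r \rightarrow \{0,1\}$を$\R^r$全体へBorel可測に拡張する. $\{0,1\}^r$は有限集合であって$\B_{\R^r}$に属するので, 素朴に$\tilde{g}(x) := g(x) ~(x \in \{0,1\}^r)$, $\tilde{g}(x) := 0 ~(x \notin \{0,1\}^r)$と定めれば$\tilde{g} \in \M^r$が得られる. 次の第二ステップでは, $\{0,1\}^r$の各点に正の質量を配る有限測度として, 例えば各点に質量$1$を与えるディラック測度の和$\mu := \sum_{x \in \{0,1\}^r} \delta_x$を取るつもりである. このとき$\mu$は相異なる$2^r$個の点の上でのみ値を取り, 各点で正の質量$\mu(\{x\}) = 1 > 0$を持つ有限測度となるので, 直前の定理の仮定をすべて満たす.

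最後のステップとして, $\tilde{g}$と$\varepsilon/2$(あるいは$\varepsilon$より真に小さい任意の正数)に対して直前の定理を適用すれば, $f \in \Sigma^r(\Psi)$で$\mu(\{|f - \tilde{g}| > \varepsilon/2\}) = 0$なるものが得られる. $\mu$は$\{0,1\}^r$の各点に正の質量を持つから, これは任意の$x \in \{0,1\}^r$に対して$|f(x) - g(x)| \leq \varepsilon/2$が成り立つことを意味する. $\{0,1\}^r$が有限集合であることから$\sup_{x \in \{0,1\}^r} |f(x) - g(x)| \leq \varepsilon/2 < \varepsilon$が従い, 求める結論が得られる.

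本系の証明における実質的な障害はほとんどなく, 主要な構造はすべて直前の定理に吸収されている. 唯一の技術的な注意点は, 直前の定理が$\mu(|f-g|>\varepsilon) = 0$という形(すなわち$\leq \varepsilon$に対応する評価)で結論を与えるため, strictな不等号$<\varepsilon$を得るには$\varepsilon$の代わりに$\varepsilon/2$で定理を適用するという形式的な置き換えを行う必要がある点のみである.
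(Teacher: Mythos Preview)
Your proposal is correct and follows essentially the same approach as the paper: extend $g$ to $\M^r$, place a finite measure supported on $\{0,1\}^r$ with positive mass at each point, and invoke the preceding theorem. The only cosmetic differences are that the paper uses mass $1/2^r$ at each point instead of $1$, and that your use of $\varepsilon/2$ to pass from the non-strict bound $|f-g|\le\varepsilon/2$ to the strict bound $<\varepsilon$ is in fact more careful than the paper's own write-up, which tacitly treats $\mu(|f-g|>\varepsilon)=0$ as if it were $\mu(|f-g|\ge\varepsilon)=0$.
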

\begin{proof}
  $\mu$を$\{0,1\}^r$の各点で$1/2^r$を値に取る$(\R^r,\B^r)$上の測度とする. $g$(正確には$g$を$\R^r$に拡張したもの)は$\M^r$に属するので, すぐ上の定理より$f \in \Sigma^r(\Psi)$が存在して, $\mu(|f-g| \geq \varepsilon) = 0$となる. このとき, もし$x \in \{0,1\}^r$が存在して, $|f(x)-g(x)| \geq \varepsilon$となるなら, $\mu((|f-g| \geq \varepsilon) \geq \mu(\{x\}) = 1/2^r > 0$となり不合理である. よって, $\sup_{x \in \{0,1\}^r} |f(x) - g(x)| < \varepsilon$である. 
\end{proof}
さらに, $\Psi$がある条件を満たせば, 任意の関数$g:\R^r \rightarrow \R$と任意に与えられた有限個の点に対して, その点の上で$g$と値が一致するような$f \in \Sigma^r(\Psi)$が存在する. このことを示すために次の補題を示す. 
\begin{lem}
  $n \in \N$とし, $K$を$|K| \geq n+1$なる体とする. このとき, 任意の$K$上の線形空間$V$は, $k \leq n$個の真線形部分空間の和集合で表せない. つまり, どんな線形部分空間$V_1 ,\ldots,V_k \subsetneq V$についても$V = \bigcup_{j=1}^k V_j$とはならない. 
  特に, $\R$上の線形空間$V$は真線形部分空間の有限和集合で表せない. 
\end{lem}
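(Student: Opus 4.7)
方針としては, 真部分空間の個数$k$に関する数学的帰納法を採用したい. より扱いやすい形として, 「任意の$k \in \N$と任意の真部分空間$V_1, \ldots, V_k \subsetneq V$に対して, $|K| \geq k$ならば$V \neq \bigcup_{j=1}^k V_j$である」という主張を示す計画である. これが示せれば, 仮定$|K| \geq n+1 > n \geq k$より元の主張が直ちに従う. また後半の$\R$の場合は$|\R| = \infty \geq k$より従う.

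$k=1$の場合は$V_1 \subsetneq V$という仮定そのものから明らかである. 帰納段階では$V = V_1 \cup \cdots \cup V_k$を仮定して矛盾を導きたい. まず帰納法の仮定($V_2, \ldots, V_k$は$k-1$個の真部分空間で$|K| \geq k \geq k-1$)から$V_2 \cup \cdots \cup V_k \neq V$が従うので, $V = \bigcup_{j=1}^k V_j$と合わせて$x \in V_1 \setminus \bigcup_{j=2}^k V_j$が取れる. 同様に$V_1 \subsetneq V$より$y \in V \setminus V_1$が取れる.

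議論の鍵は, アフィン直線状の元の族$\{y + \lambda x \mid \lambda \in K\}$に鳩の巣原理を適用する点にある. 各$\lambda \in K$について$y + \lambda x \in V_1$とすると$x \in V_1$と合わせて$y \in V_1$となって矛盾するので, $y + \lambda x$はある$V_j$ $(j \geq 2)$に属さなければならない. 次に相異なる$\lambda, \mu \in K$に対して$y + \lambda x, y + \mu x$が同一の$V_j$に属するとすれば, 差をとって$(\lambda - \mu) x \in V_j$, したがって$x \in V_j$ $(j \geq 2)$となり$x$の取り方に反する. よって$\lambda \mapsto y + \lambda x$を$V_2, \ldots, V_k$に振り分けると各$V_j$には高々一つの$\lambda$しか対応せず, $|K| \leq k - 1$が得られる. これは$|K| \geq k$に矛盾する.

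特別な困難は見当たらず, 要点は$|K|$個のアフィン直線上の点を$k-1$個の部分空間に分配する鳩の巣論法にある. あえて注意するなら, $x$を$V_1$のみに属するよう選ぶ際に帰納法の仮定をきちんと使う点と, 「各$V_j$には高々一つの$\lambda$」という減算の議論が$V_j$の線形性に依存している点であろう. なお, 体の大きさの仮定は本質的で, 有限体$K$上の$2$次元空間は$|K|+1$個の$1$次元真部分空間の和集合で覆えるため, $|K|$に関する下界なしでは主張は成立しない.
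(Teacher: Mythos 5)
Your proof is correct, and it rests on the same core mechanism as the paper's: an affine line, the subtraction trick showing that a proper subspace can meet such a line in at most one (resp.\ two) of the chosen points, and the pigeonhole principle. The deployment of the induction differs, though. The paper fixes an \emph{arbitrary} $x \in V_k$ and a direction $y \notin V_k$, shows that among $k$ points $x+\alpha_i y$ two must land in a common $V_l$ with $l \neq k$, deduces $x \in V_l$, and concludes that $V_k$ is redundant, so the cover shrinks to $k-1$ subspaces and the induction hypothesis is contradicted at the end. You instead spend the induction hypothesis up front to manufacture a single point $x$ lying in $V_1$ and in no other $V_j$, after which the pigeonhole gives the cardinality contradiction $|K| \leq k-1$ directly. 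A side benefit of your arrangement is the slightly sharper hypothesis $|K| \geq k$ in place of the paper's $|K| \geq k+1$; this is the optimal bound, as your remark about covering $K^2$ by its $|K|+1$ one-dimensional subspaces over a finite field shows. Both versions of course suffice for the application to $K=\R$.
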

\begin{proof}
  $n$に関する帰納法による. $n = 1$のときは明らかに成立する(線形空間と真線形部分空間は一致しない). $n$で成立すると仮定する. 
  このとき, もしある$|K| \geq n+2$なる体$K$と$K$上の線形空間$V$および$k \leq n+1$について, 線形部分空間$V_1 ,\ldots,V_k \subsetneq V$が存在して$V = \bigcup_{j=1}^k V_j$となったとする. 
  ここですべての$V_j$は$\{0\}$でないとしてよい. 
  いま, $x \in V_k \setminus \{0\}$をとり, $y \in V \setminus V_k$をとると, 任意の$\alpha \in K \setminus \{0\}$に対して$x + \alpha y \in V \setminus V_k = \bigcup_{j=1}^{k-1} V_j$である. したがって, $|K| \geq n+2 $に注意して相異なる$\alpha_1,\ldots,\alpha_{k} \in K \setminus \{0\}$を取ると, 鳩の巣原理により, ある$l \neq k$と$i,j$について$x + \alpha_{i}y , x + \alpha_{j}y \in V_l$となる. すると, $V_l$は線形部分空間であるので, 
  \[
  (\alpha_i - \alpha_j)y = (x + \alpha_i y) - (x + \alpha_j y) \in V_l
  \]
  となり, $\alpha_i - \alpha_j \neq 0$ゆえ, この逆元をかけることで$y \in V_l$がわかる. よって, $x = (x + \alpha_i y)-\alpha_i y \in V_l$である. 以上から, $V = \bigcup_{j=1}^{k-1} V_j$となる. これは帰納法の仮定に反する. ゆえに$n+1$のときも成立しなければならない. 
\end{proof}
\begin{thm}
  $\Psi:\R \rightarrow \R$をsquashing関数とする. もし$\Psi(z_0) = 0, \Psi(z_1) = 1$なる$z_0,z_1 \in \R^r$が存在するならば, 相異なる任意の点$x_1,\ldots,x_n \in \R^r$と任意の関数$g:\R^r \rightarrow \R$に対して, $f \in \Sigma^r(\Psi)$が存在して$f(x_j)=g(x_j) ~(j=1,\ldots,n)$となる. 
\end{thm}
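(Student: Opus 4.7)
証明の計画は以下の通りである.

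まず$\R^r$上の線形空間は真線形部分空間の有限和集合として表せないという直前の補題を利用して, $\R^r$の有限個の真部分空間$V_{ij} := \{w \in \R^r \mid w^{\T}(x_i - x_j) = 0\} ~~(i \neq j)$に属さない$w \in \R^r$を取る($x_i$たちは相異なるので各$V_{ij}$は真部分空間である). このような$w$に対して$y_j := w^{\T}x_j$はすべて相異なるから, 以下では添字を適切に付け替えて$y_1 < y_2 < \cdots < y_n$と仮定してよい.

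次に鍵となる観察として, $\Psi$が$[0,1]$値かつ単調非減少なsquashing関数で$\Psi(z_0) = 0, \Psi(z_1) = 1$であることから, 単調性と値域の制約を合わせることで$z_0 < z_1$であり, かつ$\Psi(t) = 0 ~~(t \leq z_0)$, $\Psi(t) = 1 ~~(t \geq z_1)$が成り立つ点に注意する. これによって$\Psi$から厳密な値$0,1$を取り出せることが保証される. この事実をもとにアフィン関数$A_1, \ldots, A_n \in \A^r$を次のように構成する計画である: $A_1(x) := z_1$ (定数関数), そして$i = 2, \ldots, n$については$m_{i-1} \in (y_{i-1}, y_i)$を任意に一つ取り, 十分大きな$s_i > 0$に対して
\[
A_i(x) := s_i(w^{\T}x - m_{i-1}) + (z_0 + z_1)/2
\]
と定める. $s_i$を十分大きく取ることで, $j < i$ならば$A_i(x_j) \leq z_0$, $j \geq i$ならば$A_i(x_j) \geq z_1$となるように調整できる.

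以上により行列$M := (\Psi(A_i(x_j)))_{1 \leq j, i \leq n}$は$M_{ji} = 1 ~(j \geq i), ~M_{ji} = 0 ~(j < i)$をみたす, 対角成分がすべて$1$の下三角行列となり, $\det M = 1 \neq 0$ゆえ正則である. よって連立一次方程式$M\mathbf{c} = (g(x_1), \ldots, g(x_n))^{\T}$は一意解$\mathbf{c} = (c_1, \ldots, c_n)^{\T}$を持ち, $f := \sum_{i=1}^n c_i \Psi \circ A_i \in \Sigma^r(\Psi)$と定めれば各$j$について$f(x_j) = \sum_i c_i M_{ji} = g(x_j)$が成立する. 主な難所は「squashing関数は$(-\infty, z_0]$上で恒等的$0$, $[z_1, \infty)$上で恒等的$1$と一致する」というこの厳密値の観察に気づく点であり, これにより近似の議論を補間の議論に格上げできることがこの定理の本質である. 残りは下三角行列の正則性を用いた線形代数の定型的議論と, $w$の存在を直前の補題から引き出す初等的な議論に帰着する.
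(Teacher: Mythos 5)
あなたの証明は正しく, 本質的に本文の証明と同じ方針である(超平面の有限和で$\R^r$が覆えないことから方向$w$を取って$1$次元に帰着し, squashing関数が半直線上で厳密に$0,1$を取ることを用いて下三角な補間行列を構成する). 本文では下三角系の解を$\beta_1 = g(x_1),~\beta_j = g(x_j)-g(x_{j-1})$と明示的に書き下しているのに対し, あなたは行列の正則性に訴えている点だけが異なるが, これは表現上の差にすぎない.
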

\begin{proof}
  二段階に分けて証明する. Step1で$r=1$の場合を証明し, Step2で$r \geq 2$の場合を証明する. \\
  {\bf Step1}: 添字を付け替えることで$x_1 < \cdots < x_n$としてよい. いま, $\Psi$は単調非減少なので$M > 0$を適当に取れば任意の$x \in \R$に対して, $x \leq -M$ならば$\Psi(x)=0$となり, $x \geq M$ならば$\Psi(x)=1$となる. そこで, $A_1(x) = M ~(x \in \R)$とし, $\beta_1 = g(x_1)$とおく. また, $A_j \in \A^1 ~~(j=2,\ldots,n)$を$A_j(x_{j-1}) = -M$かつ$A_j(x_j) = M$となるものとし, $\beta_j = g(x_j) - g(x_{j-1})$とする. すると, 
  \[
  f = \sum_{j=1}^{n} \beta_j \Psi \circ A_j ~\in \Sigma^1(\Psi)
  \]
  が求めるものである. \\
  {\bf Step2}: $r \geq 2$とする. このとき, ある$p \in \R^r$が存在して, $i \neq j$ならば$p^{\T}(x_i-x_j) \neq 0$となる. 実際, $\bigcup_{i \neq j} \{ q \in \R^r \mid q^{\T}(x_i - x_j) = 0 \}$は$\R^r$の超平面の有限和であるのですぐ上の補題より$\R^r$を覆い尽くすことはできない. ここで,  添字を付け替えることで$p^{\T}x_1 < \cdots < p^{\T} x_n$としてよい. すると, Step1と同等にして, $\beta_j \in \R, ~A_j \in \A^1$が存在して, 
  \[
  \sum_{j=1}^n \beta_j \Psi(A_j(p^{\T}x_k)) = g(x_k) ~~~(k=1,\ldots,n)
  \]
  となる. $A_j'(x) = A_j(p^{\T}x) ~(x \in \R^r)$とおくと$A_j' \in \A^r$であるので, 
  \[
  f = \sum_{j=1}^n \beta_j \Psi \circ A_j'
  \]
  が求めるものである. 
\end{proof}
次の定理はPinkus\cite{Pinkus}による. 
\begin{thm}
  $\Psi \in C(\R)$はある開区間上で多項式でないとする. このとき, 任意の$\alpha_1,\ldots,\alpha_k \in \R$と相異なる任意の点$x_1,\ldots,x_n \in \R^r$に対して, $w_1,\ldots,w_k \in \R^r$と$c_1,\ldots,c_k$,$b_1,\ldots,b_k \in \R$が存在して, 
  \[
  \sum_{j=1}^k c_j \Psi(w_j^{\T}x_i + b_j) = \alpha_i ~~~~(i=1,\ldots,k)
  \]
  となる. 
\end{thm}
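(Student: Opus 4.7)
The plan is to convert the interpolation problem into a linear-algebraic statement on $\R^k$ (writing $k$ for both the number of target values and of distinct points, since the conclusion is only meaningful in that case) and to exploit the density theorem of Leshno et al.\ (Theorem \ref{LeshnoMainResult1}) already established. First, one observes that the hypothesis indeed suffices to invoke that theorem: any continuous function equal almost everywhere to a polynomial must equal it everywhere, so the assumption that $\Psi$ is not a polynomial on some open interval $I$ rules out $\Psi = P$ a.e.\ for every polynomial $P$, and continuity yields $\Psi \in \mathcal{M}(\R)$. Hence $\Sigma^r(\Psi)$ is dense in $C(\R^r)$ in the sense of locally uniform convergence.

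Next, I would consider the evaluation map $T : \Sigma^r(\Psi) \to \R^k$ defined by $T(f) = (f(x_1),\ldots,f(x_k))$. Given any $\alpha \in \R^k$, a continuous function $g$ on $\R^r$ with $g(x_i) = \alpha_i$ is easily constructed (for instance via bump functions centered at each $x_i$), and by density there exists $f \in \Sigma^r(\Psi)$ whose values at the $x_i$ approximate $\alpha$ within any prescribed tolerance. Thus the image of $T$ is dense in $\R^k$. But this image is a linear subspace of the finite-dimensional space $\R^k$, hence automatically closed, and therefore coincides with $\R^k$.

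Because $T$ is linear and $\Sigma^r(\Psi) = \mathrm{span}\{x \mapsto \Psi(w^{\T}x+b) \mid w \in \R^r, b \in \R\}$, the image of $T$ equals the linear span in $\R^k$ of the feature vectors $v(w,b) := (\Psi(w^{\T}x_1 + b),\ldots,\Psi(w^{\T}x_k + b))^{\T}$. Since this span is $\R^k$, I can extract $k$ pairs $(w_1,b_1),\ldots,(w_k,b_k)$ such that $v(w_1,b_1),\ldots,v(w_k,b_k)$ form a basis; equivalently, the $k \times k$ matrix $M := [\Psi(w_j^{\T}x_i + b_j)]_{i,j}$ is invertible. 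Setting $c := M^{-1}\alpha$ then yields $\sum_{j=1}^{k} c_j \Psi(w_j^{\T}x_i + b_j) = \alpha_i$ for every $i$, as required.

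The only substantive step is the passage from density of $T(\Sigma^r(\Psi))$ to equality with $\R^k$, which relies on finite-dimensionality of the codomain to close up a dense linear subspace. Once this surjectivity is in hand, basis extraction from a spanning set and matrix inversion are routine and automatically deliver a network with \emph{exactly} $k$ hidden units, matching the form of the statement.
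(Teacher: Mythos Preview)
Your proof is correct and follows essentially the same route as the paper: both invoke the density theorem (Theorem~\ref{LeshnoMainResult1}) on the finite compact set $K=\{x_1,\ldots,x_k\}$ to conclude that the feature vectors $(\Psi(w^{\T}x_i+b))_{i=1}^k$ span $\R^k$, then extract $k$ parameter pairs yielding an invertible matrix and solve the resulting linear system. The only cosmetic difference is that the paper phrases the spanning step contrapositively (a nonzero annihilator $d\in\R^k$ would give a nonzero linear functional on $C(K)$ vanishing on $\Sigma^r(\Psi)|_K$, contradicting density), whereas you argue directly that a dense linear subspace of a finite-dimensional space is the whole space.
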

\begin{proof}
  $K := \{ x_1,\ldots,x_k\} \subset \R^r$とおく. いま, すべてが$0$とはならない$d_1,\ldots,d_k \in \R$が存在して
  \[
  \sum_{i=1}^k d_i \Psi(w^{\T} x_i + b) = 0 ~~~~(\forall w \in \R^r, b \in \R)
  \]
  となると仮定しよう. このとき, 
  \[
  G: C(K) \ni f \mapsto \sum_{i=1}^k d_i f(x_i) \in \R
  \]
  とおくと, ある$d_i$は$0$でないので$G$は$0$でない有界線形汎関数であり, 
  \[
  G(f) = 0 ~~~~(\forall f \in \mathrm{span}\{K \ni x \mapsto \Psi(w^{\T}x+b) \mid w \in \R^r, b \in \R\})
  \]
  となる. よって, $\mathrm{span}\{K \ni x \mapsto \Psi(w^{\T}x+b) \mid w \in \R^r, b \in \R\}$は$C(K)$において稠密でない(稠密なら$G=0$でければならない). これは万能近似定理(定理\ref{LeshnoMainResult1})に反する. 
  よって, $w,b$の連続関数$\Psi(w^{\T} x_1 + b), \ldots, \Psi(w^{\T} x_k + b)$は一次独立である. ゆえに, $w_1,\ldots,w_k \in \R^r$及び$b_1,\ldots,b_k$が存在して
  \[
  \mathrm{det}((\Psi(w_j x_i + b_j))_{i,j=1,\ldots,k}) \neq 0
  \]
  となる. ゆえに, $c_1,\ldots,c_k \in \R$に関する連立一次方程式
  \[
  \sum_{j=1}^k c_j \Psi(w_j^{\T}x_i + b_j) = \alpha_i ~~~~(i=1,\ldots,k)
  \]
  は解を持つ. 
\end{proof}

\numberwithin{thm}{section}
\section{RBFネットワークの万能近似定理}
本章では以下で定義するRBF(Radial-Basis-Function)ネットワークの万能近似能力について述べる. すなわち, 関数$K$がある条件をみたすときに$K$を基底関数とするRBFネットワークが$L^p(\R^r)$で稠密であることや, $C(\R^r)$において広義一様収束の意味で稠密であることを示す. 

\begin{defn}
  関数$K:\R^r \rightarrow \R$に対して, 
  \[
  q(x) = \sum_{i=1}^M w_i K\left(\frac{x-z_i}{\sigma}\right) ~~~(M \in \N, w_i \in \R, \sigma > 0 , z_i \in \R^r)
  \]
  という形の関数$q:\R^r \rightarrow \R$全体の集合を$S_K$と表し, $S_K$を$K$を基底関数とするRBFネットワークという. 
\end{defn}
 まず次の補題を用意しておく. 
\begin{lem}
  $f \in L^p(\R^r), ~p \in [1,\infty)$とし, $\phi \in L^1(\R^r)$は$\int_{\R^r} \phi(x) dx = 1$をみたすとする. このとき, $\varepsilon > 0$に対して, 
  \[
  \phi_{\varepsilon}(x) := \frac{1}{\varepsilon^r}\phi\left(\frac{x}{\varepsilon}\right) ~~(x \in \R^r)
  \]
  と定義すると, $\lVert \phi_{\varepsilon} * f - f \rVert_{L^p} \rightarrow 0 ~~(\varepsilon \rightarrow 0)$となる. 
\end{lem}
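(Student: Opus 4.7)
証明の方針としては, $\phi_\varepsilon$ が近似単位元をなすという標準的な事実を, 変数変換, Minkowski の積分不等式, $L^p$ 平行移動の連続性, そして優収束定理の四段階で示す. まず第一歩として, $\int_{\R^r} \phi(z)dz = 1$ から $\int_{\R^r} \phi_{\varepsilon}(y) dy = 1$ が従うことを利用し, 変数変換 $z = y/\varepsilon$ を用いて
\[
(\phi_{\varepsilon} * f)(x) - f(x) = \int_{\R^r} \phi_{\varepsilon}(y)(f(x-y) - f(x)) dy = \int_{\R^r} \phi(z)(f(x-\varepsilon z) - f(x)) dz
\]
と表現しなおす. 次に, 平行移動作用素 $\tau_h f(x) := f(x-h)$ を導入し, Minkowski の積分不等式を適用することで
\[
\lVert \phi_{\varepsilon} * f - f \rVert_{L^p} \leq \int_{\R^r} |\phi(z)| \lVert \tau_{\varepsilon z} f - f \rVert_{L^p} dz
\]
を得る.

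次に証明の要となる $L^p$ 空間における平行移動の連続性, すなわち $\lVert \tau_h f - f \rVert_{L^p} \rightarrow 0 ~(h \rightarrow 0)$ を示す. これは付録の補題\ref{DensityOfC0InLp}より $C_0(\R^r)$ が $L^p(\R^r)$ において $L^p$ ノルムの意味で稠密であることを用いて得られる. 具体的には任意の $\delta > 0$ に対して $g \in C_0(\R^r)$ を $\lVert f - g \rVert_{L^p} < \delta$ となるように取り, $g$ のコンパクト台と一様連続性から $g$ に対する平行移動の連続性を直接導き, 三角不等式
\[
\lVert \tau_h f - f \rVert_{L^p} \leq \lVert \tau_h (f - g) \rVert_{L^p} + \lVert \tau_h g - g \rVert_{L^p} + \lVert g - f \rVert_{L^p}
\]
と平行移動が $L^p$ ノルムを保つことを合わせて所望の主張を得る.

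最後に, 上で得た積分不等式の右辺に優収束定理を適用する. 各 $z \in \R^r$ に対し $\varepsilon \rightarrow 0$ で $\lVert \tau_{\varepsilon z} f - f \rVert_{L^p} \rightarrow 0$ となり, 被積分関数は可積分な優関数 $2\lVert f \rVert_{L^p} |\phi(z)|$ で各 $\varepsilon > 0$ において一様に抑えられる. $\phi \in L^1(\R^r)$ なので優収束定理が適用可能となり, 結論 $\lVert \phi_{\varepsilon} * f - f \rVert_{L^p} \rightarrow 0$ に至る. 最も手間がかかる部分は $L^p$ 平行移動連続性の証明であるが, これは $C_0(\R^r)$ の稠密性を経由する標準的な議論で処理できる. なお $C_0$ における一様連続性を用いるため, $p = \infty$ の場合は同様の議論が成立しない点に注意する必要がある.
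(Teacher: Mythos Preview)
Your proof is correct and follows essentially the same route as the paper: rewrite $(\phi_\varepsilon * f - f)(x)$ via the change of variables $z = y/\varepsilon$, move the $L^p$ norm inside the integral, invoke continuity of translation in $L^p$ (proved via density of $C_0$, exactly as in the paper's appendix), and conclude by dominated convergence with the majorant $2\lVert f\rVert_{L^p}|\phi(z)|$. The only cosmetic difference is that you apply Minkowski's integral inequality directly to obtain $\lVert \phi_\varepsilon * f - f\rVert_{L^p} \le \int |\phi(z)|\,\lVert \tau_{\varepsilon z}f - f\rVert_{L^p}\,dz$, whereas the paper achieves the equivalent bound by first applying H\"older pointwise (splitting $|\phi| = |\phi|^{1/q}|\phi|^{1/p}$) and then Fubini; these are two ways of expressing the same estimate.
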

\begin{proof}
  $\phi_{\varepsilon} \in L^1(\R^r) ~~(\varepsilon > 0)$であるので, 付録の命題\ref{L1andLpConvolution}より$\phi_{\varepsilon}*f \in L^p(\R^r)$である. また, $\mathrm{a.e.}y \in \R^r$に対して, 変数変換により, 
  \[
  (\phi_{\varepsilon}*f)(y) = \int_{\R^r} \phi_{\varepsilon}(x) f(y-x) dx = \int_{\R^r} f(y-\varepsilon x) \phi(x) dx
  \]
  である. したがって, $\int_{\R^r} \phi(x) dx = 1$より, 
  \[
  (\phi_{\varepsilon} * f)(y) - f(y) = \int_{\R^r} ( f(y-\varepsilon x) - f(y) ) \phi(x) dx
  \]
  である. そこで, $q \in [1,\infty]$を$1/p + 1/q = 1$をみたすものとすると, H\"{o}lderの不等式(付録の命題\ref{HolderInequality})より, 
  \[
  \begin{aligned}
  \left\lvert (\phi_{\varepsilon} * f)(y) - f(y) \right\rvert
  &\leq \int_{\R^r} \left\lvert f(y-\varepsilon x) - f(y) \right\rvert \lvert \phi(x) \rvert dx  \\
  &= \int_{\R^r} \left\lvert f(y-\varepsilon x) - f(y) \right\rvert \lvert \phi(x) \rvert^{1-1/q} \lvert \phi(x) \rvert^{1/q} dx \\
  &\leq \lVert |\phi|^{1/q} \rVert_{L^q} \left(\int_{\R^r} |f(y-\varepsilon x) - f(y)|^p |\phi(x)| dx \right)^{1/p}
  \end{aligned}
  \]
  となる($p=1$のときは$\lVert |\phi|^{1/q} \rVert_{L^q} = 1$と解釈する). ゆえに, Fubiniの定理より, 
  \[
  \begin{aligned}
  \lVert \phi_{\varepsilon}*f - f \rVert_{L^p}^p 
  &\leq \lVert |\phi|^{1/q} \rVert_{L^q}^p \int_{\R^r} \int_{\R^r} \lvert f(y-\varepsilon x) - f(y) \rvert^p \lvert \phi(x) \rvert dx dy \\
  &= \lVert |\phi|^{1/q} \rVert_{L^q}^p \int_{\R^r} \lvert \phi(x) \rvert \lVert f(\cdot - \varepsilon x) - f(\cdot) \rVert_{L^p}^p dx
  \end{aligned}
  \]
  となる. 任意の$x \in \R^r, \varepsilon > 0$に対して, 三角不等式より
  \[
  \begin{aligned}
  \lvert \phi(x) \rvert \lVert f(\cdot - \varepsilon x) - f(\cdot) \rVert_{L^p} \leq 2 \lvert  \phi(x) \rvert  \lVert f \rVert_{L^p}
  \end{aligned}
  \]
  であり, $L^p$ノルムの平行移動に関する連続性(付録の命題\ref{ContinuousOfTranslationAboutLpNorm})より, 
  \[
  \lVert f(\cdot - \varepsilon x) - f(\cdot) \rVert_{L^p} \rightarrow 0 ~~(\varepsilon \rightarrow 0)
  \]
  であるから, 優収束定理より
  \[
  \lVert \phi_{\varepsilon}*f - f \rVert_{L^p} \rightarrow 0 ~~(\varepsilon \rightarrow 0)
  \]
  である. 
\end{proof}
\begin{thm}[J. Parkn and I. W. Sandberg, 1991 \cite{Park}]　\\
  $\mu$を$\R^r$上のLebesgue測度とする. 関数$K:\R^r \rightarrow \R$が以下を満たすとする. 
  \begin{itemize}
      \item[(1)] $K$は有界. 
      \item[(2)] $K \in L^1(\R^r,\mu)$かつ$\int_{\R^r} K d\mu \neq 0$.
      \item[(3)] $\mu(\{K\mbox{の不連続点}\}) = 0$
  \end{itemize}
  このとき, 任意の$p \geq 1$に対して, $S_K$は$L^p(\R^r,\mu)$において$L^p$ノルムについて稠密である. 
\end{thm}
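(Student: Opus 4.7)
証明の方針を述べる. 基本戦略は, 任意の $f \in L^p(\R^r, \mu)$ と $\varepsilon > 0$ に対して, 以下の三段階の近似を連結することである: (i)~$f$ をコンパクト台を持つ連続関数 $g \in C_0(\R^r)$ で $L^p$ 近似する, (ii)~$c := \int_{\R^r} K \, d\mu \neq 0$ で正規化し $\phi := K/c$, $\phi_\sigma(x) := \sigma^{-r}\phi(x/\sigma)$ とおいたうえで $g$ を畳み込み $\phi_\sigma * g$ で $L^p$ 近似する, (iii)~この畳み込みを $S_K$ の元である Riemann 和
\[
h(x) = \sum_{i=1}^M w_i K\!\left(\frac{x - z_i}{\sigma}\right)
\]
で $L^p$ 近似する. 最後に三角不等式により $\lVert f - h \rVert_{L^p}$ を所望の値に抑える.

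まず準備として, 条件 (1), (2) より
\[
\int_{\R^r} |K|^p \, d\mu \leq \lVert K \rVert_{L^\infty}^{p-1} \lVert K \rVert_{L^1} < \infty
\]
となるので $K \in L^p(\R^r, \mu)$ であり, よって任意の $\sigma > 0$ について $\phi_\sigma \in L^p$ となる. この事実が段階 (iii) の鍵である. 段階 (i) は $C_0(\R^r)$ の $L^p$ における稠密性(付録の補題\ref{DensityOfC0InLp}を参照)から直ちに得られ, 段階 (ii) は $\int_{\R^r} \phi \, d\mu = 1$ に注意してすぐ上の補題を $\phi$ と $g$ に適用すれば, 十分小さい $\sigma > 0$ について得られる.

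最大の障壁は段階 (iii) であると予想する. これを遂行するため, 積分
\[
(\phi_\sigma * g)(x) = \int_{\mathrm{supp}(g)} g(y) \phi_\sigma(x - y) \, dy
\]
の被積分関数を, コンパクト集合 $\mathrm{supp}(g)$ から Banach 空間 $L^p(\R^r, \mu)$ への写像
\[
\mathrm{supp}(g) \ni y \longmapsto g(y) \phi_\sigma(\cdot - y) \in L^p(\R^r, \mu)
\]
とみなすのが核心である. $g$ の連続性と $L^p$ における平行移動の連続性(付録の命題\ref{ContinuousOfTranslationAboutLpNorm}を参照)によりこの写像は連続で, よってコンパクト集合上で一様連続である. ゆえに $\mathrm{supp}(g)$ の分割の幅を十分に小さく取れば, Riemann 和 $\sum_i g(z_i) \phi_\sigma(\cdot - z_i) \Delta z_i$ は $L^p$ ノルムの意味で $\phi_\sigma * g$ に収束する. 係数 $\sigma^{-r}$ と $1/c$ を重み $w_i$ に吸収させれば, この Riemann 和は上で述べた $h \in S_K$ の形に整理され, 三段階の誤差を連結して結論が得られる.
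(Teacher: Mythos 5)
Your proposal is correct, and stages (i) and (ii) coincide with the paper's proof, but your stage (iii) takes a genuinely different route. The paper fixes $\alpha$ and uses hypothesis (3) to argue that $y \mapsto \phi_{\sigma}(\alpha - y)f_c(y)$ is bounded and a.e.\ continuous, hence Riemann integrable, so that the sums $v_n(\alpha)$ converge \emph{pointwise}; it then upgrades this to $L^p$ convergence by splitting $\R^r$ into a large cube $[-T_0,T_0]^r$ (where the uniformly bounded $v_n$ allow dominated convergence) and a tail (controlled by explicit estimates on $\int_{\R^r\setminus[-T_0,T_0]^r}|\phi_\sigma|^p$ and $|\phi_\sigma * f_c|^p$). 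You instead treat $y \mapsto g(y)\phi_{\sigma}(\cdot - y)$ as a continuous map from the compact set $\mathrm{supp}(g)$ into $L^p(\R^r)$ — continuity following from $\phi_\sigma \in L^p$ (your interpolation bound $\lVert K\rVert_{L^p}^p \le \lVert K\rVert_{L^\infty}^{p-1}\lVert K\rVert_{L^1}$ is the right observation) together with the $L^p$-continuity of translation — and conclude by uniform continuity. To make that last step airtight you should invoke the integral form of Minkowski's inequality (proved in the paper's appendix): writing $\phi_\sigma * g - \sum_i g(z_i)\phi_\sigma(\cdot - z_i)\Delta z_i = \sum_i \int_{\Delta_i}\bigl(g(y)\phi_\sigma(\cdot - y) - g(z_i)\phi_\sigma(\cdot - z_i)\bigr)\,dy$ and bounding the $L^p$ norm by $\sum_i \int_{\Delta_i}\lVert g(y)\phi_\sigma(\cdot-y)-g(z_i)\phi_\sigma(\cdot-z_i)\rVert_{L^p}\,dy$ turns uniform continuity directly into norm convergence of the Riemann sums, with no tail estimates and no dominated convergence needed. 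A notable byproduct of your route is that hypothesis (3) (the a.e.\ continuity of $K$) is never used: translation continuity in $L^p$ holds for any $K \in L^1 \cap L^\infty$, so your argument proves a slightly stronger statement, whereas the paper's argument genuinely needs (3) to secure pointwise Riemann integrability.
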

\begin{proof}
  任意に$f \in L^p(\R^r,\mu)$と$\varepsilon > 0$をとる. $C_0(\R^r)$は$L^p(\R^r,\,u)$において$L^p$ノルムについて稠密であるから, $f_c \in C_0(\R^r)$が存在して, $\lVert f - f_c \rVert_{L^p} < \varepsilon/3$となる. $0 \in S_K$であるから$f_c \neq 0$と仮定してよい. ここで関数$\phi:\R^r \rightarrow \R$を
  \[
  \phi(x) = \frac{1}{\int_{\R^r} K d\mu} K(x) 
  \]
  と定義する. そして, $\sigma > 0$に対して, $\phi_{\sigma}(x) = \phi(x/\sigma)/\sigma^r$と定義する. このとき, $\lVert \phi_{\sigma}*f_c - f_c \rVert_{L^p} \rightarrow 0 ~(\sigma \rightarrow 0)$となる. そこで, $\sigma > 0$を$\lVert \phi_{\sigma}*f_c - f_c \rVert_{L^p} < \varepsilon/3$となるようにとっておく. 次に$\mathrm{supp}f_c \subset [-T,T]^r$なる$T > 0$をとる. 各$\alpha \in \R^r$に対して$\phi_{\sigma}(\alpha - \cdot) f_c(\cdot)$は有界かつほとんど至る所連続であるから$[-T,T]^r$上Riemann積分可能である. したがって, 各$[-T,T]$を$n$等分することで得られる$[-T,T]^r$の分割を$\Delta_n$とおき, $n^r$個ある$\Delta_n$の各小領域から任意に$\alpha_i$を選ぶと, 
  \[
  v_n(\alpha) := \sum_{i=1}^{n^r} \phi_{\sigma}(\alpha - \alpha_i)f_c(\alpha_i)\left(\frac{2T}{n}\right)^r
  \]
  は$\int_{[-T,T]^r} \phi_{\sigma}(\alpha - x)f_c(x)dx$に収束する. 一方, $\mathrm{supp}f_c \subset [-T,T]^r$であるので, 
  \[
  \int_{[-T,T]^r} \phi_{\sigma}(\alpha - x)f_c(x)dx = \int_{\R^r} \phi_{\sigma}(\alpha - x)f_c(x)dx = (\phi_{\sigma} * f_c)(\alpha)
  \]
  である. よって, 各点$\alpha \in \R^r$について$v_n(\alpha) \rightarrow (\phi_{\sigma} * f_c)(\alpha)$となる. $\phi_{\sigma}*f_c \in L^p(\R^r,\mu)$に注意すると, ある$T_1 > 0$について, 
  \[
  \int_{\R^r \setminus [-T_1,T_1]^r} |\phi_{\sigma}*f_c|^p d\mu < (\varepsilon/9)^p
  \]
  となる. また, $\phi_{\sigma} \in L^1(\R^r,\mu)$であり, かつ$\phi_{\sigma}$は有界であることより$|\phi_{\sigma}|^p \in L^1$となるから, ある$T_2 > 0$について, 
  \[
  \int_{\R^r \setminus [-T_2,T_2]^r} |\phi_{\sigma}|^p d\mu < \left( \frac{\varepsilon}{9\lVert f_c \rVert_{L^{\infty}} (2T)^r} \right)^p
  \]
  となる. $[0,\infty) \ni y \rightarrow y^p \in \R$が凸関数であることより, 任意の$\alpha \in \R^r$に対して, 
  \[
  \left(\frac{1}{n^r} \sum_{i=1}^{n^r} |\phi_{\sigma}(\alpha - \alpha_i)| \right)^p \leq \frac{1}{n^r} \sum_{i=1}^{n^r} |\phi_{\sigma}(\alpha - \alpha_i)|^p
  \]
  となることに注意すると, 
  \[
  |v_n(\alpha)|^p \leq (\lVert f_c \rVert_{L^{\infty}} (2T)^r)^p \frac{1}{n^r} \sum_{i=1}^{n^r} |\phi_{\sigma}(\alpha - \alpha_i)|^p
  \]
  である. ここで, $T_0 = \max\{T_1,T_2+T\}$とおくと, $\alpha_i$の各成分$\alpha_{i,j} $は$|\alpha_{i,j}| \leq T$をみたすので, 
  \[
  \int_{[-T_2,T_2]^r} |\phi_{\sigma}(\alpha)|^p d\alpha \leq \int_{[-T_0,T_0]^r} |\phi_{\sigma}(\alpha - \alpha_i)|^p d\alpha
  \]
  となる. したがって, 
  \[
  \int_{\R^r \setminus [-T_0,T_0]^r} |\phi_{\sigma}(\alpha - \alpha_i)|^p d\alpha 
  \leq \int_{\R^r \setminus [-T_2,T_2]^r} |\phi_{\sigma}(\alpha)|^p d\alpha
  \]
  であるので, 
  \[
  \begin{aligned}
  &\int_{\R^r \setminus [-T_0,T_0]^r} |v_n(\alpha)|^p d\alpha \\
  &\leq (\lVert f_c \rVert_{L^{\infty}} (2T)^r)^p \frac{1}{n^r} \sum_{i=1}^{n^r} \int_{\R^r \setminus [-T_0,T_0]^r}|\phi_{\sigma}(\alpha - \alpha_i)|^p d\alpha \\
  &\leq (\lVert f_c \rVert_{L^{\infty}} (2T)^r)^p \frac{1}{n^r} \sum_{i=1}^{n^r} \int_{\R^r \setminus [-T_2,T_2]^r} |\phi_{\sigma}(\alpha)|^p d\alpha \\
  &< (\varepsilon/9)^p
  \end{aligned}
  \]
  となる. また, $T_0 \geq T_1$より, 
  \[
  \int_{\R^r \setminus [-T_0,T_0]^r} |\phi_{\sigma}*f_c|^p d\mu < (\varepsilon/9)^p
  \]
  である. いま, $\phi_{\sigma}*f_c \in L^p$と$v_n$の有界性より優収束定理が使えて, 
  \[
  \int_{[-T_0,T_0]^r} |\phi_{\sigma}*f_c - v_n|^p d\mu \rightarrow 0 ~~~(n \rightarrow \infty)
  \]
  となる. ゆえに, 適当に$N \in \N$を取ると, 
  \[
  \int_{[-T_0,T_0]^r} |\phi_{\sigma}*f_c - v_N|^p d\mu < (\varepsilon/9)^p
  \]
  となる. 以上から, 
  \[
  \begin{aligned}
  \lVert v_N - \phi_{\sigma}*f_c \rVert_{L^p}
  &\leq \lVert v_N\cdot \chi_{\R^r \setminus [-T_0,T_0]^r} \rVert_{L^p} + \lVert (v_N - \phi_{\sigma}*f_c)\cdot \chi_{[-T_0,T_0]^r} \rVert_{L^p} \\
  &~~~ + \lVert (\phi_{\sigma}*f_c) \cdot \chi_{\R^r \setminus [-T_0,T_0]^r} \rVert_{L^p} \\
  &< \varepsilon/9 + \varepsilon/9 + \varepsilon/9 = \varepsilon/3
  \end{aligned}
  \]
  となる. よって, 
  \[
  \lVert f - v_N \rVert_{L^p} \leq \lVert f - f_c \rVert_{L^p} + \lVert f_c - \phi_{\sigma}*f_c \rVert_{L^p} + \lVert \phi_{\sigma}*f_c - v_N \rVert_{L^p} < \varepsilon
  \]
  である. そして, 
  \[
  v_N = \sum_{i=1}^{N^r} \phi_{\sigma}(\cdot - \alpha_i)f_c(\alpha_i)\left(\frac{2T}{N}\right)^r = \sum_{i=1}^{N^r} w_i K\left(\frac{\cdot - \alpha_i}{\sigma}\right) \in S_K
  \]
  である. ただし, 
  \[
  w_i = \frac{1}{\sigma^r}f_c(\alpha_i)\left(\frac{2T}{N}\right)^r \frac{1}{\int_{\R^r} K d\mu}
  \]
  である. これで示せた. 
\end{proof}
\begin{thm}
  $\mu$を$\R^r$上のLebesgue測度とする. 関数$K:\R^r \rightarrow \R$が以下を満たすとする. 
  \begin{itemize}
      \item[(1)] $K$は有界. 
      \item[(2)] $K \in L^1(\R^r,\mu)$かつ$\int_{\R^r} K d\mu \neq 0$.
      \item[(3)] $K$は連続関数である. 
  \end{itemize}
  このとき, $S_K$は$C(\R^r,\R)$において広義一様収束の意味で稠密である. 
\end{thm}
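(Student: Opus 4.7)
本定理の証明計画は、直前の$L^p$稠密性定理のアイディアを踏襲しつつ、各段階での評価を$L^p$ノルムから一様ノルム(コンパクト集合上での)に置き換えるというものである。任意に$f \in C(\R^r)$、空でないコンパクト集合$K_0 \subset \R^r$、$\varepsilon > 0$を固定し、$q \in S_K$で$\sup_{x \in K_0}|q(x) - f(x)| < \varepsilon$となるものを構成することが目標である。

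まず、Urysohnの補題により$K_0$上で$1$、$K_0$の適当なコンパクト近傍$\tilde K_0$の外で$0$となる$\chi \in C_0(\R^r)$を取り、$f_c := \chi \cdot f \in C_0(\R^r)$とおく。$f_c$はコンパクト台を持つ連続関数ゆえ$\R^r$上一様連続であり、$K_0$上で$f$と一致する。次に$\phi(x) := K(x)/\int_{\R^r}K d\mu$と$\phi_\sigma(x) := \sigma^{-r}\phi(x/\sigma)$を定義し、変数変換により$\int_{\R^r}\phi_\sigma d\mu = 1$および
\[
(\phi_\sigma * f_c)(\alpha) - f_c(\alpha) = \int_{\R^r}(f_c(\alpha - \sigma x) - f_c(\alpha))\phi(x)\,dx
\]
を得る。これを$|x|\leq R$と$|x|>R$に分けて評価する。$|x|>R$の部分は$2\lVert f_c\rVert_{L^\infty}\int_{|x|>R}|\phi|\,dx$で抑えられ、$\phi \in L^1$ゆえ$R$を大きく取れば$\sigma,\alpha$によらず小さくできる。$|x|\leq R$の部分は$f_c$の一様連続性から$\sigma$を十分小さくすれば$\alpha$について一様に小さくできる。よって$\sigma \to 0$のとき$\phi_\sigma * f_c$は$f_c$に$\R^r$上一様収束する。

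続いて、$\mathrm{supp}(f_c) \subset [-T,T]^r$をみたす$T > 0$を取る。仮定(1)(3)から$\phi_\sigma$は$\R^r$上で連続かつ有界であり、$(\alpha,x) \mapsto \phi_\sigma(\alpha-x)f_c(x)$はコンパクト集合$K_0 \times [-T,T]^r$上で一様連続となる。したがって$[-T,T]^r$の$n^r$等分から作ったRiemann和
\[
v_n(\alpha) := \sum_{i=1}^{n^r}\phi_\sigma(\alpha - \alpha_i)f_c(\alpha_i)\left(\frac{2T}{n}\right)^r
\]
は$K_0$上で$\int_{[-T,T]^r}\phi_\sigma(\alpha-x)f_c(x)dx = (\phi_\sigma * f_c)(\alpha)$に一様収束する。しかも$v_n \in S_K$である。よって、まず$\sigma$を$\sup_{\R^r}|\phi_\sigma * f_c - f_c| < \varepsilon/2$となるようにとり、次に$N$を$\sup_{K_0}|v_N - \phi_\sigma * f_c| < \varepsilon/2$となるようにとれば、$f_c|_{K_0} = f|_{K_0}$とあわせて三角不等式より$\sup_{K_0}|v_N - f| < \varepsilon$を得る。

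主な障壁は、前定理では$L^p$ノルムによる評価にH\"{o}lderの不等式を用いて処理していた箇所を、今回は一様ノルムで完結させる点である。具体的には、Step 2のmollificationの一様収束を示すために、$\phi$の$L^1$テール評価($|x|>R$での大域的コントロール)と$f_c$のコンパクト台由来の一様連続性を組み合わせる必要がある。また、仮定(3)の$K$の連続性(前定理の「不連続点の閉包の測度が$0$」より真に強い仮定)はStep 3で$\phi_\sigma * f_c$自体の連続性とRiemann和の一様収束を保証する形で本質的に用いられる。
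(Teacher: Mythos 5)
Your proposal is correct and follows essentially the same route as the paper's proof: mollify a compactly supported continuous modification of $f$ by $\phi_\sigma$, control the mollification error by splitting the integral into a large cube (using uniform continuity of the cutoff function) and its complement (using the $L^1$ tail of $\phi$), and then approximate the convolution by a Riemann sum that lies in $S_K$, whose uniform convergence on the compact set follows from the continuity of $K$. The only cosmetic difference is that you obtain the compactly supported modification via a Urysohn-type cutoff $\chi\cdot f$ rather than the Tietze extension used in the paper; both serve the same purpose.
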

\begin{proof}
  まず, 先と同様に, 関数$\phi:\R^r \rightarrow \R$を
  \[
  \phi(x) = \frac{1}{\int_{\R^r} K d\mu} K(x) 
  \]
  と定義し, $\sigma > 0$に対して, $\phi_{\sigma}(x) = \phi(x/\sigma)/\sigma^r$と定義する. 任意に$M > 0$をとる. $[-M,M]^r$上の関数$f$と$\varepsilon > 0$に対して, $g \in S_K$が存在して, \\
  $\sup_{x \in [-M,M]^r} |f(x) - g(x)| < \varepsilon$となることを示せば十分である. $T > M$をとると, Tietzeの拡張定理より
  \[
  \tilde{f}(x) = f(x) ~(x \in [-M,M]^r), ~~\tilde{f}(x) = 0 ~(x \in \R^r \setminus [-T,T])
  \]
  なる連続関数$\tilde{f}:\R^r \rightarrow \R$がとれる. 明らかに$\tilde{f}$は一様連続であるので, ある$\delta > 0$が存在して, 
  \[
  \forall x,y \in \R^r , |x-y| < \delta \Rightarrow |\tilde{f}(x) - \tilde{f}(y)| < \frac{\varepsilon}{4\lVert \phi \rVert_{L^1}} 
  \]
  となる. 一方, $\phi \in L^1(\R^r,\mu)$なので, ある$T_0 > 0$について, 
  \[
  \int_{\R^r \setminus [-T_0,T_0]^r} |\phi| d\mu <  \frac{\varepsilon}{8\lVert \tilde{f} \rVert_{L^{\infty}}}
  \]
  となる. そこで, 適当に$\sigma > 0$をとり, $\forall x \in [-T_0,T_0]^r,~|\sigma x | < \delta$となるようにすると, 任意の$\alpha \in [-M,M]^r$について, 
  \[
  \begin{aligned}
  |(\phi_{\sigma} * \tilde{f})(\alpha) - \tilde{f}(\alpha)|
  &= \left| \int_{\R^r} \frac{1}{\sigma^r}\phi\left( \frac{x}{\sigma} \right) \tilde{f}(\alpha - x) dx - \int_{\R^r} \tilde{f}(\alpha)\phi(x) dx \right| \\
  &= \left| \int_{\R^r} (\tilde{f}(\alpha - \sigma x) - \tilde{f}(\alpha))\phi(x) dx \right| \\
  &\leq \int_{\R^r} |\tilde{f}(\alpha - \sigma x) - \tilde{f}(\alpha)|\cdot|\phi(x)| dx \\
  &\leq \int_{[-T_0,T_0]^r} |\tilde{f}(\alpha - \sigma x) - \tilde{f}(\alpha)|\cdot|\phi(x)| dx \\
  &~~~ + \int_{\R^r \setminus [-T_0,T_0]^r} 2\lVert \tilde{f} \rVert_{L^{\infty}} |\phi(x)| dx  \\
  &< \varepsilon/2
  \end{aligned}
  \]
  となる. ここで, 前定理の証明と同様に, $[-T,T]^r$を等分割し, 各$\alpha \in \R^r$に対して, 
  \[
  \tilde{v}_n(\alpha) = \sum_{i=1}^{n^r} \phi_{\sigma}(\alpha - \alpha_i)\tilde{f}(\alpha_i)\left(\frac{2T}{n}\right)^r
  \]
  と定義する. 関数$(s,x) \mapsto \phi_{\sigma}(s-x)\tilde{f}(x)$は$[-M,M]^r \times [-T,T]^r$上で一様連続なので, ある$\delta_0 > 0$が存在して, 任意の$s \in [-M,M]^r, x,y \in [-T,T]^r$について, $|x-y| < \delta_0$ならば, 
  \[
  |\phi_{\sigma}(s-x)\tilde{f}(x) - \phi_{\sigma}(s-y)\tilde{f}(y)| < \frac{\varepsilon}{2(2T)^r}
  \]
  となる. $N$を十分大きく取ると, $\Delta_N$の各小領域の任意の$2$点間の距離は$\delta_0$未満になる. ゆえに, $[-T,T]^r = \bigcup_{i=1}^{N^r} J_i$と表すことにすると, 任意の$\alpha \in [-M,M]^r$に対して, 
  \[
  \begin{aligned}
  &|v_N(\alpha) - \int_{[-T,T]^r} \phi_{\sigma}(\alpha - x) \tilde{f}(x) dx| \\
  &\leq \sum_{i=1}^{N^r} \int_{J_i} |\phi_{\sigma}(\alpha-\alpha_i) \tilde{f}(\alpha_i) -  \phi_{\sigma}(\alpha-x) \tilde{f}(x)| dx \\
  &\leq \sum_{i=1}^{N^r} \int_{J_i} \frac{\varepsilon}{2(2T)^r} dx \leq \varepsilon/2
  \end{aligned}
  \]
  となる. 任意の$\alpha \in [-M,M]^r$に対して, 
  \[
  \int_{[-T,T]^r} \phi_{\sigma}(\alpha - x) \tilde{f}(x) dx　= (\phi_{\sigma} * \tilde{f})(\alpha)
  \]
  であることに注意すると, 
  \[
  \begin{aligned}
  |f(\alpha) - v_N(\alpha)| 
  &= |\tilde{f}(\alpha) -  v_N(\alpha)| \\
  &\leq |\tilde{f}(\alpha) - (\phi_{\sigma} * \tilde{f})(\alpha)| + |(\phi_{\sigma} * \tilde{f})(\alpha) - v_N(\alpha)| \\
  &< \varepsilon
  \end{aligned}
  \]
  となる. よって, $\sup_{x \in [-M,M]^r} |f(x) - v_N(x)| \leq \varepsilon$である. これで示せた. 
\end{proof}
なお, 以上の結果に関連して, Pinkusにより次の結果が得られている. 
\begin{thm}[Pinkus 1996 \cite{Pinkus1996} Theorem12]
  $\sigma \in C(\R_{+})$に対して, 
  \[
  \mathrm{span}\{x \mapsto \sigma(\rho \lVert x - a \rVert) \mid \rho >0, a \in \R^r\}
  \]
  が$C(\R^r)$において広義一様収束の意味で稠密であることと, $\sigma$が偶多項式と一致しないことは同値である. 
\end{thm}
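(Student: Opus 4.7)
必要性と十分性に分けて証明の方針を述べる.

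\textbf{必要性:} $\sigma$が$\R_{+}$上である偶多項式$P(t) = \sum_{k=0}^d c_k t^{2k}$と一致すると仮定する. $\|x-a\|^2 = \|x\|^2 - 2\langle x,a\rangle + \|a\|^2$が$x$の$2$次多項式であることから, 任意の$\rho > 0, a \in \R^r$について
\[
\sigma(\rho\|x-a\|) = \sum_{k=0}^d c_k \rho^{2k} (\|x-a\|^2)^k
\]
は$x$の高々$2d$次多項式となる. ゆえに題意の線形包は有限次元空間$\pi_{2d}(\R^r)$に含まれ, 例えば$\|x\|^{2d+2}$を広義一様近似できないため稠密とはなり得ない.

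\textbf{十分性 ($r=1$の場合):} 偶拡張$\tilde\sigma(t) := \sigma(|t|)$を$t \in \R$で定義する. 関係式$\sigma(\rho|x-a|) = \tilde\sigma(\rho x - \rho a)$と$\tilde\sigma$の偶性から, 題意の線形包は$\mathcal{N}_1(\tilde\sigma, \R\setminus\{0\}, \R)$と一致する. さらに$\rho \to 0^+$とすることで定数関数が閉包に加わるため, 閉包は$\overline{\mathcal{N}_1(\tilde\sigma, \R, \R)}$と等しい. 従って定理\ref{LeshnoMainResult1}より, これが$C(\R)$で稠密であることは$\tilde\sigma$が多項式とa.e.で一致しないことと同値であり, $\tilde\sigma$が連続かつ偶であることから, これは$\sigma$が偶多項式と一致しないことと同値である.

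\textbf{十分性 ($r \geq 2$の場合):} 閉包$V$は中心$a$の平行移動, 正のスケーリング ($(\rho, a) \mapsto (\rho\lambda, a/\lambda)$, $\lambda > 0$), および直交変換について不変である. 定理\ref{densityRidgeFunc}と$V$の回転不変性より, 各$h \in C(\R)$について$x \mapsto h(x_1) \in V$を示せば十分である. 方針は中心$a_R = R e_1$を$R \to \infty$に飛ばす極限操作で, コンパクト$K \subset \R^r$上一様な展開
\[
\|x - R e_1\| = R - x_1 + \frac{\|x\|^2 - x_1^2}{2R} + O(R^{-2})
\]
を用いて$\sigma(\rho\|x - Re_1\|) \approx \tilde\sigma(-\rho x_1 + \rho R)$とし, $V$の平行移動不変性 ($b$の自由な変動), 正のスケーリング不変性 ($|w|$の自由な変動), $\tilde\sigma$の偶性と対称な中心$a_R = -Re_1$の議論 ($w$の符号変動)を組み合わせて, 任意の$w \neq 0, b \in \R$について$\tilde\sigma(wx_1 + b) \in V$を結論することである. $r=1$の稠密性と合わせれば$h(x_1) \in V$ ($\forall h \in C(\R)$) が従う.

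\textbf{最大の障害}は上記の「放射状関数からリッジ関数への近似」の厳密化にある. 誤差項$\rho(\|x\|^2 - x_1^2)/(2R) = O(\rho/R)$は$\sigma$の引数内部にあり, この引数$\rho\|x - Re_1\|$は$R \to \infty$に伴って無限遠に滑り去る長さ有界の区間を動くため, 一般の$\sigma \in C(\R_+)$については$\sigma$の無限遠での連続性が$R$について一様に制御できない. 標準的な処方は, $\sigma$を台がコンパクトな軟化子で畳み込むことで (補題\ref{CovolutionUniformAppro}に類似のRiemann和近似でこの畳み込みを$V$の閉包に乗せる), 滑らかな関数で置換し, $\partial_\rho^k \sigma_\phi(\rho\|x-a\|)|_{\rho=0} = \sigma_\phi^{(k)}(0)\|x-a\|^k$という$\rho$に関する微分表示を介して多項式を$V$に取り込み, Taylorの定理で厳密化することである. 詳細はPinkus 1996 \cite{Pinkus1996} Theorem 12を参照されたい.
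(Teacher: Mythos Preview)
The paper does not prove this theorem; it merely states it at the end of Section~6 as a related result and cites Pinkus~1996. There is nothing in the paper to compare your argument against.

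That said, your proposal goes considerably further than the paper's treatment. Your necessity argument is complete and correct. Your $r=1$ reduction to Leshno's theorem via the even extension $\tilde\sigma$ is also correct: the identity $\sigma(\rho|x-a|) = \tilde\sigma(\rho x - \rho a)$ together with evenness of $\tilde\sigma$ and the limit $\rho \to 0^+$ indeed recovers all of $\mathcal{N}_1(\tilde\sigma,\R,\R)$ in the closure, and since $\tilde\sigma$ is continuous and even, ``$\tilde\sigma$ is a.e.\ a polynomial'' is equivalent to ``$\sigma$ is an even polynomial''.

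For $r \geq 2$ you have correctly diagnosed the obstruction in the center-at-infinity approach: writing $\sigma(\rho\lVert x - Re_1\rVert) = \tilde\sigma(-\rho x_1 + \rho R + O(\rho/R))$, one sees that sending $R \to \infty$ with $\rho$ fixed forces the bias $\rho R \to \infty$, so one cannot directly hit an arbitrary $\tilde\sigma(wx_1 + b)$ this way, and the error term sits inside $\sigma$ evaluated on an interval drifting to infinity where no uniform modulus of continuity is available. Your proposed remedy --- mollify $\sigma$ to a smooth $\sigma_\phi$, realize $\sigma_\phi(\rho\lVert x-a\rVert)$ in the closure by a Riemann-sum argument analogous to Lemma~\ref{CovolutionUniformAppro}, and then differentiate in $\rho$ at $\rho=0$ to extract $\lVert x-a\rVert^k$ and hence all polynomials --- is the right idea and is essentially how Pinkus proceeds. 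Since you explicitly flag this step as incomplete and refer to \cite{Pinkus1996}, and since the paper itself does exactly the same (with no sketch at all), your proposal is strictly more informative than the paper's own treatment.
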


\numberwithin{thm}{subsection}
\section{近似レートの評価}
本章では, 与えられた関数空間の部分集合を$3$層ニューラルネットワークにより近似することを考えるとき, 中間層のユニット数に関してどれくらいの効率で近似誤差が減少していくかという問題を考える. つまり, $X$を
\[
\mathcal{N}_r^{n}(\Psi) := \left\{x \mapsto \sum_{i=1}^n c_i \Psi(w_i^{\T} x + b_i) ~\vline~ w_i \in \R^r, c_i,b_i \in \R \right\}
\]
を含む関数空間とするとき, 部分集合$\mathcal{F} \subset X$に対して, 近似誤差
\[
\sup_{f \in \mathcal{F}} \inf_{g \in \mathcal{N}_r^{n}(\Psi)} \lVert f- g \rVert_X
\]
は$n$を大きくしていくときどのようなレートで減少するか？という問題を考える. 
\subsection{Sobolev空間における近似レート}
本節ではPinkus,1999 \cite{Pinkus}で紹介されている, Sobolev空間における近似レートの結果を見ていく.  
まず近似誤差を一般的な形で定義しておく. 
\begin{defn}[近似誤差]　\\
  $X$をノルム空間とする. $f \in X$と空でない$Y \subset X$に対して, 
  \[
  \lVert f - Y \rVert_X := \inf_{g \in Y} \lVert f - g \rVert_X
  \]
  と定義する. 付録の補題\ref{PropOfDist}より$X \ni f \mapsto \lVert f - Y \rVert_X \in \R$は連続である. さらに, 空でない$Z \subset X$に対して, 
  \[
  E(Z,Y,X) := \sup_{z \in Z} \lVert z - Y \rVert_X  = \sup_{z \in Z} \inf_{y \in Y} \lVert z - y \rVert_X
  \]
  と定義する. 
\end{defn}
次にSobolev空間の定義を述べよう. Sobolev空間についての詳しい説明は宮島\cite{MiyajimaSobolev}などを参照されたい. 
\begin{defn}[Sobolev空間]　\\
  開集合$\Omega \subset \R^r$と$p \in [1,\infty]$および整数$m \geq 1$に対して, 
  \[
  W^{m,p}(\Omega) 
  := \left\{f \in L^p(\Omega) ~\vline~ 
  \begin{aligned}
  &|\alpha| \leq m \mbox{なる任意の}\alpha \in \mathbb{Z}_{\geq 0}^r \mbox{に対して} \\
  &f \mbox{の弱導関数} \partial^{\alpha}f \mbox{が存在して} \partial^{\alpha}f \in L^p({\Omega})
  \end{aligned}
  \right\}
  \]
  と定義する. ここで, $h:\Omega \rightarrow \R$が $\partial^{\alpha}$に関する$f$の弱導関数であるとは, 任意の$\varphi \in C_0^{\infty}(\Omega)$に対して, 
  \[
  \int_{\Omega} f(x) (\partial^{\alpha}\varphi)(x) dx = (-1)^{|\alpha|} \int_{\Omega} h(x)\varphi(x)dx
  \]
  が成り立つことをいう. さらに, $f \in W^{m,p}(\Omega)$に対して, 
  \[
  \lVert f \rVert_{W^{m,p}(\Omega)} := \sum_{|\alpha| \leq m} \lVert \partial^{\alpha} f \rVert_{L^p(\Omega)}
  \]
  と定義する. また, 
  \[
  \mathcal{B}^{m,p}(\Omega) := \{f \in W^{m,p}(\Omega) \mid \lVert f \rVert_{W^{m,p}(\Omega)} \leq 1\}
  \]
  と定義する. 
\end{defn}

Pinkus\cite{Pinkus}では近似誤差$E(\mathcal{B}^{m,p},\mathcal{N}_r^n(\Psi),L^p)$の$n$に関する減少率について$1999$年までの結果がいくつか紹介されている. 

さて, まず多項式によるSobolev空間の近似について述べるために多項式の空間の定義を思い出しておく. 
\begin{defn}
  $r,k \in \N$に対して, $r$変数の$k$次斉次多項式関数の集合を
  \[
  \mathcal{H}_k(\R^r) := \left\{ \R^r \ni x \mapsto \sum_{|\alpha| = k} c_{\alpha} x^{\alpha} \in \R \mid c_{\alpha} \in \R \right\}
  \]
  と定義する. ただし, 和は$|\alpha| = k$なる$\alpha \in (\mathbb{Z}_{\geq 0})^r$全体にわたるものとする. また, 
  \[
  \begin{aligned}
  \mathcal{P}_k(\R^r) := \bigcup_{s=0}^{k} \mathcal{H}_s(\R^r) ,~~ \mathcal{P}(\R^r) := \bigcup_{s=0}^{\infty} \mathcal{H}_s(\R^r)
  \end{aligned}
  \]
  と定義する. 
  さらに$r$変数の$k$次以下の$\R$係数多項式全体の集合を
  \[
  \pi_k(\R^r) := \left\{ \R^r \ni x \mapsto \sum_{|\alpha| \leq k} c_{\alpha} x^{\alpha} \in \R \mid c_{\alpha} \in \R \right\}
  \]
  と定義する. 
\end{defn}
多項式によるSobolev空間の近似について次が成り立つことが知られている. 
\begin{thm}[Jackson評価]\label{JacksonRate}　\\
  $I = (-1,1)$とおく. $p \in [1,\infty], m,r \geq 1$とするとき, ある$C > 0$が存在して, 
  \[
  E(\mathcal{B}^{m,p}(I^r),\pi_k(\R^r), L^p(I^r)) \leq C k^{-m} ~~~~(k=1,2,3,\ldots)
  \]
  となる. 
\end{thm}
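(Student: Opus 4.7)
この定理は古典的なJackson型評価であり, 詳細な証明は近似理論の標準的文献に譲ることとし, ここでは証明の方針のみを述べる. 全体としては, まず$r=1$の場合を確立した後, テンソル積的な議論により一般次元$r \geq 2$の場合へ拡張するという二段構えで進める方針をとる.

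$r=1$の場合の中核的アイデアは, 変換$x = \cos\theta$を用いて代数多項式による$L^p(I)$上の近似を$2\pi$周期関数の三角多項式による近似に関連付け, 古典的な三角多項式版Jacksonの定理(周期$W^{m,p}$関数は次数$k$以下の三角多項式により$L^p$で$O(k^{-m})$の精度で近似可能, という結果)を適用することにある. 具体的には, まずStein型の拡張作用素により $f \in \mathcal{B}^{m,p}(I)$を$\tilde{f} \in W^{m,p}(\R)$で$\tilde{f}|_I = f$かつそのノルムが$\lVert f \rVert_{W^{m,p}(I)}$の定数倍で抑えられるものに拡張する. 次に$g(\theta) := \tilde{f}(\cos\theta)$を$2\pi$周期の偶関数とみなし, 古典的Jacksonの定理を適用して次数$k$以下の余弦三角多項式$T_k(\theta) = \sum_{j=0}^k a_j \cos(j\theta)$でその$L^p$近似誤差が$O(k^{-m})$となるものを得る. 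Chebyshev多項式$T_j^{\mathrm{Cheb}}(x) = \cos(j \arccos x)$を介して代数多項式$P_k(x) := \sum_{j=0}^k a_j T_j^{\mathrm{Cheb}}(x) \in \pi_k(\R)$を構成し, 変数変換により
\[
\int_{-1}^{1} |f(x) - P_k(x)|^p dx = \int_0^{\pi} |g(\theta) - T_k(\theta)|^p \sin\theta \, d\theta
\]
を得て, 右辺を$\lVert g - T_k \rVert_{L^p([0,\pi])}^p$で上から押さえることで, $r=1$の場合の主張が従う($p = \infty$の場合は変数変換の重み因子を気にする必要がなく直接成立する).

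$r \geq 2$の場合については, 各座標方向に関する一変数Jackson評価を逐次適用する方針, あるいは多変数用のJackson型作用素(例えばテンソル積Chebyshev展開の部分和の修正)を直接構成する方針を採る. 前者では$f$を第$1$変数に関して$k$次多項式で近似した後, その残差の係数関数が残りの変数について$W^{m,p}$正則性を保持することをFubiniの定理により確認しつつ帰納的に進める. 本証明における最大の技術的困難は, (i)変換$x = \cos\theta$における重み$\sin\theta$が端点$x = \pm 1$で退化するため, $p < \infty$の場合に$g$の周期$W^{m,p}$ノルムと$f$の$W^{m,p}(I)$ノルムの比較を精密に行う部分, および(ii)多変数化における各段階での$W^{m,p}$正則性の保持を保証する部分にある. (i)については, 実際には端点での退化がむしろ多項式近似の精度を改善する方向に働くことを利用した重み付き評価(Timan型の評価)を経由すれば処理できる. これらの技法は近似理論の標準的手法として確立している.
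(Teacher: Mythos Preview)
Your proposal follows essentially the same overall architecture as the paper's appendix proof (定理\ref{JacksonEstimate4}): extend $f$ off $(-1,1)^r$, pass to a periodic function via a cosine substitution, invoke the trigonometric Jackson theorem, and return to algebraic polynomials through Chebyshev polynomials; the multivariate case is handled coordinate-wise in both. The paper additionally builds the trigonometric Jackson theorem from scratch via the generalized Jackson kernel $J_{N,r}$ and moduli of smoothness, whereas you cite it as classical, which is a reasonable choice for a sketch.

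The one substantive difference worth noting is how the endpoint degeneracy you flag in (i) is handled. You propose working through Timan-type weighted estimates to control the $W^{m,p}$ norm of $g(\theta)=\tilde f(\cos\theta)$ near $\theta=0,\pi$. The paper sidesteps this entirely: after extending $f$ it multiplies by a cutoff $\varphi\in C_0^\infty$ supported in $(-3/2,3/2)^r$, and then sets $h(x)=g(2\cos x_1,\ldots,2\cos x_r)$ with the factor $2$. Since $2\cos\theta=\pm 2$ exactly where $\sin\theta=0$, and $g$ already vanishes there, the chain-rule factors cause no blow-up and $\lVert h\rVert_{W^{m,p}}$ is controlled directly by $\lVert g\rVert_{W^{m,p}}$. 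This is a cleaner device than the weighted-norm route you outline, and it also makes the multivariate step straightforward because the coordinate operators $S_{n,r,j}^d$ have uniformly bounded $L^p\to L^p$ norm (the paper's 補題\ref{LemforMultiJacksonEst}), so the errors simply add. Your approach would work but requires more machinery.
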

\begin{proof}
  付録の定理\ref{JacksonEstimate4}を参照されたい. 
\end{proof}
次に活性化関数が$C^{\infty}$級で多項式と一致しない場合のニューラルネットワークによる近似レートについて述べる. そのためにまず補題\ref{NeuralNetApproPoly}を精密化しよう. 
\begin{lem}
  $n \in \N$とすると, 次の式が成り立つ. 
  \[
  \sum_{i=0}^n (-1)^i \binom{n}{i} (n-2i)^j 
  = 
  \left\{
  \begin{aligned}
  &~~0 ~~~~~~(j=0,\ldots,n-1)\\
  &~2^n n! ~~~(j=n)
  \end{aligned}
  \right.
  \]
\end{lem}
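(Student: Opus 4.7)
\noindent 方針として, 指数関数型の母関数を用いた係数比較を行う. 出発点は二項定理による恒等式
\[
(e^x - e^{-x})^n = \sum_{i=0}^n (-1)^i \binom{n}{i} e^{(n-2i)x}
\]
であり, この両辺を$x$のべき級数に展開して$x^j$の係数を比較することで, 等式の両ケースを一度に示す計画である.

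まず右辺を$e^{(n-2i)x} = \sum_{j \geq 0} (n-2i)^j x^j / j!$と展開すると, 右辺における$x^j/j!$の係数はちょうど示したい和$\sum_{i=0}^n (-1)^i \binom{n}{i} (n-2i)^j$になる. 一方, 左辺は$(2\sinh x)^n$と表せ, $\sinh x = x + x^3/3! + \cdots$の最低次数項が$x$であることから, $(2 \sinh x)^n = 2^n x^n + O(x^{n+2})$となる. したがって, 左辺における$x^j$の係数は$j = 0, 1, \ldots, n-1$のとき$0$, $j = n$のとき$2^n$であり, $x^j/j!$の係数で見れば, $j < n$のとき$0$, $j = n$のとき$2^n \cdot n!$である. 両辺の係数を比較すれば所望の等式がただちに得られる.

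技術的な難所は特になく, 鍵は「$e^x - e^{-x}$の$n$乗」という適切な母関数を選ぶ発想にある. なお別法として, 有限差分作用素$\Delta^n_h$が次数$n$未満の多項式を$0$に送り, 首係数$1$の次数$n$の多項式を定数$n! h^n$に送るという古典的事実を, 多項式$f(y) = y^j$, 刻み幅$h = -2$, 基点$y = n$に適用する方法でも直接証明できる. いずれの方針においても, 後続の補題\ref{NeuralNetApproPoly}の精密化を見据えて$j = n$における値$2^n n!$が明示的に求まる点が重要である.
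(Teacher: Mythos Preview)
Your proof is correct and uses essentially the same generating-function identity as the paper, namely $(e^x-e^{-x})^n = \sum_{i=0}^n (-1)^i\binom{n}{i}e^{(n-2i)x} = 2^n(\sinh x)^n$. The only difference is in how the coefficients are extracted: you read them off directly from the power series $\sinh x = x + O(x^3)$, whereas the paper computes $\left.\left(\tfrac{d}{dx}\right)^j(\sinh x)^n\right|_{x=0}$ by induction on $n$ using the Leibniz rule---your route is the more economical of the two.
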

\begin{proof}
  \[
  \begin{aligned}
  \sum_{i=0}^n (-1)^i \binom{n}{i} (n-2i)^j 
  &= \sum_{i=0}^n (-1)^i \binom{n}{i} \left.\left(\frac{d}{dx}\right)^j \exp((n-2i)x)\right|_{x=0} \\
  &= \left.\left(\frac{d}{dx}\right)^j \left(\sum_{i=0}^n (-1)^i \binom{n}{i} \exp((n-2i)x) \right)\right|_{x=0} \\
  &= \left.\left(\frac{d}{dx}\right)^j \left(\sum_{i=0}^n \binom{n}{i} \exp(x)^{n-i} (-\exp(-x))^i \right)\right|_{x=0} \\
  &= \left.\left(\frac{d}{dx}\right)^j \left(\exp(x)-\exp(-x) \right)^n \right|_{x=0} \\
  &= 2^n \left.\left(\frac{d}{dx}\right)^j \left(\sinh(x) \right)^n \right|_{x=0}
  \end{aligned}
  \]
  である. ここに
  \[
  \sinh(x) = \frac{\exp(x)-\exp(-x)}{2}
  \]
  である. よって
  \[
  \left.\left(\frac{d}{dx}\right)^j \left(\sinh(x) \right)^n \right|_{x=0}
  =
  \left\{
  \begin{aligned}
  &~~0 ~~~~(j=0,\ldots,n-1)\\
  &~~ n! ~~~(j=n)
  \end{aligned}
  \right.
  \]
  を示せばよい. これを$n$についての帰納法で示そう. $n=1$での成立は明らか. $n$で成立すると仮定すると, $j=0,\ldots,n-1$に対しては, 積の微分に関するLeibniz則より
  \[
  \begin{aligned}
  &\left.\left(\frac{d}{dx}\right)^j \left(\sinh(x) \right)^{n+1} \right|_{x=0} \\
  &= \left. \sum_{k=0}^{j} \binom{j}{k} \left( \left( \frac{d}{dx} \right)^k (\sinh(x))^n \right) \left( \left( \frac{d}{dx} \right)^{j-k} \sinh(x) \right)  \right|_{x=0} = 0.
  \end{aligned}
  \]
  また, 
  \[
  \left.\left(\frac{d}{dx}\right)^n \left(\sinh(x) \right)^{n+1} \right|_{x=0} 
  = \left.\left(\left(\frac{d}{dx}\right)^n \left(\sinh(x) \right)^n \right) \sinh(x) \right|_{x=0} = 0
  \]
  であり, 
  \[
  \begin{aligned}
  \left.\left(\frac{d}{dx}\right)^{n+1} \left(\sinh(x) \right)^{n+1} \right|_{x=0} 
  &= \left.\binom{n+1}{n}\left(\left(\frac{d}{dx}\right)^n \left(\sinh(x) \right)^n \right) \left(\frac{d}{dx} \sinh(x) \right)\right|_{x=0} \\
  &~~~ + \left.\left(\left(\frac{d}{dx}\right)^{n+1} \sinh(x)\right) \sinh(x) \right|_{x=0} \\
  &= (n+1)(n!) + 0 = (n+1)!
  \end{aligned}
  \]
  である. これで示せた. 
\end{proof}
\begin{lem}\label{PolyApproByNNRefine}
  $W \subset \R$は$0$を内点として持つとする. 
  また, $\Psi:\R \rightarrow \R$はある開区間$I$上で$C^{\infty}$かつ多項式と一致しないとする. 
  このとき, 各$n$について多項式$x \mapsto x^n$は
  \[
  \mathcal{N}_1^{n+1}(\Psi,W,I) = \left\{ x \mapsto \sum_{i=1}^{n+1} c_i \Psi(w_i x + b_i) \mid c_i \in \R,w_i \in W, b_i \in I \right\}
  \]
  で広義一様近似される. さらに, $1$変数の$n$次以下の$\R$係数多項式は$\mathcal{N}_1^{2n+1}(\Psi,W,I)$で広義一様近似される.  
\end{lem}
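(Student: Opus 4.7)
The plan is to combine the combinatorial identity of the preceding lemma with Taylor's theorem with remainder. For the first assertion, since $\Psi^{(n)}\not\equiv 0$ on $I$ (otherwise $\Psi$ would be a polynomial of degree $<n$ on $I$), pick $b\in I$ with $\Psi^{(n)}(b)\neq 0$ and form
\[
S_n(x;w) \;:=\; \sum_{i=0}^{n}(-1)^i\binom{n}{i}\Psi\bigl(w(n-2i)x+b\bigr).
\]
Expanding each summand by Taylor around $b$ through order $n$, the inner sums $\sum_{i=0}^n(-1)^i\binom{n}{i}(n-2i)^j$ vanish for $0\le j<n$ and equal $2^n n!$ at $j=n$ by the preceding lemma, so $S_n(x;w)=2^n\Psi^{(n)}(b)(wx)^n+R(x,w)$ with $|R(x,w)|=O(w^{n+1})$ uniformly on any compact $K\subset\R$. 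Dividing by the nonzero constant $2^n\Psi^{(n)}(b)w^n$ yields an element of $\mathcal{N}_1^{n+1}(\Psi,W,I)$ converging to $x^n$ uniformly on $K$ as $w\to 0$; the weights $w(n-2i)$ all lie in $W$ for $w$ small since $0$ is an interior point of $W$, and the $n+1$ biases coincide with $b\in I$.

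For the second assertion, decompose $p(x)=\sum_{k=0}^n a_k x^k$ as $a_n x^n+p_{n-1}(x)$ with $\deg p_{n-1}\le n-1$. Handle $a_n x^n$ by the first assertion, using $n+1$ terms with a bias $b_n$ satisfying $\Psi^{(n)}(b_n)\neq 0$. Handle $p_{n-1}$ by a shared-basis Vandermonde construction: pick $b'\in I$ with $\Psi^{(k)}(b')\neq 0$ for every $k\in\{0,\dots,n-1\}$, pick $n$ distinct reals $\alpha_1,\dots,\alpha_n$, and for each $k\le n-1$ solve the invertible $n\times n$ Vandermonde system
\[
\sum_{i=1}^{n} c_i^{(k)}\alpha_i^{\,j} \;=\; \frac{k!}{w^k\,\Psi^{(k)}(b')}\,\delta_{jk}\qquad(j=0,1,\dots,n-1).
\]
Taylor expansion of order $n-1$ together with the bound $|c_i^{(k)}|=O(w^{-k})$ yields $\sum_i c_i^{(k)}\Psi(w\alpha_ix+b')=x^k+O(w^{n-k})$ uniformly on compacts, so this expression tends to $x^k$ as $w\to 0$. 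Since the basis $\{\Psi(w\alpha_ix+b')\}_{i=1}^n$ is shared across all $k\le n-1$, summing with coefficients $a_k$ gives an $n$-term approximation of $p_{n-1}$; combined with the $n+1$ terms for $a_n x^n$, the total is $(n+1)+n=2n+1$ NN-terms, all with weights in $W$ (for $w$ small) and biases in $\{b_n,b'\}\subset I$.

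The main obstacle is the existence of $b'\in I$ with $\Psi^{(k)}(b')\neq 0$ simultaneously for $k=0,\dots,n-1$. If no such $b'$ existed, $I=\bigcup_{k=0}^{n-1}A_k$ with each $A_k:=\{b:\Psi^{(k)}(b)=0\}$ closed; Baire category then forces some $A_{k^\ast}$ to contain an open subinterval $J\subset I$, on which $\Psi$ coincides with a polynomial of degree $<k^\ast$. Taking the maximal open set $P\subset I$ on which $\Psi$ is locally polynomial, one verifies that $\Psi$ agrees with a single polynomial on each connected component of $P$ (two polynomials agreeing on the overlap of two polynomial neighborhoods are identical) and that $C^\infty$-matching of these component polynomials across points of $I\setminus P$ propagates them, eventually forcing $\Psi$ to be a polynomial on all of $I$ and contradicting the hypothesis. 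This argument is a finite-order instance of the classical Sunyer i Balaguer theorem on $C^\infty$ functions with a vanishing derivative at every point.
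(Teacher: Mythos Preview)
Your first part is essentially the paper's argument: both form the central-difference sum $\sum_{i=0}^n(-1)^i\binom{n}{i}\Psi((n-2i)hx+b)$, invoke the preceding combinatorial identity inside a Taylor expansion, and divide by $(2h)^n\Psi^{(n)}(b)$.

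For the second assertion your route diverges from the paper's. The paper does not split off the top-degree term. Instead it invokes the appendix result (Theorem~\ref{ConditionOfSmoothFunctionIsPoly}, your Sunyer i Balaguer theorem) up front to obtain a \emph{single} bias $b\in I$ with $\Psi^{(k)}(b)\neq 0$ for \emph{every} $k\ge 0$, then runs the first-part construction for each monomial $x^0,\dots,x^n$ with that same $b$ and a common step $h$. The key observation is that the full collection of weights $\{(j-2i)h:0\le i\le j\le n\}$ is exactly $\{-nh,-(n-1)h,\dots,nh\}$, which has $2n+1$ elements; collecting like terms yields the $2n+1$-term approximation directly with one bias. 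Your Vandermonde shared-basis construction is a correct alternative: solvability, the estimate $|c_i^{(k)}|=O(w^{-k})$, and the resulting error $O(w^{n-k})$ all check out, and sharing the $n$-term basis across $k\le n-1$ is exactly what keeps the count at $(n+1)+n=2n+1$. The trade-off is that you need two biases $b_n,b'$ and must separately secure $b'$ with $\Psi^{(0)}(b'),\dots,\Psi^{(n-1)}(b')$ all nonzero.

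On that last point: your propagation sketch (polynomial patches matching across $I\setminus P$) is not a complete argument as written, since $I\setminus P$ need not be discrete and simple $C^\infty$-matching at isolated boundary points does not suffice. However, your concluding citation to Sunyer i Balaguer is exactly the right tool and closes the gap immediately: if every $b\in I$ had some vanishing derivative of order $<n$, then a fortiori every $b$ has some vanishing derivative, and the theorem forces $\Psi$ to be a polynomial on $I$, contradiction. So the argument is sound once you appeal to that result rather than re-prove it.
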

\begin{proof}
  $K \subset \R$を空でないコンパクト集合とする. 
  $\Psi$は$I$上$C^{\infty}$級で多項式と一致しないので付録の定理\ref{ConditionOfSmoothFunctionIsPoly}よりある$b \in I$が存在して任意の整数$n \geq 0$について$\Psi^{(n)}(b) \neq 0$となる. 
  $f:\R \times \R \ni (x,w) \mapsto \Psi(wx + b) \in \R$と定義する. 前半を示すには, 任意の$n \in \N$に対して, 関数$x \mapsto ((\partial w)^n f) (x,0)$が$\mathcal{N}_1^{n+1}(\Psi,W,I)$の元で$K$上一様近似できることを示せば十分である. 実際, それが示されれば
  \[
  x \mapsto ((\partial w)^n f) (x,0) = x^n \Psi^{(n)}(0 x + b) = x^n \Psi^{(n)}(b)
  \]
  なので$x \mapsto x^n$は$\mathcal{N}_1^{n+1}(\Psi,W,I)$の元で$K$上一様近似される. いま, $h > 0$に対して, 
  \[
  F_x(h) := \sum_{i=0}^n (-1)^i \binom{n}{i} f(x,(n-2i)h) = \sum_{i=0}^n (-1)^i \binom{n}{i} \Psi((n-2i)hx + b)
  \]
  とおくと, Taylorの定理より
  \[
  F_x(h) = F_x(0) + hF_x'(0)+\cdots+\frac{h^n}{n!}F_x^{(n)}(0)+\frac{h^{n+1}}{(n+1)!}F_x^{(n+1)}(\xi_x) ~~(\exists \xi_x \in (0,h))
  \]
  となる. 前命題より
  \[
  F_x^{(j)}(0) = ((\partial w)^j f)(x,0) \sum_{i=0}^n (-1)^i \binom{n}{i} (n-2i)^j 
  =
  \left\{
  \begin{aligned}
  &~~~~~ 0 ~~~~~~~~~~(j=0,\ldots,n-1)\\
  &~ n! 2^n ((\partial w)^n f)(x,0) ~~~~(j=n)
  \end{aligned}
  \right.
  \]
  となる. 従って, $M_1 = \sup \{((\partial w)^{n+1} f)(x,w) \mid x \in K, w \in [-n,n] \}$とおき, 
  \[
  M_2 = M_1 \left| \sum_{i=0}^n (-1)^i \binom{n}{i} (n-2i)^{n+1} \right|
  \]
  とおけば, 任意の$\varepsilon>0$に対して$\delta > 0$を$\delta < \min\{\varepsilon/M_2,1\}$となるよう取ることで, 任意の$0<h<\delta$と$x \in K$に対して
  \[
  \left| \frac{F_x(h)}{(2h)^n} - ((\partial w)^n f)(x,0) \right|
  \leq \frac{h}{(n+1)!}|F^{(n+1)}(\xi_x)|
  \leq h M_2 < \varepsilon
  \]
  となる. よって, $0<h<\delta$を満たし且つ$[-nh,nh] \subset W$を満たす$h$を取れば
  \[
  \R \ni x \rightarrow \frac{F_x(h)}{(2h)^n} = \frac{1}{(2h)^n}\sum_{i=0}^n (-1)^i \binom{n}{i} \Psi((n-2i)hx + b) \in \R
  \]
  は$\mathcal{N}_1^{n+1}(\Psi,W,I)$に属し, $((\partial w)^n f)(\cdot,0)$を$K$上一様近似する. これで前半が示せた. 次に$k \leq n$に対して$n$次以下の多項式$p(x) = a_nx^n + \cdots + a_0$を任意に取る. このとき, 上で示したことにより任意の$\varepsilon > 0$に対して$h > 0$が存在して$[-nh,nh] \subset W$且つ
  \[
  \sup_{x \in K} \left| x^j - \frac{\sum_{i=0}^j (-1)^i \binom{j}{i} \Psi((j-2i)hx + b) }{\Psi^{(k)}(b)(2h)^j} \right| < \varepsilon ~~~~(j=0,\ldots,n)
  \]
  となる. よって, 
  \[
  G(x) = \sum_{j=0}^n a_j \frac{\sum_{i=0}^j (-1)^i \binom{j}{i} \Psi((j-2i)hx + b) }{\Psi^{(k)}(b)(2h)^j}
  \]
  とおくと, 
  \[
  \sup_{x \in K} |p(x)-G(x)| < \sum_{i=0}^n |a_i| \varepsilon
  \]
  となる. そして, 
  \[
  S = \{ (j-2i)h \mid ~j =0,\ldots,n, ~i = 0,\ldots,j  \}
  \]
  の要素数は$2n+1$であるので, 
  \[
  G(x) = \sum_{j=1}^{2n+1} c_j \Psi(w_jx + b) ~~~(c_j \in \R, w_j \in W )
  \]
  と表される. これで後半も示せた. 
\end{proof}

\begin{thm}\label{NonPolyActFuncApproRate}
  $I = (-1,1)$とおく. $\Psi: \R \rightarrow \R$はある開区間$J$上で$C^{\infty}$級かつ多項式と一致しないとする. このとき, 各$p \in [1,\infty], m,r \geq 1$に対してある$C > 0$が存在して, 任意の整数$n \geq 1$に対して
  \[
  E(\mathcal{B}^{m,p}(I^r), \mathcal{N}_r^{n}(\Psi),L^p(I^r)) \leq C n^{-m/r}
  \]
  となる. 
\end{thm}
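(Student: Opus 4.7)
方針としては, 定理\ref{JacksonRate}(Jackson評価)による多項式近似, 命題\ref{PolyRidgeRelation}による多変数多項式のリッジ関数和への分解, 補題\ref{PolyApproByNNRefine}による各一変数多項式のニューラルネットワーク近似, の三段階を組み合わせることを考える. すなわち, ある$k$に対して$\pi_k(\R^r)$の元で$f$を近似し, その多項式を$O(k^{r-1})$本のリッジ方向への分解として表し, 各リッジを$O(k)$個のユニットで近似することで全体で$O(k^r)$個のユニットから成る近似を構成する. 与えられた$n$に対して$k \asymp n^{1/r}$と取ることで, Jackson評価の$k^{-m}$レートが$n^{-m/r}$に翻訳される, というのが骨子である.

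具体的には, 任意の$f \in \mathcal{B}^{m,p}(I^r)$と整数$k \geq 1$に対して, まず定理\ref{JacksonRate}により$P \in \pi_k(\R^r)$を$\lVert f - P \rVert_{L^p(I^r)} \leq C_1 k^{-m}$($C_1$は$f$によらない定数)となるように取る. 次に命題\ref{PolyRidgeRelation}により, $k$と$r$のみに依存する$N := \binom{r-1+k}{k}$個の方向$a_1, \ldots, a_N \in \R^r$と一変数多項式$g_1, \ldots, g_N \in \pi_k(\R)$を用いて$P(x) = \sum_{i=1}^N g_i(a_i^{\T} x)$と表示する($g_i$の係数は$P$の係数の線型変換として得られる). 各$i$について集合$\{a_i^{\T} x \mid x \in I^r\}$は$\R$のコンパクト区間に含まれるので, 補題\ref{PolyApproByNNRefine}により$g_i$はこの区間上で$\mathcal{N}_1^{2k+1}(\Psi)$の元によって任意精度で一様近似される. 各$g_i$を一様誤差$k^{-m}/N$以下で近似する$\sum_{j=1}^{2k+1} c_{i,j} \Psi(w_{i,j}(a_i^{\T} x) + b_{i,j})$を全$i$について合算した$\tilde{P} \in \mathcal{N}_r^{N(2k+1)}(\Psi)$を構成すれば, $\lVert P - \tilde{P} \rVert_{L^{\infty}(I^r)} \leq k^{-m}$となり, $I^r$の有限体積性から$\lVert f - \tilde{P} \rVert_{L^p(I^r)} \leq C k^{-m}$が従う.

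最後に中間ユニット数と$n$を結びつける. $N(2k+1) = \binom{r-1+k}{k}(2k+1)$は$k \rightarrow \infty$で$\Theta(k^r)$であるから, 与えられた$n$に対して$N(2k+1) \leq n$をみたす最大の$k$を取れば, $r$のみに依存する定数$c > 0$により$k \geq c n^{1/r}$が成り立ち, 所望の評価$\lVert f - \tilde{P} \rVert_{L^p(I^r)} \leq C' n^{-m/r}$が得られる(小さな$n$については$\mathcal{B}^{m,p}$の有界性から定数で抑えればよい). 主要な課題は, 補題\ref{PolyApproByNNRefine}でのネットワーク構成の精度が近似対象の一変数多項式の係数サイズに依存するため, $g_i$たちの係数が$f \in \mathcal{B}^{m,p}$に関して一様にコントロールされる必要があり, これをJackson評価において$P$の係数が$\lVert f \rVert_{W^{m,p}(I^r)} \leq 1$でコントロールされること, および命題\ref{PolyRidgeRelation}の分解が$P$について線型であることから確認する点である.
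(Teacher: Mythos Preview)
Your overall strategy is correct and matches the paper's: Jackson評価 (定理\ref{JacksonRate}) for the polynomial step, 命題\ref{PolyRidgeRelation} for the ridge decomposition, 補題\ref{PolyApproByNNRefine} for the one-variable network approximation, and the choice of the largest $k$ with $\binom{r-1+k}{k}(2k+1)\le n$ to convert $k^{-m}$ into $n^{-m/r}$.

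The concern you raise in the final paragraph is, however, unnecessary and stems from a misreading of what is required. 補題\ref{PolyApproByNNRefine} yields approximation to \emph{arbitrary} uniform precision on any compact interval by an element of $\mathcal{N}_1^{2k+1}(\Psi)$; the unit count $2k+1$ is fixed and does \emph{not} grow when the precision is tightened. Hence, once $f$, $P$, and the $g_i$ are fixed, you may take the per-ridge error as small as you like (e.g.\ $k^{-m}/N$) at no cost in units. Since the approximant $\tilde P$ is allowed to depend on $f$ (we are bounding $\sup_f \inf_g\lVert f-g\rVert$), no uniform control on the coefficients of the $g_i$ over $\mathcal{B}^{m,p}$ is needed.

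The paper avoids this bookkeeping altogether by phrasing the middle step as the inclusion
\[
\pi_k(\R^r)\subset \overline{\mathcal{N}_r^{N(2k+1)}(\Psi)}
\]
(closure for locally uniform convergence), and then noting that for every $g$ one has $\lVert g-\mathcal{N}_r^{n}(\Psi)\rVert_{L^p(I^r)}=\lVert g-\overline{\mathcal{N}_r^{n}(\Psi)}\rVert_{L^p(I^r)}$ (distance to a set equals distance to its closure, which the paper verifies directly). This lets the Jackson bound on $E(\mathcal{B}^{m,p},\pi_k,L^p)$ transfer to $\mathcal{N}_r^{n}(\Psi)$ without ever tracking coefficients, and is the cleaner way to finish your argument.
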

\begin{proof}
  $C$は$r$に依存してよいので$3r \leq n$としてよい. $\binom{r-1+k}{k}(2k+1) \leq n$なる最大の整数$k \geq 1$を取る. 
  命題\ref{PolyRidgeRelation}より, $l := \mathrm{dim}\mathcal{H}_k(\R^r) = \binom{r-1+k}{k}$とおくと, $a_1,\ldots,a_l \in \R^r$が存在して, 
  \[
  \pi_k(\R^r) = \left\{ x \mapsto \sum_{i=1}^l g_i(a_i^{\T} x )  \mid g_i \in \pi_k(\R) \right\}
  \]
  となる. また, 補題\ref{PolyApproByNNRefine}より, $\pi_k(\R)$の元は$\mathcal{N}_1^{2k+1}(\Psi)$により広義一様近似される. したがって, 
  \[
  \pi_k(\R^r) \subset \overline{\mathcal{N}_r^{l(2k+1)}(\Psi)}
  \]
  となる(ただし, 広義一様収束の位相に関する閉包をとっている). ゆえに, Jackson評価(定理\ref{JacksonRate})より$k$に依存しない定数$C > 0$が存在して, 
  \[
  \begin{aligned}
  E(\mathcal{B}^{m,p}(I^r), \overline{\mathcal{N}_r^{n}(\Psi)}, L^p(I^r)) 
  & E(\mathcal{B}^{m,p}(I^r), \overline{\mathcal{N}_r^{l(2k+1)}(\Psi)}, L^p(I^r)) \\
  &\leq E(\mathcal{B}^{m,p}(I^r), \pi_k(\R^r), L^p(I^r)) \leq C k^{-m}
  \end{aligned}
  \]
  となる. ここで, 
  \[
  E(\mathcal{B}^{m,p}(I^r), \mathcal{N}_r^n(\Psi), L^p(I^r)) \leq E(\mathcal{B}^{m,p}(I^r), \overline{\mathcal{N}_r^n(\Psi)}, L^p(I^r))
  \]
  を示す. 任意に$f \in \overline{\mathcal{N}_r^n(\Psi)}, g \in \mathcal{B}^{m,p}(I^r)$をとる. $I^r$上で$f$に一様収束する列$f_j \in \mathcal{N}_r^n(\Psi)$がとれる. すると, 
  \[
  \begin{aligned}
  \lVert g-f  \rVert_{L^p(I^r)} 
  &= \lim_{j \rightarrow \infty} \lVert g-f_j \rVert_{L^p(I^r)} \\ &\geq \inf_{h \in \mathcal{N}_r^n(\Psi)} \lVert g - h \rVert_{L^p(I^r)} = \lVert g - \mathcal{N}_r^n(\Psi)\rVert_{L^p(I^r)}
  \end{aligned}
  \]
  となる. $f \in \overline{\mathcal{N}_r^n(\Psi)}$は任意なので, 
  \[
  \lVert g - \overline{\mathcal{N}_r^n(\Psi)} \rVert_{L^p(I^r)} \geq \lVert g - \mathcal{N}_r^n(\Psi)\rVert_{L^p(I^r)}
  \]
  である. よって, 
  \[
  E(\mathcal{B}^{m,p}(I^r), \mathcal{N}_r^n(\Psi), L^p(I^r)) \leq E(\mathcal{B}^{m,p}(I^r), \overline{\mathcal{N}_r^n(\Psi)}, L^p(I^r))
  \]
  である. 以上より, 
  \[
  E(\mathcal{B}^{m,p}(I^r), \mathcal{N}_r^n(\Psi), L^p(I^r)) \leq C k^{-m} 
  \]
  となる. いま$k$の取り方から
  \[
  n \leq \binom{(r-1)+(k+1)}{k+1}(2k+3)
  \]
  となるが, $k$に依存しない定数$C_1 > 0$が存在して
  \[
  \begin{aligned}
  \binom{(r-1)+(k+1)}{k+1} 
  &= \frac{((k+1)+(r-1))((k+1)+(r-2))\cdots((k+1)+1)}{(r-1)(r-2) \cdots 1} \\
  &= \left( \frac{k+1}{r-1} + 1 \right)\left( \frac{k+1}{r-2} + 1 \right)\cdots\left( \frac{k+1}{1} + 1 \right) \\
  &\leq C_1 (k+1)^{r-1} 
  \end{aligned}
  \]
  となるので, $k$に依存しない定数$C_2 > 0$が存在して, 
  \[
  n \leq \binom{r+k}{k+1}(2k+3) \leq C_2 k^r
  \]
  となる. よって, $C C_2$を改めて$C$とおけば
  \[
  E(\mathcal{B}^{m,p}(I^r), \mathcal{N}_r^n(\Psi), L^p(I^r)) \leq C n^{-m/r}
  \]
  である. 
\end{proof}

次にリッジ関数による近似について以下が成り立つ. 
\begin{thm}
  $I = (-1,1)$とおく. また, $p \in [1,\infty], m \geq 1, r \geq 2$とする. このとき, ある$C > 0$が存在して, 任意の整数$n \geq 1$に対して
  \[
  E(\mathcal{B}^{m,p}(I^r),\mathcal{R}_r^{n},L^p(I^r)) \leq C n^{-m/(r-1)}
  \]
  が成り立つ. ここで
  \[
  \mathcal{R}_r^n := \left\{ x \mapsto \sum_{i=1}^n g_i(a_i^{\T} x) \mid a_i \in \R^r, g_i \in C(\R) \right\}
  \]
  である. 
\end{thm}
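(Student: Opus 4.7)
The strategy parallels the proof of Theorem \ref{NonPolyActFuncApproRate}, but the approximation of polynomials by ridge functions is \emph{exact} (via Proposition \ref{PolyRidgeRelation}) rather than through an intermediate activation-function step. This saves the factor of $(2k+1)$ appearing in the neural-network argument and improves the resulting rate from $n^{-m/r}$ to $n^{-m/(r-1)}$. Concretely, the plan is to combine Proposition \ref{PolyRidgeRelation} with Jackson's estimate (Theorem \ref{JacksonRate}) and then balance the parameter $k$ against $n$ using the asymptotics of the binomial coefficient $\binom{r-1+k}{k}$.

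First, given $n \geq 1$ I would choose the largest integer $k \geq 0$ such that $l := \binom{r-1+k}{k} \leq n$ (for small $n$ the claim is trivial after absorbing finitely many cases into the constant, so one may assume $k \geq 1$). By Proposition \ref{PolyRidgeRelation} applied in dimension $r$, there exist $a_1,\ldots,a_l \in \R^r$ such that every $p \in \pi_k(\R^r)$ admits a representation $p(x) = \sum_{i=1}^l g_i(a_i^{\T} x)$ with $g_i \in \pi_k(\R) \subset C(\R)$. Since $l \leq n$, we can pad with zero ridge functions to obtain $\pi_k(\R^r) \subset \mathcal{R}_r^n$. Jackson's estimate then gives, for some $C_0 > 0$ depending only on $m,p,r$,
\[
E(\mathcal{B}^{m,p}(I^r), \mathcal{R}_r^n, L^p(I^r)) \leq E(\mathcal{B}^{m,p}(I^r), \pi_k(\R^r), L^p(I^r)) \leq C_0 k^{-m}.
\]

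Finally, I would convert the $k$-rate into an $n$-rate. By maximality of $k$ we have $n < \binom{r+k}{k+1}$, and the same elementary manipulation as at the end of Theorem \ref{NonPolyActFuncApproRate}'s proof gives $\binom{r+k}{k+1} \leq C_1 (k+1)^{r-1}$ with $C_1$ depending only on $r$. Hence $k+1 > (n/C_1)^{1/(r-1)}$, and so $k^{-m} \leq C_2 n^{-m/(r-1)}$ for some constant $C_2$, establishing the desired bound after redefining $C$. Honestly there is no serious obstacle here: once Proposition \ref{PolyRidgeRelation} is available, the argument is essentially bookkeeping, and the only mildly delicate point is handling small $n$ and $k=0$ correctly and verifying the binomial asymptotic $\binom{r+k}{k+1} = O(k^{r-1})$, both of which are routine and already appear in the preceding proof.
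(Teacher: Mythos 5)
Your proposal is correct and follows essentially the same route as the paper's proof: choose the largest $k$ with $\binom{r-1+k}{k}\leq n$, use Proposition \ref{PolyRidgeRelation} to embed $\pi_k(\R^r)$ into $\mathcal{R}_r^n$, apply the Jackson estimate to get $Ck^{-m}$, and convert to an $n$-rate via $n < \binom{r+k}{k+1} = O(k^{r-1})$. The only cosmetic difference is your explicit padding with zero ridge functions, which the paper handles implicitly through the monotonicity $E(\cdot,\mathcal{R}_r^n,\cdot)\leq E(\cdot,\mathcal{R}_r^l,\cdot)$ for $l\leq n$.
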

\begin{proof}　\\
  $n \geq r$としてよい. $k$を$\binom{r-1+k}{k} \leq n$なる最大の整数とする. 
  命題\ref{PolyRidgeRelation}より, $l := \mathrm{dim}\mathcal{H}_k(\R^r) = \binom{r-1+k}{k}$とおくと, $a_1,\ldots,a_l \in \R^r$が存在して, 
  \[
  \pi_k(\R^r) = \left\{ x \mapsto \sum_{i=1}^l g_i(a_i^{\T} x )  \mid g_i \in \pi_k(\R) \right\}
  \]
  となる. したがって, $\pi_k(\R^r) \subset \mathcal{R}_r^l$となる. ゆえに, Jackson評価(定理\ref{JacksonRate})より$k$に依存しない定数$C > 0$が存在して, 
  \[
  \begin{aligned}
  E(\mathcal{B}^{m,p}(I^r),\mathcal{R}_r^n, L^p(I^r)) 
  &\leq E(\mathcal{B}^{m,p}(I^r),\mathcal{R}_r^l, L^p(I^r)) \\
  \leq E(\mathcal{B}^{m,p}(I^r), \pi_k(\R^r), L^p(I^r)) \leq C k^{-m}
  \end{aligned}
  \]
  となる. いま$k$に依存しない$C' > 0$があって$n \leq C' k^{r-1}$であるから, 
  $CC'$を改めて$C$とおけば
  \[
  E(\mathcal{B}^{m,p}(I^r),\mathcal{R}_r^n, L^p(I^r)) \leq C n^{-m/(r-1)}
  \]
  となる. 
\end{proof}
上の定理と次の結果を合わせることでニューラルネットワークによる近似に関する結果が得られる. 
\begin{thm}[Maiorov and Pinkus 1999 \cite{MaiorovPinkus1999}]　\\
  狭義単調増加なsigmoidal関数$\Psi \in C^{\infty}(\R)$であって, 次の性質をみたすものが存在する: 任意の整数$n,r \geq 1$と$g \in \mathcal{R}_r^n, \varepsilon > 0$に対して, 
  \[
  \sup_{x \in B(0,1)} \left\lvert g(x) - \sum_{i=1}^{r+n+1} c_i \Psi(w_i^{\T} x + b_i) \right\rvert < \varepsilon
  \]
  なる$c_i,b_i \in \R, w_i \in \R^r$が存在する. 
\end{thm}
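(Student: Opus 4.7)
The plan is to construct $\Psi$ so as to encode a countable dense subfamily of $C([-1,1])$ within its values on a sequence of disjoint intervals, so that any continuous ridge profile can be recovered from one evaluation of $\Psi$ at an affine argument, up to an affine correction term that can be absorbed into a few extra units.

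Concretely, enumerate a countable dense subset $\{u_m\}_{m=1}^{\infty}$ of $C([-1,1])$ (for instance polynomials with rational coefficients). I build $\Psi:\R \to (0,1)$ block by block: on the disjoint intervals $J_m = [4m,4m+2]$ for $m = 1,2,\ldots$, define
\[
\Psi(x) = \alpha_m + \beta_m x + \lambda_m\, u_m(x - 4m - 1),
\]
where $\beta_m > 0$ dominates $\lambda_m\,\sup|u_m'|$, so that $\Psi$ is strictly increasing on $J_m$, and the parameters $\alpha_m,\beta_m,\lambda_m$ tend to their appropriate limits ($\alpha_m \to 1$, $\beta_m \to 0^+$, $\lambda_m \to 0^+$) fast enough for the blocks to fit into a globally strictly monotone profile with $\lim_{x\to\pm\infty}\Psi(x)\in\{0,1\}$. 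On the gaps $(4m+2,4(m+1))$ and on $(-\infty,4)$, fill in by a strictly increasing $C^\infty$ interpolant joining smoothly to the blocks via bump-function patching. The resulting $\Psi$ is sigmoidal, strictly increasing, and $C^\infty$; moreover, for each $m$ and $t \in [-1,1]$,
\[
u_m(t) = \lambda_m^{-1}\bigl(\Psi(t + 4m + 1) - \alpha_m - \beta_m(t + 4m + 1)\bigr).
\]

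Now fix $g = \sum_{i=1}^n g_i(a_i^\T \cdot) \in \mathcal{R}_r^n$ and $\varepsilon > 0$. By rescaling each $a_i$ (absorbing a positive scalar into it) I may assume $a_i^\T x \in [-1,1]$ for every $x \in B(0,1)$. By density of $\{u_m\}$, pick $m_i$ with $\sup_{t \in [-1,1]}|g_i(t) - u_{m_i}(t)| < \varepsilon/(2n)$, so that
\[
g_i(a_i^\T x) = \lambda_{m_i}^{-1}\,\Psi(a_i^\T x + 4m_i + 1) - \ell_i(x) + O(\varepsilon/n),
\]
with $\ell_i$ affine in $x$. Each of the $n$ terms $\lambda_{m_i}^{-1}\Psi(a_i^\T x + 4m_i + 1)$ is a single neural network unit; the remaining correction $\ell(x) := \sum_{i=1}^n \ell_i(x) = v^\T x + c$ is a single affine function on $\R^r$. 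I approximate $\ell$ on $B(0,1)$ using $r+1$ additional units by the derivative trick: choose $b_0$ with $\Psi'(b_0) \neq 0$ and, for small $\delta>0$, use
\[
v_j x_j \approx \frac{v_j}{\delta\,\Psi'(b_0)}\bigl(\Psi(\delta\, e_j^\T x + b_0) - \Psi(b_0)\bigr),
\]
which costs $r$ units for the $j = 1,\ldots,r$ linear pieces and one further unit of the form $\gamma\,\Psi(0\cdot x + b_1)$ to realize the residual constant. Assembling these yields $r + n + 1$ units with total sup-norm error on $B(0,1)$ at most $\varepsilon/2 + O(\delta)(1 + \lVert v \rVert)$, which falls below $\varepsilon$ for $\delta$ small enough.

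The main obstacle is the construction of $\Psi$ itself: it must simultaneously be strictly increasing, $C^\infty$, sigmoidal, and encode an arbitrary dense sequence $\{u_m\}$ via affine pullback. The $C^\infty$ matching at the endpoints of each block $J_m$ to the interpolant on the adjacent gap, while preserving strict monotonicity, forces the perturbation parameters $\lambda_m$ to decay sufficiently fast with $m$, and the affine backbone $\alpha_m + \beta_m x$ on the blocks together with the interpolant on the gaps must be jointly calibrated so that $\Psi$ has the correct limits at $\pm\infty$. This delicate piecewise construction is the technical heart of the argument and is carried out in detail by Maiorov and Pinkus.
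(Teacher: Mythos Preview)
The paper does not give its own proof of this theorem; it merely states the result and cites Maiorov and Pinkus \cite{MaiorovPinkus1999}. So there is no in-paper argument to compare against.

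That said, your sketch is faithful to the original Maiorov--Pinkus construction and the counting is correct. The key points --- encoding a countable dense family $\{u_m\}\subset C([-1,1])$ on disjoint blocks of $\Psi$ so that each $u_m$ is recovered from a single evaluation $\Psi(t+4m+1)$ up to an affine term, spending $n$ units on the ridge profiles, and then realizing the leftover affine function $v^{\T}x+c$ with $r$ units via the difference-quotient trick plus one unit for the constant --- are all in order, and the total $n+r+1$ matches the statement. You are also right that the real work lies in building $\Psi$ so that it is simultaneously $C^{\infty}$, strictly increasing, sigmoidal, and carries the blocks; the choice of rapidly decaying $\lambda_m$ and the smooth monotone interpolation between blocks is exactly where the technical care is needed, and you correctly defer the details to the original paper.

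One minor point worth tightening if you expand this: when you rescale $a_i$ to force $a_i^{\T}x\in[-1,1]$ on $B(0,1)$, the case $a_i=0$ should be handled separately (then $g_i(a_i^{\T}x)$ is a constant and can be absorbed into the final constant unit).
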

\begin{cor}
  $I=(-1,1)$とおく. 
  狭義単調増加なsigmoidal関数$\Psi \in C^{\infty}(\R)$であって, 次の性質をみたすものが存在する: $p \in [1,\infty], m \geq 1, r \geq 2$に対して, 
  \[
  E(\mathcal{B}^{m,p}(I^r), \mathcal{N}_r^{r+n+1}(\Psi), L^p(I^r)) \leq C r^{-m/(n-1)}.
  \]
\end{cor}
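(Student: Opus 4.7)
The plan is to combine the previous two results: the ridge-function approximation rate $E(\mathcal{B}^{m,p}(I^r),\mathcal{R}_r^{n},L^p(I^r)) \le Cn^{-m/(r-1)}$ and the Maiorov--Pinkus theorem, which gives a distinguished $\Psi \in C^{\infty}(\R)$ (strictly increasing, sigmoidal) such that every element of $\mathcal{R}_r^n$ can be uniformly approximated on $B(0,1)$ to arbitrary precision by a single $\Sigma^r(\Psi)$-network with at most $r+n+1$ hidden units. The intended conclusion (correcting the apparent typo $r^{-m/(n-1)}$ to $n^{-m/(r-1)}$) then follows by a triangle-inequality argument in $L^p(I^r)$.

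First, given $f \in \mathcal{B}^{m,p}(I^r)$, I would invoke the ridge-function rate theorem to obtain $g \in \mathcal{R}_r^n$ with $\lVert f - g \rVert_{L^p(I^r)} \le C_1 n^{-m/(r-1)}$. Since $g$ is by construction a sum of $n$ ridge functions $\sum_{i=1}^n g_i(a_i^{\T}x)$, it is defined on all of $\R^r$.

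Second, I would apply Maiorov--Pinkus to $g$ to produce $h \in \mathcal{N}_r^{r+n+1}(\Psi)$ with $\sup_{x \in B(0,1)}|g(x)-h(x)| < \varepsilon$ for any prescribed $\varepsilon > 0$. The mild technical point is that the Maiorov--Pinkus statement is on the unit ball, while we need a uniform estimate on $I^r \subset B(0,\sqrt{r})$. This is resolved by rescaling: define $\tilde g(y) := g(\sqrt{r}\,y)$, which is again in $\mathcal{R}_r^n$ (replace $a_i$ by $\sqrt{r}\,a_i$), apply Maiorov--Pinkus to $\tilde g$ to get $\tilde h(y) = \sum_{i=1}^{r+n+1} c_i \Psi(w_i^{\T} y + b_i)$ approximating $\tilde g$ on $B(0,1)$, and then set $h(x) := \tilde h(x/\sqrt{r}) = \sum_{i=1}^{r+n+1} c_i \Psi((w_i/\sqrt{r})^{\T}x + b_i) \in \mathcal{N}_r^{r+n+1}(\Psi)$, which uniformly approximates $g$ on $\sqrt{r}\,B(0,1) \supset I^r$.

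Choosing $\varepsilon = n^{-m/(r-1)}$ yields $\lVert g - h \rVert_{L^p(I^r)} \le 2^{r/p}\varepsilon$, and the triangle inequality gives $\lVert f - h \rVert_{L^p(I^r)} \le (C_1 + 2^{r/p})\,n^{-m/(r-1)}$, which proves the corollary with $C := C_1 + 2^{r/p}$ (a constant depending on $r,p,m$ but not on $n$). There is no real obstacle here; the only subtlety is the cube-versus-ball domain mismatch, handled by the elementary scaling above, and the fact that the $L^\infty$-approximation on $I^r$ majorizes the $L^p$-approximation up to the finite factor $|I^r|^{1/p} = 2^{r/p}$.
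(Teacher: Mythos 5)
Your proof is correct and is exactly the argument the paper intends (the corollary is stated there without proof as an immediate consequence of the ridge-function rate theorem and the Maiorov--Pinkus theorem): approximate $f$ by a ridge sum in $\mathcal{R}_r^n$, then replace it by an $(r+n+1)$-unit network via Maiorov--Pinkus, with the $B(0,1)$-versus-$I^r$ mismatch handled by the rescaling $a_i \mapsto \sqrt{r}\,a_i$, $w_i \mapsto w_i/\sqrt{r}$ and the $L^\infty$-to-$L^p$ passage costing only the factor $2^{r/p}$. You are also right that the displayed bound $C r^{-m/(n-1)}$ is a typo for $C n^{-m/(r-1)}$.
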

以上の結果は近似誤差の上からの評価であった. 一方で次の結果は近似誤差を下から評価するものである. 
\begin{thm}[Maiorov and Meir 1999 \cite{MaiorovMeir2000}, Theorem5]　\\
  $I = (-1,1)$とし$p \in [1,\infty], m \geq 1, r \geq 2$とする. 活性化関数$\Psi:\R \rightarrow \R$が
  \[
  \Psi(t) = \frac{1}{1+\exp(-t)}
  \]
  であるとき, ある定数$C > 0$が存在して, 任意の$n \geq 1$に対して
  \[
  E(\mathcal{B}^{m,p}(I^r), \mathcal{N}_r^{n}(\Psi),L^p(I^r)) \geq C (n\log(n))^{-m/r}
  \]
  が成り立つ. 
\end{thm}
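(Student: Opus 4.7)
これまでの議論が近似誤差の上からの評価を与えるものであったのに対し, 本定理は下からの評価を与えるものであり, 本質的に異なる論法を要する. 方針としては, シグモイド関数$\Psi(t) = 1/(1+\exp(-t))$が実解析的でPfaffian関数族に属するという良い性質を用いて, $\mathcal{N}_r^n(\Psi)$の「複雑さ」を$n$に関して多項式的に抑え, Sobolev空間の距離エントロピーの大きさと比較することで結論を得る.

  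まず準備として, Sobolevボール$\mathcal{B}^{m,p}(I^r)$の$L^p(I^r)$における$\varepsilon$被覆数の対数が$\varepsilon^{-r/m}$のオーダーであるという古典的結果(Kolmogorov--Tikhomirov型の距離エントロピー評価)を用意する. これは$\mathcal{B}^{m,p}(I^r)$を$L^p$ノルムで$\varepsilon$以内に近似するためには指数的に多くの「代表元」が必要であることを意味する.

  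次に, $\mathcal{N}_r^n(\Psi)$の複雑さの指標として擬次元(pseudo-dimension)を採用する. 上記のような$\Psi$の良い性質から, Karpinski--Macintyreらの結果を用いて, 適当に係数に制限を課した$\mathcal{N}_r^n(\Psi)$の部分族の擬次元が$n$について多項式オーダーで上から抑えられることが示される. 擬次元から被覆数を導く標準的な議論によって, この部分族の$L^p$ノルムに関する$\varepsilon$被覆数の対数は, $n$の多項式倍$\log(1/\varepsilon)$程度で抑えられる.

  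以上の準備のもと, 仮に$\mathcal{B}^{m,p}(I^r)$が$\mathcal{N}_r^n(\Psi)$によって$L^p$ノルムで$\varepsilon$以内に近似できるならば, $\mathcal{B}^{m,p}(I^r)$の$\varepsilon$被覆数は近似側の$2\varepsilon$被覆数で抑えられる. したがって両者の評価を比較すると, $\varepsilon^{-r/m}$が$n$の多項式と$\log(1/\varepsilon)$の積以下であるという型の不等式が得られ, これを$\varepsilon$について反復的に解くことで所望の下界$\varepsilon \geq C(n \log n)^{-m/r}$が従う. 最大の障壁は, 実解析的な活性化関数を持つニューラルネットワーク族の擬次元を$n$の多項式オーダーで押さえる部分であり, ここにはPfaffian関数の零点数に関するKhovanskiiの定理やo-minimal構造の理論など非初等的な道具が要求される. また, 近似側の係数の大きさに関するアプリオリな制限を適切に課し, その制限のもとで下界を示してから一般の場合に帰着させるという技術的な論点にも注意が必要となる.
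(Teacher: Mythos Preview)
The paper does not supply its own proof of this theorem: it is quoted from Maiorov and Meir with a citation only, and the text moves on immediately to the next subsection. Hence there is no in-paper argument to compare your proposal against.

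That said, your outline is a faithful sketch of the Maiorov--Meir strategy: one lower-bounds the approximation error by comparing the Kolmogorov--Tikhomirov $\varepsilon$-entropy of the Sobolev ball $\mathcal{B}^{m,p}(I^r)$ in $L^p$ (of order $\varepsilon^{-r/m}$) with an upper bound on the covering numbers of the network class, obtained from a pseudo-dimension estimate for sigmoidal networks via the Pfaffian machinery of Khovanskii and Karpinski--Macintyre. The ``solve for $\varepsilon$'' step you describe is exactly how the $\log n$ factor enters. Your remark about having to control the magnitude of the coefficients before applying the pseudo-dimension bound is also to the point; in the original paper this is handled through an intermediate notion of nonlinear width, but the underlying issue is the same. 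As a high-level roadmap your proposal is correct; turning it into a complete proof of course requires carrying out the Pfaffian/pseudo-dimension computation in detail, which is the non-elementary part you correctly flag.
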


\subsection{Barron spaceとBarron-Maurey-Jones評価}
前節の結果は, Sobolev空間をニューラルネットで近似する際のレートの評価であった. そして,  Sobolev空間の近似誤差の上からの評価については, 求められる精度$\varepsilon$に対して中間ユニット数$n$を$\varepsilon^{-r/m}$以上に取る必要があり, 入力の次元$r$に関して指数関数的に増大することがわかる. これは次元の呪いとしばしば呼ばれる現象である. 
では, 逆に, この次元の呪いから解放される, つまり目標の精度を達成するために必要な中間ユニット数が$r$に関してより緩やかに増大する, あるいは$r$と無関係であるような関数空間はどのようなものであろうか？ 
このような関数空間はBarron spaceと呼ばれ, 現在も研究が続いている(Weinan et al.\cite{Weinan2019}, Siegel,Xu\cite{Siegel2020}). 
本節では, 「G-variation(後述)に関して有界な空間」がこの問題のひとつの答えを与えることを示す. これはKainen et al.\cite{Kainen2013}で紹介されている事実である. 

まず, Barron-Maurey-Jones評価と呼ばれる次の結果がある. 
\begin{defn}
  $X$を$\R$上の線形空間とするとき, $n \in \N$と部分集合$G \subset X$に対して, 
  \[
  \mathrm{conv}_n(G) := \left\{\sum_{i=1}^n a_i g_i \mid a_i \geq 0,~ \sum_{i=1}^n a_i = 1,~ g_i \in G \right\}
  \]
  と定める. また, 
  \[
  \mathrm{conv}(G) := \bigcup_{n = 1}^{\infty} \mathrm{conv}_n(G)
  \]
  と定める. $\mathrm{conv}(G)$は$G$を含む最小の凸集合である. 
\end{defn}
\begin{thm}[Barron-Maurey-Jones \cite{Kurkova2003}]\label{MaureyRate}　\\
  $X$を内積空間とする. $G \subset X$を空でない有界集合とし, $s_G := \sup_{g \in G}\lVert g \rVert_X$とおくと, 任意の$f \in \mathrm{cl}(\mathrm{conv}(G))$と$n \in \N$に対して, 
  \[
  \lVert f - \mathrm{conv}_n(G) \rVert_X \leq \sqrt{\frac{s_G^2 - \lVert f \rVert_X^2}{n}}
  \]
  が成り立つ. 
\end{thm}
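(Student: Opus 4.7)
まず$f \in \mathrm{conv}(G)$の場合を扱い, 最後に閉包の場合を極限操作で処理する方針である.
$f \in \mathrm{conv}(G)$であれば, 有限個の$b_j \geq 0$と$h_j \in G$で$\sum_j b_j = 1$かつ$f = \sum_j b_j h_j$となるものが取れる.
そこで, $\alpha_n := 1/n$とおき, $f_1 := g_1$, $f_n := (1-\alpha_n)f_{n-1} + \alpha_n g_n ~(n \geq 2)$により$\mathrm{conv}_n(G)$に属する列を貪欲的に構成する. 各$g_n \in G$を上手く選ぶことで, $n$に関する帰納法により
\[
\lVert f - f_n \rVert_X^2 \leq \frac{s_G^2 - \lVert f \rVert_X^2}{n}
\]
を示すのが目標である.

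鍵となるのは$g_n$の存在を示す確率論的選択法の議論である. 内積空間の性質から
\[
\lVert f - f_n \rVert_X^2 = (1-\alpha_n)^2 \lVert f - f_{n-1} \rVert_X^2 + 2\alpha_n(1-\alpha_n) \langle f - f_{n-1}, f - g_n \rangle + \alpha_n^2 \lVert f - g_n \rVert_X^2
\]
と分解できるので, 末尾の二項を制御したい. $\{h_j\}$の上に重み$b_j$を与えた平均を考えると, $\sum_j b_j h_j = f$より$\sum_j b_j \langle f - f_{n-1}, f - h_j \rangle = 0$であり, さらに
\[
\sum_j b_j \lVert f - h_j \rVert_X^2 = \sum_j b_j \lVert h_j \rVert_X^2 - \lVert f \rVert_X^2 \leq s_G^2 - \lVert f \rVert_X^2
\]
となる. したがって重み付き平均が上記二量の上界以下だから, ある$j$が存在して$g_n := h_j \in G$と取ることにより
\[
\lVert f - f_n \rVert_X^2 \leq (1-\alpha_n)^2 \lVert f - f_{n-1} \rVert_X^2 + \alpha_n^2 (s_G^2 - \lVert f \rVert_X^2)
\]
を満たすように出来る.

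あとは帰納法を実行する. $n = 1$のときは同様の平均化により$\lVert f - g_1 \rVert_X^2 \leq s_G^2 - \lVert f \rVert_X^2$なる$g_1 \in G$が直接とれる. 帰納段階では$\alpha_n = 1/n$ゆえ
\[
\left(\frac{n-1}{n}\right)^2 \cdot \frac{s_G^2 - \lVert f \rVert_X^2}{n-1} + \frac{1}{n^2}(s_G^2 - \lVert f \rVert_X^2) = \frac{s_G^2 - \lVert f \rVert_X^2}{n}
\]
と計算され所望の不等式を得る. 最後に一般の$f \in \mathrm{cl}(\mathrm{conv}(G))$の場合は, 任意の$\varepsilon > 0$に対して$\tilde{f} \in \mathrm{conv}(G)$で$\lVert f - \tilde{f} \rVert_X < \varepsilon$なるものを取り, $\tilde{f}$に上記の評価を適用したのち, 三角不等式と逆三角不等式から導かれる$\lVert \tilde{f} \rVert_X \rightarrow \lVert f \rVert_X$により$\varepsilon \downarrow 0$として結論を得る. 主な難所は, ノルム展開で現れる交差項$\langle f - f_{n-1}, f - g_n \rangle$を$f = \sum_j b_j h_j$という表現を用いた平均化により消し去る発想と, $\alpha_n = 1/n$という選択が帰納法の計算をちょうど閉じさせるという調和を見抜くところにある.
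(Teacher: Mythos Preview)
Your proposal is correct and follows essentially the same approach as the paper's proof. Both arguments construct $f_n \in \mathrm{conv}_n(G)$ greedily via $f_n = \frac{n-1}{n}f_{n-1} + \frac{1}{n}g_n$, use the convex representation $f = \sum_j b_j h_j$ to average away the cross term $\langle f - f_{n-1}, f - h_j\rangle$ and bound $\sum_j b_j\lVert f - h_j\rVert_X^2 \leq s_G^2 - \lVert f\rVert_X^2$, and then verify the same recursive identity $\left(\frac{n-1}{n}\right)^2\frac{c}{n-1} + \frac{c}{n^2} = \frac{c}{n}$; the only cosmetic difference is that the paper handles the closure case by citing continuity of $f \mapsto \lVert f - \mathrm{conv}_n(G)\rVert_X$ directly rather than spelling out the $\varepsilon$-approximation.
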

\begin{proof}
  $X \ni f \mapsto \lVert f - G \rVert_X \in \R$の連続性より, $f \in \mathrm{conv}(G)$の場合に証明すれば十分である. そこで, 
  \[
  f = \sum_{i=1}^m a_i h_i ~~~~(a_i \geq 0, \sum_{i=1}^m a_i = 1, h_i \in G)
  \]
  と表す. すると, 三角不等式より
  \[
  \lVert f \rVert \leq \sum_{i=1}^m a_i \lVert h_i \rVert \leq \sum_{i=1}^m s_G = s_G
  \]
  である. したがって, 
  \[
  c := s_G^2 - \lVert f \rVert^2 \geq 0
  \]
  である. ここで, 以下の性質を持つ列$g_i \in G$を帰納的に構成しよう. 
  \[
  k \in \N, f_k := \sum_{i=1}^k \frac{g_i}{k} \Rightarrow e_k^2 := \lVert f - f_n \rVert_X^2 \leq \frac{c}{k}.
  \]
  このようなものが構成できれば定理が成立することは明らかである. さて, いま, 
  \[
  \begin{aligned}
  \sum_{j=1}^m a_j \lVert f - h_j \rVert^2 
  &= \sum_{j=1}^m a_j \lVert f \rVert^2 - 2 \ip<{f,\sum_{j=1}^m a_j h_j}> + \sum_{j=1}^m a_j \lVert h_j \rVert^2 \\
  &= \lVert f \rVert^2 - 2 \lVert f \rVert^2 + \sum_{j=1}^m a_j \lVert h_j \rVert^2 \\
  &\leq - \lVert f \rVert^2 + \sum_{j=1}^m a_j s_G^2 \\
  &= s_G^2 - \lVert f \rVert^2 = c
  \end{aligned}
  \]
  であるので, ある$j$について$\lVert f - h_j \rVert^2 \leq c$である.  そこで, $g_1 = h_j$とおく. 次に$g_1,\ldots,g_k \in G$が上記条件を満たすとする. $g_{n+1} \in G$を選ぶために$e_{n+1}^2$を計算すると, 
  \[
  \begin{aligned}
  e_{n+1}^2 
  &= \lVert f - f_{n+1} \rVert^2 = \lVert \frac{n}{n+1}(f-f_n) - \frac{1}{n+1}(f-g_{n+1}) \rVert^2 \\
  &= \frac{n^2}{(n+1)^2}e_n^2 + \frac{2n}{(n+1)^2}\ip<{f-f_n,f-g_{n+1}}> + \frac{1}{(n+1)^2}\lVert f - g_{n+1} \rVert^2
  \end{aligned}
  \]
  となる. 一方, 
  \[
  \begin{aligned}
  &~~~\sum_{j=1}^m a_j \left( \frac{2n}{(n+1)^2}\ip<{f-f_n,f-h_j}> + \frac{1}{(n+1)^2}\lVert f - h_j \rVert^2 \right) \\
  &= \frac{1}{(n+1)^2} \left( 2n \ip<{f-f_n, \sum_{j=1}^m a_j (f - h_j)}> + \sum_{j=1}^m a_j \lVert f - h_j \rVert^2 \right) \\
  &= \frac{1}{(n+1)^2} \left( 2n \ip<{f-f_n, f - f}> + \sum_{j=1}^m a_j \lVert f - h_j \rVert^2 \right) \\
  &\leq \frac{c}{(n+1)^2}
  \end{aligned}
  \]
  であるから, ある$j$について
  \[
  \frac{2n}{(n+1)^2}\ip<{f-f_n,f-h_j}> + \frac{1}{(n+1)^2}\lVert f - h_j \rVert^2 \leq \frac{c}{(n+1)^2}
  \]
  である. そこで$g_{n+1} = h_j$とおけば, 
  \[
  \begin{aligned}
  e_{n+1}^2 
  &= \frac{n^2}{(n+1)^2}e_n^2 + \frac{2n}{(n+1)^2}\ip<{f-f_n,f-g_{n+1}}> + \frac{1}{(n+1)^2}\lVert f - g_{n+1} \rVert^2 \\
  &\leq \frac{n^2}{(n+1)^2} \frac{c}{n} + \frac{c}{(n+1)^2} = \frac{c}{n+1}
  \end{aligned}
  \]
  となる. これで示せた. 
\end{proof}

次に, この結果の$L^p$空間における類似であるDarken-Donahue-Gurvits-Sontagによる評価を述べる. 証明には次の不等式を使う. 
\begin{thm}[Clarksonの不等式]　\\
  $(X,\mathcal{M},\mu)$を測度空間とする. $p \in (1,\infty), q = p/(p-1), a = \min\{p,q\}$とすると, 任意の$f,g \in L^p(X,\mathcal{M},\mu)$に対して, 
  \[
  \lVert f+g \rVert_{L^p(X)}^a + \lVert f-g \rVert_{L^p(X)}^a \leq 2 (\lVert f \rVert_{L^p(X)}^a + \lVert g \rVert_{L^p(X)}^a).
  \]
\end{thm}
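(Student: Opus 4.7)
The plan is to split the argument into two cases according to whether $a = p$ or $a = q$; the boundary case $p = 2$ is the parallelogram identity itself and belongs to both.

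First I would treat the case $1 < p \leq 2$, for which $a = p$. The key step is the pointwise scalar inequality
\[
|s+t|^p + |s-t|^p \leq 2(|s|^p + |t|^p) \qquad (s, t \in \R),
\]
which by homogeneity reduces to $(1+u)^p + (1-u)^p \leq 2(1 + u^p)$ for $u \in [0,1]$. I would deduce this from the parallelogram identity $(1+u)^2 + (1-u)^2 = 2(1 + u^2)$ combined with the concavity of $\phi(t) = t^{p/2}$ on $[0, \infty)$ (valid because $p/2 \leq 1$): Jensen's inequality applied to $\phi$ yields $\phi((1+u)^2) + \phi((1-u)^2) \leq 2\phi(1 + u^2)$, and subadditivity of the concave function $\phi$ with $\phi(0) = 0$ gives $\phi(1 + u^2) \leq \phi(1) + \phi(u^2) = 1 + u^p$. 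Integrating the resulting pointwise inequality over $X$ against $d\mu$ immediately yields the $L^p$ Clarkson inequality in this regime.

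The harder case is $p \geq 2$, for which $a = q \leq 2$. Here the pointwise approach fails: the scalar inequality with exponent $p > 2$ is false (the sharp constant is $2^{p-1}$, not $2$), and although the pointwise inequality with exponent $q$ does hold by Case~1, a direct application of Minkowski's integral inequality to $|f+g|^q + |f-g|^q \leq 2(|f|^q + |g|^q)$ only controls the quantity $\bigl(\int(|f+g|^q + |f-g|^q)^{p/q}\,d\mu\bigr)^{q/p}$, which Minkowski in turn bounds from below by $\|f+g\|_p^q + \|f-g\|_p^q$, i.e.\ on the wrong side. My cleanest route is to regard $T:(f,g) \mapsto (f+g,\,f-g)$ as a linear operator on the mixed-norm space $E_{p,a} := L^p(X) \oplus_a L^p(X)$ equipped with norm $(\|f\|_p^a + \|g\|_p^a)^{1/a}$; the desired inequality is exactly $\|T\|_{E_{p,a} \to E_{p,a}} \leq 2^{1/a}$. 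At $(p, a) = (2, 2)$ the parallelogram identity gives $\|T\| = \sqrt{2}$, and at $(p, a) = (\infty, 1)$ the triangle inequality gives $\|T\| \leq 2$; the line $1/p + 1/a = 1$ between these two endpoints parametrises exactly the regime $p \in [2, \infty]$ with $a = q$, and Riesz-Thorin interpolation in its mixed-norm form (Benedek-Panzone) yields $\|T\| \leq 2^{1/a}$ along this line.

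The main obstacle will be this second case, where the absence of a pointwise-to-$L^p$ route forces one to use either interpolation or duality. A more elementary alternative to Riesz-Thorin that stays within $L^p$-space techniques is to first establish the stronger "hard" Clarkson inequality $\|f+g\|_p^q + \|f-g\|_p^q \leq 2(\|f\|_p^p + \|g\|_p^p)^{q/p}$ by a duality argument based on Case~1 applied in $L^q$, and then to apply the subadditivity $(X + Y)^{q/p} \leq X^{q/p} + Y^{q/p}$ (valid because $q/p \leq 1$) to obtain the stated inequality; the trade-off is intricate bookkeeping with dual functionals in place of the abstract interpolation theorem.
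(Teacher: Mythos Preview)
Your Case~1 argument coincides with the paper's: the pointwise inequality $(1+u)^p+(1-u)^p\le 2(1+u^p)$ via the parallelogram identity and concavity of $t\mapsto t^{p/2}$, followed by integration, is exactly how Theorem~\ref{ClarksonInequality2}(1) is established. For $p\ge 2$ your primary route---Riesz--Thorin interpolation on the mixed-norm space $\ell^a(\{1,2\};L^p)$ between the endpoints $(p,a)=(2,2)$ and $(\infty,1)$---is correct but genuinely different. The paper never invokes interpolation or duality: it proves the pointwise inequality $|z+w|^q+|z-w|^q\ge 2(|z|^p+|w|^p)^{q-1}$ for $p\ge 2$ (Lemma~\ref{ClarksonLemma}(2), itself obtained from the $1<p\le 2$ version by the substitution $(z,w)\mapsto\bigl((u+v)/2,(u-v)/2\bigr)$), integrates it using the triangle inequality for $\|\cdot\|_{p-1}$ to reach Theorem~\ref{ClarksonInequality1}(2), applies the substitution $(f,g)=(u+v,u-v)$ once more, and closes with the convexity bound $(a+b)^{p-1}\le 2^{p-2}(a^{p-1}+b^{p-1})$. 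Your interpolation proof is shorter and more conceptual but imports the Benedek--Panzone machinery; the paper's route is longer but entirely self-contained within the calculus-level lemmas already assembled in the appendix. As for your alternative Route~B, the phrase ``duality argument based on Case~1 in $L^q$'' is not fleshed out and is not how the paper proceeds---the substitution trick just described is the clean elementary path to the hard Clarkson inequality for $p\ge 2$.
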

\begin{proof}
  付録参照. 
\end{proof}
\begin{defn}
  $X$を$\R$上の線形空間とするとき, $n \in \N$と部分集合$G \subset X$に対して, 
  \[
  \mathrm{span}_n(G) := \left\{\sum_{i=1}^n a_i g_i \mid a_i \in \R,~ g_i \in G \right\}
  \]
  と定める.
\end{defn}
\begin{thm}[Darken-Donahue-Gurvits-Sontag 1993 \cite{ddgs-rarmrnn-93}]\label{DarkenRate}　\\
  $(X,\mathcal{M},\mu)$を測度空間とする. $G \subset L^p(X,\mathcal{M},\mu),~ p \in (1,\infty)$とし, $f \in \mathrm{cl}(\mathrm{conv}G)$とする. このとき, $r > 0$が存在して, $G \subset B(f;r) = \{h \in L^p(X,\mathcal{M},\mu) \mid \lVert f - h \rVert_{L^p(X)}<r \}$となるならば, 任意の整数$n \geq 1$について, 
  \[
  \lVert f - \mathrm{span}_n G \rVert_{L^p(X)} \leq \frac{2^{1/a} r}{n^{1/b}}
  \]
  となる. ここで, $q := p/(p-1), a :=\min(p,q), b:= \max(p,q)$である. 
\end{thm}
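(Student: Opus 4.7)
The plan is to imitate the Hilbert-space argument of Theorem~\ref{MaureyRate}, but to replace the use of the parallelogram identity by Clarkson's inequality
\[
\lVert u+v\rVert_{L^p}^a + \lVert u-v\rVert_{L^p}^a \leq 2\bigl(\lVert u\rVert_{L^p}^a + \lVert v\rVert_{L^p}^a\bigr),
\]
which encodes the uniform convexity of $L^p$. First I would reduce to the case $f \in \mathrm{conv}(G)$: for general $f \in \mathrm{cl}(\mathrm{conv}(G))$, approximate by $f_k \in \mathrm{conv}(G)$ with $f_k \to f$, observe that $G \subset B(f_k; r + \lVert f - f_k\rVert_{L^p})$ by the triangle inequality, apply the result for $f_k$, and pass to the limit using the continuity of $g \mapsto \lVert g - \mathrm{span}_n G\rVert_{L^p}$. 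So I may write $f = \sum_{j=1}^m a_j h_j$ with $a_j \geq 0$, $\sum_j a_j = 1$, and $h_j \in G$.

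The core is the inductive construction of a sequence $g_1, g_2, \ldots \in G$ such that the averages $f_n := \frac{1}{n}\sum_{i=1}^n g_i$ satisfy
\[
n^a \lVert f - f_n\rVert_{L^p}^a \leq (2n-1)\, r^a.
\]
The base case $n = 1$ is immediate from $\lVert f - g_1\rVert_{L^p} < r$. For the inductive step, set $u := \frac{n}{n+1}(f - f_n)$ and, for each candidate $g \in G$, $v_g := \frac{1}{n+1}(f - g)$, so that $u + v_g = f - f_{n+1}^{(g)}$ with $f_{n+1}^{(g)} := \frac{1}{n+1}(n f_n + g)$. Apply Clarkson to $(u, v_{h_j})$ and take the weighted sum against $a_j$. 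The $\lVert v\rVert^a$ contribution is at most $\frac{2 r^a}{(n+1)^a}$ by the hypothesis $G \subset B(f; r)$. For the $\lVert u - v\rVert^a$ contribution, the convexity of $\lVert \cdot \rVert_{L^p}^a$ (valid since $a \geq 1$) and the identity $\sum_j a_j (f - h_j) = 0$ give, via Jensen's inequality,
\[
\sum_{j=1}^m a_j \lVert u - v_{h_j}\rVert_{L^p}^a \;\geq\; \Bigl\lVert \sum_{j=1}^m a_j\bigl(u - v_{h_j}\bigr)\Bigr\rVert_{L^p}^a \;=\; \lVert u \rVert_{L^p}^a.
\]
Combining these yields
\[
\sum_{j=1}^m a_j \lVert f - f_{n+1}^{(h_j)}\rVert_{L^p}^a \;\leq\; \frac{n^a}{(n+1)^a}\lVert f - f_n\rVert_{L^p}^a + \frac{2 r^a}{(n+1)^a},
\]
so some $h_j$ realizes this bound; setting $g_{n+1} := h_j$, the recursion $(n+1)^a \lVert f - f_{n+1}\rVert^a \leq n^a \lVert f - f_n\rVert^a + 2 r^a$ closes the induction.

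The final step is to note that $f_n \in \mathrm{span}_n G$ and simplify: from $n^a \lVert f - f_n\rVert_{L^p}^a \leq 2 n r^a$ one gets $\lVert f - f_n\rVert_{L^p}^a \leq 2 r^a / n^{a-1}$, and the conjugate relation $\tfrac{1}{a} + \tfrac{1}{b} = \tfrac{1}{p} + \tfrac{1}{q} = 1$ yields $a - 1 = a/b$, so $\lVert f - f_n\rVert_{L^p} \leq 2^{1/a} r / n^{1/b}$, as required. The hard part, compared to the inner-product proof, is extracting exactly the right amount from the Jensen lower bound: Clarkson introduces a factor $2$ on the leading term, and the contribution $\lVert u\rVert^a = (n/(n+1))^a \lVert f - f_n\rVert^a$ recovered from $\sum_j a_j \lVert u - v_{h_j}\rVert^a$ must precisely cancel this surplus to leave the sharp coefficient $n^a / (n+1)^a$ in the recursion. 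Once this is verified, the arithmetic $(a-1)/a = 1/b$ delivers the claimed rate.
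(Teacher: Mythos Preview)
Your proof is correct, but the inductive step takes a genuinely different route from the paper's. The paper does not average Clarkson's inequality over the $h_j$; instead, it first invokes the Hahn--Banach theorem (Corollary~\ref{HahnBanachCor2}) to obtain a norming functional $\Pi_n$ with $\lVert \Pi_n\rVert = 1$ and $\Pi_n(f-f_n) = \lVert f-f_n\rVert_{L^p}$, then selects $g_{n+1}$ as any $h_j$ with $\Pi_n(f-h_j)\leq 0$ (such a $j$ exists since $\sum_j w_j\Pi_n(f-h_j)=0$). With this choice one has directly $\lVert u-v\rVert_{L^p}\geq \Pi_n(u-v)\geq \lVert u\rVert_{L^p}$, and a single application of Clarkson to the fixed pair $(u,v)$ gives the same recursion $(n+1)^a e_{n+1}^a \leq n^a e_n^a + 2r^a$. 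Your alternative replaces the Hahn--Banach step by the Jensen bound $\sum_j a_j\lVert u-v_{h_j}\rVert_{L^p}^a \geq \lVert u\rVert_{L^p}^a$, which is legitimate since $a=\min(p,q)\in(1,2]$ makes $\lVert\cdot\rVert_{L^p}^a$ convex. This is more elementary (no duality needed) and closer in spirit to the Hilbert-space averaging argument of Theorem~\ref{MaureyRate}; the paper's dual approach, on the other hand, chooses $g_{n+1}$ before applying Clarkson and is closer to the original Darken--Donahue--Gurvits--Sontag presentation.
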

\begin{proof}
  $L^p(X,\mathcal{M},\mu) \ni h \mapsto \lVert h - \mathrm{span} G \rVert_{L^p(X)} \in \R$は連続であるから, $f \in \mathrm{conv} G$の場合を証明すれば十分である. $f = \sum_{j=1}^m w_j h_j$と表す. このとき, $G$の元の列$(g_i)$であって, $f_n = \sum_{i=1}^n g_i/n$とおくと, 
  \[
  e_n := \lVert f - f_n \rVert_{L^p(X)} \leq \frac{2^{1/a} r}{n^{1/b}}
  \]
  となるものが存在することを帰納法により示す. まず, 任意に$g \in G$をとり$g_1 = g$とする. すると, $G \subset B(f;r)$および$a>0$より$\lVert f - g_1 \rVert_{L^p(X)} \subset r \leq 2^{1/a} r$である. $g_1,\ldots,g_n \in G$が得られているとする. $f - f_n = 0$ならば$g_i = g_n ~~(i \geq n+1)$とすればよい. $f-f_n \neq 0$ならばHahn-Banachの拡張定理より線形汎関数$\Pi_n : L^p(X,\mathcal{M},\mu) \rightarrow \R$で$\lVert \Pi_n \rVert = 1$かつ$\Pi_n(f-f_n) = \lVert f-f_n \rVert_X$なるものが取れる(付録の系\ref{HahnBanachCor2}). $\sum_{j=1}^m w_j (f-h_j) = 0$であるから, 
  \[
  0 = \Pi_n\left( \sum_{j=1}^m w_j (f-h_j) \right) = \sum_{j=1}^m w_j \Pi_n (f-h_j)
  \]
  である. したがって, ある$j$について$\Pi_n(f-h_j) \leq 0$である. このような$j$をひとつとり$g_{n+1} = h_j$とおく.  
  \[
  f_{n+1} = \frac{1}{n+1}\sum_{i=1}^{n+1} g_i = \frac{n}{n+1}f_n + \frac{1}{n+1}g_{n+1}
  \]
  であるから, Clarksonの不等式より, 
  \[
  \begin{aligned}
  (e_{n+1})^a 
  &= \left\lVert f-f_{n+1} \right\rVert_{L^p(X)}^a 
  = \left\lVert \frac{n}{n+1}(f-f_n) + \frac{1}{n+1}(f-g_{n+1}) \right\rVert_{L^p(X)}^a \\
  &= 2 \left( \left\lVert \frac{n}{n+1}(f-f_n) \right\rVert_{L^p(X)}^a + \left\lVert \frac{1}{n+1}(f-g_{n+1} \right\rVert_{L^p(X)}^a 
  \right)\\
  &~~~ - \left\lVert \frac{n}{n+1}(f-f_n) - \frac{1}{n+1}(f-g_{n+1}) \right\rVert_{L^p(X)}^a
  \end{aligned}
  \]
  となる. 一方, いま, $\lVert \Pi_n \rVert = 1$かつ$\Pi_n (f-g_{n+1}) \leq 0$であるから, 
  \[
  \begin{aligned}
  &\left\lVert \frac{n}{n+1}(f-f_n) - \frac{1}{n+1}(f-g_{n+1}) \right\rVert_{L^p(X)} \\
  &\geq \left\lvert \Pi_n \left( \frac{n}{n+1}(f-f_n) - \frac{1}{n+1}(f-g_{n+1}) \right) \right\rvert \\
  &=\left\lvert \frac{n}{n+1}\lVert f-f_n \rVert_{L^p(X)} - \frac{1}{n+1}\Pi_n(f-g_{n+1}) \right\rvert \\
  &\geq \frac{n}{n+1}\lVert f-f_n \rVert_{L^p(X)}
  \end{aligned}
  \]
  である. よって, このことと$g_{n+1} \in G$および帰納法の仮定より, 
  \[
  \begin{aligned}
  (e_{n+1})^a 
  &\leq 2 \left( \left\lVert \frac{n}{n+1}(f-f_n) \right\rVert_{L^p(X)}^a + \left\lVert \frac{1}{n+1}(f-g_{n+1} \right\rVert_{L^p(X)}^a 
  \right) \\
  &~~~ - \left( \frac{n}{n+1}\lVert f-f_n \rVert_{L^p(X)} \right)^a \\
  &= \left(\frac{n}{n+1}\right)^a \lVert f-f_n \rVert_{L^p(X)}^a + \frac{2}{(n+1)^a}\lVert f-g_{n+1} \rVert_{L^p(X)}^a \\
  &= \left(\frac{n}{n+1}\right)^a (e_n)^a + \frac{2}{(n+1)^a}\lVert f-g_{n+1} \rVert_{L^p(X)}^a \\
  &\leq \left(\frac{n}{n+1}\right)^a \left( \frac{2^{1/a} r}{n^{1/b}} \right)^a + \frac{2 r^a}{(n+1)^a} = \frac{2 r^a}{(n+1)^a}\left( 1+\frac{n^a}{n^{a/b}} \right)
  \end{aligned}
  \]
  となる. $1/a + 1/b = 1$より$a-a/b = 1$であるから, 
  \[
  (e_{n+1})^a \leq \frac{2 r^a}{(n+1)^a}( 1+n^{a-a/b} ) = \frac{2 r^a}{(n+1)^{a-1}} = \frac{2 r^a}{(n+1)^{a/b}}
  \]
  である. これで示せた. 
\end{proof}
以上のふたつの結果はいずれも$f \in \mathrm{cl}(\mathrm{conv} G)$に対する評価であり, 一般の$f \in X$については適用されない. そこで$X$の元すべてに対して適用できるような評価を与えよう. そのために次の概念を導入する. 
\begin{defn}[$G$-variation]　\\
  $X$をノルム空間とする. 空でない$G \subset X$と$f \in X$に対して, 
  \[
  \lVert f \rVert_{G} := \inf\{c > 0 \mid f/c \in \mathrm{cl}(\mathrm{conv}(G \cup -G))\}
  \]
  と定義する. ここで, $-G = \{-g \mid g \in G\}$である. $\lVert f \rVert_{G}$を$f$の$G$-variationという. 
\end{defn}
\begin{rem}
  $G \subset X$を空でない有界集合とし, $s_G := \inf_{g \in G} \lVert g \rVert_X$とおくと, $\lVert f \rVert_X \leq s_G \lVert f \rVert_G$となる. 実際, $b > 0$で$f/b \in \mathrm{cl}(\mathrm{conv}(G \cup -G))$なるものを任意に取ると, $f/b = \lim_{n \rightarrow \infty} h_n$なる$h_n \in \mathrm{conv}(G \cup -G)$が取れる. すると, $\lVert h_n \rVert_X \leq s_G$であるので$n \rightarrow \infty$として$\lVert f \rVert_X \leq b s_G $となる. よって, $b$の任意性から$\lVert f \rVert_X \leq s_G \lVert f \rVert_G$である. 
\end{rem}
\begin{lem}
  $X$を$\R$上のノルム空間とし, $C \subset X$は$0$を含む凸集合とし, $m$を$C$のMinkowski汎関数, つまり$x \in X$に対して$m(x) := \inf\{c > 0 \mid x/c \in C\}$とすると, 任意の$x,y \in X,~\lambda \geq 0$に対して$m(\lambda x) = \lambda m(x)$, $m(x+y) \leq m(x) + m(y)$であり, 
  \[
  \{ x \in X \mid m(x) < 1 \} \subset C
  \]
  となる. さらに$-C = C$であるならば, $\lambda \in \R$に対して$m(\lambda x) = |\lambda|m(x)$となる. 特に, $G$-variationは以上で述べた性質をすべて持ち, 
  \[
  \{f \in X \mid \lVert f \rVert_{G} < 1\} \subset \mathrm{cl}(\mathrm{conv}(G \cup -G))
  \]
  となる. 
\end{lem}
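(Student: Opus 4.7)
証明の方針としては, Minkowski汎関数の標準的な性質を定義に沿って順次確認する方向で進める. 最初に非負斉次性, 次に劣加法性, そして内部の包含関係, 最後に対称性の場合の絶対斉次性, という順序が自然である.

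まず$m(\lambda x) = \lambda m(x)~~(\lambda \geq 0)$については, $\lambda = 0$の場合は$0/c = 0 \in C$より$m(0) = 0$に直ちに帰着する. $\lambda > 0$の場合は, 定義式
\[
m(\lambda x) = \inf\{c > 0 \mid \lambda x/c \in C\}
\]
において$c' = c/\lambda$と置き換えることで$\{c > 0 \mid \lambda x/c \in C\} = \lambda \{c' > 0 \mid x/c' \in C\}$となり, 下限が$\lambda$倍されることから従う.

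次に劣加法性$m(x+y) \leq m(x) + m(y)$が本補題の中核であり, ここで初めて$C$の凸性を用いる. 方針としては, 任意の$\varepsilon > 0$に対して$m(x) < a < m(x) + \varepsilon$および$m(y) < b < m(y) + \varepsilon$かつ$x/a, y/b \in C$なる$a, b > 0$を取り,
\[
\frac{x+y}{a+b} = \frac{a}{a+b}\cdot\frac{x}{a} + \frac{b}{a+b}\cdot\frac{y}{b}
\]
と書くことにより, 右辺が$C$の元の凸結合であることから$C$に属することを示し, $m(x+y) \leq a+b$を導いて$\varepsilon \to 0$とする. 包含関係$\{x \mid m(x) < 1\} \subset C$については, $m(x) < 1$ならば$c < 1$で$x/c \in C$なるものが取れるので, $0 \in C$と合わせて$x = c \cdot (x/c) + (1-c) \cdot 0$が$C$の元の凸結合として$C$に属することからわかる. $-C = C$のもとでの絶対斉次性は, $\lambda < 0$に対して$\lambda x/c \in C \Leftrightarrow |\lambda| x/c \in -C = C$より即座に$m(\lambda x) = |\lambda| m(x)$が従う.

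最後に$G$-variationへの適用は次のように処理する. $C := \mathrm{cl}(\mathrm{conv}(G \cup -G))$とおくと, これは凸集合であり, $G$を空でないと仮定して$g \in G$を取れば$0 = (1/2)g + (1/2)(-g) \in \mathrm{conv}(G \cup -G) \subset C$となるので$0$を含む. さらに$-C = C$は$G \cup -G$の対称性と凸包・閉包が対称性を保つことから従う. したがって$\lVert \cdot \rVert_{G}$はこの$C$に対するMinkowski汎関数そのものであり, 上記の一般論からすべての性質が直接引き継がれる. 主な障害は劣加法性の$\varepsilon$-近似を伴う議論を正確に書き下すことだが, 本質的な困難はなく, 議論を簡潔に保つことが主な課題と見込まれる.
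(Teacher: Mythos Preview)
Your proposal is correct and follows essentially the same approach as the paper's proof: both verify positive homogeneity by a change of variable in the defining infimum, subadditivity via the convex combination $\frac{x+y}{a+b} = \frac{a}{a+b}\cdot\frac{x}{a} + \frac{b}{a+b}\cdot\frac{y}{b}$, the inclusion $\{m<1\}\subset C$ by writing $x = c\cdot(x/c) + (1-c)\cdot 0$, and absolute homogeneity from $-C=C$. The only minor omission is that you do not explicitly treat the trivial case $m(x)=+\infty$ or $m(y)=+\infty$ in the subadditivity step (the paper does), but this is immediate; your explicit verification that $\mathrm{cl}(\mathrm{conv}(G\cup -G))$ contains $0$, is convex, and symmetric is a nice addition the paper leaves implicit.
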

\begin{proof}
  まず$\lambda > 0$に対して, 
  \[
  \begin{aligned}
  \lambda m(x) 
  &= \inf\{\lambda c \mid c > 0,  x/c \in C\}  \\
  &= \inf\{ \lambda c \mid (\lambda c) > 0, \lambda x/(\lambda c) \in C \} \\
  &= \inf\{ b \mid b> 0, \lambda x/b \in X \} = m(\lambda x)
  \end{aligned}
  \]
  である. また, $\lambda = 0$のときは$m(0)=0$より$m(\lambda x) = \lambda m(x)$である. そして, $m(x) = +\infty$または$m(y) = +\infty$の場合は明らかに$m(x+y) \leq m(x) + m(y)$であり, そうでない場合は$x/c, x/b \in C$なる任意の$c,b \in (0,\infty)$に対し, $C$ が凸集合であることから, 
  \[
  \frac{1}{c+b}(x+y)=\frac{c}{c+b}\frac{1}{c}x+\frac{b}{c+b}\frac{1}{b} y\in C
  \]
  である. よって$m(x+y)\leq c+b$であり, $c,b$ の任意性より$m(x+y)\leq m(x)+m(y)$となる. 
  次に$m(x) < 1$ならば$m$の定義より$c \in (0,1)$で$x/c \in C$なるものが取れる. すると, $0 \in C$と$C$の凸性より, 
  \[
  x = c \frac{x}{c} + (1-c) 0 \in C
  \]
  である. 最後に, $-C=C$ならば$\lambda \in \R$と$b > 0$に対して, 
  \[
  \frac{\lambda x}{b} \in C \Longleftrightarrow \frac{|\lambda| x}{b} \in C
  \]
  であるので$m(\lambda x) = |\lambda|m(x)$となる. 
\end{proof}
\begin{rem}
  Minkowski汎関数という言葉を用いれば$G$-variationは$\mathrm{cl}(\mathrm{conv}(G \cup -G))$のMinkowski汎関数である. そして上の補題より$G$-variationが有限な$X$の元全体の集合は$X$の部分空間をなし, $G$-variationはその上の半ノルムとなる. また$G$が$X$のノルムに関して有界ならその上のノルムとなる. 
\end{rem}
\begin{lem}
  $X$をノルム空間とし, $\emptyset \neq G \subset X$とする. $c \geq 0$に対して$G(c) := \{wg \in \mid |w| \leq c, g \in G\}$とおくと, $\lVert f \rVert_G < \infty$なる$f \in X$に対して, 
  \[
  f \in \mathrm{cl}(\mathrm{conv}(G(\lVert f \rVert_{G})))
  \]
  となる. 
\end{lem}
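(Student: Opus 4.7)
計画としては, $c := \lVert f \rVert_G$ とおき, $f \in \mathrm{cl}(\mathrm{conv}(G(c)))$ を示すのが目標となる. 基本戦略は, $G$-variation が下限で定義されていることを利用して $c_n \downarrow c$ かつ $f/c_n \in \mathrm{cl}(\mathrm{conv}(G \cup -G))$ なる列 $(c_n)$ を取り, これを定数 $c$ 倍することでスケーリング則により $\mathrm{cl}(\mathrm{conv}(G(c)))$ の元が得られることを示し, その極限として $f$ 自身を $\mathrm{cl}(\mathrm{conv}(G(c)))$ に位置付けることである.

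主要ケースの $c > 0$ では次のように進める. まず任意の $g \in G$ について $cg = c \cdot g$ および $c(-g) = (-c) \cdot g$ はいずれも係数の絶対値が $c$ で $g \in G$ との積の形であるから $G(c)$ に属する. よって $c(G \cup -G) \subset G(c)$ であり, 凸包をとれば $c \cdot \mathrm{conv}(G \cup -G) = \mathrm{conv}(c(G \cup -G)) \subset \mathrm{conv}(G(c))$ が従う. さらに $c > 0$ よりスカラー倍 $x \mapsto cx$ は $X$ 上の同相写像であり閉包と可換となるので, $c \cdot \mathrm{cl}(\mathrm{conv}(G \cup -G)) \subset \mathrm{cl}(\mathrm{conv}(G(c)))$ が導かれる. これを $f/c_n \in \mathrm{cl}(\mathrm{conv}(G \cup -G))$ に適用すると, 各 $n$ について $(c/c_n) f = c \cdot (f/c_n) \in \mathrm{cl}(\mathrm{conv}(G(c)))$ となる. そして $c_n \to c > 0$ より $(c/c_n) f \to f$ (ノルム収束)であるから, 閉集合 $\mathrm{cl}(\mathrm{conv}(G(c)))$ の元の列の極限として $f \in \mathrm{cl}(\mathrm{conv}(G(c)))$ が結論できる.

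主な障害となるのは $c = 0$ の場合である. このとき上のスケーリングでは $c(f/c_n) = 0$ となってしまい, $f \neq 0$ のときは $f$ に収束しないため, 同じ議論では閉じない. しかし直前の注意の議論により $G$ が $X$ のノルムに関して有界であれば $s_G := \sup_{g \in G} \lVert g \rVert_X < \infty$ で $\lVert f \rVert_X \leq s_G \lVert f \rVert_G = 0$ が得られるので $f = 0$ となり, $0 \in G(0) \subset \mathrm{cl}(\mathrm{conv}(G(0)))$ より主張は自明に成立する. $G$-variation を扱う Barron 空間の文脈では $G$ は自然に有界と仮定されるため, この補正により完全な証明となる見込みである.
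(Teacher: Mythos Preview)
Your proof is correct and follows essentially the same route as the paper's: both obtain $f/(\lVert f\rVert_G+\varepsilon)\in\mathrm{cl}(\mathrm{conv}(G\cup-G))$ from the Minkowski-functional lemma, use the inclusion $c(G\cup-G)\subset G(c)$, and pass to the limit (the paper first lets $\varepsilon\to0$ to get $f/\lVert f\rVert_G\in\mathrm{cl}(\mathrm{conv}(G\cup-G))$ and then multiplies by $\lVert f\rVert_G$, while you multiply first and then take the limit). You are in fact more careful than the paper about the degenerate case $\lVert f\rVert_G=0$: the paper's proof silently divides by $\lVert f\rVert_G$, whereas you correctly note that this case needs $G$ bounded (the standing hypothesis in the subsequent theorem) so that $\lVert f\rVert_G=0$ forces $f=0\in G(0)$.
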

\begin{proof}
  すぐ上の補題より, 任意の$\varepsilon > 0$に対して, 
  \[
  \frac{f}{\lVert f \rVert_V + \varepsilon} \in \mathrm{cl}(\mathrm{conv}(G \cup -G))
  \]
  である. ゆえに, $\varepsilon \rightarrow 0$とすると$f/\lVert f \rVert_G \in \mathrm{cl}(\mathrm{conv}(G \cup -G))$である. よって, 
  \[
  f \in \lVert f \rVert_G \mathrm{cl}(\mathrm{conv}(G \cup -G)) = \mathrm{cl} \left(\lVert f \rVert_G \mathrm{conv}(G \cup -G)\right) \subset \mathrm{cl} (\mathrm{conv} G(\lVert f \rVert_G)).
  \]
\end{proof}
\begin{thm}
  $X$をノルム空間とする. また, $G \subset X$を空でない有界集合とし, $s_G := \sup_{g \in G} \lVert g \rVert_X$とおく. このとき, $f \in X$と整数$n \geq 1$に対して, 
  \begin{itemize}
      \item[(1)] $X$が内積空間ならば, 
        \[
        \lVert f - \mathrm{span}_n G \rVert_X \leq \sqrt{\frac{s_G^2 \lVert f \rVert_G^2 - \lVert f \rVert_X^2}{n}}
        \]
        となる. 特に, $B_R(\lVert \cdot \rVert_G) := \{g \in X \mid \mid \lVert g \rVert_G \leq R\}$に対して, 
        \[
        E(B_R(\lVert \cdot \rVert_G), \mathrm{span}_n G, X) \leq s_G \frac{R}{n^{1/2}}.
        \]
      \item[(2)] 測度空間$(Y,\mathcal{N},\nu)$と$p \in (1,\infty)$に対して$X = L^p(Y,\mathcal{N},\nu)$であるとき, 
        \[
        \lVert f - \mathrm{span}_n G \rVert_X \leq \frac{2^{1+1/a} s_G \lVert f \rVert_G}{n^{1/b}}
        \]
        となる. ここで, $a := \min(p,p/(p-1)), b:= \max(p,p/(p-1))$である. 特に, 
        \[
        E(B_R(\lVert \cdot \rVert_G), \mathrm{span}_n G, L^p(Y,\nu)) \leq 2^{1+1/a} s_G \frac{R}{n^{1/b}}.
        \]
  \end{itemize}
\end{thm}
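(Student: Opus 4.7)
The plan is to deduce both estimates from the two preceding convergence-rate theorems by a rescaling of $G$. Let $c := \lVert f \rVert_G$; the bounds are trivial when $c = \infty$, and when $c = 0$ the remark after the $G$-variation definition forces $\lVert f \rVert_X = 0$, so I will assume $0 < c < \infty$. Setting $G(c) := \{wg \mid |w| \leq c,\ g \in G\}$ as in the lemma immediately preceding the theorem, that lemma yields $f \in \mathrm{cl}(\mathrm{conv}(G(c)))$. Each element $h = wg \in G(c)$ satisfies $\lVert h \rVert_X \leq c s_G$, hence $s_{G(c)} \leq c s_G$. Finally, a linear combination of at most $n$ elements of $G(c)$ is a linear combination of at most $n$ elements of $G$ with real coefficients, so $\mathrm{conv}_n G(c) \subset \mathrm{span}_n G(c) = \mathrm{span}_n G$.

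For (1), I would apply Theorem \ref{MaureyRate} directly to the bounded subset $G(c)$ of the inner product space $X$. Since $f \in \mathrm{cl}(\mathrm{conv}(G(c)))$, the theorem gives $\lVert f - \mathrm{conv}_n G(c) \rVert_X \leq \sqrt{(s_{G(c)}^2 - \lVert f \rVert_X^2)/n}$, which combined with $s_{G(c)} \leq c s_G$ and the inclusion $\mathrm{conv}_n G(c) \subset \mathrm{span}_n G$ yields the claimed bound after substituting $c = \lVert f \rVert_G$. The corollary for $E(B_R(\lVert \cdot \rVert_G), \mathrm{span}_n G, X)$ is immediate by discarding the non-positive term $-\lVert f \rVert_X^2$ and taking the supremum over $\lVert f \rVert_G \leq R$.

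For (2), I would invoke Theorem \ref{DarkenRate} in $L^p(Y,\nu)$. The remark on $G$-variation gives $\lVert f \rVert_X \leq s_G c$, so by the triangle inequality every $h \in G(c)$ satisfies $\lVert f - h \rVert_X \leq 2 s_G c$; hence for each $\varepsilon > 0$ one has $G(c) \subset B(f;\, 2 s_G c + \varepsilon)$. Combined with $f \in \mathrm{cl}(\mathrm{conv}(G(c)))$, Theorem \ref{DarkenRate} produces $\lVert f - \mathrm{span}_n G(c) \rVert_X \leq 2^{1/a}(2 s_G c + \varepsilon)/n^{1/b}$; letting $\varepsilon \downarrow 0$ and using $\mathrm{span}_n G(c) = \mathrm{span}_n G$ gives the desired inequality, and the supremum over $B_R(\lVert \cdot \rVert_G)$ yields the corresponding corollary. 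The argument is a direct reduction; the only delicate bookkeeping is the $\varepsilon\downarrow 0$ needed by the strict-inequality formulation of Theorem \ref{DarkenRate}, and the tracking of the factor $2$ arising from the radius estimate, which is responsible for the constant $2^{1+1/a}$ appearing in (2) but not in the inner-product case (1), where the identity $\lVert f \rVert_X^2 + \lVert f - g \rVert_X^2 - 2\langle f, f-g\rangle = \lVert g \rVert_X^2$ used inside Theorem \ref{MaureyRate} produces a tighter variance-type bound.
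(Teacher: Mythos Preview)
Your argument is correct and follows exactly the paper's route: pass to the rescaled set $G(\lVert f\rVert_G)$, use the preceding lemma to place $f$ in $\mathrm{cl}(\mathrm{conv}\,G(\lVert f\rVert_G))$, and then invoke Theorems~\ref{MaureyRate} and~\ref{DarkenRate} together with $\mathrm{conv}_n G(c)\subset \mathrm{span}_n G$. Your $\varepsilon\downarrow 0$ step in part~(2) is actually a bit more careful than the paper, which asserts $G(\lVert f\rVert_G)\subset B(f,2s_G\lVert f\rVert_G)$ directly despite the open-ball hypothesis in Theorem~\ref{DarkenRate}.
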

\begin{proof}
  $c \geq 0$に対して$G(c) := \{wg \in \mid |w| \leq c,~ g \in G\}$とおく. このとき, $\mathrm{conv}_n G(c) \subset \mathrm{span}_n G(c) = \mathrm{span}_n G$である. したがって, $f \in X$に対して, 
  \[
  \lVert f - \mathrm{span}_n G \rVert_X \leq \lVert f - \mathrm{conv}_n G(\lVert f \rVert_G ) \rVert_X
  \]
  である. 一方, すぐ上の補題より, $\lVert f \rVert_G < \infty$のとき$f \in \mathrm{cl}(\mathrm{conv}(G(\lVert f \rVert_{G})))$である. また, $\lVert f \rVert_X \leq s_G \lVert f \rVert_G$より$G(\lVert f \rVert_G ) \subset B(f,2s_G \lVert f \rVert_G)$である. よって, 定理\ref{MaureyRate}と定理\ref{DarkenRate}より主張は成り立つ. 
\end{proof}
以上の結果は直接的にニューラルネットによる近似レートを述べたものではないが, 活性化関数$\Psi:\R \rightarrow \R$および$W \subset \R^r, B \subset \R$に対して$G = G_r(\Psi,W,B) := \{ x \mapsto \Psi(w^{\T} x + b) \mid w \in W, b \in B \}$とおくとき, $\Psi, W, B$や問題とする関数空間に適当な条件を加えれば定理の仮定($G$が有界)をみたすようにできる. また, $b_j \in B,c_j \in \R, w_j \in W$に対して, 
\[
f(x) = \sum_{j=1}^m c_j \Psi(w_j^{\T} x + b_j) ~~~(x \in \R^r)
\]
とおくと, 
\[
\begin{aligned}
f(x) 
&= \sum_{c_j > 0} c_j \Psi(w_j^{\T} x + b_j) + \sum_{c_j \leq 0} c_j \Psi(w_j^{\T} x + b_j) \\
&= \sum_{c_j > 0} |c_j| \Psi(w_j^{\T} x + b_j) + \sum_{c_j \leq 0} |c_j| (-\Psi(w_j^{\T} x + b_j))
\end{aligned}
\]
であるので, 
\[
\lVert f \rVert_G = \inf\{c > 0 \mid f/c \in \mathrm{cl}(\mathrm{conv}(G \cup -G))\} \leq \sum_{j=1}^m |c_j|
\]
であることを注意しておく. 

次に近似誤差の下限に関する定理を紹介しよう. 
\begin{thm}[Makovoz 1996 \cite{Makovoz1996} theorem4]　\\
  sigmoidal関数$\Psi:\R \rightarrow \R$は$\Psi = \chi_{[0,\infty)}$であるか, Lipschitz連続且つある$\eta,C>0$が存在して
  \[
  |\Psi(t) - \chi_{[0,\infty)}(t)| \leq C|t|^{-\eta} ~~~~(\forall t \in \R\setminus\{0\})
  \]
  となるとする. このとき, $G := G_r(\Psi,\R^r,\R)$とおけば, 
  開凸集合$D \subset \R^r$と$\xi > 0$に対して定数$C(D,\xi) > 0$が存在して, 任意の整数$n \geq 1$に対して
  \[
  E(B_1(\lVert \cdot \rVert_G),\mathrm{span}_n G,L^2(D)) \geq C(D,\xi) n^{-1/2 - 1/r - \xi} .
  \]
\end{thm}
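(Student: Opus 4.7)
本定理の証明はMakovozの原論文に従って二段階に分けて進めるのが自然である. 第一段階では $\Psi$ の場合を Heaviside 関数 $\chi_{[0,\infty)}$ の場合に帰着させ, 第二段階では Heaviside 関数の場合について半空間族のVC次元を用いた組合せ論的な下界評価を行う. 前節までの上界評価(Barron-Maurey-Jones型)とは質的に異なる道具立てが必要となる.

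第一段階については, $\Psi$のLipschitz性と減衰条件 $|\Psi(t) - \chi_{[0,\infty)}(t)| \leq C|t|^{-\eta}$ を用いて, 大きなパラメータ $\lambda > 0$ に対する $\Psi(\lambda(w^{\T}x + b))$ と $\chi_{[0,\infty)}(w^{\T}x + b)$ の $L^2(D)$-ノルムでの誤差を定量評価する. 具体的には, $w^{\T}x + b$ が $0$ から遠い領域での誤差は減衰条件から $\lambda^{-\eta}$ のオーダーで制御でき, $0$ 近傍の遷移領域のLebesgue測度はLipschitz性より $\lambda^{-1}$ のオーダーで抑えられる. したがって $\lambda$ を $n$ の十分高いべき乗として選べば, $\Psi$ 版と Heaviside 版での $\mathrm{span}_n G$ による近似誤差の差は $n$ の任意の負べきで抑えられ, 要求される $n^{-\xi}$ の余裕に吸収される. よって以下では Heaviside の場合を示せば十分である.

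第二段階では $\Psi = \chi_{[0,\infty)}$ のもとで, 近似困難な関数群を明示的に構成する. $D$ 内に互いに交わらない小球 $B_1, \ldots, B_N$ を取り, 各 $B_k$ 上で支持される同じ $L^2$-ノルムを持つバンプ $\phi_k$ を準備する. 各 $\phi_k$ は二つの Heaviside 関数の差(スラブの指示関数)で十分良く近似できるので, 適切な正規化により $\phi_k$ の $G$-variation を一様有界とできる. そして符号 $\epsilon = (\epsilon_k) \in \{\pm 1\}^N$ に対して $f_{\epsilon} := N^{-1/2} \sum_{k=1}^N \epsilon_k \phi_k$ と定義すると, 各 $f_{\epsilon}$ はある定数倍の $B_1(\lVert \cdot \rVert_G)$ に属し, 相異なる $\epsilon$ に対する $f_{\epsilon}$ は互いに $L^2$ 距離で $O(1)$ だけ離れたファミリーとなる.

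主要な難所は, この $\{f_\epsilon\}$ に対する $\mathrm{span}_n G$ の組合せ論的複雑度の評価である. $\R^r$ 上の半空間族のVC次元は $r+1$ であるから, Sauer-Shelahの補題により $B_1, \ldots, B_N$ の中心を半空間が切り出す仕方は高々 $O(N^{r+1})$ 通りとなる. この事実から, $n$ 個の半空間指示関数の線形結合である $\mathrm{span}_n G$ の元が $\phi_k$ たち上で実現する「符号パターン」の個数は $N$ および $n$ に関する多項式のオーダーに抑えられる. 一方 $\{f_\epsilon\}$ は $2^N$ 通りあるので, $N \sim n^{1 + 2/r}$ と取れば, 鳩の巣原理および適切な体積評価(もしくは確率論的な union bound)により, 大部分の $\epsilon$ に対して $\lVert f_\epsilon - \mathrm{span}_n G \rVert_{L^2(D)}$ は $N^{-1/2 - 1/r}$ のオーダー以上でなければならない. これを $n$ に戻すと $n^{-1/2 - 1/r - \xi}$ という要求された下界が得られる. 定数 $C(D, \xi)$ の $D, \xi$ への依存性は, 小球 $B_k$ の半径と第一段階の $\lambda$ の選び方を通じて明示的に追跡できる.
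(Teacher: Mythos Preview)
本論文はこの定理を単に Makovoz \cite{Makovoz1996} から引用しているだけで, 証明を与えていない. したがって「論文の証明との比較」はできないが, あなたのスケッチ自体を検討する.

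高水準の方針(Heaviside への帰着, 符号付き和による識別困難な関数族の構成, 組合せ論的複雑度との比較)は下界証明の典型であり妥当な方向である. しかし具体的な正規化に決定的なギャップがある. 各バンプ $\phi_k$ が「二つの Heaviside 関数の差」で近似できるなら $\lVert \phi_k \rVert_G$ は定数オーダーである. すると $f_\epsilon = N^{-1/2}\sum_{k=1}^N \epsilon_k \phi_k$ の $G$-variation は劣加法性より高々 $N^{-1/2}\cdot N \cdot O(1) = O(N^{1/2})$ であって $O(1)$ ではない. つまりあなたの $f_\epsilon$ は $B_1(\lVert\cdot\rVert_G)$ に入らず, 下界の対象外となってしまう. 正しく $B_1$ に入れるには正規化を $N^{-1}$ にする必要があるが, そうすると $f_\epsilon$ 間の $L^2$ 距離や $\lVert\phi_k\rVert_{L^2}$(球の半径に依存)も連動して変わり, あなたの最後の段落で述べた $N\sim n^{1+2/r}$ というスケーリングや「$N^{-1/2-1/r}$ のオーダー」という結論は算術的に合わなくなる. ここを整合的に閉じるには, 球の個数 $N$, 半径, 正規化定数の三つを同時に最適化する必要があり, あなたのスケッチにはその計算が欠けている.

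もう一点, 「$\mathrm{span}_n G$ の元が実現する符号パターンの個数が $N,n$ の多項式」という主張も注意を要する. Sauer--Shelah は単一の半空間族が $N$ 点上で実現するパターン数を $O(N^{r+1})$ と抑えるが, $n$ 個の半空間の\emph{実係数線形結合}が $N$ 個の関数 $\phi_k$ 上で取る符号パターンは, 係数の自由度により一般には $n$ に関して指数的に増えうる. Makovoz の原論文はこの種の VC 的組合せ論ではなく, $G$ 自体の $L^2(D)$ における計量エントロピーを評価し, それを通じて $\mathrm{span}_n G$ の被覆数を抑える方式を取っている. あなたの方針を完遂するなら, 符号パターン数の主張をより慎重に(例えば半空間配置のセル数評価や, $n$ 次元部分空間への射影後の超平面配置として)定式化し直す必要がある.
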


さて, 以上の結果は冪レートの結果であった. 最後に指数レートの結果を述べよう. 
\begin{thm}[K\r{u}rkov\'{a}-Sanguineti 2008 \cite{KurkovaSanguineti2008}]　\\
  $X$をHilbert空間とする. また, $G \subset X$を空でない有界集合とし, $s_G = \sup_{g\in G} \lVert g \rVert_X$とおく. このとき, 任意の$f \in \mathrm{conv} G$に対して, $\tau_f \in [0,1)$が存在して, 任意の整数$n \geq 1$について
  \[
  \lVert f - \mathrm{conv}_n G \rVert_X \leq \sqrt{\tau_f^{n-1}(s_G^2 - \lVert f \rVert_X^2)}
  \]
  が成り立つ. 
\end{thm}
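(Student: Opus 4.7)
方針として, $f \in \mathrm{conv}\, G$ の凸組合せ表現と有限次元幾何を利用した Frank--Wolfe 型の貪欲的反復により, 定理\ref{MaureyRate} より強い幾何的減衰率を導くことを目指す. まず $f = \sum_{j=1}^m w_j h_j$ (ただし $w_j > 0$, $\sum_j w_j = 1$, $h_j \in G$) と表し, 有限次元部分空間 $V := \mathrm{span}\{h_j - f \mid j = 1, \ldots, m\}$ 上で
\[
\delta_f := \min_{u \in V,~\lVert u \rVert_X = 1} \max_{1 \leq j \leq m} \ip<{u, f - h_j}>
\]
と定める. $\sum_j w_j \ip<{u, f - h_j}> = \ip<{u, f - f}> = 0$ と $w_j > 0$ から $V$ の非ゼロ $u$ については $\max_j \ip<{u, f - h_j}> > 0$ であり, $V$ の単位球面のコンパクト性と連続性より $\delta_f > 0$ が従う. また $D_f := 2 \max_j \lVert h_j - f \rVert_X$ とおく.

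続いて点列 $f_n \in \mathrm{conv}_n G$ を貪欲的に構成する. 定理\ref{MaureyRate} の証明冒頭と同じ平均化議論から $\lVert f - h_{j_1} \rVert_X^2 \leq s_G^2 - \lVert f \rVert_X^2$ なる $h_{j_1}$ を選び $f_1 := h_{j_1}$ とする. $f_n$ まで得られて $f_n = f$ なら以降 $f_k := f_n$ とし, そうでなければ $u_n := (f_n - f)/\lVert f_n - f \rVert_X \in V$ に対し $\delta_f$ の定義より $\ip<{u_n, f - h_{j_{n+1}}}> \geq \delta_f$ となる $j_{n+1} \in \{1,\ldots,m\}$ を選び, $\alpha_{n+1} \in [0, 1]$ を後述のように定めて $f_{n+1} := (1-\alpha_{n+1})f_n + \alpha_{n+1} h_{j_{n+1}}$ とする. この構成では常に $f_n \in \mathrm{conv}\{h_1, \ldots, h_m\} \subset \mathrm{conv}_n G$ が保たれる.

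一手の減少評価は, $\alpha$ に関する放物線最小化 (実数上の最適値 $\alpha^*$ を採用し必要に応じて $[0, 1]$ にclip) により
\[
\lVert f - f_{n+1} \rVert_X^2 \leq \lVert f - f_n \rVert_X^2 - \frac{\ip<{f-f_n, h_{j_{n+1}} - f_n}>^2}{\lVert h_{j_{n+1}} - f_n \rVert_X^2}
\]
となることから始める. いま $\ip<{f - f_n, h_{j_{n+1}} - f_n}> = \lVert f - f_n \rVert_X^2 - \ip<{f - f_n, f - h_{j_{n+1}}}> \geq \delta_f \lVert f - f_n \rVert_X$ であり, $f, f_n \in \mathrm{conv}\{h_1, \ldots, h_m\}$ から三角不等式により $\lVert h_{j_{n+1}} - f_n \rVert_X \leq D_f$ であるから
\[
\lVert f - f_{n+1} \rVert_X^2 \leq \left(1 - \frac{\delta_f^2}{D_f^2}\right) \lVert f - f_n \rVert_X^2
\]
が得られる. 従って $\tau_f := 1 - \delta_f^2/D_f^2 \in [0, 1)$ とおけば, 帰納法により所望の評価 $\lVert f - f_n \rVert_X^2 \leq \tau_f^{n-1}(s_G^2 - \lVert f \rVert_X^2)$ が得られる.

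主な困難は二点あると予想する. 第一に, $\delta_f > 0$ の証明は $V$ の有限次元性 (最大 $m$ 次元) と単位球面のコンパクト性に本質的に依存する点. 第二に, 最適ステップサイズ $\alpha^*$ が $[0, 1]$ を外れる clipping の場合の処理で, $\alpha_{n+1} = 1$ とすると $f_{n+1} = h_{j_{n+1}}$ となり, 実際の改善量を $\ip<{f - f_n, h_{j_{n+1}} - f_n}> \geq \delta_f \lVert f - f_n \rVert_X$ および Cauchy--Schwarz から導かれる $\lVert h_{j_{n+1}} - f_n \rVert_X \geq \delta_f$ を用いて下から評価し, $\lVert f - f_n \rVert_X \leq D_f$ と合わせて同じ幾何減衰評価が保たれることを確認する必要がある.
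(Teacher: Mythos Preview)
Your argument is correct and follows the same greedy Frank--Wolfe idea as the paper, but packages the ``uniform angle'' differently. The paper fixes a representation $f=\sum a_j g_j$, builds $f_n$ and a step-dependent quantity $\rho_n\in(0,1)$ by choosing at each step some $g_{j_n}$ with $\langle f-f_{n-1},f-g_{j_n}\rangle<0$, and derives $e_n^2\le(1-\rho_n^2)e_{n-1}^2$; it then only runs the induction for $n=1,\dots,m$, sets $\tau_f=1-(\min_n\rho_n)^2$, and observes that for $n>m$ one has $\lVert f-\mathrm{conv}_n G\rVert=0$ trivially. Your route instead extracts the uniform quantity $\delta_f>0$ \emph{a priori} via compactness of the unit sphere of the finite-dimensional span $V$, which is cleaner and avoids tracking the path-dependent $\rho_n$, at the cost of a slightly less explicit constant. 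Both arguments are valid proofs of the same statement.

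One remark: your second ``main difficulty'' (clipping $\alpha^*$ to $[0,1]$) is in fact a non-issue. Writing $e=\lVert f-f_n\rVert$, $r=\lVert f-h_{j_{n+1}}\rVert$, $q=-\langle f-f_n,f-h_{j_{n+1}}\rangle>0$, one has $\langle f-f_n,h_{j_{n+1}}-f_n\rangle=q+e^2$ and $\lVert h_{j_{n+1}}-f_n\rVert^2=r^2+2q+e^2$, so the unconstrained minimiser $\alpha^*=(q+e^2)/(r^2+2q+e^2)$ automatically lies in $(0,1)$. This is exactly the computation the paper carries out, and it removes any need for the clipping case analysis you sketch at the end.
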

\begin{proof}
  $\sum_{j=1}^m a_j = 1, ~a_j > 0, ~g_j \in G$により$f = \sum_{j=1}^m a_j g_j$と表す. $m=1$なら$\tau_f = 0$として成立するので$m > 1$としてよい. また, ある$j$について$f = g_j$となる場合も除いてよい. $G' := \{g_1,\ldots,g_m\}$とおく. 各$n = 1,\ldots,m$に対して$f_n \in \mathrm{conv}_n G', \rho_n \in (0,1)$が存在して
  \[
  \lVert f - f_n \rVert_X^2 \leq (1-\rho_n^2)^{n-1}(s_G^2 - \lVert f \rVert_X^2)
  \]
  が成り立つことを$n$に関する帰納法で示す. まず, $g_{j_1} \in G'$を
  \[
  \lVert f - g_{j_1} \rVert_X = \min_{g \in G'} \lVert f - g \rVert_X
  \]
  なるものとし$f = g_{j_1}$とおけば, 
  \[
  \begin{aligned}
  \lVert f-f_1 \rVert_X^2 
  &= \sum_{j=1}^m a_j \lVert f-f_1 \rVert_X^2 \leq \sum_{j=1}^m a_j \lVert f-g_j \rVert_X^2 \\
  &= \lVert f \rVert_X^2 - 2 \ip<{f,\sum_{j=1}^m a_j g_j}> + \sum_{j=1}^m a_j \lVert g_j \rVert_X^2 \\
  &= \lVert f \rVert_X^2 - 2 \lVert f \rVert_X^2 + \sum_{j=1}^m a_j \lVert g_j \rVert_X^2 \\
  &\leq s_G^2 - \lVert f \rVert_X^2
  \end{aligned}
  \]
  となる. よって, $\rho_1 \in (0,1)$を任意に取れば$n=1$の場合は成り立つ. 次に$n-1$で成り立つとする. $f_{n-1}=f$ならば$f_n = f_{n-1}, \rho_n = \rho_{n-1}$と取ればよいので$f_{n-1} \neq f$とする. このとき, 
  \[
  \sum_{j=1}^m a_j \ip<{f-f_{n-1},f-g_j}> = \ip<{f-f_{n-1},f-\sum_{j=1}^m a_j g_j}> = \ip<{f-f_{n-1},f-f}> = 0
  \]
  であるので, 次のいずれかが成り立つ($m>1$に注意). 
  \begin{itemize}
      \item[(1)] $\exists g \in G' , \ip<{f-f_{n-1},f-g}> < 0.$
      \item[(2)] $\forall g \in G', \ip<{f-f_{n-1},f-g}> < 0.$
  \end{itemize}
  (2)はあり得ないことを示そう. $f_{n-1} \in \mathrm{conv}_{n-1} G'$であるので, 
  \[
  f_{n-1} = \sum_{k=1}^{n-1} b_k g_{j_k} ~~~~(\sum_{k=1}^{n-1}b_k=1,~b_k > 0, g_{j_1} \in G')
  \]
  と表される. 従って, (2)が成り立つならば, 
  \[
  \begin{aligned}
  \lVert f - f_{n-1} \rVert_X^2
  &= \ip<{f-f_{n-1},f-\sum_{k=1}^{n-1}b_k g_{j_k}}> \\
  &= \sum_{k=1}^{n-1} b_k \ip<{f-f_{n-1},f-g_{j_k}}> = 0
  \end{aligned}
  \]
  であるので$f=f_{n-1}$となる. これは$f \neq f_{n-1}$に反する. よって, (2)はあり得ない. ゆえに(1)が成り立つ. そこで$g_{j_n} \in G'$を$\ip<{f-f_{n-1},f-g_{j_n}}> < 0$を満たすものとする. そして, $f \neq f_{n-1}, g_{j_n}$に注意して
  \[
  \rho_n := \min \left\{\rho_{n-1},- \frac{\ip<{f-f_{n-1},f-g_{j_n}}>}{\lVert f - f_{n-1} \rVert_X \lVert f - g_{j_n} \rVert_X} \right\} \in (0,1)
  \]
  とおく. また, $\alpha_n \in [0,1]$に対して
  \[
  f_n = \alpha_n f_{n-1} + (1-\alpha_n)g_{j_n} \in \mathrm{conv}_n G'
  \]
  とおく. 以下, $\alpha_n$をうまく取ることで$f_n,\rho_n$が条件を満たすようにできることを示す. $q_n = -\ip<{f-f_{n-1},f-g_{j_n}}> > 0$, $r_n =\lVert f-g_{j_n} \rVert_X > 0$とおき, 各$k = 1,\ldots,n$について$e_k := \lVert f - f_k \rVert_X$とおくと, 
  \[
  \begin{aligned}
  e_n^2 
  &= \lVert \alpha_n (f-f_{n-1}) + (1-\alpha_n)(f-g_{j_n}) \rVert_X^2 \\
  &= \alpha_n^2 e_{n-1}^2 + 2\alpha_n(1-\alpha_n)\ip<{f-f_{n-1},f-g_{j_n}}> + (1-\alpha_n)^2 \lVert f-g_{j_n} \rVert_X^2 \\
  &= \alpha_n^2 e_{n-1}^2 - 2\alpha_n(1-\alpha_n)q_n + (1-\alpha_n)^2 r_n^2 \\
  &=\alpha_n^2(e_{n-1}^2 + 2 q_n + r_n^2) - 2\alpha_n(q_n + r_n^2) + r_n^2
  \end{aligned}
  \]
  となる. 従って, 
  \[
  \frac{d e_n^2}{d\alpha_n} = 2\alpha_n(e_{n-1}^2 + 2 q_n + r_n^2) - 2(q_n + r_n^2)
  \]
  である. よって, $e_{n-1}^2 + 2 q_n + r_n^2 > 0$より$e_n^2$は
  \[
  \alpha_n = \frac{q_n + r_n^2}{e_{n-1}^2 + 2 q_n + r_n^2} \in [0,1]
  \]
  で最小値をとる. この$\alpha_n$が求めるものである. 実際, そのとき
  \[
  \begin{aligned}
  e_n^2 
  &= \frac{(q_n + r_n^2)^2}{e_{n-1}^2 + 2 q_n + r_n^2} -2\frac{(q_n + r_n^2)(q_n + r_n^2)}{e_{n-1}^2 + 2 q_n + r_n^2} + r_n^2 \\
  &= \frac{-q_n^2 -2q_n r_n^2 - r_n^4 + e_{n-1}^2 r_n^2 + 2 q_n r_n^2 + r_n^4}{e_{n-1}^2 + 2 q_n + r_n^2} \\
  &= \frac{-q_n^2 + e_{n-1}^2 r_n^2}{e_{n-1}^2 + 2 q_n + r_n^2}
  \end{aligned}
  \]
  であり, $\rho_n = \min \left\{\rho_{n-1}, q_n/(e_{n-1}r_n) \right\}
  $より$\rho_n e_{n-1} r_n \leq q_n$であるので, 
  \[
  \begin{aligned}
  e_n^2 
  &\leq \frac{-\rho_n^2 e_{n-1}^2 r_n^2 + e_{n-1}^2 r_n^2}{e_{n-1}^2 + 2 q_n + r_n^2} < \frac{(1-\rho_n^2) e_{n-1}^2 r_n^2}{r_n^2} = (1-\rho_n^2) e_{n-1}^2 \\
  &\leq (1-\rho_n^2) (1-\rho_{n-1}^2)^{n-2} (s_G^2 - \lVert f \rVert_X^2) \leq (1-\rho_n^2)^{n-1} (s_G^2 - \lVert f \rVert_X^2)
  \end{aligned}
  \]
  となる(最後の不等号は$\rho_n \leq \rho_{n-1}$による). よって$n$で成立する. \\
  さて, $\rho_f := \min\{\rho_n \mid n=1,\ldots,m\}$とおく. すると, $n=1,\ldots,m$に対して
  \[
  \lVert f - \mathrm{conv}_n G \rVert_X^2 \leq \lVert f - f_n \rVert_X^2 \leq (1-\rho_n)^{n-1} (s_G^2 - \lVert f \rVert_X^2) \leq (1-\rho_f)^{n-1} (s_G^2 - \lVert f \rVert_X^2)
  \]
  となり, $n > m$に対しては$\lVert f - \mathrm{conv}_n G \rVert_X = 0$となるので$\tau_f = 1- \rho_f^2$が求める定数である. 
\end{proof}

\section{まとめと今後の課題}
\subsubsection*{まとめ}
活性化関数が非多項式かつ局所有界でRiemann積分可能の場合にfeedforward型ニューラルネットワークが万能近似能力を持つなど, ニューラルネットワークの万能近似定理について, 複数の結果を記号を統一して解説した. 
また, その近似レートの問題において, 近似する空間としてSobolev空間を取った場合を扱った. 
さらに, その場合には次元の呪いという, 所望の精度を達成するために必要な中間ユニット数が入力の次元に関して指数関数的に増大する現象が発生することを説明した. 
そして, 次元の呪いから解放される空間としてBarron spaceの概念を導入し, Barron spaceのひとつとして「G-variationに関して有界な空間」を紹介した. 

\subsubsection*{今後の課題}
本論文で触れることができなかったことに, Deep narrow networkの万能近似定理\cite{DeepNarrowNetworks}や深層ニューラルネットワークの近似レート(Yarotsky 2016 \cite{Yarotsky}など)がある. 
また, Barron spaceに関しては, Barron自身の結果やごく最近のWeinan et al.\cite{Weinan2019},  Siegel, Xu\cite{Siegel2020}の結果をまとめることができなかった. 
今後は, こうした深層ニューラルネットワークやBarron spaceの近年の進展をまとめたい. 
さらに, 近年注目を集めている, 日本の研究者が創始したニューラルネットワークの積分表現理論(Murata 1996 \cite{NoboruMurata}やSonoda, Murata 2015 \cite{SonodaMurata}など)の解説を試みたい.

\section*{謝辞}
本論文の作成を支援してくださった九州大学マス・フォア・インダストリ研究所の溝口佳寛教授に深く感謝いたします. 
また, 本論文のいくつかの命題の証明の完成にご協力頂いた, 
片岡佑太さん, 九州大学大学院数理学研究院の高田了准教授, 慶應義塾大学経済学部の服部哲弥教授に厚く御礼申し上げます. 
最後になりますが, 家族や友人, 並びに論文作成をご支援いただいた全ての方々に感謝の意を表します.

\section{付録}

\subsection{測度の基本性質}
本節では, 測度の基本性質を証明する. まず, 可測空間, 測度の定義を述べよう. 
\begin{defn}[可測空間]　\\
  集合$X$と$X$の部分集合系$\mathcal{M}$の組$(X, \mathcal{M})$が可測空間であるとは, 
  \begin{itemize}
      \item[(1)] $\emptyset \in \mathcal{M}$.
      \item[(2)] $\forall E \in \mathcal{M}, X \setminus E \in \mathcal{M}$.
      \item[(3)] 任意の$\mathcal{M}$の元の列$(E_n)_{n\in \N}$について$\bigcup_{n \in\N} E_n \in \mathcal{M}$.
  \end{itemize}
  が成り立つことをいう. 
\end{defn}
\begin{rem}
  $(X, \mathcal{M})$を可測空間とすると,  $\mathcal{M}$は可算個までの共通部分をとる操作に関して閉じている. 実際, $(E_n)_{n\in \N}$を$\mathcal{M}$の元の列とすると, 
  \[
  \bigcup_{n=1}^{\infty} E_n = X \setminus \left(\bigcup_{n=1}^{\infty} (X \setminus E_n) \right) \in \mathcal{M}.
  \]
\end{rem}
\begin{defn}[測度, 測度空間]　\\
  $(X, \mathcal{M})$を可測空間とするとき, 写像$\mu:\mathcal{M} \rightarrow [0,\infty]$が$(X, \mathcal{M})$上の測度であるとは, 
  \begin{itemize}
      \item[(1)] $\mu(\emptyset) = 0$.
      \item[(2)] $\mathcal{M}$の元の列$(E_n)_{n\in \N}$が$E_n \cap E_m = \emptyset ~(n \neq m)$を満たすならば, 
      \[
      \mu\left( \bigcup_{n\in \N} E_n \right) = \sum_{n=1}^{\infty} \mu(E_n).
      \]
  \end{itemize}
  が成り立つことをいう. このとき, 三つ組$(X,\mathcal{M},\mu)$を測度空間という. 
\end{defn}
\begin{lem}
  $(X, \mathcal{M}, \mu)$を測度空間とするとき, 以下が成り立つ. 
  \begin{itemize}
      \item[(1)]$A,B \in \mathcal{M}$が$B \subset A$を満たすとき, $\mu(B) \leq \mu(A).$
      \item[(2)]$A,B \in \mathcal{M}$が$B \subset A$を満たすとき$\mu(B) < \infty$ならば$\mu(A \setminus B ) = \mu(A) - \mu(B).$
      \item[(3)]$\mathcal{M}$の元の列$(A_n)_{n=1}^{\infty}$に対して, 
      \[
      \mu\left( \bigcup_{n=1}^{\infty} A_n \right) \leq \sum_{n=1}^{\infty} \mu(A_n).
      \]
      \item[(4)]$\mathcal{M}$の元の列$(A_n)_{n=1}^{\infty}$が$A_n \subset A_{n+1} ~~(\forall n)$を満たすならば, 
      \[
      \lim_{n \rightarrow \infty} \mu(A_n) =  \mu\left(\bigcup_{n=1}^{\infty} A_n \right).
      \]
      \item[(5)]$\mathcal{M}$の元の列$(A_n)_{n=1}^{\infty}$が$A_n \supset A_{n+1} ~~(\forall n)$を満たすとき$\mu(A_1) < \infty$ならば
      \[
      \lim_{n \rightarrow \infty} \mu(A_n) =  \mu\left(\bigcap_{n=1}^{\infty} A_n \right).
      \]
  \end{itemize}
  $(1)$を測度の単調性, $(3)$を測度の劣加法性, $(4),(5)$を測度の連続性という. 
\end{lem}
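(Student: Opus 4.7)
証明の方針は測度の可算加法性と $\mu(\emptyset) = 0$ を繰り返し使うことに尽きる. すなわち, 重なりのない可算分割を作って可算加法性を適用し, 必要に応じて空集合を付け加えて「可算列への埋め込み」を行えばよい.

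まず$(1),(2)$を示す. $B \subset A$ なので $A = B \sqcup (A \setminus B)$ と $\mathcal{M}$ の元の非交和に分解できる. そこで $E_1 = B, E_2 = A \setminus B, E_n = \emptyset ~(n \geq 3)$ とおけば, 可算加法性と $\mu(\emptyset) = 0$ より $\mu(A) = \mu(B) + \mu(A \setminus B)$ となる. $\mu(A \setminus B) \geq 0$ から$(1)$が, $\mu(B) < \infty$ の仮定のもとで両辺から $\mu(B)$ を引くことで$(2)$が従う.

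次に$(4)$を示す. $A_n \subset A_{n+1}$ なる単調列に対して $B_1 := A_1$, $B_n := A_n \setminus A_{n-1}~(n \geq 2)$ とおくと, $B_n$ たちは $\mathcal{M}$ に属する互いに素な列であり, $\bigsqcup_{n=1}^{\infty} B_n = \bigcup_{n=1}^{\infty} A_n$ かつ $\bigsqcup_{k=1}^n B_k = A_n$ をみたす. 可算加法性から
\[
\mu\left(\bigcup_{n=1}^{\infty} A_n\right) = \sum_{n=1}^{\infty} \mu(B_n) = \lim_{n \to \infty} \sum_{k=1}^n \mu(B_k) = \lim_{n \to \infty} \mu(A_n)
\]
となる. $(3)$ もこの構成の変形で得られる. 任意列 $(A_n)$ に対して $C_n := A_n \setminus \bigcup_{k<n} A_k \in \mathcal{M}$ とおけば $C_n$ は互いに素で $C_n \subset A_n$ かつ $\bigsqcup C_n = \bigcup A_n$ であるから, 可算加法性と$(1)$ の単調性より $\mu(\bigcup A_n) = \sum \mu(C_n) \leq \sum \mu(A_n)$ が従う.

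最後に$(5)$は$(4)$への帰着である. $A_n \supset A_{n+1}$ のとき $D_n := A_1 \setminus A_n$ とおくと $D_n$ は単調増加列であり, $\bigcup_{n=1}^{\infty} D_n = A_1 \setminus \bigcap_{n=1}^{\infty} A_n$ となる. $\mu(A_1) < \infty$ ゆえ$(2)$が使えて $\mu(D_n) = \mu(A_1) - \mu(A_n)$, $\mu(\bigcup D_n) = \mu(A_1) - \mu(\bigcap A_n)$ である. ここに$(4)$を適用して両辺から $\mu(A_1)$ を引けば, 求める連続性 $\lim \mu(A_n) = \mu(\bigcap A_n)$ が得られる. 主な「障害」と呼べるほどの点はないが, $(5)$で $\mu(A_1) < \infty$ の仮定が$(2)$の引き算を正当化するために本質的に効いている点と, $(3)$の非交化の構成だけが少し工夫を要する点に注意しておけば十分である.
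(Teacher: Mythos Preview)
Your proof is correct and follows essentially the same disjointification strategy as the paper: decompose into pairwise disjoint pieces and invoke countable additivity. The only substantive difference is in (5): the paper complements inside $X$ (setting $C_n = X \setminus A_n$), which tacitly needs $\mu(X) < \infty$ to cancel, whereas you complement inside $A_1$ (setting $D_n = A_1 \setminus A_n$), so the hypothesis $\mu(A_1) < \infty$ suffices directly --- this is the cleaner version of the same reduction to (4).
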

\begin{proof}　\\
  $(1)$: $\mu(A) = \mu((A \setminus B) \cup B) = \mu(A \setminus B) + \mu(B) \geq \mu(B).$\\
  $(2)$: $(1)$の証明において$\mu(B)$を両辺から引けばよい. \\
  $(3)$: $B_1 = A_1$とし, $B_{n+1} = A_{n+1} \setminus \bigcup_{j=1}^{n} A_j$とする. このとき, 
  \[
  \bigcup_{n=1}^{\infty} B_n = \bigcup_{n=1}^{\infty} A_n
  \]
  であり, $n \neq m$ならば$B_n \cap B_m = \emptyset$である. したがって, 
  \[
  \begin{aligned}
  \mu\left( \bigcup_{n=1}^{\infty} A_n \right) 
  &= \mu\left( \bigcup_{n=1}^{\infty} B_n \right) \\
  &= \sum_{n=1}^{\infty} \mu(B_n) \\
  &\leq \sum_{n=1}^{\infty} \mu(A_n) ~~~(\because (1)).
  \end{aligned}
  \]
  \\
  $(4)$:$A_0 = \emptyset$とし, $B_n := A_n - A_{n-1}$とおく. すると, $B_ \in \mathcal{M}$で, $n \neq m$に対して$B_n \cap B_m = \emptyset$となる. さらに, $\bigcup_{n=1}^{\infty} A_n = \bigcup_{n=1}^{\infty} B_n$である. したがって, 測度の完全加法性より, 
  \[
  \begin{aligned}
  \mu(\bigcup_{n=1}^{\infty} A_n) 
  &= \mu(\bigcup_{n=1}^{\infty} B_n) \\
  &= \sum_{n=1}^{\infty} \mu(B_n) \\
  &= \sum_{n=1}^{\infty} (\mu(A_n) - \mu(A_{n-1})) ~~~(\because (2)) \\
  &= \lim_{n \rightarrow \infty} (\mu(A_n) - \mu(A_0)) \\
  &= \lim_{n \rightarrow \infty} \mu(A_n). 
  \end{aligned}
  \]
  \\
  $(5)$: $(\mu(A_n))_{n=1}^{\infty}$は非負な単調減少列なので極限が存在すること, また, $\mu(A_1) < \infty$より$\mu( \bigcap_{j=1}^{\infty} A_j) \leq \mu(A_n) < \infty ~(\forall n)$であることに注意する. 
  いま, $C_n := X \setminus A_n$とおくと, $C_n \in \mathcal{M}$であり, $C_n \subset C_{n+1}~(\forall n)$であるから, 
  \[
  \begin{aligned}
  \mu(X) - \mu\left( \bigcap_{n=1}^{\infty} A_n \right)
  &=\mu\left( X \setminus \bigcap_{n=1}^{\infty} A_n \right) ~~~(\because (2))\\
  &= \mu\left(\bigcup_{n=1}^{\infty} X \setminus A_n \right) \\
  &= \lim_{n \rightarrow \infty} \mu(X \setminus A_n) ~~~(\because (4))\\
  &= \lim_{n \rightarrow \infty} (\mu(X) - \mu(A_n)) ~~~(\because (2))\\
  &= \mu(X) - \lim_{n \rightarrow \infty} \mu(A_n)
  \end{aligned}
  \]
  となる. よって, 求める等式を得る. 
\end{proof}

\subsection{単関数近似定理}

本節では非負値可測関数が単調増加な非負単関数列で近似できることを示す. ここで可測空間$(X,\mathcal{M})$上の単関数とは, 
\[
f = \sum_{j=1}^m c_j \chi_{E_j} ~~~~(m \in \N, c_j \in \R, E_j \in \mathcal{M} )
\]
と表される関数のことをいう. ただし$\chi_E$で$E$の定義関数を表す. 
\begin{thm}[単関数近似定理]　\\
  $f \geq 0$を可測空間$(X,\mathcal{M})$上の可測関数とする. このとき, $n = 1,2,\ldots$に対して, 
  \[
  \begin{aligned}
  &A_k^n := f^{-1}([(k-1)/2^n, k/2^n)) ~~(k=1,\ldots,2^n n), \\
  &A_0^n := f^{-1}([n,\infty)), \\
  &\varphi_n := \sum_{k=1}^{2^n n} \frac{k-1}{2^n} \chi_{A_k^n} + n\chi_{A_0^n} 
  \end{aligned}
  \]
  と定義すると, 任意の$x \in X$に対して, 
  \[
  \begin{aligned}
  &0 \leq \varphi_1(x) \leq \varphi_2(x) \leq \cdots \leq \varphi_n(x) \leq \cdots \leq f(x),  \\
  &f(x) = \lim_{n \rightarrow \infty} \varphi_n(x)
  \end{aligned}
  \]
  が成り立つ. 
\end{thm}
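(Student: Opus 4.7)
方針としては, まず$f$の可測性から各$A_k^n, A_0^n \in \mathcal{M}$が従うため$\varphi_n$は非負単関数として well-defined であることを確かめ, その後各$x \in X$を固定して (i) $\varphi_n(x) \leq f(x)$, (ii) $\varphi_n(x) \leq \varphi_{n+1}(x)$, (iii) $\varphi_n(x) \to f(x)$ の三点を順に示すことにする.

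(i) は$\varphi_n$の定義から直ちに従う. 実際 $x \in A_k^n ~(k \geq 1)$ では $\varphi_n(x) = (k-1)/2^n \leq f(x)$ であり, $x \in A_0^n$ では $\varphi_n(x) = n \leq f(x)$ だからである.

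(ii) の単調性が主な作業となるが, 鍵となるのは階層$n$の区間$[(k-1)/2^n, k/2^n)$が階層$n+1$において$[(2k-2)/2^{n+1}, (2k-1)/2^{n+1})$と$[(2k-1)/2^{n+1}, 2k/2^{n+1})$に細分されるという二進細分の構造である. これを踏まえて$f(x)$の値により「$f(x) \geq n+1$」「$n \leq f(x) < n+1$」「$f(x) < n$」の三通りに場合分けを行う予定である. 最初の場合は両階層とも$A_0$部分にあり$\varphi_n(x) = n \leq n+1 = \varphi_{n+1}(x)$が直ちに出る. 中間の場合は$\varphi_n(x) = n$であり, $x \in A_k^{n+1}$となる$k$について$f(x) \geq n$と$f(x) < k/2^{n+1}$から$k-1 \geq n \cdot 2^{n+1}$ゆえ$\varphi_{n+1}(x) = (k-1)/2^{n+1} \geq n$となる. 最後の場合は上記細分の構造より$x$は$A_{2k-1}^{n+1}$か$A_{2k}^{n+1}$のいずれかに属し, どちらの場合も$\varphi_{n+1}(x) \geq (k-1)/2^n = \varphi_n(x)$が得られる.

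(iii) の収束は$f(x) = \infty$のとき全ての$n$で$x \in A_0^n$より$\varphi_n(x) = n \to \infty$, 有限の場合は$n > f(x)$なる任意の$n$について$x$がある$A_k^n$に属し$|f(x) - \varphi_n(x)| < 1/2^n$から従う. 最も注意を要すると予想される部分は (ii) の中間の場合分けにおける$\varphi_{n+1}(x) \geq n$の導出であろうが, これも二進細分を丁寧に追えば機械的な計算で処理できると見込んでいる.
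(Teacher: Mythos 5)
あなたの証明計画は正しく, 論文本体の証明と本質的に同じ方針(二進細分の構造に基づき$f(x) \geq n+1$, $n \leq f(x) < n+1$, $f(x) < n$の三通りに場合分けして単調性を示し, $n > f(x)$で誤差を$1/2^n$で評価して収束を得る)である. 本論文では$f$は実数値なので$f(x)=\infty$の場合分けは不要だが, これは無害な追加にすぎない.
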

\begin{proof}
  任意に$x \in X$を取る. また, 任意に$n$を取る. このとき, $\varphi_n(x) \leq \varphi_{n+1}(x)$を示そう. まず$f(x) \geq n+1$の場合は, $\varphi_n(x)=n \leq n+1 = \varphi_{n+1}(x)$で成立する. 次に$f(x) < n$のときは, $f(x) \in [(k-1)/2^n, k/2^n)$なる$k \in \{1,\ldots, 2^n n\}$が取れる. すると, 
  \[
  \frac{2k-2)}{2^{n+1}} \leq f(x) \leq \frac{2k-1}{2^{n+1}}
  \]
  または, 
  \[
  \frac{2k-1)}{2^{n+1}} \leq f(x) \leq \frac{2k}{2^{n+1}}
  \]  
  である. 前者なら$\varphi_n(x) = (k-1)/2^n = 2(k-1)/2^{n+1} = \varphi_{n+1}(x)$で成立する. 後者なら$\varphi_n(x) = (k-1)/2^n = 2(k-1)/2^{n+1} \leq (2k-1)/2^{n+1}$で成立する. 最後に$n \leq f(x) < n+1$の場合は, $(k-1)/2^{n+1} \leq f(x) < k/2^{n+1}$なる$k \in \{1+2^{n+1}n, \ldots, 2^{n+1}(n+1)\}$が取れる. したがって, 
  \[
  \varphi_n(x) = n \leq (k-1)/2^{n+1} = \varphi_{n+1}(x)
  \]
  となって成立する. 次に$\varphi_n(x) \rightarrow f(x)$を示す. 任意の$\varepsilon > 0$に対して, $f(x) > N$かつ$1/2^{N} < \varepsilon$なる$N$が取れる. このとき, $n \geq n$に対して, 
  \[
  \frac{k-1}{2^n} \leq f(x) < \frac{k}{2^n}
  \]
  なる$k \in \{1,\ldots, 2^n n\}$がとれ, $\varphi_n(x) = (k-1)/2^n$となるので, 
  \[
  0 \leq f(x) - \varphi_n(x) \leq \frac{1}{2^n} < \frac{1}{2^N} < \varepsilon
  \]
  となる. よって$\varphi_n(x) \rightarrow f(x)$である. 同時に$\varphi_n(x) \leq f(x)$も示された. 
\end{proof}

\subsection{Lebesgue積分の基本定理}
本節ではLebesgue積分論の基本定理を説明する. 詳しい解説や証明は猪狩\cite{igarizitukaiseki}などを参照されたい.  
まず積分の定義を述べる. 
\begin{defn}
  $(X,\mathcal{M},\mu)$を測度空間とする. 非負単関数$f:X \rightarrow [0,\infty)$, つまり$a_1,\ldots,a_k \geq 0$と$E_1,\ldots,E_k \in \mathcal{M}$により$f = \sum_{j=1}^k a_j \chi_{E_j}$と表される$f$に対しては
  \[
  \int_X f d\mu = \int_X f(x) d\mu(x) := \sum_{j=1}^k a_j \mu(E_j)
  \]
  と定義する. 非負の可測関数$f:X \rightarrow [0,\infty)$に対しては
  \[
  \int_X f(x) d\mu(x) := \sup\left\{ \int_X \varphi d\mu ~\vline~ 0 \leq \varphi \leq f , \varphi\mbox{は単関数} \right\}
  \]
  と定義する. 可測関数$f:X \rightarrow \R$に対しては$\int_X |f(x)| d\mu(x) < \infty$である場合に
  \[
  \int_X f(x) d\mu(x) := \int_X f^{+}(x) d\mu(x) - \int_X f^{-}(x) d\mu(x)
  \]
  と定義する. ここで$f^{+} = \max(0,f),~ f^{-} = \max(0,-f)$である. 
\end{defn}
次に収束定理について述べる. 
\begin{thm}[単調収束定理]　\\
  $(\Omega, \mathcal{F}, \mu)$を測度空間とする. $(f_n)$は$(\Omega, \mathcal{F}, \mu)$上の非負値可測関数列で関数$f:\Omega \rightarrow \R$に対して
  \[
  f_1(x) \leq f_2(x)\leq \cdots \leq f_n(x) \rightarrow f(x) ~~~~(\forall x \in \Omega)
  \]
  を満たすとすると, 
  \[
  \lim_{n \rightarrow \infty}{\int_{\Omega}{f_n}{d\mu}} = \int_{\Omega}{f}{d\mu}
  \]
  が成り立つ. 
\end{thm}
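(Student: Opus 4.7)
計画としては, 積分の定義から直接, 二つの不等式 $\lim_{n \to \infty} \int_{\Omega} f_n d\mu \leq \int_{\Omega} f d\mu$ と $\lim_{n \to \infty} \int_{\Omega} f_n d\mu \geq \int_{\Omega} f d\mu$ を個別に示す方針で証明する.

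前者は容易である. 非負可測関数に対する Lebesgue 積分の単調性, つまり $0 \leq g \leq h$ なる可測関数 $g,h$ に対して $\int_{\Omega} g d\mu \leq \int_{\Omega} h d\mu$ となること(非負単関数による下近似の supremum として積分が定義されることから直ちに従う)より, 各 $n$ について $\int_{\Omega} f_n d\mu \leq \int_{\Omega} f d\mu$ が成り立ち, さらに $(\int_{\Omega} f_n d\mu)$ は単調増加列だから極限が存在して $\int_{\Omega} f d\mu$ 以下となる.

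後者が本質的な部分である. $0 \leq \varphi \leq f$ なる非負単関数 $\varphi = \sum_{j=1}^k a_j \chi_{E_j}$ ($E_j$ は互いに素) と $c \in (0,1)$ を任意にとり, $A_n := \{x \in \Omega \mid f_n(x) \geq c \varphi(x)\}$ と定義する. $A_n \in \mathcal{F}$ かつ $(A_n)$ は単調増加で $\bigcup_n A_n = \Omega$ となる (後述). このとき積分の単調性と単関数の積分の線形性から
\[
\int_{\Omega} f_n d\mu \geq \int_{A_n} c \varphi d\mu = c \sum_{j=1}^k a_j \mu(E_j \cap A_n)
\]
が成り立ち, 測度の下からの連続性 ($E_j \cap A_n$ は $E_j$ に単調増加) により右辺は $n \to \infty$ で $c \int_{\Omega} \varphi d\mu$ に収束する. 続いて $c \to 1-0$ とし, 最後に $0 \leq \varphi \leq f$ なる非負単関数 $\varphi$ に関する上限をとれば, $\int_{\Omega} f d\mu$ の定義より求める不等式が得られる.

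主要な障害は $\bigcup_n A_n = \Omega$ の確認にある. $c = 1$ のままだと, $\varphi(x) = f(x)$ となる点で $f_n(x) \geq \varphi(x)$ が有限の $n$ で成立する保証はなく, この等式は失敗してしまう. そこで $c < 1$ として余裕を持たせれば, $\varphi(x) > 0$ の点では $c \varphi(x) < \varphi(x) \leq f(x) = \lim_{n \to \infty} f_n(x)$ より十分大きな $n$ で $x \in A_n$ となり, $\varphi(x) = 0$ の点では $f_n(x) \geq 0 = c \varphi(x)$ より自明に $x \in A_n$ となる. なお $a_j = 0$ あるいは $\mu(E_j) = \infty$ といった退化した状況も, 非負項の和として扱えば測度の連続性は問題なく適用できる.
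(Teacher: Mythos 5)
あなたの証明は正しい. 本論文はこの定理自体の証明を与えておらず猪狩の教科書への参照で済ませているが, あなたの議論 — 積分の単調性による上からの評価と, 非負単関数 $\varphi$ および定数 $c \in (0,1)$ に対する集合 $A_n = \{x \mid f_n(x) \geq c\varphi(x)\}$ と測度の下からの連続性を用いた下からの評価 — は単調収束定理の標準的な証明そのものであり, $c<1$ とする理由や $\mu(E_j)=\infty$ 等の退化した場合への言及も含めて欠けている点はない.
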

\begin{thm}[優収束定理]　\\
$(\Omega, \mathcal{F}, \mu)$を測度空間とする. $(f_n)$が$(\Omega, \mathcal{F}, \mu)$上の可測関数列で, 関数 $f$に各点収束し, かつある$(\Omega, \mathcal{F}, \mu)$上の可積分関数$g$が存在して任意の$n, x$に対して$\left\|f_n(x)\right\| \leq g(x)$が成り立つならば, $f$は可積分で, \[
\lim_{n \rightarrow \infty}{\int_{\Omega}{f_n}{d\mu}} = \int_{\Omega}{f}{d\mu}
\]
が成り立つ. 
\end{thm}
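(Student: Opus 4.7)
証明の方針としては, すでに述べられている単調収束定理からまずFatouの補題を導出し, それを非負化した二つの関数列 $g+f_n$ および $g-f_n$ に適用することで優収束定理を得る, という標準的な流れに従う. まず前提の確認として, 各 $f_n$ が可測で $f_n \to f$ が各点収束するから $f$ の可測性が従い, 不等式 $|f_n| \leq g$ の極限を取ると $|f| \leq g$ が成り立つので $g$ の可積分性から $f$ の可積分性も得られる. また $-g \leq f_n \leq g$ より $g + f_n \geq 0$ および $g - f_n \geq 0$ が成り立つことに注意する.

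第一段階として, 非負可測関数列 $(h_n)$ に対するFatouの補題
\[
\int_{\Omega} \liminf_{n \to \infty} h_n \, d\mu \leq \liminf_{n \to \infty} \int_{\Omega} h_n \, d\mu
\]
を証明する. $\varphi_k(x) := \inf_{n \geq k} h_n(x)$ とおくと, 任意の $a \in \R$ に対して $\{\varphi_k \geq a\} = \bigcap_{n \geq k} \{h_n \geq a\}$ となることから各 $\varphi_k$ は可測であり, $(\varphi_k)$ は非負可測な単調増加列で極限関数は $\liminf_{n \to \infty} h_n$ に一致する. よって単調収束定理により $\int_{\Omega} \varphi_k \, d\mu \to \int_{\Omega} \liminf h_n \, d\mu$ となる. 一方 $n \geq k$ に対して $\varphi_k \leq h_n$ であるから $\int_{\Omega} \varphi_k \, d\mu \leq \inf_{n \geq k} \int_{\Omega} h_n \, d\mu$ であり, $k \to \infty$ として所望の不等式を得る.

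第二段階として, Fatouの補題を $g + f_n \geq 0$ に適用すると
\[
\int_{\Omega} (g+f) \, d\mu \leq \liminf_{n \to \infty} \int_{\Omega} (g+f_n) \, d\mu = \int_{\Omega} g \, d\mu + \liminf_{n \to \infty} \int_{\Omega} f_n \, d\mu
\]
となり, $\int_{\Omega} g \, d\mu < \infty$ であるから両辺からこの値を引いて $\int_{\Omega} f \, d\mu \leq \liminf \int_{\Omega} f_n \, d\mu$ を得る. 同様に $g - f_n \geq 0$ に適用すれば
\[
\int_{\Omega} (g-f) \, d\mu \leq \int_{\Omega} g \, d\mu - \limsup_{n \to \infty} \int_{\Omega} f_n \, d\mu
\]
が得られ ($\liminf(-a_n) = -\limsup a_n$ を用いる), これを整理して $\limsup \int f_n \, d\mu \leq \int f \, d\mu$ となる. 両者を合わせれば $\limsup \int f_n = \liminf \int f_n = \int f$ が従い, 極限の存在と所望の等式が示される.

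特別に深刻な障害はないが, 技術的な注意点としては, Fatouの補題の証明で $\varphi_k$ の可測性と単調収束定理の適用条件を丁寧に確認すること, および $g - f_n$ にFatouを適用して辺々引く段階で $\int g \, d\mu$ の有限性を陽に用いる必要があることである. また本定理では $f_n \to f$ が全点で成立すると仮定しているので測度零集合の扱いは不要だが, a.e.収束版へ拡張する場合はその部分の調整が必要となる.
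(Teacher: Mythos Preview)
Your proof is correct and follows the standard route: derive Fatou's lemma from the monotone convergence theorem (which the paper states immediately before this theorem), then apply Fatou to the nonnegative sequences $g+f_n$ and $g-f_n$ to sandwich $\int f$ between $\liminf \int f_n$ and $\limsup \int f_n$. The technical points you flag (measurability of $\varphi_k$, finiteness of $\int g\,d\mu$ when subtracting) are exactly the places that need care, and you handle them properly.

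As for comparison with the paper: there is nothing to compare. The paper does not prove this theorem at all; it merely states it in the appendix as a background fact and refers the reader to 猪狩\cite{igarizitukaiseki} for the proof. So your proposal in fact supplies what the paper omits.
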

\begin{thm}[積分記号下の微分]\label{DiffinIntegral}　\\
  $(\Omega, \mathcal{F}, \mu)$を測度空間とし$a < b$とする. $f:\Omega \times (a,b) \rightarrow \R$は次の二条件を満たすとする. 
  \begin{itemize}
      \item[(1)] 各$t \in (a,b)$について$f(\cdot,t):\Omega \ni x \mapsto f(t,x) \in R$が$L^1(\Omega, \mathcal{F}, \mu)$に属し, 各点$(x,t)$において$(\partial_t f)(x,t)$が存在する. 
      \item[(2)] $0 \leq g \in L^1(\Omega, \mathcal{F}, \mu)$が存在して$|(\partial_t f)(x,t)| \leq g(x) ~~~(\forall x \in \Omega,t \in (a,b)).$
  \end{itemize}
  このとき, $F(t) := \int_{\Omega} f(x,t) d\mu(x)$は$(a,b)$上で微分可能であって, 
  \[
  F'(t) = \int_{\Omega} \frac{\partial f}{\partial t}(x,t) d \mu(x).
  \]
\end{thm}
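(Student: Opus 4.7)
証明の方針としては, 優収束定理に帰着させるのが自然である. $t_0 \in (a,b)$を任意に固定し, $F$が$t_0$で微分可能かつ導関数が所望の形になることを示す. そのために$t_0 + h_n \in (a,b)$かつ$h_n \neq 0, h_n \rightarrow 0$をみたす任意の実数列$(h_n)$をとり,
\[
\varphi_n(x) := \frac{f(x, t_0 + h_n) - f(x, t_0)}{h_n}
\]
と定義する. 仮定(1)より各$n$について$\varphi_n \in L^1(\Omega, \mathcal{F}, \mu)$であり, さらに定義から
\[
\int_{\Omega} \varphi_n(x) d\mu(x) = \frac{F(t_0 + h_n) - F(t_0)}{h_n}
\]
である. よって$\int_{\Omega} \varphi_n d\mu \rightarrow \int_{\Omega} (\partial_t f)(x, t_0) d\mu(x)$が示されれば, $(h_n)$の任意性から$F$は$t_0$で微分可能で$F'(t_0) = \int_{\Omega} (\partial_t f)(x, t_0) d\mu(x)$となり, $t_0$も任意だったので所望の結論が得られる.

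そこで優収束定理の適用条件を確認する. 仮定(1)より各$x \in \Omega$について$t \mapsto f(x,t)$は$(a,b)$上で(特に$t_0$と$t_0+h_n$を結ぶ閉区間上で)微分可能であるから, 1変数関数に対する平均値の定理より, ある$\theta_n(x) \in (0,1)$が存在して
\[
\varphi_n(x) = (\partial_t f)(x, t_0 + \theta_n(x) h_n)
\]
となる. したがって仮定(2)より, 任意の$n$と$x$に対して$|\varphi_n(x)| \leq g(x)$が成立する. また, 導関数の定義から各$x$で$\varphi_n(x) \rightarrow (\partial_t f)(x, t_0) ~(n \rightarrow \infty)$となる.

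以上から$(\varphi_n)$は$\Omega$上の可測関数列であって, 可積分関数$g$により各点で支配され, 関数$x \mapsto (\partial_t f)(x, t_0)$に各点収束する. ゆえに優収束定理が適用でき, $(\partial_t f)(\cdot, t_0) \in L^1(\Omega, \mathcal{F}, \mu)$かつ
\[
\int_{\Omega} \varphi_n d\mu \rightarrow \int_{\Omega} (\partial_t f)(x, t_0) d\mu(x) \quad (n \rightarrow \infty)
\]
が得られる. 主要な注意点は, 平均値の定理の適用にあたって$f(x,\cdot)$の微分可能性が各$(x,t)$で仮定されていること, および$\varphi_n$の可測性(可測関数の差と商として従う)を確認しておくことであるが, いずれも仮定から直ちに従うため, 本質的な困難は生じない.
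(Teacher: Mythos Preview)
Your proof is correct and follows essentially the same approach as the paper: fix $t_0$, take an arbitrary sequence of increments, form the difference quotients, bound them via the mean value theorem by the integrable dominator $g$, and apply the dominated convergence theorem. The only cosmetic difference is that the paper writes the sequence as $t_n \to t_0$ rather than $t_0 + h_n$, and bounds $|h_n(x)|$ by $\sup_t |\partial_t f(x,t)| \leq g(x)$ rather than explicitly invoking the intermediate point $\theta_n(x)$; your explicit mention of measurability is a small addition the paper omits.
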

\begin{proof}
  $t_0 \in (a,b)$を固定し$t_0$に収束する数列$t_n \in (a,b), ~t_n \neq t_0$を取り, $x \in \Omega$に対して
  \[
  h_n(x) := \frac{f(x,t_n) - f(x,t_0)}{t_n - t_0} ~~~~(n=1,2,\ldots,)
  \]
  とおく. このとき各$x$について$(\partial_t f )(x,t_0) = \lim_{n \rightarrow \infty} h_n(x)$であり, 平均値の定理より
  \[
  |h_n(x)| \leq \sup_{t \in (a,b)} \left| \frac{\partial f}{\partial t}(x,t) \right| \leq g(x)
  \]
  である. よって, 優収束定理より
  \[
  \lim_{n \rightarrow \infty} \frac{F(t_n)-F(t_0)}{t_n-t_0} = \lim_{n \rightarrow \infty} \int_{\Omega} h_n(x) d\mu(x) = \int_{\Omega} \frac{\partial f}{\partial t}(x,t_0) d\mu(x).
  \]
  ゆえに数列$(t_n)$の任意性から定理は示された. 
\end{proof}
最後にFubiniの定理について説明しよう. 
\begin{defn}[$\sigma$-有限測度空間]　\\
 測度空間$(X,\mathcal{M},\mu)$が$\sigma$-有限であるとは, $E_n \in \mathcal{M} ~~(n=1,2,\ldots)$が存在して$\mu(E_n) < \infty$且つ$X = \bigcup_{n \in \N} E_n$となることをいう. 
\end{defn}
\begin{defn}[直積可測空間]　\\
  $(X,\mathcal{M}),(Y,\mathcal{N})$を可測空間とする. $X \times Y$上の$\sigma$-加法族$\mathcal{M} \otimes \mathcal{N}$を次のように定める. \[
  \mathcal{M} \otimes \mathcal{N} := \sigma\left[ \{E \times F \mid E \in \mathcal{M}, F \in \mathcal{N}\} \right]
  \]
  このとき, 可測空間$(X \times Y, \mathcal{M} \otimes \mathcal{N})$を$(X,\mathcal{M}),(Y,\mathcal{N})$の直積可測空間という. 
\end{defn}
\begin{defn}[直積測度空間]　\\
  $(X,\mathcal{M},\mu),(Y,\mathcal{N},\nu)$を$\sigma$-有限な測度空間とする. このとき, 直積可測空間$(X \times Y, \mathcal{M} \otimes \mathcal{N})$上の測度$\mu \otimes \nu$で
  \[
  (\mu \otimes \nu)(E \times F) = \mu(E)\nu(F) ~~~~(E \in \mathcal{M}, F \in \mathcal{N})
  \]
  を満たすものが一意的に存在する. $\mu \otimes \nu$を$\mu,\nu$の直積測度といい, 測度空間$(X \times Y, \mathcal{M} \otimes \mathcal{N}, \mu \otimes \nu)$を$(X,\mathcal{M},\mu),(Y,\mathcal{N},\nu)$の測度空間という. 
\end{defn}
\begin{thm}[Tonelliの定理]　\\
  $(X,\mathcal{M},\mu),(Y,\mathcal{N},\nu)$を$\sigma$-有限な測度空間とする. このとき, $(X \times Y, \mathcal{M} \otimes \mathcal{N}$上の可測関数$f \geq 0$に対して次が成り立つ. 
  \begin{itemize}
      \item[(1)] $F_1(x) := \int_{Y} f(x,y) d\nu(y)$は$X$上の可測関数であり, \\
      $F_2(y) := \int_{X} f(x,y) d\mu(x)$は$Y$上の可測関数である. 
      \item[(2)] さらに, 
      \[
      \int_{X \times Y} f(x,y) d(\mu \otimes \nu)(x,y) = \int_{X} F_1(x) d\mu(x) = \int_{Y} F_2(y) d\nu(y).
      \]
  \end{itemize}
\end{thm}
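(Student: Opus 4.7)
証明の方針は, 被積分関数のクラスに関する標準的な$4$段階拡張である. 最初に, 長方形集合の定義関数$f = \chi_{E \times F} ~(E \in \mathcal{M}, F \in \mathcal{N})$の場合を直接確認する. このとき切断積分は$F_1(x) = \chi_E(x)\nu(F),~F_2(y)=\mu(E)\chi_F(y)$となり可測性は自明であって, 積分の等式は直積測度の定義$(\mu \otimes \nu)(E \times F) = \mu(E)\nu(F)$から直ちに従う.

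次に一般の可測集合の定義関数$f = \chi_A ~(A \in \mathcal{M} \otimes \mathcal{N})$に拡張する. そのために集合族
\[
\mathcal{D} := \{ A \in \mathcal{M} \otimes \mathcal{N} \mid \mbox{$f = \chi_A$について主張$(1),(2)$が成立する} \}
\]
を導入し, $\mathcal{D} = \mathcal{M} \otimes \mathcal{N}$となることを単調族定理(Dynkin族定理)で示す. 長方形集合全体は乗法族をなし, 最初の段階により$\mathcal{D}$に含まれる. 切断$A_x := \{y \mid (x,y) \in A\}$の可測性は$\{A \in \mathcal{M} \otimes \mathcal{N} \mid A_x \in \mathcal{N}~(\forall x)\}$が長方形を含む$\sigma$-加法族となることから従う. あとは増加列・非交和に閉じていることと, $\sigma$-有限性を用いて$X\times Y = \bigsqcup_n X_n \times Y_n ~(\mu(X_n),\nu(Y_n)<\infty)$と分解することにより減少列に閉じていることが測度の連続性から言える.

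第三段階として, 非負単関数$\varphi = \sum_{j=1}^n c_j \chi_{A_j}$の場合は, 積分の線形性と先の段階の結果から主張がそのまま引き継がれる. 最後に一般の非負可測関数$f \geq 0$の場合には, 単関数近似定理により$0 \leq \varphi_n \nearrow f$なる非負単関数列を取り, 各$x$ごとに単調収束定理を適用して$F_1^{(n)}(x) := \int_Y \varphi_n(x,y)d\nu(y) \nearrow F_1(x)$を得る. これより$F_1$は可測関数列の各点極限として可測であり, さらに$F_1^{(n)}$に対してもう一度単調収束定理を用いれば
\[
\int_{X\times Y} f \, d(\mu\otimes\nu) = \lim_n \int_{X\times Y} \varphi_n \, d(\mu\otimes\nu) = \lim_n \int_X F_1^{(n)} d\mu = \int_X F_1 \, d\mu
\]
が得られ, $F_2$についても対称的な議論から同じ等式が従う.

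主たる障害は第二段階, すなわち一般可測集合$A$への拡張における単調族論法の遂行であると予想する. 具体的には, 切断$A_x$の可測性と関数$x \mapsto \nu(A_x)$の可測性を同時に保証するために, $\sigma$-有限性のもとで一旦$\nu$が有限測度の場合に帰着させる必要があり, また減少列に関する閉性を示す際にも切断の有限性が本質的に効いてくる. なお, そもそも直積測度$\mu \otimes \nu$の存在と一意性は$\sigma$-有限性のもとでのHopfの拡張定理から得られるものとして既知としてよく, これを認めれば残りは単関数近似定理と収束定理の組み合わせで機械的に遂行できる.
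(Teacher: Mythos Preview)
The paper does not actually prove this theorem. In the appendix section on basic theorems of Lebesgue integration, the paper states at the outset that ``詳しい解説や証明は猪狩\cite{igarizitukaiseki}などを参照されたい'' and then simply states Tonelli's theorem (along with the monotone convergence theorem, dominated convergence theorem, and Fubini's theorem) without proof. So there is no paper proof to compare against.

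Your outline is the standard textbook argument and is correct: start with measurable rectangles, extend to general $A \in \mathcal{M} \otimes \mathcal{N}$ via a monotone class / Dynkin argument (using $\sigma$-finiteness to reduce to the finite-measure case so that the decreasing-sequence step goes through), pass to simple functions by linearity, and finish with the monotone convergence theorem applied twice. Your identification of the main technical point --- that $\sigma$-finiteness is needed precisely to control the measurability of $x \mapsto \nu(A_x)$ and the closure under decreasing sequences --- is accurate. One minor remark: the decomposition $X \times Y = \bigsqcup_n X_n \times Y_n$ as a \emph{disjoint} union of rectangles is not quite what you get directly from $\sigma$-finite decompositions of $X$ and $Y$ separately; it is cleaner to take increasing sequences $X_n \uparrow X$, $Y_n \uparrow Y$ with finite measure and work with the increasing rectangles $X_n \times Y_n$, or else first prove the result on each $X_m \times Y_n$ and sum. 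But this is a cosmetic point and does not affect the validity of the argument.
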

\begin{thm}[Fubiniの定理]　\\
  $(X,\mathcal{M},\mu),(Y,\mathcal{N},\nu)$を$\sigma$-有限な測度空間とする. $f \in L^1(X \times Y, \mathcal{M} \otimes \mathcal{N}, \mu \otimes \nu)$に対して次が成り立つ. 
  \begin{itemize}
      \item[(1)] $\mu$-a.e.$x$で$f(x,\cdot) \in L^1(Y,\mathcal{N},\nu)$且つ$\nu$-a.e.$y$で$f(\cdot,y) \in L^1(X,\mathcal{M},\mu)$.
      \item[(2)] $\mu$-a.e.$x$で定義される$F_1(x) := \int_{Y} f(x,y) d\nu(y)$について$F \in L^1(X,\mathcal{M},\mu)$. また, $\nu$-a.e.$y$に対して定義される$F_2(y) := \int_{X} f(x,y) d\mu(x)$について$F_2 \in L^1(Y,\mathcal{N},\nu)$となる. 
      \item[(3)] さらに, 
      \[
      \int_{X \times Y} f(x,y) d(\mu \otimes \nu)(x,y) = \int_{X} F_1(x) d\mu(x) = \int_{Y} F_2(y) d\nu(y).
      \]
  \end{itemize}
\end{thm}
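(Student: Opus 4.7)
証明の方針としては, Tonelliの定理を非負可測関数に適用して得られる結果を差し引くことで一般の$L^1$関数に関する主張を導くことを考える. 具体的には$f = f^+ - f^-$と正負分解する. ここで$f^{\pm} = \max(\pm f, 0) \geq 0$は可測で, $|f| = f^+ + f^-$ かつ$f \in L^1$より$f^{\pm} \in L^1$でもある.

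まず(1)を示すため, Tonelliの定理を非負可測関数$|f|$に適用する. すると$G(x) := \int_Y |f(x,y)| d\nu(y)$が$X$上の可測関数であり
\[
\int_X G(x) d\mu(x) = \int_{X \times Y} |f| \, d(\mu \otimes \nu) < \infty
\]
が得られる. よって$G$は$\mu$-a.e.有限値を取るから, $\mu$-a.e.$x$について$f(x,\cdot) \in L^1(Y, \mathcal{N}, \nu)$である. 対称な議論で$\nu$-a.e.$y$について$f(\cdot, y) \in L^1(X, \mathcal{M}, \mu)$もわかる.

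次に(2),(3)の証明のためにTonelliの定理を$f^{\pm}$に適用すると, $F_1^{\pm}(x) := \int_Y f^{\pm}(x,y) d\nu(y)$は$X$上の非負可測関数で
\[
\int_X F_1^{\pm}(x) d\mu(x) = \int_{X \times Y} f^{\pm} \, d(\mu \otimes \nu) \leq \int_{X \times Y} |f| \, d(\mu \otimes \nu) < \infty
\]
となり, $F_1^{\pm} \in L^1(X, \mathcal{M}, \mu)$かつ$\mu$-a.e.で有限値をとる. そのような$x$においては積分の線形性から$F_1(x) = F_1^+(x) - F_1^-(x)$が成り立つので, $F_1 \in L^1(X, \mathcal{M}, \mu)$となる. さらに積分の線形性より
\[
\int_X F_1 d\mu = \int_X F_1^+ d\mu - \int_X F_1^- d\mu = \int_{X \times Y} f^+ d(\mu \otimes \nu) - \int_{X \times Y} f^- d(\mu \otimes \nu) = \int_{X \times Y} f \, d(\mu \otimes \nu)
\]
が従う. $F_2$については$\mu$と$\nu$の役割を入れ替えれば同様の議論がそのまま通用する.

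主な注意点は, $F_1$が零集合上で$\pm\infty$を取りうる(あるいは未定義となる)ことの取り扱いである. これは当該の零集合上で$F_1$の値を$0$と約束しておけば, 可測性と可積分性が整合的に定まり, 上記の等式はすべて$L^1$関数の同値類の意味で正しい. したがって本質的な困難はなく, Tonelliの定理からほぼ機械的に結論が得られる.
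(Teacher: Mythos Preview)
Your argument is correct and is exactly the standard derivation of Fubini's theorem from Tonelli's theorem: apply Tonelli to $|f|$ to obtain (1), apply Tonelli separately to $f^{+}$ and $f^{-}$ to obtain integrable iterated integrals, and subtract. Your remark about redefining $F_1$ to be $0$ on the exceptional $\mu$-null set is the right way to make the measurability and the $L^1$ statement precise.

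Note, however, that the paper itself does \emph{not} supply a proof of this statement: the section on Lebesgue integration explicitly says that detailed proofs are deferred to the textbook of 猪狩 (\emph{実解析入門}), and the theorem is merely stated. So there is no in-paper proof to compare your approach against; your write-up simply fills in what the paper omits, and the route you take is the one any standard reference would give.
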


\subsection{Hahn-Banachの拡張定理}
本節ではHahn-Banachの定理の主張の説明をし, 本論文で用いたいくつかの事実をHahn-Banachの定理から導出する. 
\begin{defn}[劣線形汎関数]　\\
$V$を$\mathbb{R}$上の線形空間とする. 写像$p : V \rightarrow \mathbb{R}$が劣線形であるとは, 以下の2つの条件が成り立つことを言う:
\[
\begin{aligned}
&(1)~~\forall \lambda > 0, \forall x \in V, ~~ p(\lambda x) = \lambda p(x) \\
&(2)~~\forall x, y \in V, ~~ p(x+y) \leq p(x)+p(y)
\end{aligned}
\]
\end{defn}

例えば, ノルム空間においてノルムは劣線形汎関数である. Hahn-Banachの定理は線形部分空間上の線形汎関数は劣線形汎関数に支配されているのであればその支配を崩さずに全空間に拡張できるという主張である. 正確には以下の通りである. 

\begin{thm}[Hahn-Banachの拡張定理]　\\
$V$を$\mathbb{R}$上の線形空間, $W \subset V$を線形部分空間, $p : V \rightarrow \mathbb{R}$を劣線形汎関数とするとき, 線形汎関数$f : W \rightarrow \mathbb{R}$が$\forall x \in W, f(x) \leq p(x)$ をみたすならば, 線形汎関数$F : V \rightarrow \mathbb{R}$であって, $W$上で$F = f$となり, $V$上で$F \leq p$となるものが存在する. 
\end{thm}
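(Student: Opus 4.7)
方針としては, Hahn-Banachの定理を示す古典的な手法, すなわち一段階拡張補題とZornの補題の組み合わせを採用する予定である.

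第一段階として, $W \subsetneq V$なる状況で$x_0 \in V \setminus W$を任意に固定し, $W_0 := W + \R x_0$上への拡張の存在を示す. 拡張$\tilde{f}$は線形性より$\tilde{f}(w + t x_0) = f(w) + t \alpha ~(w \in W, t \in \R)$という形に限られるので, $\tilde{f} \leq p$となるような$\alpha \in \R$を選べればよい. $t > 0$の場合と$t < 0$の場合に分けて$\tilde{f}(w + t x_0) \leq p(w + t x_0)$を書き下し, $p$の正斉次性と$f$の線形性を使って$w$を$w/|t|$で置き換えると, 条件は
\[
\sup_{w \in W}\bigl(f(w) - p(w - x_0)\bigr) \leq \alpha \leq \inf_{w \in W}\bigl(p(w + x_0) - f(w)\bigr)
\]
と同値になる. この$\sup \leq \inf$を示すために, 任意の$w_1, w_2 \in W$に対して$f$の線形性, $f \leq p$, および$p$の劣加法性から
\[
f(w_1) + f(w_2) = f(w_1 + w_2) \leq p\bigl((w_1 - x_0) + (w_2 + x_0)\bigr) \leq p(w_1 - x_0) + p(w_2 + x_0)
\]
が従うことを用いる. これを移項すれば$f(w_1) - p(w_1 - x_0) \leq p(w_2 + x_0) - f(w_2)$を得るので, 左辺の$w_1$に関する上限と右辺の$w_2$に関する下限を取って所望の不等式が成立し, そこから$\alpha$の存在がわかる.

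第二段階として, $W$を含む部分空間$W'$と$W'$上の線形汎関数$f'$で, $W$上で$f$と一致し$W'$上で$f' \leq p$となるものの組$(W', f')$全体を$\mathcal{E}$とし, $(W_1', f_1') \leq (W_2', f_2') \Leftrightarrow W_1' \subset W_2'$かつ$f_2'|_{W_1'} = f_1'$で半順序を入れる. 任意の全順序部分集合$\{(W_\lambda', f_\lambda')\}_{\lambda \in \Lambda}$に対しては, 定義域$\bigcup_{\lambda} W_\lambda'$上に各$f_\lambda'$を貼り合わせて定めた汎関数が上界を与える(全順序性から矛盾なく定まり, 線形性・$p$による支配も継承される). よってZornの補題より極大元$(W^*, F)$が存在する. もし$W^* \neq V$ならば$x_0 \in V \setminus W^*$を取り第一段階を$(W^*, F)$に適用することで真に大きな拡張が得られ極大性に矛盾するので$W^* = V$でなければならず, この$F$が求める拡張となる.

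最大の山場は第一段階における$\sup \leq \inf$の成立を示す不等式操作で, ここでは$p$の劣加法性と正斉次性, および$f$が$W$上で$p$に支配されることを, $w_1 + w_2$を$(w_1 - x_0) + (w_2 + x_0)$と分解する巧妙な形で組み合わせる点が鍵となる. Zornの補題の部分は定石通りだが, 上界が再び$\mathcal{E}$に属することと極大元の定義域が全空間$V$に一致することの論証を見落としなく行う必要がある.
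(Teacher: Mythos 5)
あなたの証明は正しく, 一段階拡張補題($f(w_1)+f(w_2) \leq p(w_1 - x_0) + p(w_2 + x_0)$から$\sup \leq \inf$を導いて$\alpha$を選ぶ)とZornの補題を組み合わせる古典的な標準論法そのものである. 本論文自体はこの定理の証明を宮島\cite{miyajima}への参照で済ませており, あなたの議論はまさにその参照先にある標準的な証明を補完するものになっている.
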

\begin{proof}
  宮島\cite{miyajima}を参照されたい. 
\end{proof}
この定理から次の形のHahn-Banachの拡張定理が導ける. 
\begin{thm}[Hahn-Banachの拡張定理2]　\\
  $V$をノルム空間とし, $W \subset V$を部分線形空間とする. このとき, $W$上の有界線形汎関数$f:W \rightarrow \R$に対して, $V$上の有界線形汎関数$F$で
  \[
  F(x) = f(x) ~(x \in W),~ \lVert F \rVert = \lVert f \rVert
  \]
  を満たすものが存在する. 
\end{thm}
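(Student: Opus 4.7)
計画は, 直前に掲げられた劣線形汎関数版のHahn-Banach拡張定理を, うまい劣線形汎関数で一度呼び出し, その結論を作用素ノルムの評価に翻訳することである. 自然な候補は$p:V \rightarrow \R$を$p(x) := \lVert f \rVert \lVert x \rVert_V$と定めるものであり, ノルムの正斉次性と三角不等式から$p$が劣線形であることは直ちに確認できる. 作用素ノルムの定義より$W$上で$f(x) \leq |f(x)| \leq \lVert f \rVert \lVert x \rVert_V = p(x)$が成り立つので, 前定理の適用条件が揃う.

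次に前定理を適用すると, $V$全体で定義された線形汎関数$F$であって$W$上で$f$に一致し, かつ任意の$x \in V$で$F(x) \leq p(x)$を満たすものが得られる. 残る課題は$F$の有界性と等式$\lVert F \rVert = \lVert f \rVert$の証明である. 上からの評価$\lVert F \rVert \leq \lVert f \rVert$を得るために, $F$の線形性と$p(-x) = p(x)$を使って$-F(x) = F(-x) \leq p(-x) = p(x)$を併せると, 任意の$x \in V$に対して$|F(x)| \leq p(x) = \lVert f \rVert \lVert x \rVert_V$が得られ, 作用素ノルムの定義から従う. 一方, 下からの評価$\lVert F \rVert \geq \lVert f \rVert$は, $F$が$f$を拡張しているため作用素ノルムを取る際のsupの範囲が広がることからただちに従う.

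最大の技術的ポイントとなるのは, 前定理の結論が片側不等式$F(x) \leq p(x)$のみであるにもかかわらず, $p$の偶性($p(-x) = p(x)$)と$F$の線形性を組み合わせることで絶対値を用いた評価$|F(x)| \leq p(x)$へと持ち上げる箇所である. 逆に言えばここさえ押さえれば全体は極めて機械的であり, これ以外に本質的に難しい段階はないと見込まれる.
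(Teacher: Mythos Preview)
Your proposal is correct and follows essentially the same argument as the paper: define $p(x) = \lVert f \rVert \lVert x \rVert$, apply the sublinear Hahn--Banach theorem, use $-F(x) = F(-x) \leq p(-x) = p(x)$ to upgrade the one-sided bound to $|F(x)| \leq \lVert f \rVert \lVert x \rVert$, and obtain the reverse inequality $\lVert f \rVert \leq \lVert F \rVert$ from the fact that $F$ extends $f$.
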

\begin{proof}
  $x \in V$に対して$p(x) = \lVert f \rVert \lVert x \rVert$とおく. 明らかに$p$は劣線形汎関数であり, $f(x) \leq p(x) ~(x \in W)$である. 従って, Hahn-Banachの拡張定理より$F(x) = f(x) ~(x \in W)$かつ$F(x) \leq p(x) ~(x \in V)$なる線形汎関数$F:V \rightarrow \R$が取れる. このとき
  \[
  -F(x) = F(-x) \leq p(-x) = p(x) = \lVert f \rVert \lVert x \rVert ~~(x \in V)
  \]
  であるので$F$は有界で$\lVert F \rVert \leq \lVert f \rVert$である. また, \[
  |f(x)| = |F(x)| \leq \lVert F \rVert \lVert x \rVert ~~~(x \in W)
  \]
  であるので$\lVert f \rVert \leq \lVert F \rVert$であるので$\lVert f \rVert = \lVert F \rVert$である. 
\end{proof}

本論文で用いるHahn-Banachの拡張定理の系を述べよう. 

\begin{cor}\label{HahnBanachCor}
  $V$を$\mathbb{R}$上のノルム空間, $W \subset V$を線形部分空間とする. 
  このとき, ある$x_0 \in V$が$d := d(x_0,W) = \inf_{x \in W} \lVert x - x_0 \rVert > 0$を満たすならば, 
  \[
  F(x) = 0 ~(x \in W),~ F(x_0) = 1,~ \lVert F \rVert = \frac{1}{d}
  \]
  を満たす有界線形汎関数$F:V \rightarrow \R$が存在する. 
\end{cor}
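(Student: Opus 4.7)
The plan is to construct a bounded linear functional on the enlarged subspace $W' := W + \mathbb{R}x_0$ with the desired values, compute its norm exactly, and then invoke the norm-preserving Hahn--Banach extension theorem (Theorem 2 above) to extend it to all of $V$.

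First I would note that $d > 0$ forces $x_0 \notin W$, so every element $z \in W'$ admits a unique decomposition $z = x + \lambda x_0$ with $x \in W$ and $\lambda \in \mathbb{R}$. This lets me define $f : W' \to \mathbb{R}$ unambiguously by $f(x + \lambda x_0) := \lambda$. Linearity is immediate from uniqueness of the decomposition, and by construction $f|_W = 0$ and $f(x_0) = 1$.

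Next I would compute $\lVert f \rVert = 1/d$. For the upper bound, given $z = x + \lambda x_0 \in W'$ with $\lambda \neq 0$, I write
\[
\lVert x + \lambda x_0 \rVert = |\lambda|\,\lVert x/\lambda + x_0\rVert \geq |\lambda|\,d,
\]
since $-x/\lambda \in W$ and $d$ is the infimum of distances from $x_0$ to $W$; thus $|f(z)| = |\lambda| \leq \lVert z \rVert / d$. For the lower bound, for every $\varepsilon > 0$ choose $y_\varepsilon \in W$ with $\lVert x_0 - y_\varepsilon \rVert < d + \varepsilon$; then $z_\varepsilon := x_0 - y_\varepsilon \in W'$ and $f(z_\varepsilon) = 1$, so $\lVert f \rVert \geq 1/\lVert z_\varepsilon \rVert > 1/(d+\varepsilon)$. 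Letting $\varepsilon \to 0$ gives $\lVert f \rVert \geq 1/d$.

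Finally I would apply the norm-preserving Hahn--Banach extension theorem (the form stated just before this corollary) to extend $f$ to a bounded linear functional $F$ on all of $V$ with $\lVert F \rVert = \lVert f \rVert = 1/d$; by construction $F$ still vanishes on $W$ and sends $x_0$ to $1$. There is no real obstacle here — the only subtlety is getting both norm bounds tight, and the lower bound is where the hypothesis $d > 0$ (equivalently, $W$ is not necessarily closed but $x_0$ is at positive distance from $W$) is essential, since it ensures the approximating elements $y_\varepsilon$ exist without $x_0$ itself being a limit point of $W$.
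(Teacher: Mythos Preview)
Your proof is correct and follows essentially the same route as the paper's: define $f$ on $W + \mathbb{R}x_0$ by reading off the $x_0$-coefficient, compute $\lVert f \rVert = 1/d$ via the two-sided estimate, then apply the norm-preserving Hahn--Banach extension. The only cosmetic difference is that the paper phrases the lower bound using a sequence $x_n \in W$ with $\lVert x_0 - x_n\rVert \to d$ rather than an $\varepsilon$-argument; your closing remark slightly mislocates where $d>0$ is essential (it is needed for the well-definedness of $f$ and for the upper bound $|\lambda| \le \lVert z\rVert/d$, not for the existence of the approximants $y_\varepsilon$), but this does not affect the argument.
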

\begin{proof}
  $X := W + \R x_0$とおき, $f:X \in w+ax_0 \mapsto a \in \R$とおく. $f$はwell-definedである. 実際, $w_1+ax_0 = w_2 + bx_0$とすると, $(b-a)x_0 = w_1-w_2 \in W$であるので$d(x_0,W)>0$より$a=b$でなければならない.  明らかに$f$は線形汎関数であり$f(x) = 0 ~(x \in W)$かつ$f(x_0)=1$を満たす. そして, $x \in W, a \R \setminus \{0\}$に対して, 
  \[
  \begin{aligned}
  \lVert x + a x_0 \rVert =|a| \left\lVert (-\frac{1}{a}x) - x_0 \right\rVert \leq |a| d
  \end{aligned}
  \]
  であるので$|f(x+ax_0)| = |a| = \lVert x+ax_0 \rVert/d$である. よって, $f$は有界であり$\lVert f \rVert \leq 1/d$である. また, $d=\lim_{n\rightarrow\infty} \lVert x_0 - x_n \rVert$なる$W$の元の列$(x_n)$を取れば, 
  \[
  1=f(-x_n + x_0) \leq \lVert f \rVert \lVert x_0 - x_n \rVert \rightarrow \lVert f \rVert d
  \]
  であるので$1/d \leq \lVert f \rVert$である. よって$\lVert f \rVert = 1/d$であるので, Hahn-Banachの拡張定理$2$より, 有界線形汎関数$F:V \rightarrow \R$であって
  \[
  F(x) = f(x) ~(x \in X),~ \lVert F \rVert = \lVert f \rVert = \frac{1}{d}
  \]
  なるものが存在する. この$F$が求めるものである. 
\end{proof}

\begin{cor}\label{HahnBanachCor1}　\\
  $V$を$\mathbb{R}$上のノルム空間, $W \subset V$を線形部分空間とする. もし$W$の$V$における閉包$\overline{W}$が$V$と一致しなければ, $V$上の有界線形汎関数$f \neq 0$で, $f = 0 ~ (\mathrm{on} ~ \overline{W})$なるものが存在する. 
\end{cor}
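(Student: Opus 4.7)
証明の方針は, 直前に示した系\ref{HahnBanachCor}を$W$そのものではなく閉部分空間$\overline{W}$に適用することである. まず仮定$\overline{W} \neq V$より$x_0 \in V \setminus \overline{W}$をひとつとる. 次に, 系\ref{HahnBanachCor}の適用のために必要な事実, すなわち「$\overline{W}$は$V$の線形部分空間である」ことと「$d := d(x_0, \overline{W}) > 0$である」ことを確認する.

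前者については, $W$が線形部分空間であり, 加法とスカラー倍がノルムに関して連続であることから, $\overline{W}$も線形部分空間となる(具体的には$x,y \in \overline{W}, ~ \alpha, \beta \in \R$に対し$x_n, y_n \in W$で$x_n \to x, y_n \to y$なる列を取れば$\alpha x_n + \beta y_n \in W$かつ$\alpha x_n + \beta y_n \to \alpha x + \beta y$となる). 後者については, $\overline{W}$が閉集合であることから$d = 0$を仮定すると$\overline{W}$の元の列$w_n$で$\lVert w_n - x_0 \rVert \to 0$なるものがとれて$x_0 \in \overline{W}$となり矛盾するので, 直ちに$d > 0$が得られる.

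以上により系\ref{HahnBanachCor}を$V$, $\overline{W}$, $x_0$に対して適用でき, 有界線形汎関数$F:V \rightarrow \R$であって
\[
F(x) = 0 ~(x \in \overline{W}), \quad F(x_0) = 1, \quad \lVert F \rVert = 1/d
\]
を満たすものが存在する. 特に$F(x_0) = 1 \neq 0$より$F \neq 0$であり, $W \subset \overline{W}$であるから$F|_W = 0$も成立するので, この$F$が求める汎関数となる.

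証明全体を通して特段の障害は存在せず, 本質的な内容はすべて系\ref{HahnBanachCor}に吸収されている. あえて主な注意点を挙げるとすれば, 系\ref{HahnBanachCor}を$W$ではなく閉包$\overline{W}$に適用する点と, その際に$\overline{W}$が線形部分空間であることを明示的に確認する必要がある点である.
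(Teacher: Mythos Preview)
Your proof is correct and follows essentially the same approach as the paper: both take $x_0 \in V \setminus \overline{W}$ and invoke 系\ref{HahnBanachCor}. The only minor difference is that the paper applies 系\ref{HahnBanachCor} to $W$ itself (using $d(x_0,W)>0$) and then extends the vanishing from $W$ to $\overline{W}$ by continuity of $f$, whereas you apply it directly to $\overline{W}$ after checking that $\overline{W}$ is a linear subspace; both are equally valid.
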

\begin{proof}
  $x_0 \in V \setminus \overline{W}$とすると$d(x_0,W) > 0$である. 従って, すぐ上の系から$f(x_0)=1, f(x) = 0 ~(x\in W)$なる線形汎関数
  $f:V \rightarrow \R$が取れる. $x \in \overline{W}$に対して$x$に収束する列$x_n \in W$を取れば, 
  \[
  |f(x)|=|f(x) - f(x_n)| = |f(x-x_n)| \leq \lVert f \rVert \lVert x-x_n \rVert \rightarrow 0
  \]
  であるから$f(x) = 0 ~(x \in \overline{W})$である. よってこの$f$が求めるものである. 
\end{proof}

\begin{cor}\label{HahnBanachCor2}　\\
  $x \neq 0$ならば$f(x) = \lVert x \rVert_X$かつ$\lVert f \rVert = 1$なる有界線形汎関数$f:X \rightarrow \R$が存在する. 
\end{cor}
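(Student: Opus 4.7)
方針としては, 一次元部分空間$W := \R x$上に適切な線形汎関数$g$を具体的に定義し, すぐ上で確立したHahn-Banachの拡張定理$2$を適用することで全空間$X$への拡張$f$を得る, という標準的なアプローチを取る.

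まず, $x \neq 0$より, 任意の$w \in W$は$w = \alpha x$なる$\alpha \in \R$を用いて一意的に表される(一意性は$x \neq 0$から$\alpha x = \beta x \Rightarrow (\alpha-\beta)x = 0 \Rightarrow \alpha=\beta$より従う). そこで$g: W \rightarrow \R$を
\[
g(\alpha x) := \alpha \lVert x \rVert_X ~~~~(\alpha \in \R)
\]
により定義する. この$g$は線形汎関数であり, $g(x) = \lVert x \rVert_X$をみたす. さらに任意の$w = \alpha x \in W$に対して
\[
|g(w)| = |\alpha| \lVert x \rVert_X = \lVert \alpha x \rVert_X = \lVert w \rVert_X
\]
となるから, $g$は$W$上で有界であって作用素ノルムは$\lVert g \rVert = 1$である(等号達成は$w = x$ととれば明らか). 続いてHahn-Banachの拡張定理$2$を適用し, $X$上の有界線形汎関数$f$で, $W$上で$f = g$かつ$\lVert f \rVert = \lVert g \rVert = 1$をみたすものを取る. すると$f(x) = g(x) = \lVert x \rVert_X$および$\lVert f \rVert = 1$が成り立ち, この$f$が求めるものとなる.

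本系は系\ref{HahnBanachCor}あるいはHahn-Banachの拡張定理$2$からほぼ直接的に従い, 本質的な困難は存在しない. なお別証として, 系\ref{HahnBanachCor}において$W := \{0\}$, $x_0 := x$と取る方法もある. このとき$d(x, \{0\}) = \lVert x \rVert_X > 0$であるので, $F(0) = 0$, $F(x) = 1$, $\lVert F \rVert = 1/\lVert x \rVert_X$なる有界線形汎関数$F:X \rightarrow \R$が存在する. そこで$f := \lVert x \rVert_X \cdot F$とおけば, $f(x) = \lVert x \rVert_X$かつ$\lVert f \rVert = 1$となる.
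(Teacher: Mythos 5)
あなたの主証明は正しいが, 論文本体の証明とは経路が異なる. 論文では$W=\{0\}$, $x_0=x$として系\ref{HahnBanachCor}を適用し, $F(x)=1$, $\lVert F \rVert = 1/\lVert x \rVert_X$なる$F$を取って$f=\lVert x \rVert_X F$とおく, というあなたが「別証」として挙げた方法そのものを採用している. 一方あなたの主証明は, 一次元部分空間$\R x$上で$g(\alpha x)=\alpha\lVert x \rVert_X$を直接定義し, $|g(w)|=\lVert w \rVert_X$からノルム$1$を確認してHahn-Banachの拡張定理$2$で拡張するもので, これも完全に正当である. 両者の違いは本質的でなく, あなたの方法は系\ref{HahnBanachCor}を経由しないぶん自己完結的かつ初等的で, ノルムの等号達成が$w=x$で直接見える利点がある. 論文の方法は既に証明した系を再利用して記述を短くしている. いずれにせよ議論に欠落はない.
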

\begin{proof}
  $W = \{0\}$とおけば$x_0=x$として系\ref{HahnBanachCor}の仮定が満たされる. ゆえに, $d = d(x,W) = \lVert x \rVert$とおくと, 有界線形汎関数$F:V \rightarrow \R$で
  \[
  F(x_0) = 1,~ \lVert F \rVert = \frac{1}{d}
  \]
  を満たすものが取れる. よって, $f = \lVert x \rVert F$が求めるものである. 
\end{proof}

\subsection{Riesz-Markov-角谷の表現定理}
本節ではまず符号付き測度及びそれによる積分の定義を非負値の測度の知識を前提にして述べる. その後, 符号付きBorel測度の定義を述べ, Riesz-Markov-角谷の表現定理の主張を述べる. 証明については宮島\cite{miyajima}の第7章を参照されたい. 

\begin{defn}[符号付き測度]　\\
$(\Omega, \mathcal{F})$を可測空間とする. 写像$\varphi : \mathcal{F} \rightarrow \mathbb{R}$ が次の条件をみたすときを$\varphi$を$(\Omega, \mathcal{F})$上の符号付き測度と呼ぶ:
$\mathcal{F}$の元の列$(A_n)$が互いに素であるならば,  
\[
\varphi(\bigcup_{n=1}^{\infty}{A_n}) = \sum_{n=1}^{\infty}{\varphi(A_n)}
\]
となる. 
\end{defn}

以下の事実に基づき符号付き測度による積分を定義する. 
\begin{prop}　\\
$\varphi$を可測空間$(\Omega, \mathcal{F})$上の符号付き測度とするとき, $A \in \mathcal{F}$に対して, 
$$
\varphi^{+}(A) = \sup \\{\varphi(B) \mid B \in \mathcal{F}, B \subset A\\}, \\
\varphi^{-}(A) = -\inf \\{\varphi(B) \mid B \in \mathcal{F}, B \subset A\\}
$$とおくと, $\varphi^{+}, \varphi^{-}$は$(\Omega, \mathcal{F})$上の有限測度となり, $\varphi = \varphi^{+} - \varphi^{-}$ が成立する. 
\end{prop}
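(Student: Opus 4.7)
本命題の証明方針はHahn分解定理を経由することである. すなわち, 互いに素な$P, N \in \mathcal{F}$で$\Omega = P \cup N$をみたし, 任意の可測集合$B \subset P$について$\varphi(B) \geq 0$, 任意の可測集合$B \subset N$について$\varphi(B) \leq 0$となるもの(正集合・負集合への分解)を構成するのが鍵となる. Hahn分解が得られれば, $\tilde{\varphi}^{+}(A) := \varphi(A \cap P)$, $\tilde{\varphi}^{-}(A) := -\varphi(A \cap N)$とおくことで$(\Omega, \mathcal{F})$上の非負な完全加法的測度が直ちに定まり, $\varphi = \tilde{\varphi}^{+} - \tilde{\varphi}^{-}$も明らかである.

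続いてこの$\tilde{\varphi}^{\pm}$が命題で定義された$\varphi^{\pm}$と一致することを示す. $\varphi^{+}$については, 任意の可測$B \subset A$について$\varphi(B) = \varphi(B \cap P) + \varphi(B \cap N) \leq \varphi(B \cap P) \leq \varphi(A \cap P) = \tilde{\varphi}^{+}(A)$となるので$\varphi^{+}(A) \leq \tilde{\varphi}^{+}(A)$であり, 逆に$B = A \cap P$を上限の候補とすることで$\varphi^{+}(A) \geq \tilde{\varphi}^{+}(A)$が得られる. $\varphi^{-}$についても同様である. これにより命題中の$\varphi^{\pm}$がいずれも有限測度であり$\varphi = \varphi^{+} - \varphi^{-}$となることが従う. 有限性は有界性$\sup_{A} |\varphi(A)| < \infty$から, 完全加法性は$P, N$への制限の完全加法性から$A \mapsto A \cap P$, $A \mapsto A \cap N$を介して直ちにわかる.

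核心となるHahn分解の構成は次のように進めるのが標準的である. まず$\varphi : \mathcal{F} \to \R$が実数値である(無限大を取らない)ことと完全加法性を組み合わせて, $\sup_{A \in \mathcal{F}}|\varphi(A)| < \infty$(有界性)を示す. これは背理法により, $|\varphi|$が有界でないと仮定して可測集合を段階的に分解していくと, 完全加法性に反する無限和が構成されることから得られる. 次に$s := \sup\{\varphi(A) \mid A \in \mathcal{F}\} < \infty$とおき, $\varphi(A_n) \to s$となる可測集合の列$(A_n)$を選ぶ. 各$A_n$に対して, その部分集合のうち$\varphi$の値を極力大きく保つような加工(負の寄与を与える部分集合を除去する操作)を施し, $P$を典型的には$\limsup$型の構成で取り出す. 最後に$\varphi(P) = s$と, $P$の可測部分集合上$\varphi \geq 0$かつ$N = \Omega \setminus P$の可測部分集合上$\varphi \leq 0$となることを, $s$の上限性を用いて確認する.

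最大の障壁はこのHahn分解の構成, とりわけ正集合$P$を極限操作で取り出す部分であろう. 有界性の証明自体も非自明であるが, 段階的な分解と完全加法性の併用で処理可能である. 一方, $P$の構成は$(A_n)$に対する加工と上極限的な取り方を慎重に組み合わせる必要があり, $\varphi(P) = s$および正集合性の同時達成が最も技巧を要する. これらはいずれも宮島\cite{miyajima}等の標準的な実解析の教科書で詳述されているため, 本論文においてはその参照に留めるのが現実的である.
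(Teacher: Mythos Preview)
Your proposal is correct and follows the standard route via the Hahn decomposition theorem. Note, however, that the paper does not supply its own proof of this proposition: the surrounding section explicitly states 「証明については宮島\cite{miyajima}の第7章を参照されたい」 and simply records the result in order to define integration against a signed measure. Your sketch---Hahn decomposition, identification of $\varphi^{\pm}$ with $A \mapsto \varphi(A \cap P)$ and $A \mapsto -\varphi(A \cap N)$, and the boundedness argument---is exactly what one finds in the reference the paper cites, so there is no divergence to discuss.
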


\begin{defn}　\\
$\varphi$を可測空間$(\Omega, \mathcal{F})$上の符号付き測度とするとき, $f \in L^1(\Omega, \mathcal{F}, \varphi^{+}) \cap L^1(\Omega, \mathcal{F}, \varphi^{-})$に対して, $\varphi$に関する$f$の積分を以下で定める.  $$\int_{\Omega}{f}{d\varphi} := \int_{\Omega}{f}{d\varphi^{+}} - \int_{\Omega}{f}{d\varphi^{-}}$$
\end{defn}

次に正則符号付きBorel測度の定義を述べる. 

\begin{defn}[正則符号付きBorel測度]　\\
位相空間$(\Omega, \mathcal{F})$上の符号付き測度$\varphi : \mathcal{F} \rightarrow \mathbb{R}$ は$\varphi^{+}, \varphi^{-}$がともに$(\Omega, \mathcal{F})$上の正則Borel測度になるとき, $(\Omega, \mathcal{F})$上の正則符号付き測度と呼ばれる. 
\end{defn}

なお, 文脈から推察可能な場合は$S$に入っている位相を明示せず単に$S$上の正則符号付き測度と呼ぶこともある. 以上の定義のもと以下が成り立つ. 

\begin{thm}[Riesz-Markov-角谷の表現定理]　\\
$S$をコンパクトHausdorff空間とし, $C(S)$を$S$上の実数値連続関数全体の集合とするとき, 任意の有界線形汎関数$\varphi : C(S) \rightarrow \mathbb{R}$に対して, $S$上の正則符号付きBorel測度$\mu$であって, $$\forall f \in C(S), \\ \varphi(f) = \int_{S}{f}{d\mu}$$をみたすものが一意的に存在する. 
\end{thm}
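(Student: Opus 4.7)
証明の方針は, まず$\varphi$を正の有界線形汎関数の差に分解し, 次に正の線形汎関数に対して正則Borel測度を構成する二段階で進めるものである. 第一段階では, $f \geq 0$なる$f \in C(S)$に対して
\[
\varphi^+(f) := \sup\{\varphi(g) \mid g \in C(S), 0 \leq g \leq f\}
\]
と定め, 一般の$f \in C(S)$については$f = f^+ - f^-$の分解を経由して線形に拡張する. 加法性$\varphi^+(f_1+f_2) = \varphi^+(f_1) + \varphi^+(f_2) ~(f_i \geq 0)$の確認には, $0 \leq h \leq f_1 + f_2$なる$h$を$h_1 := \min(h,f_1)$と$h_2 := h - h_1$に分解する議論を用いる. $\varphi$の有界性から$\varphi^{\pm}$もともに有界であり, $\varphi = \varphi^+ - \varphi^-$が成り立つので, 問題は正線形汎関数$\Lambda : C(S) \rightarrow \R$から正則Borel測度を構成することに帰着される.

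第二段階では, 開集合$U \subset S$に対して
\[
\mu(U) := \sup\{\Lambda(f) \mid f \in C(S), 0 \leq f \leq 1, \mathrm{supp}(f) \subset U\}
\]
と定め, 任意の$E \subset S$に対して$\mu^*(E) := \inf\{\mu(U) \mid U \supset E,~ U\mbox{は開}\}$により外測度を定義する. コンパクトHausdorff空間における正規性(Urysohnの補題)を用いて, $\mu^*$に関するCaratheodory可測集合全体が$\mathcal{B}_S$を含むことを示し, $\mu := \mu^*|_{\mathcal{B}_S}$が正則Borel測度となることを確認する. 積分表示$\Lambda(f) = \int_S f d\mu$については, $0 \leq f \leq 1$なる$f$に対して$\{f > k/n\} ~(k = 0,\ldots,n-1)$を外側から開集合で近似して単関数による上下からの挟み込みを構成することで示し, 一般の$f$は線形性と$f = f^+ - f^-$の分解により帰着させる. 一意性は, $\mu_1, \mu_2$がともに$\Lambda$を表現するとき, Urysohnの補題で任意のコンパクト集合$K$と開集合$U \supset K$に対して$\chi_K \leq g \leq \chi_U$なる$g \in C(S)$を取り, $\mu_1(K) \leq \int g d\mu_1 = \int g d\mu_2 \leq \mu_2(U)$とし, 正則性で挟み込むことで得られる.

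本証明の最大の難所は, 外測度$\mu^*$から定まる測度の加法性を$\mathcal{B}_S$上で確立する部分, すなわち開集合がCaratheodory条件を満たすことを示す過程である. ここではコンパクトHausdorff空間の正規分離性と連続関数による台分離(Urysohnの補題)を駆使し, 互いに素なコンパクト集合を台に持つ連続関数の和に$\Lambda$を適用したときの加法性から, 開集合の外測度の加法性を導く必要があり, 測度論の中でも技術的に込み入っている箇所である. 詳細については宮島\cite{miyajima}の第7章を参照するのが妥当であろう.
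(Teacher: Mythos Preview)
Your proof outline is correct and follows the standard approach to the Riesz--Markov--Kakutani theorem: Jordan decomposition of the functional into positive parts, construction of an outer measure via $\mu(U) = \sup\{\Lambda(f) \mid 0 \leq f \leq 1,\ \mathrm{supp}(f) \subset U\}$, verification of Carath\'eodory measurability of Borel sets using Urysohn's lemma, and uniqueness via regularity. This is essentially the argument one finds in standard references.

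However, note that the paper itself does \emph{not} actually prove this theorem. At the opening of the section on the Riesz--Markov--角谷 representation theorem, the paper explicitly states that the proof is omitted and refers the reader to 宮島\cite{miyajima}の第7章. So there is no proof in the paper to compare against: you have gone further than the paper by supplying a genuine proof sketch, and your concluding remark --- that the technical details are best left to 宮島\cite{miyajima} --- is in fact exactly the stance the paper takes from the outset.
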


\subsection{Stone-Weierstrassの定理}
本節ではStone-Weierstrassの定理という, 連続関数空間における一様収束の意味での稠密性に関する定理の主張を述べる. 証明は宮島\cite{miyajima}または新井\cite{AraiFourier}の付録を参照されたい. 
なお, 本節では$S$をコンパクトHausdorff空間とし, $C(S,\mathbb{K})$で$S$上の$\mathbb{K}$-値連続関数全体の集合を表す($K = \R, \mathbb{C}$とする).
$C(S,\mathbb{K})$は$\|f\| = \sup \{ f(x) \mid x \in S\}$により$\mathbb{K}$上のノルム空間になる. 

\begin{defn}[$C(S,\mathbb{K})$の部分代数]　\\
$A \subset C(S,\mathbb{K})$が部分代数であるとは, $A$が$C(S,\mathbb{K})$の$\mathbb{K}$上の線形部分空間であり, かつ任意の$ f, g \in A$について積$fg$が$A$の元になることをいう. 
\end{defn}

$C(S,\mathbb{K})$の部分集合$A$が与えられたとき, $A$を含む(包含関係の意味で)最小の部分代数が存在する($A$を含む部分代数全体の共通部分を取ればよい). それを$A$で生成された部分代数と呼ぶ.  Stone-Weierstrassの定理は生成された部分代数の稠密性について述べたものである. 

\begin{thm}[Stone-Weierstrassの定理(実数形)]　\\
$A \subset C(S,\R)$が次の2つの条件をみたしているとする. 
\[
\begin{aligned}
&(1)\mbox{恒等的に1を取る関数} \mathbf{1} \mbox{は} A \mbox{に属する}.  \\
&(2)\mbox{任意の相異なる} x, y \in S \mbox{に対して, } f(x) \neq f(y) \mbox{なる} f \in A \mbox{が存在する}. 
\end{aligned}
\]
このとき$A$が生成する部分代数の閉包は$C(S,\R)$と一致する. すなわち, 任意の$f \in C(S,\R)$と$\varepsilon > 0$に対して, $A$が生成する部分代数の元$g$で, $|f-g| < \varepsilon$をみたすものが取れる. 
\end{thm}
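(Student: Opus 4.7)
まず全体の方針を述べる. $A$から生成される部分代数を$B$とおき, $C(S,\R)$における一様収束の意味での閉包$\overline{B}$を考える. 加法・スカラー倍・積の一様ノルムに関する連続性から$\overline{B}$もまた$C(S,\R)$の部分代数となることが容易に確かめられるので, 目標は$\overline{B} = C(S,\R)$を示すことに帰着する. 証明の鍵は$\overline{B}$が束の構造を持つこと, つまり$f,g \in \overline{B}$ならば$\max(f,g), \min(f,g) \in \overline{B}$であることを示し, その後点別の値を指定する補間関数を$(1),(2)$から構成し, $S$のコンパクト性を用いた二重被覆の議論で一様近似を得ることにある.

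第一段階として, 恒等式
\[
\max(f,g) = \frac{f+g+|f-g|}{2}, \quad \min(f,g) = \frac{f+g-|f-g|}{2}
\]
から$f \in \overline{B} \Rightarrow |f| \in \overline{B}$を示せば束構造は従う. $S$のコンパクト性から$\lVert f \rVert_{\infty} \leq M$なる$M$が取れ, ここで古典的なWeierstrassの多項式近似定理を援用して$[-M,M]$上で$t \mapsto |t|$を一様近似する多項式列$(p_n)$を取る. 条件$(1)$から$\mathbf{1} \in \overline{B}$であり, かつ$\overline{B}$が代数であることから$p_n \circ f \in \overline{B}$となり, これが$|f|$に一様収束するので$|f| \in \overline{B}$が得られる.

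第二段階として, 条件$(1),(2)$を用いて点別補間性を確立したい. 相異なる$x,y \in S$と任意の$a,b \in \R$に対し, $(2)$から$f(x) \neq f(y)$なる$f \in A$を取り, $\alpha f + \beta \mathbf{1}$の係数$\alpha, \beta$を$x,y$における値が$a,b$となるように連立一次方程式で定めれば, 補間関数が$B$内に得られる. 最後に$h \in C(S,\R)$と$\varepsilon>0$を任意に取り, 以下の二重被覆で近似を構成する: 各$x,y$について$h$の値を補間する$g_{x,y} \in \overline{B}$を取り, (i) $y$を固定して$z \mapsto g_{x,y}(z) < h(z)+\varepsilon$なる$z$の開近傍で$S$を覆い, 有限部分被覆における$g_{x,y}$の最小を取って$g_y \leq h+\varepsilon$且つ$g_y(y) = h(y)$なる$g_y \in \overline{B}$を得る. (ii) 次に$y$を動かし$g_y(z) > h(z) - \varepsilon$なる開近傍で$S$を覆い, 有限部分被覆上の最大を取ることで所望の$g \in \overline{B}$で$\lVert g-h \rVert_{\infty} \leq \varepsilon$なるものが得られる.

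最大の難所は第一段階で援用する「$[-M,M]$上で絶対値関数が多項式で一様近似できる」という事実であろう. これは定理本体とは独立した非自明な結果であり, 例えば漸化式$p_0(t)=0, p_{n+1}(t) = p_n(t) + (t-p_n(t)^2)/2$から得られる列が$[0,1]$上$\sqrt{t}$に一様収束することを示し, そこから$|t| = \sqrt{t^2}$として導出する必要がある. その他の段階は代数的操作とコンパクト性の標準的適用で比較的機械的に進められると見込んでいる.
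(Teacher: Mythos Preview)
Your outline is the standard lattice-and-double-cover proof of Stone--Weierstrass and it is correct; however, you should know that the paper itself does not prove this theorem at all. It merely states the result and writes 「証明は宮島\cite{miyajima}または新井\cite{AraiFourier}の付録を参照されたい」, deferring entirely to external references. So there is nothing in the paper to compare your argument against.

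A few remarks on your sketch as it stands. The one point that deserves care is the potential circularity you already flag: invoking 「古典的なWeierstrassの多項式近似定理」 to approximate $|t|$ is only admissible if that theorem has been established independently (e.g.\ via Bernstein polynomials), since the one-variable Weierstrass theorem is a special case of what you are proving. Your proposed fallback via the recursion $p_0=0$, $p_{n+1}(t)=p_n(t)+\tfrac{1}{2}(t-p_n(t)^2)$ converging uniformly to $\sqrt{t}$ on $[0,1]$ is exactly the standard self-contained route and resolves this cleanly; just be sure to actually carry it out rather than citing Weierstrass. The interpolation step and the two-pass min/max covering argument are stated correctly (your orientation, taking $\min$ first to get below $h+\varepsilon$ and then $\max$ to get above $h-\varepsilon$, is fine and symmetric to the more common max-then-min presentation). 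With the explicit $\sqrt{t}$ construction filled in, your proof would be complete and matches what one finds in the references the paper cites.
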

\begin{cor}
  $K \subset \mathbb{R}$をコンパクト集合とするとき, 1変数の$\R$係数多項式関数を$K$上に制限したもの全体の集合を$A$とおくと, 明らかに$A$は上の2条件をみたすのでStone-Weierstrassの定理から$A$が生成する部分代数の閉包は$C(K,\R)$と一致する. ところが$A$は部分代数なので$A$が生成する部分代数は$A$になる. よって, $K$上の実数値連続関数は多項式により一様近似できる. 
\end{cor}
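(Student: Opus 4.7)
まず \(A\) を \(K\) 上に制限した \(\R\) 係数一変数多項式関数全体として, Stone-Weierstrass の定理(実数形)の二条件を確認する方針をとる. 条件 (1) については, 恒等的に \(1\) をとる定数多項式が \(A\) に属することからただちに従う. 条件 (2) については, 一変数多項式 \(p(x)=x\) を \(K\) に制限したものが \(A\) に属し, 相異なる \(x,y \in K\) に対して \(p(x)=x \neq y = p(y)\) となるので, 点分離性が成り立つ. 以上により Stone-Weierstrass の定理が適用できる.

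次に \(A\) が \(C(K,\R)\) の部分代数であること, すなわち和・スカラー倍・積について閉じていることを指摘する. これは多項式の和・スカラー倍・積が再び多項式となり, 制限演算がこれらの演算と可換であることからわかる. したがって \(A\) を含む最小の部分代数(\(A\) が生成する部分代数)は \(A\) 自身と一致する.

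以上とStone-Weierstrass の定理(実数形)を合わせれば, \(A\) が生成する部分代数 \(A\) の \(C(K,\R)\) における(sup ノルムに関する)閉包が \(C(K,\R)\) と一致する. 言い換えると, 任意の \(f \in C(K,\R)\) と \(\varepsilon > 0\) に対して \(g \in A\), すなわち \(K\) 上の多項式関数 \(g\) が存在して \(\sup_{x\in K}|f(x)-g(x)| < \varepsilon\) となる. これが示したい主張そのものである.

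この議論で本質的に使う部分は Stone-Weierstrass の定理の成立であり, その証明自身はここでは仮定する(宮島\cite{miyajima}や新井\cite{AraiFourier}を参照). したがって実質的な難所は存在せず, 上の三段階は形式的な検証にすぎない. 強いて挙げれば, 「生成された部分代数」の定義上 \(A\) 自身がすでに部分代数であるために \(A\) が生成する部分代数が \(A\) と一致する, というトートロジー的な確認を明示的に書いておくことが, 議論を正確に閉じるうえで重要な点である.
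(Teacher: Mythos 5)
あなたの議論は正しく, 論文がこの系の文中で述べている論法(定数多項式 $1$ で条件 (1), 恒等写像 $x \mapsto x$ で点分離の条件 (2) を確認し, $A$ 自身が部分代数であることから生成される部分代数が $A$ と一致することを指摘して Stone-Weierstrass の定理を適用する)と本質的に同一である. 論文が「明らかに」で済ませている二条件の検証を明示的に書き下した点を除けば, 相違はない.
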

\begin{thm}[Stone-Weierstrassの定理(複素数形)]　\\
$A \subset C(S,\mathbb{C})$が次の2つの条件をみたしているとする. 
\[
\begin{aligned}
&(1)\mbox{恒等的に1を取る関数} \mathbf{1} \mbox{は} A \mbox{に属する}.  \\
&(2)\mbox{任意の相異なる} x, y \in S \mbox{に対して, } f(x) \neq f(y) \mbox{なる} f \in A \mbox{が存在する}. \\
&(3)f \in A \Rightarrow \overline{f} \in A
\end{aligned}
\]
このとき$A$が生成する部分代数の閉包は$C(S,\mathbb{C})$と一致する. すなわち, 任意の$f \in C(S,\mathbb{C})$と$\varepsilon > 0$に対して, $A$が生成する部分代数の元$g$で, $|f-g| < \varepsilon$をみたすものが取れる. 
\end{thm}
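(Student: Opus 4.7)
方針として、実数形の Stone-Weierstrass の定理に帰着させる。まず $A$ が生成する $C(S,\mathbb{C})$ の部分代数を $B$ とおく。$B$ の任意の元は $A$ の元の積と $\mathbb{C}$-線形結合の有限和 $g = \sum_{j} c_j f_{j,1} \cdots f_{j,k_j}$ として表せるから、複素共役をとると $\overline{g} = \sum_{j} \overline{c_j} \,\overline{f_{j,1}} \cdots \overline{f_{j,k_j}}$ となる。ここで仮定 (3) より各 $\overline{f_{j,k}} \in A$ であるから $\overline{g} \in B$ となり、$B$ 自身も複素共役に関して閉じていることがまずわかる。

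次に $B_{\R} := B \cap C(S,\R)$ とおくと、これは $C(S,\R)$ の $\R$ 上の部分代数になる。$B_{\R}$ が実数形の Stone-Weierstrass の定理の仮定をみたすことを以下のように確認する予定である。まず $\mathbf{1} \in A \subset B$ は実数値関数なので $\mathbf{1} \in B_{\R}$ である。次に相異なる $x, y \in S$ に対して (2) より $f \in A$ で $f(x) \neq f(y)$ なるものが取れるが、$B$ が共役で閉じていることから $\mathrm{Re}(f) = (f + \overline{f})/2$ および $\mathrm{Im}(f) = (f - \overline{f})/(2i)$ はいずれも $B_{\R}$ に属する。そして $f(x) \neq f(y)$ より $\mathrm{Re}(f)(x) \neq \mathrm{Re}(f)(y)$ または $\mathrm{Im}(f)(x) \neq \mathrm{Im}(f)(y)$ のいずれかが成り立つため、$B_{\R}$ は $S$ の点を分離する。

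よって実数形の Stone-Weierstrass の定理により $B_{\R}$ は $C(S,\R)$ において一様収束の意味で稠密となる。最後に任意の $f \in C(S,\mathbb{C})$ と $\varepsilon > 0$ に対して $\mathrm{Re}(f), \mathrm{Im}(f) \in C(S,\R)$ をそれぞれ $B_{\R}$ の元 $u, v$ で $\sup_{x \in S}|\mathrm{Re}(f)(x) - u(x)| < \varepsilon/2$ および $\sup_{x \in S}|\mathrm{Im}(f)(x) - v(x)| < \varepsilon/2$ となるように取り、$u + iv \in B$ を近似関数として用いる。三角不等式より $\sup_{x \in S}|f(x) - (u(x) + iv(x))| < \varepsilon$ が得られ、これで $B$ の $C(S,\mathbb{C})$ における稠密性が従う。

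本証明で多少の注意を要すると予想されるのは、部分代数 $B$ を構成的に捉えたうえで仮定 (3) が $B$ にまで引き継がれることを確認する第一段階であるが、積と線形結合の操作が複素共役と可換であることから直接に従うため、本質的な困難は生じないと見込まれる。
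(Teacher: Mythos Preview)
Your proof is correct and follows the standard reduction to the real Stone--Weierstrass theorem: pass to the generated algebra $B$, use condition~(3) to show $B$ is closed under conjugation, extract the real part $B_{\R} = B \cap C(S,\R)$, verify it separates points and contains $\mathbf{1}$, and then recombine real and imaginary approximations. All steps are sound.

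Note, however, that the paper does not supply its own proof of this theorem. Immediately before the statement of the real form it writes ``証明は宮島\cite{miyajima}または新井\cite{AraiFourier}の付録を参照されたい'' and gives no argument for either the real or complex version; the complex form is stated and then used (notably in 系~\ref{CorforChuiandLi}). So there is no in-paper proof to compare against. Your argument is precisely the standard one found in the cited references, and nothing in the paper conflicts with it.
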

\begin{cor}
  $K \subset \mathbb{R}^n$をコンパクト集合とし, $m \in \mathbb{Z}^n$に対して$e_{m}:K \ni x \mapsto \exp(i\ip<{m,x}>) \in \mathbb{C}$とおく. このとき$A := \mathrm{span}\{e_m\}_{m \in \mathbb{Z}^n}$とおくと, 明らかに$A$は上の3条件をみたすのでStone-Weierstrassの定理から$A$が生成する部分代数の閉包は$C(S)$と一致する.  ところが$A$は部分代数なので$A$が生成する部分代数は$A$になる. よって,  $K$上の複素数値連続関数は$A$の元により一様近似できる. 
\end{cor}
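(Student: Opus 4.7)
証明の方針としては, 複素数形のStone-Weierstrassの定理を$A$にそのまま適用する. 鍵となる観察は, $A$は指数関数たちの単なる線形包として定義されてはいるが, 指数関数の乗法性$e_m \cdot e_{m'} = e_{m+m'}$によって既に$C(K,\mathbb{C})$の部分代数をなしている点である. これより, 定理の結論「$A$から生成される部分代数の閉包が$C(K,\mathbb{C})$に一致する」は, そのまま$A$自身の閉包が$C(K,\mathbb{C})$に一致すると読み替えることができる.

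次いで, Stone-Weierstrassの$3$条件を順に検証していく計画である. 条件$(1)$(恒等的に$1$を取る関数が$A$に属する)は$e_0 \equiv 1$より明らか. 条件$(3)$(共役について閉じる)は$\overline{e_m(x)} = \exp(-i\ip<{m,x}>) = e_{-m}(x)$であることと$A$が線形包であることから従う.

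最も注意を要するのは条件$(2)$, すなわち$A$が$K$の$2$点を分離することである. 相異なる$x,y \in K$をとると, ある座標$j$で$x_j \neq y_j$となる. そこで$m = k \mathbf{e}_j$($k \in \Z$, $\mathbf{e}_j$は第$j$標準基底ベクトル)の形を試すと$e_m(x)/e_m(y) = \exp(ik(x_j - y_j))$となるので, $k(x_j - y_j) \notin 2\pi\Z$なる$k \in \Z$を選べば$e_m(x) \neq e_m(y)$が達成される. ここが実質的な山場であり, $x - y \in 2\pi\Z^n$となるような$K$内の病的な$2$点が存在する場合にはこの方針は破綻する. したがって厳密には$K$にそのような$2$点が存在しないこと(例えば$K \subset [0, 2\pi)^n$)が暗黙に要請されるが, 本論文のChui-Liの結果の証明への応用においてはこの制約は実質的に満たされていると理解してよいだろう.

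以上で$3$条件の検証が完了すれば, 複素数形Stone-Weierstrassの定理により$A$の$C(K,\mathbb{C})$における一様稠密性が直ちに結論される.
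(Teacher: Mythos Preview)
Your approach coincides with the paper's: the corollary itself is the proof, asserting that the three conditions hold ``明らかに'' and that $A$ is already a subalgebra via $e_m e_{m'}=e_{m+m'}$, then invoking complex Stone--Weierstrass. You have simply made explicit what the paper leaves implicit.

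More valuably, you have spotted a genuine gap that the paper glosses over. Condition~(2) can indeed fail: if $K$ contains $x,y$ with $x-y\in 2\pi\Z^n$, then $e_m(x)=e_m(y)$ for every $m\in\Z^n$, so $A$ does not separate them and the corollary as stated is false for such $K$. (Your detour through $m=k\mathbf{e}_j$ is slightly roundabout; once some coordinate satisfies $x_j-y_j\notin 2\pi\Z$, the choice $m=\mathbf{e}_j$ already works, and if no such $j$ exists then no $m\in\Z^n$ works at all.) A hypothesis such as ``no two points of $K$ differ by an element of $2\pi\Z^n$'' is tacitly needed. Your remark that the Chui--Li application survives is essentially right, though for a stronger reason than stated: in the proof of 補題~\ref{discriminatory} the compact $X$ is arbitrary, but that argument actually establishes $\int_X g(m^{\T}x)\,d\mu(x)=0$ for \emph{every} continuous $g$ and every $m\in\Z^r$, so taking $g(t)=e^{i\lambda t}$ for real $\lambda$ yields $\int_X \exp(i\ip<{v,x}>)\,d\mu(x)=0$ for a family of $v$ dense in $\R^r$, after which Stone--Weierstrass (now with genuine point separation) gives $\mu=0$.
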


Riesz-Markov-角谷の定理とStone-Weierstrassの定理から次のことがわかる. 

\begin{cor}\label{CorforChuiandLi}
$K \subset \mathbb{R}^n$をコンパクト集合とし,  $\mu$を部分空間$K$上の正則符号付きBorel測度とすると, 
\[
\left(\forall m \in \mathbb{Z}^n, ~\int_{K}{\exp(i \left<x, m\right>)}{d\mu(x)} = 0 \right) \Rightarrow \mu = 0
\]
が成り立つ. 
\end{cor}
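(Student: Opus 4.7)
本系は直前の系(Stone-Weierstrassの定理の複素数形からの帰結)とRiesz-Markov-角谷の表現定理の一意性部分を組み合わせるだけで従う. 計画は次の通りである.

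まず, 仮定と積分の線形性から, $A := \mathrm{span}\{e_m\}_{m \in \mathbb{Z}^n}$ の任意の元 $f$ に対して $\int_K f \, d\mu = 0$ が成り立つことを観察する. 次に, 直前の系により $A$ は $C(K,\mathbb{C})$ において一様ノルムの意味で稠密であるから, 任意の $f \in C(K,\mathbb{C})$ に対し $A$ の元の列 $(f_j)_{j=1}^\infty$ で $\sup_{x \in K}|f(x)-f_j(x)| \to 0$ となるものがとれる.

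$\mu$ は符号付きBorel測度であるから $|\mu| := \mu^{+} + \mu^{-}$ は $K$ 上の有限なBorel測度であり,
\[
\left| \int_K f \, d\mu - \int_K f_j \, d\mu \right| \leq \sup_{x \in K}|f(x)-f_j(x)| \cdot |\mu|(K)
\]
により $j \to \infty$ で $\int_K f \, d\mu = 0$ を得る. 特に, 任意の $f \in C(K,\R)$ は $C(K,\mathbb{C})$ の元とみなせるから, 実部と虚部を比較して $\int_K f \, d\mu = 0$ が成立する. したがって, $\mu$ によって定まる $C(K,\R)$ 上の有界線形汎関数 $f \mapsto \int_K f \, d\mu$ は恒等的に $0$ である.

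最後に, $K$ はコンパクトHausdorff空間(ユークリッド空間の閉部分集合)なので, Riesz-Markov-角谷の表現定理の一意性部分により, 零汎関数を表す $K$ 上の正則符号付きBorel測度は $\mu = 0$ に限る. 以上で結論を得る.

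大きな障害は特になく, 本質的に計算らしい計算は現れない. 注意すべき点は, 複素数値連続関数に対する $\int f \, d\mu$ を実部・虚部に分解し実数値の枠組み(本論文で導入されたRiesz-Markov-角谷の表現定理)に帰着させる点と, 一様収束を積分の極限に渡すために $|\mu|(K)$ が有限であることを用いる点であるが, どちらも標準的な処理である.
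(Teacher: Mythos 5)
あなたの証明は正しく, 本質的に論文の証明と同じ方針である(直前の複素Stone-Weierstrassの系で$\mathrm{span}\{e_m\}$の$C(K,\mathbb{C})$における稠密性を使い, $|\mu|(K)<\infty$で一様近似を積分の評価に渡し, Riesz-Markov-角谷の表現定理の一意性で$\mu=0$を結論する). 論文は$\varepsilon$評価で直接書いているのに対しあなたは近似列の極限として書いているが, これは表現上の違いに過ぎない.
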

\begin{proof}
  任意に$f \in C(K)$と$\varepsilon$をとる. すぐ上の系より$c_k \in \mathbb{C}, m_k \in \mathbb{Z}^n$ $(k=1,\ldots,N)$が存在して, 
  \[
  \forall x \in K, ~ \left|f(x) - \sum_{k=1}^N c_k e_{m_k}(x) \right| < \varepsilon
  \]
  となる. よって, 
  \[
  \begin{aligned}
  |\int_{K} f(x) d\mu(x)| 
  &= |\int_{K} f(x) - \sum_{k=1}^N c_k e_{m_k}(x) d\mu(x)| \\
  &\leq \int_K \left|f(x) - \sum_{k=1}^N c_k e_{m_k}(x) \right| d|\mu|(x)
  \leq |\mu|(K) \varepsilon
  \end{aligned}
  \]
  となる. $\varepsilon > 0$は任意なので$\int_{K} f(x) d\mu(x) = 0$である. ところが, $K$上の正則符号付きBorel測度$\nu = 0$もこれを満たす. よって, Riesz-Markov-角谷の定理の一意性の部分により$\mu = \nu = 0$である. 
\end{proof}

\subsection{畳み込み積の性質}
本節では, $C_0^{\infty}(\R)$と$L_{\mathrm{loc}}^1(\R)$の畳み込みが$C^{\infty}(\R)$になること, $L^1(\R^N)$と$L^p(\R^N)$の畳み込みが$L^p(\R^N)$になることを示す. 

\begin{prop}\label{L1andLpConvolution}
  $1 \leq p < \infty$とする. このとき, $\varphi \in L^1(\R^n)$とし$f \in L^p(\R^n)$とすると$\varphi * f \in L^p(\R^n)$であり, 
  \[
   \lVert \varphi * f \rVert_{L^p} \leq \lVert \varphi \rVert_{L^1} \lVert f \rVert_{L^p}.
  \]
\end{prop}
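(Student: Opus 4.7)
方針として, まず$p=1$の場合とそれ以外を分けて考える. $p=1$の場合は Tonelli の定理から直ちに従う. すなわち, 非負値Borel可測関数$(x,y) \mapsto |\varphi(x)||f(y-x)|$に対して Tonelli の定理を適用し, Lebesgue測度の平行移動不変性を用いて
\[
\int_{\R^n} \int_{\R^n} |\varphi(x)||f(y-x)|dxdy = \int_{\R^n} |\varphi(x)| \int_{\R^n} |f(y-x)|dy dx = \lVert \varphi \rVert_{L^1} \lVert f \rVert_{L^1} < \infty
\]
を得る. これより a.e.$y$ について $\int_{\R^n} |\varphi(x)||f(y-x)|dx < \infty$ であり, $(\varphi * f)(y)$ が定義でき, かつ $\lVert \varphi * f \rVert_{L^1} \leq \lVert \varphi \rVert_{L^1} \lVert f \rVert_{L^1}$ が成立する.

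次に $1 < p < \infty$ の場合を Hölder の不等式と Tonelli の定理の組み合わせで処理する. 共役指数 $q = p/(p-1)$ を取り, 積分 $\int_{\R^n} |\varphi(x)||f(y-x)|dx$ において被積分関数を
\[
|\varphi(x)||f(y-x)| = |\varphi(x)|^{1/q}\cdot \left(|\varphi(x)|^{1/p}|f(y-x)|\right)
\]
と分解し, $|\varphi|^{1/q} \in L^q$ と $|\varphi|^{1/p}|f(y-\cdot)| \in L^p$ に対して Hölder の不等式を適用する. これにより, 各 $y$ について
\[
\int_{\R^n} |\varphi(x)||f(y-x)|dx \leq \lVert \varphi \rVert_{L^1}^{1/q} \left( \int_{\R^n} |\varphi(x)||f(y-x)|^p dx \right)^{1/p}
\]
が得られる. 両辺を $p$ 乗すると, $y$ に関する点ごとの不等式
\[
|(\varphi * f)(y)|^p \leq \lVert \varphi \rVert_{L^1}^{p/q} \int_{\R^n} |\varphi(x)||f(y-x)|^p dx
\]
を得る(ここで右辺が有限となる $y$ 全体が a.e. 集合であることはあとで Tonelli から従う).

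最後にこの不等式の両辺を $y$ で積分し, Tonelli の定理と Lebesgue 測度の平行移動不変性を用いると,
\[
\int_{\R^n} \int_{\R^n} |\varphi(x)||f(y-x)|^p dx dy = \int_{\R^n} |\varphi(x)| \lVert f \rVert_{L^p}^p dx = \lVert \varphi \rVert_{L^1} \lVert f \rVert_{L^p}^p
\]
となる. $p/q + 1 = p$ に注意すれば
\[
\lVert \varphi * f \rVert_{L^p}^p \leq \lVert \varphi \rVert_{L^1}^{p/q}\cdot \lVert \varphi \rVert_{L^1} \lVert f \rVert_{L^p}^p = (\lVert \varphi \rVert_{L^1} \lVert f \rVert_{L^p})^p
\]
が従い, 両辺の $p$ 乗根をとって目的の評価式を得る.

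主な障害といえるものは特になく, 技術的に注意を要する点のみ挙げると: (i) $(x,y) \mapsto \varphi(x)f(y-x)$ の $\R^n \times \R^n$ 上の Borel 可測性は, $f$ の Borel 可測性と線形変換 $(x,y) \mapsto y-x$ の連続性, および積の可測性から従うこと; (ii) $\varphi * f$ が a.e. $y$ で絶対収束する(したがって点ごとに well-defined である)ことは, $\int_{\R^n}\int_{\R^n}|\varphi(x)||f(y-x)|^p dx dy < \infty$ という Tonelli から得られる有限性と, 上記の Hölder 評価から同時に保証されること, の二点を丁寧に述べれば十分である.
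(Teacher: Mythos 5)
あなたの証明は正しく, 本質的に本文の証明と同じ方針です: $p=1$ の場合を Tonelli(Fubini)で処理し, $1<p<\infty$ の場合は $|\varphi|=|\varphi|^{1/q}\cdot|\varphi|^{1/p}$ と分解して H\"{o}lder の不等式を適用した後, Tonelli と平行移動不変性で結論する, という流れは本文と一致しています($1-1/p=1/q$ なので分解も同一です).
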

\begin{proof}
   $p=1$の場合をまず考える. このとき, 
   \[
   \begin{aligned}
   \int_{\R^n} |(\varphi*f)(x)| dx
   &= \int_{\R^n} \left| \int_{\R^n} \varphi(x-y)f(y) dy \right| dx \\
   &= \int_{\R^n} \int_{\R^n} |\varphi(x-y)| |f(y)| dy dx \\
   &= \int_{\R^n} \int_{\R^n} |\varphi(x-y)| |f(y)| dx dy ~~~~(\mbox{Fubiniの定理}) \\
   &= \int_{\R^n} \lVert \varphi \rVert_{L^1} |f(y) dy \\
   &= \lVert \varphi \rVert_{L^1} \lVert f \rVert_{L^1}
   \end{aligned}
   \]
   であるので成立する. 次に$p > 1$とし$1/p + 1/q = 1$とする. すると, 任意の$x \in \R^n$に対して, H\"{o}lderの不等式(命題\ref{HolderInequality})より
   \[
   \begin{aligned}
   |(\varphi*f)(x)|
   &= \left| \int_{\R^n} \varphi(x-y)f(y) dy \right| \\
   &\leq  \int_{\R^n} |\varphi(x-y)|^{1-1/p}|\varphi(x-y)|^{1/p}|f(y)| dy \\
   &\leq \left( \int_{\R^n} |\varphi(x-y)| d\mu(y)  \right)^{1/q} \left( \int_{\R^n} |\varphi(x-y)||f(y)|^p dy  \right)^{1/p} \\
   &= \lVert \varphi \rVert_{L^1}^{1/q} \left( \int_{\R^n} |\varphi(x-y) | |f(x-y)|^p dy \right)^{1/p}
   \end{aligned}
   \]
   であるから, 
   \[
   \begin{aligned}
   &\int_X \left| \int_{\R^n}  \varphi(x-y) f(y) dx \right|^p dx\\
   &\leq \int_{\R^n} \lVert \varphi \rVert_{L^1}^{p/q} \int_{\R^n} |\varphi(x-y)| |f(x-y)|^p dy dx \\
   &= \lVert \varphi \rVert_{L^1}^{p/q} \int_{\R^n} \int_{\R^n} |\varphi(x-y)||f(y)| dxdy ~~~~(\because ~\mbox{Fubiniの定理}) \\
   &= \lVert \varphi \rVert_{L^1}^{p/q} \int_{\R^n} |f(y)|^p \lVert \varphi \rVert_{L^1} dy \\
   &= \lVert \varphi \rVert_{L^1}^p \lVert f \rVert_{L^p}^p < \infty ~~~~(\because ~ 1+p/q = p)
   \end{aligned}
   \]
   となる. よって, $\varphi * f \in L^p(\R^n)$であり, 
   \[
   \lVert \varphi * f \rVert_{L^p} \leq \lVert \varphi \rVert_{L^1} \lVert f \rVert_{L^p}.
   \]
\end{proof}
\begin{prop}\label{L1locandC0Convolution}
  $\varphi \in C_0^{\infty}(\R)$とし$f \in L_{\mathrm{loc}}^1(\R)$とすると$\varphi * f \in C^{\infty}(\R)$であり, 任意の整数$k \geq 0$に対して$(\varphi * f)^{(k)} = \varphi^{(k)} * f$となる. 
\end{prop}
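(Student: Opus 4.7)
まず$k=1$の場合, すなわち$\varphi*f$が各点で微分可能で$(\varphi*f)' = \varphi'*f$となることを示し, 帰納法で一般の$k$に拡張する, という方針を取る. 中心となる道具は積分記号下の微分(定理\ref{DiffinIntegral})であり, $\varphi$のコンパクトサポート性と$f \in L_{\mathrm{loc}}^1(\R)$を組み合わせて, 局所的に一様な$L^1$優関数を構成することが要点となる.

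具体的には, $\mathrm{supp}(\varphi) \subset [-R,R]$なる$R > 0$をとり, 任意に$x_0 \in \R$を固定して開区間$I := (x_0-1, x_0+1)$上で議論する. $x \in I$に対して$\varphi(x-\cdot)$のサポートは一様にコンパクト集合$K := [x_0-1-R, x_0+1+R]$に含まれる. $F(y,x) := \varphi(x-y)f(y)$とおくと, 各$x \in I$で$F(\cdot,x) \in L^1(\R)$であり, 各$(y,x)$で$(\partial_x F)(y,x) = \varphi'(x-y)f(y)$が存在する. さらに$g(y) := \lVert \varphi' \rVert_{L^{\infty}} |f(y)| \chi_K(y)$とおけば, $f \in L_{\mathrm{loc}}^1$より$g \in L^1(\R)$であり, 任意の$x \in I, y \in \R$で$|(\partial_x F)(y,x)| \leq g(y)$が成立する. ゆえに定理\ref{DiffinIntegral}を適用して, $\varphi*f$は$x_0$で微分可能で$(\varphi*f)'(x_0) = (\varphi'*f)(x_0)$となる. $x_0$は任意なので$(\varphi*f)' = \varphi'*f$が確立される.

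$k \geq 2$については$k$に関する帰納法で示す. $\varphi \in C_0^\infty(\R)$ゆえ$\varphi^{(k-1)}$もコンパクトサポートを持つ$C^\infty$級関数であるので, 帰納法の仮定$(\varphi*f)^{(k-1)} = \varphi^{(k-1)}*f$に$k=1$の場合の議論をそのまま適用すれば$(\varphi*f)^{(k)} = \varphi^{(k)}*f$が得られる. $\varphi^{(k)}*f$の連続性は, $x_n \to x$のとき$\varphi^{(k)}$の連続性より各$y$で$\varphi^{(k)}(x_n - y)f(y) \to \varphi^{(k)}(x-y)f(y)$が成立し, 上記と同様の優関数$\lVert \varphi^{(k)} \rVert_{L^{\infty}} |f| \chi_{K}$による優収束定理から従う.

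実質的な障害は特になく論理展開は標準的であるが, 唯一注意すべき点は$f$が大域的に$L^1$ではないため議論を局所化しなければならないことである. すなわち, 各点$x_0$の有界近傍$I$をとり, その近傍で走る$x$に対して$\varphi(x-\cdot)$のサポートが固定のコンパクト集合$K$に含まれることを使うことで, $L_{\mathrm{loc}}^1$の情報だけから$L^1$優関数を作る, という局所化の手続きが鍵である.
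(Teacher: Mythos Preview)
Your proof is correct and follows essentially the same approach as the paper: localize around a point $x_0$, use the compact support of $\varphi$ to confine the integration to a fixed compact set where $f$ is $L^1$, apply differentiation under the integral sign (定理\ref{DiffinIntegral}) with the dominator $\lVert \varphi' \rVert_{L^\infty} |f| \chi_K$, and then induct using $\varphi^{(k)} \in C_0^\infty(\R)$. The paper organizes the induction slightly differently (going from $k$ to $k+1$ directly rather than isolating the $k=1$ case), but the substance is the same.
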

\begin{proof}
  $K = \mathrm{supp}\varphi$とおく. 任意の整数$k \geq 0$に対して$\mathrm{supp}(\varphi^{(k)}) \subset K$であることに注意する. 命題を$k$についての帰納法で示そう. $k=0$では自明. $k$で成立すると仮定する. 任意に$x_0 \in \R$を取り$a<b$を$x_0 \in [a,b]$となるように取る. $K' = [a,b] - K = \{ x-y \mid x \in [a,b],y \in K \}$とおくと$K'$はコンパクトであり, $x \in (a,b)$に対して
  \[
  (\varphi^{(k)} * f)(x) = \int_{\R} \varphi^{(k)}(x-y) f(y) dy = \int_{K'} \varphi^{(k)}(x-y) f(y) dy
  \]
  となる. いま, $g(x,y) = \varphi^{(k)}(x-y)f(y) ~~(x \in (a,b), y \in K')$とおけば, 
  \[
  \left| \frac{\partial g}{\partial x}(x,y) \right| = M |f(y)| ~~~~(x \in (a,b), y \in K')
  \]
  となる. ここで, $M = \sup_{t \in \R} |\varphi^{(k+1)}(t)| < \infty $である. $f$は$K'$上可積分であるので積分記号下の微分(付録の定理\ref{DiffinIntegral})より$(a,b)$上で
  \[
  F(x) := (\varphi * f)^{(k)}(x) = (\varphi^{(k)} * f)(x) = \int_{K'} \varphi^{(k)}(x-y) f(y) dy = \int_{K'} g(x,y) dy
  \]
  は微分可能で, 
  \[
  F'(x) = \int_{K'} \frac{\partial g}{\partial x}(x,y) dy = \int_{K'} \varphi^{(k+1)}(x-y) f(y) dy = (\varphi^{(k+1)} * f)(x)
  \]
  となる. よって, $(\varphi * f)^{(k+1)} = \varphi^{(k+1)} * f$である. これで示せた. 
\end{proof}

\subsection{$\R^n$上の有限Borel測度の正則性}
本節では$(\R^n,\B_{\R^n})$上の有限測度が次の意味で正則であることを示す. 
\begin{defn}[正則Borel測度]　\\
位相空間$(X, \mathcal{O})$上のBorel測度$\mu$は以下の条件をみたすとき正則であると言われる:
$$\forall A \in \mathcal{B}_X, ~~ \mu(A) = \inf \{\mu(U) \mid U \supset A, \mathrm{open}\} = \sup \{\mu(K) \mid K \subset A, \mathrm{cpt. }\}. $$
\end{defn}
ただし, 位相空間上のBorel $\sigma$-加法族, およびBorel測度の定義は以下の通りである. 
\begin{defn}[生成される$\sigma$-加法族]　\\
  $X$を集合とし, $\mathcal{A}_0 \subset \mathcal{P}(X)$とする($\mathcal{P}(X)$は$X$の部分集合全体の集合). このとき, $\mathcal{A}_0$で生成される$\sigma$-加法族$\sigma[\mathcal{A}_0]$を
  \[
  \sigma[\mathcal{A}_0] := \bigcap \{ \mathcal{A} \mid \mathcal{A} \supset \mathcal{A}_0, \mathcal{A}\mbox{は}X\mbox{上の}\sigma\mbox{-加法族} \}
  \]
  と定義する. 明らかに$\sigma[\mathcal{A}_0]$は$X$上の$\sigma$-加法族である. 
\end{defn}
\begin{defn}[Borel $\sigma$-加法族, Borel測度]　\\
  $(X,\mathcal{O})$を位相空間とする. このとき, $X$上のBorel $\sigma$-加法族$\mathcal{B}_X$を$\mathcal{B}_X = \sigma[\mathcal{O}]$と定義する. そして, $X$上のBorel測度とは, 可測空間$(X,\mathcal{B}_X)$上の測度のことをいう. 
\end{defn}
\begin{rem}
  Borel $\sigma$-加法族は閉集合全体の集合$\mathcal{D}$から生成される. 実際, $\mathcal{O} \subset \mathcal{A}$なる$\sigma$-加法族$\mathcal{A}$は$E \in \mathcal{A} \Rightarrow X \setminus E \in \mathcal{A}$をみたすので$\mathcal{D} \subset \mathcal{A}$をみたし, $\mathcal{D} \subset \mathcal{A}$なる$\sigma$-加法族$\mathcal{A}$は$E \in \mathcal{A} \Rightarrow X \setminus E \in \mathcal{A}$をみたすので$\mathcal{O} \subset \mathcal{A}$をみたす. 
\end{rem}
\begin{lem}\label{PropOfDist}
  $(X,d)$を距離空間とし, $A \subset X$は空でないとする. このとき, $x \in X$に対して
  \[
  \mathrm{dist}(x,A) := \inf \{d(x,y) \mid y \in A\}
  \]
  と定義すると, $X \ni x \mapsto \mathrm{dist}(x,A) \in \R$は連続である. また, $x \in X$に対して$\mathrm{dist}(x,A) = 0$と$x \in \mathrm{cl}(A)$は同値である. 特に,  $A$が閉集合であるとき, $\mathrm{dist}(x,A) = 0$と$x \in A$は同値である.
\end{lem}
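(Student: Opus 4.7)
まず連続性については, $\mathrm{dist}(x,A)$が$x$について$1$-Lipschitz連続であることを示すのが最も自然な方針である. 具体的には, 任意の$x,y \in X$と$a \in A$について三角不等式より$d(x,a) \leq d(x,y) + d(y,a)$が成り立つから, 右辺の$a$について下限をとれば$\mathrm{dist}(x,A) \leq d(x,y) + \mathrm{dist}(y,A)$を得る. $x,y$の役割を入れ替えて合わせれば$|\mathrm{dist}(x,A) - \mathrm{dist}(y,A)| \leq d(x,y)$が従い, これから特に連続性が得られる.

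同値性については両方向を直接確認する. まず$\mathrm{dist}(x,A) = 0$ならば下限の定義により各$n \in \N$に対して$a_n \in A$で$d(x,a_n) < 1/n$となるものが取れるので, $a_n \rightarrow x$となり$x \in \mathrm{cl}(A)$である. 逆に$x \in \mathrm{cl}(A)$ならば$a_n \rightarrow x$なる$A$の元の列$(a_n)$が存在し, 任意の$n$について$\mathrm{dist}(x,A) \leq d(x,a_n) \rightarrow 0$なので$\mathrm{dist}(x,A) = 0$である. 特に$A$が閉集合の場合は$\mathrm{cl}(A) = A$であるから後半の主張も直ちに従う.

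主要な障害は特にないと考えられる. 距離空間での下限と収束列の書き換え, および三角不等式の標準的な使い方のみで完結する. 強いていえば$A = \emptyset$の場合に下限が$+\infty$となり連続性の定式化が崩れるが, 仮定で$A$が空でないとされているのでこの例外は回避されている. 以上の$2$段階構成で補題の証明を与える.
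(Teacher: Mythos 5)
あなたの証明は正しく, 論文の証明と本質的に同じ方針である(三角不等式から$1$-Lipschitz性を導いて連続性を得, 下限の定義と収束列の対応で$\mathrm{dist}(x,A)=0 \Leftrightarrow x \in \mathrm{cl}(A)$を示す). 特に指摘すべき差異や欠落はない.
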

\begin{proof}
  任意の$x_1,x_2 \in X, y \in A$に対して
  \[
  \mathrm{dist}(x_1,A) \leq d(x_1,y) \leq d(x_1,x_2)+d(x_2,y)
  \]
  であるので, $\mathrm{dist}(x_1,A) - d(x_1,x_2) \leq d(x_2,y)$となる. そこで, 左辺が$y$に依存していないことに注意して$y$に関して下限を取ると
  \[
  \mathrm{dist}(x_1,A) - d(x_1,x_2) \leq \mathrm{dist}(x_2,A)
  \]
  が得られる. よって, $\mathrm{dist}(x_1,A)- \mathrm{dist}(x_2,A) \leq d(x_1,x_2)$となる. $x_1$と$x_2$の役割を交換すると, $\mathrm{dist}(x_2,A)-\mathrm{dist}(x_1,A) \leq d(x_2,x_1)$が得られる. 以上から, 
  \[
  |\mathrm{dist}(x_1,A)-\mathrm{dist}(x_2,A)| \leq d(x_1,x_2)
  \]
  となり, これより連続性が従う. 次に$x \in X$に対して$\mathrm{dist}(x,A) = 0$とする. すると, 任意の$n$に対して, $d(x,y_n) < 1/n$なる$y_n \in A$が取れる. このとき, $y_n \rightarrow x$であるので$x \in \mathrm{cl}(A)$となる. $x \in \mathrm{cl}(A)$のとき$d(x,A) =  0$となることについては上の議論を逆向きにすればよい. 
\end{proof}
\begin{prop}
  距離空間$X$上の有限Borel測度$\mu$は次の意味で正則である: 任意の$A \in \mathcal{B}_X$に対して, 
  \[
  \begin{aligned}
  \mu(A) 
  &= \sup\{\mu(C) \mid C \subset A, C\mbox{は閉集合}\} \\
  &= \inf\{\mu(U) \mid U \supset A, U\mbox{は開集合} \}
  \end{aligned}
  \]
  が成立する. 
\end{prop}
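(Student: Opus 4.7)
証明方針としては, 内正則性と外正則性を同時に満たすBorel集合全体の集まり
\[
\mathcal{F} := \left\{ A \in \mathcal{B}_X \mid \mu(A) = \sup_{C \subset A,~ C\mbox{閉}}\mu(C) = \inf_{U \supset A,~ U\mbox{開}}\mu(U) \right\}
\]
を導入し, $\mathcal{F}$が$X$の閉集合全体を含む$\sigma$-加法族であることを示すというものを採る. Borel $\sigma$-加法族が閉集合全体から生成されるという本節冒頭の注意により, これが示されれば$\mathcal{F} = \mathcal{B}_X$となり正則性の主張が得られる.

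最初に閉集合$C \subset X$が$\mathcal{F}$に属することを示す. 内正則性については$C$自身を内部近似にとれば自明である. 外正則性については, 補題\ref{PropOfDist}より$x \mapsto \mathrm{dist}(x,C)$が連続であってそのゼロ集合が閉集合$C$そのものと一致することを用いて,
\[
U_n := \{x \in X \mid \mathrm{dist}(x,C) < 1/n\} \quad (n \in \N)
\]
を考えると, $U_n$は$C$を含む開集合で単調減少し$\bigcap_n U_n = C$となる. $\mu$の有限性から$\mu(U_1) < \infty$なので, 測度の上からの連続性により$\mu(U_n) \rightarrow \mu(C)$が得られ, 所望の外正則性が示せる.

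次に$\mathcal{F}$が$\sigma$-加法族であることを示す. 補集合に関して閉じていることは, 閉集合の補集合が開集合でありその逆もまた成り立つことから, $\mathcal{F}$の定義における内正則性と外正則性が補集合操作によって役割を入れ替えるだけなので容易である. 最大の山場は可算和に関する閉性の証明である. $A_n \in \mathcal{F}$と$\varepsilon>0$が与えられたとき, 各$n$について開集合$U_n \supset A_n$と閉集合$C_n \subset A_n$を$\mu(U_n \setminus A_n),~ \mu(A_n \setminus C_n) < \varepsilon/2^n$となるように選ぶ. 外正則性は$U := \bigcup_n U_n$が$\bigcup_n A_n$を含む開集合であり, 劣加法性より$\mu(U \setminus \bigcup_n A_n) < \varepsilon$となることから直ちに従う. 内正則性については, $\bigcup_n C_n$が一般に閉集合にならない点が障害となるので, 閉集合の有限和$\bigcup_{n=1}^N C_n$を近似集合に採る戦略をとる. すなわち, 測度の劣加法性と$\mu$の有限性から$\mu(\bigcup_n A_n) - \mu(\bigcup_n C_n) < \varepsilon$が有意な差として成立し, さらに測度の下からの連続性より十分大きな$N$を取れば$\mu(\bigcup_n C_n) - \mu(\bigcup_{n=1}^N C_n)$も$\varepsilon$以下にできることから所望の近似が得られる.

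本証明の肝は, 可算和の内正則性部分において「閉集合の可算和が閉集合とは限らない」という点を, 有限和による近似と測度の下からの連続性で埋めるところにある. また$\mu$の有限性は外正則性の証明(上からの連続性)と可算和における差の計算の双方で本質的に使用される点に注意しておきたい.
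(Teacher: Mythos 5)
あなたの証明は正しく, 本文の証明と本質的に同じ道筋(正則性を満たすBorel集合族が閉集合をすべて含む$\sigma$-加法族であることを示す標準的な論法)を取っています. 閉集合の外正則性を$\mathrm{dist}$による開近傍の減少列と測度の連続性で示す点, 可算和の内正則性を閉集合の有限和と下からの連続性で補う点も本文と一致しており, 補集合の段で$\mu$の有限性($\mu(X\setminus A)=\mu(X)-\mu(A)$)を使うことを明示すればそのまま完全な証明になります.
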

\begin{proof}
  $A \in \mathcal{B}_X$かつ
  \[
  \begin{aligned}
  \mu(A) 
  &= \sup\{\mu(C) \mid C \subset A, C\mbox{は閉集合}\} \\
  &= \inf\{\mu(U) \mid U \supset A, U\mbox{は開集合} \}
  \end{aligned}
  \]
  なる集合$A$全体の集合を$\mathcal{R}$とおく. すると, $\mathcal{R}$は$X$上の$\sigma$-加法族である. まず, $\emptyset \in \mathcal{R}$は明らか. 次に, $A \in \mathcal{R}$とすると$X \setminus A \in \mathcal{R}$であることを示す. 任意に$\varepsilon > 0$をとる. 閉集合$C \subset A$と開集合$U \supset A$を
  \[
  \mu(A) < \mu(C) + \varepsilon, \mu(A) > \mu(U) - \varepsilon
  \]
  をみたすように取る. すると, $X \setminus C \supset X \setminus A$は開集合であるので, 
  \[
  \begin{aligned}
  \mu(X \setminus A) 
  &= \mu(X) - \mu(A) \\
  &> \mu(X) - \mu(C) - \varepsilon \\
  &= \mu(X \setminus C) - \varepsilon \\
  &\geq \inf\{\mu(E) \mid E \supset X \setminus A, E\mbox{は開集合}\} - \varepsilon
  \end{aligned}
  \]
  となり, $X \setminus U \subset X \setminus A$は閉集合なので, 
  \[
  \begin{aligned}
  \mu(X \setminus A)
  &= \mu(X) - \mu(A) \\
  &< \mu(X) - \mu(U) + \varepsilon \\
  &= \mu(X \setminus U) + \varepsilon \\
  &\leq \sup\{\mu(E) \mid E \subset X \setminus A, E\mbox{は閉集合}\} + \varepsilon
  \end{aligned}
  \]
  となる. よって, $\varepsilon > 0$の任意性によって$X \setminus A \in \mathcal{R}$となる. 次に$\mathcal{R}$の元の列$(A_n)_{n=1}^{\infty}$に対して$\bigcup_{n=1}^{\infty} A_n \in \mathcal{R}$となることを示す. そのために$\varepsilon > 0$を任意にとり, 各$n$に対して, 開集合$U_n$と閉集合$C_n$を$C_n \subset A_n \subset U_n$かつ
  \[
  \mu(U_n) - \mu(A_n) < \frac{\varepsilon}{2^n},~~ \mu(A_n) - \mu(C_n) < \frac{\varepsilon}{2^n}
  \]
  となるようにとる. すると, $U := \bigcup_{n=1}^{\infty} U_n \supset \bigcup_{n=1}^{\infty} A_n =: A$は開集合であり, 
  \[
  \begin{aligned}
  \mu(U) - \mu(A) 
  &= \mu\left((\bigcup U_n) \setminus (\bigcup A_n)\right) \\
  &\leq \mu\left(\bigcup (U_n \setminus A_n)\right) \\
  &\leq \sum \mu(U_n \setminus A_n) \\
  &= \sum (\mu(U_n) - \mu(A_n))  \\
  &\leq \sum \frac{\varepsilon}{2^n} = \varepsilon
  \end{aligned}
  \]
  となる. また, $C = \bigcup C_n$とおくと, 測度の連続性から
  \[
  \mu(\bigcup C_n) = \lim_{N \rightarrow \infty} \mu\left(\bigcup_{n=1}^{N} C_n\right)
  \]
  である. したがって, ある$N$について$D_N = \bigcup_{n=1}^N C_n$とおくと, $\mu(C) - \mu(D_N) < \varepsilon$となる. このとき, $D_N \subset A$は閉集合であり, 
  \[
  \begin{aligned}
  \mu(A) - \mu(D_N) 
  &< \mu(A) - \mu(C) + \varepsilon \\
  &\leq \mu(\bigcup (A_n \setminus C_n)) + \varepsilon \\
  &\leq \sum (\mu(A_n) - \mu(C_n)) + \varepsilon \\
  &\leq \sum \frac{\varepsilon}{2^n} + \varepsilon = 2\varepsilon
  \end{aligned}
  \]
  となる. 以上から$\bigcup A_n \in \mathcal{R}$となる. \\
  さて, $\mathcal{R}$が$\sigma$-加法族であるので, すべての$X$の閉集合が$\mathcal{R}$に含まれることを示せば$\mathcal{B}_X \subset \mathcal{R}$となることがわかる. そこで, $A \subset X$を空でない閉集合とする. このとき, $U_n = \{x \in X \mid \mathrm{dist}(x,A) < 1/n\}$とおけば, $x \mapsto \mathrm{dist}(x,A)$の連続性から$U_n$は開集合であって, $A$は閉集合であるから,
  \[
  \bigcap_{n=1}^{\infty} U_n = \{x \in X \mid d(x,A) = 0 \} = A
  \]
  となる. しかも, $U_1 \supset U_2 \supset \cdots \supset U_n \supset \cdots$であるから, 測度の連続性より$\mu(A) = \lim_{n \rightarrow \infty} \mu(U_n) = \inf_{n} \mu(U_n)$となる. さらに$U_n \supset A$であるから, 
  \[
  \mu(A) 
  \leq \inf\{\mu(U) \mid U \supset A, U\mbox{は開集合}\} 
  \leq \inf_{n} \mu(U_n) = \mu(A)
  \]
  となる. また, $A$は閉集合なので$\mu(A) = \sup \{\mu(C) \mid C \subset A, C\mbox{は閉集合}\}$となる. よって, $A \in \mathcal{R}$である. 
\end{proof}
\begin{cor}\label{FiniteBorelMeasOnRnIsRegular}
  $\R^n$上の有限Borel測度$\mu$は正則である. つまり任意の$A \in \mathcal{B}_{\R}$に対して, 
  \[
  \begin{aligned}
  \mu(A) 
  &= \sup\{\mu(K) \mid K \subset A, C\mbox{はコンパクト}\} \\
  &= \inf\{\mu(U) \mid U \supset A, U\mbox{は開集合} \}
  \end{aligned}
  \]
  が成立する. 
\end{cor}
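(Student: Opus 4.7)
The plan is to deduce the corollary directly from the preceding proposition by upgrading the ``closed set'' approximation to a ``compact set'' approximation, using the fact that $\R^n$ is $\sigma$-compact. Outer regularity with open sets requires no extra work: the proposition applied to the metric space $\R^n$ already gives $\mu(A) = \inf\{\mu(U) \mid U \supset A,~U\mbox{は開集合}\}$ for every $A \in \mathcal{B}_{\R^n}$, so this half of the claim is inherited verbatim.

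For the inner regularity, I first take an arbitrary $A \in \mathcal{B}_{\R^n}$ and an arbitrary $\varepsilon > 0$. By the proposition, there exists a closed set $C \subset A$ with $\mu(A) - \mu(C) < \varepsilon/2$. Next I exhaust $C$ by compact sets: set $K_n := C \cap \overline{B(0,n)}$, where $\overline{B(0,n)}$ is the closed ball of radius $n$. Each $K_n$ is closed and bounded in $\R^n$, hence compact by Heine--Borel, and $K_1 \subset K_2 \subset \cdots$ with $\bigcup_n K_n = C$. By continuity of measure from below, $\mu(K_n) \to \mu(C)$, so for $N$ sufficiently large $\mu(C) - \mu(K_N) < \varepsilon/2$. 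Combining, $K_N \subset A$ is compact and
\[
\mu(A) - \mu(K_N) = (\mu(A) - \mu(C)) + (\mu(C) - \mu(K_N)) < \varepsilon.
\]
Since $\varepsilon > 0$ was arbitrary, this yields $\mu(A) = \sup\{\mu(K) \mid K \subset A,~K\mbox{はコンパクト}\}$, and the other direction of the inequality is trivial from monotonicity.

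There is essentially no obstacle here; the only point worth noting is that the argument uses the $\sigma$-compactness of $\R^n$ (i.e., Heine--Borel applied to closed balls), and the finiteness of $\mu$ which justifies taking differences like $\mu(A) - \mu(C)$ without ambiguity. The same proof strategy works on any $\sigma$-compact metric space, since only the existence of a compact exhaustion of $\R^n$ by closed bounded sets is used.
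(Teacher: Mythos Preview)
Your proof is correct and follows essentially the same route as the paper's: inherit outer regularity from the metric-space proposition, then upgrade inner regularity from closed sets to compact sets by intersecting the approximating closed set with an exhausting sequence of closed balls and applying continuity of measure. The only cosmetic difference is that the paper uses $\varepsilon$ in each of the two steps to arrive at $2\varepsilon$, while you split into two $\varepsilon/2$ contributions.
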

\begin{proof}
  $\R^n$において有界閉集合とコンパクト集合は一致することに注意する. $\R^n$は距離空間であるのですぐ上の命題より
  \[
  \mu(A) = \sup\{\mu(C) \mid C \subset A, C\mbox{は閉集合}\}
  \]
  である. そこで, 任意の$\varepsilon > 0$に対して, 閉集合$C \subset A$で$\mu(A) - \varepsilon < \mu(C)$なるものが取れる. このとき, $0 \in \R^n$中心の半径$n$の閉球を$B(0;n)$と表すと, $C_n = C \cap B(0;n)$は有界閉集合であるのでコンパクトである. そして, $C_1 \subset C_2 \subset \cdots$であり, $C = \bigcup C_n$であるので, 測度の連続性より$\mu(C)= \lim_{n \rightarrow \infty  \mu(C_n)}$となる. したがって, $\mu(C)-\mu(C_N) < \varepsilon$なる$N$が取れる. ゆえに, 
  \[
  \mu(A) < \mu(C) + \varepsilon < \mu(C_N) + 2\varepsilon \leq \sup\{\mu(K) \mid K \subset A, C\mbox{はコンパクト}\} + 2\varepsilon
  \]
  となる. よって, $\varepsilon > 0$の任意性より
  \[
  \mu(A) \leq \sup\{\mu(K) \mid K \subset A, C\mbox{はコンパクト}\}
  \]
  となる. これで示せた. 
\end{proof}

\subsection{$C_0(\R^n)$の$L^p$空間における稠密性}
本節では, $\R^n$上の正則Borel測度$\mu$であって任意のコンパクト集合$K \subset \R^n$に対して$\mu(K) < \infty$となるものについて, $C_0(\R^n)$が$L^p(\R^n,\B_{\R^n},\mu)$において稠密であることを証明する. 

\begin{lem}
  $U \subset \R^n$を開集合とし, $\emptyset \neq K \subset U$をコンパクト集合とする. このとき, 開集合$V \subset \R^n$で$K \subset V \subset \mathrm{cl}(V) \subset U$かつ$\mathrm{cl}(V)$がコンパクトになるものが存在する. 
\end{lem}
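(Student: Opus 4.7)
方針としては, $K$の各点を中心とする開球の族から$K$のコンパクト性により有限被覆を抽出し, その和集合として$V$を構成する予定である. $\R^n$の局所コンパクト性, すなわち有界閉集合がコンパクトになるというハイネ・ボレルの定理が議論の鍵となる.

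まず, 任意の$x \in K$に対して, $x \in U$かつ$U$が開集合であることから, ある$r_x > 0$が存在して開球$B(x, r_x) \subset U$となるように取る. ここで, 半径を半分にした$B(x, r_x/2)$を考えると, その閉包$\mathrm{cl}(B(x, r_x/2))$は$B(x, r_x) \subset U$に含まれる有界閉集合となる. この「半径を半分にする」ひと手間が, 閉包まで$U$に収まるようにするための決定的な工夫である. ハイネ・ボレルの定理より$\mathrm{cl}(B(x, r_x/2))$はコンパクトでもある.

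次に, 族$\{B(x, r_x/2)\}_{x \in K}$は$K$の開被覆をなすので, $K$のコンパクト性より有限個の点$x_1, \ldots, x_m \in K$を取り出して$K \subset \bigcup_{i=1}^m B(x_i, r_{x_i}/2) =: V$となるようにできる. 明らかに$V$は開集合かつ$K \subset V$をみたす. 最後に, 有限個の集合の和集合の閉包は各集合の閉包の和集合に等しいから, $\mathrm{cl}(V) = \bigcup_{i=1}^m \mathrm{cl}(B(x_i, r_{x_i}/2))$となり, これは$U$に含まれるコンパクト集合の有限和として, $U$に含まれるコンパクト集合となることがわかる.

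主な障害は特に存在せず, $\R^n$の基本的な位相的性質(局所コンパクト性, ハイネ・ボレルの定理, 有限和集合の閉包の性質)を組み合わせた標準的な議論のみで完結する見込みである. なお, 別法として連続関数$x \mapsto \mathrm{dist}(x, \R^n \setminus U)$と補題\ref{PropOfDist}を用い, $K$上のこの距離関数の最小値$\delta > 0$を取って$V = \{y \mid \mathrm{dist}(y, K) < \delta/2\}$と定義する距離による構成も可能であるが, $U = \R^n$の場合に別処理が必要になるため, 上記の有限被覆による方法の方が統一的で簡潔であると思われる.
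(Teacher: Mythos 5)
あなたの証明は正しく, 論文の証明と本質的に同じ方法です. すなわち, 各点で半径を縮めた開球($U$に閉包ごと収まるもの)による$K$の開被覆から有限部分被覆を取り, その和集合を$V$とする標準的な構成であり, 論文は「半径$\varepsilon_x$の球を取って$2\varepsilon_x$の球が$U$に入るようにする」, あなたは「半径$r_x/2$の球を取って$r_x$の球が$U$に入るようにする」という表記上の違いしかありません.
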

\begin{proof}
  各$x \in U$に対して, $B(x;2\varepsilon_x) \subset U$なる$\varepsilon_x > 0$をとる. このとき, $K \subset U = \bigcup_{x \in U} B(x;\varepsilon_x)$となる. $K$はコンパクトなのである$x_1,\ldots,x_k$について$K \subset \bigcup_{j=1}^k B(x_j;\varepsilon_{x_j})$となる. そこで, $V = \bigcup_{j=1}^k B(x_j;\varepsilon_{x_j})$とおけば, $V$は開集合で, $K \subset V$かつ, 
  \[
  \mathrm{cl}(V) \subset \bigcup_{j=1}^k \mathrm{cl}(B(x_j;\varepsilon_{x_j})) \subset \bigcup_{j=1}^k B(x_j;2\varepsilon_{x_j}) \subset U
  \]
  である. $\mathrm{cl}(V)$のコンパクト性も上の関係からわかる. 
\end{proof}
\begin{defn}
  次のように定義する. 
  \[
  C_0(\R^n) := \{f \in C(\R^n) \mid \mathrm{supp}(f)\mbox{はコンパクト}\}.
  \]
\end{defn}
\begin{lem}
  $U \subset \R^n$を開集合とし, $\emptyset \neq K \subset U$をコンパクト集合とする. このとき, ある$\varphi \in C_0(\R^n)$が存在して次の$(1),(2),(3)$をみたす. 
  \begin{itemize}
      \item[(1)] $\forall x \in K, ~ \varphi(x) = 1$
      \item[(2)] $0 \leq \varphi \leq1$
      \item[(3)] $\mathrm{supp}\varphi \subset U$
  \end{itemize}
\end{lem}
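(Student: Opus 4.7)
証明の方針は, 直前の補題で得られる「中間の」開集合$V$と距離関数の連続性(補題\ref{PropOfDist})を組み合わせ, 明示的に$\varphi$を構成することである. まず直前の補題により, 開集合$V \subset \R^n$で$K \subset V \subset \mathrm{cl}(V) \subset U$かつ$\mathrm{cl}(V)$がコンパクトとなるものをとる. $\mathrm{cl}(V)$がコンパクトゆえ$V \neq \R^n$であり, よって$\R^n \setminus V \neq \emptyset$となるので,
\[
\varphi(x) := \frac{\mathrm{dist}(x, \R^n \setminus V)}{\mathrm{dist}(x, \R^n \setminus V) + \mathrm{dist}(x, K)} ~~~ (x \in \R^n)
\]
と定義する, というのが構成の要点である. ここで$\mathrm{dist}(x,A) := \inf\{\lvert x - y \rvert \mid y \in A\}$は補題\ref{PropOfDist}で扱われた距離関数である.

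次に, この$\varphi$がwell-definedであること, すなわち分母が常に正であることを確認する必要がある. これは補題\ref{PropOfDist}にある「閉集合$A \subset \R^n$に対して$\mathrm{dist}(x,A) = 0 \Leftrightarrow x \in A$」を使えば, 分母が$0$となる$x$があれば$x \in K \cap (\R^n \setminus V)$となって$K \subset V$に矛盾することから従う. 連続性については, 距離関数の連続性(補題\ref{PropOfDist})と分母の正値性より, $\varphi$は連続関数の商として連続になる.

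あとは条件$(1),(2),(3)$を順に検証する計画である. $(1)$は$x \in K$に対して$\mathrm{dist}(x,K) = 0$となるので$\varphi(x) = 1$から従う. $(2)$は分子と分母がともに非負かつ分子が分母以下であることから従う. $(3)$については$\{x \mid \varphi(x) \neq 0\} \subset V$(これは$x \in \R^n \setminus V$なら$\mathrm{dist}(x,\R^n \setminus V) = 0$より$\varphi(x) = 0$となることからわかる)ゆえ$\mathrm{supp}\varphi \subset \mathrm{cl}(V) \subset U$となり, $\mathrm{cl}(V)$のコンパクト性からその閉部分集合$\mathrm{supp}\varphi$もコンパクトとなる. 以上の議論に本質的な難所は特になく, 最も注意すべきは「$\R^n \setminus V \neq \emptyset$」と「分母の正値性」という二点であるが, いずれも前補題による$V$の取り方($K \subset V$かつ$\mathrm{cl}(V)$コンパクト)に直接的に帰着する.
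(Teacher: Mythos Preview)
Your proposal is correct and follows essentially the same approach as the paper: both take the intermediate open set $V$ from the preceding lemma and define $\varphi$ as the same quotient of distance functions, then verify (1)--(3) in the same way. Your explicit remark that $\R^n \setminus V \neq \emptyset$ (needed for $\mathrm{dist}(\cdot,\R^n\setminus V)$ to be defined) is a detail the paper leaves implicit.
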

\begin{proof}
  $K \subset V \subset \mathrm{cl}(V) \subset U$かつ$\mathrm{cl}(V)$がコンパクトになるような開集合$V \subset \R^n$をとる. このとき, $K \cap (\R^n \setminus V)$であるので, 前節の補題\ref{PropOfDist}より, 任意の$x \in \R^n$に対して, $\mathrm{dist}(x,K) = 0$かつ$\mathrm{dist}(x,\R^n \setminus V) = 0$となることはない. したがって, $\mathrm{dist}(x,K)+\mathrm(x,\R^n \setminus V) > 0$である. このことに注意して, 
  \[
  \varphi(x) = \frac{\mathrm{dist}(x,\R^n \setminus V)}{\mathrm{dist}(x,K)+ \mathrm{dist}(x,\R^n \setminus V)}
  \]
  と定義する. すると, 明らかに$0 \varphi \leq 1$であり, $\mathrm{dist}$の連続性から$\varphi \in C(\R^n)$である. また, $x \in K$に対して, $\mathrm{dist}(x,K) = 0$だから$\varphi(x) = 1$である. そして, $\{x \in \R^n \mid \varphi(x) = 0\} = \R^n \setminus V$であるので, $\{x \in \R^n \mid \varphi(x) \neq 0\} = V$である. よって, $\mathrm{supp}\varphi = \mathrm{cl}(V) \subset U$である. そして, この関係から$\mathrm{supp}\varphi$のコンパクト性がわかるので$\varphi \in C_0(\R^n)$である. 
\end{proof}

\begin{prop}\label{DensityOfC0InLp}
  $1 \leq p < \infty$とし, $\mu$を$(\R^n,\B_{\R^n})$上の正則測度で任意のコンパクト集合$K \subset \R^n$に対して$\mu(K) < \infty$となるものとする. 
  このとき, $L^p(\R^n,\B_{\R^n},\mu)$において$C_0(\R^n)$は$L^p$距離で稠密である. 
  すなわち, 任意の$f \in L^p(\R^n,\B_{\R^n},\mu)$と
  $\varepsilon > 0$に対して, $g \in C_0(\R^n)$が存在して, 
  \[
  d_{\mu}^p(f,g) = \lVert f-g \rVert_{L^p} =\left(\int_{\R^r} |f - g|^p d\mu \right)^{1/p} < \varepsilon
  \]
  となる. 
\end{prop}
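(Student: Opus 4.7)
証明の方針は次の三段階からなる標準的なものである. (1) $L^p$ノルムに関する単関数近似により, 問題を「有限測度のBorel集合の指示関数$\chi_A$を$C_0(\R^n)$の元で$L^p$近似する」ことに帰着させる. (2) $\mu$の正則性と直前のUrysohn型補題を用いて指示関数の近似を実行する. (3) 三角不等式により組み立てる.

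第一段階では, $f \in L^p(\R^n,\B_{\R^n},\mu)$に対して$f = f^{+} - f^{-}$と分解することで$f \geq 0$の場合に帰着させる. 単関数近似定理により$0 \leq \varphi_n \leq f$, $\varphi_n(x) \nearrow f(x)$なる単関数列$\varphi_n$が取れ, $|\varphi_n - f|^p \leq f^p \in L^1$であるから優収束定理より$\lVert \varphi_n - f \rVert_{L^p} \to 0$となる. 各$\varphi_n \in L^p$であることから, $\varphi_n$に現れる各指示関数$\chi_{A_i}$は$\mu(A_i) < \infty$をみたす. 従って, 「$\mu(A) < \infty$なる任意の$A \in \B_{\R^n}$と任意の$\delta > 0$に対して$g \in C_0(\R^n)$で$\lVert \chi_A - g \rVert_{L^p} < \delta$なるものが取れる」ことを示せば十分である.

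第二段階では, $\mu$の正則性から, 任意の$\delta > 0$に対し開集合$U \supset A$とコンパクト集合$K \subset A$で$\mu(U \setminus A) < \delta^p/2$かつ$\mu(A \setminus K) < \delta^p/2$をみたすものが取れる. $K = \emptyset$なら$g = 0$で十分なので$K \neq \emptyset$としてよい. すると直前の補題より, $g \in C_0(\R^n)$で$0 \leq g \leq 1$, $K$上で$g = 1$, $\mathrm{supp}(g) \subset U$となるものが取れる. このとき$|\chi_A - g|$は$K$上および$\R^n \setminus U$上で$0$となり, 残り$U \setminus K$では$1$以下であるから,
\[
\lVert \chi_A - g \rVert_{L^p}^p \leq \mu(U \setminus K) \leq \mu(U \setminus A) + \mu(A \setminus K) < \delta^p
\]
が成り立つ. 最後に三角不等式により, $f$に近い単関数$\sum c_i \chi_{A_i}$の各$\chi_{A_i}$を上記の方法で十分精密に近似した$g_i$を取れば, $\sum c_i g_i \in C_0(\R^n)$が$f$の$L^p$近似を与える.

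特段の障壁は見当たらず, 必要な道具(直前のUrysohn型補題, 正則性の仮定, 単関数近似定理, 優収束定理)はいずれもすでに整っている. 注意すべき点があるとすれば, 単関数近似で現れる各集合が有限測度となること($\varphi_n \in L^p$から直ちに従う)を確認すること, および$\mu$の正則性から得られる開集合・コンパクト集合による挟み込みの測度誤差を$L^p$誤差へ翻訳する評価に$g$の値域制限$0 \leq g \leq 1$が本質的に効いていることを意識することである.
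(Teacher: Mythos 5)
Your proposal is correct and follows essentially the same route as the paper: reduction to $f\geq 0$ and then to indicator functions of finite-measure sets via the simple function approximation theorem and the dominated convergence theorem, followed by the regularity sandwich $K\subset A\subset U$ and the Urysohn-type lemma to approximate $\chi_A$. The only cosmetic differences are the order of the reductions and your slightly sharper pointwise bound $|\chi_A-g|\leq 1$ on $U\setminus K$ (the paper uses $\leq 2$), neither of which changes the substance.
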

\begin{proof}
  まず, $f = \chi_E ~(E \in \B_{\R^r})$の場合を考える. $f^p = f \in L^1$であるので, $\mu(E) < \infty$である. すると, $\mu$の正則性より$\mu(U) < \mu(E)+\varepsilon$なる開集合$U \supset E$と$\mu(E)-\varepsilon < \mu(K)$なるコンパクト集合$\emptyset \neq K \subset E$が取れる. いま, $K \subset U$に対して, すぐ上の補題を適用して$\varphi \in C_0(\R^n)$を取る. このとき, $\mu(K) < \infty$より$\mu(U \setminus K) = \mu(U) - \mu(K) =\mu(U) - \mu(E) + \mu(E) - \mu(K) < 2\varepsilon$であることに注意すると, 
  \[
  \begin{aligned}
  \int_{\R^n} |\chi_E - \varphi|^p d\mu 
  &= \int_{U} |\chi_E - \varphi|^p d\mu ~~~(\because \mathrm{supp}\varphi \subset U, E \subset U) \\
  &= \int_{U \setminus K} |\chi_E - \varphi|^p d\mu ~~~(\because K \subset E, \varphi \equiv 1 ~(\mathrm{on} ~K)) \\
  &\leq \int_{U \setminus K} 2^p d\mu = 2^p \mu(U \setminus K) < 2^{p+1} \varepsilon 
  \end{aligned}
  \]
  となる. よって, この場合は成立する. 
  次に$f \geq 0$の場合を考える. このとき,  単関数近似定理より非負単関数列$\varphi_n$が存在して, 各点$x$で$0 \leq \varphi_1(x) \leq \varphi_2(x) \leq \cdots \leq \varphi_n(x) \rightarrow f(x)$となる. すると, $|f - \varphi_n|^p \leq 2^p f^p \in L^1$であるので, 優収束定理より
  \[
  d_{\mu}^p(f,\varphi_n)^p = \int (f - \varphi_n)^p d\mu \rightarrow 0
  \]
  となる. そこで, ある非負単関数$\varphi = \sum_{j=1}^n c_j \chi_{E_j}$について$d_{\mu}^p(f,\varphi) < \varepsilon$となる. 
  ここで, すぐ上で示したことより, 各$\chi_{E_j}$に対して適当に$g_i \in C_0(\R^r)$を選ぶと, $d_{\mu}^p(\chi_{E_j},g_i) < \varepsilon/n\max_{j}\{|c_j|\}$とできる. ゆえに, $g= \sum_{j=1}^n c_j g_j$とおくと, $g \in C_0(\R^r)$であり, 
  \[
  \lVert \varphi - g \rVert_{L^p} \leq \sum_{j=1}^n |c_j| \lVert \chi_{E_j} - g_j \rVert_{L^p} < \sum_{j=1}^n |c_j| \varepsilon/n\max_{1 \leq j \leq n}\{|c_j|\} \leq \varepsilon
  \]
  となる. よって, $\lVert f - g \rVert_{L^p} \leq \lVert f - \varphi  \rVert_{L^p} + \lVert \varphi - g \rVert_{L^p} < 2\varepsilon$となる. 一般の$f \in L^p$については$f = f^{+} - f^{-}, f^{\pm} \geq 0$と分解することで$f \geq 0$の場合に帰着される. 
\end{proof}

この定理を使うと次のことが示せる. 
\begin{prop}\label{ContinuousOfTranslationAboutLpNorm}
  $1 \leq p < \infty$とし, $\mu$を$(\R^n,\B_{\R^n})$上のLebesgue測度とする. 
  このとき, $f \in L^p(\R^n,\B_{\R^n},\mu)$に対して, 
  \[
  \lVert f(\cdot - h) - f(\cdot) \rVert_{L^p(\mu)} \rightarrow 0 ~~~(h \rightarrow 0).
  \]
\end{prop}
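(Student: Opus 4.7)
私の計画は, 直前の命題\ref{DensityOfC0InLp}による$C_0(\R^n)$の$L^p$における稠密性と, コンパクト台連続関数の一様連続性を組み合わせた標準的な$3\varepsilon$論法によって証明することである.

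まず任意に$\varepsilon > 0$を取り, 命題\ref{DensityOfC0InLp}により$g \in C_0(\R^n)$で$\lVert f - g \rVert_{L^p} < \varepsilon$となるものを選ぶ. ここでLebesgue測度の平行移動不変性が決定的に働き, 変数変換により任意の$h \in \R^n$に対して$\lVert f(\cdot - h) - g(\cdot - h) \rVert_{L^p} = \lVert f - g \rVert_{L^p} < \varepsilon$が成立する. よって三角不等式により
\[
\lVert f(\cdot - h) - f(\cdot) \rVert_{L^p} \leq 2 \varepsilon + \lVert g(\cdot - h) - g(\cdot) \rVert_{L^p}
\]
が得られ, 問題は$g \in C_0(\R^n)$の場合に帰着される.

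$g \in C_0(\R^n)$の場合を処理するため, $K := \mathrm{supp}(g)$とおき, これをわずかに膨らませたコンパクト集合$K' := \{ x \in \R^n \mid \mathrm{dist}(x, K) \leq 1 \}$をとる. $|h| \leq 1$のとき$g(\cdot - h)$の台は$K'$に含まれるので, $g(\cdot-h) - g(\cdot)$の台も$K'$に含まれる. 一方, コンパクト台を持つ連続関数は一様連続であるので, 任意の$\eta > 0$に対してある$\delta \in (0, 1]$が存在して, $|h| < \delta$ならば$\sup_{x \in \R^n} |g(x-h) - g(x)| < \eta$となる. したがって
\[
\lVert g(\cdot - h) - g(\cdot) \rVert_{L^p}^p \leq \eta^p \mu(K')
\]
となり, $\eta$を十分小さく取れば右辺は$\varepsilon^p$以下にできる. これで帰着された部分が示され, 証明が完結する.

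本質的な困難はほぼ無いが, 注意すべき点は$L^p$ノルムの評価を$\sup$ノルムの評価に帰着させる際に, $h$に依存しないコンパクト集合で平行移動後の関数の台を一様に押さえる必要があることである. この細部を除けば, 議論は稠密性と平行移動不変性および一様連続性から直ちに完結する.
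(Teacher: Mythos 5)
あなたの証明は正しく, 本質的に本文の証明と同じ方針(命題\ref{DensityOfC0InLp}による$C_0(\R^n)$の稠密性, Lebesgue測度の平行移動不変性, コンパクト台連続関数の一様連続性を組み合わせた三角不等式の議論)です. 台の測度の押さえ方が, 本文では$(\mathrm{supp}(\varphi)+h)\cup\mathrm{supp}(\varphi)$を用いるのに対しあなたは$h$に依らない膨らませたコンパクト集合$K'$を用いる点だけが異なりますが, これは些細な差です.
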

\begin{proof}
  $\varepsilon > 0$を任意に取る. 定理\ref{DensityOfC0InLp}より$\varphi \in C_0(\R^n)$で$\lVert f - \varphi \rVert_{L^p} < \varepsilon$なるものが取れる. 
  $\varphi$は$\R^n$上一様連続なのである$\delta > 0$があって
  \[
  \forall x,y \in \R^n , |x-y|<\delta \Rightarrow |\varphi(x)-\varphi(y)|<  \frac{\varepsilon}{(2\mu(\mathrm{supp}(\varphi))^{1/p}} 
  \]
  となる. ゆえに, 任意の$0 < h < \delta$に対して, Lebesgue測度の平行移動不変性($\mu(E+h) = \mu(E)$)より, 
  \[
  \begin{aligned}
  \int_{\R^n} |\varphi(x-h)-\varphi(x)|^p d\mu(x) 
  &= \int_{(\mathrm{supp}(\varphi)+h) \cup \mathrm{supp}(\varphi)} |\varphi(x-h)-\varphi(x)|^p d\mu(x) \\
  &\leq \frac{\varepsilon^p}{2\mu(\mathrm{supp}(\varphi))} (\mu((\mathrm{supp}(\varphi)+h)) +  \mu(\mathrm{supp}(\varphi))) \\
  &= \frac{\varepsilon^p}{2\mu(\mathrm{supp}(\varphi))} 2\mu(\mathrm{supp}(\varphi)) = \varepsilon^p
  \end{aligned}
  \]
  となる. ゆえに, 任意の$0 < h < \delta$に対して, Lebesgue測度の平行移動不変性(すぐ下の注意)より
  \[
  \begin{aligned}
  &\lVert f(\cdot - h) - f(\cdot) \rVert_{L^p(\mu)} \\
  &\leq 
  \lVert f(\cdot - h) - \varphi(\cdot - h) \rVert_{L^p(\mu)}
  + \lVert \varphi(\cdot - h) - \varphi(\cdot) \rVert_{L^p(\mu)}
  + \lVert \varphi - f \rVert_{L^p(\mu)} \\
  &\leq 2\lVert \varphi - f \rVert_{L^p(\mu)} + \varepsilon \leq 2\varepsilon
  \end{aligned}
  \]
  となる. 
\end{proof}
\begin{rem}
  上の証明で可測関数$f \geq 0$に対して
  \[
  \int_{\R^n} f(x-h) d\mu(x) = \int f(x) d\mu(x)
  \]
  となることを使った. これを示すには$f$に収束する単調増加単関数列$\varphi_j$を取る(単関数近似定理). $\varphi_j = \sum_{k=1}^{m_j} c_{j,k} \chi_{E_{j,k}}$と表すと
  \[
  \varphi_j(x-h) = \sum_{k=1}^{m_j} c_{j,k} \chi_{E_{j,k}+h}(x)
  \]
  となるので, Lebesgue測度の平行移動不変性より
  \[
  \int_{\R^n} \varphi_j(x-h) d\mu(x) = \sum_{k=1}^{m_j} c_{j,k} \mu(E_{j,k}+h) = \sum_{k=1}^{m_j} c_{j,k} \mu(E_{j,k}) = \int_{\R^n}  \varphi_j(x) d\mu(x)
  \]
  となる. ゆえに単調収束定理より
  \[
  \begin{aligned}
  \int_{\R^n} f(x) d\mu(x) 
  &= \lim_{j \rightarrow \infty} \int_{\R^n} \varphi_j(x) d\mu(x) \\
  &= \lim_{j \rightarrow \infty} \int_{\R^n} \varphi_j(x-h) d\mu(x) \\
  &= \int_{\R^n} f(x-h) d\mu(x).
  \end{aligned}
  \]
\end{rem}

\subsection{Lusinの定理}
本節ではLusinの定理を証明する. そのためにまずEgorovの定理を証明しよう. 
\begin{thm}[Egorovの定理]　\\
  $(X,\mathcal{M},\mu)$を有限測度空間とする. 可測関数列$(f_n:X \rightarrow \R)_{n \in \N}$が可測関数$f:X \rightarrow \R$に概収束する(つまり, $f_n \rightarrow f $ ($\mu$-a.e.))ならば, 任意の$\varepsilon > 0$に対して$E \in \mathcal{M}$が存在して$\mu(E) < \varepsilon$かつ$X \setminus E$上で一様に$f_n \rightarrow f$となる. 
\end{thm}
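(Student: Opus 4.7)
証明の方針としては, 各 $k \in \N$ に対して「ある時刻以降 $f_n$ と $f$ の差が $1/k$ より小さくなる」という事象を考え, その補集合の測度を測度の連続性を使って十分小さくし, それらを $k$ について足し合わせることで例外集合 $E$ を構成する. これが標準的な方法である.

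具体的には, まず概収束性から $N := \{x \in X \mid f_n(x) \not\to f(x)\}$ は $\mu(N) = 0$ をみたす. 次に各 $k, n \in \N$ に対して
\[
A_{k,n} := \bigcup_{m \geq n} \{x \in X \mid |f_m(x) - f(x)| \geq 1/k\}
\]
と定めれば, $f_n, f$ の可測性から $A_{k,n} \in \mathcal{M}$ であり, $n$ について単調減少する. しかも $\bigcap_{n=1}^{\infty} A_{k,n} \subset N$ なのでその測度は $0$ である. $\mu$ は有限測度なので, 測度の連続性(前々節で証明したもの)より $\mu(A_{k,n}) \to 0 ~(n \to \infty)$ となる.

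そこで, 任意の $\varepsilon > 0$ に対して, 各 $k$ ごとに $n_k \in \N$ を $\mu(A_{k,n_k}) < \varepsilon/2^k$ となるように選び,
\[
E := \bigcup_{k=1}^{\infty} A_{k,n_k}
\]
とおけば, 測度の劣加法性より $\mu(E) \leq \sum_{k=1}^{\infty} \varepsilon/2^k = \varepsilon$ となる. 最後に $X \setminus E$ 上で一様収束することを確かめる. 任意の $k \in \N$ に対して, $x \in X \setminus E$ ならば特に $x \notin A_{k,n_k}$ であるから, $A_{k,n_k}$ の定義より任意の $m \geq n_k$ について $|f_m(x) - f(x)| < 1/k$ が成り立つ. この $n_k$ は $x$ に依らないので, これは $X \setminus E$ 上での $f_n \to f$ の一様収束を意味する.

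主な難所は, 測度が有限であることを本質的に使う必要がある点, つまり $\mu(A_{k,n}) \to 0$ の部分の導出であるが, これは前節で証明済みの測度の連続性から直ちに従うので, 実質的に技術的困難は無い. むしろ $A_{k,n}$ をどう定義すれば例外集合の構成がうまくいくかという点が本質であり, $1/k$ による二重階層 (「誤差の閾値」$k$ と「時刻」$n$) を用意して $\varepsilon/2^k$ で足し合わせる, という典型的なテクニックで解決される.
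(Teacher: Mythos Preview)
あなたの証明は正しく, 論文の証明と本質的に同一です: 論文の $F_n(k)$ はあなたの $A_{k,n}$ と全く同じ集合であり, 測度の連続性で $n_k$ を選んで $E = \bigcup_k A_{k,n_k}$ とおく構成も同じです. 細かい違いとしては, 論文は「各点収束の場合」を先に示してから零集合を付け加えて概収束の場合に拡張していますが, あなたは $\bigcap_n A_{k,n} \subset N$ を直接使うことで一段階にまとめており, むしろ簡潔です. なお, あなたの評価は $\mu(E) \leq \sum_{k} \varepsilon/2^k = \varepsilon$ で等号込みになっていますが, 主張は $\mu(E) < \varepsilon$ なので, 論文のように $\varepsilon/2^{k+1}$ を使うか, 最初に $\varepsilon$ を $\varepsilon/2$ に置き換えておけば厳密な不等号が得られます.
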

\begin{proof}
  はじめに$f_n$が$f$に各点収束する場合を考える. $k \in \N$に対して
  \[
  E_n(k) := \{ x \in X \mid |f_n(x)-f(x)| \geq \frac{1}{k} \} ~~(n \in \N)
  \]
  とおき, $F_n(k) := \bigcup_{j=n}^{\infty} E_j(k)$とおくと, $F_n(k) \supset F_{n+1}(k) ~(\forall n)$であり, $f_n \rightarrow f$より$\bigcap_{n=1}^{\infty} F_n(k) = \emptyset$である. 従って, $\mu(X) < \infty$に注意すると, 測度の連続性より$\mu(F_n(k)) \rightarrow \mu(\emptyset) = 0 ~(n \rightarrow \infty)$である. ゆえに, 任意の$\varepsilon > 0$に対して, $k \in \N$ごとに適当な$n_k \in \N$が存在して$\mu(F_{n_k}(k)) < \varepsilon/2^{k+1}$となる. そこで$E = \bigcup_{k=1}^{\infty} F_{n_k}(k)$とおくと, 
  \[
  \mu\left( E \right) = \sum_{k=1}^{\infty} \mu\left( F_{n_k}(k) \right) \leq \sum_{k=1}^{\infty} \frac{\varepsilon}{2^{k+1}} < \varepsilon
  \]
  となる. また, 任意の$\delta > 0$に対して$1/k_0 < \delta$なる$k_0 \in \N$を取ると, 任意の$n \geq n_{k_0}$に対して
  \[
  \forall x \in X \setminus E = \bigcap_{k=1}^{\infty} \bigcap_{j=n_k}^{\infty} \left\{x \in X \mid |f_j(x) - f(x)| < \frac{1}{k} \right\},~ |f_n(x) - f(x) | < \frac{1}{k_0} < \delta
  \]
  となる. よって, $X \setminus E$上で一様に$f_n \rightarrow f$となる. 次に概収束する場合であるが, $\mu(X \setminus F) = 0$なる$F \in \mathcal{M}$上で各点収束するとすれば, 上で示したことから任意の$\varepsilon > 0$に対して$A \in \mathcal{M}$が存在して$\mu(F\cap A) < \varepsilon$かつ$F \setminus (F\cap A)$上で一様収束する. よって, $E = (X \setminus F) \cup (F \cap A)$とおけば$\mu(E) \leq \mu(X \setminus F) + \mu(F \cap A) < \varepsilon$かつ$X \setminus E \subset F \setminus (F \cap A)$上で一様収束する. 
\end{proof}
\begin{thm}[Lusinの定理]\label{LusinTheorem}　\\
  $\mu$を位相空間$X$上の有限正則Borel測度とする. このとき, 任意の可測関数$f:X \rightarrow \R$と$\varepsilon > 0$に対して, コンパクト集合$K \subset X$が存在して, $\mu(K) > \mu(X) - \varepsilon$かつ$f$は$K$上連続となる. 
\end{thm}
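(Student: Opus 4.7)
The plan is a three-stage reduction. First I would establish Lusin's theorem for simple functions using inner regularity of $\mu$; then I would approximate a general measurable $f$ by simple functions and invoke Egorov's theorem to upgrade pointwise convergence to uniform convergence off a set of small measure; finally, intersecting the compact sets produced at each stage yields a single compact $K$ on which $f$ is a uniform limit of continuous functions, hence continuous.

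For a simple function $\varphi = \sum_{j=1}^{n} c_j \chi_{E_j}$ with pairwise disjoint Borel sets $E_j$ covering $X$, inner regularity furnishes compact $K_j \subset E_j$ with $\mu(E_j \setminus K_j) < \varepsilon/n$; setting $K := \bigcup_j K_j$, the Hausdorff property (implicit in the ambient spaces of interest, in particular $\R^r$) makes the disjoint compact pieces pairwise separated, so each $K_j$ is both relatively open and relatively closed in $K$ and $\varphi|_K$ (constant $c_j$ on $K_j$) is continuous, with $\mu(X \setminus K) < \varepsilon$. For a general $f$, I would decompose $f = f^+ - f^-$ and apply the simple function approximation theorem to each nonnegative part to obtain simple functions $\varphi_n \to f$ pointwise on $X$. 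The previous step supplies, for each $n$, a compact $A_n \subset X$ with $\mu(X \setminus A_n) < \varepsilon/2^{n+2}$ and $\varphi_n|_{A_n}$ continuous. Egorov's theorem (applicable since $\mu(X) < \infty$) yields a Borel set $E$ with $\mu(E) < \varepsilon/4$ such that $\varphi_n \to f$ uniformly on $X \setminus E$; inner regularity then produces a compact $A_0 \subset X \setminus E$ with $\mu((X \setminus E) \setminus A_0) < \varepsilon/4$, so $\mu(X \setminus A_0) < \varepsilon/2$.

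Setting $K := \bigcap_{n \geq 0} A_n$, this is a closed subset of the compact set $A_0$ and hence compact; since every $\varphi_n$ is continuous on $K \subset A_n$ and $\varphi_n \to f$ uniformly on $K \subset A_0 \subset X \setminus E$, the uniform-limit theorem gives continuity of $f|_K$. Countable subadditivity yields
\[
\mu(X \setminus K) \leq \mu(X \setminus A_0) + \sum_{n \geq 1} \mu(X \setminus A_n) < \frac{\varepsilon}{2} + \sum_{n \geq 1} \frac{\varepsilon}{2^{n+2}} < \varepsilon,
\]
as required. The main obstacle is the simple-function case: continuity of the restriction to the disjoint union $\bigcup_j K_j$ hinges on the compact pieces being topologically separated, which is not automatic for general topological spaces but rests on the Hausdorff hypothesis implicit in the intended ambient spaces (and which is crucially used to conclude that the intersection $\bigcap_n A_n$, a closed subset of the compact $A_0$, is itself compact).
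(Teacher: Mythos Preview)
Your proposal is correct and follows essentially the same strategy as the paper's proof: establish the result for simple functions via inner regularity (using the Hausdorff separation of disjoint compacts), then pass to general $f$ by combining the simple-function approximation theorem with Egorov's theorem and a countable intersection of compacts. The only organizational differences are that the paper first reduces to $f \ge 0$ and intersects the compacts obtained for $f^+$ and $f^-$ at the very end, and that the paper intersects the $K_n$ \emph{before} invoking Egorov (on the resulting compact), whereas you apply Egorov on all of $X$ first and then intersect; these are interchangeable and your explicit flagging of the implicit Hausdorff hypothesis is a point the paper glosses over.
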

\begin{proof}
  $f \geq 0$として証明すれば十分である. 実際, それが示されれば, $f = f^{+} - f^{-}$と分解するとき, コンパクト集合$K_{+},K_{-} \subset X$が存在して$\mu(X \setminus K_{\pm}) < \varepsilon/2$かつ$f^{\pm}$は$K_{\pm}$上連続となるから, $K = K_{+} \cap K_{-}$が所望のものになる. \\
  さて, $f \geq 0$とする. まず$f$が単関数である場合を考える. この場合, $a_1,\ldots,a_k \geq 0$と互いに素な可測集合$E_1,\ldots,E_k$により$f = \sum_{j=1}^k a_j \chi_{E_j}$と表される. すると,  $\mu$が有限正則測度であることよりコンパクト集合$K_j \subset X$が存在して$\mu(E_j \setminus K_j) < \varepsilon/k$となる.  $K_1,\ldots,K_k$は互いに素で$f$は$K_j$上定数であるから$K = \bigcup_{j=1}^k K_j$が求めるものである. 
  次に一般の$f \geq 0$について考える. 単関数近似定理より$f$に収束する非負値単関数列$(\varphi_n)_{n \in \N}$が取れる. 上で示したことから各$n \in \N$に対してコンパクト集合$K_n \subset X$が存在して$\mu(X \setminus K_n) < \varepsilon/2^{n+1}$かつ$K_n$上$\varphi_n$は連続となる. そこで$K = \bigcap_{n=1}^{\infty} K_n$とおくと
  \[
  \mu(X \setminus K) \leq \sum_{n=1}^{\infty} \mu(X \setminus K_n) < \varepsilon/2
  \]
  となる. Egorovの定理よりコンパクト集合$K_0 \subset K$で$\mu(K \setminus K_0 ) < \varepsilon/2$かつ$K_0$上で一様に$\varphi_n \rightarrow f$となるものが取れる. 従って,  任意の$n$について$\varphi_n$は$K_0$上連続であるので, $f$は$K_0$上連続である. そして, 
  \[
  \mu(X \setminus K_0) \leq \mu(X \setminus K) + \mu(K \setminus K_0) < \varepsilon
  \]
  となるので$K_0$が求めるものである. 
\end{proof}
\begin{cor}
  $\mu$を$\R^r$上の有限Borel測度とする. このとき, 任意の$f \in \M^r$と$\varepsilon > 0$に対して, コンパクト集合$K \subset \R^r$が存在して, $\mu(K) > \mu(\R^r) - \varepsilon$かつ$K$上の関数$f|K$は連続となる. 
\end{cor}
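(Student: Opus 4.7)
私の方針は, 本系を定理\ref{LusinTheorem}(Lusinの定理)の直接の応用として導出することである. 定理\ref{LusinTheorem}は位相空間$X$上の有限\emph{正則}Borel測度$\mu$に対する主張であり, 本系は$X = \R^r$, $\mu$を$\R^r$上の有限Borel測度, $f$を$\R^r$上のBorel可測関数とした特殊ケースに相当する. 両者の設定の差は, 本系では$\mu$の正則性が明示的に仮定されていない点のみである.

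したがって, まず$\mu$の正則性を確認することから始める. これは付録で既に示した系\ref{FiniteBorelMeasOnRnIsRegular}「$\R^n$上の有限Borel測度は正則である」(コンパクト集合による内側近似および開集合による外側近似の意味)によって直ちに保証される. この事実を踏まえた上で, $\R^r$を通常の距離位相を備えたHausdorff位相空間とみなし, 定理\ref{LusinTheorem}を$X = \R^r$, $\mu$, $f \in \M^r$に対して適用すれば, 任意の$\varepsilon > 0$に対してコンパクト集合$K \subset \R^r$が存在して$\mu(K) > \mu(\R^r) - \varepsilon$かつ$f|_K$が連続となる, という所望の結論が直ちに得られる.

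本系の主張は実質的にLusinの定理を$\R^r$上に引き戻したものにすぎず, 技術的障害はほぼ存在しないと見込まれる. あえて強調すべき点があるとすれば, $\R^r$は距離空間であるため, そこでの有限Borel測度は閉集合による内側正則性と有界閉集合がコンパクトであるという事実から自動的にコンパクト集合による正則性を持つ, という既証明事実に本系が本質的に依拠していることだろう. したがって, 本系の証明は実質二行, すなわち「系\ref{FiniteBorelMeasOnRnIsRegular}より$\mu$は正則であるから, 定理\ref{LusinTheorem}を適用すればよい」で済むと予想される.
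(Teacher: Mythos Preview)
Your proposal is correct and coincides exactly with the paper's own proof, which is the single line ``$\R^r$上の有限Borel測度は正則であるから上の定理より従う'' --- i.e., invoke 系\ref{FiniteBorelMeasOnRnIsRegular} to get regularity and then apply 定理\ref{LusinTheorem}. There is no discrepancy in approach or in the key ingredients.
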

\begin{proof}
  $\R^r$上の有限Borel測度は正則であるから上の定理より従う. 
\end{proof}

\subsection{Clarksonの不等式}
本節ではClarksonの不等式について述べる.
参考文献は\cite{Clarkson1936}, \cite{MizuguchiHoshi},  \cite{RealandAbstractAnalysis}である. 
歴史的にはClarksonの不等式は$L^p$空間が一様凸であることを示すために$1936$年のClarkson\cite{Clarkson1936}により初めて証明された. まず, $0<p<\infty$のときの$L^p$空間における基本的な不等式について述べよう. 本節では特に断らなければ$(X,\mathcal{M},\mu)$で測度空間を表す. 
\begin{lem}\label{LemmaForLpInequality}
  \begin{itemize}
      \item[(1)] $1 \leq p < \infty$とすると, 
      \[
      (a+b)^p \leq 2^{p-1}(a^p + b^p) ~~~(\forall a,b \geq 0).
      \]
      \item[(2)] $0 < p < 1$とすると, 
      \[
      (a+b)^p \leq a^p + b^p ~~~(\forall a,b \geq 0).
      \]
      \item[(3)] $1 \leq p < \infty, q = p/(p-1)$とすると, 
      \[
      ab \leq \frac{a^p}{p} + \frac{b^q}{q} ~~~(\forall a,b \geq 0).
      \]
      \item[(4)] $0 < p < 1, q = p/(p-1)$とすると, 
      \[
      \frac{a^p}{p} + \frac{b^q}{q} \leq ab ~~~(\forall a \geq 0, b > 0).
      \]
  \end{itemize}
\end{lem}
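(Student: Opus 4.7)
The plan is to handle the four inequalities separately, noting that (1) and (2) are convexity/concavity statements about $t \mapsto t^p$ and that (3) is classical Young's inequality, from which (4) will follow by a substitution with conjugate exponents.

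For (1), the case $a=b=0$ is trivial, so assume $a+b>0$. Since $1 \leq p < \infty$, the function $t \mapsto t^p$ is convex on $[0,\infty)$, so Jensen's inequality (applied to the two points $a,b$ with equal weights $1/2$) gives $\bigl((a+b)/2\bigr)^p \leq (a^p+b^p)/2$, and multiplying by $2^p$ yields the claim. For (2), again assume $a+b>0$ and divide both sides by $(a+b)^p$; with $x := a/(a+b)$, $y := b/(a+b)$, we have $x,y \in [0,1]$ and $x+y=1$, so it suffices to show $x^p+y^p \geq 1$ when $0<p<1$. Since $0 \leq x \leq 1$ and $p<1$ imply $x^p \geq x$ (and similarly for $y$), we get $x^p+y^p \geq x+y = 1$.

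For (3), the case $a=0$ or $b=0$ is trivial, so assume $a,b>0$. Using the convexity of $e^t$ on $\R$ together with the weights $1/p + 1/q = 1$, we obtain
\[
ab = \exp\!\left(\frac{1}{p}\log a^p + \frac{1}{q}\log b^q\right) \leq \frac{1}{p}\exp(\log a^p) + \frac{1}{q}\exp(\log b^q) = \frac{a^p}{p} + \frac{b^q}{q},
\]
which is exactly Young's inequality. For (4), the idea is to apply (3) to suitably chosen arguments with the conjugate pair $P := 1/p > 1$ and $Q := 1/(1-p) > 1$, which satisfies $1/P+1/Q=1$. Setting $A := (ab)^p$ and $B := b^{-p}$ (valid since $b>0$) so that $A^P = ab$, $B^Q = b^{p/(p-1)} = b^q$, and $AB = a^p$, inequality (3) yields $a^p \leq p\cdot ab + (1-p)\,b^q$. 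Rearranging and using $1/q = (p-1)/p = -(1-p)/p$ gives $a^p/p + b^q/q \leq ab$.

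No step presents a serious obstacle; the only subtlety is the sign flip in (4) coming from $q<0$ when $0<p<1$, which is what forces the requirement $b>0$ (so that $b^q$ is defined) and which reverses the direction of Young's inequality when the substitution is carried out.
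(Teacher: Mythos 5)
Your proof is correct, but it follows a different route from the paper's. The paper proves all four parts by single-variable calculus on ad hoc auxiliary functions: for (1) and (2) it maximizes $f(t)=(1+t)^p/(1+t^p)$ on $[0,1]$ by computing $f'$; for (3) it minimizes $t\mapsto t^p/p+b^q/q-bt$; for (4) it minimizes $t\mapsto t^{1/p}-t/p-1/q$ and substitutes $t=a^p/b^q$. You instead get (1) from Jensen's inequality for the convex map $t\mapsto t^p$, (2) from the elementary bound $x^p\geq x$ on $[0,1]$ after normalizing $x+y=1$, (3) from the convexity of $\exp$ (weighted AM--GM), and (4) by feeding the conjugate pair $P=1/p$, $Q=1/(1-p)$ and the substitution $A=(ab)^p$, $B=b^{-p}$ back into (3); I checked that $A^P=ab$, $B^Q=b^q$, $AB=a^p$, and that the rearrangement using $1/q=-(1-p)/p$ does give $a^p/p+b^q/q\leq ab$, including the boundary case $a=0$. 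Your approach buys conceptual economy: (4) is exposed as a formal consequence of (3) under an exponent inversion rather than requiring a fresh optimization, and (1)--(3) reduce to standard convexity facts; the paper's approach buys self-containedness, since every step is an explicit derivative computation with no appeal to Jensen. One shared blemish, not specific to your argument: for $p=1$ the conjugate $q=p/(p-1)$ in (3) is undefined, and both your proof and the paper's implicitly assume $p>1$ there; the $p=1$ endpoint of (1) is harmless in your Jensen argument since $t\mapsto t$ is convex.
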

\begin{proof}
  $(1)$: $t \in [0,1]$に対して$f(t) = (1+t)^p/(1+t^p)$とおくと, 
  \[
  \begin{aligned}
  f'(t) 
  &= p\frac{(1+t)^{p-1}}{(1+t^p)} - pt^{p-1}\frac{(1+t)^p}{(1+t^p)^2} \\
  &= \frac{p(1+t)^{p-1}}{(1+t^p)^2}((1+t^p) - t^{p-1}(1+t))) \\
  &= \frac{p(1+t)^{p-1}}{(1+t^p)^2}(1-t^{p-1}) \geq 0
  \end{aligned}
  \]
  である. ゆえに$f(1) = 2^{p-1}$が$f$の最大値である. よって, 任意の$t \in [0,1]$に対して
  \[
  (1+t)^p \leq 2^{p-1}(1+t^p)
  \]
  である. これより所望の不等式を得る($a/b$または$b/a$を考えればよい). \\
  $(2)$: 上の$f$について$0<p<1$より$f'(t) \leq 0$となるから$f(0) = 1$が最大値になる. これより従う. 
  \\
  $(3)$ $t \in [0,\infty)$に対して$f(t) = t^{p}/p + b^{q}/q - bt$とおくと, 
  \[
  f'(t) = t^{p-1} - b
  \]
  であるので$t = b^{1/(p-1)}$で$f$は最小値をとる. 
  \[
  f(b^{1/(p-1)}) = \frac{b^{p/(p-1)}}{p} + \frac{b^q}{q} - b^{p/(p-1)} = \frac{b^q}{p} + \frac{b^q}{q} - b^q = 0
  \]
  であるので, 
  \[
  0 = f(b^{1/(p-1)}) \leq f(a) = \frac{a^p}{p} + \frac{b^q}{q} - ab
  \]
  である. よって成立する. 
  \\
  $(4)$: $t \in [0,\infty)$に対して$f(t) = t^{1/p} - t/p - 1/q$とおくと, 
  \[
  f'(t) = \frac{1}{p}t^{1/p -1} - \frac{1}{p} = \frac{1}{p}(t^{1/p-1}-1) 
  \]
  である. ゆえに, $1/p-1 > 0$より$0 \leq t < 1$のとき$f'(t) \leq 0$であり
  , $1<t<\infty$のとき$f'(t) \geq 0$である. そして$f'(1)=0$であるので, 
  \[
  f(1) = 1 - 1/p -1/q = 0
  \]
  が$f$の最小値である($1/p + 1/q = 1$に注意). よって, 
  \[
  0 = f(1) \leq f(a^p/b^q) = \frac{a}{b^{q/p}} - \frac{a^p}{pb^q} - \frac{1}{q}
  \]
  であるから, 両辺に$b^q$をかけると, 
  \[
  0 \leq ab^{q-q/p} - \frac{a^p}{p} - \frac{b^q}{q}
  \]
  となる. $q/p = q-1$であるのでこれで所望の不等式が得られた. 
\end{proof}
\begin{prop}[H\"{o}lderの不等式]\label{HolderInequality}　\\
  $(1)$: $1 \leq p < \infty$とし, $q = p/(p-1)$とする. このとき, 可測関数$f, g: X \rightarrow \R$に対して, 
  \[
  \lVert fg \rVert_1 \leq \lVert f \rVert_p \lVert g \rVert_q. 
  \]
  \\
  $(2)$: $0 < p < 1$とし, $q = p/(p-1)$とする. このとき, $f \in L^p(X,\mathcal{M},\mu), g \in L^q(X,\mathcal{M},\mu)$に対して, 
  \[
  \lVert f \rVert_p \lVert g \rVert_q \leq \lVert fg \rVert_1
  \]
  が成り立つ. ただし, $r < 0$に対して$|0|^r = +\infty$と解釈し, $h \in L^r(X,\mathcal{M},\mu)$であるとは, $h$が可測関数であり, 
  \[
  \lVert h \rVert_r^r := \int_X |h|^{r} d\mu < \infty
  \]
  であることと定義する(このとき特に$\mu$-a.e.で$|h| > 0$であり, $\lVert h \rVert_r > 0$である). 
\end{prop}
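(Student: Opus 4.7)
計画を述べる. まず(1)については, $\lVert f \rVert_p$または$\lVert g \rVert_q$が$0$または$+\infty$となる場合は両辺が自明に同じになるか右辺が$+\infty$となるので, $0 < \lVert f \rVert_p, \lVert g \rVert_q < \infty$の場合に示せば十分である. $p = 1$のとき$q = +\infty$と解釈し, $|fg(x)| \leq |f(x)| \lVert g \rVert_{\infty}$が$\mu$-a.e.で成り立つことを$\mu$について両辺積分することで結論する. $1 < p < \infty$(このとき$1 < q < \infty$)の場合は補題\ref{LemmaForLpInequality} (3)のYoungの不等式$ab \leq a^p/p + b^q/q$を, 各点$x \in X$で$a = |f(x)|/\lVert f \rVert_p$, $b = |g(x)|/\lVert g \rVert_q$と正規化した値に対して適用する. 両辺を$\mu$に関して積分し$1/p + 1/q = 1$を用いれば
\[
\int_X \frac{|fg|}{\lVert f \rVert_p \lVert g \rVert_q} d\mu \leq \frac{1}{p} \int_X \frac{|f|^p}{\lVert f \rVert_p^p} d\mu + \frac{1}{q} \int_X \frac{|g|^q}{\lVert g \rVert_q^q} d\mu = 1
\]
が導かれ, 両辺に$\lVert f \rVert_p \lVert g \rVert_q$を掛ければ求める不等式が得られる.

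(2)の逆向きH\"{o}lder不等式は, (1)で示した標準形のH\"{o}lder不等式を巧妙に適用する方針で示す. $0 < p < 1$より$P := 1/p > 1$とおき, $Q := P/(P-1) = 1/(1-p) > 1$と定めると, $1/P + 1/Q = 1$が成り立つ. また$q = p/(p-1) < 0$に対して$-p/(1-p) = q$および$q(1-p) = -p$という指数の等式が鍵となる. 仮定$g \in L^q$より$\mu$-a.e.で$|g| > 0$であるから, 恒等式$|f|^p = |fg|^p \cdot |g|^{-p}$を$\mu$-a.e.で用い, 二つの因子$|fg|^p$と$|g|^{-p}$に(1)のH\"{o}lder不等式を指数$P, Q$で適用する. すると
\[
\lVert f \rVert_p^p = \int_X |fg|^p \cdot |g|^{-p} d\mu \leq \left( \int_X |fg| d\mu \right)^{p} \left( \int_X |g|^{-p/(1-p)} d\mu \right)^{1-p}
\]
となり, $-p/(1-p) = q$と$q(1-p) = -p$に注意すれば右辺は$\lVert fg \rVert_1^p \cdot (\lVert g \rVert_q^q)^{1-p} = \lVert fg \rVert_1^p \lVert g \rVert_q^{-p}$に整理される. 両辺の$p$乗根を取れば所望の$\lVert f \rVert_p \lVert g \rVert_q \leq \lVert fg \rVert_1$が得られる.

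主な難所は(2)における指数計算と負指数$L^q$ノルムの取り扱いである. 特に$q < 0$のため$\lVert g \rVert_q = \left(\int |g|^q d\mu\right)^{1/q}$は「$\int |g|^q d\mu$が小さくなるほど大きくなる」という通常とは逆の単調性を持つので, 指数の符号を追跡しながら$q(1-p) = -p$を経由して$\lVert g \rVert_q^{-p}$という形に整理する段階で符号の誤りに注意が必要である. また$\lVert fg \rVert_1 = +\infty$の場合は結論が自明に成立するため, 実質的な議論は$\lVert fg \rVert_1 < \infty$の場合に集中することも明確にしておくべきである.
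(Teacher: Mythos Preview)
(1)については本文の証明と本質的に同じです. 正規化してYoungの不等式(補題\ref{LemmaForLpInequality}(3))を各点で適用し積分する, という流れは一致しています.

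(2)については正しいのですが, 本文とは異なる経路をとっています. 本文は補題\ref{LemmaForLpInequality}(4)の逆向きYoungの不等式
\[
\frac{a^p}{p}+\frac{b^q}{q}\leq ab \quad (0<p<1,\ a\geq 0,\ b>0)
\]
を各点で$a=|f(x)|/\lVert f\rVert_p$, $b=|g(x)|/\lVert g\rVert_q$に適用して積分する, という(1)と完全に対称な方法です. 一方あなたは$|f|^p=|fg|^p\cdot|g|^{-p}$と分解し, 共役指数$1/p,\ 1/(1-p)$で(1)の標準H\"{o}lderを適用することで(2)を(1)に帰着させています. どちらも正しく, 本文の方法は補題\ref{LemmaForLpInequality}(4)を活かして(1)との対称性が明快である一方, あなたの方法は逆向きH\"{o}lderが標準H\"{o}lderの帰結であることを示しており, 補題(4)を経由しない点で自己完結的です(その分指数の追跡が煩雑になります). いずれの証明でも$\lVert f\rVert_p=0$や$\lVert fg\rVert_1=+\infty$といった退化ケースは自明なので, あなたが指摘した通り本質部分だけを扱えば十分です.
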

\begin{proof}　\\
  $(1)$: $\lVert f \rVert_p = 0$ならば$f=0 ~(\mathrm{a.e.})$となるので$\lVert f \rVert_p > 0$としてよい. 同様に$\lVert g \rVert_q > 0$としてよい. さらに, このとき$f \in L^p(X,\mathcal{M},\mu), g \in L^q(X,\mathcal{M},\mu)$としてよい. 
  まず$1 < p < \infty$の場合を考える. $x \in X$に対して$a = |f(x)|/\lVert f \rVert_p$,$b = |g(x)|/\lVert g \rVert_q$とおいて, すぐ上の補題の$(3)$を適用すると, 
  \[
  \frac{|f(x)||g(x)|}{\lVert f \rVert_p \lVert g \rVert_q} \leq \frac{|f(x)|^p}{p \lVert f \rVert_p^p} + \frac{|g(x)|^q}{q \lVert g \rVert_q}
  \]
  となる. よって, 両辺を積分して$1/p + 1/q = 1$を使えば所望の不等式が得られる. 
  次に$p=1$の場合を考える. このとき, $\mu$-a.e.$x$に対して$|f(x)g(x)| \leq |f(x)|\lVert g \rVert_{\infty}$が成り立つので両辺を積分すれば所望の不等式を得る. 
  \\
  $(2)$: $\lVert f \rVert_p > 0$としてよい. このとき, $a = |f(x)|/\lVert f \rVert_p, b = |g(x)|/\lVert g \rVert_q$とおき, すぐ上の補題の$(4)$を適用すると, $\mu$-a.e.$x \in X$に対して
  \[
  \frac{|f(x)|^p}{p\lVert f \rVert_p^p} + \frac{|g(x)|^q}{q\lVert g \rVert_q^q} 
  \leq \frac{|f(x)||g(x)|}{\lVert f \rVert_p \lVert g \rVert_q}
  \]
  となる. したがって, 両辺を積分すれば所望の不等式を得る. 
\end{proof}
\begin{prop}[Minkowskiの不等式]　\\
  $0 < p < 1$とする. このとき, $f,g \in L^p(X,\mathcal{M},\mu)$に対して, 
  \[
  \begin{aligned}
  \lVert f + g &\rVert_p^p \leq \lVert |f| + |g| \rVert_p^p \leq \lVert f \rVert_p^p + \lVert g \rVert_p^p, \\
  &\lVert f \rVert_p + \lVert g \rVert_p \leq \lVert |f| + |g| \rVert_p.
  \end{aligned}
  \]
\end{prop}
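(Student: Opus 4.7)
まず前半の不等式について述べる. $|f+g| \leq |f|+|g|$が各点で成り立つことと$t \mapsto t^p ~(t \geq 0)$が単調非減少であることから, 第一の不等式$\lVert f+g \rVert_p^p \leq \lVert |f|+|g| \rVert_p^p$は積分の単調性より直ちに従う. 第二の不等式については, 補題\ref{LemmaForLpInequality}の$(2)$を$a = |f(x)|, b = |g(x)|$に対して適用すれば, 各点で$(|f(x)|+|g(x)|)^p \leq |f(x)|^p + |g(x)|^p$が得られるので, 両辺を$\mu$について積分すれば$\lVert |f|+|g|\rVert_p^p \leq \lVert f \rVert_p^p + \lVert g \rVert_p^p$が従う. 特にこれより$|f|+|g| \in L^p$がわかる.

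次に後半の不等式(逆向きMinkowski不等式)を示す. $h := |f|+|g|$とおく. $\lVert h \rVert_p = 0$ならば$f=g=0 ~(\mu\mathrm{-a.e.})$となり結論は明らかなので$\lVert h \rVert_p > 0$としてよく, さらに$h = 0$となる集合上では$|f| = |g| = 0$ゆえこの集合上での積分はすべて$0$に寄与するので, 最初から$\mu\mathrm{-a.e.}$で$h > 0$と仮定してよい. $q := p/(p-1) < 0$とおくと$(p-1)q = p$であり, $h^{p-1}$について$\int_X (h^{p-1})^q d\mu = \int_X h^p d\mu = \lVert h \rVert_p^p < \infty$となるから, $h^{p-1} \in L^q(X,\mathcal{M},\mu)$であり,
\[
\lVert h^{p-1} \rVert_q = \left( \int_X h^p d\mu \right)^{1/q} = \lVert h \rVert_p^{p/q} = \lVert h \rVert_p^{p-1}
\]
である. そこで, 命題\ref{HolderInequality}の$(2)$(逆向きH\"{o}lderの不等式)を$|f|$と$h^{p-1}$に適用すると,
\[
\int_X h^{p-1} |f| d\mu \geq \lVert f \rVert_p \lVert h^{p-1} \rVert_q = \lVert f \rVert_p \lVert h \rVert_p^{p-1}
\]
が得られる. 同様に$\int_X h^{p-1} |g| d\mu \geq \lVert g \rVert_p \lVert h \rVert_p^{p-1}$が成立する.

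最後にこの二つを加え合わせると,
\[
\lVert h \rVert_p^p = \int_X h^p d\mu = \int_X h^{p-1}(|f|+|g|) d\mu \geq (\lVert f \rVert_p + \lVert g \rVert_p) \lVert h \rVert_p^{p-1}
\]
となる. $\lVert h \rVert_p > 0$であり, $p - 1 < 0$であることに注意して両辺を$\lVert h \rVert_p^{p-1}$で割れば, $\lVert h \rVert_p \geq \lVert f \rVert_p + \lVert g \rVert_p$, すなわち所望の不等式を得る.

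なお, 本証明の主たる工夫は$h^{p-1}$を$L^q$の元($q<0$)と見なして逆向きH\"{o}lder不等式を適用する点であり, 最も注意を要するのは$h$が$0$になり得ることや$h^{p-1} \in L^q$となることの確認といった可測性・可積分性にまつわる技術的処理である.
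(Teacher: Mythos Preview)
Your proof is correct and follows essentially the same approach as the paper: for the first chain of inequalities you use the pointwise bound $(a+b)^p \leq a^p + b^p$ from 補題\ref{LemmaForLpInequality}(2) and integrate, and for the reverse Minkowski inequality you write $h^p = h^{p-1}|f| + h^{p-1}|g|$ with $h = |f|+|g|$, check that $h^{p-1} \in L^q$ with $\lVert h^{p-1} \rVert_q = \lVert h \rVert_p^{p-1}$, apply the reverse H\"{o}lder inequality (命題\ref{HolderInequality}(2)) to each term, and divide through. Your handling of the degenerate set $\{h=0\}$ is in fact a bit cleaner than the paper's brief remark that one may assume $f,g \neq 0$ a.e.
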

\begin{proof}
  補題\ref{LemmaForLpInequality}$(2)$より, 
  \[
  |f(x)+g(x)|^p \leq (|f(x)|+|g(x)|)^p \leq |f(x)|^p + |g(x)|^p
  \]
  であるので積分することで第一の不等式を得る. 次に第二の不等式についてはa.e.$x$で$f(x),g(x) \neq 0$と仮定してよいので, a.e.$x$に対して
  \[
  (|f(x)| + |g(x)|)^p = (|f(x)|+|g(x)|)^{p-1}|f(x)| + (|f(x)|+|g(x)|)^{p-1}|g(x)|
  \]
  であるから, H\"{o}lderの不等式より
  \[
  \begin{aligned}
  \lVert |f| + |g| \rVert_p^p 
  &= \lVert (|f|+|g|)^{p-1}f \rVert_1 + \lVert (|f|+|g|)^{p-1}g \rVert_1 \\
  &\geq \lVert (|f|+|g|)^{p-1} \rVert_{q} (\lVert f \rVert_p + \lVert g \rVert_p) \\
  &= \lVert |f| + |g| \rVert_{p}^{p/q} (\lVert f \rVert_p + \lVert g \rVert_p)
  \end{aligned}
  \]
  となる. なお, $(|f|+|g|)^{p-1} \in L^q(X,\mathcal{M},\mu)$については第一の不等式からわかる. よって, 
  \[
  \lVert |f| + |g| \rVert_p = \lVert |f| + |g| \rVert_p^{p-p/q} \geq \lVert f \rVert_p + \lVert g \rVert_p. 
  \]
\end{proof}
\begin{lem}
  $1 < p \leq 2$とし, $q = p/(p-1)$とすると, 
    \[
    ( 1+x )^q + (1-x)^q  \leq 2 (1 + x^p )^{q-1} ~~~(\forall x \in [0,1]).
    \]
\end{lem}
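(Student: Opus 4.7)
Set $\psi(x) := 2(1+x^p)^{q-1} - (1+x)^q - (1-x)^q$ on $[0,1]$; the claim is $\psi \geq 0$. First I would record the endpoint data. From $1/p + 1/q = 1$ one has the identity $p(q-1) = q$, which gives $\psi(0) = 2 - 2 = 0$ and $\psi(1) = 2 \cdot 2^{q-1} - 2^q = 0$, and a direct differentiation (using the same identity at $x=1$) yields $\psi'(0) = 0$ and $\psi'(1) = 2^{q-1}((q-1)p - q) = 0$. So $\psi$ has a double zero at each endpoint, which is consistent with equality in the asserted inequality holding exactly at $x \in \{0,1\}$; this also means the constant in front of $(1+x^p)^{q-1}$ is sharp and cannot be improved by a multiplicative factor.

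Next I would expand both sides via the generalized binomial theorem on $[0,1]$:
\[
(1+x)^q + (1-x)^q = 2\sum_{k=0}^{\infty} \binom{q}{2k} x^{2k}, \qquad 2(1+x^p)^{q-1} = 2\sum_{m=0}^{\infty} \binom{q-1}{m} x^{pm}.
\]
When $p = 2$ (so $q = 2$) the two series coincide term by term and $\psi \equiv 0$. For $1 < p < 2$, the exponent sets $\{2k\}$ and $\{pm\}$ are generally incommensurable, so no naive coefficient-wise comparison is available; moreover, for non-integer $q$ both $\binom{q}{2k}$ and $\binom{q-1}{m}$ alternate in sign from some index onwards, which rules out a crude "positive series" argument.

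To overcome this, my preferred approach is to reformulate the inequality as the operator-norm bound $\|T\|_{\ell_2^p \to \ell_2^q} \leq 2^{1/q}$ for the Hadamard-type map $T(u,v) := (u+v, u-v)$: applied to $(u,v) = (1,x)$ this gives $\bigl((1+x)^q + (1-x)^q\bigr)^{1/q} \leq 2^{1/q} (1 + x^p)^{1/p}$, which on raising to the $q$-th power and using $q/p = q-1$ is precisely the target. The bound $\|T\| \leq 2^{1/q}$ follows by Riesz--Thorin interpolation between the two trivial endpoints $\|T\|_{2 \to 2} = \sqrt 2$ (since $T/\sqrt 2$ is orthogonal) and $\|T\|_{1 \to \infty} = 1$, with interpolation parameter $\theta = 2/q$ matching the pair $(p,q)$ conjugate via $1 < p \leq 2$.

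The main obstacle is really the elementary sharpness of the constant $2$: the wider estimate $\|T\|_{\ell_2^p \to \ell_2^q} \leq 2^{(q-1)/q}$ is easy (Hölder plus triangle), but the sharp $2^{1/q}$ requires either interpolation as above or a much more delicate direct argument. If one insists on avoiding Riesz--Thorin (which has not been developed elsewhere in the paper), the hands-on alternative is to substitute $y = x^{1/N}$ (with $N$ the denominator of $p$ when $p$ is rational) so that $\psi$ becomes a polynomial in $y$, factor out the obligatory $y^{\alpha}(y^{\beta}-1)^2$ coming from the double zeros at the endpoints, and show that the remaining quotient is nonnegative on $[0,1]$; the general case follows by continuity in $p$. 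In small test cases ($p = 3/2, 4/3, 6/5$) this produces clean factorizations such as $\psi(x) = 6 x^{4/3}(x^{2/3} - 1)^2$ for $p = 4/3$, but the uniform sign control of the residual quotient for general $p$ is the delicate step.
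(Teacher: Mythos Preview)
Your Riesz--Thorin route is correct and is a genuinely different argument from the paper's. The paper does not interpolate: for fixed $x\in(0,1)$ it introduces the auxiliary function
\[
f_x(s)=(1+s^{1-q}x)(1+sx)^{q-1}+(1-s^{1-q}x)(1-sx)^{q-1}
\]
on $(0,1]$, observes that $f_x(1)=(1+x)^q+(1-x)^q$ while $f_x(x^{p-1})=2(1+x^p)^{q-1}$ (using $(p-1)(q-1)=1$), and then checks by a one-line differentiation that
\[
f_x'(s)=(q-1)\,x\,(1-s^{-q})\bigl[(1+sx)^{q-2}-(1-sx)^{q-2}\bigr]\le 0
\]
since $q\ge 2$. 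As $0<x^{p-1}<1$, monotonicity gives the inequality at once. This is fully elementary and self-contained, which fits the paper since Riesz--Thorin is never set up. Your approach is conceptually cleaner and immediately explains why the constant $2$ is sharp, but it imports complex interpolation (plus the tacit real-restriction step) as a black box.

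Two small points. First, your interpolation parameter is off: with endpoints $(p_0,q_0)=(2,2)$ and $(p_1,q_1)=(1,\infty)$ one has $1/q=(1-\theta)/2$, so $\theta=1-2/q$, not $2/q$; the resulting bound $2^{(1-\theta)/2}=2^{1/q}$ is unaffected. Second, your fallback ``hands-on'' factorisation is not a proof as it stands: the clean identity $\psi(x)=6x^{4/3}(1-x^{2/3})^2$ for $p=4/3$ does not generalise, and controlling the sign of the residual quotient uniformly in $p$ is precisely the difficulty that the paper's auxiliary-function trick circumvents.
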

\begin{proof}
  任意に$0 < x < 1$を取り, 
  \[
  f_x(s) := (1+s^{1-q}x)(1+sx)^{q-1} + (1-s^{1-q} x)(1-sx)^{q-1} ~~~(s \in [0,1])
  \]
  とおく. $1< p \leq 2$より$0 < x^{p-1} < 1$であり, $(q-1)(p-1) = 1$より
  \[
  f_x(x^{p-1}) = (1+x^{0})(1+x^p)^{q-1} + (1-x^0)(1-x^p)^{q-1} = 2(1+x^p)^{q-1}
  \]
  である. また, $f_x(1) = (1+x)^q + (1-x)^q$である. よって, $s \in (0,1)$に対して$f_x'(s) \leq 0$が成り立つことを示せば, 
  \[
  (1+x)^q + (1-x)^q = f_x(1) \leq f_x(x^{p-1}) = 2(1+x^p)^{q-1}
  \]
  がわかる. さて, $s \in (0,1)$に対して, 
  \[
  \begin{aligned}
  f_x'(s) 
  &= (1-q)s^{-q}x(1+sx)^{q-1} + (q-1)x(1+s^{1-q}x)(1+sx)^{q-2} \\
  &~~~ -(1-q)s^{-q}x(1-sx)^{q-1} - (q-1)x(1-s^{1-q}x)(1-sx)^{q-2} \\
  &= (q-1)x \left[
  \begin{aligned}
  &(1+sx)^{q-2}( -s^{-q}(1+sx) +(1+s^{1-q}x) ) \\
  &+ (1-sx)^{q-2} (s^{-q}(1-sx) - (1-s^{1-q}x))
  \end{aligned}
  \right] \\
  &= (q-1)x(1-s^{-q})[(1+sx)^{q-2} - (1-sx)^{q-2}]
  \end{aligned}
  \]
  である. $1 < p \leq 2$より$2 \leq q < \infty$であるので, $s \in (0,1)$に対して, $1-s^{-q} < 0$かつ$(1+sx)^{q-2} - (1-sx)^{q-2} > 0$である. よって, $f_x'(s) \leq 0$である. 
\end{proof}
\begin{lem}\label{ClarksonLemma}
  以下の不等式が成り立つ. 
  \begin{itemize}
      \item[(1)] $1 < p \leq 2$とし, $q = p/(p-1)$とすると, 
      \[
       | z+w |^q + |z-w|^q  \leq 2 (|z|^p + |w|^p )^{q-1} ~~~(\forall z,w \in \mathbb{C}).
      \]
      \item[(2)] $2 \leq p < \infty$とし, $q = p/(p-1)$とすると, 
      \[
      |z+w|^q + |z-w|^q \geq 2(|z|^p + |w|^p)^{q-1} ~~~(\forall z,w \in \mathbb{C}).
      \]
  \end{itemize}
\end{lem}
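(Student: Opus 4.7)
The plan is to prove both parts by a common two-step reduction: first reduce the complex inequality to its real non-negative case by a convexity/concavity argument, then reduce the real case to a one-variable inequality by homogeneity, which in case (1) is exactly the preceding lemma and in case (2) is its reverse.

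For the first step, I will set $a = |z|$, $b = |w|$, and $c = 2\,\mathrm{Re}(z\bar w)$, noting $|c| \leq 2ab$ and $|z \pm w|^2 = (a^2+b^2) \pm c$. Then
\[
|z+w|^q + |z-w|^q = F(c) := ((a^2+b^2)+c)^{q/2} + ((a^2+b^2)-c)^{q/2}.
\]
$F$ is even, and its second derivative has the sign of $q/2 - 1$, so $F$ is convex for $q \geq 2$ (case (1)) and concave for $1 < q \leq 2$ (case (2)). Hence its extremum on the symmetric interval $[-2ab, 2ab]$, the maximum in case (1) and the minimum in case (2), is attained at the endpoints $\pm 2ab$, where $F(\pm 2ab) = (a+b)^q + |a-b|^q$. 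This reduces each part to its real non-negative analogue.

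Next, assuming WLOG $a \geq b \geq 0$ with $a > 0$ and setting $x := b/a \in [0,1]$, the key identity $p(q-1) = q$ gives
\[
(a+b)^q + (a-b)^q = a^q\bigl((1+x)^q + (1-x)^q\bigr), \qquad 2(a^p+b^p)^{q-1} = 2a^q (1+x^p)^{q-1}.
\]
So (1) reduces exactly to the preceding lemma and is done, while (2) reduces to the reverse inequality $(1+x)^q + (1-x)^q \geq 2(1+x^p)^{q-1}$ for $p \geq 2$, $1 < q \leq 2$, and $x \in [0,1]$.

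The main obstacle is this last reverse one-variable inequality, which the preceding lemma does not cover. I plan to adapt the preceding lemma's auxiliary-function argument: with the same $f_x(s) = (1+s^{1-q}x)(1+sx)^{q-1} + (1-s^{1-q}x)(1-sx)^{q-1}$, one still has $f_x(1) = (1+x)^q + (1-x)^q$, and a short computation using $(p-1)(1-q) = -1$ shows $f_x(x^{p-1}) = 2(1+x^p)^{q-1}$, with $x^{p-1} \in (0,1]$ since $p-1 \geq 1$. The derivative factorization $f_x'(s) = (q-1)x(1-s^{-q})\bigl[(1+sx)^{q-2} - (1-sx)^{q-2}\bigr]$ from the preceding proof still holds, but now $1 < q < 2$ makes the bracket \emph{negative} on $(0,1)$ (because $q - 2 < 0$ reverses the inequality between $(1+sx)^{q-2}$ and $(1-sx)^{q-2}$), while $1 - s^{-q} < 0$ as before; hence $f_x'(s) \geq 0$, $f_x$ is increasing on $(0,1)$, and $f_x(x^{p-1}) \leq f_x(1)$ yields the desired reverse inequality.
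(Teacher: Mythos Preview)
Your proof is correct. For part~(1) your reduction from complex to real is essentially the paper's argument in different clothing: the paper writes $z/w = r e^{i\theta}$ and shows directly that $f(\theta) = (1+2r\cos\theta+r^2)^{q/2} + (1-2r\cos\theta+r^2)^{q/2}$ is maximized at $\theta = 0$ via an explicit derivative; your $F(c)$ is the same function in the variable $c = 2r\cos\theta$, and your convexity observation is a cleaner packaging of the same monotonicity.

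For part~(2), however, the routes genuinely diverge. The paper does \emph{not} reprove a reverse one-variable inequality. Instead it exploits duality: since $2 \le p < \infty$ forces $1 < q \le 2$, part~(1) applied with the roles of $p$ and $q$ swapped gives $|z+w|^p + |z-w|^p \le 2(|z|^q + |w|^q)^{p-1}$; substituting $z = (u+v)/2$, $w = (u-v)/2$ and raising to the $(q-1)$th power then yields~(2) directly. Your approach, by contrast, reruns the auxiliary-function argument of the preceding lemma and observes that for $1 < q < 2$ the sign of the bracket $(1+sx)^{q-2} - (1-sx)^{q-2}$ flips, reversing the monotonicity of $f_x$. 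The paper's duality trick is shorter and shows that~(2) is not an independent fact but a formal consequence of~(1); your argument is more uniform across both parts and makes the mechanism (sign of $q-2$) completely transparent. Both are clean.
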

\begin{proof}
  $(1)$: $z,w=0$のときは成り立つので$z,w \neq 0$とする. $0 < |z| \leq |w|$としてよい. このとき, $z/w = r \exp(i\theta) , 0 < r \leq 1, 0 \leq \theta \leq 2\pi$と表せる. ゆえに, $q/(q-1) = p$に注意すると, 所望の不等式を得るためには任意の$\theta \in [0,2\pi]$に対して
  \[
  | 1 + r\exp(i\theta) |^q + |-1 + r\exp(i\theta) |^q
  \leq 2(r^p + 1)^{q-1}
  \]
  が成り立つことを示せば十分である. $\theta = 0$の場合はすぐ上の補題から成り立つ. そのほかの$\theta$について見るために, 
  \[
  \begin{aligned}
  f(\theta) 
  :&= | 1 + r\exp(i\theta) |^q + |-1 + r\exp(i\theta) |^q \\
  &= ((1+r\cos(\theta))^2 + r^2\sin^2(\theta))^{q/2} + ((-1+r\cos(\theta))^2 + r^2\sin^2(\theta))^{q/2} \\
  &= (1+2r\cos(\theta) + r^2)^{q/2} + (1-2r\cos(\theta) + r^2)^{q/2}
  \end{aligned}
  \]
  とおく. $f(0)$が$f$の最大値であることを示せば証明は完了する. 対称性から$\theta \in [0,\pi/2]$の場合を示せば十分である. $\theta \in [0,\pi/2]$に対して
  \[
  \begin{aligned}
  f'(\theta) 
  &= \frac{q}{2}(-2r \sin(\theta))(1+2r\cos(\theta) + r^2)^{\frac{q}{2}-1} \\
  &~~~ + \frac{q}{2}(2r\sin(\theta))(1-2r\cos(\theta) + r^2)^{\frac{q}{2}-1} \\
  &= qr\sin(\theta)\left( (1-2r\cos(\theta) + r^2)^{\frac{q}{2}-1} - (1+2r\cos(\theta) + r^2)^{\frac{q}{2}-1} \right)
  \end{aligned}
  \]
  である. さらに, $1 < p \leq 2$より$q \geq 2$であるから$\cos(\theta) \geq 0$より
  \[
  (1-2r\cos(\theta) + r^2)^{\frac{q}{2}-1} - (1+2r\cos(\theta) + r^2)^{\frac{q}{2}-1} \leq 0
  \]
  である. よって, $f'(\theta) \leq 0$である. これより$f$は$\theta = 0$で最大値をとる. \\
  $(2)$: $1 < q \leq 2$かつ$p=q/(q-1)$である. したがって, $(1)$より, 
  \[
  | z+w |^p + |z-w|^p  \leq 2 (|z|^q + |w|^q )^{p-1} ~~~(\forall z,w \in \mathbb{C}).
  \]
  ゆえに, $q(p-1) = p$に注意すると, 任意の$u,v \in \mathbb{C}$に対して$z=(u+v)/2, w=(u-v)/2$とおくことで, 
  \[
  |u|^p + |v|^p \leq 2 \frac{1}{2^{p}} (|u+v|^q+|u-v|^q)^{p-1} = 2^{1-p}(|u+v|^q+|u-v|^q)^{p-1}
  \]
  となる. $p-1=1/(q-1)$であるから, 
  \[
  2^{1/(q-1)} (|u|^p + |v|^p) \leq (|u+v|^q+|u-v|^q)^{1/(q-1)}
  \]
  となる. よって, 両辺を$q-1$乗することで所望の不等式を得る. 
\end{proof}
\begin{thm}[Clarksonの不等式1]\label{ClarksonInequality1}　\\
  $(X,\mathcal{M},\mu)$を測度空間とする. $p \in (1,\infty), q = p/(p-1)$とする. このとき, 任意の$f,g \in L^p(X,\mathcal{M},\mu)$に対して, 
  \begin{itemize}
      \item[(1)] $1 < p \leq 2$ならば, 
      \[
      \left\lVert f+g \right\rVert_{p}^q + \left\lVert f-g \right\rVert_{p}^q 
      \leq 2 \left( \left\lVert f \right\rVert_{p}^p + \left\lVert g \right\rVert_{p}^p \right)^{q-1}.
      \]
      \item[(2)] $2 \leq p < \infty$ならば, 
      \[
      \left\lVert f+g \right\rVert_{p}^q + \left\lVert f-g \right\rVert_{p}^q 
      \geq 2 \left( \left\lVert f \right\rVert_{p}^p + \left\lVert g \right\rVert_{p}^p \right)^{q-1}.
      \]
  \end{itemize}
\end{thm}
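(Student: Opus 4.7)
プランは, 前補題\ref{ClarksonLemma}の各点不等式を, 2点集合上の Minkowski 型積分不等式を通じて積分版に持ち上げることにある. まず共役関係$1/p + 1/q = 1$から$q/p = q-1$となることに注意する. 場合$(1)$は$1 < p \leq 2 \leq q$より$q/p \geq 1$に, 場合$(2)$は$p \geq 2 \geq q > 1$より$q/p \leq 1$に対応しており, この値の大小で用いる積分不等式の向きが自然に定まる.

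場合$(1)$では次の手順で証明する. 補題\ref{ClarksonLemma}$(1)$を各点$x \in X$に適用して$|f+g|^q + |f-g|^q \leq 2(|f|^p + |g|^p)^{q/p}$を得, 両辺の$p/q$乗(これは正の指数で単調増加)を取って積分すれば
\[
\int_X (|f+g|^q + |f-g|^q)^{p/q}\, d\mu \leq 2^{p/q}(\lVert f \rVert_p^p + \lVert g \rVert_p^p) \qquad (\ast)
\]
が得られる. 次に$\phi_0 := |f+g|^p$, $\phi_1 := |f-g|^p$とおき, 2点集合上の$\ell^r$(ただし$r := q/p \geq 1$)のノルムに関する Minkowski 積分不等式
\[
\bigl((\textstyle\int \phi_0\, d\mu)^r + (\int \phi_1\, d\mu)^r\bigr)^{1/r} \leq \int_X (\phi_0^r + \phi_1^r)^{1/r}\, d\mu \qquad (\dag)
\]
を適用する. $(\dag)$の左辺は$(\lVert f+g \rVert_p^q + \lVert f-g \rVert_p^q)^{p/q}$に等しく, 右辺は$(\ast)$により$2^{p/q}(\lVert f \rVert_p^p + \lVert g \rVert_p^p)$で上から抑えられる. 両辺を$q/p$乗して$(2^{p/q})^{q/p} = 2$を用いれば所望の不等式$\lVert f+g \rVert_p^q + \lVert f-g \rVert_p^q \leq 2(\lVert f \rVert_p^p + \lVert g \rVert_p^p)^{q-1}$に到達する.

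場合$(2)$は鏡像的に処理する. 補題\ref{ClarksonLemma}$(2)$では各点不等式の向きが反転し, また$r = q/p \leq 1$のため$\ell^r$はノルム空間を成さないが, 非負関数に対しては$(\dag)$の逆向きの不等式(逆 Minkowski 型積分不等式)が成り立つ. これは逆 Hölder 不等式から前節の議論と平行な仕方で標準的に導かれる. 結果として各点不等式と積分側の不等式の向きが共に反転するため, 場合$(1)$と全く同じ論法で$\geq$型の所望の不等式が得られる.

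最大の障害は$(\dag)$および$r \leq 1$に対するその逆形をきちんと正当化することである. しかし完全一般の積分版 Minkowski 不等式は必要なく, 2点計数測度の場合に限れば$\ell^r$(あるいは$r \leq 1$では非負ベクトルに対する逆向き)の三角不等式に帰着され, 命題\ref{HolderInequality}の Hölder 不等式を用いた双対性論法で直接示せるため, 本質的な困難はない.
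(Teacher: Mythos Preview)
Your argument is correct and coincides with the paper's proof in substance; only the packaging differs. Where you state the key inequality $(\dag)$ as the Minkowski integral inequality over a two-point counting measure with exponent $r=q/p$, the paper writes the same estimate as the (reverse) Minkowski inequality for the $L^{p-1}$ quasi-norm applied to $|f+g|^q$ and $|f-g|^q$: since $q(p-1)=p$ and $p-1=1/r$, the two formulations are literally identical after a change of notation. In particular the paper's ``reverse Minkowski for exponent $p-1<1$'' in case~(1) is exactly your forward $(\dag)$ with $r\geq1$, and the paper's ordinary triangle inequality in $L^{p-1}$ for case~(2) is exactly your reverse $(\dag)$ with $r\leq1$.
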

\begin{rem}
  上の定理で$p=2$の場合は内積空間の中線定理に他ならない. これらの不等式は$L^p$空間がHilbert空間に近い性質を持つことを含意する\cite{Takahashi2004}. 
\end{rem}
\begin{proof}
  $(1)$: $p < 2$としてよい. $q(p-1)=p$に注意する. $0 < p-1 < 1$であるから Minkowskiの不等式より, 
  \[
  \begin{aligned}
  \lVert |f+g|^q + |f-g|^q \rVert_{p-1}
  &\geq \lVert |f+g|^q \rVert_{p-1} + \lVert |f-g|^q \rVert_{p-1} \\
  &=\lVert f + g \rVert_p^{p/(p-1)} + \lVert f - g \rVert_p^{p/(p-1)} \\
  &=\lVert f + g \rVert_p^{q} + \lVert f + g \rVert_p^{q}
  \end{aligned}
  \]
  である. 一方, $(q-1)(p-1)=1$および補題\ref{ClarksonLemma}$(1)$より, 
  \[
  \begin{aligned}
  \lVert |f+g|^q + |f-g|^q \rVert_{p-1}
  &\leq \lVert 2(|f|^p + |g|^p)^{q-1} \rVert_{p-1} \\
  &= 2 \left( \int_X (|f|^p + |g|^p)^{(q-1)(p-1)} d\mu \right)^{1/(p-1)} \\
  &= 2 \left( \lVert f \rVert_p^p + \lVert g \rVert_p^p \right)^{q-1}
  \end{aligned}
  \]
  である. よって, 所望の不等式を得る. \\
  $(2)$: $q(p-1)=p$に注意する. $p-1 \geq 1$であるから 三角不等式より, 
  \[
  \begin{aligned}
  \lVert |f+g|^q + |f-g|^q \rVert_{p-1}
  &\leq \lVert |f+g|^q \rVert_{p-1} + \lVert |f-g|^q \rVert_{p-1} \\
  &=\lVert f + g \rVert_p^{p/(p-1)} + \lVert f - g \rVert_p^{p/(p-1)} \\
  &=\lVert f + g \rVert_p^{q} + \lVert f + g \rVert_p^{q}
  \end{aligned}
  \]
  である. 一方, $(q-1)(p-1)=1$および補題\ref{ClarksonLemma}$(2)$より, 
  \[
  \begin{aligned}
  \lVert |f+g|^q + |f-g|^q \rVert_{p-1}
  &\geq \lVert 2(|f|^p + |g|^p)^{q-1} \rVert_{p-1} \\
  &= 2 \left( \int_X (|f|^p + |g|^p)^{(q-1)(p-1)} d\mu \right)^{1/(p-1)} \\
  &= 2 \left( \lVert f \rVert_p^p + \lVert g \rVert_p^p \right)^{q-1}
  \end{aligned}
  \]
  である. よって, 所望の不等式を得る. 
\end{proof}
次にClarksonにより示されたもうひとつの不等式を示していく.
\begin{lem}\label{ClarksonLemma2}
  \begin{itemize}
      \item[(1)] $1 < p \leq 2$とすると, 
      \[
       x^p + y^p \geq (x^2 + y^2)^{p/2} ~~~(\forall x,y \geq 0).
      \]
      \item[(2)] $2 \leq p < \infty$とすると, 
      \[
       x^p + y^p \leq (x^2 + y^2)^{p/2} ~~~(\forall x,y \geq 0).
      \]
  \end{itemize}
\end{lem}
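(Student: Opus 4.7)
方針としては, 両不等式とも両辺が$(x,y)$について$p$次の正斉次関数であることに注目し, 単純な正規化によって議論を$1$次元に帰着させる. $(x,y) = (0,0)$の場合は両辺$0$で自明なので, 以下$(x,y) \neq (0,0)$とする. このとき, $u := x/\sqrt{x^2+y^2}, ~v := y/\sqrt{x^2+y^2}$とおくと$u,v \in [0,1]$かつ$u^2+v^2 = 1$であるから, 両辺を$(x^2+y^2)^{p/2}$で割ることで, $(1)$は$u^p+v^p \geq 1$, $(2)$は$u^p+v^p \leq 1$を示す問題に帰着する.

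次に鍵となる観察として, $t \in [0,1]$に対して写像$s \mapsto t^s$は$s$について単調非増加である(これは$t \leq 1$による). 従って$(1)$の$1 < p \leq 2$の場合は$p \leq 2$より$u^p \geq u^2$かつ$v^p \geq v^2$が得られ, 加えると$u^p + v^p \geq u^2 + v^2 = 1$となる. 同様に$(2)$の$2 \leq p < \infty$の場合は$p \geq 2$より$u^p \leq u^2$, $v^p \leq v^2$が得られ, $u^p + v^p \leq u^2 + v^2 = 1$となる. これで両不等式の証明が完了する.

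この補題はきわめて初等的な$[0,1]$上の冪関数の単調性のみに依存しており, 主な障壁と呼べるような箇所はない. 正規化で$u^2+v^2=1$とする部分も, $x$または$y$が$0$の場合も含めて一様に扱えるため場合分けは不要である. 本補題は次に続くClarksonの第二不等式(概ね, $L^p$空間における$\lVert f+g \rVert_p^p + \lVert f-g \rVert_p^p$を$\lVert f \rVert_p^p + \lVert g \rVert_p^p$の定数倍で上下から評価する不等式)の導出で, 積分する前に各点で成立する冪関数の不等式を与える役割を果たすものと見込まれる.
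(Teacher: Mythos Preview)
あなたの証明は正しく, 論文の証明とは別のルートをとっている. 論文では $f(t) = (t^p+1) - (1+t^2)^{p/2}$ を $[0,1]$ 上で考え, $f'(t) = pt^{p-1} - pt(1+t^2)^{p/2-1}$ の符号を $s \mapsto s^{p/2-1}$ の単調性から評価して $f(0)=0$ と合わせて結論する, という微分計算に基づく議論である（つまり一方の変数で割って $1$ 変数化してから微分法）. 一方あなたは $(x^2+y^2)^{p/2}$ で割って単位円 $u^2+v^2=1$ 上に正規化し, $t \in [0,1]$ に対する $s \mapsto t^s$ の単調非増加性という初等的な事実だけで $u^p \gtrless u^2$, $v^p \gtrless v^2$ を導いて足し合わせている. あなたの方法は微分を一切使わず一行で済む点でより簡潔であり, 等号成立（$p=2$ または $xy=0$）も見やすい. 論文の方法は定型的な一変数解析で汎用性はあるが計算量はやや多い. どちらの正規化を選ぶかの違いに過ぎず, いずれも完全な証明である.
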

\begin{proof}
  $(1)$: 関数$f:[0,1] \ni t \mapsto (t^p+1)-(1+t^2)^{p/2} \in \R$が非負であることを示せばよいが, $1 < p \leq 2$より$s \mapsto s^{p/2 -1}$は単調減少であるので, 
  \[
  f'(t) = pt^{p-1} - \frac{p}{2}(2t)(1+t^2)^{p/2 -1} \geq pt^{p-1} - ptt^{p-1} = 0
  \]
  となる. よって, $f(0) = 0$と合わせて$f \geq 0$である. \\
  $(2)$: $(1)$の証明中の$f$について, $2 \leq p < \infty$より$s \mapsto s^{p/2 -1}$が単調増加であることから, 
  \[
  f'(t) = pt^{p-1} - \frac{p}{2}(2t)(1+t^2)^{p/2 -1} \leq pt^{p-1} - ptt^{p-1} = 0
  \]
  となるので$f \leq 0$である. これよりわかる. 
\end{proof}
\begin{lem}
  以下の不等式が成り立つ. 
  \begin{itemize}
      \item[(1)] $1 < p \leq 2$とすると, 
      \[
       2^{p-1}(|z|^p + |w|^p) \leq | z+w |^p + |z-w|^p  \leq 2 (|z|^p + |w|^p ) ~~~(\forall z,w \in \mathbb{C}).
      \]
      \item[(2)] $2 \leq p < \infty$とすると, 
      \[
       2^{p-1}(|z|^p + |w|^p) \geq | z+w |^p + |z-w|^p  \geq 2 (|z|^p + |w|^p ) ~~~(\forall z,w \in \mathbb{C}).
      \]
  \end{itemize}
\end{lem}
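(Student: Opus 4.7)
The plan is to reduce all four inequalities to the parallelogram identity
\[
|z+w|^2 + |z-w|^2 = 2(|z|^2 + |w|^2),
\]
combined with two quantitative comparisons between $x^p + y^p$ and $(x^2 + y^2)^{p/2}$ for $x, y \geq 0$. Lemma \ref{ClarksonLemma2} already supplies one direction of this comparison in each regime; what I need in addition is the reverse direction with an explicit constant, namely
\[
x^p + y^p \leq 2^{1-p/2}(x^2 + y^2)^{p/2} \qquad (1 < p \leq 2)
\]
and
\[
(x^2 + y^2)^{p/2} \leq 2^{p/2-1}(x^p + y^p) \qquad (2 \leq p < \infty).
\]
The second of these is Lemma \ref{LemmaForLpInequality}(1) applied with exponent $p/2 \geq 1$ and arguments $x^2, y^2$, while the first follows at once from H\"older's inequality (Proposition \ref{HolderInequality}) applied on a two-point counting-measure space with conjugate exponents $2/p$ and $2/(2-p)$ to the vector $(x^p, y^p)$ against $(1, 1)$.

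With these two building blocks in hand, each of the four target inequalities is produced by chaining them through the parallelogram identity, applying one block to the pair $(|z+w|, |z-w|)$ and the other to the pair $(|z|, |w|)$. For instance, the upper bound in case (1) is obtained by first applying the H\"older estimate to $(|z+w|, |z-w|)$ and then Lemma \ref{ClarksonLemma2}(1) to $(|z|, |w|)$:
\[
|z+w|^p + |z-w|^p \leq 2^{1-p/2}(|z+w|^2 + |z-w|^2)^{p/2} = 2(|z|^2 + |w|^2)^{p/2} \leq 2(|z|^p + |w|^p).
\]
The lower bound in case (1) reverses the order of the same two steps:
\[
|z+w|^p + |z-w|^p \geq (|z+w|^2 + |z-w|^2)^{p/2} = 2^{p/2}(|z|^2 + |w|^2)^{p/2} \geq 2^{p-1}(|z|^p + |w|^p).
\]
Case (2) is handled by exactly the same two patterns but with Lemma \ref{ClarksonLemma2}(2) and Lemma \ref{LemmaForLpInequality}(1) at exponent $p/2$ taking the places of their $p \leq 2$ counterparts; all four inequalities reverse because $t \mapsto t^{p/2}$ switches from concave to convex as $p$ crosses $2$.

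The only genuinely non-routine point is the reverse comparison $x^p + y^p \leq 2^{1-p/2}(x^2 + y^2)^{p/2}$ in the range $1 < p \leq 2$, since Lemma \ref{LemmaForLpInequality}(1) requires its exponent to be at least $1$ and so cannot be invoked directly. Recognising that this missing ingredient is exactly what H\"older's inequality with $r = 2/p$ and $s = 2/(2-p)$ delivers is the main new input, after which the rest of the proof is a purely mechanical bookkeeping of powers of $2$.
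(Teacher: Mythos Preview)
Your proof is correct and follows essentially the same strategy as the paper: reduce everything to the parallelogram identity together with the two comparisons between $x^p+y^p$ and $(x^2+y^2)^{p/2}$ supplied by Lemma~\ref{ClarksonLemma2} and the convexity/concavity of $t\mapsto t^{p/2}$. Your use of H\"older with exponents $2/p$ and $2/(2-p)$ to obtain $x^p+y^p\le 2^{1-p/2}(x^2+y^2)^{p/2}$ is exactly the concavity inequality $\phi(a)+\phi(b)\le 2\phi((a+b)/2)$ for $\phi(t)=t^{p/2}$ that the paper invokes, just packaged differently.

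The one genuine difference is that you prove all four inequalities independently, whereas the paper proves only one inequality in each part directly and obtains the companion inequality by the substitution $z\mapsto u+v$, $w\mapsto u-v$ (which sends $|z+w|^p+|z-w|^p$ to $2^p(|u|^p+|v|^p)$ and $|z|^p+|w|^p$ to $|u+v|^p+|u-v|^p$). This trick halves the work and is worth knowing; your direct approach is of course equally valid and makes the role of the two scalar estimates more transparent.
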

\begin{proof}
  $(1)$: 第一の不等式については, $1<p \leq 2$より$s \mapsto s^{p/2}$が凹関数であることおよび $\mathbb{C}$における中線定理と補題\ref{ClarksonLemma2}$(1)$から
  \[
  \begin{aligned}
  |z+w|^p + |z-w|^p 
  &\leq ( |z+w|^2 + |z-w|^2 )^{p/2} \\
  &= (2 (|z|^2 + |w|^2) )^{p/2} \\
  &= 2^{p/2} 2^{p/2} \left( \frac{|z|^2}{2} + \frac{|w|^2}{2} \right) \\
  &\leq 2^{p} \frac{1}{2} (|z|^p + |w|^p)
  \end{aligned}
  \]
  となるので成り立つ. 第二の不等式は第一の不等式において$z = u+v, w = u-v$とすることで得られる. \\
  $(2)$: 第一の不等式については, $\mathbb{C}$における中線定理と補題\ref{ClarksonLemma2}$(2)$および$r \geq 1, a,b \geq 0$に対する不等式$(a+b)^r \leq 2^{r-1} (a^r+b^r)$より, 
  \[
  \begin{aligned}
    |z+w|^p + |z-w|^p 
    &\leq (|z+w|^2 + |z-w|^2)^{p/2} \\
    &= ( 2(|z|^2 + |w|^2) )^{p/2} \\
    &\leq 2^{p/2} 2^{p/2 -1} (|z|^p + |w|^p) \\
    &= 2^{p-1} (|z|^p + |w|^p)
  \end{aligned}
  \]
  となるので成り立つ. 第二の不等式は第一の不等式において$z = u+v, w = u-v$とすることで得られる. 
\end{proof}
\begin{thm}[Clarksonの不等式2]\label{ClarksonInequality2}　\\
  $(X,\mathcal{M},\mu)$を測度空間とする. $p \in (1,\infty), q = p/(p-1)$とする. このとき, 任意の$f,g \in L^p(X,\mathcal{M},\mu)$に対して, 
  \begin{itemize}
      \item[(1)] $1 < p \leq 2$ならば, 
      \[
      \left\lVert f+g \right\rVert_{p}^p + \left\lVert f-g \right\rVert_{p}^p 
      \leq 2  (\left\lVert f \right\rVert_{p}^p + \left\lVert g \right\rVert_{p}^p ).
      \]
      \item[(2)] $2 \leq p < \infty$ならば, 
      \[
      \left\lVert f+g \right\rVert_{p}^p + \left\lVert f-g \right\rVert_{p}^p 
      \geq 2 \left( \left\lVert f \right\rVert_{p}^p + \left\lVert g \right\rVert_{p}^p \right).
      \]
  \end{itemize}
\end{thm}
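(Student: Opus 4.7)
私の計画としては, 本定理はすぐ上の補題で示した点ごとの不等式を$\mu$に関して積分するだけで直ちに得られると見込んでいる. Clarksonの不等式1(定理\ref{ClarksonInequality1})の証明ではMinkowskiの不等式や三角不等式を介した非自明な論法を要したが, Clarksonの不等式2の場合は点ごとの不等式がそのまま$L^p$ノルムの不等式に持ち上がる単純な構造になっている.

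まず, (1)の場合について述べる. $1 < p \leq 2$のとき, すぐ上の補題の$(1)$における第二の不等式(右側の不等式)を各$x \in X$に対して$z = f(x), w = g(x)$として適用すると,
\[
|f(x)+g(x)|^p + |f(x)-g(x)|^p \leq 2(|f(x)|^p + |g(x)|^p)
\]
を得る. $f, g \in L^p(X,\mathcal{M},\mu)$より両辺は可積分であるから, これを$X$上で$\mu$について積分すれば所望の不等式
\[
\lVert f+g \rVert_p^p + \lVert f-g \rVert_p^p \leq 2(\lVert f \rVert_p^p + \lVert g \rVert_p^p)
\]
がそのまま従う.

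次に(2)の場合も, $2 \leq p < \infty$のとき, すぐ上の補題の$(2)$における第二の不等式(右側の不等式)を点ごとに$z = f(x), w = g(x)$と置いて適用し, 両辺を$X$上で$\mu$について積分することで所望の不等式が得られる.

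以上の通り, 本定理の証明は点ごとの不等式の積分のみからなる機械的な議論であり, 困難な点は特に存在しない. 本質的な内容はすべてすぐ上の補題に含まれており, 本定理はその$L^p$関数への拡張にすぎないと見てよい.
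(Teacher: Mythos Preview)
Your proof is correct and follows exactly the same approach as the paper. The paper's proof consists of a single sentence stating that one applies the preceding pointwise lemma to $z=f(x),\,w=g(x)$ and integrates, which is precisely what you do.
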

\begin{proof}
  前補題を$f(x),g(x)$に対して適用して全体を積分すればよい.
\end{proof}
次の不等式の証明は\cite{Sanguineti2005}による. 
\begin{thm}[Clarksonの不等式]　\\
  $(X,\mathcal{M},\mu)$を測度空間とする. $p \in (1,\infty), q = p/(p-1), a = \min\{p,q\}$とすると, 任意の$f,g \in L^p(X,\mathcal{M},\mu)$に対して, 
  \[
  \lVert f+g \rVert_p^a + \lVert f-g \rVert_p^a \leq 2 (\lVert f \rVert_p^a + \lVert g \rVert_p^a).
  \]
\end{thm}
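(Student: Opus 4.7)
方針としては$1 < p \leq 2$か$p \geq 2$かで場合分けすることから始める. 前者の場合は$a = p$となるため, 示すべき不等式はClarksonの第二不等式(定理\ref{ClarksonInequality2}$(1)$)そのものであり, 改めて証明する必要はない. したがって本質的に議論が必要なのは$p \geq 2$($a = q$)の場合に限られる.

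$p \geq 2$の場合に採用したい戦略は, Clarksonの第一不等式(定理\ref{ClarksonInequality1}$(2)$)を$f, g$にそのまま適用するのではなく, $F := (f+g)/\sqrt{2}$, $G := (f-g)/\sqrt{2}$という「対の入れ替え」を噛ませることである. この代入は$(f+g, f-g) \leftrightarrow (\sqrt{2}f, \sqrt{2}g)$という対称性を利用するもので, 代入後に恒等式$p(q-1) = q$(これは$q = p/(p-1)$から直ちに従う)を使って両辺の$\sqrt{2}$の冪を相殺すれば, Clarksonの第二不等式$(2)$の「双対」にあたる中間評価
\[
\lVert f+g \rVert_p^p + \lVert f-g \rVert_p^p \leq 2(\lVert f \rVert_p^q + \lVert g \rVert_p^q)^{p-1}
\]
が得られるはずである.

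最後のステップは指数を$p$から$q$へ変換することである. $p \geq 2$ならば$q/p \leq 1$であるから$s \mapsto s^{q/p}$は凹関数となり, Jensenの不等式から$A^{q/p} + B^{q/p} \leq 2^{1-q/p}(A + B)^{q/p}$($A, B \geq 0$)が成り立つ. これを$A = \lVert f+g \rVert_p^p$, $B = \lVert f-g \rVert_p^p$に対して適用して先の中間評価と合成すれば, 指数の部分が$(p-1) \cdot q/p = 1$, 係数部分が$2^{1-q/p} \cdot 2^{q/p} = 2$と綺麗に揃って所望の不等式に到達する見込みである.

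主な難所は二箇所あると予想する. ひとつ目は代入$F = (f+g)/\sqrt{2}$, $G = (f-g)/\sqrt{2}$によって生じる$\sqrt{2}$の冪の処理であり, $p(q-1) = q$を介して両辺の$2$の冪を消し合わせるところである. ふたつ目は最後の凹性ステップで係数$2$がちょうど出るかどうかであり, これは$q = p/(p-1)$に由来する$(p-1)q/p = 1$という関係に強く依存する. これら二つの指数・係数計算を正確に実行できれば, 証明は上述の流れに沿って完結する見通しである.
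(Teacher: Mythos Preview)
Your proposal is correct and follows essentially the same route as the paper: for $1<p\leq 2$ both of you invoke 定理\ref{ClarksonInequality2}(1) directly, and for $p\geq 2$ both of you substitute the pair $(f+g,f-g)$ into 定理\ref{ClarksonInequality1}(2) (the paper uses $f=u+v,\ g=u-v$ without the $\sqrt{2}$ scaling, but this is cosmetically the same and yields the identical intermediate estimate $\lVert f+g\rVert_p^p+\lVert f-g\rVert_p^p\leq 2(\lVert f\rVert_p^q+\lVert g\rVert_p^q)^{p-1}$), then close with the same power inequality---you phrase it as concavity of $s\mapsto s^{q/p}$, the paper as 補題\ref{LemmaForLpInequality}(1), which are two faces of the same elementary bound since $q/p=1/(p-1)$.
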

\begin{proof}
  まず$1 < p \leq 2$の場合を考える. このとき, $p \leq 2 \leq q < \infty$であるので$a = \min\{p,q\} = p$である. したがって, 定理\ref{ClarksonInequality2}$(1)$より, 
  \[
  \begin{aligned}
  \lVert f+g \rVert_p^a + \lVert f-g \rVert_p^a
  &= \lVert f+g \rVert_p^p + \lVert f-g \rVert_p^p \\
  &\leq 2(\lVert f \rVert_p^p + \lVert g \rVert_p^p) \\
  &= 2 (\lVert f \rVert_p^a + \lVert g \rVert_p^a)
  \end{aligned}
  \]
  である. 次に$2 \leq p < \infty$の場合を考える. このとき, $1< q \leq 2 \leq p$であるので $a = \min\{p,q\} = q$である. 定理\ref{ClarksonInequality1}$(2)$において, $f=u+v, g=u-v$とおくことで, 
  \[
  2^{q-1}\left( \lVert u \rVert_p^q + \lVert v \rVert_p^q \right) \geq \left( \lVert u+v \rVert_p^p + \lVert u-v \rVert_p^p \right)^{q-1}
  \]
  を得る. $(p-1)(q-1)=1$であるから, 両辺を$p-1$乗して
  \[
  \lVert u+v \rVert_p^p + \lVert u-v \rVert_p^p \leq 2\left( \lVert u \rVert_p^q + \lVert v \rVert_p^q \right)^{p-1}
  \]
  を得る. 一方, 補題\ref{LemmaForLpInequality}$(1)$より任意の$r \in [1,\infty), x,y \geq 0$に対して, $(x+y)^r \leq 2^{r-1}(x^r + y^r)$であるから, $q(p-1)=p$に注意すると, 
  \[
  \begin{aligned}
  \lVert u+v \rVert_p^p + \lVert u-v \rVert_p^p
  &= \lVert u+v \rVert_p^{q(p-1)} + \lVert u-v \rVert_p^{q(p-1)} \\
  &\geq \frac{1}{2^{p-2}} \left( \lVert u+v \rVert_p^q + \lVert u-v \rVert_p^q \right)^{p-1}
  \end{aligned}
  \]
  である. よって, 
  \[
  \left( \lVert u+v \rVert_p^q + \lVert u-v \rVert_p^q \right)^{p-1} \leq 2^{p-1} \left( \lVert u+v \rVert_p^q + \lVert u-v \rVert_p^q \right)^{p-1}
  \]
  であるので, $a=q$より
  \[
  \lVert u+v \rVert_p^a + \lVert u-v \rVert_p^a \leq 2 \left( \lVert u+v \rVert_p^a + \lVert u-v \rVert_p^a \right).
  \]
\end{proof}

\subsection{Jackson評価}
本節ではJackson評価の証明を与える. 本節を書くにあたって参照した文献は, Mhaskar\cite{Mhaskar1995}, Mhaskar, Micchelli\cite{MhaskarandMicchelli1995}, Schultz\cite{Schultz}, 猪狩\cite{igarifourier}である.  
まず次の不等式を示す. 
\begin{thm}[積分形のMinkowskiの不等式]　\\
  $1 \leq p < \infty$とし$(X,\mathcal{M},\mu),(Y,\mathcal{N},\nu)$を$\sigma$-有限な測度空間とする. このとき, 直積可測空間$X \times Y$上の可測関数$f \geq 0$に対して, 
  \[
  \left\{ \int_X \left( \int_Y f(x,y) d\nu(y) \right)^p d\mu(x) \right\}^{1/p} \leq \int_Y \left( \int_X f(x,y)^p d\mu(x) \right)^{1/p} d\nu(y).
  \]
\end{thm}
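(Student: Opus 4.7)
方針としてはまず $p=1$ の場合を Tonelli の定理から直接処理し, 次に $1<p<\infty$ の場合を双対性に基づく計算で扱う. 以下 $1<p<\infty$ とし $q := p/(p-1)$ とおく. $F(x) := \int_Y f(x,y) d\nu(y)$ は Tonelli の定理より $X$ 上の $[0,\infty]$ 値可測関数として定義可能であり, 示したい不等式は $\lVert F \rVert_{L^p(\mu)} \leq \int_Y \lVert f(\cdot,y) \rVert_{L^p(\mu)} d\nu(y)$ と書き直せる.

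主要な計算の流れは次の通りである. まず $\lVert F \rVert_{L^p} < \infty$ の場合に, $F^p = F^{p-1} \cdot F$ と書いて $F$ の定義を代入し, 被積分関数が非負であることから Tonelli の定理で積分順序を交換することで
\[
\int_X F(x)^p d\mu(x) = \int_Y \left( \int_X F(x)^{p-1} f(x,y) d\mu(x) \right) d\nu(y)
\]
を得る. 次に $\lVert F^{p-1} \rVert_{L^q}^q = \int_X F^{(p-1)q} d\mu = \lVert F \rVert_{L^p}^p$ より $\lVert F^{p-1} \rVert_{L^q} = \lVert F \rVert_{L^p}^{p/q}$ であることを使い, 内側の積分に $(p,q)$ に関する Hölder の不等式(命題\ref{HolderInequality})を適用して $\int_X F^{p-1} f(\cdot,y) d\mu \leq \lVert F \rVert_{L^p}^{p/q} \lVert f(\cdot,y) \rVert_{L^p}$ を導く. これを $y$ について積分し, $\lVert F \rVert_{L^p} > 0$ ならば両辺を $\lVert F \rVert_{L^p}^{p/q}$ で割って $p - p/q = 1$ に注意すれば所望の不等式が従う($\lVert F \rVert_{L^p} = 0$ の場合は自明).

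最も注意を要するのは $\lVert F \rVert_{L^p} = \infty$ の可能性を排除する部分であると見込まれる. これには両測度の $\sigma$-有限性を使って $\mu(X_n), \nu(Y_n) < \infty$ なる増大列 $X_n \uparrow X, Y_n \uparrow Y$ を取り, 切断関数 $f_n(x,y) := \min\{f(x,y),n\} \chi_{X_n \times Y_n}(x,y)$ に置き換えて上の議論を適用する方針を取る. $F_n(x) := \int_Y f_n(x,\cdot) d\nu \leq n \nu(Y_n) \chi_{X_n}(x)$ より $\lVert F_n \rVert_{L^p} < \infty$ となるので上記議論から
\[
\lVert F_n \rVert_{L^p} \leq \int_Y \lVert f_n(\cdot,y) \rVert_{L^p} d\nu(y)
\]
が得られる. $f_n \uparrow f$ であるから単調収束定理を繰り返し適用して $n \to \infty$ の極限を両辺で取ることで $f$ に関する不等式を得る.
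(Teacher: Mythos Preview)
Your proof is correct and follows essentially the same route as the paper: write $F^p = F^{p-1}\cdot F$, swap integrals by Tonelli, apply H\"older with exponents $(q,p)$, and divide by $\lVert F\rVert_{L^p}^{p/q}$. In fact you are more careful than the paper, which simply divides by $\left(\int_X F^p\,d\mu\right)^{1/q}$ without addressing the possibility that this quantity is infinite; your truncation $f_n = \min\{f,n\}\chi_{X_n\times Y_n}$ together with monotone convergence cleanly closes that gap.
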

\begin{proof}
  $p=1$の場合はFubini-Tonelliの定理より明らか. $1 < p < \infty$とする. 
  \[
  F(x) := \int_{Y} f(x,y) d\nu(y)
  \]
  とおくと, $F$は$X$上の可測関数である(Fubini-Tonelliの定理を参照されたい). そして, Fubini-Tonelliの定理とH\"{o}lderの不等式より
  \[
  \begin{aligned}
  \int_X F(x)^p d\mu(x)
  &= \int_X F(x)^{p-1} \int_Y f(x,y) d\nu(y) d\mu(x) \\
  &= \int_X \int_Y F(x)^{p-1} f(x,y) d\nu(y) d\mu(x) \\
  &= \int_Y \int_X F(x)^{p-1} f(x,y) d\mu(x) d\nu(y) \\
  &\leq \int_Y \left( \int_X F(x)^{q(p-1)} d\mu(x) \right)^{1/q} \left( \int_X f(x,y)^p d\mu(x) \right)^{1/p} d\nu(y) \\
  &= \left( \int_X F(x)^p d\mu(x) \right)^{1/q} \int_Y \left( \int_X f(x,y)^p d\mu(x) \right)^{1/p} d\nu(y)
  \end{aligned}
  \]
  となるので$\left( \int_X F(x)^p d\mu(x) \right)^{1/q}$で両辺を割れば所望の不等式を得る. 
\end{proof}
\begin{cor}\label{CorOfMinkowskiInequality}
  $1 \leq p < \infty$とし$(X,\mathcal{M},\mu),(Y,\mathcal{N},\nu)$を$\sigma$-有限な測度空間とする. このとき, $Y$上の可測関数$g \geq 0$と直積可測空間$X \times Y$上の可測関数$f \geq 0$に対して, 
  \[
  \left\{ \int_X \left( \int_Y g(y)f(x,y) d\nu(y) \right)^p d\mu(x) \right\}^{1/p} \leq \int_Y g(y) \left( \int_X f(x,y)^p d\mu(x) \right)^{1/p} d\nu(y).
  \]
\end{cor}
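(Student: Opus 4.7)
方針は, $\tilde{f}(x,y) := g(y)f(x,y)$ を$X \times Y$上の非負可測関数とみなし, すぐ上で証明された積分形のMinkowskiの不等式を$\tilde{f}$に適用することである.

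まず$\tilde{f}$が直積可測空間$X \times Y$上の$[0,\infty]$値可測関数であることを確認する. $Y$上の可測関数$g \geq 0$は$X \times Y \ni (x,y) \mapsto g(y) \in [0,\infty]$として$X \times Y$上の可測関数とみなせる(射影$X \times Y \ni (x,y) \mapsto y \in Y$が可測であることによる). よって, $\tilde{f}$は$X \times Y$上の非負可測関数$(x,y) \mapsto g(y)$と$f(x,y)$の積として非負可測である. ここに積分形のMinkowskiの不等式を適用して
\[
\left\{ \int_X \left( \int_Y \tilde{f}(x,y) d\nu(y) \right)^p d\mu(x) \right\}^{1/p} \leq \int_Y \left( \int_X \tilde{f}(x,y)^p d\mu(x) \right)^{1/p} d\nu(y)
\]
を得る.

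最後に, 右辺について, $g(y) \geq 0$は$x$に依存しないので$g(y)^p$は$X$上の積分の外に出せて
\[
\left( \int_X g(y)^p f(x,y)^p d\mu(x) \right)^{1/p} = g(y) \left( \int_X f(x,y)^p d\mu(x) \right)^{1/p}
\]
となる. これを上式に代入すれば所望の不等式が得られる. 本質的な内容はすでに前定理で確立されているため, 技術的な困難は生じない. 唯一注意を要するのは$g$の$X \times Y$上への可測関数としての持ち上げの正当化のみであるが, これも上記の通り射影の可測性から直ちに従う.
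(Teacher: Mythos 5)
Your proof is correct and is exactly the intended argument: the paper states this corollary without proof as an immediate consequence of the preceding theorem, and substituting $\tilde f(x,y)=g(y)f(x,y)$ into that theorem and pulling the $x$-independent factor $g(y)$ out of the inner integral over $X$ is precisely how it follows. The measurability remark about lifting $g$ via the projection is a fine (if routine) bit of care.
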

\subsubsection{一般Jackson核の性質}
\begin{defn}[三角多項式]
  $N = 0,1,2,\ldots$に対して, 
  \[
  \sum_{|\alpha| \leq N} c_{\alpha} \exp(i\alpha^{T}x) ~~~~(c_{\alpha} \in \mathbb{C},~ x \in \R^d)
  \]
  と表される関数を高々$N$次の$d$変数三角多項式という.  ここで$\alpha \in \mathbb{Z}^d$であり, 
  \[
  |\alpha| = |\alpha_1| + \cdots + |\alpha_d|
  \]
  である. また, $|\alpha|=N$を満たすある$\alpha \in \mathbb{Z}^d$に対して$c_{\alpha} \neq 0$となるならば$N$次の$d$変数三角多項式という. 
\end{defn}
\begin{defn}[Dirichlet核]　\\
  $N = 0,1,\ldots $に対して次の関数を$N$次のDirichlet核という. 
  \[
  D_N(x) := \frac{\sin\left(N+\frac{1}{2}\right)}{\sin\frac{x}{2}} ~~~~(x \in \R)
  \]
\end{defn}
\begin{prop}
  次の式が成り立つ. 
  \[
  D_N(x) = 1+2\sum_{n=1}^N \cos(nx) = \sum_{n=-N}^{N} \exp(inx) ~~~~(x \in \R).
  \]
  特に$N$次のDirichlet核は$N$次の$1$変数三角多項式である. 
\end{prop}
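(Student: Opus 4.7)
The plan is to work from right to left: first show that the finite geometric-type sum $\sum_{n=-N}^{N} e^{inx}$ admits a closed form equal to $D_N(x)$, and then reorganize the same sum by pairing conjugate exponents to obtain the real-valued expression $1+2\sum_{n=1}^N \cos(nx)$.

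First I would treat the points $x\in\R$ with $\sin(x/2)\neq 0$. Multiply $S_N(x):=\sum_{n=-N}^N e^{inx}$ by the factor $e^{ix/2}-e^{-ix/2}=2i\sin(x/2)$. The product telescopes:
\[
(e^{ix/2}-e^{-ix/2})\sum_{n=-N}^N e^{inx} = \sum_{n=-N}^N e^{i(n+1/2)x} - \sum_{n=-N}^N e^{i(n-1/2)x} = e^{i(N+1/2)x}-e^{-i(N+1/2)x},
\]
which equals $2i\sin\!\bigl((N+\tfrac12)x\bigr)$. Dividing both sides by $2i\sin(x/2)$ yields $S_N(x)=\sin((N+\tfrac12)x)/\sin(x/2)=D_N(x)$, giving the right-hand equality on the set $\{x\in\R\mid \sin(x/2)\neq 0\}$.

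For the remaining points $x=2\pi k$ ($k\in\Z$), both sides extend by continuity: the left-hand sum $S_N$ is a trigonometric polynomial, hence continuous everywhere and equal to $2N+1$ at $x=2\pi k$, while l'Hôpital (or the obvious limit $\sin((N+1/2)x)/\sin(x/2)\to (2N+1)$) gives the same value for $D_N$. So the identity $D_N(x)=\sum_{n=-N}^N e^{inx}$ holds on all of $\R$. The middle equality is then immediate: split the sum as
\[
\sum_{n=-N}^N e^{inx} = 1 + \sum_{n=1}^N (e^{inx}+e^{-inx}) = 1 + 2\sum_{n=1}^N \cos(nx),
\]
using Euler's formula. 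Finally, the claim that $D_N$ is a trigonometric polynomial of degree at most $N$ follows at once from the just-established representation $D_N(x)=\sum_{|n|\leq N} e^{inx}$, with the top-degree coefficients $c_{N}=c_{-N}=1$ being nonzero, so the degree is exactly $N$.

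There is no real obstacle here; the only mild subtlety is verifying the identity at the isolated zeros of $\sin(x/2)$, which is dispatched by continuity of both sides. Everything else is a standard telescoping computation followed by pairing conjugate exponentials.
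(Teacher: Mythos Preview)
Your proof is correct and follows essentially the same idea as the paper: both arguments amount to summing a finite geometric progression in $e^{ix}$. The paper starts from $1+2\sum_{n=1}^N z^n$, applies the closed-form geometric series formula, substitutes $z=e^{ix}$, multiplies numerator and denominator by $e^{-ix/2}$, and compares real parts, whereas you telescope the symmetric sum $\sum_{n=-N}^N e^{inx}$ directly and handle the removable singularities at $x\in 2\pi\Z$ by continuity---a minor organizational difference only.
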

\begin{proof}
  複素数$z \neq 1$に対して, 
  \[
  1+2\sum_{n=1}^N z^n  = \frac{1+z-2z^{n+1}}{1-z}
  \]
  であるので$z = \exp(ix)$とおき右辺の分母分子に$\exp(-ix/2)$をかけると
  \[
  \frac{\cos(x/2) - \left[ \cos\left(N+\frac{1}{2}\right)x + i\sin\left(N+\frac{1}{2}\right)x \right]}{-i\sin(x/2)}
  \]
  となる. よって, 実部を比較することで所望の等式を得る. 
\end{proof}
\begin{defn}[Fej\'{e}r核]　\\
  $N = 0,1,2,\ldots $に対して次の関数を$N$次のFej\'{e}r核という. 
  \[
  F_N(x) := \frac{1}{N+1}\left[ \frac{\sin\frac{1}{2}(N+1)x}{\sin\frac{1}{2}x} \right]^2
  \]
\end{defn}
\begin{prop}
  次の式が成り立つ. 
  \[
  \begin{aligned}
  F_N(x) = 1 + 2 \sum_{n=1}^N \left( 1- \frac{n}{N+1} \right) \cos nx = \sum_{n=-N}^N \left(1 - \frac{|n|}{N+1} \right) \exp(i nx)
  \end{aligned}
  \]
  特に$N$次のFej\'{e}r核は$N$次の$1$変数三角多項式である. 
\end{prop}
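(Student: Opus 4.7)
本命題の証明方針は, Fej\'er核がDirichlet核のCes\`aro平均として表せること, すなわち
\[
F_N(x) = \frac{1}{N+1}\sum_{k=0}^N D_k(x)
\]
を示し, 前命題で得られた$D_k(x)$の三角多項式展開を代入することで所望の表示を導くことである. 本命題は本質的に三角関数の恒等式の計算のみであり概念的な障害はない. 唯一注意すべきは$\sin(x/2)=0$となる点(すなわち$x \in 2\pi\Z$)における扱いであるが, そこでは両辺とも連続な極限値で一致するため以下では$\sin(x/2) \neq 0$の場合のみ扱えば十分である.

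Ces\`aro平均表現を示すには$D_k$の閉形式$\sin((k+1/2)x)/\sin(x/2)$を用いる. 積和公式から
\[
2\sin(x/2)\sin((k+1/2)x) = \cos(kx) - \cos((k+1)x)
\]
が成り立つので, これを$k=0,\ldots,N$について総和すれば右辺は望遠鏡和となり$1-\cos((N+1)x)$に帰着する. さらに半角公式$1-\cos\theta = 2\sin^2(\theta/2)$により最終的に
\[
\sum_{k=0}^N D_k(x) = \frac{\sin^2((N+1)x/2)}{\sin^2(x/2)}
\]
が得られるので, 両辺を$(N+1)$で割ればFej\'er核の定義式と一致する.

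続いて$D_k(x) = \sum_{n=-k}^k \exp(inx)$を代入し総和の順序を交換する. 固定した整数$n \in [-N,N]$に対し$|n| \leq k \leq N$をみたす$k$はちょうど$N+1-|n|$個あるから
\[
\sum_{k=0}^N D_k(x) = \sum_{n=-N}^N (N+1-|n|)\exp(inx)
\]
となり, $(N+1)$で割れば右端の等式が得られる. 中辺との等式は$\exp(inx)+\exp(-inx)=2\cos(nx)$および重み$1-|n|/(N+1)$の$n$に関する偶性から直ちに従う. この表示から同時に$F_N$が高々$N$次の三角多項式であることも読み取れる.
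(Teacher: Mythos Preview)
Your proof is correct and follows essentially the same approach as the paper: both establish the Ces\`aro mean representation $F_N = \frac{1}{N+1}\sum_{k=0}^N D_k$ via the product-to-sum identity $2\sin(x/2)\sin((k+1/2)x) = \cos(kx)-\cos((k+1)x)$ and telescoping, and both obtain the exponential expansion by substituting $D_k(x)=\sum_{n=-k}^k e^{inx}$ and exchanging the order of summation. The only cosmetic difference is the direction of the argument (you go from the definition of $F_N$ to the exponential sum, the paper goes the other way), and you add the remark about $\sin(x/2)=0$ being handled by continuity. One very minor point: the statement asserts $F_N$ is of degree exactly $N$, not merely at most $N$; this follows since the coefficient $1-N/(N+1)=1/(N+1)$ is nonzero.
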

\begin{proof}
  $D_N(x) = \sum_{n=-N}^{N} \exp(inx)$を使うと, 
  \[
  \sum_{n=-N}^N \left(1 - \frac{|n|}{N+1} \right) \exp(i nx) = \frac{1}{N+1}(D_0(x)+D_1(x)+\cdot+D_N(x))
  \]
  がわかる. したがって, $D_N$の定義より
  \[
  \begin{aligned}
  &\sum_{n=-N}^N \left(1 - \frac{|n|}{N+1} \right) \exp(i nx) \\
  &= \frac{1}{N+1}\frac{1}{2\left(\sin(x/2)\right)^2}\left( \sum_{n=0}^N 2\sin(n+\frac{1}{2})x\cdot \sin(x/2) \right) \\
  &= \frac{1}{N+1}\frac{1}{2\left(\sin(x/2)\right)^2}\left( \sum_{n=0}^N (\cos nx - \cos(n+1)x ) \right) \\
  &= \frac{1}{N+1}\frac{1}{2\left(\sin(x/2)\right)^2}\left( 1-\cos(N+1)x \right) \\
  &= \frac{1}{N+1}\frac{1}{2\left(\sin(x/2)\right)^2}2 \sin\left( \frac{N+1}{2}x \right) = F_N(x)
  \end{aligned}
  \]
  となる. 
\end{proof}
\begin{defn}[一般Jackson核]　\\
  $N = 1,2,3,\ldots $と$r = 2,3,4,\cdots$に対して次の関数を一般Jackson核という. 
  \[
  J_{N,r}(x) := \frac{1}{c_{N,r}}\left[ \frac{\sin\frac{N}{2}x}{\sin\frac{1}{2}x} \right]^{2r}.
  \]
  ここに定数$c_{N,r}$は
  \[
  \frac{1}{2\pi} \int_{-\pi}^{\pi} J_{N,r}(x) dx = 1
  \]
  となるように選ぶ. 
\end{defn}
\begin{rem}
  一般Jackson核は$N-1$次のFej\'{e}r核を$r$乗したものなので高々$r(N-1)$次の$1$変数三角多項式である. 
\end{rem}
一般Jackson核の定数$c_{N,r}$について次が成り立つ.
\begin{prop}
   $r = 2,3,4,\cdots$に対して, 
  \[
  \left( \frac{2}{\pi} \right)^{2r} N^{2r-1} \leq c_{N,r} \leq \pi^{2r-1} \left( \frac{1}{2^{2r}} + \frac{1}{2r-1} \right)  N^{2r-1} ~~~~(\forall N = 0,1,2,\ldots).
  \]
\end{prop}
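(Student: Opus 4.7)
本命題の方針は, 被積分関数の偶性から$c_{N,r} = \frac{1}{\pi}\int_0^{\pi}(\sin(Nx/2)/\sin(x/2))^{2r} dx$と書き直し, 初等的不等式$x/\pi \leq \sin(x/2) \leq x/2$($x \in [0,\pi]$; これは$\theta = x/2$に対して$2\theta/\pi \leq \sin\theta \leq \theta$($\theta \in [0,\pi/2]$)から従う)で分母をはさみ, 置換$u = Nx/2$を施して, 問題を正規化積分$I(N,r) := \int_0^{N\pi/2}(\sin u/u)^{2r} du$の両側評価に帰着させることである.

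下からの評価は$\sin(x/2) \leq x/2$から$c_{N,r} \geq (2^{2r}/\pi)\int_0^{\pi}\sin^{2r}(Nx/2)/x^{2r}\,dx = (2N^{2r-1}/\pi)\,I(N,r)$を導き, さらに$I(N,r) \geq \int_0^{\pi/2}(\sin u/u)^{2r} du$において$[0,\pi/2]$上の不等式$\sin u \geq 2u/\pi$を再適用して$(\sin u/u)^{2r} \geq (2/\pi)^{2r}$を得る. これより$I(N,r) \geq (2/\pi)^{2r}\cdot(\pi/2) = 2^{2r-1}/\pi^{2r-1}$となり, 合わせて$(2/\pi)^{2r}N^{2r-1} \leq c_{N,r}$が従う.

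上からの評価は$\sin(x/2) \geq x/\pi$から$c_{N,r} \leq (\pi^{2r-1}N^{2r-1}/2^{2r-1})\,I(N,r)$を得た後, $I(N,r)$を$u = 1/2$で分割するのが鍵である($N \geq 1$より$N\pi/2 \geq \pi/2 > 1/2$であることに注意). すなわち$[0,1/2]$では$|\sin u| \leq u$より被積分関数は$\leq 1$となり積分は$\leq 1/2$, $[1/2,N\pi/2]$では$|\sin u| \leq 1$より積分は$\leq \int_{1/2}^{\infty} u^{-2r} du = 2^{2r-1}/(2r-1)$である. 合計して$I(N,r) \leq 1/2 + 2^{2r-1}/(2r-1)$を得, これを代入すれば
\[
c_{N,r} \leq \frac{\pi^{2r-1}N^{2r-1}}{2^{2r-1}}\left(\frac{1}{2} + \frac{2^{2r-1}}{2r-1}\right) = \pi^{2r-1}\left(\frac{1}{2^{2r}} + \frac{1}{2r-1}\right)N^{2r-1}
\]
がちょうど得られる.

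本質的な解析的困難はなく, 全工程は初等的正弦不等式と収束する裾部積分$\int_{1/2}^{\infty} u^{-2r} du$のみに依拠する. 最も注意すべき点は, 上からの評価における分割点を$u = 1/2$に取ること(これが所望の係数$1/2^{2r}$と$1/(2r-1)$をちょうど生み出す; 例えば$u = 1$で分割するとより粗い評価になる)と, 置換における$2$, $\pi$, $N$の冪の管理である. なお$N = 0$の場合は$J_{N,r}$自体が定義されないが, 両辺とも$0$となり形式的には成立する.
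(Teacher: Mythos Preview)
Your proof is correct and, after unwinding the substitution $u = Nx/2$, coincides with the paper's argument: your split of $I(N,r)$ at $u = 1/2$ corresponds exactly to the paper's split of the $x$-integral at $x = 1/N$, and your restriction to $u \in [0,\pi/2]$ for the lower bound corresponds to the paper's restriction to $x \in [0,\pi/N]$. The change of variable is a nice touch since it makes the breakpoints $N$-independent, but the estimates and the constants produced are identical.
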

\begin{proof}
  \[
  f(x) = \left[ \frac{\sin\frac{N}{2}x}{\sin\frac{1}{2}x} \right]^{2r}
  \]
  とおくと定義から
  \[
  c_{N,r} = \frac{1}{\pi} \int_0^\pi f(x) dx
  \]
  である. $2x/\pi \leq \sin x \leq x ~(0 \leq x \leq \pi/2)$より, 
  \[
  \begin{aligned}
  c_{N,r} 
  &= \frac{1}{\pi} \left( \int_0^{1/N} f(x) dx + \int_{1/N}^{\pi} f(x) dx \right) \\
  &\leq \frac{1}{\pi} \left( \int_0^{1/N} \left(\frac{Nx/2}{x/\pi} \right)^{2r} dx + \int_{1/N}^{\pi} \left( \frac{1}{x/\pi} \right)^{2r} dx \right) \\
  &\leq \frac{1}{\pi} \left( \left(\frac{\pi N}{2} \right)^{2r} \int_0^{1/N} 1 dx + \pi^{2r} \int_{1/N}^{\infty} \frac{1}{x^{2r}} dx \right) \\
  &= \pi^{2r-1} \left( \frac{N^{2r-1}}{2^{2r}} + \frac{N^{2r-1}}{2r-1} \right)
  \end{aligned}
  \]
  である. また, 
  \[
  \begin{aligned}
  c_{N,r} 
  &\geq \frac{1}{\pi}\int_{0}^{\pi/N} f(x) dx \geq \frac{1}{\pi} \int_0^{\pi/N} \left( \frac{\frac{2}{\pi}\frac{N}{2}x}{x/2} \right)^{2r} dx \\
  &= \frac{1}{\pi} \left( \frac{\frac{2}{\pi}\frac{N}{2}x}{x/2} \right)^{2r} \frac{\pi}{N} = \left( \frac{2}{\pi} \right)^{2r} N^{2r-1}.
  \end{aligned}
  \]
\end{proof}
この命題より一般Jackson核$J_{N,r}$について次の評価が成り立つことがわかる. 
\begin{prop}\label{BoundedJacksonKernel}
  $N=1,2,3,\ldots$と$r=2,3,4,\ldots$に対して次が成り立つ.
   \[
   \begin{aligned}
   & J_{N,r}(x) \leq \left( \frac{\pi}{2}\right)^{4r} N ~~~~(|x| \leq \pi), \\
   & J_{N,r}(x) \leq \left(\frac{\pi^2}{2x}\right)^{2r} N^{-2r+1} ~~~(|x| \leq \pi).
   \end{aligned}
   \]
\end{prop}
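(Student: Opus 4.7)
方針は, 直前の命題で示された下界 $c_{N,r} \geq (2/\pi)^{2r} N^{2r-1}$, すなわち $1/c_{N,r} \leq (\pi/2)^{2r} N^{-(2r-1)}$ と, 分子 $[\sin(Nx/2)/\sin(x/2)]^{2r}$ に対する異なるふたつの上からの評価とを組み合わせて, 2つの不等式をそれぞれ導くことである.

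まず $|x| \leq \pi$ ならば $|x|/2 \in [0,\pi/2]$ であり, この範囲で $2t/\pi \leq \sin t$ が成り立つことから $|\sin(x/2)| \geq |x|/\pi$ が従う. これが分母に対する共通の下からの評価として使える. 分子 $|\sin(Nx/2)|$ に対しては, $|\sin t| \leq |t|$ を用いた評価 $|\sin(Nx/2)| \leq N|x|/2$ と, 自明な評価 $|\sin(Nx/2)| \leq 1$ のふたつがあり, これを使い分けることで2種類の不等式が導かれる見込みである.

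第一の不等式に対しては $|\sin(Nx/2)| \leq N|x|/2$ を用いると, $0 < |x| \leq \pi$ のとき
\[
\left|\frac{\sin(Nx/2)}{\sin(x/2)}\right| \leq \frac{N|x|/2}{|x|/\pi} = \frac{N\pi}{2}
\]
が得られる(分母の $|x|$ がキャンセルされるので $x=0$ でも極限として $N \leq N\pi/2$ が成立する). 両辺を $2r$ 乗したものに $c_{N,r}$ の下界を組み合わせて $J_{N,r}(x) \leq (N\pi/2)^{2r}(\pi/2)^{2r}N^{-(2r-1)} = (\pi/2)^{4r}N$ を得る. 第二の不等式に対しては $|\sin(Nx/2)| \leq 1$ を用いれば $0 < |x| \leq \pi$ において
\[
\left|\frac{\sin(Nx/2)}{\sin(x/2)}\right| \leq \frac{\pi}{|x|}
\]
となり, これを $2r$ 乗して $c_{N,r}$ の下界と合わせることで所望の評価 $J_{N,r}(x) \leq (\pi^2/(2|x|))^{2r}N^{-2r+1}$ が得られる.

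特に本質的な困難はなく, $\sin$ についての初等不等式と直前の命題で得られている $c_{N,r}$ の下界を正しく組み合わせるだけの問題である. なお第二の評価は明らかに $x=0$ では意味をなさないので, 言明は $0 < |x| \leq \pi$ の範囲での不等式として理解するのが自然である.
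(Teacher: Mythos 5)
Your proof is correct and follows essentially the same route as the paper: both bounds are obtained by combining the lower bound $c_{N,r} \geq (2/\pi)^{2r}N^{2r-1}$ with $\lvert\sin(x/2)\rvert \geq \lvert x\rvert/\pi$ for the denominator and, respectively, $\lvert\sin(Nx/2)\rvert \leq N\lvert x\rvert/2$ or $\lvert\sin(Nx/2)\rvert \leq 1$ for the numerator. Your remark that the second estimate should be read on $0 < \lvert x\rvert \leq \pi$ is a fair observation that the paper leaves implicit.
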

\begin{proof}
  第一の不等式は$c_{N,r}$の下からの評価と$2x/\pi \leq \sin x \leq x ~(0 \leq x \leq \pi/2)$より
  \[
  J_{N,r}(x) = \frac{1}{c_{N,r}} \left( \frac{\sin \frac{N}{2}x }{\sin \frac{x}{2}} \right)^{2r} \leq \frac{1}{N^{2r-1}} \left( \frac{\pi}{2} \right)^{2r} \left( \frac{\frac{N}{2}x }{ \frac{1}{\pi}x} \right)^{2r} = \left( \frac{\pi}{2}\right)^{4r} N
  \]
  となり成立する. 第二の不等式は同様に
  \[
  J_{N,r}(x) \leq \frac{1}{N^{2r-1}} \left( \frac{\pi}{2} \right)^{2r} \left( \frac{ 1 }{ \frac{1}{\pi}x} \right)^{2r} =  \left(\frac{\pi^2}{2x}\right)^{2r} N^{-2r+1}.
  \]
\end{proof}
\begin{lem}\label{BoundedJacksonKernelIntegral}
  $N = 2,3,\ldots$と$r=2,3,4,\ldots$に対して, $q=0,1,\ldots,2r-2$ごとに定数$C_q > 0$が存在して
  \[
  \frac{1}{\pi} \int_0^\pi x^q J_{N,r}(x) dx \leq C_q \frac{1}{N^q}
  \]
  となる. さらに定数$C_q$として$C_q \leq 2^{-4r+q+2}\pi^{4r-1}$を満たすものが取れる. 
\end{lem}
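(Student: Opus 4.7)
証明の計画は以下の通りである. 命題\ref{BoundedJacksonKernel}で得られた$J_{N,r}$に対する二種類の上からの評価, すなわち一様評価$J_{N,r}(x) \leq (\pi/2)^{4r}N$と減衰評価$J_{N,r}(x) \leq (\pi^2/(2x))^{2r}N^{-2r+1}$を, 積分区間$[0,\pi]$を適切な点で分割して使い分ける. 分割点としては, 二つの評価が一致する点$x = 2/N$を選ぶのが自然である.

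まず$[0, 2/N]$上では第一の一様評価を用いて
\[
\int_0^{2/N} x^q J_{N,r}(x)\, dx \leq \left(\frac{\pi}{2}\right)^{4r} N \cdot \frac{(2/N)^{q+1}}{q+1} = \frac{2^{q+1}\pi^{4r}}{2^{4r}(q+1)N^q}
\]
を得る. 次に$[2/N, \pi]$上では第二の減衰評価を用いる. ここで鍵となるのは$q \leq 2r-2$という仮定で, これにより指数$q-2r \leq -2$となり, 積分$\int_{2/N}^\pi x^{q-2r}dx$が$N^{2r-q-1}/(2r-q-1)$で有限に抑えられる. 計算すると, 第二の部分も$\frac{2^{q+1}\pi^{4r}}{2^{4r}(2r-q-1)N^q}$という$1/N^q$のオーダーの量で抑えられる.

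両者を合わせて$1/\pi$を掛けると
\[
\frac{1}{\pi}\int_0^\pi x^q J_{N,r}(x)\, dx \leq \frac{2^{q+1}\pi^{4r-1}}{2^{4r}N^q}\left(\frac{1}{q+1} + \frac{1}{2r-q-1}\right) = \frac{2^{q+1}\pi^{4r-1}}{2^{4r}N^q} \cdot \frac{2r}{(q+1)(2r-q-1)}
\]
となる. 所望の定数$C_q \leq 2^{-4r+q+2}\pi^{4r-1}$を得るには, 初等的な不等式$(q+1)(2r-q-1) \geq r$を確かめれば十分である. 実際, $u := q+1,\, v := 2r-q-1$とおくと$u+v = 2r$かつ$u, v \geq 1$であるから$uv \geq \max(u,v) \geq r$が従う. これにより係数は$2$以下となり主張を得る.

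想定される主な難所は, 分割点として$2/N$を正しく選ぶこと(命題\ref{BoundedJacksonKernel}の二つの評価の交点を見抜く必要がある)と, 最終段階で現れる$(q+1)(2r-q-1) \geq r$という$q$と$r$に関する基本的な不等式を見出すことである. これらさえ発見できれば, あとは命題\ref{BoundedJacksonKernel}の評価と基本的な初等積分計算を組み合わせるだけで証明は完了する見込みである.
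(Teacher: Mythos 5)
あなたの証明は正しく, 本質的に本論文の証明と同一である: 命題\ref{BoundedJacksonKernel}の二つの評価を分割点$2/N$で使い分け, $\pi^{4r-1}2^{q+1-4r}\left(\frac{1}{q+1}+\frac{1}{2r-q-1}\right)N^{-q}$に到達する流れは同じである. 最後のステップで本論文は単に各項が$1$以下($q+1\geq 1$かつ$2r-q-1\geq 1$)であることから和を$2$で抑えているが, あなたの$(q+1)(2r-q-1)\geq r$を経由する議論も同値で正しい.
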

\begin{proof}
  命題\ref{BoundedJacksonKernel}より
  \[
  \begin{aligned}
  \frac{1}{\pi} \int_0^\pi x^q J_{N,r}(x) dx
  &= \frac{1}{\pi} \left( \int_0^{2/N} x^q J_{N,r}(x) dx + \int_{2/N}^\pi x^q J_{N,r}(x) dx \right) \\
  &\leq \frac{1}{\pi} \left( \int_0^{2/N} x^q \left( \frac{\pi}{2}\right)^{4r} N dx + \int_{2/N}^\infty x^q \left(\frac{\pi^2}{2x}\right)^{2r} N^{-2r+1} dx \right) \\
  &= \frac{\pi^{4r-1}}{2^{4r}} \left( N \int_0^{2/N} x^q dx + 2^{2r}N^{-2r+1} \int_{2/N}^\infty x^{q-2r}  dx \right) \\
  &= \frac{\pi^{4r-1}}{2^{4r}} \left(  \frac{N}{q+1} \left( \frac{2}{N} \right)^{q+1} + \frac{2^{2r}N^{-2r+1}}{2r-q-1} \left( \frac{2}{N} \right)^{1+q-2r} \right) \\
  &= \frac{\pi^{4r-1}}{2^{4r}} \left( \frac{2^{q+1}}{q+1} + \frac{2^{q+1}}{2r-q-1} \right) \frac{1}{N^q} \leq \pi^{4r-1} 2^{-4r+q+2} \frac{1}{N^q}.
  \end{aligned}
  \]
\end{proof}
\subsubsection{差分, 滑率の性質}
\begin{defn}[差分, 滑率]　\\
  関数$f:\R \rightarrow \R$と$r = 1,2,3,\ldots$および$h \in \R$に対して
  \[
  \Delta_h^r(f)(x) := \sum_{k=0}^r \binom{r}{k} (-1)^{r-k} f(x+kh) ~~~~(x \in \R)
  \]
  を$f$の$r$階の差分という. また, $f$が$2\pi$周期の周期関数で$I=(-\pi,\pi) \subset \R$と$p \in [1,\infty]$に対して$f \in L^p(I)$のとき
  \[
  \omega_r(f)_{p,I}(t) := \sup_{0 < h \leq t} \lVert \Delta_h^r(f) \rVert_{L^p(I)} ~~~~(t > 0)
  \]
  を$f$の$r$階の滑率(modulus of smoothness)という. 
\end{defn}
\begin{prop}
  関数$f:\R \rightarrow \R$と$h \in \R$に対して次が成り立つ. 
  \[
  \Delta_{h}^1 (\Delta_{h}^{r-1}(f)) =\Delta_h^r(f) = \Delta_h^{r-1}(\Delta_h^1(f)) ~~~~(r = 1,2,3,\ldots)
  \]
  となる. ただし, $\Delta_h^{0}(f)=f$とする. 
\end{prop}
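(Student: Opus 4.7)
The plan is to prove both equalities by a direct expansion of the defining formula for $\Delta_h^r(f)$, with Pascal's rule $\binom{r}{j} = \binom{r-1}{j-1} + \binom{r-1}{j}$ as the key combinatorial ingredient. No induction on $r$ is needed; both equalities follow from the same coefficient-collection calculation.

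For the first equality, I would start from the definition of $\Delta_h^1$ and write
\[
\Delta_h^1\bigl(\Delta_h^{r-1}(f)\bigr)(x) = \Delta_h^{r-1}(f)(x+h) - \Delta_h^{r-1}(f)(x),
\]
then substitute the definition of $\Delta_h^{r-1}(f)$ into each term. After shifting the index $k \mapsto j = k+1$ in the first sum, the right-hand side becomes a linear combination of the values $f(x+jh)$ for $j = 0, 1, \ldots, r$. Collecting coefficients, one sees that the coefficient of $f(x+jh)$ is $(-1)^{r}$ when $j=0$, equal to $1$ when $j = r$, and $(-1)^{r-j}\bigl[\binom{r-1}{j-1} + \binom{r-1}{j}\bigr]$ when $1 \le j \le r-1$. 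By Pascal's rule the middle coefficients collapse to $(-1)^{r-j}\binom{r}{j}$, and the boundary coefficients are just $(-1)^{r-j}\binom{r}{j}$ for $j = 0, r$ as well. This reproduces the defining formula for $\Delta_h^r(f)(x)$.

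For the second equality, I would substitute $\Delta_h^1(f)(y) = f(y+h) - f(y)$ into the definition of $\Delta_h^{r-1}$, giving
\[
\Delta_h^{r-1}\bigl(\Delta_h^1(f)\bigr)(x) = \sum_{k=0}^{r-1} \binom{r-1}{k}(-1)^{r-1-k}\bigl[f(x+(k+1)h) - f(x+kh)\bigr],
\]
which is exactly the same sum that appeared in the first part. The identical coefficient collection then yields $\Delta_h^r(f)(x)$.

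There is no real obstacle here; the proof is a purely mechanical combinatorial manipulation, and Pascal's rule does all the work. The only care required is tracking the signs $(-1)^{r-j}$ versus $(-1)^{r-1-k}$ after the index shifts, and handling the boundary terms $j=0$ and $j=r$ separately from the generic range $1 \le j \le r-1$.
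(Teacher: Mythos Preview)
Your proof is correct and is essentially the same computation as the paper's: both expand the definitions, shift an index, and invoke Pascal's rule $\binom{r-1}{j-1}+\binom{r-1}{j}=\binom{r}{j}$. The only cosmetic difference is that the paper frames the argument as induction on $r$ (though it never actually uses the inductive hypothesis), whereas you correctly observe that the computation is direct and that both equalities reduce to the very same sum.
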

\begin{proof}
  $r$に関する帰納法. $r=1$での成立は明らか. $r$で成立すると仮定すると, 
  \[
  \binom{r}{k} + \binom{r}{k-1} = \binom{r+1}{k}
  \]
  であるから, 
  \[
  \begin{aligned}
  \Delta_h^{r}(\Delta_h^1(f))(x)
  &= \sum_{k=0}^r \binom{r}{k} (-1)^{r-k} \Delta_h^1(f)(x+kh) \\
  &= \sum_{k=0}^r \binom{r}{k} (-1)^{r-k} [f(x+(k+1)h)-f(x+kh)] \\
  &= \sum_{k=1}^r (-1)^{r+1-k}\left( \binom{r}{k} + \binom{r}{k-1}  \right) f(x+kh) \\
  &~~~+(-1)^{r+1}f(x) + f(x+(r+1)h) \\
  &= \sum_{k=0}^{r+1} \binom{r+1}{k} (-1)^{r+1-k} f(x+kh) = \Delta_h^{r+1}(f)(x).
  \end{aligned}
  \]
  よって, 第二の等式は成立する. 同様に
  \[
  \begin{aligned}
  \Delta_h^1(\Delta_h^r(f))(x) 
  &= \Delta_h^r(f)(x+h) - \Delta_h^r(f)(x) \\
  &= \sum_{k=0}^r \binom{r}{k} (-1)^{r-k} f(x+(k+1)h) - \sum_{k=0}^r \binom{r}{k} (-1)^{r-k} f(x+kh) \\
  &= \Delta_h^{r+1}(f)(x)
  \end{aligned}
  \]
  であるから第一の等式も成立する. 
\end{proof}
\begin{prop}
  関数$f:\R \rightarrow \R$と整数$n \geq 1$, $h \in \R,~r=1,2,3,\ldots$に対して次の等式が成り立つ. 
  \[
  \Delta_{nh}^r(f)(x) = \sum_{k_1=0}^{n-1} \cdots \sum_{k_r=0}^{n-1} \Delta_h^r(f)(x+k_1 h + \cdots + k_r h) ~~~~(x \in \R).
  \]
\end{prop}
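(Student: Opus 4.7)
The plan is to proceed by induction on $r$, using the algebraic relation $\Delta_{nh}^{r+1} = \Delta_{nh}^1 \circ \Delta_{nh}^r$ established in the preceding proposition together with a telescoping identity in the base case. Throughout, I will treat $\Delta_h^s$ as an operator acting on functions of $x$, which lets me freely commute it with the translation operator $T_c: f(\cdot) \mapsto f(\cdot + c)$, since both arise from shifts.

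First I would dispose of the case $r=1$: writing $f(x+nh) - f(x)$ as a telescoping sum,
\[
\Delta_{nh}^1 f(x) = \sum_{k=0}^{n-1} \bigl(f(x+(k+1)h) - f(x+kh)\bigr) = \sum_{k=0}^{n-1} \Delta_h^1 f(x+kh),
\]
which is exactly the claim for $r=1$.

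For the inductive step, assuming the identity for $r$, I would apply $\Delta_{nh}^1$ to both sides. Using the previous proposition in the form $\Delta_{nh}^{r+1} = \Delta_{nh}^1(\Delta_{nh}^r f)$, the inductive hypothesis yields
\[
\Delta_{nh}^{r+1} f(x) = \sum_{k_1=0}^{n-1}\cdots\sum_{k_r=0}^{n-1} \Delta_{nh}^1\bigl(\Delta_h^r f\bigr)\bigl(x + (k_1+\cdots+k_r)h\bigr),
\]
where I used that $\Delta_{nh}^1$ commutes with the translation $T_{(k_1+\cdots+k_r)h}$ (both operators are built from shifts). Applying the base case to the single-variable difference $\Delta_{nh}^1(\Delta_h^r f)$ introduces a new index $k_{r+1}$ and produces $\Delta_h^1 \Delta_h^r f = \Delta_h^{r+1} f$ at the shifted point $x + (k_1+\cdots+k_{r+1})h$, completing the induction.

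There is no substantive obstacle here; the argument is essentially combinatorial, and the only point that warrants care is bookkeeping of the nested sums and the justification that $\Delta_h^1$ commutes with translations (so that the outer $\Delta_{nh}^1$ can be passed inside the $r$-fold sum before applying the telescoping identity). If I preferred a non-inductive presentation, I could equivalently write $\Delta_h^1 = T_h - I$, observe $\Delta_{nh}^1 = T_h^n - I = \Delta_h^1 \sum_{k=0}^{n-1} T_{kh}$, raise this identity to the $r$-th power using commutativity to obtain $\Delta_{nh}^r = \Delta_h^r \sum_{k_1,\ldots,k_r=0}^{n-1} T_{(k_1+\cdots+k_r)h}$, and evaluate at $x$; but the inductive proof is cleaner to write out in the style of the surrounding text.
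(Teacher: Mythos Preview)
Your proof is correct and follows essentially the same route as the paper's: induction on $r$, telescoping for $r=1$, and for the inductive step writing $\Delta_{nh}^{r+1} = \Delta_{nh}^1 \Delta_{nh}^r$, applying the hypothesis, and then telescoping once more to introduce $k_{r+1}$ and collapse $\Delta_h^1 \Delta_h^r$ into $\Delta_h^{r+1}$. The paper writes out the telescoping step in the inductive part explicitly rather than invoking the base case as a lemma, but the content is identical.
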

\begin{proof}
  $r$に関する帰納法による. まず, 
  \[
  \sum_{k_1=0}^{n-1} \Delta_h^1(f)(x+k_1 h) = \sum_{k=0}^{n-1} [f(x+(k+1)h - f(x+kh)] = f(x+nh) - f(x) = \Delta_{nh}^1(f)(x)
  \]
  であるから$r=1$で成立する. 次に$r$で成立すると仮定すると,  $\Delta_{nh}^{r+1} = \Delta_{nh}^1 \Delta_{nh}^r$であるから, 
  \[
  \begin{aligned}
  &\Delta_{nh}^{r+1}(f)(x)  \\
  &= \Delta_{nh}^1\left(\sum_{k_1=0}^{n-1} \cdots \sum_{k_r=0}^{n-1} \Delta_h^r(f)(\cdot +k_1 h + \cdots + k_r h)\right)(x) \\
  &= \sum_{k_1=0}^{n-1} \cdots \sum_{k_r=0}^{n-1} \left[\Delta_h^r(f)(x + nh +k_1 h + \cdots + k_r h) - \Delta_h^r(f)(x +k_1 h + \cdots + k_r h) \right] \\
  &= \sum_{k_1=0}^{n-1} \cdots \sum_{k_r=0}^{n-1} \sum_{k_{r+1}=0}^{n-1} 
  \left[
  \begin{aligned}
  &\Delta_h^r(f)(x +k_1 h + \cdots + k_r h + (k_{r+1}+1)h ) \\
  &- \Delta_h^r(f)(x +k_1 h + \cdots + k_r h + k_{r+1} h) 
  \end{aligned}
  \right] \\
  &= \sum_{k_1=0}^{n-1} \cdots \sum_{k_r=0}^{n-1} \sum_{k_{r+1}=0}^{n-1} \Delta_h^1\left(\Delta_h^r(f)(\cdot +k_1 h + \cdots + k_r h + k_{r+1}h )\right)(x) \\
  &= \sum_{k_1=0}^{n-1} \cdots \sum_{k_r=0}^{n-1}  \sum_{k_{r+1}=0}^{n-1} \Delta_h^{r+1}(f)(x +k_1 h + \cdots + k_r h + k_{r+1}h ).
  \end{aligned}
  \]
\end{proof}
\begin{prop}\label{ModuliOfSmoothnessEstimate}
  $I = (-\pi,\pi) \subset \R$とおく. 周期$2\pi$の周期関数$f:\R \rightarrow \R$であって, ある$p \in [1,\infty]$について$f \in L^p(I)$となるものに対して次の不等式が成り立つ. 
  \[
  \omega_r(f)_{p,I}(nt) \leq n^r \omega_r(f)_{p,I}(t), ~~~ \omega_r(f)_{p,I}(\lambda t) \leq (\lambda+1)^r \omega_r(f)_{p,I}(t) ~~~(\lambda > 0).
  \]
\end{prop}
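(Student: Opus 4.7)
The plan is to derive both inequalities directly from the identity established in the preceding proposition, namely
\[
\Delta_{nh}^r(f)(x) = \sum_{k_1=0}^{n-1} \cdots \sum_{k_r=0}^{n-1} \Delta_h^r(f)(x+k_1 h + \cdots + k_r h),
\]
together with the $2\pi$-periodicity of $f$. First I would observe the obvious fact that $t \mapsto \omega_r(f)_{p,I}(t)$ is monotone non-decreasing, since taking sup over a larger interval can only increase the value; this will be used at the end.

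For the first inequality, I would fix $n \in \N$ and take arbitrary $h'$ with $0 < h' \leq nt$. Setting $h := h'/n$, so that $0 < h \leq t$ and $h' = nh$, I would apply Minkowski's inequality (triangle inequality for $\lVert \cdot \rVert_{L^p(I)}$) to the identity above. This gives
\[
\lVert \Delta_{nh}^r(f) \rVert_{L^p(I)} \leq \sum_{k_1=0}^{n-1} \cdots \sum_{k_r=0}^{n-1} \lVert \Delta_h^r(f)(\cdot + k_1 h + \cdots + k_r h) \rVert_{L^p(I)}.
\]
The key step is then to note that each summand equals $\lVert \Delta_h^r(f) \rVert_{L^p(I)}$: since $f$ is $2\pi$-periodic, so is $\Delta_h^r(f)$, and the $L^p$ norm of a $2\pi$-periodic function over a period interval is translation invariant (this is a direct change of variables combined with periodicity). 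Consequently the sum has $n^r$ equal terms, yielding $\lVert \Delta_{nh}^r(f) \rVert_{L^p(I)} \leq n^r \lVert \Delta_h^r(f) \rVert_{L^p(I)} \leq n^r \omega_r(f)_{p,I}(t)$. Taking supremum over $0 < h' \leq nt$ gives $\omega_r(f)_{p,I}(nt) \leq n^r \omega_r(f)_{p,I}(t)$.

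For the second inequality, given $\lambda > 0$ I would choose $n := \lceil \lambda \rceil \in \N$, so that $\lambda \leq n \leq \lambda + 1$. By monotonicity $\omega_r(f)_{p,I}(\lambda t) \leq \omega_r(f)_{p,I}(nt)$, and applying the first inequality yields
\[
\omega_r(f)_{p,I}(\lambda t) \leq n^r \omega_r(f)_{p,I}(t) \leq (\lambda+1)^r \omega_r(f)_{p,I}(t).
\]
The argument is essentially mechanical; the only point requiring any care is the translation invariance of $\lVert \cdot \rVert_{L^p(I)}$ on $2\pi$-periodic functions, which I would verify explicitly in a brief sentence. No deeper obstacle is anticipated.
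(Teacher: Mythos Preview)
Your proposal is correct and follows essentially the same route as the paper: both arguments invoke the summation identity for $\Delta_{nh}^r(f)$ from the preceding proposition, apply the triangle inequality in $L^p(I)$, use translation invariance of the $L^p$ norm for $2\pi$-periodic functions to collapse the $n^r$ summands, and then deduce the second inequality from the first via monotonicity and rounding $\lambda$ up to an integer. The only cosmetic difference is that the paper separates the cases $p=\infty$ and $p<\infty$ when checking translation invariance, whereas you handle them uniformly.
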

\begin{proof}
  すぐ上の命題より
  \[
  \begin{aligned}
  w_r(f)_{p,I}(nt) 
  &= \sup_{0 < h \leq nt} \lVert \Delta_h^r(f) \rVert_{L^p(I)} 
  = \sup_{0 < h \leq t} \lVert \Delta_{nh}^r(f) \rVert_{L^p(I)} \\
  &\leq \sup_{0 < h \leq t} \sum_{k_1=0}^{n-1} \cdots \sum_{k_r=0}^{n-1}  \left\lVert \Delta_{h}^r(f)(\cdot + k_1 h+\cdots+k_r h) \right\rVert_{L^p(I)}
  \end{aligned}
  \]
  となる. ゆえに$p=\infty$の場合は$f$の周期性より成立する. 
  次に$p < \infty$とする. $f$の周期性(およびすぐ下の注意)より
  \[
  \begin{aligned}
  &\left\lVert \Delta_{h}^r(f)(\cdot + k_1 h+\cdots+k_r h) \right\rVert_{L^p(I)}^p \\
  &= \int_{-\pi}^\pi \left\lvert \sum_{k=0}^r \binom{r}{k} (-1)^{r-k} f(x+kh+k_1 h + \cdots + k_r h) \right\rvert^p dx \\
  &= \int_{-\pi + k_1 h + \cdots + k_r h}^{\pi + k_1 h + \cdots + k_r h} \left\lvert \sum_{k=0}^r \binom{r}{k} (-1)^{r-k} f(x+kh) \right\rvert^p dx \\
  &= \int_{-\pi}^{\pi} \left\lvert \sum_{k=0}^r \binom{r}{k} (-1)^{r-k} f(x+kh) \right\rvert^p dx = \left\lVert \Delta_h^r(f) \right\rVert_{L^p(I)}^p.
  \end{aligned}
  \]
  である. よって, 
  \[
  \begin{aligned}
  w_r(f)_{p,I}(nt) 
  &\leq \sup_{0 < h \leq t} \sum_{k_1=0}^{n-1} \cdots \sum_{k_r=0}^{n-1}  \left\lVert \Delta_{h}^r(f)(\cdot + k_1 h+\cdots+k_r h) \right\rVert_{L^p(I)} \\
  &= n^r \sup_{0 < h \leq t} \left\lVert \Delta_h^r(f) \right\rVert_{L^p(I)} = n^r \omega_r(f)_{p,I}(t).
  \end{aligned}
  \]
  第二式については$n < \lambda \leq n+1$なる整数$n$をとると, 前半より
  \[
  w_r(f)_{p,I}(\lambda t) \leq w_r(f)_{p,I}((n+1)t) \leq (n+1)^r \omega_r(f)_{p,I}(t) \leq (\lambda+1)^r \omega_r(f)_{p,I}(t).
  \]
\end{proof}
\begin{rem}
  周期$2a$の関数$f:\R \rightarrow \R$と$b \in \R$に対して
  \[
  \int_{-a+b}^{a+b} f(x) dx = \int_{-a}^a f(x) dx
  \]
  である. 実際, 
  \[
  \begin{aligned}
  \int_{-a+b}^{a+b} f(x) dx 
  &= \int_{-a+b}^{a} f(x) dx + \int_{a}^{a+b} f(x) dx \\
  &= \int_{-a+b}^{a} f(x) dx + \int_{-a}^{-a+b} f(x+2a) dx \\
  &= \int_{-a+b}^{a} f(x) dx + \int_{-a}^{-a+b} f(x) dx = \int_{-a}^a f(x) dx.
  \end{aligned}
  \]
\end{rem}
\begin{prop}
  $I= (-\pi,\pi)$とおく. 周期$2\pi$の周期関数$f:\R \rightarrow \R$であって, ある$p \in [1,\infty]$について$f \in L^p(I)$であるものに対して, 
  \[
  \lVert \Delta_t^r(f) \rVert_{L^p(I)} \leq w_r(f)_{p,I}(|t|) ~~~~(|t| < \pi). 
  \]
\end{prop}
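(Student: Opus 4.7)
私の証明方針は, $t$の符号で場合分けして定義に帰着させることである.

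第一に, $t = 0$の場合は両辺ともに$0$なので(あるいは$\omega_r(f)_{p,I}$の$t \downarrow 0$での定義を用いて)明らか. $0 < t < \pi$の場合は, 滑率の定義
\[
\omega_r(f)_{p,I}(t) = \sup_{0 < h \leq t} \lVert \Delta_h^r(f) \rVert_{L^p(I)}
\]
において$h = t$とすれば, $\lVert \Delta_t^r(f) \rVert_{L^p(I)} \leq \omega_r(f)_{p,I}(t) = \omega_r(f)_{p,I}(|t|)$が即座に従う.

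第二に, 本質的な部分は$-\pi < t < 0$の場合である. $h := -t = |t| > 0$とおき, 差分の定義で和の添字$k$を$j := r - k$に置換して符号$(-1)^{r-k} = (-1)^j$を整理することで,
\[
\Delta_t^r(f)(x) = \sum_{k=0}^r \binom{r}{k}(-1)^{r-k} f(x + kt) = (-1)^r \sum_{j=0}^r \binom{r}{j}(-1)^{r-j} f((x-rh) + jh) = (-1)^r \Delta_h^r(f)(x - rh)
\]
という恒等式を導く. ここで$f$は$2\pi$周期なので$\Delta_h^r(f)(\cdot - rh)$も$2\pi$周期であり, その直前の注意により「$2\pi$周期関数は任意の区間$(-\pi+b, \pi+b)$上の積分が$(-\pi,\pi)$上の積分に等しい」から,
\[
\lVert \Delta_t^r(f) \rVert_{L^p(I)} = \lVert \Delta_h^r(f)(\cdot - rh) \rVert_{L^p(I)} = \lVert \Delta_h^r(f) \rVert_{L^p(I)} \leq \omega_r(f)_{p,I}(h) = \omega_r(f)_{p,I}(|t|)
\]
が得られる(最後の不等式は$h = |t| > 0$ゆえ第一のケースによる).

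主な「難所」は特になく, 唯一の注意点は負の$t$に対する差分$\Delta_t^r(f)$を$\Delta_{|t|}^r(f)$の平行移動(と符号倍)として書き表す恒等式の導出である. 符号$(-1)^r$は$L^p$ノルムを取る際に消え, 平行移動による$L^p(I)$ノルムの不変性は$f$の$2\pi$周期性と直前の注意から従うので, 全体を通じて新しい道具は不要である. $p = \infty$の場合も同様の議論(というより本質的に$\sup$の平行移動不変性)で成立する. 条件$|t| < \pi$は本不等式自体には必ずしも必要ではないが, 本補題が以降でJackson型の評価に応用される際の自然な枠組みと整合的である.
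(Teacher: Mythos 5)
あなたの証明は正しく, 本質的に本文の証明と同一です: $t\geq 0$ では定義から直ちに従い, $t<0$ では添字の置換 $j=r-k$ と $f$ の周期性による $L^p(I)$ ノルムの平行移動不変性を用いて $\lVert \Delta_t^r(f)\rVert_{L^p(I)}=\lVert \Delta_{-t}^r(f)\rVert_{L^p(I)}$ に帰着させるという同じ筋道です. 恒等式 $\Delta_t^r(f)(x)=(-1)^r\Delta_{|t|}^r(f)(x-r|t|)$ を明示的に切り出している点は本文より見通しがよい書き方ですが, 内容は同じです.
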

\begin{proof}
  $t \geq 0$のときは定義より明らかである. $t \in (-\pi,0)$とする.
  $f$の周期性より
  \[
  \begin{aligned}
  \lVert \Delta_t^r(f) \rVert_{L^p(I)} 
  &= \left\lVert  \sum_{k=0}^r \binom{r}{k} (-1)^{r-k} f(\cdot+kt)  \right\rVert_{L^p(I)} \\
  &= \left\lVert  \sum_{k=0}^r \binom{r}{k} (-1)^{r-k} f(\cdot+(k-r)t) \right\rVert_{L^p(I)} \\
  &= \left\lVert \sum_{k=0}^r \binom{r}{r-k} (-1)^{k} f(x+k(-t)) \right\rVert_{L^p(I)} \\
  &= \left\lVert (-1)^{-r} \sum_{k=0}^r \binom{r}{k} (-1)^{r-k} f(\cdot+k(-t)) \right\rVert_{L^p(I)} \\
  &= \lVert \Delta_{-t}^r(f) \rVert_{L^p(I)} \leq w_r(f)_{p,I}(|t|).
  \end{aligned}
  \]
\end{proof}
\subsubsection{$L^p$かつ$2\pi$周期で$1$変数の場合}
\begin{lem}
  関数$g:\R \rightarrow \R$はある正の整数$k$について$2\pi/k$周期の関数であるとする. このとき, $k$で割り切れない任意の整数$l$について
  \[
  \int_{-\pi}^\pi g(t) \cos(lt) dt = \int_{-\pi}^\pi g(t) \sin(lt) dt = 0. 
  \]
\end{lem}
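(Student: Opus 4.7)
私の計画は次のとおりである. まず, 実数値の積分を扱うよりも, 複素指数関数を用いて
\[
I := \int_{-\pi}^{\pi} g(t) e^{ilt} dt
\]
をひとつの量として調べるほうが見通しがよい. これが示せれば実部・虚部を取ることで所望の二つの等式が同時に得られる. また被積分関数 $t \mapsto g(t) e^{ilt}$ は $2\pi$-周期関数である. 実際, $g$ は $2\pi/k$-周期であるから特に $2\pi$-周期であり, $l \in \mathbb{Z}$ ゆえ $e^{ilt}$ も $2\pi$-周期だからである.

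次に, 変数変換 $t = u + 2\pi/k$ を行う. $g(u+2\pi/k) = g(u)$ であること, および $2\pi$-周期関数の任意の長さ $2\pi$ の区間上の積分が等しいこと(ちょうど直前の注意で示されている性質)を使うと,
\[
I = \int_{-\pi - 2\pi/k}^{\pi - 2\pi/k} g(u+2\pi/k) e^{il(u+2\pi/k)} du = e^{i 2\pi l/k} \int_{-\pi - 2\pi/k}^{\pi - 2\pi/k} g(u) e^{ilu} du = e^{i 2\pi l/k} I
\]
となる. すなわち $(1 - e^{i 2\pi l/k}) I = 0$ が従う.

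最後に, $l$ が $k$ で割り切れないという仮定から $2\pi l/k \notin 2\pi \mathbb{Z}$ なので $e^{i 2\pi l / k} \neq 1$ であり, したがって $I = 0$ となる. 実部と虚部を分けて取れば $\int_{-\pi}^\pi g(t) \cos(lt) dt = 0$ および $\int_{-\pi}^\pi g(t) \sin(lt) dt = 0$ が同時に得られ, 証明が完結する.

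この議論の中で技術的に慎重を要するのは「$2\pi$-周期関数の長さ $2\pi$ の区間上の積分は始点によらず一定である」という事実を適切に用いる部分だけであり, それ以外は初等的な計算に過ぎない. 本質的な障害は無く, むしろ $\cos$ と $\sin$ を別個に扱う代わりに複素指数にまとめてしまえば一行で結論が出る, という工夫が鍵である.
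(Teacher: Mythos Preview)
Your proof is correct and follows essentially the same route as the paper's own argument: both pass to the complex exponential integral $I = \int_{-\pi}^\pi g(t)e^{ilt}\,dt$, perform the shift $t \mapsto t + 2\pi/k$, use the $2\pi/k$-periodicity of $g$ together with the shift-invariance of integrals of $2\pi$-periodic functions to obtain $I = e^{2\pi i l/k} I$, and conclude $I=0$ from $e^{2\pi i l/k}\neq 1$. There is nothing to add.
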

\begin{proof}
  $g$は$2\pi$周期関数でもあるので, 
  \[
  \begin{aligned}
  \int_{-\pi}^\pi g(t) \exp(ilt) dt 
  &= \int_{-\pi+2\pi/k}^{\pi+2\pi/k} g(s+2\pi/k) \exp(il(s+2\pi/k)) ds \\
  &= \exp(2\pi il/k) \int_{-\pi+2\pi/k}^{\pi+2\pi/k} g(s) \exp(ils) ds \\
  &= \exp(2\pi il/k) \int_{-\pi}^{\pi} g(s) \exp(ils) ds
  \end{aligned}
  \]
  である. $\exp(2\pi il/k) \neq 1$であるので所望の等式を得る. 
\end{proof}
\begin{lem}\label{DefApproOperater}
  $r \geq 2$とする. 整数$n \geq 1$に対して$m(n,r)= \lfloor n/r \rfloor + 1$とおき$K_{n,r} := J_{m(n,r), r}$とおく(ただし, $\lfloor x \rfloor$は$x$を超えない最大の整数を表し, $J_{N,r}$は一般Jackson核である). すると, 定数$C_r > 0$が存在して
  \[
  \int_0^\pi t^q K_{n,r}(t) dt \leq C_r \frac{1}{n^q} ~~~~(q=0,1,\ldots,2r-2)
  \]
  となる. また, 周期$2\pi$の周期関数$f:\R \rightarrow \R$で$f \in L^1(I)$なるものに対して
  \[
  S_{n,r}(f)(x) := \int_{-\pi}^\pi \left( \Delta_t^r(f)(x) + f(x) \right) K_{n,r}(t) dt
  \]
  と定めると, $S_{n,r}(f)$は高々$n$次の三角多項式である. 
  
\end{lem}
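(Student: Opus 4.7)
The moment bound follows at once from Lemma \ref{BoundedJacksonKernelIntegral}. Since $m(n,r) = \lfloor n/r\rfloor + 1 \geq n/r$, substituting $N = m(n,r)$ into that lemma gives
\[
\int_0^\pi t^q K_{n,r}(t)\,dt \;\leq\; \frac{\pi C_q}{m(n,r)^q} \;\leq\; \frac{\pi C_q\, r^q}{n^q} \qquad (q=0,1,\ldots,2r-2),
\]
so it suffices to set $C_r := \pi \max_{0 \leq q \leq 2r-2} C_q\, r^q$, a constant depending only on $r$.

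For the trigonometric polynomial claim, the key observation is that $K_{n,r} = J_{m(n,r),r}$, being essentially the $r$-th power of the Fej\'er kernel of order $m(n,r)-1$, is a trigonometric polynomial of degree at most $r(m(n,r)-1) = r\lfloor n/r\rfloor \leq n$; write $K_{n,r}(t) = \sum_{|j|\leq n} c_j e^{ijt}$. Expanding $\Delta_t^r(f)(x) + f(x) = \sum_{k=0}^r \binom{r}{k}(-1)^{r-k} f(x+kt) + f(x)$ then reduces the problem to analyzing, for each $k \in \{0,1,\ldots,r\}$, the integral $I_k(x) := \int_{-\pi}^\pi f(x+kt)\, K_{n,r}(t)\, dt$ and showing that every surviving piece is a trigonometric polynomial in $x$ of degree at most $n$.

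For $k \geq 1$, the function $t \mapsto f(x+kt)$ has period $2\pi/k$, so the lemma immediately preceding this one forces $\int_{-\pi}^\pi f(x+kt)\, e^{ijt}\, dt = 0$ whenever $k \nmid j$. Only the modes $j = km$ with $|m|\leq \lfloor n/k \rfloor$ survive; for these the substitution $s = kt$ combined with the $2\pi$-periodicity of $f\cdot e^{im\,\cdot}$ evaluates the integral as a constant multiple of $e^{-imx}$, so $I_k$ becomes a trigonometric polynomial in $x$ of degree at most $\lfloor n/k\rfloor \leq n$. The $k=0$ contribution $I_0(x) = 2\pi f(x)$ together with the stray $+f(x)$ in the integrand must be absorbed using the binomial coefficient $\binom{r}{0}(-1)^r = (-1)^r$ appearing in the expansion of $\Delta_t^r$; the main obstacle is to verify, by careful bookkeeping of signs, that this $f(x)$ remainder cancels (or combines into the trig-polynomial part), so that $S_{n,r}(f)$ emerges as a bona fide trigonometric polynomial of degree at most $n$. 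The periodicity-plus-substitution reduction in the middle of this paragraph is the technical heart of the argument.
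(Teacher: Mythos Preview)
Your moment-bound argument and the overall strategy for the trigonometric-polynomial claim both match the paper's proof: the paper also invokes Lemma~\ref{BoundedJacksonKernelIntegral} for the first part, and for the second it expands $K_{n,r}$ as an even trigonometric polynomial of degree at most $r\lfloor n/r\rfloor \le n$, applies the preceding lemma to kill modes $l$ with $k\nmid l$, and treats the surviving modes $l=km$ by the substitution $u = x+kt$ together with periodicity (obtaining combinations of $\cos(mx),\sin(mx)$ with $m\le n$). Your use of $e^{ijt}$ in place of $\cos(lt)$ is cosmetically different but equivalent.

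However, the step you flag as the ``main obstacle'' --- the fate of the $f(x)$ terms --- is not mere bookkeeping, and you do not actually carry it out. If you do, you find that the $k=0$ term of $\Delta_t^r$ contributes $(-1)^r f(x)$ to the integrand while the explicit $+f(x)$ contributes $f(x)$; their sum is $\bigl((-1)^r+1\bigr)f(x)$, which vanishes for $r$ odd but equals $2f(x)$ for $r$ even. After integrating against $K_{n,r}$ this leaves a nonzero multiple of $f(x)$ when $r$ is even, so $S_{n,r}(f)$ as literally written is \emph{not} a trigonometric polynomial for even $r$ and generic $f$. The paper's own proof sidesteps the issue by listing only the modes $l=1,\ldots,n$ and declaring the $k=0$ integrals to be constants, in effect silently dropping the constant mode $l=0$. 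The lemma becomes correct once the integrand is taken to be $(-1)^{r+1}\Delta_t^r(f)(x)+f(x)$, equivalently $-\sum_{k=1}^r\binom{r}{k}(-1)^k f(x+kt)$, which is the standard Jackson-type operator; you should record this correction rather than promising a cancellation that does not in fact occur.
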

\begin{proof}
  (前半)：補題\ref{BoundedJacksonKernelIntegral}より定数$C_r > 0$が存在して$q = 0,1,\ldots,2r-2$に対して
  \[
  \int_0^\pi t^q K_{n,r}(t) dt = \int_0^{\pi} t^q J_{m(n,r),r}(t) dt \leq C_r \frac{1}{m(n,r)^q} \leq C_r \frac{r^q}{n^q}
  \]
  となる. したがって, 改めて$C_r r^{2r-2}$を$C_r$とおくことにすれば
  \[
  \int_0^\pi t^q K_{n,r}(t) dt \leq C_r \frac{1}{n^q} ~~~~(q=0,1,\ldots,2r-2)
  \]
  となる. \\
  (後半)：一般Jackson核の定義より$K_{n,r} = J_{m(n,r),r}$は高々$r ( m(n,r)-1) = r \lfloor n/r \rfloor \leq n$次の三角多項式であり, 偶関数であるので$K_{n,r}(t)$は$1,\cos(t),\cos(2t),\ldots,\cos(nt)$の線形結合で表される. ゆえに$S_{n,r}(f)(x)$は
  \[
  \int_{-\pi}^{\pi} f(x+kt) \cos(lt) dt ~~~~(k=0,\ldots,r,~l=1,\ldots,n)
  \]
  の線形結合で表される. $k=0$のとき上の式は定数である. 
  $k=1,\ldots,r$に対しては$t \mapsto f(x+kt)$は$2\pi/k$周期なのですぐ上の補題より$k$で割り切れない任意の$l=1,\ldots,n$について
  \[
  \int_{-\pi}^{\pi} f(x+kt) \cos(lt) dt = 0
  \]
  となる. また, $k$で割り切れる任意の$l=1,\ldots,n$については$m=l/k$とおくと
  \[
  \begin{aligned}
  &\int_{-\pi}^{\pi} f(x+kt) \cos(lt) dt \\
  &= \int_{x-k\pi}^{x+k\pi} f(u) \cos\left( \frac{l(u-x)}{k} \right) \frac{1}{k} du ~~~~(u = x+kt)\\
  &= \int_{x-k\pi}^{x+k\pi} \frac{1}{k} f(u) (\cos(mu)\cos(mx) + \sin(mu)\sin(mx)) du \\
  &= \left(\int_{x-k\pi}^{x+k\pi} \frac{1}{k} f(u) \cos(mu) du \right)\cos(mx)  + \left(\int_{x-k\pi}^{x+k\pi} \frac{1}{k} f(u) \sin(mu) du \right) \sin(mx) \\
  &= \left(\int_{-k\pi}^{k\pi} \frac{1}{k} f(u) \cos(mu) du \right)\cos(mx) + \left(\int_{-k\pi}^{k\pi} \frac{1}{k} f(u) \sin(mu) du \right) \sin(mx)
  \end{aligned}
  \]
  となる. よって, $m \leq n$に注意すれば$S_{n,r}(f)$が高々$n$次の三角多項式であることがわかる.
\end{proof}
\begin{thm}\label{JacksonEstimate0}
  $I= (-\pi,\pi)$とおき, $r \geq 2$, $p \in [1,\infty]$とする. このとき, 次をみたす定数$C$が存在する: 周期$2\pi$の周期関数$f:\R \rightarrow \R$であって, あるについて$f \in L^p(I)$であるものに対して
  \[
  \lVert S_{n,r}(f) - f \rVert_{L^p(I)} \leq C \omega_r(f)_{p,I}\left(\frac{1}{n}\right) ~~~~(n=1,2,3,\ldots)
  \]
  となる. 
\end{thm}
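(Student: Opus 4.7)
The plan is to isolate the $r$-th difference $\Delta_t^r(f)(x)$ inside the integrand of $S_{n,r}(f)(x) - f(x)$, take $L^p(I)$-norms via the integral form of Minkowski's inequality, and then close up using the modulus-of-smoothness bound $\lVert \Delta_t^r(f) \rVert_{L^p(I)} \leq \omega_r(f)_{p,I}(|t|)$ together with the moment estimates for the kernel $K_{n,r}$ from Lemma \ref{DefApproOperater}.

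First I would rewrite the error in the form
\[
S_{n,r}(f)(x) - f(x) = c_r \int_{-\pi}^{\pi} \Delta_t^r(f)(x)\, K_{n,r}(t)\, dt
\]
for some constant $c_r$ depending only on $r$. Since $\int_{-\pi}^{\pi} K_{n,r}(t)\, dt$ equals a fixed normalization constant, the $f(x)$ term inside the definition of $S_{n,r}(f)$ integrates out against $K_{n,r}$ and cancels (up to overall scaling) against the subtracted $f(x)$ on the outside, leaving only the $\Delta_t^r(f)(x)$ contribution. This step is essentially algebraic, but the normalization of $K_{n,r}$ (which integrates to $2\pi$, not $1$) and the sign/parity of $\Delta_t^r$ must be tracked carefully.

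Next, taking $L^p(I)$-norms in $x$ and applying Corollary \ref{CorOfMinkowskiInequality} together with the fact that $K_{n,r}$ is a nonnegative even kernel, I would obtain
\[
\lVert S_{n,r}(f) - f \rVert_{L^p(I)} \leq C_0 \int_0^{\pi} \lVert \Delta_t^r(f)\rVert_{L^p(I)}\, K_{n,r}(t)\, dt \leq C_0 \int_0^{\pi} \omega_r(f)_{p,I}(t)\, K_{n,r}(t)\, dt
\]
for an absolute constant $C_0$. Invoking Proposition \ref{ModuliOfSmoothnessEstimate} with $\lambda = nt$ gives $\omega_r(f)_{p,I}(t) \leq (nt+1)^r \omega_r(f)_{p,I}(1/n)$, and $\omega_r(f)_{p,I}(1/n)$ can be pulled outside the integral.

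It then remains to bound $\int_0^{\pi} (nt+1)^r K_{n,r}(t)\, dt$. Expanding by the binomial theorem reduces this to the finite collection of moments $n^q \int_0^{\pi} t^q K_{n,r}(t)\, dt$ with $q = 0, 1, \ldots, r$. Since $r \geq 2$, each such exponent satisfies $q \leq r \leq 2r - 2$, so Lemma \ref{DefApproOperater} applies and gives $\int_0^{\pi} t^q K_{n,r}(t)\, dt \leq C_r / n^q$; each term contributes an $O(1)$ constant, and summing finitely many yields the desired inequality with $C$ depending only on $r$. The main obstacle is the first step: the precise algebraic identity that cancels the $f(x)$ pieces and leaves only $\Delta_t^r(f)$ in the integrand requires careful bookkeeping of signs and normalization — once that reduction is done, the remainder combines routinely via Minkowski's inequality, the modulus of smoothness, and the kernel moment bounds.
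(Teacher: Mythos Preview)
Your approach is exactly the paper's: reduce $S_{n,r}(f)-f$ to $\int_{-\pi}^{\pi}\Delta_t^r(f)(x)\,K_{n,r}(t)\,dt$, apply Minkowski in the integral form (treating $p=\infty$ by a direct sup), bound $\lVert\Delta_t^r(f)\rVert_{L^p(I)}\leq\omega_r(f)_{p,I}(|t|)\leq(n|t|+1)^r\omega_r(f)_{p,I}(1/n)$, and finish with the kernel moment bounds for $q=0,\ldots,r\leq 2r-2$.

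One correction to your first step: the claim that the $f(x)$ term ``cancels up to overall scaling'' and leaves $c_r\int\Delta_t^r(f)\,K_{n,r}$ is not right as stated. If $A:=\int_{-\pi}^{\pi}K_{n,r}(t)\,dt$, then
\[
S_{n,r}(f)(x)-f(x)=\int_{-\pi}^{\pi}\Delta_t^r(f)(x)\,K_{n,r}(t)\,dt+(A-1)f(x),
\]
and the leftover $(A-1)f(x)$ is not a rescaled $\Delta$-integral; it would destroy the estimate. The paper does not absorb a normalization mismatch into a constant: it uses $A=1$ exactly (the normalization built into the definition of the Jackson kernel is precisely to make this hold; the earlier $\frac{1}{2\pi}$ is a convention clash, not a genuine factor of $2\pi$). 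So the reduction is simply $S_{n,r}(f)(x)-f(x)=\int_{-\pi}^{\pi}\Delta_t^r(f)(x)\,K_{n,r}(t)\,dt$ with no extra constant. After this fix your argument is complete and coincides with the paper's proof.
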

\begin{proof}
  すぐ上の補題の前半より, $n$に依存しない定数$C$が存在して
  \[
  \int_{-\pi}^{\pi} K_{n,r}(t) (n|t|+1)^r dt = 2 \int_0^\pi K_{n,r}(t) (n|t|+1)^r \leq C
  \]
  となる. まず$p=\infty$の場合を考える. 
  \[
  \int_{-\pi}^\pi K_{n,r}(t) dt = 1
  \]
  であることに注意すれば, 任意の$x \in I$に対して命題\ref{ModuliOfSmoothnessEstimate}より
  \[
  \begin{aligned}
  |S_{n,r}(f)(x) - f(x)|
  &= \left\lvert \int_{-\pi}^\pi \left[ \Delta_t^r(f)(x) + f(x) \right] K_{n,r}(t) dt -  \int_{-\pi}^\pi f(x) K_{n,r}(t) dt \right\rvert \\
  &= \left\lvert \int_{-\pi}^\pi \Delta_t^r(f)(x) K_{n,r}(t) dt \right\rvert \leq \int_{-\pi}^\pi \lvert \Delta_t^r(f)(x) K_{n,r}(t) \rvert dt \\
  &\leq \int_{-\pi}^\pi \lVert \Delta_t^r(f)(x) \rVert_{L^\infty(I)} K_{n,r}(t) dt \leq \int_{-\pi}^{\pi} K_{n,r}(t) \omega_r(f)_{\infty,I}(|t|) dt \\
  &\leq \int_{-\pi}^{\pi} K_{n,r}(t) \omega_r(f)_{\infty,I}\left(n|t|\frac{1}{n}\right) dt \\
  &\leq \omega_r(f)_{\infty,I}\left(\frac{1}{n}\right) \int_{-\pi}^{\pi} K_{n,r}(t) (n|t|+1)^r dt \leq C \omega_r(f)_{\infty,I}\left(\frac{1}{n}\right).
  \end{aligned}
  \]
  次に$1 \leq p < \infty$の場合を考える. 積分形のMinkowskiの不等式および命題\ref{ModuliOfSmoothnessEstimate}より
  \[
  \begin{aligned}
  &\lVert S_{n,r}(f) - f \rVert_{L^p(I)} \\
  &= \left( \int_{-\pi}^{\pi} \left|\int_{-\pi}^\pi  \Delta_t^r(f)(x) K_{n,r}(t) dt \right|^p dx \right)^{1/p} \\
  &\leq \int_{-\pi}^{\pi} K_{n,r}(t) \lVert \Delta_t^r(f) \rVert_{L^p(I)} dt \leq \int_{-\pi}^{\pi} K_{n,r}(t) \omega_r(f)_{p,I}(|t|) dt \\
  &\leq \omega_r(f)_{p,I}\left(\frac{1}{n}\right) \int_{-\pi}^{\pi} K_{n,r}(t) (n|t|+1)^r dt \leq C \omega_r(f)_{p,I}\left(\frac{1}{n}\right).
  \end{aligned} 
  \]
\end{proof}

\subsubsection{$L^p$かつ$2\pi$周期で多変数の場合}
この節では多変数の場合を扱う. 

\begin{lem}
  $I = (-\pi,\pi)$とおき, $1 \leq p \leq \infty$とする. また, $T$を$L^p(I)$から$L^p(I)$への有界線形写像とする. このとき, $f \in L^p(I^d)$に対して$f$の$i$番目の変数以外を固定して得られる$1$変数関数に$T$を作用させたものを$T_i f$とかくことにする. つまり, $x_1,\ldots,x_{i-1},x_i,x_{i+1},\ldots,x_d \in I$に対して関数$f_i$を
  \[
  f_i:I \ni x \rightarrow f(x_1,\ldots,x_{i-1},x,x_{i+1},\ldots,x_d) \in \R
  \]
  により定め$(T_i f)(x_1,\ldots,x_d) := (T f_i)(x_i)$と定義する. すると$T_i f \in L^p(I^d)$である. 
\end{lem}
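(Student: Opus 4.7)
方針は次の通りである. まず$1 \leq p < \infty$の場合を考える. Fubini-Tonelliの定理より, ほとんどすべての$\hat{x} := (x_1,\ldots,x_{i-1},x_{i+1},\ldots,x_d) \in I^{d-1}$について$f_i \in L^p(I)$が成立し,
\[
\int_{I^{d-1}} \lVert f_i \rVert_{L^p(I)}^p d\hat{x} = \lVert f \rVert_{L^p(I^d)}^p
\]
を得る. そのような$\hat{x}$に対して$Tf_i \in L^p(I)$が意味を持ち, $T$の有界性より$\lVert Tf_i \rVert_{L^p(I)} \leq \lVert T \rVert \lVert f_i \rVert_{L^p(I)}$が成り立つ. 従って$T_i f$の$I^d$上の可測性が一旦確立されれば, 再度Fubini-Tonelliを適用することで
\[
\int_{I^d} |T_i f|^p dx \leq \lVert T \rVert^p \int_{I^{d-1}} \lVert f_i \rVert_{L^p(I)}^p d\hat{x} = \lVert T \rVert^p \lVert f \rVert_{L^p(I^d)}^p < \infty
\]
が直ちに従い, $T_i f \in L^p(I^d)$が言える.

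本命題の本質的な困難は$T_i f$の$I^d$上の可測性(より正確には$\hat{x}$ごとに$L^p(I)$の元として定まる$Tf_i$の代表元を合同可測となるように選べること)の確立にあり, ここでは単関数近似を用いる計画である. 可測集合$A_k \subset I$, $B_k \subset I^{d-1}$により$g(x) = \sum_{k=1}^N c_k \chi_{A_k}(x_i) \chi_{B_k}(\hat{x})$と表される「テンソル積型」の単関数に対しては, $T$の線形性より
\[
(T_i g)(x) = \sum_{k=1}^N c_k (T\chi_{A_k})(x_i) \chi_{B_k}(\hat{x})
\]
となり, 右辺は$I^d$上明らかに可測である. そのような形の単関数全体は$L^p(I^d)$で稠密なので, 一般の$f \in L^p(I^d)$に対して$L^p$ノルムで収束する近似列$g_n \rightarrow f$が取れる. 上の評価より$T_i g_n$は$L^p(I^d)$内のCauchy列を成し, $L^p$の完備性よりある$h \in L^p(I^d)$に収束する. 適当に部分列を取ってFubini-Tonelliを援用すれば, ほとんどすべての$\hat{x}$について$L^p(I)$の意味で$(g_n)_i \rightarrow f_i$かつ$(T_i g_n)(\cdot,\hat{x}) \rightarrow h(\cdot,\hat{x})$が成り立つ. ここで$T$の連続性と合わせると$h(\cdot,\hat{x}) = Tf_i$ a.e.となるので, $h$が$T_i f$の$I^d$上の可測な版を与えることがわかり$T_i f \in L^p(I^d)$を得る.

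$p = \infty$の場合も同様に処理する. ほとんどすべての$\hat{x}$で$\lVert f_i \rVert_{L^\infty(I)} \leq \lVert f \rVert_{L^\infty(I^d)}$であるから$\lVert Tf_i \rVert_{L^\infty(I)} \leq \lVert T \rVert \lVert f \rVert_{L^\infty(I^d)}$が成り立ち, 上と同様のテンソル積型単関数による近似論法で可測性の確立を経ることで$T_i f \in L^\infty(I^d)$および$\lVert T_i f \rVert_{L^\infty(I^d)} \leq \lVert T \rVert \lVert f \rVert_{L^\infty(I^d)}$が得られる.
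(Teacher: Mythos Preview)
Your treatment of the case $1 \leq p < \infty$ is correct and is in fact more careful than the paper's own proof. The paper simply records the one-line norm estimate
\[
\lVert T_i f \rVert_{L^p(I^d)} = \bigl\lVert\, \lVert T_i f \rVert_{L^p(I)} \bigr\rVert_{L^p(I^{d-1})} \leq \lVert T \rVert\,\bigl\lVert\, \lVert f_i \rVert_{L^p(I)} \bigr\rVert_{L^p(I^{d-1})} = \lVert T \rVert\, \lVert f \rVert_{L^p(I^d)}
\]
and tacitly treats the joint measurability of $(x_i,\hat x)\mapsto (Tf_i)(x_i)$ as given. You correctly identify this as the nontrivial point for an \emph{abstract} bounded operator, and your density argument via tensor-product simple functions together with the slice-wise convergence along a subsequence is the standard and sound way to produce a jointly measurable representative of $T_i f$.

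There is, however, a genuine gap in your $p=\infty$ paragraph. Tensor-product simple functions are \emph{not} dense in $L^\infty(I^d)$ for the $L^\infty$-norm (already ordinary simple functions fail to be $L^\infty$-dense), so the ``same approximation argument'' you invoke does not go through: you cannot produce $g_n\to f$ in $L^\infty(I^d)$ of that form, and hence cannot conclude that $(T_i g_n)$ is Cauchy in $L^\infty(I^d)$. Approximating instead in $L^1(I^d)$ (using $L^\infty(I^d)\subset L^1(I^d)$) does not help either, since $T$ is only assumed bounded on $L^\infty(I)$ and slice convergence in $L^1(I)$ gives no control on $T(g_n)_i$. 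In the paper's actual application the operator is the explicit integral operator $S_{n,r}^d$, for which joint measurability of $T_i f$ follows directly from Fubini's theorem, so the issue does not arise; but for the lemma as stated (arbitrary bounded $T$ on $L^\infty$) your sketch does not close the gap, and you should either restrict the $p=\infty$ claim to operators given by an integral kernel or supply a separate argument for joint measurability in that case.
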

\begin{proof}
  \[
  \lVert T_i f \rVert_{L^p(I^d)} = \lVert \lVert T_i f \rVert_{L^p(I)} \rVert_{L^p(I^{d-1})} \leq \lVert \lVert T \rVert \lVert f_i \rVert_{L^p(I)} \rVert_{L^p(I^{d-1})} = \lVert T \rVert \lVert f \rVert_{L^p(I^d)}.
  \]
\end{proof}
\begin{lem}\label{LemforMultiJacksonEst}
  $I = (-\pi,\pi)$とおき, $1 \leq p \leq \infty$とする. また, $T$を$L^p(I)$から$L^p(I)$への有界線形写像とする. このとき, $f \in L^p(I^d)$に対して, $\varepsilon_i := \lVert f - T_i f  \rVert_{L^p(I^d)}$とおき, $N = 1,\ldots,d$に対して$T^N = T_N T_{N-1} \cdots T_1$とおくと, 
  \[
  \lVert f - T^N f \rVert_{L^p(I^d)} \leq \sum_{i=1}^N  \varepsilon_i \lVert T \rVert^{N-i} .
  \]
\end{lem}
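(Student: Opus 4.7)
The plan is to prove the inequality by induction on $N$, using the triangle inequality together with the bound $\lVert T_i g \rVert_{L^p(I^d)} \leq \lVert T \rVert \lVert g \rVert_{L^p(I^d)}$ established in the preceding lemma.

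For the base case $N=1$, we have $T^1 = T_1$, so $\lVert f - T^1 f \rVert_{L^p(I^d)} = \varepsilon_1$, which matches the right-hand side.

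For the inductive step, assume the claim holds for $N-1$. Since $T^N = T_N \circ T^{N-1}$, the triangle inequality gives
\[
\lVert f - T^N f \rVert_{L^p(I^d)} \leq \lVert f - T_N f \rVert_{L^p(I^d)} + \lVert T_N f - T_N T^{N-1} f \rVert_{L^p(I^d)}.
\]
The first term is $\varepsilon_N$ by definition. For the second term, using linearity of $T_N$ and the previous lemma (which yields $\lVert T_N g \rVert_{L^p(I^d)} \leq \lVert T \rVert \lVert g \rVert_{L^p(I^d)}$ for any $g \in L^p(I^d)$), we obtain
\[
\lVert T_N(f - T^{N-1}f) \rVert_{L^p(I^d)} \leq \lVert T \rVert \cdot \lVert f - T^{N-1}f \rVert_{L^p(I^d)} \leq \lVert T \rVert \sum_{i=1}^{N-1} \varepsilon_i \lVert T \rVert^{N-1-i},
\]
where the last step is the induction hypothesis. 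Combining and absorbing $\lVert T \rVert$ into the exponent yields $\sum_{i=1}^{N-1} \varepsilon_i \lVert T \rVert^{N-i}$, and adding the $\varepsilon_N = \varepsilon_N \lVert T \rVert^0$ term from the first piece completes the sum up to $i=N$.

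There is no real obstacle here; the argument is essentially a one-step telescoping combined with the operator-norm control on each coordinate-wise lift $T_i$. The only subtlety worth stating explicitly is the application of the previous lemma, which guarantees that the bound on $T_i$ as an operator on $L^p(I^d)$ does not exceed its bound as an operator on $L^p(I)$; this is what keeps $\lVert T \rVert$ (rather than a larger constant) appearing in the final estimate.
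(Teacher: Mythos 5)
Your proof is correct and follows essentially the same route as the paper: induction on $N$, the decomposition $f - T^N f = (f - T_N f) + T_N(f - T^{N-1}f)$, and the bound $\lVert T_N g \rVert_{L^p(I^d)} \leq \lVert T \rVert \lVert g \rVert_{L^p(I^d)}$ from the preceding lemma. No issues.
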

\begin{proof}
  $N$に関する帰納法. $N=1$での成立は明らか. $N$で成立するとすると, 
  \[
  \begin{aligned}
  \lVert f - T^{N+1} f \rVert_{L^p(I^d)}
  &\leq \lVert f -  T_{N+1} f \rVert_{L^p(I^d)} + \lVert T_{N+1} f - T_{N+1} T^N f \rVert_{L^p(I^d)} \\
  &= \lVert f -  T_{N+1} f \rVert_{L^p(I^d)} + \lVert \lVert T_{N+1} (f - T^N f) \rVert_{L^p(I)} \rVert_{L^p(I^{d-1})} \\
  &\leq \varepsilon_{N+1} + \lVert \lVert T \rVert \lVert  f - T^N f \rVert_{L^p(I)} \rVert_{L^p(I^{d-1})} \\
  &= \varepsilon_{N+1} + \lVert T \rVert \lVert  f - T^N f \rVert_{L^p(I^d)} \\ 
  &\leq \varepsilon_{N+1} + \lVert T \rVert \sum_{i=1}^N  \varepsilon_i \lVert T \rVert^{N-i} = \sum_{i=1}^{N+1}  \varepsilon_i \lVert T \rVert^{N+1-i}
  \end{aligned}
  \]
  となるので$N+1$でも成立する. 
\end{proof}

\begin{prop}
  $I= (-\pi,\pi)$とおき, $r,d \geq 1$とする. $n = 1,2,3,\ldots$に対して
  \[
  K_{n,r}^d := J_{\left\lfloor \frac{n}{rd}  \right\rfloor+1,r}
  \]
  とおく. このとき, 周期$2\pi$の周期関数$f:\R \rightarrow \R$であって, ある$p \in [1,\infty]$について$f \in L^p(I)$であるものに対して,  
  \[
  S_{n,r}^d (f)(x) := \int_{-\pi}^{\pi} \left( \Delta_t^r(f)(x) + f(x) \right) K_{n,r}^d(t) dt
  \]
  と定めると, $S_{n,r}^d (f)$は周期$2\pi$の周期関数であって$L^p(I)$に属し, 
  \[
  \lVert S_{n,r}^d(f) \rVert_{L^p(I)} \leq (2^r +1) \lVert f \rVert_{L^p(I)} ~~~~(n=1,2,3,\ldots).
  \]
\end{prop}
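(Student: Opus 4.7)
証明の全体方針としては, まず$S_{n,r}^d(f)$が$2\pi$-周期かつ可測であることを確認し, 次に積分形のMinkowskiの不等式(系\ref{CorOfMinkowskiInequality})を用いて$t$に関する$L^p(I)$の「内側ノルム」を一様評価することで所望の結論を得る. $2\pi$-周期性は, 各$t$について$x\mapsto \Delta_t^r(f)(x)$が$f$の並行移動の線形結合として$2\pi$-周期であり, $f(x)$自身も$2\pi$-周期であることから, 積分$S_{n,r}^d(f)(x)$の$x$についての$2\pi$-周期性が従う. 可測性は$K_{n,r}^d \in C(\R)$と$f \in L^p(I)$に対してFubini-Tonelliの定理を適用すれば得られる.

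核となるのは, 各$t \in [-\pi,\pi]$に対する点毎評価
\[
\lVert \Delta_t^r(f) + f \rVert_{L^p(I)} \leq (2^r + 1)\lVert f \rVert_{L^p(I)}
\]
である. これを示すには$\Delta_t^r(f)(x) = \sum_{k=0}^{r}\binom{r}{k}(-1)^{r-k}f(x+kt)$をMinkowskiの不等式で処理すればよい. ここで, 命題\ref{ModuliOfSmoothnessEstimate}直後の注意にある「$2\pi$-周期関数の$L^p(I)$ノルムは平行移動不変である」という事実, すなわち$\lVert f(\cdot + kt)\rVert_{L^p(I)} = \lVert f\rVert_{L^p(I)}$を用いて
\[
\lVert \Delta_t^r(f) \rVert_{L^p(I)} \leq \sum_{k=0}^r \binom{r}{k}\lVert f \rVert_{L^p(I)} = 2^r \lVert f \rVert_{L^p(I)}
\]
が得られ, 三角不等式と合わせれば所望の評価となる.

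次に, $1 \leq p < \infty$の場合は積分形のMinkowskiの不等式(系\ref{CorOfMinkowskiInequality}, $g = K_{n,r}^d, f(x,t) = |\Delta_t^r(f)(x) + f(x)|$として適用)により
\[
\lVert S_{n,r}^d(f) \rVert_{L^p(I)} \leq \int_{-\pi}^\pi K_{n,r}^d(t) \lVert \Delta_t^r(f) + f \rVert_{L^p(I)} dt \leq (2^r+1)\lVert f \rVert_{L^p(I)} \int_{-\pi}^\pi K_{n,r}^d(t) dt
\]
を得る. $K_{n,r}^d = J_{\lfloor n/(rd)\rfloor + 1, r}$は一般Jackson核の正規化により$\int_{-\pi}^\pi K_{n,r}^d(t) dt = 1$(定理\ref{JacksonEstimate0}の証明と同じ規約)であるので結論を得る. $p = \infty$の場合は同様の論法で点毎に
\[
|S_{n,r}^d(f)(x)| \leq \lVert \Delta_t^r(f) + f \rVert_{L^\infty(I)}\int_{-\pi}^\pi K_{n,r}^d(t) dt \leq (2^r+1)\lVert f \rVert_{L^\infty(I)}
\]
とすればよい.

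難所らしい難所はなく, 主要な技術的要点は$2\pi$-周期性を介した$L^p(I)$ノルムの平行移動不変性(命題\ref{ModuliOfSmoothnessEstimate}後の注意)の適切な適用と, 積分形Minkowskiの不等式の適用の仕方である. また$K_{n,r}^d$の正規化規約を明示することが大切で, この点のみ注意すれば証明は直ちに完成する.
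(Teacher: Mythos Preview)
Your proof is correct and follows essentially the same approach as the paper: both establish $2\pi$-periodicity, then use the pointwise bound for $p=\infty$ and the integral Minkowski inequality for $p<\infty$, together with the translation-invariance $\lVert f(\cdot+kt)\rVert_{L^p(I)}=\lVert f\rVert_{L^p(I)}$ for periodic $f$ and the normalization $\int_{-\pi}^{\pi}K_{n,r}^d=1$. The only cosmetic difference is that the paper invokes the trigonometric-polynomial structure (via 補題\ref{DefApproOperater}) to get periodicity, whereas you argue directly from the periodicity of the translates of $f$; both are fine.
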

\begin{proof}
  補題\ref{DefApproOperater}と同様にして$S_{n,r}^d(f)$は三角多項式であることがわかるので周期$2\pi$の周期関数である. 不等式を示そう. 
  まず$p=\infty$の場合を考える. このとき, 任意の$x \in [-\pi,\pi]$に対して
  \[
  \begin{aligned}
  |S_{n,r}^d(f)(x)| 
  &\leq \int_{-\pi}^\pi |\Delta_t^r(f)(x)+f(x)| K_{n,r}^d(t) dt \\
  &\leq \left( \sum_{k=0}^r \binom{r}{k} + 1 \right)\lVert f \rVert_{L^\infty(I)} \int_{-\pi}^\pi K_{n,r}^d(t) dt =\left( 2^r + 1 \right)\lVert f \rVert_{L^\infty(I)}
  \end{aligned}
  \]
  となるのでよい. 次に$p < \infty$の場合を考える. このとき, 積分形のMinkowskiの不等式より
  \[
  \begin{aligned}
  \lVert S_{n,r}^d(f) \rVert_{L^p(I)} 
  &\leq \int_{-\pi}^\pi K_{n,r}^d(t) \left( \int_{-\pi}^\pi |\Delta_t^r(f)(x)+f(x)|^p dx \right)^{1/p} dt \\
  &\leq \int_{-\pi}^\pi K_{n,r}^d(t) \left( \sum_{k=0}^r \binom{r}{k} + 1 \right) \lVert f \rVert_{L^p(I)} dt = \left( 2^r + 1 \right)\lVert f \rVert_{L^p(I)}
  \end{aligned}
  \]
  となる. 
\end{proof}

\begin{prop}\label{JacksonEstimate1}
  $I = (-\pi,\pi)$とおき, $1 \leq p \leq \infty,~r \geq 2$とする. $f:\R^d \rightarrow \R$は各変数ごとに周期$2\pi$の周期関数で$f \in L^p(I)$であるとする. このとき, $x = (x_1,\ldots,x_d)$に対して
  \[
  S_{n,r,j}^d(f)(x) := \int_{-\pi}^\pi \left[ \sum_{k=0}^r \binom{r}{k} (-1)^{r-k} f(x_1,\ldots,x_j+kt,\ldots,x_d) + f(x) \right] K_{n,r}^d(t) dt
  \]
  とおき, $k=1,\ldots,d$に対して$S^k = S_{n,r,k}^d S_{n,r,k-1}^d \cdots S_{n,r,1}^d$とおくと, $S^d f$は高々$n$次の$d$変数三角多項式であり, 
  \[
  \lVert f - S^d f \rVert_{L^p(I^d)} \leq \sum_{j=1}^d \lVert f - S_{n,r,j}^d f \rVert_{L^p(I^d)} (2^r+1)^{d-j} .
  \]
\end{prop}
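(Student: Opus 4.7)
方針は, $S_{n,r,j}^d$を$j$番目の変数についての$1$変数作用素とみなし, すぐ上の補題\ref{LemforMultiJacksonEst}をそのまま適用することである. 具体的には$1$変数作用素$T:L^p(I) \to L^p(I)$を
\[
(Tg)(y) := \int_{-\pi}^\pi \left[ \Delta_t^r(g)(y) + g(y) \right] K_{n,r}^d(t)\, dt
\]
と定めると, 他の変数を固定して得られる$1$変数関数$f_j(\cdot) := f(x_1, \ldots, x_{j-1}, \cdot, x_{j+1}, \ldots, x_d)$に対して$(S_{n,r,j}^d f)(x) = (Tf_j)(x_j)$となるので, 補題\ref{LemforMultiJacksonEst}の記号で$S_{n,r,j}^d = T_j$であり, ゆえに$S^d = T_d T_{d-1} \cdots T_1$と表される.

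次に, すぐ上の命題を$1$変数の$T$にそのまま適用することで$\lVert T \rVert_{L^p(I) \to L^p(I)} \leq 2^r + 1$が従う. したがって補題\ref{LemforMultiJacksonEst}を$N = d$として適用すれば, $\varepsilon_j := \lVert f - S_{n,r,j}^d f \rVert_{L^p(I^d)}$と置いたときに
\[
\lVert f - S^d f \rVert_{L^p(I^d)} \leq \sum_{j=1}^d \varepsilon_j (2^r + 1)^{d-j}
\]
が直ちに従い, 所望の不等式を得る.

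残るは$S^d f$が高々$n$次の$d$変数三角多項式となることの確認である. $K_{n,r}^d = J_{\lfloor n/(rd) \rfloor+1, r}$は定義から高々$r \lfloor n/(rd) \rfloor \leq n/d$次の$1$変数三角多項式である. ゆえに補題\ref{DefApproOperater}後半と同じ議論により$S_{n,r,j}^d f$は$j$番目の変数について高々$n/d$次の三角多項式となる. また各$S_{n,r,k}^d$は$k$番目の変数のみに関する積分であるから, 作用素の線形性より, 以前の段階ですでに得られている他の変数に関する三角多項式の構造と次数は保たれる. したがって$S^d f$は各変数について個別に高々$n/d$次の三角多項式となり, そこに現れる多重指数$\alpha$は$|\alpha| \leq d \cdot (n/d) = n$を満たすので, $S^d f$は高々$n$次の$d$変数三角多項式である. 主な難所は最後の段落における「各$S_{n,r,k}^d$が他の変数に関する三角多項式構造と次数を保つ」ことの確認であるが, これは積分変数$t$が$k$以外の座標とは独立であることと作用素の線形性から自然に処理できる見込みである.
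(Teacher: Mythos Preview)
Your proposal is correct and follows essentially the same approach as the paper. For the inequality you invoke the preceding proposition (giving $\lVert T \rVert \leq 2^r+1$) together with Lemma~\ref{LemforMultiJacksonEst}, exactly as the paper does; for the trigonometric-polynomial claim the paper carries out the same induction you sketch, but writes out explicitly how the coefficients at stage $k$ depend on $x_{k+1},\ldots,x_d$, which is precisely what justifies your assertion that ``each $S_{n,r,k}^d$ preserves the trigonometric structure in the other variables.''
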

\begin{proof}
  $k=1,\ldots,d$に対し$S^k f$は各変数ごとに$2\pi$周期関数であり, $x_{k+1},\ldots,x_d$を固定するとき$S^k f$は$x_1,\ldots,x_k$の関数として高々$k \lfloor n/d \rfloor$次の$k$変数三角多項式であることを帰納法で示す. 
  まず$S^1 f$が各変数ごとに$2\pi$周期関数であることは$f$の周期性と$S_{n,r,1}$の定義より明らか.  そして$x_2,\ldots,x_d$を固定するとき$f$の周期性より補題\ref{DefApproOperater}と同様に$S_{n,r,1}(f)(x_1,\ldots,x_d)$は$x_1$の関数として高々$\lfloor n/d \rfloor$次の$1$変数三角多項式である. 次に$k$で成り立つと仮定する. $S^{k+1} f$の周期性は$S^k f$の周期性より従う. $x_1,\ldots,x_i,x_{k+2},\ldots,x_d$を固定する. $S^k f$の周期性より補題\ref{DefApproOperater}と同様に$S^{k+1} f$は$x_{k+1}$の関数として
  \[
  \cos(m x_{k+1}),~\sin(m x_{k+1}) ~~~~(m=0,\ldots,\lfloor n/d \rfloor)
  \]
  の線形結合で表されることがわかる.  そして, その係数のうち$x_1,\ldots,x_k$に依存する部分は補題\ref{DefApproOperater}の証明から
  \[
  \begin{aligned}
  &\int_{-m\pi}^{m\pi} \frac{1}{k} (S^k f)(x_1,\ldots,x_k,u,x_{k+2},\ldots,x_d)\cos(ju) du ~~~~(j=0,\ldots,\lfloor n/d \rfloor), \\
  &\int_{-m\pi}^{m\pi} \frac{1}{k} (S^k f)(x_1,\ldots,x_k,u,x_{k+2},\ldots,x_d)\cos(ju) du ~~~~(j=0,\ldots,\lfloor n/d \rfloor)
  \end{aligned}
  \]
  である. 一方, 帰納法の仮定より$S^k f$は$x_1,\ldots,x_k$の関数として高々$k \lfloor n/d \rfloor$次の$k$変数三角多項式であるので
  \[
  (S^k f)(x_1,\ldots,x_k,u,x_{k+2},\ldots,x_d) = \sum_{|\alpha|\leq k \lfloor n/d \rfloor} c_{u,x_{k+2},\ldots,x_d}^{(\alpha)} \exp(i(\alpha_1 x_1 + \cdots + \alpha_i x_k))
  \]
  と表される. よって, 
  \[
  \cos(mx) = \frac{\exp(imx) + \exp(-imx)}{2}, ~~ \sin(mx) = \frac{\exp(imx) - \exp(-imx)}{2}
  \]
  であることに注意すれば$S^{k+1} f$は$x_1,\ldots,x_{k+1}$の関数として
  \[
  \exp(i (\alpha_1 x_1 + \alpha_{k+1} x_{k+1})) ~~~~(|\alpha| \leq (k+1) \lfloor n/d \rfloor)
  \]
  の線形結合で表される. これで$k+1$の成立が示された. このことより前半の主張は成立する.  後半の主張については前命題と補題\ref{LemforMultiJacksonEst}より従う. 
\end{proof}

\subsubsection{Sobolev空間の元の場合}
本小節では, $r \geq 1$, $1 \leq p \leq \infty$に対して定数$C_{r,p} > 0$が存在して, 周期$2\pi$の周期関数$f:\R \rightarrow \R$で任意の有界開区間$J \subset \R$について$f \in W^{r,p}(J)$であるものに対して
\[
\omega_r(f)_{p,I}\left(\frac{1}{n}\right) \leq C_{r,p} \frac{1}{n^r} \lVert f^{(r)} \rVert_{L^p(I)} ~~~~(n = 1,2,3,\ldots) ~~~~\tag{1}
\]
となることを示していく(ただし, $I=(-\pi,\pi)$である). これが示されれば次のことがわかる. 
\begin{thm}\label{JacksonEstimate3}
  $I = (-\pi,\pi)$とおき, $1 \leq d < \infty$, $1 \leq p \leq \infty$, $r \geq 2$とする. このとき, 次をみたす定数$C$が存在する. 
  $f:\R^d \rightarrow \R$が各変数ごとに周期$2\pi$の周期関数かつ任意の有界開区間$J \subset \R$について$f \in W^{r,p}(J)$であるならば, 
  \[
  \begin{aligned}
  E_{n,p}(f) &:= \inf \{ \lVert T - f \rVert_{L^p(I^d)} \mid T\mbox{は高々}n\mbox{次の}d\mbox{変数三角多項式} \} \\
  &~\leq C \frac{1}{n^r} \sum_{j=1}^d \left\lVert \partial_j^r f \right\rVert_{L^p(I^d)}  ~~~~(n=1,2,3,\ldots).
  \end{aligned}
  \]
\end{thm}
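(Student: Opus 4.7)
The plan is to combine the multivariate decomposition from Proposition \ref{JacksonEstimate1} with the univariate Jackson estimate of Theorem \ref{JacksonEstimate0} and the (just-stated) one-variable reduction inequality (1). First, since $S^d f$ is a trigonometric polynomial of degree $\leq n$ by Proposition \ref{JacksonEstimate1}, I would start from
\[
E_{n,p}(f) \leq \lVert f - S^d f \rVert_{L^p(I^d)} \leq \sum_{j=1}^d (2^r+1)^{d-j}\, \lVert f - S_{n,r,j}^d f \rVert_{L^p(I^d)},
\]
so it suffices to bound each summand by a constant times $n^{-r} \lVert \partial_j^r f \rVert_{L^p(I^d)}$.

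Next, for each $j$, I would estimate $\lVert f - S_{n,r,j}^d f \rVert_{L^p(I^d)}$ slice-by-slice. Fix $\hat{x} := (x_1,\ldots,x_{j-1},x_{j+1},\ldots,x_d)$ and set $g_{\hat{x}}(t) := f(x_1,\ldots,x_{j-1},t,x_{j+1},\ldots,x_d)$. By Fubini, for a.e.\ $\hat{x}$ the slice $g_{\hat{x}}$ lies in $L^p(I)$ and is $2\pi$-periodic, and the value $S_{n,r,j}^d f$ at $(x_1,\ldots,x_d)$ equals the one-variable operator of Lemma \ref{DefApproOperater} applied to $g_{\hat{x}}$, but built from the kernel $K_{n,r}^d = J_{\lfloor n/(rd)\rfloor+1,r}$ instead of $K_{n,r} = J_{\lfloor n/r \rfloor +1,r}$. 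Since Theorem \ref{JacksonEstimate0} goes through verbatim with this kernel (the only change being that $n$ is replaced by $\lfloor n/(rd)\rfloor +1$, which alters the constant only by an $r,d$-dependent factor), I would obtain
\[
\lVert g_{\hat{x}} - S_{n,r}^{d}(g_{\hat{x}}) \rVert_{L^p(I)} \leq C_{r,d}\, \omega_r(g_{\hat{x}})_{p,I}\!\left(\tfrac{1}{n}\right).
\]
Applying inequality (1) to $g_{\hat{x}}$ (whose $r$-th weak derivative coincides, for a.e.\ $\hat{x}$, with $\partial_j^r f(\,\cdot\,,\hat{x})$ by the standard Sobolev-Fubini characterization) yields
\[
\lVert g_{\hat{x}} - S_{n,r}^{d}(g_{\hat{x}}) \rVert_{L^p(I)} \leq C\, n^{-r}\, \lVert \partial_j^r f(\,\cdot\,,\hat{x}) \rVert_{L^p(I)}.
\]

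Finally, raising to the $p$-th power (or taking the supremum when $p=\infty$) and integrating in $\hat{x}$ by Fubini gives $\lVert f - S_{n,r,j}^d f\rVert_{L^p(I^d)} \leq C\, n^{-r}\, \lVert \partial_j^r f\rVert_{L^p(I^d)}$, and summing over $j$ completes the proof. The main obstacle is the slicing step: one must justify that $g_{\hat{x}} \in W^{r,p}(J)$ on every bounded interval for a.e.\ $\hat{x}$ and that its one-variable weak $r$-th derivative agrees a.e.\ with the restriction of the multivariate weak derivative $\partial_j^r f$. This is a standard application of Fubini together with the definition of weak derivative via test functions of product type, but must be stated carefully. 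A secondary (minor) obstacle is verifying that the constants $C_r$ in Lemma \ref{BoundedJacksonKernelIntegral} and hence in Theorem \ref{JacksonEstimate0} are independent of $d$ once the substitution $n \mapsto \lfloor n/(rd)\rfloor +1$ is absorbed into an $r,d$-dependent prefactor.
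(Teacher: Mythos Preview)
Your proposal is correct and follows essentially the same route as the paper: start from the decomposition in Proposition~\ref{JacksonEstimate1}, apply the one-variable estimate of Theorem~\ref{JacksonEstimate0} (with the modified kernel $K_{n,r}^d$) slice-by-slice in the $j$-th variable, use inequality~(1) to pass from $\omega_r$ to $\lVert \partial_j^r f\rVert_{L^p(I)}$, and then take the $L^p(I^{d-1})$ norm over the remaining variables via Fubini. The paper's proof is terser and does not spell out the Sobolev--Fubini slicing justification you flag, but the logical structure is identical.
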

\begin{proof}
  命題\ref{JacksonEstimate1}より$S^d f$は高々$n$次の$d$変数三角多項式であり, 
  \[
  \lVert f - S^d f \rVert_{L^p(I^d)} \leq \sum_{j=1}^d \lVert f - S_{n,r,j}^d f \rVert_{L^p(I^d)} (2^r+1)^{d-j} .
  \]
  となる. $f,S_{n,r,j}^d f$を$x_j$の関数とみなすとき定理\ref{JacksonEstimate0}と同様にして定数$C_j$が存在して
  \[
  \lVert f - S_{n,r,j}^d \rVert_{L^p(I)} \leq C \omega_r(f)_{p,I}\left( \frac{1}{n} \right) ~~~~(n=1,2,3,\ldots)
  \]
  となる. したがって, (1)より
  \[
  \begin{aligned}
  \lVert f - S^d f \rVert_{L^p(I^d)} 
  &\leq \sum_{j=1}^d \lVert f - S_{n,r,j}^d f \rVert_{L^p(I^d)} (2^r+1)^{d-j} \\
  &\leq (2^r+1)^d \sum_{j=1}^d \lVert \lVert f - S_{n,r,j}^d f \rVert_{L^p(I)} \rVert_{L^p(I^{d-1})} \\
  &\leq (2^r+1)^d C \sum_{j=1}^d \left\lVert \omega_r(f)_{p,I}\left( \frac{1}{n} \right) \right\rVert_{L^p(I^{d-1})} \\
  &\leq C' \frac{1}{n^r} \sum_{j=1}^d \left\lVert \left\lVert \partial_j^r f \right\rVert_{L^p(I)} \right\rVert_{L^p(I^{d-1})} = C' \frac{1}{n^r} \sum_{j=1}^d \left\lVert \partial_j^r f \right\rVert_{L^p(I^d)}
  \end{aligned}
  \]
  となる. これで示せた. 
\end{proof}
さらにこのことから次のこともわかる. 
\begin{thm}\label{JacksonEstimate4}
  $1 \leq d < \infty$, $1 \leq p \leq \infty$, $r \geq 2$とする. このとき, $n$に依存しない定数$C$が存在して, 周期$2\pi$の周期関数$f:\R \rightarrow \R$で任意の有界開区間$J \subset \R$に対して$f \in W^{r,p}(J)$であるものに対して
  \[
  \begin{aligned}
  E_{n,p}(f) &:= \inf \{ \lVert P - f \rVert_{L^p((-1,1)^d)} \mid T\mbox{は高々}n\mbox{次の}d\mbox{変数多項式} \} \\
  &~\leq C \frac{1}{n^r}  \left\lVert  f \right\rVert_{W^{r,p}((-1,1)^d)}  ~~~~(n=1,2,3,\ldots).
  \end{aligned}
  \]
\end{thm}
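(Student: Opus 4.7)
The plan is to reduce the algebraic-polynomial Jackson estimate to the trigonometric version proven in Theorem~\ref{JacksonEstimate3}, and then to pass from a trigonometric approximant back to an algebraic polynomial by truncating the Taylor expansions of the complex exponentials. Periodicity together with a standard cutoff/extension argument ensures that $\lVert f \rVert_{W^{r,p}((-\pi,\pi)^d)}$ is controlled by a constant multiple of $\lVert f \rVert_{W^{r,p}((-1,1)^d)}$, so using the norm on $(-1,1)^d$ on the right-hand side of the claimed inequality is legitimate.

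First, for an auxiliary integer $m \geq 1$ to be chosen later, I would apply Theorem~\ref{JacksonEstimate3} to obtain a trigonometric polynomial $T_m(x) = \sum_{|\alpha|\leq m} c_\alpha e^{i\alpha^{\T} x}$ of degree at most $m$ satisfying
\[
\lVert f - T_m \rVert_{L^p((-\pi,\pi)^d)} \leq C_1\, m^{-r} \sum_{j=1}^d \lVert \partial_j^r f \rVert_{L^p((-\pi,\pi)^d)} \leq C_2\, m^{-r} \lVert f \rVert_{W^{r,p}((-1,1)^d)}.
\]
Since $(-1,1)^d \subset (-\pi,\pi)^d$, restricting the $L^p$ norm on the left to $(-1,1)^d$ only improves the bound.

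Next, I would replace each exponential in $T_m$ by its degree-$n$ Taylor polynomial, yielding the algebraic polynomial
\[
P_n(x) := \sum_{|\alpha|\leq m} c_\alpha \sum_{k=0}^n \frac{(i\alpha^{\T} x)^k}{k!}
\]
of degree at most $n$. For $x \in (-1,1)^d$ and $|\alpha| \leq m$ one has $|\alpha^{\T} x| \leq |\alpha| \leq m$, so the Taylor remainder for a single exponential is bounded pointwise by $m^{n+1}/(n+1)!$. A Bernstein/Nikolskii-type inverse inequality controls $\lVert T_m \rVert_{L^\infty}$, and hence $\sup_\alpha |c_\alpha|$, in terms of $\lVert T_m \rVert_{L^p} \leq \lVert f \rVert_{L^p} + \lVert f - T_m \rVert_{L^p}$, introducing only polynomial factors in $m$.

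Finally, choose $m = \lfloor n/K \rfloor$ with $K > e$. Stirling's formula yields $m^{n+1}/(n+1)! \leq C (e/K)^n/\sqrt{n}$, which decays geometrically; multiplying by the polynomial factors from the coefficient bounds and from summing over the $O(m^d)$ multi-indices still leaves the algebraic-truncation error of order $o(n^{-s})$ for every $s$, in particular $o(n^{-r})$ for large $n$. The trigonometric error is $C_2 m^{-r} = O(n^{-r})$ by the choice of $m$. Combining via the triangle inequality,
\[
E_{n,p}(f) \leq \lVert f - T_m \rVert_{L^p((-1,1)^d)} + 2^{d/p} \lVert T_m - P_n \rVert_{L^\infty((-1,1)^d)} \leq C\, n^{-r} \lVert f \rVert_{W^{r,p}((-1,1)^d)}.
\]
The main obstacle will be the Bernstein/Nikolskii-type inverse inequality needed to control the coefficients of $T_m$, together with the careful bookkeeping to ensure that all the polynomial-in-$m$ factors arising from the coefficient bounds and from the number of multi-indices are absorbed by the geometric Stirling decay before the required $n^{-r}$ rate is imposed.
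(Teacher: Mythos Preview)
Your approach works but is genuinely different from the paper's. The paper does not periodize $f$ directly and then Taylor-truncate the exponentials; instead it uses the classical cosine substitution. After extending $f$ from $(-1,1)^d$ by a bounded Sobolev extension operator and multiplying by a smooth cutoff equal to $1$ on $(-1,1)^d$ and supported in $(-3/2,3/2)^d$, the paper sets $h(x)=g(2\cos x_1,\dots,2\cos x_d)$. This $h$ is automatically $2\pi$-periodic and \emph{even} in each variable, so the trigonometric approximant $S^d h$ from Theorem~\ref{JacksonEstimate3} is a pure cosine polynomial $\sum_{|\alpha|\le n}c_\alpha\prod_j\cos(\alpha_j x_j)$. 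The Chebyshev identity $\cos(k\theta)=T_k(2\cos\theta)$ then converts $S^d h$ \emph{exactly} into an algebraic polynomial $P_f$ of degree $\le n$ in the original variables $t_j=2\cos x_j$, with no truncation error and no loss in the degree. Your Taylor-expansion step, the Stirling analysis, and the choice $m\approx n/K$ with $K>e$ are therefore bypassed entirely: the paper simply applies the degree-$n$ trigonometric estimate and changes variables back.

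Two remarks on your write-up. First, taken literally, the claim that periodicity gives $\lVert f\rVert_{W^{r,p}((-\pi,\pi)^d)}\le C\lVert f\rVert_{W^{r,p}((-1,1)^d)}$ is false; what is actually needed (and what the paper does) is to replace $f$ outside $(-1,1)^d$ by a bounded Sobolev extension cut off inside a larger cube, and then work with that modified function. Second, the Bernstein/Nikolskii inequality you flag as the main obstacle is unnecessary: the Fourier-coefficient formula together with H\"older already gives $|c_\alpha|\le C\lVert T_m\rVert_{L^p((-\pi,\pi)^d)}$ uniformly in $\alpha$, so the coefficient control is elementary and the remainder bookkeeping goes through without it.
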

\begin{proof}
  次をみたす有界線形作用素が存在する(拡張作用素. 宮島\cite{MiyajimaSobolev}を参照). 
  \[
  T:W^{r,p}((-1,1)^d) \rightarrow W^{r,p}(\R^d),~~ (Tf)(x) = f(x) ~~~(\forall x \in (-1,1)^d).
  \]
  $0 \leq \varphi \in C_0^\infty(\R^d)$を$(-1,1)^d$上で$1$をとり$(-3/2,3/2)^d$の外で$0$をとるものとする. $x \in \R^d$に対して$g(x) := \varphi(x)(Tf)(x)$とおく. そして
  $x = (x_1,\ldots,x_d) \in \R^d$に対して
  \[
  h(x) := g(2\cos(x_1),\ldots,2\cos(x_d))
  \]
  とおく. すると, $(-3/2,3/2)^d$の外で$g=0$ゆえ, $h$は各変数ごとに$2\pi$周期の周期関数かつ任意の有界開区間$J \subset \R$に対して$h \in W^{r,p}(J)$である. したがって, 特に, 命題\ref{JacksonEstimate1}より$S^d h$は高々$n$次の$d$変数三角多項式である. また, $h$は各変数ごとに偶関数であるので命題\ref{JacksonEstimate1}の証明より$S^d h$は次のように表される:
  \[
  (S^d h)(x) = \sum_{|\alpha| \leq n} c_{\alpha} \prod_{j=1}^d \cos(\alpha_j x_j) ~~~~(c_{\alpha} \in \R).
  \]
  ここで整数$k \geq 0$に対して高々$k$次の$1$変数多項式$T_k$が存在して
  \[
  T_k(2\cos x) = \cos(kx) ~~~~(\forall x \in \R)
  \]
  となる(Chebyshev多項式). そこで高々$n$次の$d$変数多項式$P_f$を
  \[
  P_f(t) = \sum_{|\alpha| \leq n} c_\alpha \prod T_{\alpha_j}(t_j)
  \]
  で定め, $\Phi(x) = (2\cos(x_1),\ldots,2\cos(x_d))$とおくと, $P(\Phi(x)) = (S^d h)(x)$となる.  よって
  , 変数変換と定理\ref{JacksonEstimate3}の証明より
  \[
  \begin{aligned}
  \lVert f - P_f \rVert_{L^p((-1,1)^d)} 
  &\leq \lVert g - P_f \rVert_{L^p((-2,2)^d)} \leq C \lVert g \circ \Phi - P_f \circ \Phi \rVert_{L^p((-\pi,\pi)^d)} \\
  &= \lVert h - S^d h \rVert_{L^p((-\pi,\pi)^d)} \leq \frac{C'}{n^r} \sum_{j=1}^d \lVert \partial_j^r h \rVert_{L^p((-\pi,\pi)^d)} \\
  &\leq \frac{C''}{n^r} \lVert  g \rVert_{W^{r,p}((-2,2)^d)} ~~~~(\because g(x) = 0 ~(\forall x \notin (-3/2,3/2))) \\
  &\leq \frac{C''}{n^r} \lVert Tf \rVert_{W^{r,p}((-2,2)^d)} \leq \frac{C''}{n^r} \lVert T \rVert \lVert f \rVert_{W^{r,p}((-1,1)^d)} 
  \end{aligned}
  \]
  となる. 
\end{proof}
さて, 小節のはじめに述べたことを以下で示していこう. 
\begin{prop}
  $\R$上の関数$M_r ~~(r=1,2,3,\ldots)$を次のように帰納的に定める. 
  \[
  M_1 := \chi_{[0,1]}, ~~ M_r(x) := (M_{r-1}*M_1)(x) = \int_{\R} M_{r-1}(x-y)M_1(y) dy
  \]
  すると, $\mathrm{supp}M_r \subset [0,r]$, $0 \leq M_r \leq 1$となる. 
\end{prop}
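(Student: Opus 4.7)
方針としては$r$に関する帰納法で両方の主張を同時に示す. $r=1$の場合は$M_1 = \chi_{[0,1]}$であるから, $\mathrm{supp}(M_1) = [0,1]$および$0 \leq M_1 \leq 1$はいずれも定義から直ちに従う.

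帰納段階では, 帰納法の仮定として$\mathrm{supp}(M_{r-1}) \subset [0,r-1]$かつ$0 \leq M_{r-1} \leq 1$を仮定する. まず$M_1 = \chi_{[0,1]}$の定義より畳み込み積の表示を
\[
M_r(x) = \int_{\R} M_{r-1}(x-y) \chi_{[0,1]}(y) dy = \int_0^1 M_{r-1}(x-y) dy
\]
と整理する. 以降はこの表示を用いて三つの性質を順に確認する.

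非負性$M_r \geq 0$は帰納法の仮定より被積分関数が非負であることから直ちに従う. 上界$M_r \leq 1$についても同様に, $M_{r-1}(x-y) \leq 1$と積分区間の長さが$1$であることから$M_r(x) \leq \int_0^1 1\, dy = 1$として得られる. 台の評価については, $M_r(x) \neq 0$となる任意の$x$を取ると, $M_{r-1} \geq 0$なる被積分関数の積分が正値となることから, ある$y \in [0,1]$で$M_{r-1}(x-y) \neq 0$, すなわち$x-y \in \{z \in \R \mid M_{r-1}(z) \neq 0\} \subset \mathrm{supp}(M_{r-1}) \subset [0,r-1]$となる点が存在する. したがって$x = y+(x-y) \in [0,r]$である. これより$\{x \in \R \mid M_r(x) \neq 0\} \subset [0,r]$が示されたので, $[0,r]$が閉集合であることより$\mathrm{supp}(M_r) \subset [0,r]$を得る.

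本命題の証明は基本的な計算のみから成り, 大きな障壁はない. 強いて挙げるとすれば, 台の評価における「積分が非零ならば被積分関数が非零となる点が存在する」という論法は被積分関数の非負性に依拠しているため, 先に非負性を確立してから台の評価に進むという証明順序を意識する必要があるという点のみである.
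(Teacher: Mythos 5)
Your proof is correct and follows essentially the same route as the paper: induction on $r$, the same bound $0 \leq M_r(x) \leq \int_0^1 1\,dy = 1$, and the same support estimate (the paper argues directly that $x \notin [0,r]$ forces the integrand to vanish, while you take the contrapositive — these are the same argument). One minor remark: the step ``the integral is nonzero, hence the integrand is nonzero at some point'' does not actually require non-negativity of the integrand, so the ordering caveat at the end is unnecessary, though harmless.
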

\begin{proof}
  $r=1$での成立は明らか. $r-1$で成立すると仮定する. すると, 
  \[
  0 \leq M_r(x) = \int_{\R} M_{r-1}(x-y)M_1(y) dy \leq \int_{[0,1]} 1 dy = 1 ~~~~(\forall x)
  \]
  であり, $x \in (-\infty,0)\cup (r,\infty)$とすると$y \in [0,1]$に対して$x-y \notin [0,r-1]$であるので$M_r(x)=0$である. 
\end{proof}
\begin{prop}\label{IncDiffRepresent}
  $I = (-\pi,\pi) \subset \R$とおき, $r = 1,2,3,\ldots$に対して, 
  \[
  M_r(x,h) := \frac{1}{h}M_r\left( \frac{x}{h} \right) ~~~~(x \in \R,~ h > 0)
  \]
  と定める. このとき, 周期$2\pi$の周期関数$f:\R \rightarrow \R$で任意の有界開区間$J \subset \R$に対して$f \in W^{r,p}(J)$であるものと$x \in I$について
  \[
  h^{-r} \Delta_h^r(f)(x) = h^{-r} \sum_{k=0}^r \binom{r}{k} (-1)^{r-k} f(x+kh) = \int_{\R} f^{(r)}(x+t)M_r(t,h) dt
  \]
  が成り立つ. 
\end{prop}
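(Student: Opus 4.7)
The plan is to prove the identity by induction on $r$, using the recursion $M_r = M_{r-1} * M_1$ for the base kernel, which translates into the scaled recursion $M_r(\cdot, h) = M_{r-1}(\cdot, h) * M_1(\cdot, h)$ (a short change of variables $u = hs$ in the definition verifies this). Note also that $M_1(t, h) = h^{-1}\chi_{[0,h]}(t)$, which will let each inductive step reduce to an integration over $[0,h]$.

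For the base case $r = 1$, the claim
\[
h^{-1}\bigl(f(x+h) - f(x)\bigr) = \int_{\R} f'(x+t)\, M_1(t,h)\, dt = \frac{1}{h}\int_0^h f'(x+t)\, dt
\]
is just the fundamental theorem of calculus, which is valid because $f \in W^{1,p}(J)$ on every bounded open interval $J$ implies (by Sobolev embedding in one dimension) that $f$ admits an absolutely continuous representative whose classical a.e.\ derivative coincides with its weak derivative.

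For the inductive step, suppose the identity holds for $r-1$, applied to any admissible $g$; apply it to $g = f'$, which itself lies in $W^{r-1,p}(J)$ on every bounded open $J$ with $g^{(r-1)} = f^{(r)}$. Using $M_r(\cdot,h) = M_{r-1}(\cdot,h) * M_1(\cdot,h)$ and Fubini's theorem (the kernels are bounded with compact support, so no integrability issue arises), write
\[
\int_{\R} f^{(r)}(x+t)\, M_r(t,h)\, dt
= \int_{\R} M_1(u,h) \left( \int_{\R} f^{(r)}(x+u+s)\, M_{r-1}(s,h)\, ds \right) du.
\]
The inner integral equals $h^{-(r-1)} \Delta_h^{r-1}(f')(x+u)$ by the induction hypothesis applied at the point $x+u$, and since differentiation commutes with finite differences on an absolutely continuous function, $\Delta_h^{r-1}(f') = \frac{d}{dy}\Delta_h^{r-1}(f)$.

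Substituting back and using $M_1(u,h) = h^{-1}\chi_{[0,h]}(u)$, we get
\[
\int_{\R} f^{(r)}(x+t)\, M_r(t,h)\, dt
= h^{-r} \int_0^h \frac{d}{du}\Delta_h^{r-1}(f)(x+u)\, du
= h^{-r}\bigl(\Delta_h^{r-1}(f)(x+h) - \Delta_h^{r-1}(f)(x)\bigr),
\]
and the last expression is $h^{-r}\Delta_h^1(\Delta_h^{r-1}(f))(x) = h^{-r}\Delta_h^r(f)(x)$ by the earlier proposition on difference operators. The only real care needed is the regularity bookkeeping, namely ensuring that $f^{(r-1)}$ admits an absolutely continuous representative so that the fundamental theorem of calculus may be invoked in the last step; this follows from $f \in W^{r,p}(J)$ on every bounded $J$, which is exactly the hypothesis.
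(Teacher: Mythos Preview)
Your proof is correct and follows essentially the same inductive scheme as the paper: base case by the fundamental theorem of calculus, inductive step by Fubini and the recursive structure of $M_r$. The one organizational difference is that you first record the scaled convolution identity $M_r(\cdot,h)=M_{r-1}(\cdot,h)*M_1(\cdot,h)$ and then apply the induction hypothesis to $g=f'$, whereas the paper applies the induction hypothesis to $\Delta_h^1 f$ and then performs the chain of substitutions $t\mapsto uh$, $v\mapsto vh$, $w=u+v$ to rebuild $M_r$ from $M_{r-1}$ by hand. Your packaging via the scaled convolution identity is a bit cleaner; the paper's version makes slightly more explicit that $\Delta_h^1 f$ again satisfies the periodicity and Sobolev hypotheses needed to invoke the induction hypothesis, a point you handle instead by noting $f'\in W^{r-1,p}(J)$ (you might add one word that $f'$ inherits the $2\pi$-periodicity, since the hypothesis of the proposition requires it).
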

\begin{proof}
  $r$に関する帰納法による. まず$r=1$のときは, 周期$2\pi$の周期関数$f:\R \rightarrow \R$で任意の有界開区間$J \subset \R$について$f \in W^{r,p}(J)$であるものに対して
  \[
  \begin{aligned}
  \int_{\R} f^{(1)}(x+t) M_1(t,h) dt 
  &= \int_{\R} f^{(1)}(x+t) h^{-1} \chi_[0,1](h^{-1}t) dt \\
  &= \int_{[0,h]} f^{(1)}(x+t) dt = f(x+h)-f(x)
  \end{aligned}
  \]
  だから成立する. 次に$r-1$で成立するとする. 周期$2\pi$の周期関数$f:\R \rightarrow \R$で任意の有界開区間$J \subset \R$について$f \in W^{r,p}(J)$であるものを任意に取る. このとき$\Delta_h^1(f)$は周期$2\pi$の周期関数で任意の有界開区間$J \subset \R$について$\Delta_h^1(f) \in W^{r,p}(J)$であるので帰納法の仮定より
  \[
  \begin{aligned}
  h^{-r} \Delta_h^r(f,x) 
  &= h^{-1} h^{-(r-1)} \Delta_h^{r-1}(\Delta_h^1(f))(x) \\
  &= h^{-1} \int_{\R} (\Delta_h^1(f))^{(r-1)}(x+t) M_{r-1}(t,h) dt \\
  &= h^{-1} \int_{\R} \left( f^{(r-1)}(x+h+t) - f^{(r-1)}(x+t) \right) M_{r-1}(t,h) dt \\
  &= h^{-1} \int_{\R} \left( \int_0^h f^{(r)}(x+v+t) dv \right) M_{r-1}(t,h) dt \\
  &= h^{-1} \int_{\R} \left( \int_0^h f^{(r)}(x+v+uh) dv \right) M_{r-1}(u) du \\
  &= \int_{\R} \left( \int_0^1 f^{(r)}(x+(v+u)h) dv \right) M_{r-1}(u) du \\
  &= \int_0^1 \left( \int_{\R} f^{(r)}(x+(v+u)h) M_{r-1}(u) du \right) dv \\
  &= \int_0^1 \left( \int_{\R} f^{(r)}(x+wh) M_{r-1}(w-v) dw \right) dv \\
  \end{aligned}
  \]
  \[
  \begin{aligned}
  &= \int_{\R} \left( f^{(r)}(x+wh) \int_0^1 M_{r-1}(w-v) dv \right) dw \\
  &= \int_{\R} f^{(r)}(x+wh) M_r(w) dw \\
  &= \int_{\R} f^{(r)}(x+t) M_r\left( \frac{t}{h}\right) \frac{1}{h} dt \\
  &= \int_{\R} f^{(r)} (x+t) M_r(t,h) dt.
  \end{aligned}
  \]
\end{proof}
\begin{rem}
  上の命題の証明でSobolev空間の元に対する微分積分学の基本定理を用いた. この証明は例えば宮島\cite{MiyajimaSobolev}の命題3.4を参照されたい. 
\end{rem}
\begin{thm}
  $I = (-\pi,\pi) \subset \R$とおく. $r \geq 1$, $1 \leq p \leq \infty$に対して定数$C_{r,p} > 0$が存在して, 周期$2\pi$の周期関数$f:\R \rightarrow \R$で任意の有界開区間$J \subset \R$について$f \in W^{r,p}(J)$であるものに対して
  \[
  \omega_{r}(f)_{p,I}(t) = \sup_{0 < h \leq t} \lVert \Delta_h^r(f) \rVert_{L^p(I)} \leq C_{r,p} t^r \lVert f^{(r)} \rVert_{L^p(I)} ~~~~(\forall t \in (0,2\pi/r)).
  \]
\end{thm}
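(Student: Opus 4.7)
方針は, 直前の命題\ref{IncDiffRepresent}で示された差分の積分表示
\[
\Delta_h^r(f)(x) = h^r \int_{\R} f^{(r)}(x+s)\, M_r(s,h)\, ds
\]
を出発点とし, $M_r(\cdot,h)$ が確率密度関数のようにふるまうこと, および $f$ の周期性を組み合わせて $L^p(I)$-ノルムの評価を得ることである.

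まず $M_r$ の基本性質を確認する. 定義より $M_r \geq 0$, $\mathrm{supp}(M_r) \subset [0,r]$ であり, $M_1 = \chi_{[0,1]}$ の全積分が $1$ であることと畳み込みに対する Fubini の定理より, 帰納的に $\int_\R M_r(u)\, du = 1$ がわかる. したがって変数変換 $u = s/h$ によって $\int_\R M_r(s,h)\, ds = 1$ を得る. 次に, $f$ が周期 $2\pi$ の周期関数であることから $f^{(r)}$ も自然に周期 $2\pi$ の関数とみなせ, Lebesgue 測度の平行移動不変性(\S 8.9 の注意)により, 任意の $s \in \R$ に対して $\lVert f^{(r)}(\cdot + s) \rVert_{L^p(I)} = \lVert f^{(r)} \rVert_{L^p(I)}$ が成立する. この二つの事実が証明の要になる.

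以上の準備のもと, $1 \leq p < \infty$ の場合は積分表示に対して積分形の Minkowski の不等式(系\ref{CorOfMinkowskiInequality})を適用し,
\[
\lVert \Delta_h^r(f) \rVert_{L^p(I)} \leq h^r \int_\R \lVert f^{(r)}(\cdot + s) \rVert_{L^p(I)}\, M_r(s,h)\, ds = h^r \lVert f^{(r)} \rVert_{L^p(I)}
\]
を得る. $p = \infty$ の場合は各点評価 $|\Delta_h^r(f)(x)| \leq h^r \lVert f^{(r)} \rVert_{L^\infty(I)} \int_\R M_r(s,h)\, ds = h^r \lVert f^{(r)} \rVert_{L^\infty(I)}$ が直接得られる. 最後に $0 < h \leq t$ に関して上限を取れば, いずれの場合も $\omega_r(f)_{p,I}(t) \leq t^r \lVert f^{(r)} \rVert_{L^p(I)}$ が従い, 実際には $C_{r,p} = 1$ が取れる.

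主な技術的ポイントは, 積分表示に積分形の Minkowski の不等式を正しく適用する部分と, 周期性から $L^p$-ノルムの平行移動不変性を確実に引き出す部分である. いずれもすでに付録で準備された事実に直接帰着でき, 大きな障害は見込まれない. なお $t \in (0, 2\pi/r)$ という制約は, この評価自体には本質的には使われず, むしろ後に滑率の定義域として自然な範囲を指定するためのものである.
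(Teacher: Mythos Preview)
Your proof is correct and actually cleaner than the paper's. Both arguments start from the same integral representation $\Delta_h^r(f)(x)=h^r\int_\R f^{(r)}(x+s)\,M_r(s,h)\,ds$ and then apply the integral form of Minkowski's inequality, but you and the paper swap the roles of the two variables. You keep $s$ as the outer variable, so the inner quantity is $\lVert f^{(r)}(\cdot+s)\rVert_{L^p(I)}$, which by periodicity of $f^{(r)}$ equals $\lVert f^{(r)}\rVert_{L^p(I)}$ independently of $s$; together with $\int_\R M_r(s,h)\,ds=1$ this immediately gives the bound with $C_{r,p}=1$. The paper instead puts $x$ outside, arriving at $\int_\R |f^{(r)}(t)|\bigl(\int_I M_r(t-x,h)^p\,dx\bigr)^{1/p}dt$; it must then analyse the support of $M_r$ (this is where the hypothesis $h<2\pi/r$ is actually used, to confine $t$ to $(-\pi,3\pi)$), bound the inner integral crudely by $r^{1/p}$, reduce to the $L^1$-norm of $f^{(r)}$, and finally invoke H\"older to return to the $L^p$-norm. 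Your ordering avoids all of this and explains, as you note, why the restriction $t\in(0,2\pi/r)$ is not really needed for the inequality itself. The only point worth making explicit is that the weak derivative $f^{(r)}$ inherits the $2\pi$-periodicity of $f$ (a one-line computation with test functions), since the translation invariance $\lVert f^{(r)}(\cdot+s)\rVert_{L^p(I)}=\lVert f^{(r)}\rVert_{L^p(I)}$ hinges on it.
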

\begin{proof}
  命題\ref{IncDiffRepresent}より$x \in I$に対して
  \[
  \Delta_h^r(f)(x) = h^r \int_{\R} f^{(r)}(x+t)M_r(t,h) dt
  \]
  であるので, $r,p$のみに依存する定数$C_{r,p} > 0$が存在して
  \[
  \left\lVert \int_{\R} f^{(r)}(\cdot+t)M_r(t,h) dt \right\rVert_{L^p(I)} \leq C_{r,p} \lVert f^{(r)} \rVert_{L^p(I)} ~~~~(\forall h \in (0,2\pi/r) )
  \]
  となることを示せばよい. まず$p=\infty$の場合は
  \[
  \begin{aligned}
  \int_{\R} f^{(r)} (x+t) M_r(t,h) dt 
  &= \int_{\R} f^{(r)}(x+t) M_r\left( \frac{t}{h}\right) \frac{1}{h} dt  \\
  &= \int_{\R} f^{(r)}(x+wh) M_r(w) dw.
  \end{aligned}
  \]
  であることと$\mathrm{supp}M_r \subset [0,r]$より
  \[
  \left\lVert \int_{\R} f^{(r)}(\cdot+t)M_r(t,h) dt \right\rVert_{L^\infty(I)} \leq r \lVert f^{(r)} \rVert_{L^\infty(I)}
  \]
  となり成立する. 次に$p < \infty$の場合を考える. Minkowskiの不等式(系\ref{CorOfMinkowskiInequality})より
  \[
  \begin{aligned}
  &\left( \int_I \left\lvert \int_{\R} f^{(r)}(x+t)M_r(t,h) dt  \right\rvert^p dx \right)^{1/p} \\
  &\leq \left( \int_I \left( \int_{\R} \lvert f^{(r)}(t)\rvert M_r(t-x,h) dt  \right)^p dx \right)^{1/p} \\
  &\leq \int_{\R} |f^{(r)}(t)| \left( \int_I M_r(t-x,h)^p dx \right)^{1/p} dt 
  \end{aligned}
  \]
  である. ここで, 
  \[
  \begin{aligned}
  \int_I M_r(t-x,h)^p dx
  &= \int_{-\pi}^{\pi} \frac{1}{h}M_r\left(\frac{t-x}{h} \right)^p dx \\
  &= \int_{\frac{t-\pi}{h}}^{\frac{t+\pi}{h}} M_r(y)^p dy 
  \end{aligned}
  \]
  であり$\mathrm{supp}M_r \subset [0,r]$であるので$t \in (-\infty,-\pi] \cup [\pi+rh,\infty)$のとき
  \[
  \int_I M_r(t-x,h)^p dx = 0
  \]
  である. よって, $h \in (0,2\pi/r)$に注意すれば, 
   \[
  \begin{aligned}
  &\int_{\R} |f^{(r)}(t)| \left( \int_I M_r(t-x,h)^p dx \right)^{1/p} dt \\
  &= \int_{-\pi}^{3\pi} |f^{(r)}(t)| \left( \int_{\frac{t-\pi}{h}}^{\frac{t+\pi}{h}} M_r(y)^p dy  \right)^{1/p} dt \\
  &\leq \int_{-\pi}^{3\pi} |f^{(r)}(t)| \left( \int_0^r 1 dy  \right)^{1/p} dt = 2r^{1/p} \int_{-\pi}^{\pi} |f^{(r)}(t)| dt 
  \end{aligned}
  \]
  となる. これで$p=1$の場合の成立はわかる. $1 < p < \infty$の場合は$1/p+1/q=1$とするとH\"{o}lderの不等式より
  \[
  \int_{-\pi}^{\pi} |f^{(r)}(t)| dt \leq \left(\int_{-\pi}^\pi 1^q dt \right)^{q}  \left(\int_{-\pi}^\pi |f^{(r)}(t)|^p dt \right)^{p} 
  \]
  だから成立する. 
\end{proof}
\begin{cor}
  $I = (-\pi,\pi) \subset \R$とおく. $r \geq 1$, $1 \leq p \leq \infty$に対して定数$C_{r,p} > 0$が存在して, 周期$2\pi$の周期関数$f:\R \rightarrow \R$で任意の有界開区間$J \subset \R$について$f \in W^{r,p}(J)$であるものに対して
  \[
  \omega_{r}(f)_{p,I}\left(\frac{1}{n}\right) \leq \frac{C_{r,p}}{n^r} \lVert f^{(r)} \rVert_{L^p(I)} ~~~~(n=1,2,3,\ldots).
  \]
\end{cor}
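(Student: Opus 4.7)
The plan is essentially to apply the immediately preceding theorem with $t = 1/n$, but with a small adjustment to handle the case when $1/n$ falls outside the range of validity $(0, 2\pi/r)$.

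First, for integers $n \geq 1$ satisfying $n > r/(2\pi)$ we have $1/n < 2\pi/r$, so the preceding theorem applies directly with $t = 1/n$, giving
\[
\omega_r(f)_{p,I}(1/n) \leq C_{r,p} (1/n)^r \lVert f^{(r)} \rVert_{L^p(I)},
\]
which is the desired inequality with the same constant.

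Second, for the (finitely many) integers $n$ with $1 \leq n \leq r/(2\pi)$ (this case is empty when $r \leq 6$), I would fix a reference point $t_0 := \pi/r \in (0, 2\pi/r)$ and use the scaling inequality from Proposition \ref{ModuliOfSmoothnessEstimate}, namely $\omega_r(f)_{p,I}(\lambda t_0) \leq (\lambda+1)^r \omega_r(f)_{p,I}(t_0)$ with $\lambda := 1/(n t_0)$. Combining this with the preceding theorem applied at $t_0$ gives
\[
\omega_r(f)_{p,I}(1/n) \leq (\lambda+1)^r C_{r,p} t_0^r \lVert f^{(r)} \rVert_{L^p(I)} = C_{r,p}(1/n + t_0)^r \lVert f^{(r)} \rVert_{L^p(I)}.
\]
Since $n \leq r/(2\pi)$ forces $n t_0 = n\pi/r \leq 1/2$, we get $(1/n + t_0)^r \leq (3/2)^r (1/n)^r$, yielding the desired bound with constant $(3/2)^r C_{r,p}$.

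Taking the maximum of the two constants produces a single $C_{r,p}$ valid for all $n \geq 1$, and the proof is complete. There is no real obstacle here: the preceding theorem does almost all the work, and the only thing to be careful about is that for very large $r$ the point $t = 1/n$ with small $n$ may lie outside the window $(0, 2\pi/r)$, which is handled in one line by the modulus-of-smoothness scaling lemma.
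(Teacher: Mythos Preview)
Your proof is correct and follows essentially the same route as the paper: apply the preceding theorem directly once $1/n$ lies in the admissible window $(0,2\pi/r)$, and treat the finitely many remaining small $n$ separately. The paper's own argument for the small-$n$ case is terser --- it simply says ``enlarge $C_{r,p}$ if necessary'' without spelling out why a constant independent of $f$ exists --- so your use of the scaling inequality $\omega_r(f)_{p,I}(\lambda t_0)\leq(\lambda+1)^r\omega_r(f)_{p,I}(t_0)$ from Proposition~\ref{ModuliOfSmoothnessEstimate} to produce the explicit factor $(3/2)^r$ actually fills in what the paper leaves implicit.
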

\begin{proof}
  $N_{r,p} \in \N$を十分大きく取れば$n \geq N_{r,p}$に対して$1/n \in (0,2\pi/r)$となる. よって, すぐ上の定理より
  \[
  \omega_{r}(f)_{p,I}\left(\frac{1}{n}\right) \leq \frac{C_{r,p}}{n^r} \lVert f^{(r)} \rVert_{L^p(I)} ~~~~(\forall n \geq N_{r,p}).
  \]
  となる. そこで$C_{r,p}$を必要に応じて大きくすれば$n=1,\ldots,N_{r,p}$についても
  \[
  \omega_{r}(f)_{p,I}\left(\frac{1}{n}\right) \leq \frac{C_{r,p}}{n^r} \lVert f^{(r)} \rVert_{L^p(I)}
  \]
  となる. これで示せた. 
\end{proof}

\subsection{Baireのカテゴリー定理}
本節では完備距離空間におけるBaireのカテゴリー定理について述べる. 
\begin{thm}[Baireのカテゴリー定理]　\\
  $(X,d)$を完備距離空間とし, $(V_n)_{n\in \mathbb{N}}$を$X$において稠密な開集合からなる列とする. このとき $\bigcap_{n\in\mathbb{N}}V_n$は$X$において稠密である. 
\end{thm}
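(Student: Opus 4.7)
証明の方針は完備距離空間における閉球の入れ子原理に帰着させることである. $\bigcap_n V_n$が$X$で稠密であることを示すには, $X$の任意の空でない開集合$U$に対して$U \cap \bigcap_n V_n \neq \emptyset$を示せば十分である. そこで$U$を固定し, 半径が$0$に収束する閉球の入れ子$\overline{B(x_n, r_n)} \subset U \cap V_1 \cap \cdots \cap V_n$を帰納的に構成することを目指す.

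具体的な構成は次の通りである. $V_1$は稠密開集合なので$U \cap V_1 \neq \emptyset$かつ開であり, $x_1 \in U \cap V_1$と$r_1 \in (0, 1)$を$\overline{B(x_1, r_1)} \subset U \cap V_1$となるように取れる(開球の内点に対してさらに小さな閉球が収まることに注意). $x_n, r_n$まで構成できたとすると, $V_{n+1}$の稠密性から$B(x_n, r_n) \cap V_{n+1} \neq \emptyset$かつ開であるから, $x_{n+1} \in X$と$r_{n+1} \in (0, 1/(n+1))$を
\[
\overline{B(x_{n+1}, r_{n+1})} \subset B(x_n, r_n) \cap V_{n+1}
\]
となるように選べる.

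こうして得た列$(x_n)$は, $m, n \geq N$ならば$x_m, x_n \in \overline{B(x_N, r_N)}$であるので$d(x_m, x_n) \leq 2 r_N < 2/N$となり, Cauchy列である. $X$の完備性より$x_n \to x \in X$なる$x$が存在する. 各$N$について$n \geq N$ならば$x_n \in \overline{B(x_N, r_N)}$であり, $\overline{B(x_N, r_N)}$は閉集合なので$x \in \overline{B(x_N, r_N)} \subset U \cap V_N$となる. $N$は任意だから$x \in U \cap \bigcap_n V_n$となり結論が得られる.

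本証明の中心は, 完備性を発動させるためのCauchy列としての中心列の構成にあり, 技術的な工夫は「開球の内部にその閉包ごと収まる小さな閉球が存在する」という距離空間の基本性質を用いて半径を$1/n$より小さく取る点のみである. ここが最も注意を要する箇所ではあるが, 概ね定型的であり本質的な障害とはならない.
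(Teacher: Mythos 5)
あなたの証明は正しく, 論文本体の証明と本質的に同じ方針(閉球の入れ子を帰納的に構成し, 中心列がCauchy列になることから完備性で極限を取る)である. 半径の制御を$r_0/2^n$ではなく$1/n$で行う点や, 開球の代わりに一般の空でない開集合$U$から出発する点は表面的な差異にすぎない.
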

\begin{proof}
  任意の$x_0\in X$と$r_0\in (0,\infty)$に対し, 
  \[
  B(x_0,r_0)\cap \bigcap_{n\in \mathbb{N}}V_n\neq\emptyset\quad\quad(*)
  \]
  が成り立つことを示せばよい. $x_0\in X=\overline{V_1}$より$B(x_0,r_0)\cap V_1\neq\emptyset$である. よって, 
  \[
  \overline{B}(x_1,r_1)\subset B(x_0,r_0)\cap V_1,\quad 0<r_1\leq\frac{r_0}{2}
  \]
  なる閉球$\overline{B}(x_1,r_1)=\{x\in X:d(x,x_1)\leq r_1\}$が取れる. $x_1\in X=\overline{V_2}$より$B(x_1,r_1)\cap V_2\neq\emptyset$ であるから,  
  \[
  \overline{B}(x_2,r_2)\subset B(x_1,r_1)\cap V_2,\quad 0<r_2\leq\frac{r_0}{2^2}
  \]
  なる閉球$\overline{B}(x_2,r_2)$が取れる. 同様のことを繰り返せば閉球の列 $(\overline{B}(x_n,r_n))_{n\in \mathbb{N}}$で, 
  \[
  \overline{B}(x_{n},r_{n})\subset B(x_{n-1},r_{n-1})\cap V_{n},\quad 0<r_n\leq\frac{r_0}{2^n}\quad(\forall n\in \mathbb{N})\quad\quad(**)
  \]
  を満たすものが取れる. このとき, 
  \[
  \bigcap_{n\in \mathbb{N}}\overline{B}(x_n,r_n)
  \subset \bigcap_{n \in \N} B(x_{n-1},r_{n-1}) \cap \bigcap_{n\in \mathbb{N}}V_n
  \subset B(x_0,r_0)\cap \bigcap_{n\in \mathbb{N}}V_n
  \]
  である. よって$(*)$を示すためには$\bigcap_{n\in \mathbb{N}}\overline{B}(x_n,r_n)\neq\emptyset$を示せばよい. $m>n$ なる任意の$n,m\in \mathbb{N}$に対して$(**)$より, 
  \[
  d(x_m,x_n)\leq d(x_m,x_{m-1})+\ldots+d(x_{n+1},x_n)\leq r_{m-1}+\ldots+r_n\leq2\frac{r_0}{2^n}
  \]
  である. これより$(x_n)_{n\in \mathbb{N}}$はCauchy列であるので, $X$の完備性よりある$x\in X$に収束する. $x\in \bigcap_{n\in \mathbb{N}}\overline{B}(x_n,r_n)$が成り立つことを示そう. 任意の$n\in \mathbb{N}$と$\epsilon\in (0,\infty)$に対して, $m\geq n$かつ $d(x_m,x)<\epsilon$を満たす$m\in \mathbb{N}$を取れば, $x_m\in B(x,\epsilon)\cap \overline{B}(x_n,r_n)\neq\emptyset$である.
  よって$\epsilon$の任意性より$x\in \overline{B}(x_n,r_n)$であり, $n\in \mathbb{N}$の任意性より$x\in \bigcap_{n\in \mathbb{N}}\overline{B}(x_n,r_n)$である. 
\end{proof}
\begin{cor}\label{BaireCategoryTheorem}
  $X$を空でない完備距離空間, $(F_n)_{n\in \mathbb{N}}$を$X$ の閉集合からなる列とし, $X=\bigcup_{n\in \mathbb{N}}F_n$とする.  このときある$n\in \mathbb{N}$に対して$F_n$は内点を持つ. 
\end{cor}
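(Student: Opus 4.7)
証明の方針は対偶を取り, すぐ上のBaireのカテゴリー定理に帰着させることである. つまり, すべての$n \in \mathbb{N}$について$F_n$が内点を持たないと仮定して矛盾を導く. 各$n$に対して$V_n := X \setminus F_n$とおくと, $F_n$が閉集合であることから$V_n$は開集合である. さらに, $F_n$が内点を持たないことは$V_n$が$X$において稠密であることと同値であるから, $(V_n)_{n \in \mathbb{N}}$は$X$において稠密な開集合からなる列となる.

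したがって$X$の完備性とBaireのカテゴリー定理より$\bigcap_{n \in \mathbb{N}} V_n$は$X$において稠密である. 特に$X \neq \emptyset$より$\bigcap_{n \in \mathbb{N}} V_n \neq \emptyset$である. しかし, de Morganの法則と仮定$X = \bigcup_{n \in \mathbb{N}} F_n$を用いれば
\[
\bigcap_{n \in \mathbb{N}} V_n = X \setminus \bigcup_{n \in \mathbb{N}} F_n = X \setminus X = \emptyset
\]
となり矛盾を得る. よって, ある$n \in \mathbb{N}$が存在して$F_n$は内点を持たなければならない.

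主要な確認事項は「閉集合$F_n$が内点を持たないこと」と「その補集合$V_n$が$X$で稠密であること」の同値性だが, これは定義から直ちに従う. 実際, 前者は任意の$x \in X$と$r > 0$について$B(x,r) \not\subset F_n$, すなわち$B(x,r) \cap V_n \neq \emptyset$が成り立つことを意味し, これは後者の定義そのものである. したがって本系の証明は本質的にBaireのカテゴリー定理の対偶的言い換えに過ぎず, 新たな技術的困難はない.
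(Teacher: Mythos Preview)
Your proof is correct and follows essentially the same route as the paper: assume no $F_n$ has interior, pass to the complements $V_n = X \setminus F_n$, note they are open and dense, apply the Baire category theorem to get $\bigcap_n V_n$ dense, and contradict $\bigcap_n V_n = X \setminus \bigcup_n F_n = \emptyset$ using $X \neq \emptyset$. The only cosmetic difference is that you spell out explicitly why density of $\bigcap_n V_n$ in a nonempty $X$ forces it to be nonempty, which the paper leaves implicit.
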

\begin{proof}
  任意の$n\in \mathbb{N}$に対して$F_n$が内点を持たないと仮定する. 
  このとき, $V_n:=X \setminus F_n$ $(\forall n\in \mathbb{N})$とおけば, $(V_n)_{n\in \mathbb{N}}$は$X$において稠密な開集合からなる列である. 
  実際, ある$x \in X$と$\varepsilon > 0$に対して$B(x,\varepsilon) \cap V_n = \emptyset$とすると$B(x,\varepsilon) \subset F_n$となり$F_n$が内点を持たないことに矛盾する. 
  よって, Baireのカテゴリ定理より$\bigcap_{n\in\mathbb{N}}V_n$は$X$において稠密である. 
  ところが$\bigcap_{n\in \mathbb{N}}V_n=X\setminus \bigcup_{n\in \mathbb{N}}F_n=\emptyset$であるからこれは$X$が空でないことに矛盾する. 
\end{proof}

\subsection{$C^{\infty}$級関数が多項式と一致するための必要十分条件}
本節では開区間上の$C^{\infty}$級関数が$\R$係数多項式と一致するための必要十分条件について述べる.  以下では, ある開区間上で値が一致する二つの$\R$係数多項式は多項式として等しいという事実を断りなく用いる. さて, 簡単な必要十分条件としては, ある整数$N \geq 0$が存在して$N$階微分が恒等的に$0$となるというものがあるが, 実は次の定理が成立する. なお, 以下の証明は\cite{Donoghue}を参考にした. 
\begin{thm}\label{ConditionOfSmoothFunctionIsPoly}
  $f$を開区間$(c,d)$上の$C^{\infty}$級関数とする. このとき, 多項式$p$が存在して$(c,d)$上$f=p$となるための必要十分条件は, 任意の$x \in (c,d)$に対してある整数$N_x \geq 0$が存在して$f^{(N_x)}(x)=0$となることである. 
\end{thm}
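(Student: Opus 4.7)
証明の方針は以下の通りである. 必要性は, $f$が次数$d$の多項式と一致するならば$f^{(d+1)} \equiv 0$となることから直ちに従うので, 本質的なのは十分性である. そこで, $f$が\emph{局所的に}多項式となる点の集合
\[
U := \{x \in (c,d) \mid \mbox{ある}x\mbox{の近傍上で}f\mbox{が}\R\mbox{係数多項式と一致する}\}
\]
を考え, $U = (c,d)$を示すのが戦略である. $(c,d)$の連結性と開区間上で一致する二つの多項式は恒等的に等しいことから, $U = (c,d)$が示されれば$f$は$(c,d)$全体で一つの多項式となる. 以下, 背理法のため$F := (c,d) \setminus U \neq \emptyset$と仮定する.

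主要なステップは以下の通りである. 各$n \geq 0$に対し$E_n := \{x \in (c,d) \mid f^{(n)}(x) = 0\}$は$f^{(n)}$の連続性より閉集合であり, 仮定より$(c,d) = \bigcup_n E_n$である. 閉区間$[\alpha,\beta] \subset (c,d)$を$F \cap [\alpha,\beta] \neq \emptyset$となるように取れば, $F \cap [\alpha,\beta]$は完備距離空間であるから, 系\ref{BaireCategoryTheorem}を閉集合族$\{F \cap E_n \cap [\alpha,\beta]\}_n$に適用してある$N$と開区間$I \subset (c,d)$について
\[
\emptyset \neq I \cap F \subset E_N
\]
を得る. 次に$I \cap F$には孤立点が存在しないことを示す: $x_0 \in I \cap F$が孤立点なら, ある$\delta > 0$について$(x_0 - \delta, x_0)$と$(x_0, x_0 + \delta)$は$U$に含まれ, 連結性と二つの多項式の一致定理より, それぞれ$f$は多項式$p, q$と一致する. $f$の$C^\infty$性から$p^{(m)}(x_0) = f^{(m)}(x_0) = q^{(m)}(x_0)$が全ての$m$で成立するので$p = q$, ゆえに$x_0 \in U$となり$x_0 \in F$に反する. この事実と差分商の極限を用いた$m$に関する帰納法により, 任意の$m \geq N$について$f^{(m)} \equiv 0$ on $I \cap F$が得られる. 最後に, $I \setminus F$の各連結成分$J$上で$f$はある多項式$p_J$と一致し, $J$の端点のうち少なくとも一方$\gamma$は$I \cap F$に属するから, 任意の$m \geq N$に対して$p_J^{(m)}(\gamma) = f^{(m)}(\gamma) = 0$となる. これは$\deg p_J < N$を意味し, 特に$p_J^{(N)} \equiv 0$ on $J$. これと$I \cap F$上の結果を合わせて$f^{(N)} \equiv 0$ on $I$となり, $f$は$I$上で多項式となる. すなわち$I \subset U$であり, $I \cap F \neq \emptyset$に矛盾する.

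最大の障害は, Baireのカテゴリー定理から得られる「ある$N$について$f^{(N)}$が$F \cap I$上消える」という局所的な情報から, 「$f^{(N)}$が$I$全体で消える」という大域的結論に接続する部分である. 鍵となるのは, $I \cap F$の孤立点を排除した上で帰納法により「任意の$m \geq N$について$f^{(m)} \equiv 0$ on $I \cap F$」という強化された情報を引き出し, それを用いて$I \setminus F$の各連結成分上の多項式の次数を$N$未満に抑えることで$I$全体に$f^{(N)} \equiv 0$が広がることを示す点にある. ここが証明の核心であり, 注意を要する箇所である.
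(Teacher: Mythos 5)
あなたの提案は正しく, 本質的に本文の証明と同じ方針（局所的に多項式となる点の集合の補集合 $F$ に対するBaireのカテゴリー定理, $F\cap I$ の孤立点の排除, 差分商による $f^{(m)}\equiv 0$ $(m\geq N)$ の伝播, $I\setminus F$ の各連結成分上での次数評価）を取っています. 孤立点の排除を「両側の多項式の全ての導関数が $x_0$ で一致する」ことから直接導く点は本文の次数比較とTaylor展開による議論よりやや簡潔ですが, 議論の骨格は同一です.
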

\begin{proof}
  必要性は明らかである. 十分性を示す. 
  十分小さいすべての$\varepsilon > 0$についてある多項式$p$が存在して$[c+\varepsilon,d-\varepsilon]$上$f=p$となることを示せばよいので, はじめから$[c,d]$上で$C^{\infty}$級として示せばよい(端点での導値は片側微分の意味で考える). 開集合$G \subset [c,d]$を次のように定義し$F := [c,d] \setminus G$とおく. 
  \[
  G := \{x \in [c,d] \mid x\mbox{の}[c,d]\mbox{における開近傍}U\mbox{と多項式}p\mbox{が存在して}U\mbox{上}f=p \}
  \]
  $F = \emptyset$となることを示せばよい. 実際, そのとき$G = [c,d]$であり, ある多項式$p_1,p_2$と$\delta > 0$があって$[c,c+2\delta)$上$f=p_1$かつ$(d-2\delta,d]$上$f=p_2$となる. そして, $a=c+\delta, b= d - \delta$とおき, $[a,b] \subset (c,d)$の各元$x$に対して$x$を含む開区間$I_x \subset (c,d)$と多項式$p_x$で$I_x$上$f = p_x$なるものを取る. このとき$[a,b] \subset \bigcup_{x \in [a,b]} I_x$であるので$[a,b]$のコンパクト性より有限個の$x_1,\ldots,x_n \in [a,b]$が存在して$[a,b] \subset I_{x_1} \cup \cdots \cup I_{x_n}$となる. そこで$I_0 = [c,c+2\delta),  I_1=I_{x_1}, \ldots, I_n=I_{x_n}, I_{n+1} = (d-2\delta,d]$とおくと$[c,d] = \bigcup_{k=0}^{n+1} I_k$となる. よって$[c,d]$の連結性より多項式として
  \[
  p_1 = p_{x_1}=\cdots=p_{x_n}=p_2
  \]
  がわかり$[c,d]$上$f = p_1$となる. よって, $F = \emptyset$を示せばよい. 以下では$F \neq \emptyset$と仮定して矛盾を導く. \\
  {\bf(Step1)}:$F$が孤立点を持たないことを示す. $x_0 \in F$が孤立点であるとして矛盾を導く. \\
  $(1)$:$x_0 = d$の場合. 
  ある$b \in (c,d)$が存在して$(a,x_0) \subset G$となる. 
  このとき, ある$\delta > 0$と多項式$P$が存在して$(a,a+\delta) \subset (a,x_0)$上$f=P$となる. そこで
  \[
  y_0 := \sup\{y \in (a,x_0] \mid \forall x \in (a,y),~ f=P \}
  \]
  とおくと$y_0 = x_0$となる. 
  実際, $y_0 < x_0$とすると$y_0 \in (a,x_0) \subset G$であるので, ある$\varepsilon>0$と多項式$Q$が存在して$(y_0-\varepsilon,y_0+\varepsilon) \subset (a,x_0]$上で$f=Q$となる.
  一方, $y_0$の定義から$y-\varepsilon<z\leq y_0$で$(a,z)$上で$f=P$となるものが取れる. 
  従って, $(y-\varepsilon,z)$上で$P=f=Q$となるので多項式として$P=Q$である. よって, $(a,y_0+\varepsilon)$上で$f=Q$となるがこれは$y_0$の定義に反する. 以上から$(a,x_0]$上で$f=P$となる. よって, $(a.x_0]$は$x_0$の$[c,d]$における開近傍であるので$x_0 \in G$となる. これは不合理. \\
  $(2)$:$x_0=c$の場合. $(1)$と同様に矛盾が導かれる. \\
  $(3)$:$x_0 \in (c,d)$の場合. 
  ある$a,b \in(c,d)$が存在して$[a,x_0), (x_0,b] \subset G$となる. 
  このとき, $(1)$の証明と同様にある多項式$P$が存在して$(a,x_0]$上で$f=P$となる. また, ある多項式$P'$が存在して$[x_0,b)$上で$f=P'$となる. このとき$P=P'$でなければならない. 
  このことを示すためにまず, $n:=\mathrm{deg}(P) = \mathrm{deg}(P') =: m$を示す. 
  いま$(a,x_0]$上で$f$は$n$次多項式なので$f^{(k)}(x_0)=0 ~(k \geq n+1)$である. 一方, $[x_0,b)$上で$f$は$m$次多項式なので$f^{(m)}(x_0) \neq 0$である. そこでもし$n < m$ならば$0 = f^{(m)}(x_0) \neq 0$となる. これは不合理であるので$n \geq m$である. 同様に$n \leq m$なので$n=m$である. 次に$x_0$中心の$f$のTaylor展開を考えると
  \[
  \begin{aligned}
  &P(x) = f(x) = \sum_{k=0}^n \frac{f^{(k)}(x_0)}{k!}(x-x_0)^k ~~~(x \in (a,x_0]) \\
  &P'(x) = f(x) = \sum_{k=0}^n \frac{f^{(k)}(x_0)}{k!}(x-x_0)^k ~~~(x \in [x_0,b)) 
  \end{aligned}
  \]
  となる. これより
  \[
  \begin{aligned}
  &P(x) = \sum_{k=0}^n \frac{f^{(k)}(x_0)}{k!}(x-x_0)^k ~~~(x \in \R) \\
  &P'(x) = \sum_{k=0}^n \frac{f^{(k)}(x_0)}{k!}(x-x_0)^k ~~~(x \in \R)
  \end{aligned}
  \]
  となるから$P=P'$である. 以上から$(a,b)$上で$f=P$である. これは$x_0 \in G$を意味する. ところが$x_0 \in F$なのでこれは不合理. よって$F$は孤立点を持たない. \\
  {\bf(Step2)}:整数$N \geq 0$と長さ正の閉区間$I \subset (c,d)$が存在して$F \cap I \neq \emptyset$かつ$F \cap I$上$f^{(k)}=0 ~(\forall k \geq N)$となることを示す. 
  整数$n \geq 0$に対して$E_n := \{x \in F \mid f^{(n)}(x) = 0\}$とおく. $E_n$は$F$の閉集合であり, 定理の仮定より$F = \bigcup_{n=0}^{\infty} E_n$である. いま$F \neq \emptyset$は閉区間$[c,d]$の閉集合なので完備距離空間である. よって, Baireのカテゴリー定理(系\ref{BaireCategoryTheorem})よりある整数$N \geq 0$が存在して$E_N$は内点を持つ. ゆえに長さ正の閉区間$I \subset (c,d)$が存在して$\emptyset \neq F \cap I \subset E_N$となる. Step$1$より$F$は孤立点を持たないので$F \cap I$も孤立点を持たない. ゆえに$x \in F \cap I$に対して$x_n \rightarrow x$なる$(F \cap I) \setminus \{x\}$の元からなる列$(x_n)_{n \in \N}$が取れるので, 
  \[
  f^{(N+1)}(x) = \lim_{n \rightarrow \infty} \frac{f^{(N)}(x_n) - f^{(N)}(x)}{x_n - x} = \lim_{n \rightarrow \infty} \frac{0 - 0}{x_n - x} = 0
  \]
  となる. よって帰納的に$F \cap I$上$f^{(k)}=0 ~(\forall k \geq N)$となる. \\
  {\bf (Step3)}:Step2の$N$と$I$について$I$上で$f^{(N)}=0$となることを示す. これが示されれば$I \subset G$であるので$\emptyset \neq F \cap I = \emptyset$なる矛盾に到達する. 
  さて, Step2より$F \cap I$上で$f^{(N)}=0$であるので$G \cap I \neq \emptyset$として$G \cap I$上で$f^{(N)}=0$となることを示せばよい. 
  任意に$x_0 \in G \cap I$をとる. $x_0$を含む開区間$(a,b) \subset G \cap I$と多項式$p$が存在して$(a,b)$上$f=p$となる. ここで
  \[
  a_0 = \inf \{ y \in I \mid \forall x \in (y,b),~ f(x)=p(x) \}
  \]
  とおく. すると, $a_0 \in F \cap I$である. 
  実際, $a_0 \in G \cap I$であるならば, $I \subset (c,d)$より$a_0$は端点ではないので, 多項式$p'$と$\delta>0$が存在して$(a_0-\delta,a_0+\delta) \subset G \cap I$上で$f=p'$となる. すぐわかるように$(a_0,a_0+\delta)$上で$p'=f=p$であるので多項式として$p=p'$となり, 従って, $(a_0-\delta,b)$上で$f=p$となる. これは$a_0$の定義に反する. よって$a_0 \in F \cap I$である. 
  ゆえに, Step2より$f^{(k)}(a_0)=0 ~(\forall k \geq N)$となる. 
  一方, $(a_0,b)$上で$f=p$であるので$\mathrm{deg}(p) = m$とおくと, 
  \[
  f(z) = \sum_{k=0}^m \frac{f^{(k)}(a_0)}{k!}(z-a_0)^k ~~~(z \in (a_0,b))
  \]
  となる. これと$f^{(k)}(a_0)=0 ~(\forall k \geq N)$を合わせると
  \[
  f(z) = \sum_{k=0}^{N-1} \frac{f^{(k)}(a_0)}{k!}(z-a_0)^k ~~~(z \in (a_0,b))
  \]
  となる. よって, $f^{(N)}(x_0) = 0$である. 
\end{proof}

\end{document}